\title[Towards Fair Representation: Clustering and Consensus]{Towards Fair Representation: Clustering and Consensus}
\crefname{theorem}{Theorem}{theorems}
\crefname{lemma}{Lemma}{lemmas}
\crefname{proposition}{Proposition}{propositions}
\crefname{subsubsubappendix}{Appendix}{Appendices}
\Crefname{subsubsubappendix}{Appendix}{Appendices}
\newtheorem{claim}{Claim}
\newtheorem{subclaim}[claim]{Claim}
\newtheorem{fact}{Fact}
\newtheorem{property}{Property}
\renewenvironment{proof}[1][\proofname]%
{%
   \par\noindent{\bfseries\upshape #1:\ }%
}%
{\jmlrQED}
\DeclarePairedDelimiter\ceil{\lceil}{\rceil}
\DeclarePairedDelimiter\floor{\lfloor}{\rfloor}
\newcommand{\m}[1]{\mathcal{#1}}
\newcommand{\abs}[1]{\left\lvert#1\right\rvert}
\newcommand{\Pmod}{\overset{p}{\equiv}}
\newcommand{\iprime}{i^{\prime}}
\newcommand{\ceilenv}[1]{\left\lceil #1 \right\rceil}
\newcommand{\inpset}{\m{D}}
\newcommand{\inpsetd}{\m{D} = \{ D_1, D_2, \ldots, D_\phi\}}
\newcommand{\inpcl}[1]{D_{#1}}
\newcommand{\fairset}{\m{F}}
\newcommand{\fairsetd}{\m{F} = \{ F_1, F_2, \ldots, F_\zeta\}}
\newcommand{\fairmop}{\m{F}^{*}}
\newcommand{\fairmopd}{\m{F}^{*}= \{ F^*_1, F^*_2, \ldots, F^*_\iota\}}
\newcommand{\optset}{\m{C}^*}
\newcommand{\gencl}[1]{C_{#1}}
\newcommand{\out}{\m{T}}
\newcommand{\outd}{\m{T} = \{ T_1, T_2, \ldots, T_\gamma\}}
\newcommand{\outcl}[1]{T_{#1}}
\newcommand{\outg}{\m{T}^{cm}}
\newcommand{\outcut}{\m{T}^{c}}
\newcommand{\outpq}{\m{Q}}
\newcommand{\outpqcl}[1]{Q_{#1}}
\newcommand{\outpqg}{Q^{cm}}
\newcommand{\outpqcc}{\m{Q}^{cc}}
\newcommand{\opt}{\textsc{OPT}}
\newcommand{\optr}{\textsc{OPT}^{r}}
\newcommand{\optb}{\textsc{OPT}^{b}}
\newcommand{\optbal}{$\opt^{bal}$}
\newcommand{\optfair}{$\opt^{fair}$}
\newcommand{\blue}[1]{\textit{blue}(#1)}
\newcommand{\red}[1]{\textit{red}\left(#1\right)}
\newcommand{\surp}[1]{\textit{s}{(#1)}}
\newcommand{\smp}{s} 
\newcommand{\dmp}{d} 
\newcommand{\ratio}{\rho} 
\newcommand{\blu}{\textit{blue}}
\newcommand{\re}{\textit{red}}
\newcommand{\bsurp}[1]{\textit{$s_b$}{(#1)}}
\newcommand{\rsurp}[1]{\textit{$s_r$}{(#1)}}
\newcommand{\defi}[1]{\textit{d}(#1)}
\newcommand{\dist}{\textit{dist}}
\newcommand{\bdefi}[1]{\textit{$d_b$}(#1)}
\newcommand{\rdefi}[1]{\textit{$d_r$}(#1)}
\newcommand{\pcls}{\mathcal{E}}
\newcommand{\pclsr}{\mathcal{E}^{r}}
\newcommand{\pclsb}{\mathcal{E}^{b}}
\newcommand{\pcl}[1]{E_{#1}}
\newcommand{\pclr}[1]{E^{r}_{#1}}
\newcommand{\pclb}[1]{E^{b}_{#1}}
\newcommand{\pcard}{m}
\newcommand{\tb}{\textsc{TypeBlue}}
\newcommand{\tr}{\textsc{TypeRed}}
\newcommand{\intracost}[1]{\operatorname{intra}(#1)}
\newcommand{\intercost}[1]{\operatorname{inter}(#1)}
\newcommand{\cut}{\textsc{cut}}
\newcommand{\nc}{\textsc{newcut}}
\newcommand{\merge}{\textsc{merge}}
\newcommand{\cuta}{\textsc{cut-algo}}
\newcommand{\mergea}{\textsc{merge-algo}}
\newcommand{\cuto}{\textsc{cut-OPT}}
\newcommand{\mergeo}{\textsc{merge-OPT}}
\newcommand{\ccost}{\textit{cut-cost}}
\newcommand{\mcost}{\textit{merge-cost}}
\newcommand{\ccostf}[1]{\kappa (#1)}
\newcommand{\mcostf}[1]{\mu (#1)}
\newcommand{\costf}[1]{\textit{cost} (#1)}
\newcommand{\rcut}{\textsc{RedCut}}
\newcommand{\bcut}{\textsc{BlueCut}}
\newcommand{\rmerge}{\textsc{RedMerge}}
\newcommand{\bmerge}{\textsc{BlueMerge}}
\newcommand{\rcuta}{\textsc{RCut}}
\newcommand{\bcuta}{\textsc{BCut}}
\newcommand{\rmergea}{\textsc{RMerge}}
\newcommand{\bmergea}{\textsc{BMerge}}
\newcommand{\algog}{\text{AlgoforGeneral()}}
\newcommand{\algoc}{\text{AlgoforCut()}}
\newcommand{\algom}{\text{AlgoforMerge()}}
\newcommand{\algmf}{\text{MakeClustersFair()}}
\newcommand{\algopqgen}{\text{AlgoforGeneralPQ()}}
\newcommand{\algopqc}{\text{AlgoforCut-Cut()}}
\newcommand{\algopqcm}{\text{AlgoforCut-Merge()}}
\newcommand{\algopqmc}{\text{AlgoforMerge-Cut()}}
\newcommand{\algopqm}{\text{AlgoforMerge-Merge()}}
\newcommand{\makeitfair}{\text{MakeItFair}}
\newcommand{\greedymerge}{\text{GreedyMerge}}
\newcommand{\fairpart}{\text{FairPart}}
\newcommand{\nfairpart}{\text{NotFairPart}}
\newcommand{\mop}{\m{T}^*}
\newcommand{\mopq}{\m{Q}^*}
\newcommand{\mopd}{\m{T}^* = \{ T_1^*, T_2^*, \ldots, T_\psi^*\}}
\newcommand{\mopqd}{\m{Q}^* = \{Q_1^*, Q_2^*, \ldots Q_\tau^*\} }
\newcommand{\mopcl}[1]{T^*_{#1}}
\newcommand{\costone}[1]{\textit{cost}_1(#1)}
\newcommand{\costtwo}[1]{\textit{cost}_2(#1)}
\newcommand{\costthree}[1]{\textit{cost}_3(#1)}
\newcommand{\costfour}[1]{\textit{cost}_4(#1)}
\newcommand{\equ}{\Leftrightarrow}
\newcommand{\n}{\nonumber}
\newcommand{\bal}{\textsf{Balanced clustering}}
\newcommand{\fair}{\textsf{Fair clustering}}
\newcommand{\mopdef}{\bal}
\newcommand{\mopqdef}{\bal}
\newcommand{\somef}{7.5}
\newcommand{\cc}[1]{\operatorname{\kappa\kappa}(#1)}
\newcommand{\cm}[1]{\operatorname{\kappa\mu}(#1)}
\newcommand{\mc}[1]{\operatorname{\mu\kappa}(#1)}
\newcommand{\mm}[1]{\operatorname{\mu\mu}(#1)}
\newcommand{\mcf}{7.5}
\newcommand{\mmf}{7}
\newcommand{\costfive}[1]{\textit{cost}_5(#1)}
\newcommand{\algopqmm}{\textsf{AlgoforMerge-Merge}}
\newcommand{\nbcut}{\textsc{newbcut}}
\newcommand{\nrcut}{\textsc{newrcut}}
\newcommand{\clof}{\textsc{closest $p$-fair}}
\newcommand{\npc}{\textbf{NP}\text{-complete}}
\newcommand{\thrp}{\textsc{3partition}}
\newcommand{\kushagra}[1]{\textcolor{red}{Kushagra: #1}}
\begin{document}

\maketitle

\begin{abstract}%
Consensus clustering, a fundamental task in machine learning and data analysis, aims to aggregate multiple input clusterings of a dataset, potentially based on different non-sensitive attributes, into a single clustering that best represents the collective structure of the data. In this work, we study this fundamental problem through the lens of fair clustering, as introduced by Chierichetti et al. [NeurIPS'17], which incorporates the disparate impact doctrine to ensure proportional representation of each protected group in the dataset within every cluster. Our objective is to find a consensus clustering that is not only representative but also fair with respect to specific protected attributes. To the best of our knowledge, we are the first to address this problem and provide a constant-factor approximation.

As part of our investigation, we examine how to minimally modify an existing clustering to enforce fairness -- an essential postprocessing step in many clustering applications that require fair representation. We develop an optimal algorithm for datasets with equal group representation and near-linear time constant factor approximation algorithms for more general scenarios with different proportions of two group sizes. We complement our approximation result by showing that the problem is \texttt{NP}-hard for two unequal-sized groups. Given the fundamental nature of this problem, we believe our results on Closest Fair Clustering could have broader implications for other clustering problems, particularly those for which no prior approximation guarantees exist for their fair variants.
\end{abstract}

\begin{keywords}%
  Fairness, Consensus Clustering, Closest Fair Clustering, Approximation Algorithms.

\end{keywords}
\section{Introduction}\label{sec:introduction}

Machine learning plays a crucial role in modern decision-making processes, including recommendation systems, economic opportunities such as loan approvals, and recidivism prediction, e.g.~\cite{kleinberg2016inherent, kleinberg2018human}. However, these machine learning-driven processes risk being biased against marginalized communities based on sensitive attributes, such as gender or race~\cite{kay2015unequal, bolukbasi2016man}, due to biases inherent in the training data. This highlights the need to study and design fair algorithms to address disparities resulting from historical marginalization. In recent years, there has been significant literature on algorithmic fairness, focusing on achieving \emph{demographic parity}~\cite{dwork2012fairness} or \emph{equal opportunity}~\cite{hardt2016equality}.

Clustering is a foundational unsupervised learning task that involves partitioning a set of data points in a metric space into clusters, with the goal of minimizing certain objective functions specific to the application. Each data point can be viewed as an individual with certain protected attributes, which can be represented by assigning a color to each point. \cite{chierichetti2017fair} pioneered the concept of \emph{fair clustering}, where each point in the dataset is colored either red or blue. The goal was to partition the data while maintaining balance in each cluster -- the ratio of blue to red points in each cluster reflects the overall ratio in the entire set. 
This balance is important for addressing disparate impact and ensuring fair representation. Since it was introduced, different variants of fair clustering have been studied, including $k$-center/median/means clustering~\cite{chierichetti2017fair, HuangJV19}, scalable clustering~\cite{BackursIOSVW19}, proportional clustering~\cite{ChenFLM19}, overlapping multiple groups~\cite{BeraCFN19}, group-oblivious models~\cite{esmaeili2020probabilistic}, correlation clustering~\cite{pmlr-v108-ahmadian20a, ahmadi2020fair, ahmadian2023improved} and 1-clustering over rankings~\cite{wei22, chakraborty2022}, among others.

In this paper, we consider \emph{consensus clustering}, a fundamental problem in machine learning and data analysis, and initiate its study under fairness constraints.  Given a set of clusterings over a dataset, potentially based on different non-sensitive attributes, the goal of consensus clustering is to aggregate them into a single clustering that best represents the collective structure of the data while minimizing a specified objective function. The choice of objective functions varies depending on the context, with two of the most common being the \emph{median objective} that minimizes the sum of distances, and the \emph{center objective} that minimizes the maximum distance. Here, the distance between two clusterings is measured by counting the pairs of points that are grouped together in one clustering but not in the other (see Section~\ref{sec:prelim} for a formal definition). The consensus clustering problem has a myriad of applications in different domains, such as gene integration in bioinformatics~\cite{filkov2004integrating, filkov2004heterogeneous}, data mining~\cite{topchy2003combining}, community detection~\cite{lancichinetti2012consensus}. The problem (both respect to median and center objective) is not only known to be \texttt{NP}-hard~\cite{kvrivanek1986np, swamy2004correlation}, but also known to be \texttt{APX}-hard, i.e., unlikely to possess any $(1+\epsilon)$-approximation (for any $\epsilon >0$) algorithm even when there are only three input clusterings~\cite{BonizzoniVDJ08}. While several heuristics have been considered to generate reasonable solutions (e.g.,~\cite{goder2008consensus, monti2003consensus, wu2014k}), so far, we only know of an $11/7$-approximation algorithm for the median objective~\cite{ailon2008aggregating}, and a better than 2-approximation for the center objective~\cite{DK25}.

Unfortunately, the consensus algorithms mentioned above do not account for fairness in clustering.
For example, take the task of detecting community structures within a social network. There may be various valid partitions available based on users' non-sensitive attributes, such as age group or food preferences. However, when creating an overall community partition, it is important for each community to be fair, meaning each group should be fairly represented according to specific protected attributes like race or gender. Specifically, we refer to the concept of fair clustering as introduced by~\cite{chierichetti2017fair}, where a clustering of red-blue points is deemed fair if each cluster individually reflects the population ratio of red and blue points. It raises a key computational question: given a set of clusterings, how can we derive a fair clustering that effectively aggregates the input clusterings? To our knowledge, the current work is the first to investigate this computational fair aggregation challenge.

En route, we consider another innate, perhaps even more fundamental, computational question concerning fairness in clustering -- to transform an arbitrary clustering into its nearest fair clustering. This question is intriguing on its own merit. We might have an effective clustering algorithm for specific tasks, but due to potential biases in the training data, the result might not be fair, even if it is a good clustering of the input. In such cases, we seek to achieve a fair clustering by making minimal changes to the output clusters. This type of postprocessing is evidently essential in most clustering applications because it is imperative to ensure fairness among the clusters produced. In many scenarios, clustering algorithms can inadvertently result in biased groupings that do not adequately represent all protected classes or groups within the dataset. Such skewed representations can lead to unfair treatment or outcomes, particularly when the clusters are used for decision-making or analytical purposes. Therefore, to prevent these biases and promote equity, it is necessary to apply postprocessing techniques that adjust the clusters to reflect a fair distribution of the protected attributes. This ensures that each group is proportionately represented and that the clustering results do not inadvertently favor one group over another, thereby upholding principles of fairness and non-discrimination in the analysis. Similar fairness-related questions have been examined in other contexts, such as ranking candidates~\cite{CelisSV18, chakraborty2022, kliachkin2024fairness}. Surprisingly, this computational task has not garnered significant attention within the broader clustering literature. In this study, we also address this fundamental question of fair clustering. As we will demonstrate later, this question is closely linked to the challenge of creating a fair consensus.

\subsection{Our Contribution}
\paragraph{Closest Fair Clustering} We begin by introducing the study of \emph{Closest Fair Clustering} -- the problem of modifying an arbitrary input clustering to achieve fairness with the minimal number of changes. The input dataset consists of points having a color, either red or blue.
Starting with an arbitrary clustering of this colored data set, the goal is to identify the nearest/closest fair clustering. 
First, we study the case where the two colored subsets are of equal size and propose an optimal algorithm for finding the closest fair clustering in this setting (\cref{sec:equiproportion}).

\begin{restatable} {theorem} {closestequifair} \label{thm:closest-1-1-fair}
    There is an algorithm that, given a clustering over a set of $n$ red-blue colored points with an equal number of red and blue points, outputs a closest {\fair} in $O(n \log n)$ time.
\end{restatable}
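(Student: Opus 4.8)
The plan is to reduce finding a closest {\fair} to a one-dimensional packing problem solvable by a priority-queue greedy. For a cluster $C_i$ of the input, write $r_i,b_i$ for its numbers of red and blue points and set its \emph{surplus} $s_i := r_i-b_i$; since the point set is color-balanced, $\sum_i s_i=0$, so the clusters split into red-surplus ($s_i>0$), blue-surplus ($s_i<0$), and already-balanced ($s_i=0$) ones. I would write the distance between the input clustering and any fair clustering $\mathcal{F}$ as its \emph{cut-cost} (the number of point-pairs grouped together in the input but separated in $\mathcal{F}$) plus its \emph{merge-cost} (point-pairs separated in the input but grouped together in $\mathcal{F}$); a closest fair clustering minimizes this sum.

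The heart of the proof is a structural lemma exhibiting a \emph{canonical form} attained by some optimal $\mathcal{F}$: every already-balanced $C_i$ is left untouched; every red-surplus $C_i$ is cut into a balanced ``core'' of size $|C_i|-s_i$ and a monochromatic ``excess'' of exactly $s_i$ red points; every blue-surplus $C_j$ is left intact but \emph{receives} a total of $|s_j|$ red points; and $\mathcal{F}$ consists precisely of the cores, the untouched balanced clusters, and the (now balanced) augmented blue-surplus clusters. I would prove this through a chain of exchange arguments, each preserving fairness and not increasing cut-cost plus merge-cost: (i) cutting anything other than a monochromatic excess of exactly $|s_i|$ points from a surplus cluster, and splitting or merging already-balanced clusters, never help; (ii) merging an oppositely-signed pair $C_i,C_j$ wholesale costs exactly $(|C_i|-|s_i|)(|C_j|-|s_j|)\ge 0$ more than merely relocating the smaller of the two surpluses, so wholesale merges are never needed; and (iii) sending blue points out of a blue-surplus cluster is cost-symmetric to sending red points in, so one may normalize to red points only moving. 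Performing these exchanges in a legal order while tracking how many points of each color sit in each piece --- cut- and merge-costs are quadratic in the piece sizes --- is the main obstacle.

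Once the canonical form is in hand, a direct count shows that over all canonical solutions the total cost equals a relocation-independent quantity, namely $\sum_{i:\,s_i>0} s_i(|C_i|-s_i)+\sum_{j:\,s_j<0}|s_j|\,|C_j|+\sum_{j:\,s_j<0}\binom{|s_j|}{2}-\sum_{i:\,s_i>0}\binom{s_i}{2}$, plus twice the number of same-cluster excess-point pairs that end up split across distinct recipients (the cut gained by a split is exactly cancelled by the merge saved among co-located excess points of different sources, up to this residual term). Minimizing the residual is precisely the problem of packing items of sizes $\{\,s_i : s_i>0\,\}$ into bins of capacities $\{\,|s_j| : s_j<0\,\}$ (the two totals coinciding) so as to minimize the number of intra-item point-pairs landing in different bins; sorting items and bins and running a first-fit-decreasing-style greedy, maintained with a priority queue over residual bin capacities, attains the minimum, with optimality following from a standard exchange on the sorted order. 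The relocation realizing this packing is reported by assigning each excess point to its recipient cluster.

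Finally, the running time: there are at most $n$ clusters; computing all surpluses and the relocation-independent cost term is $O(n)$; sorting the items and bins costs $O(n \log n)$; the greedy uses $O(n)$ priority-queue operations of $O(\log n)$ each; and the resulting fair clustering is written out in $O(n)$ time, for $O(n \log n)$ overall. Correctness then follows from the structural lemma (every optimum may be assumed canonical) together with the greedy's optimality among canonical solutions. Essentially all the difficulty is concentrated in the structural lemma; given it and the closed-form cost, the algorithm and its analysis are routine.
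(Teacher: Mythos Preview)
Your structural lemma --- that some closest fair clustering leaves every blue-surplus cluster intact and merely augments it with incoming red excess --- is false, and consequently your algorithm does not return a closest fair clustering. The issue is not exchange~(iii) as a single swap (trading $k$ outgoing blue points of $C_j$ for $k$ incoming red points is indeed cost-neutral when the other endpoint is fixed), but the conclusion you draw from it: your canonical form forbids \emph{new} clusters built entirely from red excess of one input cluster and blue excess of another, and no sequence of your exchanges eliminates such clusters without raising the cost. Concretely, take four input clusters with $(r_i,b_i)=(2,0),(2,0),(0,1),(0,3)$. Your fixed term is $0+10+\binom{1}{2}+\binom{3}{2}-\binom{2}{2}-\binom{2}{2}=11$; every integer transportation matrix with row sums $(2,2)$ and column sums $(1,3)$ has $\sum_{i,j}a_{ij}^2=6$, so the residual is $8-6=2$ and \emph{every} clustering of your canonical form costs $13$. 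Yet the fair clustering $\{r_1,b_1\},\{r_2,b_2\},\{r_3,r_4,b_3,b_4\}$ (with $b_1$ the sole point of the third input cluster and $b_2,b_3,b_4$ those of the fourth) has distance $9$ from the input. Your first-fit-decreasing packing therefore optimizes over the wrong search space.

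The paper's canonical form cuts the monochromatic excess from \emph{every} surplus cluster --- red-surplus and blue-surplus alike --- so that the maximal fair subset of each input cluster stands alone in the output; the leftover monochromatic red and blue pieces are then paired into new fair clusters, each drawing all its red from one input cluster and all its blue from another. The key fact you are missing is that in this form the cost of pairing the excesses equals $\tfrac12\sum_{s_i>0}s_i^2+\tfrac12\sum_{s_j<0}s_j^2$ \emph{for every valid pairing} (this is exactly \cref{lem:opt-equ}). Hence there is no packing problem to solve at all: any matching is optimal, and the greedy merge only needs to produce some valid one. In short, the right canonical form makes the residual vanish identically rather than leaving something to optimize.
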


Next, we explore a more general scenario where the data set is not perfectly balanced; specifically, the ratio of blue to red points is an arbitrary constant. Due to the symmetry between blue and red points, we assume, without loss of generality, that this ratio is at least one throughout this paper. It is important to note that fair versions of various clustering problems turn out to be \texttt{NP}-hard for such arbitrary ratios, particularly when the ratio exceeds a certain (small) constant greater than one, as demonstrated in~\cite{chierichetti2017fair}. However, that does not imply that the closest fair clustering problem is also \texttt{NP}-hard. We show that the closest fair clustering problem is \texttt{NP}-hard even for any integral ratio strictly greater than one (see~\cref{sec:NP-hard}). We establish the \texttt{NP}-hardness result by providing a reduction from the \emph{$3$-Partition} problem.

Next, we design a near-linear time algorithm that provides a constant-factor approximation for the closest fair clustering problem for arbitrary integral ratios. We refer to $\alpha$-approximation as \emph{$\alpha$-close}, for $\alpha \ge 1$ (see~\cref{sec:prelim} for a formal definition).

\begin{restatable} {theorem} {closestpfair} \label{thm:closest-p-1-fair}
    Consider an integer $p > 1$. There is an algorithm that, given a clustering over a set of $n$ red-blue colored points where the ratio between the total number of blue and red points is $p$, outputs a $17$-close {\fair} in time $O(n \log n)$.
\end{restatable}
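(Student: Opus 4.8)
The plan is to reduce the problem to two coupled subtasks --- \emph{cutting} each input cluster down to a fair sub-cluster and \emph{merging} the discarded points into fresh fair clusters --- to solve each greedily, and to charge the total against a structural lower bound on the optimum \opt. Write $\genset = \{\gencl{1}, \dots, \gencl{\eta}\}$ for the input clustering, let $b_i$ and $r_i$ denote the numbers of blue and red points in $\gencl{i}$, and call $b_i - p\,r_i$ its \emph{surplus} (the surpluses sum to zero because the global blue-to-red ratio is $p$). First I would establish a normal-form lemma: every fair clustering $\fairset$ is realized, up to relabeling, by (i) applying a \bcut\ and a \rcut\ to each $\gencl{i}$ --- deleting some blue and some red points so that the residual is fair --- and (ii) reassembling the deleted (\emph{free}) points, together with the points of any input cluster that is dissolved wholesale, through \bmerge\ and \rmerge\ operations into the remaining clusters of $\fairset$. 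Correspondingly the distance $d(\genset,\fairset)$ decomposes into a cut-cost plus a merge-cost, and it suffices to approximately minimize each. The obstruction to keeping every $\gencl{i}$ intact is divisibility: a red-heavy $\gencl{i}$ for which $p$ does not divide $b_i$ cannot be balanced by deleting only blue points, and surplus blue points must be grouped $p$ at a time around a single free red point; this packing flavor is precisely why the problem is \texttt{NP}-hard, so in the merge step we cannot aim for optimality and instead settle for a greedy assembly.

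For the lower bound I would classify each $\gencl{i}$ by how \opt\ treats it: call it \emph{cut-type} if some single fair cluster of \opt\ retains a constant fraction of $\gencl{i}$'s points, and \emph{merge-type} otherwise. If $\gencl{i}$ is cut-type, \opt\ must pay an amount bounded below in terms of $|b_i - p\,r_i|$, $p$, and the size of $\gencl{i}$: either it deletes enough of $\gencl{i}$'s points to absorb its surplus, or it keeps them and the hosting fair cluster must import points of the opposite surplus, for which it is charged. If $\gencl{i}$ is merge-type, its points are split across at least two fair clusters and, moreover, blended with foreign material, which again forces a comparable cost. Summing over all clusters, \opt\ costs at least $c\,(L_{\mathrm{cut}} + L_{\mathrm{merge}})$, where $L_{\mathrm{cut}}$ aggregates an explicit per-cluster quantity over the cut-type clusters and $L_{\mathrm{merge}}$ over the merge-type ones; the leftover $b_i \bmod p$ slack is absorbed into the constant $c$.

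The algorithm then, for each $\gencl{i}$, cheaply estimates the cost of the two options --- cut it down to a fair residual, or dissolve it and merge its points away --- and takes the cheaper, breaking ties toward cutting. It performs all chosen \bcut\ and \rcut\ operations (drawing on the ideas behind \cref{thm:closest-1-1-fair}), collects the resulting free points and the spare capacity of the residuals, and then runs \greedymerge: it repeatedly builds one fair cluster from the currently available free points and dissolved small clusters, keeping points of a common origin together, using residual spare capacity when available, and falling back to padding a single free red point with exactly $p$ free blue points only when nothing cheaper remains. Sorting the clusters by the relevant ratios and using priority queues to extract the cheapest option and the cheapest merge at each step keeps the whole procedure within $O(n\log n)$ time.

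Finally I would compare the algorithm with \opt\ through the (at most four) cases \ccsets, \cmsets, \mcsets, and \mmsets, according to whether each cluster is handled by a cut or by a merge --- both by the algorithm and in \opt. The \ccsets\ case is immediate since both perform essentially the minimal cut. In \cmsets\ and \mcsets\ the local cost comparison performed by the algorithm certifies that its choice on $\gencl{i}$ costs at most a constant times the alternative, which by the lower bound is within a constant of what \opt\ spends on $\gencl{i}$. The \mmsets\ case is the delicate one: although optimal merging is \texttt{NP}-hard, a fair cluster's merge-cost grows super-linearly in the number of distinct origins it blends, so by keeping origin-blocks together and merging greedily, \greedymerge\ exceeds the optimal merge-cost by only a constant. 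I expect the main obstacle to be exactly this two-sided tension --- making the \opt\ lower bound strong enough to cover the divisibility remainders $b_i \bmod p$ yet large enough to pay for the unavoidable slack in the greedy assembly --- together with the bookkeeping of the per-case constants (which land around $7$ and $7.5$ in the two merge cases, with lower-order contributions elsewhere) so that they sum to the stated factor $17$.
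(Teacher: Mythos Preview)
Your proposal has the right high-level flavor but is missing the structural decomposition that makes the constants work out. The paper does \emph{not} go directly from the input to a fair clustering; it passes through an intermediate \bal, in which every cluster has its blue count divisible by $p$ (with no constraint yet on the ratio). It first finds a $3.5$-close \bal\ (\cref{thm:main-multiple-of-p}), then converts any \bal\ into a $3$-close \fair\ (\cref{thm:make.p.cluster.fair}); a triangle-inequality composition (\cref{thm:combine-p-blue-fair}) yields $3.5 + 3 + 3.5\cdot 3 = 17$. Your surplus $b_i - p\,r_i$ is the ``fair'' imbalance, not the ``balanced'' surplus $b_i \bmod p$ that drives the paper's analysis; the paper's cut/merge dichotomy asks whether $b_i \bmod p \le p/2$, which is computable from the input, whereas your cut-type/merge-type classification depends on how \opt\ treats each cluster and so cannot guide the algorithm.

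Without the balanced intermediate, your direct route runs into the very divisibility obstruction you flag: cutting $\gencl{i}$ down to a fair residual and then assembling the free points must simultaneously respect $p$-divisibility of blue blocks and the global ratio, and your \greedymerge\ is described only heuristically (``keep origin-blocks together'') with no lemma bounding it against \opt. The paper's balancing step rests on a concrete structural claim (\cref{clm:merge-case-structure}): in any closest \bal, each input cluster splits into at most one part that \emph{equals} a full output cluster and contains all its red points, plus parts of size $<p$; this, together with the per-cluster lower bound of \cref{lem:main-structure-of-M*}, is what pins down the $3.5$ factor. Your lower-bound sketch (``an amount bounded below in terms of $|b_i - p\,r_i|$, $p$, and the size'') does not supply an analogue, and the constants $7$ and $7.5$ you cite are from the $p{:}q$ balancing analysis, not the $p{:}1$ case that yields $17$.
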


To demonstrate the aforementioned result, we employ a two-stage approach. In the first step, we ensure that each input cluster becomes \emph{balanced} -- the size of the blue subpart in each cluster is a multiple of $p$. We develop a $3.5$-approximation algorithm for this purpose (\cref{sec:multiple-of-p-clustering}). Subsequently, we introduce an approximation algorithm that transforms any balanced clustering into a fair clustering that is approximately close, more specifically, $3$-close (\cref{sec:bal-to-fair}). By integrating these two steps (\cref{sec:approx-fair}), we achieve a $17$-approximation guarantee, as stated in the theorem above.

The closest fair clustering problem becomes even more complex when the population ratio between blue and red points is fractional (represented as $p/q$ for two coprime numbers $p$ and $q$). It is worth remarking that the closest fair clustering problem is \texttt{NP}-hard for any arbitrary ratio strictly greater than one (see~\cref{sec:NP-hard}). To design an approximation algorithm, we apply the previously mentioned two-stage approach again. Since the second step is effective regardless of whether the ratio is integral or fractional, we can utilize it unchanged. Our attention must now shift to the initial balancing step, where we need to ensure that the blue subpart of each cluster is a multiple of $p$ and the red subpart is a multiple of $q$.

A straightforward approach to achieve balancing is to apply the balancing step for the integral ratio case twice sequentially: first, to make blue subparts a multiple of $p$ and then to make red subparts a multiple of $q$. However, using reasoning similar to that in the proof of~\cref{thm:combine-p-blue-fair} in~\cref{sec:approx-fair}, we can argue that an $\alpha$-approximation balancing algorithm for the integral case results in a $(\alpha^2 + 2\alpha)$-approximate balancing algorithm for the fractional case. This yields a $19.25$-approximation just for the balancing step, given that our integral balancing achieves a $3.5$-approximation. Therefore, combining both the balancing and the process of achieving a fair clustering from a balance clustering results in an $80$-approximation factor for the closest fair clustering.

However, we propose a more sophisticated balancing algorithm and, through an intricate analysis, manage to avoid the squared increase in the approximation factor, achieving an approximation factor of only $7.5$ (\cref{sec:multiple-of-p-q-clustering}). Finally, by combining both stages, we develop a $33$-approximation algorithm, significantly improving upon the naive bound of an $80$-approximation.

\begin{restatable} {theorem} {closestpqfair} \label{thm:closest-p-q-fair}
    Consider two integers $p,q > 1$. There is an algorithm that, given a clustering over a set of $n$ red-blue colored points where the ratio between the total number of blue and red points is $p/q$, outputs a $33$-close {\fair} in time $O(n \log n)$.
\end{restatable}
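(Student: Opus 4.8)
The plan is to follow the two-stage template already established for the integral case: first transform the input clustering into a \emph{balanced} clustering (where each cluster's blue subpart is a multiple of $p$ and its red subpart is a multiple of $q$), then apply the balanced-to-fair routine from \cref{sec:bal-to-fair} to obtain a fair clustering. Since that second stage is $3$-close regardless of whether the ratio is integral or fractional, the whole game reduces to getting a good approximation for the fractional balancing step, and then bounding how the two approximation factors compose.

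First I would analyze the composition bound carefully, because this is where the naive approach loses too much. If $\text{OPT}_{\text{fair}}$ is the cost of a closest fair clustering, then an optimal \emph{balanced} clustering costs at most $\text{OPT}_{\text{fair}}$ (every fair clustering is balanced), so an $\alpha$-approximate balancing step produces a balanced clustering $B$ with $d(\text{input}, B) \le \alpha\cdot\text{OPT}_{\text{fair}}$. Feeding $B$ into the $3$-close balanced-to-fair routine and using the triangle inequality for the symmetric-difference-type distance, together with the fact that the fair clustering nearest to $B$ costs at most $d(\text{input},B) + \text{OPT}_{\text{fair}}$, yields a final bound of roughly $\alpha + 3(\alpha+1) = 4\alpha + 3$; with $\alpha = 7.5$ this gives $33$. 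So the real content is establishing the $7.5$-approximation for fractional balancing claimed in \cref{sec:multiple-of-p-q-clustering}.

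For the balancing step itself, the naive idea — run the integral $3.5$-approximate balancer once to fix blue subparts mod $p$, then again to fix red subparts mod $q$ — fails because the second invocation can destroy the guarantee achieved by the first, and the errors multiply to give the $(\alpha^2 + 2\alpha)$ blow-up mentioned in the excerpt. Instead I would design a single combined procedure that simultaneously tracks the blue-deficiency mod $p$ and red-deficiency mod $q$ of each cluster, and decides per cluster whether to \emph{cut} (split off the surplus points of one or both colors into new clusters) or \emph{merge} (route deficient clusters together so their combined subparts land on the right multiples). The key structural observation to exploit is that after cutting, a cluster's residual point counts are forced, so the only freedom is in how the cut-off fragments and the genuinely deficient clusters are recombined; this can be set up as a matching/assignment problem (sorting clusters by their residues and greedily pairing, much as in the integral case) whose cost I would charge against a carefully chosen lower bound on $\text{OPT}_{\text{balanced}}$ derived from the total "distance to the nearest valid residue profile." The hard part will be the intricate charging argument that prevents the blue-side and red-side corrections from being double-counted or from interfering — i.e., showing that a single pass handling both colors together costs only $7.5\cdot\text{OPT}_{\text{balanced}}$ rather than the product of the two single-color factors; this is exactly the "intricate analysis" the authors allude to, and it is where almost all the work lies. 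Once that lemma is in hand, plugging $\alpha = 7.5$ into the composition bound above and verifying the running time is $O(n\log n)$ (dominated by the sorting steps in both stages) completes the proof.
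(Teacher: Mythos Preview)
Your proposal is correct and follows essentially the same approach as the paper: the proof of \cref{thm:closest-p-q-fair} is an immediate corollary of the composition bound \cref{thm:combine-p-blue-fair} (your $4\alpha+3$ is exactly $\alpha+\beta+\alpha\beta$ with $\beta=3$), instantiated with the $7.5$-close balancing algorithm of \cref{thm:main-multiple-of-pq} and the $3$-close balanced-to-fair routine of \cref{thm:make.p.cluster.fair}. You correctly identify that all the real work lies in establishing the $7.5$ factor for simultaneous $(p,q)$-balancing, which the paper proves separately in \cref{sec:multiple-of-p-q-clustering}.
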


\noindent \textbf{Closest Fair Clustering using Fair Correlation Clustering.} It is worth emphasizing the connection between the closest fair clustering and the fair correlation clustering. In correlation clustering, we are given a labeled complete graph where each edge is labeled either $+$ or $-$. The cost of a clustering (of the nodes) is defined as the summation of the number of intercluster $+$ edges and the intracluster $-$ edges. The \emph{fair correlation clustering} problem asks to find a fair clustering with minimum cost. It is easy to observe that given a clustering (on $n$ points), we can find its closest fair clustering by solving the fair correlation clustering problem on the following instance: Create a complete graph on $n$ nodes, where an (undirected) edge $(u,v)$ is labeled $+$ if both $u,v$ belong to the same cluster in the input clustering; otherwise it is labeled $-$.

Due to the above reduction, by deploying the current state-of-the-art approximation algorithms for the fair correlation clustering, we immediately get approximation algorithms for the closest fair clustering problem; however, this leads to a much worse approximation factor. The correlation clustering problem (even on complete graphs) is already \texttt{NP}-hard (in fact, \texttt{APX}-hard), and its fair variant remains \texttt{NP}-hard even when the blue-to-red ratio is one~\cite{ahmadi2020fair}. In contrast, we provide an exact algorithm for the closest fair clustering problem when the ratio is one (\cref{thm:closest-1-1-fair}). Thus, we, in fact, get a separation between these two problems -- the closest fair clustering problem and the fair correlation clustering problem -- for the ratio of one. If we had used the state-of-the-art fair correlation clustering algorithm~\cite{pmlr-v108-ahmadian20a, ahmadi2020fair}, we would have only obtained an $O(1)$ approximation factor for the closest fair clustering problem in this case.

For arbitrary ratios, even when the ratio is an integer $p$, the state-of-the-art fair correlation clustering algorithms~\cite{pmlr-v108-ahmadian20a, ahmadi2020fair} give us an approximation factor of $O(p^2)$. In contrast, we get an approximation factor of $17$ (\cref{thm:closest-p-1-fair}) for the closest fair clustering problem. Note that the approximation factor obtained by our algorithm is entirely independent of the blue-to-red ratio (i.e., independent of $p$). Further, due to the hidden constant of $O(\cdot)$ notation in the approximation factor of the algorithms of the fair correlation clustering~\cite{pmlr-v108-ahmadian20a, ahmadi2020fair}, the implied approximation bound is a much larger constant even for $p=2$ compared to our algorithm.

\paragraph{Fair Consensus Clustering} Next, we focus on the challenge of combining multiple input clusterings into a single clustering while ensuring fairness in the resulting output. We address this fair consensus clustering problem under a generalized aggregation objective function, known as the \emph{generalized mean objective} or simply the \emph{$\ell$-mean objective}\footnote{In the existing literature, this exponent parameter is usually referred to as $q$, but we use $\ell$ here to avoid potential confusion with the parameters representing the blue-to-red group size ratio.}. This generalized mean objective encompasses a variety of classical optimization objectives, ranging from the \emph{median} when $\ell = 1$ to the \emph{center} when $\ell = \infty$, and it has attracted considerable interest in the broader clustering literature~\cite{ChlamtacMV22}. The consensus clustering problem, even without fairness constraints, is \texttt{APX}-hard~\cite{BonizzoniVDJ08}. When incorporating fairness constraints, the problem becomes only harder, making it inevitable to incur a constant (multiplicative) approximation factor.

We introduce a generic approximation algorithm for fair consensus clustering by leveraging the solution to the closest fair clustering as a foundational element (\cref{sec:approx-consensus}). Notably, our approach to fair consensus clustering is effective regardless of the parameter $\ell$ in the generalized mean objective. As a result, it is applicable to both the median objective (when $\ell = 1$) and the center objective (when $\ell = \infty$), as well as other more generalized objectives within this spectrum.

Using the result of~\cref{thm:closest-1-1-fair} together with our generic fair consensus approximation algorithm, we achieve a $3$-approximation to the fair consensus clustering for the case when the whole population is perfectly balanced.

\begin{restatable} {theorem} {consensusequifair} \label{thm:consensus-1-1-fair}
    Consider any $\ell \ge 1$. There is an algorithm that, given a set of $m$ clusterings each over a set of $n$ red-blue colored points with an equal number of red and blue points, outputs a $3$-approximate $\ell$-mean fair consensus clustering in time $O(m^2 n^2)$.
\end{restatable}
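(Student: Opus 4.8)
The plan is to lift the exact closest-fair-clustering routine of \cref{thm:closest-1-1-fair} into a generic consensus algorithm, exploiting that the pairwise-disagreement distance $d(\cdot,\cdot)$ between clusterings --- the number of point pairs grouped together by exactly one of the two clusterings --- is a metric: identifying a clustering with its set of ``same-cluster'' pairs, $d$ is the size of the symmetric difference of these edge sets, and symmetric difference satisfies the triangle inequality. I will also use standard facts about the power mean $M_\ell(x_1,\dots,x_m)=\big(\frac{1}{m}\sum_i x_i^\ell\big)^{1/\ell}$ (with $M_\infty=\max$), through which the $\ell$-mean objective aggregates a clustering's distances to the $m$ inputs: it is monotone, constant on constant vectors, obeys Minkowski's inequality $M_\ell(x+y)\le M_\ell(x)+M_\ell(y)$ for every $\ell\in[1,\infty]$, and satisfies $\min_i x_i\le M_\ell(x)\le\max_i x_i$ (the argument below is identical for the un-normalized variant $\big(\sum_i x_i^\ell\big)^{1/\ell}$).

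First I would spell out the algorithm. Given input clusterings $C_1,\dots,C_m$ over the balanced point set, run the procedure of \cref{thm:closest-1-1-fair} on each $C_i$ to obtain a closest {\fair} $F_i$, in $O(n\log n)$ time each. Then, for each candidate $F_i$, compute its $\ell$-mean consensus cost $M_\ell\big(d(F_i,C_1),\dots,d(F_i,C_m)\big)$; each of the $\Theta(m^2)$ clustering distances involved can be computed in $O(n^2)$ time. Finally, output the candidate of minimum cost. Every $F_i$ is fair by construction, so the output is fair, and the running time is $O(mn\log n + m^2 n^2)=O(m^2 n^2)$.

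For the approximation bound, let $\mathcal{O}$ be an optimal fair consensus clustering, put $a_i:=d(\mathcal{O},C_i)$, and let $\textsc{opt}:=M_\ell(a_1,\dots,a_m)$ be its cost. Choose the index $j:=\argmin_i a_i$. Since $\mathcal{O}$ is itself a fair clustering and $F_j$ is a closest fair clustering to $C_j$, we have $d(F_j,C_j)\le a_j$, and two applications of the triangle inequality give, for every $i$,
\[
 d(F_j,C_i) \le d(F_j,C_j)+d(C_j,\mathcal{O})+d(\mathcal{O},C_i) \le 2a_j+a_i .
\]
Applying $M_\ell$ to both sides (monotonicity), then Minkowski's inequality, then the constant-vector identity yields
\[
 M_\ell\big(d(F_j,C_1),\dots,d(F_j,C_m)\big) \le M_\ell(2a_j,\dots,2a_j)+M_\ell(a_1,\dots,a_m) = 2a_j+\textsc{opt}.
\]
As $a_j=\min_i a_i\le M_\ell(a_1,\dots,a_m)=\textsc{opt}$, the candidate $F_j$ has cost at most $3\,\textsc{opt}$; since the algorithm returns the cheapest of $F_1,\dots,F_m$, so does its output.

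The main obstacle is not the reduction itself but making the factor simultaneously independent of $m$ and uniform in $\ell$, so that one argument covers the median ($\ell=1$), the center ($\ell=\infty$), and everything between: a crude use of $d(F_j,C_i)\le 2a_j+a_i$ would leak an $m^{1/\ell}$ factor, which is exactly neutralized by choosing $j$ to minimize $a_j$ (forcing $a_j\le\textsc{opt}$) and by routing the estimate through Minkowski's inequality, valid for all $\ell\ge1$ without modification. The one genuinely new ingredient is the \emph{exact}, near-linear-time closest-fair-clustering subroutine of \cref{thm:closest-1-1-fair}: an approximate subroutine here would cost an additional multiplicative factor in the per-candidate fair postprocessing.
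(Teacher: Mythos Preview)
Your proposal is correct and follows essentially the same approach as the paper: run the exact closest-fair-clustering routine of \cref{thm:closest-1-1-fair} on each input, score every candidate, and return the best one; for the analysis, pick the input $C_{j}$ closest to an optimum $\mathcal{O}$, use $d(F_j,C_j)\le d(\mathcal{O},C_j)$, and apply the triangle inequality. The only cosmetic difference is that the paper replaces $a_j$ by $a_i$ in each term (using $a_j=\min_i a_i\le a_i$) to get the pointwise bound $d(F_j,C_i)\le 3a_i$ directly, so Minkowski is never invoked, whereas you keep the bound $2a_j+a_i$ and split via Minkowski before using $a_j\le\textsc{opt}$; both routes yield the same factor $3$.
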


Next, by applying the result from~\cref{thm:closest-p-1-fair} in conjunction with our general fair consensus approximation algorithm, we attain a $19$-approximation for fair consensus clustering in scenarios where the entire population has an integral ratio of blue to red points.

\begin{restatable} {theorem} {consensuspfair} \label{thm:consensus-p-1-fair}
    Consider any $\ell \ge 1$ and an integer $p > 1$. There is an algorithm that, given a set of $m$ clusterings each over a set of $n$ red-blue colored points where the ratio between the total number of blue and red points is $p$, outputs a $19$-approximate $\ell$-mean fair consensus clustering in time $O(m^2 n^2)$.
\end{restatable}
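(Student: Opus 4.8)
The plan is to reduce \cref{thm:consensus-p-1-fair} to \cref{thm:closest-p-1-fair} exactly as \cref{thm:consensus-1-1-fair} is reduced to \cref{thm:closest-1-1-fair}, using the same generic fair consensus algorithm, and simply propagating the worse constant ($17$ instead of exact) through the triangle-inequality argument. So first I would recall the generic reduction: given $m$ input clusterings $C_1,\dots,C_m$ on the $n$ points, the algorithm picks the ``best'' input clustering $C_{i^\star}$ (the one minimizing the $\ell$-mean of its distances to all the $C_j$'s, over $j \in [m]$), and then postprocesses it into a fair clustering by calling the closest-fair-clustering routine on $C_{i^\star}$. By \cref{thm:closest-p-1-fair}, since the population ratio is the integer $p$, this routine runs in $O(n\log n)$ time and returns a clustering $T$ that is $17$-close to the closest fair clustering of $C_{i^\star}$; the whole procedure therefore costs $O(m^2 n^2)$ time (computing all pairwise clustering distances dominates, at $O(n^2)$ each for $O(m^2)$ pairs).

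Next I would set up the approximation analysis. Let $\mopq$ denote an optimal $\ell$-mean fair consensus clustering, with objective value $\mathrm{OPT} = \left(\frac{1}{m}\sum_{j} \dist(\mopq, C_j)^\ell\right)^{1/\ell}$ (and the $\max_j$ version when $\ell = \infty$). The key structural facts are: (i) the clustering distance $\dist$ is a metric on clusterings (it is, up to scaling, the number of disagreeing pairs, a.k.a.\ the Rand-index distance), so it satisfies the triangle inequality; (ii) because $\mopq$ is itself fair, the closest fair clustering to any input $C_{i^\star}$ is at distance at most $\dist(C_{i^\star}, \mopq)$, so the $17$-close output $T$ satisfies $\dist(C_{i^\star}, T) \le 17\,\dist(C_{i^\star}, \mopq)$; (iii) $i^\star$ was chosen to minimize the $\ell$-mean of $\dist(C_i, C_j)$ over $i$, and by averaging this is at most the $\ell$-mean of $\dist(\mopq, C_j)$ (plug $\mopq$ in as a ``virtual'' center and use triangle inequality once). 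Combining: for each $j$, $\dist(T, C_j) \le \dist(T, C_{i^\star}) + \dist(C_{i^\star}, C_j) \le 17\,\dist(C_{i^\star},\mopq) + \dist(C_{i^\star},C_j)$. Taking $\ell$-means over $j$ and applying the triangle inequality for the $\ell$-norm (Minkowski), together with facts (ii) and (iii), yields an $\ell$-mean cost of $T$ bounded by $(17 + 1 + 1)\,\mathrm{OPT} = 19\,\mathrm{OPT}$; the ``$+1+1$'' accounts for one triangle-inequality hop from $T$ back through $C_{i^\star}$ to $\mopq$ and one from $C_{i^\star}$ to $\mopq$ in the center-selection bound. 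This should hold uniformly in $\ell$ since the only property of the $\ell$-mean used is Minkowski's inequality, which degenerates correctly to the $\max$ at $\ell = \infty$.

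The main obstacle — really the only nonroutine point — is being careful that the generic fair-consensus lemma (proved in \cref{sec:approx-consensus}) is stated with a black-box $\alpha$-close subroutine and that its guarantee is genuinely of the form ``$(\alpha+2)$-approximation'' rather than something like ``$2\alpha+1$'' or ``$3\alpha$'', since the final constant $19 = 17 + 2$ depends on exactly which combination of triangle-inequality hops the generic argument uses. If the generic lemma instead gives, say, a $(2\alpha+1)$ bound, one would get $35$, not $19$; so I would double-check that the restricted-center trick (only ever outputting a postprocessed \emph{input} clustering, and comparing against $\mopq$ only through the chosen center) is what drives the additive-$2$ overhead, matching the $\ell=\infty$, $\alpha=1$ intuition where selecting the best input center already costs a factor $2$ and fairness-postprocessing an exact center costs nothing. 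A secondary, purely bookkeeping concern is confirming that the $O(n\log n)$ cost of \cref{thm:closest-p-1-fair} does not dominate $O(m^2 n^2)$ (it clearly does not, for any $m \ge 1$), so the stated running time is inherited verbatim from the $p=1$ case.
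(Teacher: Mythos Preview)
Your proposed algorithm is not the same as the paper's, and the difference is exactly what breaks the $19$-approximation. You select a single input $C_{i^\star}$ minimizing the $\ell$-mean distance to the other inputs, and then fair-postprocess only that one. The paper instead runs the closest-fair-clustering routine on \emph{every} input $\inpset_i$ to obtain $\m{F}_i$, and returns the $\m{F}_k$ with smallest $\ell$-mean cost. The point of producing all the $\m{F}_i$'s is that the analysis may then compare against the particular $\m{F}_{i^\star}$ where $\inpset_{i^\star}$ is the input \emph{closest to the unknown optimum} $\m{F}^*$; for that $i^\star$ one has the pointwise inequality $\dist(\inpset_{i^\star},\m{F}^*)\le \dist(\inpset_j,\m{F}^*)$ for every $j$, and this is precisely what yields $\dist(\inpset_j,\m{F}_{i^\star})\le \dist(\inpset_j,\m{F}^*)+\dist(\m{F}^*,\inpset_{i^\star})+\dist(\inpset_{i^\star},\m{F}_{i^\star})\le (1+1+\alpha)\,\dist(\inpset_j,\m{F}^*)$.

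Your $C_{i^\star}$ is chosen by a different criterion and need not be close to $\m{F}^*$, so this pointwise bound is unavailable. Concretely, your claim (iii) is false as stated: choosing the best input center gives $\ell$-mean cost at most $2\,\mathrm{OPT}$, not $\mathrm{OPT}$ (the ``virtual center plus one triangle inequality'' argument gives $\dist(C_{i'},C_j)\le 2\,\dist(\m{F}^*,C_j)$ only for the input $C_{i'}$ closest to $\m{F}^*$, hence a factor $2$). More seriously, after postprocessing you need to control $m^{1/\ell}\cdot\dist(T,C_{i^\star})\le 17\,m^{1/\ell}\cdot\dist(C_{i^\star},\m{F}^*)$, and the best general bound one can extract, $m^{1/\ell}\,\dist(C_{i^\star},\m{F}^*)\le \|\dist(C_{i^\star},\cdot)\|_\ell+\|\dist(\cdot,\m{F}^*)\|_\ell\le 3\,\mathrm{OPT}$, gives only $17\cdot 3+2=53$. (Your argument does happen to yield $19$ at $\ell=\infty$, where $\dist(C_{i^\star},\m{F}^*)\le\mathrm{OPT}$ trivially, but not for finite $\ell$.)
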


Finally, by utilizing the result from \cref{thm:closest-p-q-fair} along with our general fair consensus approximation algorithm, we obtain a $35$-approximation for fair consensus clustering when the entire population exhibits an arbitrary fractional ratio of blue to red points.

\begin{restatable} {theorem} {consensuspqfair} \label{thm:consensus-p-q-fair}
    Consider any $\ell \ge 1$ and two integers $p,q > 1$. There is an algorithm that, given a set of $m$ clusterings each over a set of $n$ red-blue colored points where the ratio between the total number of blue and red points is $p/q$, outputs a $35$-approximate $\ell$-mean fair consensus clustering in time $O(m^2 n^2)$.
\end{restatable}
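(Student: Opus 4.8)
The plan is to instantiate the generic fair consensus algorithm of \cref{sec:approx-consensus} with the closest fair clustering routine of \cref{thm:closest-p-q-fair}. Concretely, given the input clusterings $C_1,\dots,C_m$, I would, for each $j$, run the algorithm of \cref{thm:closest-p-q-fair} on $C_j$ to obtain a fair clustering $\widehat{C}_j$ that is $33$-close to $C_j$, and then output whichever of $\widehat{C}_1,\dots,\widehat{C}_m$ minimizes the $\ell$-mean consensus objective. Since every $\widehat{C}_j$ is fair, the output is fair; what remains is to bound its cost and the running time.

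For correctness I would exploit that the clustering distance $d(\cdot,\cdot)$ (number of point pairs grouped together in one clustering but not the other) is a metric, so the triangle inequality applies. Let $\mathcal{C}^*$ be an optimal fair consensus clustering with objective value $\opt$, and let $C_{i^*}$ be an input clustering nearest to $\mathcal{C}^*$. Since the $\ell$-mean of nonnegative numbers is at least their minimum, $d(C_{i^*},\mathcal{C}^*)\le\opt$. As $\mathcal{C}^*$ is itself fair, the closest fair clustering to $C_{i^*}$ lies within distance $d(C_{i^*},\mathcal{C}^*)\le\opt$ of $C_{i^*}$, so the $33$-close clustering $\widehat{C}_{i^*}$ satisfies $d(C_{i^*},\widehat{C}_{i^*})\le 33\,\opt$. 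Two triangle inequalities then yield, for every input $C_j$,
\[
 d(\widehat{C}_{i^*},C_j)\ \le\ d(\widehat{C}_{i^*},C_{i^*})+d(C_{i^*},\mathcal{C}^*)+d(\mathcal{C}^*,C_j)\ \le\ 34\,\opt + d(\mathcal{C}^*,C_j).
\]
Applying Minkowski's inequality to the vector of these $m$ distances (which also covers $\ell=\infty$), the $\ell$-mean objective of $\widehat{C}_{i^*}$ is at most $34\,\opt$ plus the $\ell$-mean objective of $\mathcal{C}^*$, i.e.\ at most $35\,\opt$. The algorithm returns a candidate of minimum objective, hence one of cost at most $35\,\opt$.

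For the running time, the $m$ calls to \cref{thm:closest-p-q-fair} cost $O(mn\log n)$ in total. Computing the $\ell$-mean objective of each candidate $\widehat{C}_j$ needs the distances $d(\widehat{C}_j,C_k)$ over all $j,k$, which is $O(m^2)$ clustering-distance evaluations; doing each in $O(n^2)$ time gives $O(m^2 n^2)$, which dominates.

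I expect the main obstacle to lie entirely in the generic reduction rather than in this instantiation: one must confirm that the clustering distance is a genuine metric so the chain of triangle inequalities is valid, and that the bound on the objective of $\widehat{C}_{i^*}$ is uniform in $\ell$ -- in particular going through for $\ell=1$ (median), $\ell=\infty$ (center), and all values in between. Once \cref{thm:closest-p-q-fair} and the generic reduction (already established in the proofs of \cref{thm:consensus-1-1-fair} and \cref{thm:consensus-p-1-fair}) are available, substituting the $33$-close guarantee is immediate and explains why this is the weakest of the three consensus bounds: it mirrors $33$ being the weakest of the three closest-fair guarantees, with the ``$+2$'' coming from the two triangle-inequality hops above.
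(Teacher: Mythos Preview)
Your approach matches the paper's exactly: instantiate the generic reduction (\cref{thm:combine-consensus}) with the $33$-close routine of \cref{thm:closest-p-q-fair} to get $2+33=35$; the algorithm, the running-time accounting, and the triangle-inequality chain are the same.

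There is one slip in the Minkowski step. With the paper's objective $\bigl(\sum_{j} d(C_j,\cdot)^\ell\bigr)^{1/\ell}$ (which, despite the name ``$\ell$-mean'', is not normalised by $m$), the $\ell$-norm of the constant vector $(34\,\opt,\ldots,34\,\opt)\in\mathbb{R}^m$ equals $34\,\opt\cdot m^{1/\ell}$, so your displayed bound only yields a factor $34\,m^{1/\ell}+1$, which blows up for finite $\ell$. The fix the paper uses---and for which you already have all the ingredients---is to keep the bound termwise rather than passing to the global $\opt$: since $C_{i^*}$ is the input nearest to $\mathcal{C}^*$, one has $d(C_{i^*},\mathcal{C}^*)\le d(C_j,\mathcal{C}^*)$ for every $j$, and hence
\[
d(\widehat{C}_{i^*},C_j)\ \le\ 33\,d(C_{i^*},\mathcal{C}^*)+d(C_{i^*},\mathcal{C}^*)+d(\mathcal{C}^*,C_j)\ \le\ 35\,d(\mathcal{C}^*,C_j).
\]
Taking the $\ell$-norm over $j$ then gives the $35$-approximation directly, with no Minkowski needed. (Your argument would also be correct verbatim if ``$\ell$-mean'' were read as the normalised power mean, since approximation ratios are insensitive to that normalisation; but as defined in the paper the objective is the unnormalised $\ell$-norm.)
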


We summarize our main results below.

\begin{table}[!h]\label{tab:result}
\begin{center}
    \begin{tabular}{|c||c|c|c|}
        \hline
        Clustering Problem & Perfectly Balanced ($1:1$) & Integral Ratio ($p:1$) & Fractional Ratio ($p:q$)\\ \hline
        Closet Fair & Optimal (\cref{thm:closest-1-1-fair}) & 17-approx. (\cref{thm:closest-p-1-fair}) & 33-approx. (\cref{thm:closest-p-q-fair})\\
                  \hline
                  \hline
        Fair Consensus & 3-approx. (\cref{thm:consensus-1-1-fair}) & 19-approx. (\cref{thm:consensus-p-1-fair}) & 35-approx. (\cref{thm:consensus-p-q-fair})\\
                  \hline
                  \hline
    \end{tabular}
    \caption{Table summarizing main results}
\end{center}
\end{table}


\section{Preliminaries}\label{sec:prelim}

\paragraph{Notations. }In this paper, we consider clusterings over a set $V$ of red-blue colored points. We use $n$ to denote the size of $V$. Throughout this paper, we consider the points in $V$ are colored either red or blue. For any subset of points $S \subseteq V$, we use $\red{S}$ (resp. $\blue{S}$) to denote the subset of all red (resp. blue) points in $S$. Further, without loss of generality, we assume that the number of blue points in the whole set $V$ is at least that of red points, i.e., $|\blue{V}| \ge |\red{V}|$. For positive integers $a,b,p$, we use $a \Pmod b$ to denote that $a$ is congruent to $b$ under modulo $p$.

\paragraph{Consensus clustering. }Between two clusterings $\m{M}, \m{C}$, we define their distance, denoted by $\dist(\m{M}, \m{C})$, to be the number of unordered pairs $i,j \in V$ that are clustered together by one but separated by another. The consensus clustering problem asks to aggregate a set of clustering over the points $V$. In this paper, we consider consensus clustering with respect to a generalized objective function.

\begin{definition}[Generalized Mean Consensus Clustering]
    Consider an exponent parameter $\ell \in \mathbb{R}$. Given a set of $m$ clusterings $\inpset_1,\inpset_2,\cdots,\inpset_m$ over a point set $V$, the \emph{$\ell$-mean Consensus Clustering} problem asks to find a clustering $\m{C}$ (not necessarily from the set of input clusterings) over $V$ that minimizes the objective function $\left( \sum_{i=1}^m \left(\dist(\inpset_i, \m{C})\right)^\ell \right)^{1/\ell}$.
\end{definition}

The generalized mean objective is well-studied in the clustering literature~\cite{ChlamtacMV22}. In the context of consensus clustering, for $\ell=1$, the above problem (note, the objective is now the sum of distances) is referred to as the \emph{median consensus clustering}~\cite{ailon2008aggregating}, and for $q=\infty$, the above problem (note, the objective is now the maximum distance) is the \emph{center consensus clustering}~\cite{DK25}.

\paragraph{Fair clustering. }
\begin{definition}[$\fair$]
    Consider a clustering $\fairsetd$ on a set of red-blue colored points where the ratio between the number of blue and red points is $p/q$, for two integers $p,q \ge 1$. $\fairset$ is called a $\fair$ if for every cluster $F_i \in \fairset$, the ratio between the number of blue and red points $|\blue{F_i}| / |\red{F_i}| = p/q$.  
\end{definition}

Next, we define the closest fair clustering.

\begin{definition}[Closest $\fair$]
    Given a clustering $\inpsetd$, a clustering $\fairmopd$ is called its \emph{closest $\fair$} if for all $\fair$ $\fairset$, $\dist(\inpset, \fairmop) \le \dist(\inpset, \fairset)$. 
\end{definition}

Let $\fairmop$ be an (arbitrary) closest $\fair$. Then, we use \optfair (or simply $\opt$ when it is clear from the context) to denote $\dist(\inpset,\fairmop)$. For any $\alpha \ge 1$, a clustering $\fairset$ is called an \emph{$\alpha$-close} $\fair$ if $\dist(\inpset, \fairset) \leq \alpha \cdot \dist(\inpset, \fairmop)$.

In this paper, we focus on the \emph{fair consensus clustering} problem.

\begin{definition}[Generalized Mean Fair Consensus Clustering]
    Consider an exponent parameter $\ell \in \mathbb{R}$. Given a set of $m$ clusterings $\inpset_1,\inpset_2,\cdots,\inpset_m$ over a point set $V$, the \emph{$\ell$-mean Fair Consensus Clustering} problem asks to find a {\fair} $\m{F}$ (not necessarily from the set of input clusterings) over $V$ that minimizes the objective function $\left(\sum_{i=1}^m \left(\dist(\inpset_i, \m{F})\right)^\ell \right)^{1/\ell}$.
\end{definition}

\paragraph{Balanced clustering. } One of the intermediate steps we use in making an input clustering fair is first to make it \emph{balanced} and then convert a balanced clustering to a fair one.
\begin{definition}[$\bal$]
    Consider a clustering $\m{Q} = \{Q_1, Q_2, \ldots, Q_{\psi}\}$ on a set of red-blue colored points where the irreducible\footnote{Note, a fraction $p/q$ is called \emph{irreducible} if $p$ and $q$ are coprime, i.e., $\textsf{gcd}(p,q)=1$.} ratio between the number of blue and red points is $p/q$, for two integers $p,q \ge 1$. $\m{Q}$ is called a $\bal$ if for every cluster $Q_i \in \m{Q}$, the number of blue points $|\blue{Q_i}|$ is a multiple of $p$, and the number of red points $|\red{Q_i}|$ is a multiple of $q$.  
\end{definition}

Next, we define the closest balanced clustering.

\begin{definition}[Closest $\bal$]
    Given a clustering $\inpsetd$, a clustering $\mopqd$ is called its \emph{closest $\bal$} if for all $\bal$ $\m{Q}$, $\dist(\inpset, \mopq) \le \dist(\inpset, \m{Q})$. 
\end{definition}

Let $\mopq$ be an (arbitrary) closest $\bal$. Then, we use \optbal (or simply $\opt$ when it is clear from the context) to denote $\dist(\inpset,\mopq)$. For any $\alpha \ge 1$, a clustering $\m{Q}$ is called an \emph{$\alpha$-close} $\bal$ if $\dist(\inpset, \m{Q}) \leq \alpha \cdot \dist(\inpset, \mopq)$.

\section{Technical Overview}\label{sec:tech-overview}
Given a set $ V $ of $ n $ points, a clustering $ \mathcal{D} = \{D_1, D_2, \dots, D_\phi\} $ is a partition of $ V $ into disjoint subsets. 
Given two clusters, we define their distance as the total number of pairwise disagreements between them.
In this work, we assume that each point in $ V $ is associated with a color from the set $\{\text{red, blue}\}$. For a cluster $ D_i $, let $ \blu(D_i) $ denote the blue points in $ D_i $, and $ \re(D_i) $ denote the red points in $ D_i $.  
Given a clustering $\mathcal{D}$, with the ratio between the number of blue and red points being $p/q$ for some integer $p,q\ge 1$,  we call $\mathcal{D}$ a \emph{fair clustering} if, for each $ i \in [\phi] $, the ratio between $|\blu(D_i)|$ and $|\re(D_i)|$ is exactly $ p/q $. Next, we discuss how efficiently we can transform a given clustering $ \mathcal{D} $ into its nearest fair clustering under various settings of $ p $ and $ q $.

\paragraph{Optimal Fair Clustering for Equi-Proportionally Balanced Input.} We begin by discussing the case where $ p = q = 1 $ and present a near-linear time algorithm that, given a clustering $ \mathcal{D} $ of $ V $, efficiently finds a closest fair clustering $ \mathcal{C}^* $ (i.e., an optimal fair clustering). Below, we provide a brief outline of this algorithm.  

Before describing the algorithm, we first establish key properties of the optimal fair clustering. In particular, we show that there exists an optimal fair clustering that satisfies specific structural properties. These properties serve as a guide for transforming $ \mathcal{D} $ into $ \mathcal{C}^* $.  

(i) Our first observation considers an arbitrary cluster $ D_i \in \mathcal{D} $. Without loss of generality, assume $ |\blu(D_i)| \geq |\re(D_i)|$. We define the \emph{maximal fair cluster} of $ D_i $, denoted by $ D_i^{\text{max},\text{fair}} $, as the largest subset of $ D_i $ that contains exactly $ |\re(D_i)| $ red and blue points.  
Our first claim is that for each $ i \in [\phi] $, the cluster $ D_i^{\text{max},\text{fair}} $ appears as a cluster in $ \mathcal{C}^* $. The intuition behind this is as follows: it is always beneficial to retain at least $|\re(D_i)|$ red and blue points from $ D_i $ together. Otherwise, if they are distributed across multiple clusters, the overall cost would increase, as these points originate from the same input cluster.  
Furthermore, if $ D_i^{\text{max},\text{fair}} $ were instead included as a subset of a larger cluster in $ \mathcal{C}^* $, then since $ \mathcal{C}^* $ is fair—the larger cluster would need additional blue points from another cluster to maintain fairness. Importantly, the number of foreign blue points added would be equal to the excess blue points removed when forming the maximal fair cluster, ensuring that the overall cost remains unchanged.

(ii) Next, consider a cluster $ C^*_j \in \mathcal{C}^* $. We claim that $ C^*_j $ originates from at most two clusters in the input clustering $ \mathcal{D} $.  
To see why this holds, assume for contradiction that $ C^*_j $ is formed from three distinct clusters $ D_o, D_s, D_t \in \mathcal{D} $. From our earlier observations, each of these clusters can contribute only blue or only red points to $ C^*_j $. Without loss of generality, suppose $ D_o $ contributes $ b_o $ blue points, while $ D_s $ and $ D_t $ contribute $ r_s $ and $ r_t $ red points, respectively. Since $\mathcal{C}^*$ is fair, we must have $ b_o = r_s + r_t $.  
Now, assume without loss of generality that $ r_s \leq r_t $. We can create a new clustering by partitioning $ C^*_j $ into two clusters:  
The first cluster contains $ r_s $ red points from $ D_s $ and $ r_s $ blue points from $ D_o $.  
The second cluster contains $ r_t $ red points from $ D_t $ and $ r_t $ blue points from $ D_o $.  
In this transformation, partitioning $ D_o $ increases the cost by $ r_s \cdot r_t $. However, since the red points from $ D_s $ and $ D_t $ are now separated, the cost decreases by the same amount, $ r_s \cdot r_t $. Thus, the overall cost remains unchanged. Moreover, by construction, the new clustering remains fair.  
In our algorithm, we extend this argument to a more general setting where $ C^*_j $ originates from more than three clusters, showing that a similar reasoning applies to establish the claim.

Using these two properties, we construct $ \mathcal{C}^* $ from $ \mathcal{D} $ through a two-step process. 

In \textbf{Step 1:} Following observation (i), for each input cluster $ D_i $, we extract the maximal fair cluster $ D_i^{\text{max},\text{fair}} $.  
Define the remaining points as $ R_i = D_i \setminus D_i^{\text{max},\text{fair}} $.  
Note that each $ R_i$ forms a monochromatic cluster.

In \textbf{Step 2}, we introduce a greedy strategy to combine $ R_1, R_2, \dots, R_\phi $ into a set of fair clusters. 
We begin by selecting the smallest cluster $ R_i $. Without loss of generality, assume $ R_i $ consists of blue points. We then identify the smallest red cluster $ R_j $ and form a new cluster by combining $ R_i $ with $ |R_i| $ red points from $ R_j $. This process is repeated iteratively for the remaining monochromatic clusters. (See~\cref{fig:equiproportion}.) 

\begin{figure}[t]
\centering
    \includegraphics[width=11cm]{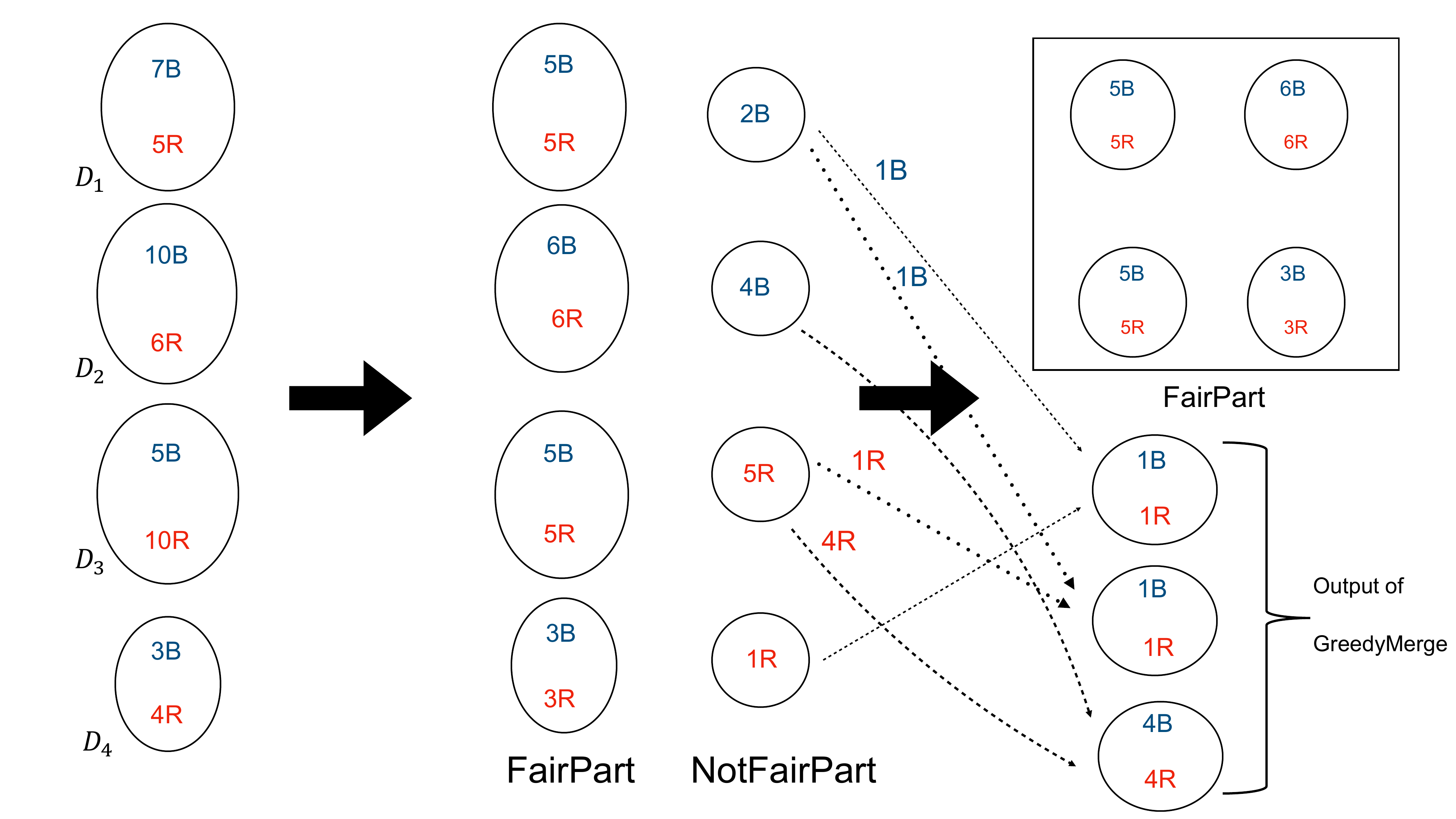}
    \caption{\label{fig:equiproportion}Visualization of the algorithm to find a closest fair clustering for equi-proportionally balanced input. In each cluster, \textcolor{blue}{$xB$} (resp., \textcolor{red}{$yR$}) denotes that the number of \textcolor{blue}{blue} (resp., \textcolor{red}{red}) points is $x$ (resp., $y$).
    }
\end{figure}

By construction, this merging strategy preserves the fairness property. Additionally, since $ \mathcal{C}^* $ is fair and each $ D_i^{\text{max},\text{fair}} $ is included in $ \mathcal{C}^* $, the total number of red points in $ \bigcup_{i \in [\phi]} R_i $ matches the total number of blue points. Thus, our greedy strategy ensures a fair partition of all remaining points.  

Next, we show the optimality of this greedy strategy by leveraging Property (ii) of the optimal fair cluster discussed earlier. We claim that selecting the smallest $ R_i $ at each step is indeed optimal.  
Intuitively, it is preferable to keep all points of $ R_i $ together when they originate from the same input cluster.
Further following Property (ii), the optimal fair cluster can contain points from at most two input clusters. Now, to make $ R_i $ fair while keeping it intact, we must merge it with a cluster $ R_j $ of the opposite color such that $ |R_j| \geq |R_i| $. This condition is best satisfied by processing clusters in \emph{non-decreasing} order of size, which justifies the rationale behind our greedy strategy.

\paragraph{Constant Approximation for Closest Fair Clustering (General $p,q$). } We now consider a more general version of the problem for arbitrary values of $ p $ and $ q $. Without loss of generality, we assume $ p \geq q $. The algorithm proceeds in two steps:

\begin{enumerate}
    \item \textbf{Balancing Step:} Given an input clustering $\mathcal{D} = \{D_1, D_2, \dots, D_\phi\}$, we transform it into a new clustering $\mathcal{T} = \{T_1, T_2, \dots, T_\gamma\}$ such that each cluster $T_i$ contains a number of blue points that is a multiple of $ p $ and a number of red points that is a multiple of $ q $.

    \item \textbf{Making-Fair Step}: We further process the clusters $ T_1, T_2, \dots, T_\gamma $ to obtain a fair clustering $\mathcal{F} = \{F_1, F_2, \dots, F_\zeta\}$.
\end{enumerate}

Let $\mathcal{F}^*$ denote an optimal fair clustering closest to the input clustering $\mathcal{D}$, with an associated distance of $\text{OPT}$. By definition, $\mathcal{F}^*$ is also balanced. 

If we propose an $\alpha$-approximation algorithm for the balancing step, then the distance between $\mathcal{D}$ and $\mathcal{T}$ is at most $\alpha \cdot \text{OPT}$. Furthermore, since the distance between $\mathcal{F}^*$ and $\mathcal{T}$ is at most $(1+\alpha) \cdot \text{OPT}$ (by the triangle inequality), getting a $\beta$-approximation algorithm for the making-fair step ensures that the distance between $\mathcal{T}$ and $\mathcal{F}$ is at most $(1+\alpha) \beta \cdot \text{OPT}$.

Thus, the total distance between the input clustering $\mathcal{D}$ and the final output fair clustering $\mathcal{F}$ is at most $(\alpha + \beta + \alpha \beta) \cdot \text{OPT}$, leading to an overall approximation ratio of $(\alpha + \beta + \alpha \beta)$. This guarantees a constant-factor approximation when $\alpha$ and $\beta$ are constants.

\vspace{2mm}
\noindent \textbf{$3.5$-approximation Balancing Algorithm for $p:1$:} We begin by analyzing the balancing step for the case where $ q = 1 $. For this, we present a $ 3.5 $-approximation algorithm. 

For every cluster $D_i$, we define the \emph{surplus} of $D_i$ as $(|\blu(D_i)| \mod p)$ and its deficit as $(p-(|\blu(D_i)| \mod p))$. To balance each cluster, we either need to remove surplus points or add the required number of deficit points from another cluster. For each cluster, we define the \emph{cut cost} as the cost of removing surplus points and the \emph{merge cost} as the cost of adding deficit points. 

Next, we define our cut and merge strategy. We start by classifying clusters into two categories. Call a cluster \emph{merge cluster} if its surplus is at least $p/2$; i.e., its merge cost is smaller than its cut cost. Otherwise, call it a \emph{cut cluster}. For a merge cluster $D_i$, it is more efficient to add $(p-(|\blu(D_i)| \mod p))$ points (its deficit), whereas, for a cut cluster $D_i$, it is optimal to remove $(|\blu(D_i)| \mod p)$ points (its surplus). A first-line approach is to remove surplus points from cut clusters and merge them into merge clusters, ensuring that the number of blue points in each cluster becomes a multiple of $p$. We start with this approach, referred to as \emph{AlgoGeneral}. However, we cannot guarantee that the total surplus across all clusters matches the total deficit. As a result, after this step, we may be left with only cut clusters or only merge clusters.

If only merge clusters remain, we cannot satisfy all merge requests since we need to cut some clusters to provide the necessary points. This is challenging, as cutting is now costlier than merging for the remaining unbalanced clusters. The key intuition here is that if cutting is necessary, we need a nontrivial strategy to optimize the cost. To achieve this, we sort the merge clusters in non-increasing order of $cut cost- merge cost$, ensuring optimal cutting decisions. An important observation is that the total deficit $W$ across all clusters is a multiple of $p$. This follows from the existence of a fair clustering, which guarantees that there exists a way to process the input set of clusters such that the number of blue points in each cluster is a multiple of $p$. Additionally, if we cut $k$ points from a cluster $D_i$ to fulfill the $k$ deficit of another cluster $D_j$, we effectively balance a total deficit of $p$ points. This follows intuitively because if $k$ is the surplus of $D_i$, then its deficit is $p-k$, ensuring that the combined deficit of $D_i$ and $D_j$ sums up to $p$. Since the total deficit is $W$, we repeat this operation exactly $W/p$ times, which is crucial for bounding the overall approximation cost. (See~\cref{fig:merge-p}.) 

\begin{figure}[t]
\centering
    \includegraphics[width=11cm]{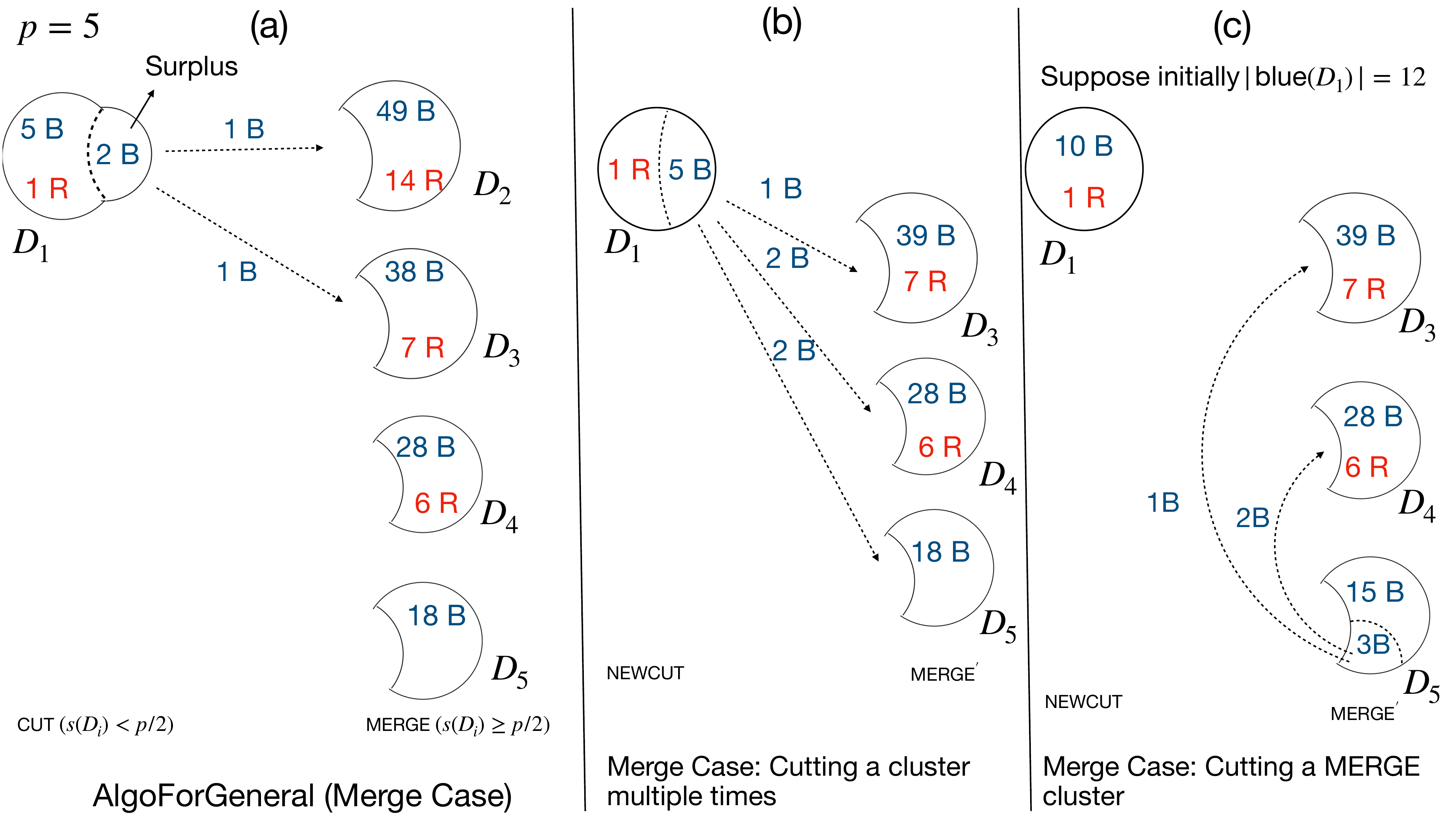}
    \caption{\label{fig:merge-p}Visualization of the scenario when only merge clusters remain after executing AlgoGeneral, which converts (a) to (b). Now, the total deficit $W=5$, and thus, depending on the $cut cost- merge cost$, either an already cut cluster needs to be cut further (depicted in (b)), or a single merge cluster needs to be cut (depicted in (c)). In each cluster, \textcolor{blue}{$xB$} (resp., \textcolor{red}{$yR$}) denotes that the number of \textcolor{blue}{blue} (resp., \textcolor{red}{red}) points is $x$ (resp., $y$).
    }
\end{figure}

Next, we consider the simpler case where, after the initial cut-merge processing, we are left only with cut clusters, each having a surplus of size $<p/2$. Here, we simply remove these surplus points from each cluster and combine them to form clusters of size exactly $p$, containing only blue points. A key question is why forming clusters of size $p$ is the optimal choice rather than larger clusters. The reason is that ensuring size $p$ may require partitioning surplus points from specific clusters, incurring additional costs. For example, consider the case where there are three cut clusters with surpluses of $p/3$, $p/3$, and $p/2-1$. Following our strategy to form a cluster of size $p$, we must partition at least one existing cluster. One might wonder whether it is more efficient to combine even more clusters and create a cluster of size $2p$ instead of $p$. However, through careful analysis, we show that accommodating clusters of size $>p$ results in higher merge costs than the partition cost. Thus, forming clusters of size exactly $p$ is the optimal strategy for combining surplus points efficiently. (See~\cref{fig:cut-p}.) 

\begin{figure}[t]
\centering
    \includegraphics[width=11cm]{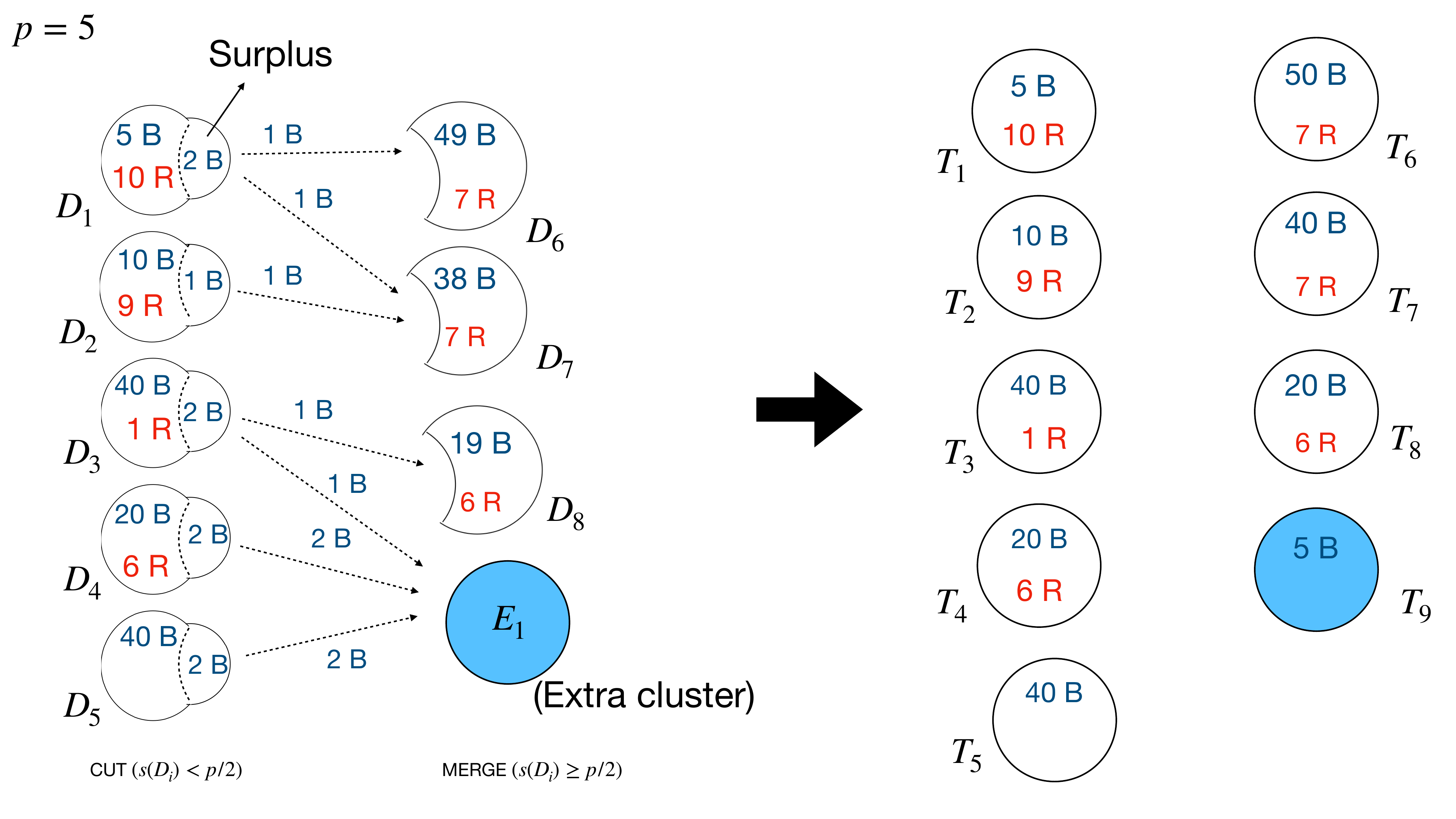}
    \caption{\label{fig:cut-p}Visualization of the scenario when only cut clusters remain after executing AlgoGeneral. The extra surplus points of clusters $D_3, D_4, D_5$ form an extra cluster $E_1$ of size $p$, containing only blue points. In each cluster, \textcolor{blue}{$xB$} (resp., \textcolor{red}{$yR$}) denotes that the number of \textcolor{blue}{blue} (resp., \textcolor{red}{red}) points is $x$ (resp., $y$).
    }
\end{figure}

\vspace{2mm}

\noindent \textbf{$7.5$-approximation Balancing Algorithm for $p:q$:} We now turn our attention to balancing when both $p$ and $q$ are greater than 1. We build on the fundamental concept of the balancing step previously outlined for the $p:1$ scenario. In this context, we aim to ensure that the blue subpart of each cluster is a multiple of $p$ and the red subpart is a multiple of $q$. This involves examining surpluses and deficits for both the blue and red subparts separately, similar to the $p:1$ case. Consequently, each cluster $D_i$ can be categorized into one of four types: blue merge - red cut, blue cut - red merge, blue cut - red cut, and blue merge - red merge, where blue (resp., red) merge denotes a deficit of at most $p/2$ points (resp., $q/2$ points) and blue (resp., red) cut denotes a surplus of at most $p/2$ points (resp., $q/2$ points). Depending on its type, we employ our previously outlined balancing algorithm by addressing the blue and red components of a cluster $D_i$ separately. 


However, with two colors, our analysis now encounters additional challenges in ensuring an approximation bound. For example, consider a cluster $D_i$ of type blue merge - red merge. An optimal strategy might involve merging external blue points (denoted as $B_i$) to address the blue deficit and external red points (denoted as $R_i$) to address the red deficit. However, this introduces an intercluster cost between the sets $B_i$ and $R_i$, calculated as $|R_i|\cdot |B_i|$. In this context, it can be noted that $|R_i| \le q/2 \le |\re(D_i)|$, which means the additional cost is bounded above by $\opt_{D_i}$, where $\opt_{D_i}$ represents the optimal cost incurred by the points within cluster $D_i$. To understand this, note that $\opt_{D_i}$ must cover at least the cost of resolving its blue deficit, which is at least $|\re(D_i)|\cdot |B_i|$. Summing these costs across all such clusters results in an additional cost that is bounded above by the overall optimal cost. 

For the remaining three cases, we show that by adapting the techniques and arguments from the \( p:1 \) case, we achieve the claimed approximation guarantee. Finally, by summing all these costs, we establish that the overall approximation factor is bounded by \( 7.5 \).

\vspace{2mm}

\noindent \textbf{Making the Balance Clustering Fair:}
Given a balanced clustering $\mathcal{T} = \{T_1, T_2, \dots, T_\gamma\}$ here we discuss the step to convert it to a fair clustering $\mathcal{F} = \{F_1, F_2, \dots, F_\zeta\}$. For this, we present a $ 3$-approximation algorithm. 

Without loss of generality, assume $ p \geq q $. We define a cluster $ T_i $ as $\tr$ if it has surplus red points, meaning $ |\blue{\outcl{i}}| < (p/q) |\red{\outcl{i}}| $.  
Let $ S_i $ denote the surplus of red points in $ T_i $, given by $ |S_i| = |\red{\outcl{i}}| - (q/p) |\blue{\outcl{i}}| $.
Otherwise, we define $ T_i $ as $\tb$ if it has a deficit of red points, meaning $ |\blue{\outcl{i}}| > (p/q) |\red{\outcl{i}}| $.  
Let $ D_i $ represent the red point deficit in $ T_i $, given by $ |D_i| = (q/p) |\blue{\outcl{i}}| - |\red{\outcl{i}}| $.

Next, we discuss how to eliminate surplus red points and redistribute them to address the deficits. To achieve this, we simply remove the surplus points from clusters in $ \tr $ and reassign them to clusters in $ \tb $ to compensate for the deficit.  
A crucial observation is that the existence of a fair clustering guarantees that the total surplus from all clusters in $ \tr $ matches the total deficit across all clusters in $ \tb $, thereby making the redistribution process well-defined.
We now proceed to analyze the approximation.

We first claim that since $ p \ge q $, removing surplus points and redistributing them to fulfill deficits is indeed optimal.  
A key subtlety we need to address is the case where a specific surplus $ S_i $ is distributed across multiple clusters to fulfill their deficits. This redistribution involves splitting $ S_i $, which incurs an additional cost. However, even in an optimal fair clustering, $ S_i $ must be transformed into a fair cluster, requiring the merging of $ (p/q)|S_i| $ blue points with $ S_i $. 
The cost of this merging is $ (p/q)|S_i|^2 $, while the maximum splitting cost of $ S_i $ is at most $ |S_i|^2/2 $. Since $ p \ge q $, the splitting cost of $ S_i $ remains bounded by its merging cost, ensuring the overall $3$-approximation guarantee. In the case where multiple surplus clusters contribute to fulfilling a deficit $ D_i $, our algorithm incurs an intra-cluster cost of at most $ |D_i|^2/2 $. Since $ p \ge q $, a similar argument as before ensures that, despite this additional cost, we still remain within the claimed approximation guarantee.

\paragraph{Fair Consensus Clustering. }
Next, we propose an algorithm for \emph{Fair Consensus Clustering} that minimizes the \(\ell\)-mean objective for any \(\ell \geq 1\).  
Formally, given \( m \) clusterings \(\inpset_1, \inpset_2, \dots, \inpset_m\) on a set \( V \), the goal is to find a fair clustering \(\m{F}^*\) that minimizes  
$\left(\sum_{j=1}^m \left(\dist(\inpset_j, \m{F}^*)\right)^\ell \right)^{1/\ell}$.
To achieve this, we design a simple algorithm leveraging our \emph{Closest Fair Clustering} algorithm discussed above.

We begin by computing, for each \(\inpset_i\), an \(\alpha\)-close fair clustering \(\m{F}_i\) using the \emph{Closest Fair Clustering algorithm}. Then, we output the fair clustering \(\m{F}_k\) that minimizes the overall \(\ell\)-mean objective.  
Next, we show that this approach guarantees a \((2+\alpha)\)-approximation for \emph{Fair Consensus Clustering}.

Let \(\m{F}^*\) be an optimal fair consensus clustering, and let \(\m{D}_i\) be the input clustering closest to \(\m{F}^*\). Using our \(\alpha\)-approximate \emph{Closest Fair Clustering} algorithm, we obtain a fair clustering \(\m{F}_i\) such that $\dist(\m{D}_i, \m{F}_i) \leq \alpha \dist(\m{D}_i, \m{F}^*)$.
 By applying the triangle inequality, we get $\dist(\m{F}_i, \m{F}^*) \leq (1+\alpha) \dist(\m{D}_i, \m{F}^*)$.
Now, for any other input clustering \(\m{D}_j\), we have  

\[
\dist(\m{D}_j, \m{F}_i) \leq \dist(\m{D}_j, \m{F}^*) + \dist(\m{F}_i, \m{F}^*) \leq (2+\alpha) \dist(\m{D}_j, \m{F}^*),
\]  

since by assumption, \(\m{D}_i\) is the input closest to \(\m{F}^*\), ensuring \(\dist(\m{D}_i, \m{F}^*) \leq \dist(\m{D}_j, \m{F}^*)\).  

Thus, we conclude that there exists an input clustering \(\m{D}_i\) whose corresponding fair clustering \(\m{F}_i\) achieves a \((2+\alpha)\)-approximation. Since our algorithm selects the fair clustering $\m{F}_k$ that minimizes the overall objective, it guarantees an overall approximation of \((2+\alpha)\).  

Moreover, as our proposed \emph{Closest Fair Clustering} algorithm achieves a constant approximation factor \(\alpha\), this ensures that our \emph{Fair Consensus Clustering} algorithm also provides a constant-factor approximation.



\section{Conclusion}\label{sec:conclusion}

In this paper, we initiate the study of closest fair clustering and fair consensus clustering problems. We focus on datasets where each point is associated with a binary protected attribute (red or blue). First, we study the problem of transforming an arbitrary clustering into a fair clustering while minimizing modifications. For perfectly balanced datasets with an equal number of red and blue points, we propose an optimal algorithm. We then extend our approach to more general cases with arbitrary ratios and develop constant-factor approximation algorithms.
Building on our closest fair clustering algorithm, we develop a constant-factor approximation algorithm for Fair Consensus Clustering. 

A natural open question is whether the approximation factors for both problems can be improved. For the closest fair clustering problem, directly achieving a better approximation is of particular interest. In contrast, the fair consensus clustering problem is effectively solved through a reduction to the closest fair clustering, making it compelling to explore whether a tighter reduction can be achieved in terms of the approximation guarantee.  
Another promising direction is extending these methods to handle non-binary and potentially overlapping protected groups, broadening the applicability of fair clustering frameworks.

\acks{Diptarka Chakraborty was supported in part by an MoE AcRF Tier 2 grant (MOE-T2EP20221-0009), an MoE AcRF Tier 1 grant (T1 251RES2303), and a Google South \& South-East Asia Research Award. Kushagra Chatterjee was supported by an MoE AcRF Tier 2 grant (MOE-T2EP20221-0009). Debarati Das was supported in part by NSF grant 2337832. A part of the research was done when Romina Nobahari was doing an internship at the National University of Singapore, where she was supported by an MoE AcRF Tier 2 grant (MOE-T2EP20221-0009).}


\bibliography{reference}

\newpage
\appendix

\paragraph{Organization of Appendix. }First, in~\cref{sec:equiproportion}, we introduce an exact algorithm for identifying the closest fair clustering when the dataset contains an equal number of blue and red points. Subsequently, in~\cref{sec:approx-fair}, we outline a two-stage algorithmic framework that establishes constant factor approximation guarantees for the closest fair clustering problem in more general scenarios where the ratio of blue to red points is arbitrary. Following this, in~\cref{sec:multiple-of-p-clustering}, we present an approximation algorithm for the initial balancing step when the ratio is an integer, and in~\cref{sec:multiple-of-p-q-clustering}, we provide an approximation algorithm for cases where the ratio is fractional. An approximation algorithm for converting a balanced cluster into a fair one, which is the second step, is detailed in~\cref{sec:bal-to-fair}. In~\cref{sec:approx-consensus}, we leverage the results of the closest clustering problem to achieve constant-factor approximation guarantees for the fair consensus clustering problem. In~\cref{sec:NP-hard}, we establish \texttt{NP}-hardness of the closest fair clustering problem for arbitrary ratio of blue to red points.

\section{Optimal Fair Clustering for Equi-Proportionally Balanced Input}\label{sec:equiproportion}

Given an input clustering $\inpsetd$ consisting of 
an equal number of red and blue points, we propose a linear-time algorithm to compute a closest fair clustering $\optset$ to $\inpset$. Formally, we show the following. 

\closestequifair*

\subsection{Details of the Algorithm} \label{subsec:algorithms.for.equiproportion} 


Our algorithm (\cref{alg:FindClosestFair}) consists of two key operations: \emph{cut} and \emph{merge}. Given a cluster $ \gencl{i} $, the cut operation removes a subset of points $ S $ from $ \gencl{i} $, resulting in two cluster $ \gencl{i}\setminus S $ and $ S $. Conversely, the merge operation adds a subset of points $ M $ to $ \gencl{i} $, forming a new cluster $ \gencl{i}\cup M$.

We rely on two procedures: $ \makeitfair $ (\cref{alg:MakeItFair}) and $\greedymerge$ (\cref{alg:GreedyMerge}). Our algorithm starts with iterating over all the clusters $ \inpcl{i}\in \inpset $, and for each $ \inpcl{i} $, applies $ \makeitfair(\inpcl{i}) $, which do the following. Cut a fair subset of maximum size from the cluster $ \inpcl{i} $, and let $ F =\min(|\red{\inpcl{i}}|,|\blue{\inpcl{i}}|)  $. In other words, after cutting, this new cluster contains $ F $ red points and $ F $ blue points. Denote this cluster by $\fairpart_{\inpcl{i}}$. Points in $\inpcl{i}$ outside $ \fairpart_{\inpcl{i}} $, if there remains, form a not fair cluster $ \nfairpart_{\inpcl{i}} $ of size $ |\red{\inpcl{i}}- \blue{\inpcl{i}}| $, in which every point has the same color. Consider the two sets
\begin{align}
    \text{NotFairRed} &= \{ \nfairpart_{\inpcl{i}}| \text{ points of } \nfairpart_{\inpcl{i}} \text{are colored red} \} \nonumber \\
    \text{NotFairBlue} &= \{ \nfairpart_{\inpcl{i}}| \text{ points of } \nfairpart_{\inpcl{i}} \text{are colored blue} \}. \nonumber 
\end{align}
Note that all the clusters that remain unfair belong to either $ \text{NotFairRed} $ or $ \text{NotFairBlue} $; hence the number of red points in $ \text{NotFairRed} $ matches the number of blue points in $ \text{NotFairBlue} $. The algorithm now applies $ \greedymerge(\text{NotFairRed}, \text{NotFairBlue})$, which first sorts clusters in these sets in non-decreasing order based on their clusters' size. Finally, it iteratively cuts subsets of clusters from $ \text{NotFairRed} $ and merges them into subsets of clusters from $ \text{NotFairBlue} $ so that the newly created clusters are fair. At the end of this procedure, the whole clustering achieves fairness. We provide the pseudocodes for detailed algorithms.

\textbf{Runtime Analysis of \cref{alg:FindClosestFair}}: In the subroutine call \cref{alg:MakeItFair} of \cref{alg:FindClosestFair} we consider each cluster $D_i \in \inpset$ and divide it into two parts FairPart and NotFairPart. For each cluster $D_i \in \inpset$ this would take $O(|D_i|)$ time because we need to read the whole cluster $D_i$ to split it into FairPart and NotFairPart. Hence, the subroutine call \cref{alg:MakeItFair} would take at most $O(\sum_{D_i \in \inpset}|D_i|) = O(n)$ time. In the subroutine call \cref{alg:GreedyMerge} of \cref{alg:FindClosestFair} the input is a set of monochromatic clusters $\m{R} = \{R_1, R_2, \ldots, R_t\}$ and $\m{B} = \{B_1, B_2, \ldots, B_m\}$. Sorting clusters in $ \m{R} $ and $ \m{B} $ costs $O(|\m{R}|\log(|\m{R}|)) $ and $ O(|\m{B}|\log(|\m{B}|)) $, which are $ O(n\log(n)) $ time. At each step of~\cref{alg:GreedyMerge}, it reads two sets $R_j$ and $B_k$ and merges them. This would take $O(\sum_{j = 1}^t|R_j| + \sum_{k = 1}^m |B_k|) = O(n)$ time. Hence, combining the time complexities of two subroutine calls, we get that the time complexity of \cref{alg:FindClosestFair} is $O(n\log(n))$.

\begin{algorithm2e}
\DontPrintSemicolon
\caption{FindClosestFair($\inpset$)}\label{alg:FindClosestFair}
\KwData{Input set of clusters $\inpset = \{\inpcl{1}, \inpcl{2}, \cdots, \inpcl{k}\}$.}
\KwResult{A fair clustering $\fairset $}.
Initialize three sets FairClusters = $\emptyset$, NotFairRed = $\emptyset$ and NotFairBlue = $\emptyset$ \;
\For{$i = 1$ to $k$}{
    \If{$\inpcl{i}$ is not fair}{
        (FairPart, NotFairPart) = $\makeitfair(\inpcl{i})$ \;
        \If{color of NotFairPart is red}{
            Add NotFairPart in the set NotFairRed \;
        }
        \Else{
            Add NotFairPart in the set NotFairBlue \;
        }
        Add FairPart in the set FairClusters \;
    }
    \Else{
        Add $\inpcl{i}$ in the set FairClusters \;
    }
}
$\fairset = \greedymerge(\{\text{NotFairRed}\}, \{\text{NotFairBlue}\})$

\Return{$\fairset \cup \text{FairClusters}$}
\end{algorithm2e}

\begin{algorithm2e}
\DontPrintSemicolon
    \caption{$\makeitfair(\inpcl{i})$}\label{alg:MakeItFair}
    \KwData{An unfair cluster $\inpcl{i}$, that is, the number of red vertices is not equal to the number of blue vertices in $\inpcl{i}$.}
\KwResult{Output: It splits and returns two parts of cluster $\inpset$, a fair part, and a not fair part} 
\If{$|\red{\inpcl{i}}| > |\blue{\inpcl{i}}|$} {
    Split $\inpcl{i}$ into two parts - NotFairpart and FairPart \;
    Construct a set, NotFairPart from $C$ such that $|\text{NotFairPart}| = |\red{\inpcl{i}}| - |\blue{\inpcl{i}}|$ and each vertex in NotFairPart is \emph{colored red} \;
    FairPart = $\inpcl{i} \setminus $ NotFairPart \;
    }
    \Else {
        Split $\inpcl{i}$ into two parts - NotFairpart and FairPart (same as before) \;
        Construct a set, NotFairPart from $\inpcl{i}$ such that $|\text{NotFairPart}| = |\blue{\inpcl{i}}| - |\red{\inpcl{i}}|$ and each vertex in NotFairPart is \emph{colored blue} \;
        FairPart = $\inpcl{i} \setminus $ NotFairPart \;
    }
    \Return{$($FairPart, NotFairPart$)$}
\end{algorithm2e}

\begin{algorithm2e}
\DontPrintSemicolon
\caption{$\greedymerge(\text{Set}_{1},\text{Set}_{2})$}\label{alg:GreedyMerge}
\KwData{$\text{Set}_{1}$ is a set of clusters of red colored vertices and $\text{Set}_{2}$ is a set of clusters of blue colored vertices}
\KwData{Output: Return a fair set of clusters $\fairset$.} 
    Sort the clusters in $\text{Set}_{1}$ in non-decreasing order based on their sizes \;
    Sort the clusters in $\text{Set}_{2}$ in non-decreasing order based on their sizes \;
    Initialize a set, $\fairset = \emptyset$ \;
        \For{CRed in $ \text{Set}_{1} $ and CBlue in $\text{Set}_{2}$} {
            \If{$|CRed| > |CBlue|$} {
                Take a subset, $CRed' \subseteq CRed$ such that $|CRed'| = |CBlue|$ \;
                $CRed = CRed \setminus CRed'$ \;
                Let, $Fair = CBlue \cup CRed'$ \;
                $\text{Set}_{2} = \text{Set}_{2} \setminus CBlue$ \;
            }
            \Else {
                Take a subset, $CBlue' \subseteq CBlue$ such that $|CBlue'| = |CRed|$ \;
                $CBlue = CBlue \setminus CBlue'$ \;
                Let, $Fair = CBlue' \cup CRed$ \;
                $\text{Set}_{1} = \text{Set}_{2} \setminus CRed$ \;
            }
            Add the cluster $Fair$ to the set $\fairset$ \;
        }
        \Return{$\fairset$}.
\end{algorithm2e}

\subsection{Analysis of the Algorithm}
In this section, we prove~\cref{thm:closest-1-1-fair}. 

Let $ \fairset $ be the output of~\cref{alg:FindClosestFair} on input $ \inpset $. It is easy to see that $ \fairset $ is a fair clustering, and the~\cref{alg:FindClosestFair} runs in $ O(n) $ time. Therefore, it suffices to show that
\begin{align}
    \dist( \fairset, \inpset ) = \dist( \optset, \inpset ), \label{eq:closest-1-1-fair}
\end{align}
where $ \optset $ is any closest fair clustering to $ \inpset $.

To accomplish this, we show that we can construct a fair clustering $ \m{C}^{f} $ from $ \optset $ such that $ \dist( \m{C}^{f}, \inpset ) = \dist( \optset, \inpset ) $, while the properties of $ \m{C}^{f} $, which are established through the construction, ensure that $ \dist(\m{C}^{f}, \inpset) \geq \dist( \fairset, \inpset ) $. By the definition of $ \optset $, it follows the $ \dist( \optset, \inpset ) = \dist(\fairset, \inpset) $. The detailed construction is provided in the following section.

\subsubsection{Optimal Fair Clustering Properties \& Greedy Merge Technique}
Let $ \m{C}^{0} $ be an arbitrary clustering of $ V $. We construct a clustering $ \m{C}^{f} $ from $ \m{C}^{0} $ by applying a sequence of four fairness-preserving transformations:
\[
    \m{C}^{(0)} \to \m{C}^{(1)} \to \m{C}^{(2)} \to \m{C}^{(3)} \to \m{C}^{(4)}=\m{C}^{f},
\]
such that
\[\forall_{0 \leq i \leq 3} ,\dist(\m{C}^{(i+1)},\inpset) \leq \dist(\m{C}^{(i)},\inpset),\]
and $\m{C}^{(i)}$ satisfies all the $i$ properties (detailed below) established prior to that step. Since these transformations preserve fairness, if $ \m{C}^{0} $ is fair, then all $ \m{C}^{i} $ are fair.

We now describe the properties and the transformations.

Let $ \m{C} $ be an arbitrary clustering of $ V $. Recall that the input clustering is $ \inpsetd $.

\begin{property}\label{prop:one}
For each cluster $C_i \in \m{C}$, if there exists no index $k$ such that $C_i \subseteq \inpcl{k}$, then the sets $R_{i,k} \subseteq \red{C_i}$ and $B_{i,k} \subseteq \blue{C_i}$ cannot both be non-empty while simultaneously satisfying the conditions
\begin{align}
    R_{i,k}\subseteq \inpcl{K}, b_{i,k}\subseteq \inpcl{K}. \nonumber
\end{align}
\end{property}
We construct a clustering $ \m{C}^{1} $ from a clustering $ \m{C}^{0} $ as follows.
\paragraph{Construction of $ \m{C}^{1} $:}For each $k \in [|\m{D}|]$ and each cluster $ \gencl{i}\in \m{C}^{(0)}$, let $R_{i,k}$ and $B_{i, k}$ be the maximal subsets of red and blue points in $ \gencl{i} $ such that:

\begin{align}
    R_{i,k} \subseteq \red{\gencl{i}}, B_{i,k} \subseteq \blue{\gencl{i}} , R_{i,k} \cup B_{i,k} \subseteq \inpcl{k}.\nonumber
\end{align}
For all $i,\ k$, cut two subsets $ R'_{i,k} $ and $ B'_{i,k} $ of size $\min(|R_{i,k}|, |B_{i,k}|)$ from $R_{i,k}$ and $B_{i,k}$ respectively, and construct a new cluster as $R'_{i,k} \cup B'_{i,k}$. The new clustering $ \m{C}^{1} $ consists of these new clusters $ R'_{i,k}\cup B'_{i,k} $ and the clusters which are the remaining of $ \gencl{i} $. It can be seen that $ \m{C}^{1} $ satisfies~\cref{prop:one}.

\begin{lemma}\label{lem:both-red-blue}
    The constructed clustering $ \m{C}^{1} $ satisfies~\cref{prop:one} and if $ \m{C}^{0} $ is fair, then $ \m{C}^{1} $ is fair and 
    \begin{align}
        \dist(\m{C}^{(1)}, \inpset)\le  \dist(\m{C}^{(0)}, \inpset).\nonumber
    \end{align} 
\end{lemma}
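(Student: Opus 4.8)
The plan is to verify the lemma's three assertions in turn; the first two are immediate from the construction, and essentially all the work lies in the cost inequality.

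\emph{Property~\ref{prop:one} and fairness.} Every cluster of the form $R'_{i,k}\cup B'_{i,k}$ is, by construction, contained in $\inpcl{k}$, so the hypothesis of Property~\ref{prop:one} fails for it and nothing needs checking. For the residual cluster $\gencl{i}\setminus\bigcup_k(R'_{i,k}\cup B'_{i,k})$, we removed from $R_{i,k}$ and $B_{i,k}$ subsets of the common size $\min(|R_{i,k}|,|B_{i,k}|)$, so the portion of the residual cluster inside $\inpcl{k}$ is monochromatic; hence one of its red/blue traces in $\inpcl{k}$ is empty, which is exactly what Property~\ref{prop:one} demands. Fairness is just as direct: each $R'_{i,k}\cup B'_{i,k}$ has equally many red and blue points, so if $\gencl{i}$ satisfies $|\red{\gencl{i}}|=|\blue{\gencl{i}}|$ then so does its residual cluster, and $\m{C}^{(1)}$ is fair whenever $\m{C}^{(0)}$ is.

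\emph{Reducing the cost bound to a per-cluster inequality.} The transformation only \emph{subdivides} each cluster $\gencl{i}$ of $\m{C}^{(0)}$ (into the new clusters carved out of it, together with its residual cluster) and never places two points from different clusters of $\m{C}^{(0)}$ into one cluster. Hence only pairs lying inside a single $\gencl{i}$ can change status, and $\dist(\m{C}^{(1)},\inpset)-\dist(\m{C}^{(0)},\inpset)$ decomposes as a sum over $i$ whose $i$-th term is the number of pairs inside $\gencl{i}$ coming from a common $\inpcl{k}$ that get separated, minus the number of pairs inside $\gencl{i}$ coming from two different input clusters that get separated (subdividing can only create same-$\inpcl{k}$ disagreements and only remove cross-$\inpcl{k}$ ones). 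So it suffices to bound, for each fixed $i$, the first count by the second. Writing $r_k=|R_{i,k}|$, $b_k=|B_{i,k}|$, $m_k=\min(r_k,b_k)$, $\delta_k=|r_k-b_k|$ and $s_k=r_k+b_k=2m_k+\delta_k$, a direct count gives the first count as $\sum_k 2m_k\delta_k$ (the new cluster from $\inpcl{k}$ has $2m_k$ points, its monochromatic residue has $\delta_k$) and the second as $\sum_{k<k'}(s_ks_{k'}-\delta_k\delta_{k'})$ (a cross-$\inpcl{k}$ pair survives only if both its points land in the residual cluster). Thus the goal is the inequality $\sum_{k<k'}(s_ks_{k'}-\delta_k\delta_{k'})\ge 2\sum_k m_k\delta_k$.

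\emph{Proving the inequality.} I would write $r_k=m_k+\rho_k$ and $b_k=m_k+\beta_k$ with $\rho_k,\beta_k\ge 0$ and $\rho_k\beta_k=0$ for every $k$, so $\delta_k=\rho_k+\beta_k$; fairness of $\gencl{i}$ (i.e.\ $\sum_k r_k=\sum_k b_k$) then gives $\sum_k\rho_k=\sum_k\beta_k=:\Delta$. Expanding both sides of the target via $\sum_{k<k'}x_kx_{k'}=\frac12\big((\sum_k x_k)^2-\sum_k x_k^2\big)$ and simplifying with $s_k=2m_k+\delta_k$, the inequality reduces, after routine algebra, to
\[
\sum_{k<k'}m_km_{k'}\;+\;M\Delta\;\ge\;\sum_k m_k(\rho_k+\beta_k),\qquad M:=\sum_k m_k,
\]
which holds because $\sum_{k<k'}m_km_{k'}\ge 0$ and, since one of $\rho_k,\beta_k$ is zero for each $k$, $\rho_k+\beta_k\le\Delta$ for every $k$, so $\sum_k m_k(\rho_k+\beta_k)\le\Delta\sum_k m_k=M\Delta$. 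I expect the delicate points to be only bookkeeping: pinning down the two pair counts exactly — especially the sign observation that subdivision can add same-$\inpcl{k}$ disagreements but only erase cross ones — and carrying the algebraic reduction through cleanly. The combinatorial idea behind it is transparent: pair red with blue within each input cluster, and pay for the leftover monochromatic splits using the global balance $\sum_k\rho_k=\sum_k\beta_k$.
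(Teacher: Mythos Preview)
Your proof is correct. You and the paper both dispatch Property~\ref{prop:one} and fairness in a sentence and localize the cost inequality to a single fair cluster $\gencl{i}$, but the two arguments for that inequality are genuinely different.

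The paper's argument goes through Fact~\ref{fact:twice-max-blue-red}: in a fair cluster any monochromatic subset has size at most $|\gencl{i}|/2$, so $|\gencl{i}|\ge 2\max(r_k,b_k)$ for every $k$. The cleanest way to read the paper's intent (the written computation is marred by typos) is to peel off the pieces $R'_{i,k}\cup B'_{i,k}$ one at a time: each removal of $2m_k$ points changes the cost by $2m_k\bigl(2\max(r_k,b_k)-|\text{current residual}|\bigr)\le 0$, by Fact~\ref{fact:twice-max-blue-red} applied to the residual, which remains fair since equal numbers of red and blue points were removed.

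Your argument instead computes the total cost change in one shot, reduces it algebraically to $\sum_{k<k'}m_km_{k'}+M\Delta\ge\sum_k m_k\delta_k$, and finishes with the observation that fairness of $\gencl{i}$ forces $\sum_k\rho_k=\sum_k\beta_k=\Delta$, whence $\delta_k\le\Delta$ for each $k$. So you are using fairness as a \emph{global} balance identity, while the paper uses it as a \emph{local} size bound. The paper's route is essentially a one-line induction once Fact~\ref{fact:twice-max-blue-red} is in hand; yours makes the exact cost change and the precise role of fairness explicit, at the price of some algebra.
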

We employ the following fact.
\begin{fact}\label{fact:twice-max-blue-red}
    Let $\gencl{i}$ be a fair cluster. Then, for any subset $ S \subseteq \red{\gencl{i}} $ (or $ S \subseteq \blue{\gencl{i}} $), we have $|\gencl{i}| \geq 2 |S|.$
\end{fact}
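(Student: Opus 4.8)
The plan is to unwind the definition of fairness in the equi-proportional setting and observe that it forces a cluster to contain exactly as many red points as blue points. Concretely, since we are in the case $p = q = 1$, a fair cluster $\gencl{i}$ satisfies $|\blue{\gencl{i}}| / |\red{\gencl{i}}| = 1$ by definition of {\fair}, hence $|\blue{\gencl{i}}| = |\red{\gencl{i}}|$. Since $\red{\gencl{i}}$ and $\blue{\gencl{i}}$ partition $\gencl{i}$, we get $|\gencl{i}| = |\red{\gencl{i}}| + |\blue{\gencl{i}}| = 2|\red{\gencl{i}}| = 2|\blue{\gencl{i}}|$.

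With this identity in hand, the bound follows immediately in each case. If $S \subseteq \red{\gencl{i}}$, then monotonicity of cardinality gives $|S| \le |\red{\gencl{i}}| = |\gencl{i}|/2$, i.e., $|\gencl{i}| \ge 2|S|$. The case $S \subseteq \blue{\gencl{i}}$ is handled symmetrically, using $|\blue{\gencl{i}}| = |\gencl{i}|/2$ in place of the red count.

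I do not expect any genuine obstacle here: the statement is essentially a direct consequence of the fairness constraint specialized to $p = q = 1$, combined with the trivial observation that a subset is no larger than the set containing it. The only points requiring a small amount of care are to invoke the correct special case of the fairness definition (the $1{:}1$ version rather than the general $p/q$ version), and to note explicitly that $\red{\gencl{i}}$ and $\blue{\gencl{i}}$ form a partition of $\gencl{i}$ so that their sizes sum to $|\gencl{i}|$.
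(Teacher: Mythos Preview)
Your proof is correct and matches the paper's own argument essentially line for line: the paper simply notes that fairness gives $|\red{\gencl{i}}| = |\blue{\gencl{i}}|$, hence $|\gencl{i}| = 2|\red{\gencl{i}}| \geq 2|S|$. You have merely been a bit more explicit about the partition and the symmetric blue case.
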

Indeed, as $ \gencl{i} $ is fair, $ |\red{\inpcl{i}}| = |\blue{\inpcl{i}}| $. Therefore, $ |\gencl{i}| = 2|\red{\gencl{i}}| \geq 2|S|$.
\begin{proof}[Proof of~\cref{lem:both-red-blue}]
    Suppose that $ \m{C}^{0} $ is fair, then each $ C_{i}\in \m{C}^{0} $ is fair. Since each newly constructed cluster $ R'_{i,k}\cup B'_{i,k} $ contains an equal number of red and blue points, it is fair. Furthermore, an equal number of red and blue points are split from each $\gencl{i}$; the remaining clusters also preserve fairness.

    It remains to show $ \dist(\m{C}^{(1)}, \inpset)\le  \dist(\m{C}^{(0)}, \inpset)\nonumber $.
        
    As for each $k \in [|\m{D}|]$, we move $\min(|R_{i,k}|,|B_{i,k}|)$ points from each cluster $\gencl{i}$ to form new clusters, therefore
    \begin{align*}
        &\>\>\dist(\m{C}^{(2)}, \inpset) -  \dist(\m{C}^{(1)}, \optset) \nonumber\\
        &= \sum_{\gencl{i} \in \m{C}^{(1)}} \big(\sum_{ 0\leq k \leq|\m{D}| }  \min (|R_{i,k}|,|B_{i,k}|) \abs{|R_{i,k}| - |R_{i,k}|}
         \\ 
        &\>\> - \sum_{0 \leq j < k \leq |\m{D}|} 2 \min (|R_{i,k}|,|B_{i,k}|)(|R_{i,j}|+|B_{i,j}|) \big)\\
         & = \sum_{\gencl{i} \in \m{C}^{(1)}} \min (|R_{i,k}|,|B_{i,k}|) \big(\sum_{ 0\leq k \leq|\m{D}| }  \abs{|R_{i,k}| - |B_{i,k}|} - \sum_{0\leq k \leq|\m{D}| } 2 \abs{|R_{i,k}| + |B_{i,k}|} \big)\\
         & = \sum_{\gencl{i} \in \m{C}^{(1)}} \min (|R_{i,k}|,|B_{i,k}|) \big(\sum_{ 0\leq k \leq|\m{D}| }  \abs{|R_{i,k}| - |B_{i,k}|} - \sum_{j \neq k } (|R_{i,j}|+|B_{i,j}|) \big)\\
         & = \sum_{\gencl{i} \in \m{C}^{(1)}} \min (|R_{i,k}|,|B_{i,k}|) \big(\sum_{ 0\leq k \leq|\m{D}| }  \abs{|R_{i,k}| - |B_{i,k}|} - |\gencl{i}| + |R_{i,k}|+|B_{i,k}| )\big)\\
         & = \sum_{\gencl{i} \in \m{C}^{(1)}} \min (|R_{i,k}|,|B_{i,k}|) \big(\sum_{ 0\leq k \leq|\m{D}| }  \abs{|R_{i,k}| - |B_{i,k}|} - |\gencl{i}| + |R_{i,k}|+|B_{i,k}| )\big)\\
         & = \sum_{\gencl{i} \in \m{C}^{(1)}} \min (|R_{i,k}|,|B_{i,k}|) \big(\sum_{ 0\leq k \leq|\m{D}| }  - |\gencl{i}| + 2 \max (|R_{i,k}|,|B_{i,k}| )\big)\\
         & \leq 0,
    \end{align*}
    where the inequality is due to~\cref{fact:twice-max-blue-red}, $|\gencl{i}| \geq 2 \max (|R_{i,k}|,|B_{i,k}| ))$.



\end{proof}
\begin{property}\label{prop:two}
For each $k \in [|\m{D}|]$, there exists at most one cluster $C_i  \in \m{C}$ such that $C_i \subseteq \inpcl{k}$.
\end{property}
We construct a clustering $ \m{C}^{2} $ from a clustering $ \m{C}^{1} $ as follows.
\paragraph{Construction of $ \m{C}^{2} $:}For each $k \in [|\m{D}|]$, let $\m{I}_k \subseteq \m{C}^{1}$ be the maximal size subset of clusters such that for each $\gencl{i} \in \m{I}_k$, $\gencl{i} \subseteq \inpcl{k}$. Merge all $\gencl{i} \in \m{I}_k$. The new clustering $ \m{C}^{2} $ consists of these new merged clusters and the remaining clusters of $ \m{C}^{1} $. It can be seen that $ \m{C}^{2} $ preserves~\cref{prop:one} while also satisfying~\cref{prop:two}.
\begin{lemma}    
    The new clustering $\m{C}^{(2)}$ satisfies~\cref{prop:one} and~\cref{prop:two}, and if $\m{C}^{(1)}$ is fair then $\m{C}^{(2)}$ is also fair, and
    \begin{align}
        \dist(\m{C}^{(2)}, \inpset)\le  \dist(\m{C}^{(1)}, \inpset). \nonumber
    \end{align} 
\end{lemma}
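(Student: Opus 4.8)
The plan is to analyze the merge operation one input cluster at a time, leaning on a disjointness observation first. Since the input clusters $\inpcl{1},\dots,\inpcl{\phi}$ partition $V$, a nonempty cluster cannot be contained in two of them; hence no cluster of $\m{C}^{(1)}$ lies in more than one of the sets $\m{I}_k$, so the sets $\m{I}_k$ are pairwise disjoint and the merges for distinct $k$ act on disjoint sub-collections of $\m{C}^{(1)}$. This lets me treat the merges independently and add up their effects. Also note that $\m{I}_k$ is exactly $\{\gencl{i}\in\m{C}^{(1)}:\gencl{i}\subseteq\inpcl{k}\}$, which I will use for the structural claims.

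For Property~\ref{prop:two}, I would show the only cluster of $\m{C}^{(2)}$ that can sit inside a fixed $\inpcl{k}$ is the merged cluster $\bigcup_{\gencl{i}\in\m{I}_k}\gencl{i}$ itself: any surviving (unmerged) cluster $C$ with $C\subseteq\inpcl{k}$ would have been in $\m{I}_k$ and thus merged, a contradiction; and a merged cluster coming from $\m{I}_{k'}$, $k'\neq k$, lies in $\inpcl{k'}$, which is disjoint from $\inpcl{k}$, so it is not contained in $\inpcl{k}$. For Property~\ref{prop:one}, a merged cluster is by construction contained in some $\inpcl{k}$, so the hypothesis of Property~\ref{prop:one} is vacuous for it; every other cluster of $\m{C}^{(2)}$ is an unchanged cluster of $\m{C}^{(1)}$, and since Property~\ref{prop:one} is a statement about a single cluster relative to the (unchanged) input clustering, it is inherited from $\m{C}^{(1)}$. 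Fairness is equally direct: a union of clusters each having equal red and blue counts again has equal red and blue counts, and the unchanged clusters keep their fairness, so $\m{C}^{(2)}$ is fair whenever $\m{C}^{(1)}$ is.

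The quantitative bound $\dist(\m{C}^{(2)},\inpset)\le\dist(\m{C}^{(1)},\inpset)$ I would obtain by tracking which unordered pairs change their "together/separated" status. The only affected pairs are $\{u,v\}$ with $u\in\gencl{a}$, $v\in\gencl{b}$ for two distinct clusters $\gencl{a},\gencl{b}\in\m{I}_k$ (some $k$): such a pair is separated in $\m{C}^{(1)}$ and together in $\m{C}^{(2)}$. Pairs inside a single $\gencl{i}$, and pairs with an endpoint outside $\bigcup_k\bigcup_{\gencl{i}\in\m{I}_k}\gencl{i}$, retain their status (here is where disjointness of the $\m{I}_k$'s matters — merges for different $k$ do not interact). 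For every affected pair, both endpoints lie in the same $\inpcl{k}$, so they are together in $\inpset$; hence such a pair contributes $1$ to $\dist(\m{C}^{(1)},\inpset)$ and $0$ to $\dist(\m{C}^{(2)},\inpset)$. Summing over all $k$, $\dist(\m{C}^{(2)},\inpset)=\dist(\m{C}^{(1)},\inpset)-\sum_k\sum_{\{\gencl{a},\gencl{b}\}\subseteq\m{I}_k,\ \gencl{a}\neq\gencl{b}}|\gencl{a}|\,|\gencl{b}|\le\dist(\m{C}^{(1)},\inpset)$.

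The main obstacle — really the only delicate point — is the pair-counting in the last step: verifying that no pair other than the cross-pairs within some $\m{I}_k$ changes status, and that each such changed pair is an agreement with $\inpset$ (so the distance can only drop, never rise). The disjointness of the $\m{I}_k$'s is exactly what makes this clean. The structural claims (Properties~\ref{prop:one} and~\ref{prop:two}) and the fairness preservation then follow immediately from the construction.
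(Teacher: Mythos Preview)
Your proposal is correct and follows essentially the same approach as the paper: both identify that merging the clusters in each $\m{I}_k$ only affects cross-pairs within $\m{I}_k$, and each such pair lies in the same input cluster $\inpcl{k}$, so the merge turns a disagreement into an agreement and the distance drops by exactly $\sum_k\sum_{\gencl{a}\ne\gencl{b}\in\m{I}_k}|\gencl{a}||\gencl{b}|$. Your write-up is in fact more careful than the paper's (which is quite terse and does not spell out the verification of Properties~\ref{prop:one} and~\ref{prop:two} or the disjointness of the $\m{I}_k$'s), but the underlying argument is the same.
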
 

\begin{proof}
Suppose that $ \m{C}^{1} $ is fair. Then each cluster $ \gencl{i} $ in clustering $\m{C}^{(1)}$ is fair, the new clustering $\m{C}^{(2)}$, which consists of a union of a subset of these clusters, must also be fair. This follows from the fact that fairness is preserved under the union of fair clusters.

We now show $ \dist(\m{C}^{(2)}, \inpset)\le  \dist(\m{C}^{(1)}, \inpset) $. Indeed, since $\m{C}^{(2)}$ results from merging clusters, it follows that
\begin{align*}
    \dist(\m{C}^{(2)}, \inpset) -  \dist(\m{C}^{(1)}, \inpset) = - \sum_{i<j, C_i, C_j \in \m{I}_k} |C_i| |C_j| \leq 0,
\end{align*}
which is equivalent to $\dist(\m{C}^{(2)}, \inpset)\le  \dist(\m{C}^{(1)}, \inpset)$.
\end{proof}

\begin{property}\label{prop:three}
For each $C_i \in \m{C}$, one of the following conditions hold:
\begin{itemize}
    \item [1] There exists a $k \in [|\m{D}|]$ such that $C_i \subseteq \inpcl{k}$.
    \item [2] There exists a partition of $C_i$ into two disjoint subsets $R_{i,k}$ and $B_{i,j}$, such that there exists indices $k,j \in [|\m{D}|]$ (where $k\neq j$) satisfying:\[R_{i,k} \subseteq \inpcl{k}, B_{i,j} \subseteq \inpcl{j}.\]
\end{itemize}
\end{property}
Suppose that $ \m{C}^{2} $ is fair. We construct a clustering $ \m{C}^{3} $ from a clustering $ \m{C}^{2} $ as follows.
\paragraph{Construction of $\m{C}^{3}$:}For each $k \in [|\m{D}|]$ and each cluster $\gencl{i} \in \m{C}^{(2)}$, let $R_{i,k}$ and $B_{i, k}$ be the maximal subsets of red and blue points in $ \gencl{i} $ such that: 
\begin{align}
    R_{i,k} \subseteq \red{\gencl{i}}, B_{i,k} \subseteq \blue{\gencl{i}} , R_{i,k} \cup B_{i,k} \subseteq \inpcl{k}. \nonumber
\end{align}
From~\cref{prop:two}, for each $ i,\ k $, at least one of $R_{i,k}$ or $B_{i,k}$ must be empty. Now we define the new clustering $\m{C}^{(3)}$ by applying~\cref{alg:GreedyMerge} to each $\gencl{i} \in \m{C}^{(2)}$ on the input cluster sets:
\begin{align}
    \bigcup_{k \in [|\m{D}|]} R_{i,k}, \bigcup_{k \in [|\m{D}|]} B_{i,k}. \nonumber
\end{align}
    
\begin{lemma}\label{clm:two_input_cluster_at_most}
    If $ \m{C}^{2} $ is fair, then the new clustering $\m{C}^{(3)}$ satisfies~\cref{prop:one},~\cref{prop:two}, and~\cref{prop:three}. Moreover, $ \m{C}^{3} $ is also fair and
    \begin{align}
        \dist(\m{C}^{(3)}, \inpset)\le  \dist(\m{C}^{(2)}, \inpset). \nonumber
    \end{align}
\end{lemma}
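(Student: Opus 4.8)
The plan is to verify the three structural properties and the fairness/cost monotonicity for $\m{C}^{(3)}$ essentially by the same reasoning template used in the previous two lemmas, but now packaged around the call to \cref{alg:GreedyMerge}. First I would argue that \cref{prop:one} and \cref{prop:two} are preserved: the construction of $\m{C}^{(3)}$ only touches points \emph{within} a single cluster $\gencl{i} \in \m{C}^{(2)}$, redistributing them among new clusters all of whose points lie inside $\gencl{i}$; since \cref{prop:one} (no cluster outside some $\inpcl{k}$ has both a red and a blue point jointly sitting inside a common input cluster) and \cref{prop:two} (each $\inpcl{k}$ fully contains at most one cluster) were global properties about how clusters sit inside the $\inpcl{k}$'s, and no point crosses between the point-sets $\bigcup_k R_{i,k}$ and $\bigcup_k B_{i,k}$ associated to \emph{different} $\gencl{i}$, these are not disturbed. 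The key point to spell out is that by \cref{prop:two}, for each fixed $i$ and $k$ at most one of $R_{i,k}, B_{i,k}$ is nonempty, so the two input lists fed to \cref{alg:GreedyMerge} are genuinely a red list and a blue list partitioning $\gencl{i}$, hence the output is a set of clusters each of which is either monochromatic-inside-one-$\inpcl{k}$ or a fair merge of one red piece $R_{i,k}$ and one blue piece $B_{i,j}$ with $k \neq j$ — which is exactly \cref{prop:three}.

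Next I would handle fairness. Since $\m{C}^{(2)}$ is fair, each $\gencl{i}$ has $|\red{\gencl{i}}| = |\blue{\gencl{i}}|$, so the total size of the red list $\bigcup_k R_{i,k}$ equals the total size of the blue list $\bigcup_k B_{i,k}$; \cref{alg:GreedyMerge} pairs them off into clusters each containing equally many red and blue points, so every output cluster is fair and $\m{C}^{(3)}$ is fair.

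The substantive part is the cost bound $\dist(\m{C}^{(3)},\inpset) \le \dist(\m{C}^{(2)},\inpset)$, and this is where I expect the main obstacle. The argument is local: fix $\gencl{i} \in \m{C}^{(2)}$ and compare the contribution to $\dist(\cdot,\inpset)$ of the points of $\gencl{i}$ before (one cluster $\gencl{i}$) and after (the family produced by \cref{alg:GreedyMerge}). Breaking $\gencl{i}$ into pieces can only \emph{remove} disagreements between points from different $\inpcl{k}$'s (a gain, weakly) but \emph{adds} disagreements whenever the greedy merge has to split some $R_{i,k}$ (or $B_{i,k}$) across two output clusters. The heart of the matter is to show this added ``splitting cost'' is dominated by the saved ``separation cost,'' and the right way to see it is to invoke \cref{prop:three}'s two-input-cluster structure together with a charging argument in the spirit of observation (ii) from the technical overview: whenever a block $R_{i,k}$ of size $r$ is cut because it must be matched against blue blocks $B_{i,j_1}, B_{i,j_2}$ coming from \emph{distinct} input clusters $j_1 \neq j_2$, the $r_1 r_2$ cost of splitting that block is exactly cancelled by the $r_1 r_2$ saving from separating $B_{i,j_1}$ and $B_{i,j_2}$ (which in $\m{C}^{(2)}$ were together in $\gencl{i}$ but come from different $\inpcl{k}$'s, hence contribute a disagreement with $\inpset$ that disappears). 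Since \cref{alg:GreedyMerge} processes blocks in non-decreasing order of size, each such split is ``charged'' to a genuinely new pair of distinct-origin blue blocks, so no saving is double-counted; iterating over all splits and all $\gencl{i}$ gives the claimed inequality. I would write this as an explicit telescoping sum analogous to the display in the proof of \cref{lem:both-red-blue}, with the net change expressed as a sum over $\gencl{i}$ of (splitting terms $-$ separation terms) and each summand shown to be $\le 0$; the bookkeeping of which block is charged to which pair is the delicate step, and is precisely the generalization of the three-cluster argument sketched in the overview to the case where $C^*_j$ (here, $\gencl{i}$) draws from more than three input clusters.
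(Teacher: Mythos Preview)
Your treatment of Properties~1--3 and of fairness is correct and essentially matches the paper's. For the cost inequality, however, you take a genuinely different route from the paper. The paper argues algebraically: for each output cluster of \cref{alg:GreedyMerge} that pairs $\alpha_{j,k}$ red points of $R_{i,j}$ with $\alpha_{j,k}$ blue points of $B_{i,k}$, it writes out the per-cluster change in distance and bounds each term by the single inequality $|\gencl{i}| \geq |R_{i,j}| + |B_{i,k}|$ (these being disjoint monochromatic pieces of the fair cluster $\gencl{i}$). Your route is combinatorial: you charge each same-origin split (the new disagreement $r_1 r_2$ when a block $R_{i,k}$ is cut into pieces of sizes $r_1,r_2$ landing in two output clusters) against the cross-origin separation of the equally sized blue portions paired with those pieces, which necessarily come from distinct $D_{j_1},D_{j_2}$. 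The reason this charging is injective is that two distinct output clusters of \cref{alg:GreedyMerge} never share \emph{both} their red source and their blue source; hence red splits are charged only to blue--blue cross-origin savings, blue splits only to red--red cross-origin savings, and all red--blue savings are left over. (Your appeal to the non-decreasing sort order is not actually what drives the inequality; the distinct-source property of output pairs is.) What your argument buys is a transparent combinatorial explanation of why the three-cluster cancellation from the overview extends to arbitrarily many input clusters; what the paper's argument buys is brevity once the algebra is set up.
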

\begin{proof}
    Suppose that $ \m{C}^{2} $ is fair. 
    
    We show that $ \m{C}^{3}$ is also fair. Indeed, since ~\cref{alg:GreedyMerge} selects an equal number of red and blue points and merges them together, the resulting clustering remains fair. Consequently, $\m{C}^{(3)}$ preserves fairness. 
    
    It is straightforward to verify that~\cref{prop:one} and~\cref{prop:two} continue to hold, as no new cluster $\gencl{}$ has been introduced such that there exists an index $ k $ satisfying $\gencl{} \subseteq \inpcl{k}$. 

    To see that~\cref{prop:three} also holds, recall that each cluster constructed by ~\cref{alg:GreedyMerge} consists of two disjoint subsets of $R_{i,k}\subseteq \inpcl{k}$ and $B_{i,j} \subseteq \inpcl{j}$ for some indices $ i,j,k $. 

    Finally, we show that $ \dist(\m{C}^{(3)}, \inpset)\le  \dist(\m{C}^{(2)}, \inpset) $. Define $\alpha_{j,k}(\gencl{i})$ as the number of blue and red points that~\cref{alg:GreedyMerge} merges from the subsets of $R_{i,j}$ and $B_{i,j}$. Note that for each $ i,\ j$,~\cref{alg:GreedyMerge} merges a subset of $R_{i,j}$ and $B_{i,j}$ at most once. We have
    \begin{align*}
        &\>\>\>\dist(\m{C}^{(3)}, \inpset) - \dist(\m{C}^{(2)}, \inpset) \\
        &= \sum_{\gencl{i} \in \m{C}^{(2)}} \bigg[\sum_{0\leq j , k\leq |\m{D}|}\bigg(\frac{1}{2}\alpha_{j,k}(\gencl{i})(R_{i,j} - \alpha_{j,k}(\gencl{i}) + \frac{1}{2}\alpha_{j,k}(\gencl{i})(B_{i,j} - \alpha_{j,k}(\gencl{i})\\
        &\>\>\>\>\>\>\>\>\>\>\>\>\>\>\>\>\>\>\>\>\>\>\>\>\>\>\>\>\>\>\>\>\>\>\>\>\>\>\>\>\> -\dfrac{1}{2}\alpha_{j,k}(|\gencl{i}| - R_{i,j}) - \dfrac{1}{2}\alpha_{j,k}(|\gencl{i}| - B_{i,k}) \bigg)\bigg]\\
        & = \sum_{\gencl{i} \in \m{C}^{(2)}}  \frac{1}{2}\alpha_{j,k}(\gencl{i}) \bigg[\sum_{0\leq j , k\leq |\m{D}|}2\bigg( -|\gencl{i}| +R_{i,j} + B_{i,k} - \alpha_{j,k}(\gencl{i})\bigg)\bigg]\\
        & \leq 0,
    \end{align*}
    where the inequality is due to $ |\gencl{i}| \geq R_{i,j} + B_{i,k} $.
    \end{proof}

\begin{property}\label{prop:four}
For each $k \in [|\m{D}|]$, there exists a cluster $C_i$ such that $C_i$ is the maximal fair subset of $\inpcl{k}$.
\end{property}
Before introducing the construction for $ \m{C}^{4} $ from the clustering $ \m{C}^{3}$, we establish a lower bound on the distance from an input clustering consisting solely of monochromatic clusters to any of its fair clustering. This serves as a key component in bounding the distance from $ \inpset $ to $ \m{C}^{4} $. We begin with an input clustering $\mathcal{C} = \mathcal{R} \cup \mathcal{B}$, where $\mathcal{R} = \{ R_{1}, R_{2}, \dots, R_{t} \}$ consists of monochromatic clusters containing only red points, and $\mathcal{B} = \{ B_{1'}, B_{2'}, \dots, B_{m'} \}$ consists of monochromatic clusters containing only blue points. Moreover we have $\sum_{i \in [t]} {R}_i=\sum_{j \in [m']} {B}_j$. This ensures the existence of a fair clustering of $\mathcal{C}$.

\begin{lemma}\label{lem:opt-equ}
The distance from \(\mathcal{C}\) to any fair clustering \(\tilde{\mathcal{C}}\) of the input clustering \(\mathcal{C}\) that satisfies~\cref{prop:three} is:
    
    \[\dfrac{1}{2} \sum_{i \in [t]} |{R}_i|^2+\dfrac{1}{2} \sum_{j \in [m']} |{B}_j|^2.\]
    
\end{lemma}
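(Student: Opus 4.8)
The plan is to compute $\dist(\mathcal{C},\tilde{\mathcal{C}})$ exactly by a pair-counting argument and observe that every term depending on the particular structure of $\tilde{\mathcal{C}}$ cancels, leaving only $\tfrac12\sum_i|R_i|^2+\tfrac12\sum_j|B_j|^2$. First I would pin down the structure that \cref{prop:three} forces on $\tilde{\mathcal{C}}$: since every input cluster of $\mathcal{C}$ is monochromatic while every cluster of a fair clustering has equal numbers of red and blue points, no nonempty cluster of $\tilde{\mathcal{C}}$ can be contained in a single input cluster, so the first alternative of \cref{prop:three} applies only to empty clusters. Hence every nonempty $C\in\tilde{\mathcal{C}}$ splits as $C=R'\sqcup B'$, where $R'$ is the set of all red points of $C$ and lies inside a single red input cluster, $B'$ is the set of all blue points of $C$ and lies inside a single blue input cluster, and $|R'|=|B'|$ by fairness. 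Thus $\tilde{\mathcal{C}}$ induces, for each $R_i$, a partition $R_i=\bigsqcup_l R_i^{(l)}$ into the pieces that get merged into distinct clusters of $\tilde{\mathcal{C}}$ (and likewise $B_j=\bigsqcup_l B_j^{(l)}$), and each nonempty cluster of $\tilde{\mathcal{C}}$ pairs one such red piece with one equal-sized blue piece.

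Next I would split $\dist(\mathcal{C},\tilde{\mathcal{C}})$ into the pairs that are together in $\mathcal{C}$ but separated in $\tilde{\mathcal{C}}$, and those together in $\tilde{\mathcal{C}}$ but separated in $\mathcal{C}$. A pair of the first kind is necessarily a monochromatic pair inside some $R_i$ (or $B_j$) whose two endpoints land in different pieces; using $\binom{n}{2}-\sum_l\binom{n_l}{2}=\tfrac12\bigl(n^2-\sum_l n_l^2\bigr)$, these contribute $\tfrac12\sum_i\bigl(|R_i|^2-\sum_l|R_i^{(l)}|^2\bigr)+\tfrac12\sum_j\bigl(|B_j|^2-\sum_l|B_j^{(l)}|^2\bigr)$. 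A pair of the second kind must be a red–blue pair inside a single cluster $C=R'\sqcup B'$ of $\tilde{\mathcal{C}}$ (the monochromatic pairs inside $C$ were already together in $\mathcal{C}$, since $\mathcal{C}$ has only monochromatic clusters), contributing $\sum_{C}|R'|\,|B'|$. By fairness each $|R'|=|B'|$, so this last sum equals both $\sum_C|R'|^2$ and $\sum_C|B'|^2$; averaging, it equals $\tfrac12\bigl(\sum_i\sum_l|R_i^{(l)}|^2+\sum_j\sum_l|B_j^{(l)}|^2\bigr)$, which exactly cancels the negative terms above. What remains is $\tfrac12\sum_{i\in[t]}|R_i|^2+\tfrac12\sum_{j\in[m']}|B_j|^2$, proving the claim.

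The main thing to be careful about is the bookkeeping in the two cases above, in particular recognizing that the only pairs contributing on the $\tilde{\mathcal{C}}$-side are cross-color pairs—so that fairness can be invoked to rewrite $|R'|\,|B'|$ as a squared term—and that each cluster of $\tilde{\mathcal{C}}$ draws its red part from exactly one $R_i$ and its blue part from exactly one $B_j$, which is precisely what \cref{prop:three} delivers once case~1 is ruled out. Once that structure is in hand, the identity is a one-line cancellation and there is no optimization to perform; the noteworthy point recorded by the statement is that the distance is the \emph{same} for every fair clustering of $\mathcal{C}$ obeying \cref{prop:three}.
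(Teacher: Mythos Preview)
Your proposal is correct and follows essentially the same approach as the paper: both arguments observe that, by \cref{prop:three} and fairness, every nonempty cluster of $\tilde{\mathcal{C}}$ is a pair $R'\sqcup B'$ with $R'\subseteq R_i$, $B'\subseteq B_j$, and $|R'|=|B'|$, then count the merge cost $|R'|\,|B'|=\tfrac12(|R'|^2+|B'|^2)$ against the split costs within each $R_i$ and $B_j$ to telescope down to $\tfrac12\sum_i|R_i|^2+\tfrac12\sum_j|B_j|^2$. Your explicit justification that case~1 of \cref{prop:three} cannot apply nontrivially (since fair clusters are never monochromatic) is a small clarification the paper leaves implicit, but the computations are otherwise the same.
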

\begin{proof}
    Consider any arbitrary fair clustering $\m{\tilde{C}}$ satisfying~\cref{prop:three}. Each cluster $\tilde{C}_k \in \m{\tilde{C}}$ can be further decomposed into two disjoint subsets, $\tilde{R}_{k,i}$ and $\tilde{B}_{k,j}$ where $ \tilde{R}_{k,i} \subseteq R_i$ and $\tilde{B}_{k,j} \subseteq B_j$ for some $i\in [t]$ and $j\in [m']$. 

Further as $\m{\tilde{C}}$ is fair $|\tilde{R}_{k,i}|=|\tilde{B}_{k,j}|$. Thus, each cluster $\tilde{C}_k \in \m{\tilde{C}}$ pays $|\tilde{R}_{k,i}||\tilde{B}_{k,j}|=\dfrac{1}{2}(|\tilde{R}_{k,i}|^2+ |\tilde{B}_{k,j}|^2)$. Further to separate $\tilde{R}_{k,i}$ from $R_i$ the cost required is $\dfrac{1}{2}|\tilde{R}_{k,i}||R_i\setminus \tilde{R}_{k,i}|$; similarly to separate $\tilde{B}_{k,j}$ from $B_j$ the cost required is $\dfrac{1}{2}|\tilde{B}_{k,j}||B_j\setminus \tilde{B}_{k,j}|$.
Thus the total distance from $ \m{C} $ to $\m{\tilde{C}}$ is: 
    
    \begin{align*}
        \dist(\m{\tilde{C}}, \m{C}) & = \sum_{i \in [t]} \sum_{\tilde{R}_{k,i} \subseteq R_i} \dfrac{1}{2}[|\tilde{R}_{k,i}|^2+ |\tilde{R}_{k,i}|(|R_i|-|\tilde{R}_{k,i}|)]+ 
        \sum_{j \in [m']} \sum_{\tilde{B}_{k,j} \subseteq B_j} \dfrac{1}{2}[|\tilde{B}_{k,j}|^2+ |\tilde{B}_{k,j}|(|B_j|-|\tilde{B}_{k,j}|)]\\
        & = \sum_{i \in [t]} \sum_{\tilde{R}_{k,i} \subseteq R_i} \dfrac{1}{2}|\tilde{R}_{k,i}||R_{i}|+\sum_{j \in [m']} \sum_{\tilde{B}_{k,j} \subseteq B_j} \dfrac{1}{2}|\tilde{B}_{k,j}||B_{j}|\\
         & = \sum_{i \in [t]} \dfrac{1}{2}|R_{i}|^2+\sum_{j \in [m']} \dfrac{1}{2}|B_{j}|^2.
    \end{align*}
\end{proof}

Let $ \m{C}^{3} $ is a fair clustering satisfying~\cref{prop:one},~\cref{prop:two}, and~\cref{prop:three}. We construct a clustering $ \m{C}^{4} $ from $ \m{C}^{3} $ as follows.
\paragraph{Construction of $ \m{C}^{4}$:}By~\cref{prop:one} and~\cref{prop:two}, for each $ k $, in $ \m{C}^{3} $, there exists at most one cluster containing both subset of $\red{\inpcl{k}}$ and a subset of $\blue{\inpcl{k}}$. If such a cluster exists (denoted as $\gencl{\ell_k}$), then $\gencl{\ell_k} \subseteq \inpcl{k}$.

Next, based on~\cref{prop:three}, any remaining cluster can be partitioned into two disjoint subsets $R_{i,k}$ and $B_{i,j}$, such that there exist indices $k, j \in [|\m{D}|]$ (where $k \neq j$) satisfying:
\begin{align}
    R_{i,k} \subseteq \red{\inpcl{k}}, \quad B_{i,j} \subseteq \blue{\inpcl{j}}.\nonumber
\end{align} 
Each such cluster is then split into its disjoint subsets $R_{i,k}$ and $B_{i,j}$.

For each $k$, we define two disjoint subsets:
\begin{itemize}
    \item $\m{I}_k^r \subseteq \bigcup_{\forall \gencl{i} \in \m{C}} R_{i,k}$, which is the largest subset of clusters $R_{i,k}$ such that $R_{i,k} \in \inpcl{k}$;
    \item $\m{I}_k^b \subseteq \bigcup_{\forall \gencl{i} \in \m{C}} B_{i,k}$, which is the largest subset of clusters $B_{i,k}$ such that $B_{i,k} \in \inpcl{k}$.
\end{itemize}
    For each $k$, we sort the non-empty clusters in $\m{I}_k^r$ in non-increasing order based on their size and denote the $i$-th cluster in this ordering as $R_i(k)$. Similarly, we sort the clusters in $\m{I}_k^b$ in non-increasing order based on their size and denote the $i$-th cluster in this ordering as $B_i(k)$.  

We then compute:
\begin{align}
    \min \left( \sum_{R_i(k) \in \m{I}_k^r} |R_i(k)|, \sum_{B_i(k) \in \m{I}_k^b} |B_i(k)| \right).\nonumber
\end{align}
Without loss of generality, assume that: $\sum_{R_i(k) \in \m{I}_k^r} |R_i(k)|$ is the smaller of the two sums.

We then identify the index $t$ such that
\begin{align}
    \sum_{i = 0}^{t} |B_i(k)| \leq \sum_{R_i(k) \in \m{I}_k^r} |R_i(k)| < \sum_{i = 0}^{t+1} |B_i(k)|.\nonumber
\end{align}  

For all $R_{i,k} \in \m{I}_k^r$ we select ${R_i(k)}$ and ${B_i(k)}$ from indices $1$ to $t$ along with a portion $\alpha_k$ from $B_{t+1}(k)$, where:
\begin{align}
    \alpha_k = \sum_{i = 0}^{t+1} |B_i(k)|-\sum_{R_{i}(k) \in \m{I}_k^r}|R_i(k)|. \nonumber
\end{align} 
We proceed by merging these subsets with $\gencl{\ell_k}$, provided that $\gencl{\ell_k}$ exists. The resultant cluster, denoted by $\gencl{\ell_k}$, represents the maximal fair subset of $\inpcl{k}$.\\
To construct the collection of all such maximal fair clusters, we define the set:\[\m{M} = \{\gencl{\ell_k}| k \in [|\m{D}|]\},\] where $\m{M}$ contains all the maximal fair clustering derived from the input clusters.

As for each $k \in [|\m{D}|]$, the maximal fair part is removed, and by the end of this process, at least one of the sets of $\m{I}_k^r$ or $\m{I}_k^b$ is empty. If $\m{I}_k^r$ is empty, then all $B_i(k)$ from $\m{I}_k^b$ are merged into a cluster $B_k$, where $|B_k| = |\red{\inpcl{k}}| - |\blue{\inpcl{k}}|$. Conversely, if $\m{I}_k^b$ is empty, then all $R_i(k)$ from $\m{I}_k^r$ are merged into a cluster $R_k$, where $|R_k| = |\red{\inpcl{k}}| - |\blue{\inpcl{k}}|$. Define

\begin{itemize}
    \item $\m{R} = \{ R_k \mid \m{I}_k^r \text{is non-empty}\}$, which consists of red monochromatic clusters $R_k$.
    \item $\m{B} = \{ B_k \mid \m{I}_k^b \text{is non-empty}\}$, which consists of blue monochromatic clusters $B_k$.
\end{itemize}

Finally, we apply~\cref{alg:GreedyMerge} to the input subsets: \[\text{Set}_1 =\m{R}, \text{Set}_2= \m{B}\] Suppose that $\fairset$ denotes the output of the algorithm.\\
We define $\m{C}^{(4)}$ as the union of maximal fair clusters $\m{M}$ and $\fairset$, formally expressed as:\[\m{C}^{(4)} = \m{M} \cup \fairset.\]

\begin{lemma}
    The new clustering $\m{C}^{(4)}$ is fair and satisfies all properties~\cref{prop:one},~\cref{prop:two},~\cref{prop:three}, and~\cref{prop:four}. Moreover 
$\dist(\m{C}^{(4)}, \inpset)\le  \dist(\m{C}^{(3)}, \inpset)$.
\end{lemma}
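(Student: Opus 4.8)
The plan is to verify, in turn, that $\m{C}^{(4)}$ is fair, that it satisfies \cref{prop:one},~\cref{prop:two},~\cref{prop:three}, and~\cref{prop:four}, and that $\dist(\m{C}^{(4)},\inpset)\le\dist(\m{C}^{(3)},\inpset)$, putting essentially all of the effort into the distance bound.

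\textbf{Fairness and the four properties.} Every cluster of $\m{C}^{(4)}=\m{M}\cup\fairset$ is either a maximal fair part in $\m{M}$ -- built by merging the already fair $\gencl{\ell_k}$ with an equal number of leftover red and blue points -- or a cluster of $\fairset$, which by \cref{alg:GreedyMerge} is a monochromatic red block together with an equal-size monochromatic blue block; both kinds are fair. The partition covers $V$ because, for each $k$, the maximal fair part absorbs exactly $\min(|\red{\inpcl{k}}|,|\blue{\inpcl{k}}|)$ red and blue points of $\inpcl{k}$, the remaining monochromatic points enter $\m{R}\cup\m{B}$, and $\sum_k|R_k|=\sum_k|B_k|$ since $\inpset$ is equi-proportional, so \cref{alg:GreedyMerge} consumes both sides. \cref{prop:four} holds by the construction of $\m{M}$; and since each cluster of $\m{M}$ lies inside a single input cluster while each cluster of $\fairset$ joins a red block of some $\inpcl{k}$ with a blue block of a \emph{different} $\inpcl{j}$ (because $R_k$ and $B_k$ are never simultaneously nonempty), \cref{prop:one},~\cref{prop:two}, and~\cref{prop:three} follow exactly as in the corresponding steps for $\m{C}^{(1)},\m{C}^{(2)},\m{C}^{(3)}$.

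\textbf{The distance bound.} I would establish the closed form
\[
  \dist(\m{C},\inpset)\;=\;\sum_{k}\binom{|\inpcl{k}|}{2}\;+\;\frac{n}{2}\;-\;2\sum_{k}\rho_k(\m{C})^2,
\]
valid for \emph{every} fair clustering $\m{C}$ satisfying \cref{prop:one},~\cref{prop:two}, and~\cref{prop:three}, where $\rho_k(\m{C})$ is the number of red points -- equivalently, by fairness, the number of blue points -- of $\inpcl{k}$ that $\m{C}$ places in the unique cluster contained in $\inpcl{k}$, with $\rho_k(\m{C})=0$ when there is no such cluster. To prove it, note that by \cref{prop:one},~\cref{prop:two}, and~\cref{prop:three} together with fairness, each cluster of $\m{C}$ is either such an ``inside'' cluster or a disjoint union of a set of red points of one $\inpcl{k}$ and an equal-size set of blue points of another $\inpcl{j}$ with $k\ne j$; write $m_s$ for the common size of these two blocks in the $s$-th such cluster. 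Split the unordered pairs of $V$ into those lying inside one $\inpcl{k}$ and those crossing two input clusters: the cross-kind clusters contribute exactly $\sum_s m_s^2$ pairs joined by $\m{C}$ but separated by $\inpset$, while the number of within-$\inpcl{k}$ pairs separated by $\m{C}$ is $\sum_k\binom{|\inpcl{k}|}{2}-\sum_k\binom{2\rho_k(\m{C})}{2}-2\sum_s\binom{m_s}{2}$, using that the red blocks lying in $\inpcl{k}$ have total size $|\red{\inpcl{k}}|-\rho_k(\m{C})$ and likewise for blue. Adding these two, the identities $m_s^2-2\binom{m_s}{2}=m_s$ and $\sum_s m_s=\frac{n}{2}-\sum_k\rho_k(\m{C})$ make every $m_s$-dependent term cancel, yielding the displayed formula -- which specialises to \cref{lem:opt-equ} when all the $\inpcl{k}$ are monochromatic. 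Since $\m{C}^{(3)}$ and $\m{C}^{(4)}$ both satisfy \cref{prop:one},~\cref{prop:two}, and~\cref{prop:three}, and the construction of $\m{C}^{(4)}$ forces $\rho_k(\m{C}^{(4)})=\min(|\red{\inpcl{k}}|,|\blue{\inpcl{k}}|)$, the largest value $\rho_k$ can take, we get $\sum_k\rho_k(\m{C}^{(4)})^2\ge\sum_k\rho_k(\m{C}^{(3)})^2$, hence $\dist(\m{C}^{(4)},\inpset)\le\dist(\m{C}^{(3)},\inpset)$.

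\textbf{Main obstacle.} The crux is the pair-counting behind the closed form: one must verify that the ``false joins'' created by the cross-kind clusters and the ``false cuts'' caused by scattering each $\inpcl{k}$ over several clusters combine so that only the terms $\rho_k(\m{C})$ survive. Should that accounting prove error-prone, a safe alternative is to track the transformation $\m{C}^{(3)}\to\m{C}^{(4)}$ in three moves -- cut each cross-kind cluster of $\m{C}^{(3)}$ into its two monochromatic blocks (only decreasing $\dist(\cdot,\inpset)$); for each $k$, consolidate all blocks lying in $\inpcl{k}$ into its maximal fair part plus one monochromatic remainder (again only decreasing $\dist(\cdot,\inpset)$, every merge being internal to $\inpcl{k}$); and finally apply \cref{alg:GreedyMerge} to the remainders, bounding the increase it causes via \cref{lem:opt-equ} and checking that the gain from the consolidation step dominates it -- but the identity route is cleaner, and I would prefer it.
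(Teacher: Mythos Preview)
Your argument is correct and genuinely different from the paper's. The paper proves the distance inequality by a direct term-by-term expansion of $\dist(\m{C}^{(4)},\inpset)-\dist(\m{C}^{(3)},\inpset)$: for each $k$ it tracks the pairs gained and lost when the pieces $R_i(k),B_i(k),\gencl{\ell_k}$ are rearranged, invokes \cref{lem:opt-equ} for the cost of $\fairset$, and after a page of algebra collapses everything to $0$. You instead derive the closed form $\dist(\m{C},\inpset)=\sum_k\binom{|\inpcl{k}|}{2}+\tfrac{n}{2}-2\sum_k\rho_k(\m{C})^2$, valid for \emph{any} fair clustering with \cref{prop:one}--\cref{prop:three}, and then observe that $\m{C}^{(4)}$ maximises every $\rho_k$ pointwise. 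Your route is cleaner and in fact proves more: it shows $\m{C}^{(4)}$ is distance-minimal among all such clusterings, not just no worse than $\m{C}^{(3)}$, and it makes \cref{lem:opt-equ} a special case (all $\rho_k=0$) rather than an external input. The paper's approach, on the other hand, stays closer to the step-by-step transformation narrative and does not require isolating a global invariant.
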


\begin{proof}
    It is straightforward to verify that for each $k \in [|\m{D}|]$, $\gencl{\ell_k}$ remains fair as it is the maximal fair subset of $\inpcl{k}$. The rest of the clusters are the result of ~\cref{alg:GreedyMerge}, which construct fair clusters. Therefore $\m{C}^{(4)}$ remains fair.
    
    Furthermore, considering the input structure of~\cref{alg:GreedyMerge}, for each $k \in [|\m{D}|]$ either $R_k$ or $B_k$ is necessarily empty. This guarantees that~\cref{prop:one},~\cref{prop:two} hold. \\Additionally, given the mechanism by which~\cref{alg:GreedyMerge} operates, it is straightforward to verify that~\cref{prop:three} is also satisfied. Moreover, since $\m{C}^{(4)}$ is defined as the union of maximal fair clusters $\m{M}$ and $\fairset$, if follows that ~\cref{prop:four} is also satisfied.
    
    It remains to show that $ \dist(\m{C}^{(4)}, \inpset)\le  \dist(\m{C}^{(3)}, \inpset) $. Applying~\cref{lem:opt-equ}, we establish that the distance from $\mathcal{C} = \mathcal{R} \cup \mathcal{B}$ to any fair clustering $\fairset $ (the output of ~\cref{alg:GreedyMerge}) is
    \begin{align*}
        \dfrac{1}{2} \sum_{R_i \in \mathcal{R}} |{R}_i|^2+\dfrac{1}{2} \sum_{B_i \in \mathcal{B}} |{B}_j|^{2} &= \dfrac{1}{2}\sum_{k \in [|\m{D}|]}\left(   |\red{\inpcl{k}}| -  |\blue{\inpcl{k}}| \right)^{2} \nonumber \\
        &= \dfrac{1}{2}\sum_{k \in [|\m{D}|]}\left( \sum_{C_i \in \m{C}} |R_i(k)| -  \sum_{C_i \in \m{C}} |B_i(k)| \right)^{2}. \nonumber
    \end{align*}

    Therefore, we have
    \begingroup
\allowdisplaybreaks
    \begin{align*}
        &\>\>\dist(\m{C}^{(4)}, \inpset) -  \dist(\m{C}^{(3)}, \inpset) \nonumber \\
        &=\sum_{k \in [|\m{D}|]} \bigg[- \dfrac{1}{2} \sum_{C_i \in \m{C}} |R_i(k)|^{2} -\dfrac{1}{2} \sum_{C_i \in \m{C}} |B_i(k)|^{2} - \sum_{i< j}|R_i(k)||R_j(k)| -\sum_{i<j }|B_i(k)||B_j(k)|-\sum_{i,j }|R_i(k)||B_j(k)| \\
        &\> +2 \min (\sum_{C_i \in \m{C}} |R_i(k)|, \sum_{C_i \in \m{C}} |B_i(k)|)\left( \max (\sum_{C_i \in \m{C}} |R_i(k)|, \sum_{C_i \in \m{C}} |B_i(k)|) -  \min (\sum_{C_i \in \m{C}} |R_i(k)|, \sum_{C_i \in \m{C}} |B_i(k)| \right)\\
        &\> + \dfrac{1}{2} \left(   \sum_{C_i \in \m{C}} |R_i(k)| -  \sum_{C_i \in \m{C}} |B_i(k)| \right)^2\bigg]\\
        & \le \sum_{k \in [|\m{D}|]} \bigg[- \dfrac{1}{2} \sum_{C_i \in \m{C}} |R_i(k)|^{2} -\dfrac{1}{2} \sum_{C_i \in \m{C}}|B_i(k)|^{2} - \sum_{i< j}|R_i(k)||R_j(k)| -\sum_{i<j }|B_i(k)||B_j(k)| -\sum_{i,j }|R_i(k)||B_j(k)|\\
        &\> +2 \min (\sum_{C_i \in \m{C}} |R_i(k)|, \sum_{C_i \in \m{C}} |B_i(k)|) \max (\sum_{C_i \in \m{C}} |R_i(k)|, \sum_{C_i \in \m{C}} |B_i(k)|)\\
        &\> + \dfrac{1}{2} \left( \sum_{C_i \in \m{C}} |R_i(k)| -  \sum_{C_i \in \m{C}} |B_i(k)| \right)^2\bigg]\\
        & =\sum_{k \in [|\m{D}|]} \bigg[- \dfrac{1}{2} \sum_{C_i \in \m{C}} |R_i(k)|^{2} -\dfrac{1}{2} \sum_{C_i \in \m{C}} |B_i(k)|^{2} - \sum_{i< j}|R_i(k)||R_j(k)| -\sum_{i<j }|B_i(k)||B_j(k)| -\sum_{i,j }|R_i(k)||B_j(k)|\\
        &\> + 2(\sum_{C_i \in \m{C}} |R_i(k)|)(\sum_{C_i \in \m{C}} |B_i(k)|) + \dfrac{1}{2} \left(   \sum_{C_i \in \m{C}} |R_i(k)| -  \sum_{C_i \in \m{C}} |B_i(k)| \right)^2\bigg]\\
        &=\sum_{k \in [|\m{D}|]} \bigg[- \dfrac{1}{2} \sum_{C_i \in \m{C}} |R_i(k)|^{2} -\dfrac{1}{2} \sum_{C_i \in \m{C}} |B_i(k)|^{2} - \sum_{i< j}|R_i(k)||R_j(k)| -\sum_{i<j }|B_i(k)||B_j(k)| \\
        &\> + 2(\sum_{C_i \in \m{C}} |R_i(k)|)(\sum_{C_i \in \m{C}} |B_i(k)|) + \dfrac{1}{2} \sum_{C_i \in \m{C}} |R_i(k)|^{2} +\dfrac{1}{2} \sum_{C_i \in \m{C}} |B_i(k)|^{2}  +\sum_{i<j }|B_i(k)||B_j(k)|\\
        &\> + \sum_{i< j}|R_i(k)||R_j(k)|- (\sum_{C_i \in \m{C}} |R_i(k)|)(\sum_{C_i \in \m{C}} |B_i(k)|)\bigg]\\
        & = -\sum_{i,j }|R_i(k)||B_j(k)| + (\sum_{C_i \in \m{C}} |R_i(k)|)(\sum_{C_i \in \m{C}} |B_i(k)|)\\
        &= 0.
    \end{align*}

\endgroup

\end{proof}

\begin{proof}[Proof of~\cref{thm:closest-1-1-fair}]
    Given a fair optimal clustering $\m{C}^*$, we transform it to $\m{C}^{f^*}$ (say) such that it satisfies all the properties,~\cref{prop:one},~\cref{prop:two},~\cref{prop:three}, and~\cref{prop:four} and $\dist(\m{D}, \m{C}^{f^*}) \leq \dist(\m{D}, \m{C}^*)$.

    Now, we prove $\dist(\inpset, \m{F}) \leq \dist(\inpset, \m{C}^{f^*})$ where $\m{F}$ is the output of the \cref{alg:FindClosestFair}.

    In \cref{alg:MakeItFair}, we divide a cluster $D_i \in \inpset$ into two sets, FairPart and NotFairPart. Note that FairPart is the maximal fair subset of $D_i$. Let's denote the maximal fair subset of $D_i$ by $M_i$ and $M = \bigcup_{i \in [|\inpset|]}M_i$.

    Hence,
    \begin{align}
        \dist(\inpset, \m{F}) = \dist(\inpset, \{ M_k \mid k \in [|\inpset|]\}) + \dist(\inpset, \m{F} \setminus \{ M_k \mid k \in [|\inpset|]\} ) + \textit{cost}(M, V \setminus M) \label{eqn:equi-one}
    \end{align}

    where $\textit{cost}(M, V \setminus M)$ counts the number of pairs $(u,v)$ such that $u \in M$ and $v \in V \setminus M$ but $u$ and $v$ are present in the same cluster $D_k$(say) in $\inpset$. Recall that $V$ is the domain.

    Now in $\m{C}^{f^*}$ by \cref{prop:four} we get for each $i \in [|D|]$ there exists $C_j \in \m{C}^{f^*}$ such that $C_j$ is the maximal fair subset of $D_i$. Since $C_j$ is a maximal subset of $D_i$, we can assume $C_j = M_i$ (maximal FairPart that we cut from $D_i$) because $\dist(\inpset, \n{C}^{f^*})$ is independent of points present in $C_j$ it depends only on the size of $C_j$.

    Hence,
    \begin{align}
        \dist(\inpset, \m{C}^{f^*}) = \dist(\inpset, \{ M_k \mid k \in [|\inpset|]\}) + \dist(\inpset, \m{C}^{f^*} \setminus \{ M_k \mid k \in [|\inpset|] \}) + \textit{cost}(M, V \setminus M) \label{eqn:equi-two}
    \end{align}

    By comparing \cref{eqn:equi-one} and \cref{eqn:equi-two} we get that to show $\dist(\inpset, \m{F}) \leq \dist(\inpset, \m{C}^{f^*})$ we need to prove $\dist(\inpset, \m{F} \setminus \{ M_k \mid k \in [|\inpset|]\} ) \leq \dist(\inpset, \m{C}^{f^*} \setminus \{ M_k \mid k \in [|\inpset|] \})$. 

    The set $D_i \setminus M_i$ is either a red monochromatic set or a blue monochromatic set. If $D_i \setminus M_i$ is red monochromatic, we denote it with $R_i$, and if $D_i \setminus M_i$ is blue monochromatic, then we denote it with $B_i$. Thus initially we have a set of red monochromatic clusters $\m{R} = \{R_1, R_2, \ldots R_t \}$ and blue monochromatic clusters $\m{B} = \{B_1, B_2, \ldots, B_m\}$ where both $t$ and $m$ are less than $|\inpset|$. 

    Now, $\m{C}^{f^*} \setminus \{ M_k \mid k \in [|\inpset|] \}$ is an optimal fair clustering on $\m{R} \cup \m{B}$ that satisfies \cref{prop:three}. 
    
    Thus, from \cref{lem:opt-equ} we get 
    \begin{align}
        \dist(\inpset, \m{C}^{f^*} \setminus \{ M_k \mid k \in [|\inpset|] \}) \geq \frac{1}{2}\sum_{i \in [t]}|R_i|^2 + \frac{1}{2} \sum_{j \in [m]}|B_j|^2\label{eqn:equi-three}
    \end{align}
    
    Again since \cref{alg:GreedyMerge} takes $\m{R} \cup \m{B}$ as input and construct a fair clustering $\m{F} \setminus \{ M_k \mid k \in [|\inpset|] \}$ that satisfies $\cref{prop:three}$ we get,
    \begin{align}
        \dist(\inpset, \m{F} \setminus \{ M_k \mid k \in [|\inpset|]\} ) \leq \frac{1}{2}\sum_{i \in [t]}|R_i|^2 + \frac{1}{2} \sum_{j \in [m]}|B_j|^2 \label{eqn:equi-four}
    \end{align}

    Hence, from \cref{eqn:equi-one}, \cref{eqn:equi-two}, \cref{eqn:equi-three} and \cref{eqn:equi-four} we conclude $\dist(\inpset, \m{F}) \leq \dist(\inpset, \m{C}^{f^*})$.
    


\end{proof}

\section{Constant Approximation for Closest Fair Clustering}\label{sec:approx-fair}
In~\cref{sec:equiproportion}, we show how to find an optimal closest fair clustering to any input clustering when the total number of blue and red points are equal in the data set. Now, we focus on a more general case when the (irreducible) ratio between the blue and red points is $p/q$ for some positive integers $p,q \ge 1$, and provide an approximation algorithm. To show our results, we take a two-stage process. First, we design approximation algorithms that find a close balanced cluster -- for each cluster, the number of blue points is a multiple of $p$, and the number of red points is a multiple of $q$. Next, we design an approximation algorithm to find a close fair clustering to an already balanced clustering. Finally, we combine both these algorithms to establish our approximation guarantee for the closest fair clustering problem.

In this section, we first describe the combination step, i.e., how to use an approximation algorithm for the closest balanced clustering problem and an approximation algorithm for the closest fair clustering problem when the inputs are balanced clustering. 

\begin{theorem}
    \label{thm:combine-p-blue-fair}
    Consider any $\alpha,\beta \ge 1$. Suppose there is a $t_1(n)$-time algorithm Algorithm-A() that, given any clustering on $n$ points, produces an $\alpha$-close $\bal$, and a $t_2(n)$-time algorithm Algorithm-B() that, given any $\bal$ on $n$ points, produces a $\beta$-close $\fair$. Then there is an $O(t_1(n) + t_2(n) + n)$-time algorithm that, given any clustering on $n$ points, produces an $\left(\alpha + \beta +\alpha \beta \right)$-close $\fair$.
\end{theorem}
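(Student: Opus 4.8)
The plan is to run the two supplied subroutines one after the other and glue their guarantees together with the triangle inequality for the clustering distance $\dist$. Concretely, given an input clustering $\m{D}$ on $n$ points, first call Algorithm-A$()$ on $\m{D}$ to obtain an $\alpha$-close $\bal$ $\m{T}$, then call Algorithm-B$()$ on $\m{T}$ to obtain a $\beta$-close $\fair$ $\m{F}$, and output $\m{F}$. The running time is immediately $t_1(n) + t_2(n) + O(n)$, the additive $O(n)$ absorbing the cost of reading the input and handing the intermediate clustering between the two calls, so the entire content of the proof is bounding the approximation factor of $\m{F}$.

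Two elementary facts drive the analysis. First, $\dist$ is a metric on clusterings of $V$: identifying a clustering with its set of co-clustered unordered pairs, $\dist$ is exactly the cardinality of the symmetric difference of these sets, hence obeys $\dist(\m{A},\m{C}) \le \dist(\m{A},\m{B}) + \dist(\m{B},\m{C})$ for all clusterings $\m{A}, \m{B}, \m{C}$. Second, every $\fair$ is in particular a $\bal$: if the blue-to-red ratio in every cluster equals $p/q$ with $\gcd(p,q)=1$, then in every cluster the number of blue points is a multiple of $p$ and the number of red points is a multiple of $q$. Consequently, letting $\fairmop$ be a closest $\fair$ to $\m{D}$ and $\opt = \dist(\m{D}, \fairmop)$, the clustering $\fairmop$ is also a $\bal$, so the closest $\bal$ $\mopq$ to $\m{D}$ satisfies $\dist(\m{D}, \mopq) \le \dist(\m{D}, \fairmop) = \opt$.

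Now chain the bounds. Since Algorithm-A$()$ is an $\alpha$-approximation for the closest $\bal$, we get $\dist(\m{D}, \m{T}) \le \alpha\, \dist(\m{D}, \mopq) \le \alpha\, \opt$. By the triangle inequality, the closest $\fair$ to $\m{T}$ lies within distance $\dist(\m{T}, \fairmop) \le \dist(\m{T}, \m{D}) + \dist(\m{D}, \fairmop) \le (\alpha+1)\, \opt$ of $\m{T}$, using that $\fairmop$ is itself a valid $\fair$. Since Algorithm-B$()$ is a $\beta$-approximation for the closest $\fair$ when handed a $\bal$, and $\m{T}$ is a $\bal$, we obtain $\dist(\m{T}, \m{F}) \le \beta\,(\alpha+1)\, \opt$. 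One more application of the triangle inequality yields $\dist(\m{D}, \m{F}) \le \dist(\m{D}, \m{T}) + \dist(\m{T}, \m{F}) \le \alpha\, \opt + \beta(\alpha+1)\, \opt = (\alpha + \beta + \alpha\beta)\, \opt$, which is precisely the claimed $(\alpha + \beta + \alpha\beta)$-closeness.

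There is essentially no hard step here; the argument is a clean double application of the triangle inequality. The only points needing even minor care are verifying that $\dist$ genuinely satisfies the triangle inequality (immediate from the symmetric-difference viewpoint) and that "fair implies balanced" (immediate from the definitions, once the ratio $p/q$ is taken in lowest terms); everything else is bookkeeping of the constants $\alpha$ and $\beta$.
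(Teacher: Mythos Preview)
Your proposal is correct and follows essentially the same approach as the paper: compose Algorithm-A$()$ and Algorithm-B$()$, use that every $\fair$ is a $\bal$ to bound $\dist(\m{D},\m{T})\le\alpha\,\opt$, then apply the triangle inequality twice to get $\dist(\m{T},\m{F})\le\beta(\alpha+1)\,\opt$ and finally $\dist(\m{D},\m{F})\le(\alpha+\beta+\alpha\beta)\,\opt$. Your explicit remark that $\dist$ is a metric via the symmetric-difference interpretation is a nice touch that the paper leaves implicit.
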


\begin{proof}
Consider the following simple algorithm: Given a clustering $\inpset$ as input, first run Algorithm-A() to generate a $\bal$ $\m{T}$, and then run Algorithm-B() with $\m{T}$ as input to generate a $\fair$ $\m{F}$. Finally, output $\m{F}$.

It is straightforward to see that the running time of the above algorithm is $O(t_1(n) + t_2(n) + n)$. We now claim that $\m{F}$ is an $\left(\alpha + \beta +\alpha \beta \right)$-close $\fair$ of the input clustering $\inpset$.

Let $\fairmop$ be an (arbitrary) closest $\fair$ of the input $\inpset$, and $\mop$ be an (arbitrary) closest $\bal$ of the input $\inpset$. Since by the definition, any $\fair$ is also a $\bal$, we get
\begin{equation}
    \label{eq:opt-fair-bal}
    \dist(\inpset, \mop) \le \dist(\inpset,\fairmop).
\end{equation}

Now, by the guarantee provided by Algorithm-A(), we get
\begin{align}
    \label{eq:output-bal}
    \dist(\inpset,\m{T}) &\le \alpha \cdot \dist(\inpset, \mop) \nonumber\\
    &\le \alpha \cdot \dist(\inpset, \fairmop) &&\text{(By \cref{eq:opt-fair-bal})}.
\end{align}

Next, let $\m{C}^*$ be an (arbitrary) closest $\fair$ of $\m{T}$ (recall, $\m{T}$ is an output of Algorithm-A()). Then, by the guarantee provided by Algorithm-B(), we get
\begin{align}
    \label{eq:outbal-optfair}
    \dist(\m{T},\m{F}) &\le \beta\cdot \dist(\m{T},\m{C}^*)\nonumber\\
    & \le \beta \cdot \dist(\m{T},\fairmop) &&\text{(Since $\fairmop$ is a valid $\fair$)}\nonumber\\
    &\le \beta \cdot \left(\dist(\m{T},\inpset) + \dist(\inpset,\fairmop) \right) &&\text{(By the triangle inequality)}\nonumber\\
    &\le \beta (\alpha + 1)\cdot \dist(\inpset,\fairmop)&&\text{(By \cref{eq:output-bal})}.
\end{align}

Thus, finally, we deduce that

\begin{align*}
    \dist(\inpset,\m{F}) & \le \dist(\inpset,\m{T}) + \dist(\m{T},\m{F}) &&\text{(By the triangle inequality)}\\
    & \le \alpha \cdot \dist(\inpset, \fairmop) + \beta (\alpha + 1)\cdot \dist(\inpset,\fairmop) &&\text{(By \cref{eq:output-bal} and \cref{eq:outbal-optfair})}\\
    & = (\alpha + \beta +\alpha \beta) \cdot \dist(\inpset, \fairmop)
\end{align*}
and that concludes the proof.
\end{proof}
  
Now, as a corollary of the above theorem, we get our $O(1)$-approximation results for the closest fair clustering problem. In~\cref{sec:bal-to-fair}, we present an algorithm to make an already balanced cluster fair; more specifically, we show the following theorem.

\begin{restatable} {theorem} {makepclusterfair} \label{thm:make.p.cluster.fair}
    Let $ \ratio = p/q $ where $ p \neq q $ are coprime positive integers. There exists a linear time algorithm that, given any $ \mopqdef $ $ \out $, finds a fair clustering $ \fairset $ such that
    \begin{align}
        \dist(\out, \fairset) \leq 3\dist(\out, \fairmop),\nonumber
    \end{align}
    where $ \fairmop $ is a closest fair clustering to $ \out $.
\end{restatable}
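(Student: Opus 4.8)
By the standing assumption $|\blue{V}| \ge |\red{V}|$ and the color symmetry of the problem it suffices to treat $p > q$, which I assume throughout. Call a cluster $\outcl{i} \in \out$ a \tr cluster if $q|\blue{\outcl{i}}| < p|\red{\outcl{i}}|$, and fix inside it a \emph{red surplus} set $S_i \subseteq \red{\outcl{i}}$ of size $|\red{\outcl{i}}| - (q/p)|\blue{\outcl{i}}|$; call it a \tb cluster if $q|\blue{\outcl{i}}| > p|\red{\outcl{i}}|$, with \emph{red deficit} of size $|D_i| = (q/p)|\blue{\outcl{i}}| - |\red{\outcl{i}}|$; otherwise $\outcl{i}$ is already fair. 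Since $\out$ is a $\mopqdef$, the quantities $(q/p)|\blue{\outcl{i}}|$ and $|\red{\outcl{i}}|$ are multiples of $q$, so all surpluses and deficits are nonnegative integers; and because $\out$ has global ratio $p/q$, summing the defining identities over $i$ yields $W := \sum_{\outcl{i}\ \tr}|S_i| = \sum_{\outcl{j}\ \tb}|D_j|$.

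The algorithm is the ``remove surplus and redistribute'' procedure sketched in \cref{sec:tech-overview}. From every \tr cluster cut off $S_i$, leaving the core $\outcl{i}\setminus S_i$, which is fair since $|\blue{\outcl{i}\setminus S_i}|/|\red{\outcl{i}\setminus S_i}| = |\blue{\outcl{i}}|/((q/p)|\blue{\outcl{i}}|) = p/q$. Then process the \tb clusters in any fixed order, handing the pooled $W$ surplus points out so that $\outcl{j}$ receives a set $M_j$ of exactly $|D_j|$ of them; each $\outcl{j}\cup M_j$ is fair by the same check, and already-fair clusters are left alone. This takes $O(n)$ time and returns a fair clustering $\fairset$. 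A pair $\{u,v\}$ changes status between $\out$ and $\fairset$ only when it is of one of four kinds, so $\dist(\out,\fairset)$ is at most the sum of: \emph{(i)} $\sum_{\outcl{i}\ \tr}|S_i|\,|\outcl{i}\setminus S_i|$, the cost of cutting each $S_i$ off its core; \emph{(ii)} at most $\sum_{\outcl{i}\ \tr}\binom{|S_i|}{2}$, from pieces of one $S_i$ routed to different \tb clusters; \emph{(iii)} $\sum_{\outcl{j}\ \tb}|D_j|\,|\outcl{j}|$, the cost of attaching $M_j$ to $\outcl{j}$; and \emph{(iv)} at most $\sum_{\outcl{j}\ \tb}\binom{|D_j|}{2}$, from pieces of one $M_j$ coming from different surpluses. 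Every other pair has the same status in $\out$ and $\fairset$ because distinct input clusters stay mutually separated.

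The remaining task is to show $3\,\dist(\out,\fairmop) \ge (\mathrm{i})+(\mathrm{ii})+(\mathrm{iii})+(\mathrm{iv})$. The plan has two ingredients, both exploiting $p \ge q$. First, the quadratic re-routing costs (ii) and (iv) are absorbed by cross-color mixing that every fair clustering must perform: in $\fairmop$ too, the $|S_i|$ red points our algorithm extracts from $\outcl{i}$ must end up co-clustered with blue mass in ratio $p:q$, so $\fairmop$ incurs at least $\tfrac{p}{q}|S_i|^2 \ge |S_i|^2 \ge 2\binom{|S_i|}{2}$ of inter-color cost chargeable to $\outcl{i}$, and dually for each deficit. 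Second, the structural costs (i) and (iii): for a \tr cluster, any surplus red point that $\fairmop$ keeps with the core drags (since $p \ge q$) at least one foreign blue point into that cluster, which alone costs $|\outcl{i}\setminus S_i|$, while any surplus red point $\fairmop$ removes from the core costs $|\outcl{i}\setminus S_i|$ directly; so $\fairmop$ pays $\ge |S_i|\,|\outcl{i}\setminus S_i|$ on account of $\outcl{i}$, and dually $\ge |D_j|\,|\outcl{j}|$ on account of $\outcl{j}$. Adding the two structural charges and the quadratic charge, and absorbing the overlap between their charging regions by one application of the triangle inequality, gives the factor $3$.

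I expect the technical heart to be making the preceding charges valid against an \emph{arbitrary} fair clustering $\fairmop$, which need neither keep the cores intact nor scatter them wholesale. I would handle this exactly as in \cref{sec:equiproportion}: first transform $\fairmop$, by a fairness-preserving and distance-non-increasing sequence of cut/merge moves, into a normal form in which for each $i$ the maximal fair sub-cluster of $\outcl{i}$ is a single cluster and the red (resp.\ blue) remainder of $\outcl{i}$ is monochromatic --- this is the step that uses that $\out$ is balanced, since that is what determines the maximal fair sub-cluster of each $\outcl{i}$ --- and then read off the structural and quadratic charges cluster by cluster. Combining the upper bound on $\dist(\out,\fairset)$ with this lower bound on $\dist(\out,\fairmop)$ completes the proof.
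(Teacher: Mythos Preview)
Your algorithm and four-term cost decomposition (i)--(iv) coincide with the paper's, but the lower-bounding of $\dist(\out,\fairmop)$ is where you diverge, and your route has a genuine gap. The normal-form reduction you propose---transform $\fairmop$ so that the maximal fair sub-cluster of each $\outcl{i}$ is a standalone cluster---does \emph{not} go through for $p\ne q$. Concretely, take $p=2,q=1$, $\outcl{1}=\{2B,100R\}$, $\outcl{2}=\{200B,1R\}$ (both balanced). Moving $99$ red from $\outcl{1}$ to $\outcl{2}$ gives a fair clustering at distance $297+19899=20196$. But any clustering in which the maximal fair subset $\{2B,1R\}\subseteq\outcl{2}$ is a standalone cluster forces the $198$ surplus blue of $\outcl{2}$ and the $99$ surplus red of $\outcl{1}$ into separate leftover clusters; a short computation shows the minimum such distance is $20493$. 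Hence the optimal $\fairmop$ is not in your normal form, and insisting on it strictly increases the distance---so the ``distance-non-increasing'' step fails. Without that, your per-point structural argument (``each surplus red kept with the core drags a foreign blue costing $|\outcl{i}\setminus S_i|$'') has no foundation, since $\fairmop$ may shatter the core; and the ``one application of the triangle inequality'' is not the right device for the cross-cluster double-counting you would face.

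The paper avoids normal forms entirely. It writes $\outcl{i}=\bigcup_j X_{i,j}$ in $\fairmop$, defines $\opt_{\outcl{i}}$ with the merge contribution halved so that $\sum_i\opt_{\outcl{i}}=\dist(\out,\fairmop)$, and proves three direct per-cluster inequalities: $\opt_{\outcl{i}}\ge\tfrac12\dmp_i|\outcl{i}|$ for $\outcl{i}\in\tb$, and $\opt_{\outcl{i}}\ge\tfrac12\smp_i(|\outcl{i}|-\smp_i)$ together with $\opt_{\outcl{i}}\ge\tfrac12\smp_i^2$ for $\outcl{i}\in\tr$. The key technical step is an integer-rounding lemma (\cref{lem:make.estimated.extra.integral}) that converts the fractional per-part surpluses $|\red{X_{i,j}}|-\tfrac{q}{p}|\blue{X_{i,j}}|$ into integers summing to $\smp_i$ without increasing the relevant weighted sum---this is what replaces your exchange arguments. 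Combining gives $(\mathrm{i})+(\mathrm{iii})\le 2\,\dist(\out,\fairmop)$ and $(\mathrm{ii})+(\mathrm{iv})\le \dist(\out,\fairmop)$, hence the factor $3$.
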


\paragraph{Approximation guarantee when $p>1, q=1$. }In~\cref{sec:multiple-of-p-clustering}, we present an approximation algorithm to make any arbitrary clustering with the ratio between blue and red points being $p$ for some integer $p > 1$; more specifically, we show the following result.

\begin{restatable}{theorem} {multiplep}\label{thm:main-multiple-of-p}
Consider an integer $p >  1$. There is an algorithm that, given a clustering $\inpset$ over a set of $n$ red-blue colored points where the ratio between the total number of blue and red points is $p$, outputs a $3.5$-close {\bal} of $\inpset$ in time $O(n \log n)$.
\end{restatable}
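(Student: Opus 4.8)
The plan is to realize the two–phase \emph{cut-and-merge} strategy sketched in the technical overview and then charge its cost against a per-cluster lower bound on $\opt := \dist(\inpset,\mopq)$, where $\mopq$ is a closest balanced clustering of $\inpset$. For each input cluster $D_i$ let $s_i := |\blue{D_i}| \bmod p$ be its \emph{surplus}, and when $s_i \neq 0$ let $d_i := p - s_i$ be its \emph{deficit}; the \emph{cut cost} of $D_i$ is the distance increase from deleting $s_i$ blue points, namely $s_i(|D_i|-s_i)$, and the \emph{merge cost} is the increase from importing $d_i$ foreign blue points, which is at least $d_i|D_i|$. Call $D_i$ a \emph{cut cluster} if $s_i < p/2$ (so cutting is the cheaper direction) and a \emph{merge cluster} otherwise. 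A key preliminary fact is that, because a balanced clustering of $\inpset$ exists (for instance $\fairmop$), the total surplus $\sum_i s_i$ is a multiple of $p$, and hence so is the aggregate deficit of all merge clusters; this divisibility is precisely what makes the cleanup steps below well-defined.

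The algorithm \textsc{AlgoGeneral} repeatedly removes the surplus blue points of cut clusters and redistributes them to fill the deficits of merge clusters, making each touched cluster's blue-count a multiple of $p$, until one of the two sides is exhausted. Two residual regimes remain, treated as in the overview. \emph{(a) Only merge clusters left}, with residual total deficit $W$ (a multiple of $p$): sort the remaining merge clusters in non-increasing order of $(\text{cut cost} - \text{merge cost})$ and perform exactly $W/p$ steps, each of which cuts a well-chosen merge cluster down to balance and uses its freed surplus to clear pending deficits — one step disposes of a total deficit of exactly $p$ (see~\cref{fig:merge-p}). \emph{(b) Only cut clusters left}: delete each cluster's surplus ($<p/2$) blue points and greedily pack the freed points into fresh all-blue clusters of size exactly $p$, splitting a freed block when necessary to hit $p$ (see~\cref{fig:cut-p}).

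For the analysis I would first prove a lower bound of the shape $\opt \ge \tfrac12 \sum_i \min(\text{cut cost}_i,\text{merge cost}_i)$: whatever $\mopq$ does to $D_i$, it must either keep enough of $D_i$ together and delete at least $s_i$ blue points from it, or import at least $d_i$ foreign blue points, and the factor $\tfrac12$ absorbs the fact that an inter-cluster pair can be blamed on two input clusters at once. Granting this, the symmetric part of \textsc{AlgoGeneral} costs at most $2\,\opt$. It then remains to bound the two cleanup regimes by a further $1.5\,\opt$: in regime (a) the greedily chosen cutting order, the bound $W/p$ on the number of steps, and the fact that $\mopq$ must itself merge comparably many blue points into those same clusters together keep the extra cost within $1.5\,\opt$; in regime (b) one shows (as the overview's example with surpluses $p/3, p/3, p/2-1$ indicates) that forming a block of size $>p$ costs strictly more in merge cost than the at-most-$|S|^2/2$ cost of splitting a surplus block $S$, so that targeting size exactly $p$ is within a constant of optimal. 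Adding the three contributions gives $\dist(\inpset,\fairset)\le 3.5\,\opt$. For the running time: computing the $s_i$ and the classification is $O(n)$, the single sort in \textsc{AlgoGeneral} and the sort in regime (a) cost $O(n\log n)$, and each cut/merge/pack operation touches every point a constant number of times; hence $O(n\log n)$ overall.

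The step I expect to be the main obstacle is regime (a): once all cut clusters have been consumed, the algorithm is forced to cut clusters for which cutting is the \emph{more expensive} direction, so the clean per-cluster charging breaks down and one must instead amortize globally, using the sorted $(\text{cut cost} - \text{merge cost})$ order together with the exact step count $W/p$, and control the resulting constant tightly enough to land at $3.5$ rather than a larger bound. A secondary subtlety is identifying the correct per-cluster quantity that lower-bounds $\opt$ when an input cluster is shattered across many clusters of the optimal balanced solution — this is where the factor-$2$ loss in the lower bound enters and must be justified carefully.
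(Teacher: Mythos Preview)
Your high-level algorithm matches the paper's, but the analysis has a real gap: the lower bound you state, $\opt \ge \tfrac12\sum_i \min(\text{cut cost}_i,\text{merge cost}_i)$, is too weak to close at $3.5$. The paper's key structural lemma (its \cref{lem:main-structure-of-M*}) shows the sharper bound
\[
\opt_{D_i} \;\ge\; \min\bigl(s_i,\,p-s_i\bigr)\bigl(|D_i|-s_i\bigr)\;+\;\tfrac12\,s_i(p-s_i),
\]
and the extra $\tfrac12 s_i(p-s_i)$ term (equivalently, the global bound $\opt \ge \sum_i \tfrac12 s_i(p-s_i)$ of \cref{lem.opt.si.times.p.min.si}) is precisely what pays for the packing/splitting costs in regime~(b). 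Without it, your argument that ``splitting a surplus block $S$ costs at most $|S|^2/2$'' has nothing on the $\opt$ side to charge against: your lower bound scales with $|D_i|$, but the packing cost scales with $p$, and these can be incomparable. Proving this sharper per-cluster bound is nontrivial --- the paper spends \cref{prop:mod-sum-ineq} and the rather intricate \cref{prop.bound.the.firsterm} on it --- and your sketch (``either delete $s_i$ or import $d_i$'') does not produce the $s_i(p-s_i)$ term.

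Two further discrepancies. First, your cost decomposition $2+1.5$ is not what the paper does: it analyzes the two residual regimes \emph{separately}, obtaining $\le 3\,\opt$ in the merge-only case (via an ILP-style argument, \cref{clm:merge-case-lp-bound}, showing $\costone{\cdot}+\costtwo{\cdot}\le\opt$, plus $\costthree{\cdot}+\costfour{\cdot}\le 2\,\opt$) and $\le 3.5\,\opt$ in the cut-only case ($2+1.5$). Second, your description of regime~(a) is incomplete: the paper's $\algom$ may cut \emph{multiple} size-$p$ blue subsets from clusters that were already balanced (those in $\nc$), not only ``cut a merge cluster down to balance''; this additional freedom is what makes the $W/p$-step argument and the matching against $\mopq$'s own cuts (\cref{clm:our-algorithm-cuts-w/p}, \cref{clm:mop-cuts-w/p}) go through.
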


Now, as an immediate corollary of \cref{thm:combine-p-blue-fair}, by \cref{thm:main-multiple-of-p} and \cref{thm:make.p.cluster.fair}, we get the following result.

\closestpfair*

\paragraph{Approximation guarantee when $p,q>1$. }Next, in~\cref{sec:multiple-of-p-q-clustering}, we present an approximation algorithm to make any arbitrary clustering with the ratio between blue and red points being $p/q$ for some integer $p,q > 1$, where $p,q$ are coprime; more specifically, we show the following result.

\begin{restatable}{theorem}{multiplepq}\label{thm:main-multiple-of-pq}
  Consider two integers $p,q > 1$. There is an algorithm that, given a clustering $\inpset$ over a set of $n$ red-blue colored points where the irreducible ratio between the total number of blue and red points is $p/q$, outputs a $7.5$-close {\bal} of $\inpset$ in time $O(n \log n)$.
\end{restatable}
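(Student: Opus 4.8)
The plan is to reduce the $p{:}q$ balancing task to two essentially independent copies of the $p{:}1$ procedure of \cref{thm:main-multiple-of-p} — one acting on the blue counts modulo $p$ and one on the red counts modulo $q$ — and then to separately pay for the blue--red disagreements that the two procedures create when they both touch the same input cluster. Concretely, for each cluster $D_i\in\inpset$ I would record the \emph{blue surplus} $|\blue{D_i}|\bmod p$ (with blue deficit $p-(|\blue{D_i}|\bmod p)$) and the \emph{red surplus} $|\red{D_i}|\bmod q$ (with red deficit $q-(|\red{D_i}|\bmod q)$), call $D_i$ \emph{blue-cut} if its blue surplus is $<p/2$ and \emph{blue-merge} otherwise, and define \emph{red-cut}/\emph{red-merge} analogously; this splits the clusters into the four types blue-cut/red-cut, blue-cut/red-merge, blue-merge/red-cut, blue-merge/red-merge. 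The algorithm then runs the $p{:}1$-style routine on the blue components — cutting blue surplus out of blue-cut clusters, filling blue-merge deficits with those cut points, ordering the residual merge clusters by (cut cost $-$ merge cost) when only merge clusters survive, and otherwise bundling leftover blue surplus into fresh blue-only clusters of size exactly $p$ — and in parallel runs the symmetric $q{:}1$ routine on the red components. As in the $p{:}1$ case, the existence of a balanced clustering forces the total blue surplus to be congruent modulo $p$ to the total blue deficit (similarly for red), which makes the redistribution well defined; the output is therefore balanced, and since the whole thing is two near-linear-time invocations of \cref{thm:main-multiple-of-p}'s algorithm plus bookkeeping, it runs in $O(n\log n)$.

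For the $7.5$ bound I would decompose $\dist(\inpset,\fairset)$ according to the colours of the disagreeing pair (blue--blue, red--red, blue--red), and likewise observe that any closest balanced $\mopq$ is simultaneously feasible for the ``make blue parts multiples of $p$'' problem and the ``make red parts multiples of $q$'' problem, so $\opt=\dist(\inpset,\mopq)$ dominates the optima of both. The blue--blue disagreements and the blue--red disagreements produced by moving blue points are then charged, by rerunning the analysis underlying \cref{thm:main-multiple-of-p} with the (fixed) red points treated as inert passengers of each cluster, against a constant multiple of the blue optimum; the red operations are charged symmetrically. The term genuinely new to the $p,q>1$ regime is the blue--red disagreement between a foreign blue set $B_i$ (added to fix a blue deficit) and a foreign red set $R_i$ (added to fix a red deficit) landing in the same cluster $D_i$ — the worst case being the blue-merge/red-merge type. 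Here the crucial inequalities are $|B_i|\le p/2$ and $|R_i|\le q/2\le|\red{D_i}|$ (the latter because red-merge forces the red surplus of $D_i$, and hence $|\red{D_i}|$, to be at least $q/2$), whence $|B_i|\cdot|R_i|\le |B_i|\cdot|\red{D_i}|$, and this last quantity is a lower bound on the part of $\opt$ that must be spent within $D_i$ just to fix its blue part (whether the optimum cuts or merges the blue surplus of $D_i$). Since these per-cluster charges are over disjoint sets of pairs, they sum to at most $\opt$; the remaining three types are each covered by reusing the $p{:}1$ charging arguments unchanged and a short case check that no blue--red pair is double-counted. Summing the blue-operation cost, the red-operation cost, and this cross term yields the claimed factor of $7.5$.

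The main obstacle is keeping the constants tight: the naive composition — run the $p{:}1$ balancer once to fix blue, then again to fix red — loses a factor through the triangle inequality between the two phases and gives only a $(3.5^2+2\cdot3.5)=19.25$ approximation for the balancing step, which is exactly what we must avoid. The point of running the two phases in a coordinated way is to charge the blue moves and the red moves to \emph{disjoint} parts (blue--blue versus red--red disagreements) of a single $\opt$ rather than to $\opt$ and then to $\dist(\m{T},\mopq)$ successively. Making this airtight requires (i) verifying that \cref{thm:main-multiple-of-p}'s more delicate sub-arguments — the optimality of forming residual clusters of size exactly $p$, and the ``sort by cut cost minus merge cost'' argument in the merge-only leftover case — still go through when each cluster additionally carries a fixed number of points of the other colour, and (ii) a careful case analysis over the four cluster types confirming that the blue-merge/red-merge cross charge is the only genuinely uncounted cost. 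I would organise the section as: the four-type classification and the algorithm; a decomposition lemma for $\dist(\cdot,\cdot)$ into the three pair-colour classes; per-type cost bounds importing \cref{thm:main-multiple-of-p}; the cross-term bound via $|R_i|\le|\red{D_i}|$; and a final summation.
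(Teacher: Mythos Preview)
Your high-level plan matches the paper: classify each cluster by its blue-cut/blue-merge and red-cut/red-merge status, run the $p{:}1$-style redistribution on each colour, and pay separately for the foreign-blue $\times$ foreign-red cross term via the inequality $|R_i|\le q/2\le|\red{D_i}|$. The algorithm and the cross-term argument are exactly what the paper does.

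The gap is in the accounting that is supposed to produce $7.5$. Your claim that the blue charges and the red charges land on ``disjoint parts (blue--blue versus red--red disagreements) of a single $\opt$'' does not hold: the lower bound on $\opt_{D_i}$ that drives the $3.5$-factor of \cref{thm:main-multiple-of-p} (namely \cref{lem:main-structure-of-M*}) uses the full cluster size $|D_i|$, not just the blue part, so the blue-side lower bound and the red-side lower bound both contain the same intra-cluster split term $\sum_{j<k}|X_{i,j}||X_{i,k}|$. Consequently, invoking \cref{thm:main-multiple-of-p} once per colour and adding the cross term gives $3.5+3.5+1=8$, not $7.5$.

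The paper closes this gap not by any disjointness argument but by a \emph{global} four-way case split (Cut--Cut, Cut--Merge, Merge--Cut, Merge--Merge, according to which of the four sets survives after the initial redistribution), and by exploiting the finer fact that inside the $p{:}1$ analysis the merge sub-case costs only $3\,\opt$ (\cref{lem:main-merge-case}) while the cut sub-case costs $3.5\,\opt$ (\cref{lem:main-cut-case}). The four cases then yield $6$, $7.5$, $7.5$, and $7$ respectively, with the worst case Cut--Merge decomposing as $3.5$ (red, cut sub-case) $+\,3$ (blue, merge sub-case) $+\,1$ (cross term). Your per-cluster four-type case analysis is not the same object as this global split, and you will need the $3$-versus-$3.5$ distinction to land on $7.5$.
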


Also, as an immediate corollary of \cref{thm:combine-p-blue-fair}, by \cref{thm:main-multiple-of-pq} and \cref{thm:make.p.cluster.fair}, we get the following result.

\closestpqfair*

\section{\texorpdfstring{Closest balanced clustering for $p>1, q=1$}{Closest balanced clustering for p>1, q=1}}
\label{sec:multiple-of-p-clustering}

In this section, given a clustering $\inpsetd$ of a set of red-blue colored points, where the irreducible ratio of blue to red points is $ p $ for some integer $ p > 1 $, we construct a balanced clustering $\outd$. That is, in the resulting clustering $\outd$, every cluster $\outcl{i} \in \out$ satisfies the condition that $|\blue{\outcl{i}}|$ is a multiple of $ p $. Our main result for this section is the following:

\multiplep*

We provide the algorithm to find a $3.5$-close $\mopdef$ in \cref{subsec:algorithms} and provide the analysis of the algorithm in \cref{subsec:analysis}

\subsection{Details of the Algorithm} \label{subsec:algorithms} 
Before describing our algorithm, let us first introduce a few notations that we use later.

For any input cluster $D_i$, if its number of blue points is not a multiple of $p$, then we define its surplus as the minimum number of blue points whose removal makes the number of blue points a multiple of $p$, and we define its deficit as the minimum number of blue points whose addition makes the number of blue points to be a multiple of $p$. Formally, we define them as follows.

\begin{itemize}
    \item Surplus of $D_i$, $\surp{D_i}$: For any cluster $D_i \subseteq V$, $\surp{D_i} = |\blue{D_i}| \mod p$.
    \item Deficit of $D_i$, $\defi{D_i}$: For any cluster $D_i \subseteq V$ if $p \nmid |\blue{D_i}|$ then $\defi{D_i} = p - \surp{D_i}$ otherwise $\defi{D_i} = 0$.
\end{itemize}

We call the operation of taking some points, $S$ (say) from a cluster $\inpcl{i} $, that is $\inpcl{i} = \inpcl{i} \setminus S$ as \emph{cut} and the operation of merging some points $M$ (say) to a cluster $\inpcl{i}$ that is $\inpcl{i} = \inpcl{i} \cup M$ as \emph{merge}. Let us define some more terms that we need to describe our algorithms.

\begin{itemize}
    \item Cut cost, $\ccostf{\inpcl{i}}$: For any cluster $\inpcl{i} \in \inpset$, $\ccostf{\inpcl{i}} = (\surp{\inpcl{i}} \cdot (|\inpcl{i}| - \surp{\inpcl{i}})$.


    \item Merge cost, $\mcostf{\inpcl{i}}$: For any cluster $\inpcl{i} \in \inpset$, $\mcostf{\inpcl{i}} = d(\inpcl{i}) \cdot |\inpcl{i}|$.

\end{itemize}


\paragraph{Description of the algorithm\\\\}


Given a set, $\inpset$, we partition the clusters $\inpcl{i} \in \inpset$ into two types based on the $\surp{\inpcl{i} }$ value of the cluster $\inpcl{i} $. If $\surp{\inpcl{i} } \leq p/2$ we call these clusters as $\cut$ clusters and if $\surp{\inpcl{i} } > p/2$ we call these clusters as $\merge$ clusters. Thus

\begin{itemize}
    \item $\cut = \{\inpcl{i} \mid \surp{\inpcl{i}} \leq p/2\}$.
    \item $\merge = \{\inpcl{i} \mid \surp{\inpcl{i}} > p/2\}$.
\end{itemize}

We first sort the clusters $\inpcl{i} \in \merge$ based on the value of $(\ccostf{\inpcl{i}} - \mcostf{\inpcl{i}})$ in \emph{non-increasing} order. Our algorithm $\algog$ cuts a set of $\surp{\inpcl{i} }$ blue points from each cluster $\inpcl{i}  \in \cut$ and merges them to clusters in $\merge$ sequentially starting from the beginning in the sorted list. In each cluster $\inpcl{j} \in \merge$, we merge a set of $\defi{\inpcl{j}}$ blue points. Now, we have two cases: after a certain point of time, the remaining set of clusters in $\cut$ becomes empty, and we are only left with some clusters in $\merge$; in this case, we call our subroutine $\algom$. To remove any ambiguity, we refer to the remaining clusters in $\merge$ at this point as $\merge'$, and the set of clusters created from $\cut$ after cutting the surplus points $s(D_i)$ from each cluster $D_i \in \cut$ be $\nc$. 

Another case is when the remaining set of clusters in $\merge$ becomes empty, and we are only left with some clusters in $\cut$. In this case, we call our subroutine $\algoc$. Again, To remove any ambiguity, we refer to the remaining set of clusters in $\cut$ as $\cut'$.

The subroutine $\algom$ takes two sets $\nc, \merge'$ as input. In $\algom$ we calculate the sum of $\defi{\inpcl{i}}$ for each of the remaining clusters $\inpcl{i} \in \merge'$ (where $\merge'$ is the remaining set of $\merge$ clusters). Let, $W = \sum_{D_j \in \merge'} \defi{\inpcl{j}}$. Now, each cluster $D_k \in \nc \cup \merge'$ the set $\blue{D_k}$ is divided into $(|\blue{D_k}| - \surp{D_k})/p$ subsets of size $p$ and one subset of size $\surp{D_k}$. Let us number the subset of size $\surp{D_k}$ as $0$, and all other subsets of size $p$ are numbered arbitrarily. Let us denote the $z$th subset of a cluster $\inpcl{k}$ by $W_{k,z}$.

Each of these subsets is attached with a cost, 

\begin{align}
    \kappa_0(D_j) = \surp{D_j} \left( |D_j| - \surp{D_j} \right) && \text{cutting cost for $0$th subset} \n \\
    \kappa_z(D_j) = p \left( |D_j| - (zp + \surp{D_j}) \right) && \text{cutting cost for $z$th subset of size $p$ where $z \neq 0$} \n
\end{align}

The algorithm $\algom$ is called when the remaining set of clusters are in $\merge$. In this case, ideally, we would like to merge in every cluster because, for the clusters $D_i \in \merge$, we have $\surp{D_i} > p/2$ and thus $\mcostf{D_i} < \ccostf{D_i}$ but to make in each cluster $|\blue{D_i}|$ a multiple of $p$ we must cut from some cluster. So, now we would like to cut from those clusters $D_i$ where $\ccostf{D_i} - \mcostf{D_i}$ is minimum because even if we pay the $\ccost$ for $D_i$, it is not too much compared to the $\mcost$ of $D_i$. After the $\algog$ is over, there are clusters where the number of blue points is already a multiple of $p$, and it may happen that it is better to cut of subset of size $p$ from those clusters rather than from the cluster $D_i$ which has the minimum $\ccostf{\inpcl{i}} - \mcostf{\inpcl{i}}$. Now, we describe the algorithm $\algom$ step by step. 

At each iteration of the algorithm, among the clusters $D_k \in \nc \cup \merge'$, the algorithm chooses the cluster $\inpcl{k}$ if the \textit{cost}$(W_{k,v_k})$ is minimum among all clusters. Here, $v_k$ denotes the least numbered subset that is not cut from $\inpcl{k}$. \textit{cost}$(W_{k,v_k})$ is defined as follows

\begin{align}
    \textit{cost}(W_{k,v_k}) &= \kappa_{v_k}(D_k) \, \, \text{for $v_k \geq 1$} \n \\
    &= \kappa_0(D_k) - \mcostf{D_k} \, \, \text{for $v_k = 0$}
\end{align}

We call $\textit{cost}(W_{k,v_k})$ as the cost attached with the $v_k$th subset of $D_k$.

If multiple clusters have a minimum cost, then the algorithm chooses one of them arbitrarily. After that, the algorithm cuts a subset $W_{k,v_k}$ from the cluster $\inpcl{k}$. Then it deletes $\inpcl{k}$ from the set $\merge'$ and then merges that subset sequentially starting from the first cluster in the set $\merge'$ which is sorted according to the $\ccostf{D_i} - \mcostf{D_i}$ of the clusters $D_i$. We keep on doing the above operation until the total deficit, $W$ of the clusters reaches $0$. Later, we prove that to make the total deficit zero we need to cut exactly $W/p$ such subsets. 

The subroutine $\algoc$ takes the set $\cut'$ as input. In $\algoc$, we cut a set of $\surp{\inpcl{i}}$ points from each of the clusters $\inpcl{i} \in \cut'$, and with these surplus points, we create \emph{extra} clusters of size $p$. We do this operation sequentially until the number of blue points becomes a multiple of $p$ in all the clusters. 

We provide the pseudocode of the algorithms $\algog$, $\algoc$, $\algom$ in \cref{alg:algo-for-general}, \cref{alg:algo-for-cut} and \cref{alg:algo-for-merge} respectively. Let the output of the algorithm $\algog$ be $\out$. 

\textbf{Runtime analysis of $\algog$:} The expensive step in the $\algog$ is that in the first step of the algorithm, we sort the clusters in $\merge$ which would take $O\left(|\merge| \, \, \log (|\merge|)\right) = O(n \log n)$ time. The algorithm uses a point $v \in V$ at most twice, once while cutting the point $v$ from its parent cluster $D_i \in \inpset$ (say) and another when merging the point $v$ to some cluster $D_j \neq D_i$. Hence, the cutting and merging process of the algorithm can take at most $O(n)$ time (recall, $|V|=n$). 

In the subroutine $\algom$, at each iteration, the algorithm needs to find the cluster that has the minimum cost attached to its least-numbered subset. The find-Min operation can be done in $O(1)$ by implementing a priority queue with a Fibonacci heap. Hence, overall the time complexity of $\algog$ is $O(|\merge| \, \, \log (|\merge|) + |V|) = O(n \log n)$. 


\begin{algorithm2e}
\DontPrintSemicolon
\caption{$\algog(\inpset)$}\label{alg:algo-for-general}
\KwData {Input set of clusters $\inpset$, in each $\inpcl{i}  \in \inpset$ the surplus of $\inpcl{i} $, $\surp{\inpcl{i} }$ is within $0 < \surp{\inpcl{i} } < p$}
\KwResult{A set of clusters $\out$, such that in each cluster $\outcl{i} \in \m{T}$ the number of blue points is a multiple of $p$.} 
   $\cut, \merge, \nc, \outg \gets \emptyset$
   
   \For{$\inpcl{i}  \in \inpset$}
   {
        \lIf{$ \surp{\inpcl{i}}=0 $}{
            add $ \inpcl{i}$ to the set $ \nc $\;
        } 
        \ElseIf{$\surp{\inpcl{i} } < p/2$}
        {
             add $\inpcl{i} $ to the set $\cut$ \;
        }
        \Else
        {
            add $\inpcl{i} $ to the set $\merge$ \;
        }
   }
   Sort the clusters in $\inpcl{m} \in \merge$ based on their $(\ccostf{\inpcl{m}} - \mcostf{\inpcl{m}})$ in non-increasing order \;
   
   \While{$\cut \neq \emptyset$ and $\merge \neq \emptyset$}{
        \For{$\inpcl{i} \in \cut$} {
            \For{$\inpcl{j} \in \merge$} {
                Let, $k = min(\surp{\inpcl{i}}, \defi{\inpcl{j}})$. \;
                
                cut a set of $k$ blue points from $\inpcl{i}$ and add to $\inpcl{j}$ \;
                \If{$k = \surp{\inpcl{i}}$}{
                    $\cut = \cut \setminus \{\inpcl{i}\}$. \;
                    $\nc = \nc \cup \inpcl{i}$ \;
                
                }
                \If{$k = \defi{\inpcl{j}}$}{
                    $\merge = \merge \setminus \{\inpcl{j}\}$. \; 
                    $\outg = \outg \cup \inpcl{j}$ \;
                    \label{step:cut-merge}
                }
            }
        }
   }
   \If{$\cut = \emptyset$} {
       \Return{$\outg \cup \algom(\nc, \merge)$} \;
    }
    \Else{
         \Return{$\outg \cup \nc \cup \algoc(\cut)$} \;
    }
\end{algorithm2e}

\begin{algorithm2e}
\DontPrintSemicolon
\caption{$\algoc(\cut')$}\label{alg:algo-for-cut}
\KwData{Set of clusters $\cut'$, in each cluster $\gencl{i} \in \cut'$ the surplus of $\gencl{i}$, $\surp{\gencl{i}} \leq p/2$.}
\KwResult{Set of clusters $\outcut$, such that in each cluster $\outcl{i} \in \outcut$ the number of blue points is a multiple of $p$. }
ExtraSum = $\sum \surp{\inpcl{i} }$ \;

$\pcard = \frac{ExtraSum}{p}$ \;

Initialize $\pcard$ many sets $\pcl{1}, \pcl{2}, \ldots \pcl{\pcard}$ to $\emptyset$ \; 
\tcp{$\pcard$ many extra clusters}
Initialize $\pcard$ many variables $\ell_1, \ell_2, \ldots \ell_n$ to $p$ \;
\tcp{leftover space of extra clusters}
Boolean $flag = 0$ \;
\For{$\gencl{i} \in \cut'$} {
    \For{$j = 1$ to $n$} {
             Add $k = min ( \ell_j, \surp{\gencl{i} })$ many blue points from $\gencl{i} $ to the cluster $\pcl{j}$ \;
             
             $\ell_j = \ell_j - k$ \;
    }
}
\Return{$\cut' \cup \{\pcl{1}, \pcl{2}, \ldots, \pcl{\pcard}\}$}
\end{algorithm2e}

\begin{algorithm2e}
\DontPrintSemicolon
\caption{$\algom(\textsc{newcut},\merge')$}\label{alg:algo-for-merge}
\KwData{Two clustering $\textsc{newcut}$ and $\merge'$ such that for all clusters $D_i \in \nc$ we have $p \mid |\blue{D_i}|$ and for clusters $D_j \in \merge'$ we have $s(D_j)>p/2$ respectively. In $\merge'$, the clusters $D_j \in \merge'$ are sorted based on their $(\ccostf{D_j} - \mcostf{D_j})$}
\KwResult{A clustering $\out$, such that for all clusters $T_i \in \out$, $p \mid \blue{T_i}$.}
  $W\gets \sum_{D_j\in\merge'}\defi{D_j}$ \;
  \For{$D_k \in (\nc \cup \merge')$} {
        Initialize a variable $v_k$ to $0$\;
  }

  \While{$W \neq 0$} {
    Take the cluster $D_k \in (\nc \cup \merge')$ for which $\textit{cost}(W_{k,v_k})$ is minimum\;
    \tcp{ $\textit{cost}(W_{k,v_k}) = \kappa_{v_k}(D_k)$ if $v_k \geq 1$, for $v_k = 0$ $\textit{cost}(W_{k,v_k}) = \kappa_{0}(D_k) - \mcostf{D_k}$}
    \While{$\textit{size}(W_{k,v_k}) \neq 0$} {
        \tcp{ $\textit{size}(W_{k,v_k})$ is the size $v_k$th subset of $D_k$ }
        \For{$D_j \in \merge'$}{
         $\gamma = \min(\defi{D_j}, \textit{size}(W_{k,v_k}))$\;
         cut $\gamma$ many blue points from $D_k$ and merge to $D_j$\;
         \If{$\gamma = \defi{D_j}$}
         {
            $\textit{size}(W_{k,v_k}) = \textit{size}(W_{k,v_k}) - \gamma$\;
         }
         \If{$\gamma = \textit{size}(W_{k,v_k}))$}
         {
            $\defi{D_j} = \defi{D_j} - \gamma$ \;
            $W = W - p$ \;
            $v_{k} = v_{k} + 1$\;
         }
         
        }
    }
  }
  \For{$D_j \in \merge'$} {
    \If{$\defi{D_j} = 0$}{
    update $\merge' = \merge' \setminus \{D_j\}$
    }
  }
  \Return{$\out = \nc \cup \merge'$}
\end{algorithm2e}

\subsection{Approximation Guarantee}\label{subsec:analysis}
In this section, we prove that $\dist(\inpset, \out) \leq 3.5 \, \dist(\inpset, \mop)$ which is stated formally in \cref{thm:main-multiple-of-p}

Our proof is divided into the following two cases:
\begin{itemize}
    \item \emph{Merge} case: When $\sum_{\inpcl{i} \in \cut} \surp{\inpcl{i}} \leq \sum_{\inpcl{j} \in \merge} \defi{\inpcl{j}}$; and
    \item \emph{Cut} case: When $\sum_{\inpcl{i} \in \cut} \surp{\inpcl{i}} > \sum_{\inpcl{j} \in \merge} \defi{\inpcl{j}}$.
\end{itemize}

In \cref{subsubsec:merge-case}, we argue that in the merge case $\dist(\inpset, \out) \leq 3 \, \dist(\inpset, \mop)$.

\begin{restatable} {lemma} {mainmerge}
    If $\sum_{\inpcl{i} \in \cut} \surp{\inpcl{i}} \leq \sum_{\inpcl{j} \in \merge} \defi{\inpcl{j}}$, the algorithm $\algog$ produces a $\mopdef$ $\out$ such that $\dist(\inpset, \out) \leq 3 \; \dist(\inpset, \mop)$.
    \label{lem:main-merge-case}
\end{restatable}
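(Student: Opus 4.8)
The plan is to bound $\dist(\inpset,\out)$ by decomposing the cost of $\algog$ (in the merge case $\sum_{D_i\in\cut}\surp{D_i}\le\sum_{D_j\in\merge}\defi{D_j}$) into three pieces, charging them against a per-cluster lower bound on $\opt=\dist(\inpset,\mop)$, and showing that the first two pieces together are at most $2\,\opt$ while the third is at most $\opt$. Tracing the execution of $\algog$, $\dist(\inpset,\out)$ equals: (i) $\sum_{D_i\in\cut}\ccostf{D_i}$ --- the cost of detaching the surplus $S_i$ of size $\surp{D_i}$ from the rest of $D_i$ --- plus a lower-order overhead of order $\sum_{D_i\in\cut}\surp{D_i}^2$ for breaking some $S_i$ across consecutive merge targets and for placing surpluses of distinct origins in a common new cluster, which, since $\surp{D_i}\le\min(|\blue{D_i}|,p/2)$, is absorbed by the first term; (ii) the attachment cost $\sum_{D_j\in\merge}\mcostf{D_j}$ of joining the foreign blue points to merge clusters; and (iii) the total cutting cost of the extractions performed inside $\algom$, of which there are exactly $W/p$ by the invariant that a single extraction --- a size-$p$ block cut from a cluster of $\nc$, or the surplus cut out of a cluster still in $\merge'$ --- removes exactly $p$ units of the residual deficit $W$, and $p\mid W$ because a balanced (indeed fair) clustering exists.

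For the lower bound I would argue cluster by cluster. Fix an unbalanced $D_i$, let $M_{k_i}\in\mop$ be the cluster holding the largest piece of $D_i$, let $b^{\mathrm{out}}_i$ be the number of blue points of $D_i$ placed outside $M_{k_i}$, and let $f_i$ be the number of foreign blue points absorbed by $M_{k_i}$; since $|\blue{M_{k_i}}|\equiv 0$ and $|\blue{D_i}|\equiv\surp{D_i}\pmod p$, we get $f_i-b^{\mathrm{out}}_i\equiv\defi{D_i}\pmod p$. A short case split on whether $b^{\mathrm{out}}_i$ equals $0$, lies in $\{1,\dots,\surp{D_i}-1\}$, or is at least $\surp{D_i}$ then shows that the disagreement pairs internal to $D_i$ together with the foreign-attachment pairs of $M_{k_i}$ --- the latter charged only to the original cluster contributing the majority of $M_{k_i}$, so that the charges are pairwise disjoint across clusters --- have total size at least a constant fraction of the cost $\algog$ pays at $D_i$, i.e. of $\ccostf{D_i}$ for a cut cluster and of $\mcostf{D_j}$ for a merge cluster (the hypothesis $\surp{D_j}>p/2$ for merge clusters is used here to exclude the cheap ``discard the surplus'' alternative and to relate $\ccostf{D_j}$ with $\mcostf{D_j}$). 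Summing, $\opt\ge c\sum_i(\text{cost }\algog\text{ pays at }D_i)$, which bounds pieces (i) and (ii) up to a factor of $2$.

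The genuine obstacle is piece (iii): inside $\algom$ we are forced to cut from clusters where cutting is strictly costlier than merging, so the per-cluster minima no longer apply directly and we must show that $\mop$ itself pays a comparable price for ``producing'' the $W$ extra movable blue points. I would establish this by an exchange argument aligned with the greedy order. Since we are in the merge case, the at most $\sum_{D_i\in\cut}\surp{D_i}$ surplus units available cannot cover all the merge deficits, so any balanced clustering --- $\mop$ in particular --- must either split a merge cluster (paying at least the cut cost of the detached part) or draw blue points out of an otherwise intact cluster; in each situation one exhibits inside $\mop$ a bundle of disagreement pairs at least as large as the cost of the matching $\algom$ extraction and disjoint from the pairs used in the lower bound. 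Because $\algom$ performs its extractions in non-decreasing order of attached cost --- $\kappa_{v_k}(D_k)$ for a size-$p$ block and $\kappa_0(D_k)-\mcostf{D_k}$ for a surplus block --- and there are exactly $W/p$ of them, they can be matched one-to-one with the $W/p$ ``production events'' of $\mop$, yielding piece (iii) $\le\opt$. The points to be handled with care are keeping these extraction charges disjoint from the charges behind the lower bound and verifying that the greedy ordering makes the $t$-th extraction no costlier than the $t$-th production in $\mop$ (the two extraction sub-cases are depicted in \cref{fig:merge-p}); granting these, $\dist(\inpset,\out)$ is the sum of pieces (i), (ii), (iii), hence at most $2\,\opt+\opt=3\,\opt$.
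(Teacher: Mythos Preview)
Your outline shares the key ingredients with the paper --- the $W/p$ extraction count, an exchange argument comparing the algorithm's extractions to analogous events in $\mop$, and per-cluster lower bounds on $\opt$ --- but the accounting has two concrete gaps that keep the pieces from summing to $3\opt$. First, the absorption claim in piece (i) is false: take $D_i$ consisting of $s<p/2$ blue points and nothing else; then $\ccostf{D_i}=s(|D_i|-s)=0$, yet the overhead for splitting these $s$ points across merge targets can be $\Theta(s^2)$. The overhead is not dominated by $\sum_{D_i\in\cut}\ccostf{D_i}$ and must be charged to $\opt$ separately (e.g.\ via $\opt_{D_i}\ge\tfrac12 s_i(p-s_i)$). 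Second, piece (iii) as you define it contains only the detachment cost $\kappa_z(D_k)$ of each $\algom$ extraction; it omits the further fragmentation of each extracted block across several merge targets and the mixing of fragments from different blocks inside a single target. These costs --- the paper's $\costthree{\out}$ and $\costfour{\out}$ --- are of the same order as the cut costs and are not covered by (i), (ii), or (iii). Once they are put back in, your bounds $A+\sum_{D_j\in\merge}\mcostf{D_j}\le 2\opt$ and (iii)$\,\le\opt$ only give $\dist(\inpset,\out)\le 3\opt+\costthree{\out}+\costfour{\out}$, which is $5\opt$ in the worst case, not $3\opt$.

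The paper obtains the factor $3$ by a different split. It first proves a structural lemma about $\mop$: every $D_i$ has at most one piece of size $\ge p$ in $\mop$, that piece coincides with an entire cluster of $\mop$ and contains all of $\red{D_i}$, and every other piece has size $<p$. This structure lets one \emph{define} which size-$p$ and size-$s_i$ subsets $\mop$ ``cuts'' from each $D_i$, show that $\mop$ makes exactly $W_g/p$ such cuts over \emph{all} clusters (the same number as the algorithm across both the general phase and $\algom$, not only the latter), and conclude by greedy optimality that the algorithm's entire cut-plus-merge cost satisfies $\costone{\out}+\costtwo{\out}\le\opt$. The overhead $\costthree{\out}+\costfour{\out}$ is then bounded by $2\opt$ via the per-cluster inequalities. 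Your exchange idea is exactly the right intuition for the first half, but restricting it to the $W/p$ extractions inside $\algom$ while using a loose per-cluster bound on the general phase cannot recover the tight inequality $\costone{\out}+\costtwo{\out}\le\opt$; and the ``short case split on $b_i^{\mathrm{out}}$'' you sketch does not by itself supply the structural picture needed to count $\mop$'s cuts globally --- the paper establishes it through a sequence of separate exchange sub-claims.
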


In \cref{subsec:cut-case}, we argue that in the cut case $\dist(\inpset, \out) \leq 3.5 \, \dist(\inpset, \mop)$.

\begin{restatable} {lemma} {maincut}
    If $\sum_{\inpcl{i} \in \cut} \surp{\inpcl{i}} > \sum_{\inpcl{j} \in \merge} \defi{\inpcl{j}}$, the algorithm $\algog$ produces a $\mopdef$ $\out$ such that $\dist(\inpset, \out) \leq 3.5 \; \dist(\inpset, \mop)$.
    \label{lem:main-cut-case}
\end{restatable}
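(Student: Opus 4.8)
The plan is to charge the cost $\dist(\inpset,\out)$ incurred by $\algog$ against $\opt\coloneqq\dist(\inpset,\mop)$. First I would pin down the structure of the output in this regime. Because $\sum_{\inpcl{i}\in\cut}\surp{\inpcl{i}}>\sum_{\inpcl{j}\in\merge}\defi{\inpcl{j}}$, the cut--merge loop of $\algog$ terminates with $\merge$ emptied: every original merge cluster has received exactly its deficit in blue points and has moved to $\outg$, every cut cluster whose surplus was fully consumed has moved to $\nc$ (and now has a blue count divisible by $p$), and the remaining cut clusters form $\cut'$, of which at most one was only partially used. The procedure $\algoc(\cut')$ (\cref{alg:algo-for-cut}) then strips the leftover surplus off each cluster of $\cut'$ and packs all of it into $\pcard\coloneqq\big(\sum_{\inpcl{i}\in\cut}\surp{\inpcl{i}}-\sum_{\inpcl{j}\in\merge}\defi{\inpcl{j}}\big)/p$ monochromatic blue clusters $\pcl{1},\dots,\pcl{\pcard}$, each of size exactly $p$; here $\pcard$ is an integer because the $p{:}1$ population ratio forces $|\blue{V}|$, and hence $\sum_i\surp{\inpcl{i}}$, to be a multiple of $p$.

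Next I would decompose $\dist(\inpset,\out)$ by attributing every destroyed or created pair to the input cluster it comes from, obtaining three groups: the pairs handled inside the merge clusters (destroyed ones plus those created when their deficits are filled), the pairs destroyed inside the clusters moved to $\nc$, and the pairs destroyed inside the clusters of $\cut'$ together with all pairs created inside $\pcl{1},\dots,\pcl{\pcard}$. The first group is bounded, exactly as in the proof of \cref{lem:main-merge-case}, by $\sum_{\inpcl{j}\in\merge}\mcostf{\inpcl{j}}$ plus the multi-donor cross terms already controlled in \cref{subsubsec:merge-case}; the second equals $\sum_{\inpcl{i}\in\nc}\ccostf{\inpcl{i}}$; the third equals $\sum_{\inpcl{i}\in\cut'}\ccostf{\inpcl{i}}$ plus a \emph{packing overhead} that counts cross-donor pairs inside the $\pcl{\ell}$'s and the cost of splitting one surplus across two of them. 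I would then import the lower bounds on $\opt$ proved in \cref{subsubsec:merge-case}: any balanced clustering must spend a constant fraction of $\mcostf{\inpcl{j}}$ to rebalance each merge cluster $\inpcl{j}$, and, since $\surp{\inpcl{i}}\le p/2$ gives $\ccostf{\inpcl{i}}\le\mcostf{\inpcl{i}}$ for cut clusters, a constant fraction of $\ccostf{\inpcl{i}}$ for each cut cluster (carefully accounting for the fact that one optimal cluster may simultaneously rebalance several input clusters). Matching these cluster by cluster reproduces the factor $3$ of the merge case for the $\mcost$/$\ccost$ terms.

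The main obstacle — and the source of the increase from $3$ to $3.5$ — is bounding the packing overhead of $\algoc$. I would prove a self-contained lemma: when surpluses $\{\surp{\inpcl{i}}\}_{\inpcl{i}\in\cut'}$, each at most $p/2$ and summing to $\pcard\,p$, are packed as in \cref{alg:algo-for-cut} into $\pcard$ clusters of size exactly $p$, the total packing overhead is at most one half of what $\opt$ must pay to rebalance the same blue points (up to lower-order splitting terms that are absorbed easily). The delicate point is justifying the target size $p$ rather than a larger multiple: growing a bin past $p$ forces extra blue points to be merged into it, and a short case analysis — with the surplus configuration $p/3,\,p/3,\,p/2-1$ illustrating both that splits are sometimes unavoidable and that larger bins cost strictly more — shows this never beats size exactly $p$; furthermore, since every surplus is at most $p/2$, each bin collects pieces from at least two distinct input clusters, so each cross-donor pair it creates can be charged against a surplus-times-remainder quantity that any balanced clustering is already forced to pay on those clusters. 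The remaining work is to check that the greedy filling in \cref{alg:algo-for-cut} does not generate avoidable splits and that the per-bin charges sum to at most a $1/2$-fraction of the relevant part of $\opt$. Combining this with the merge-case accounting for the other two groups — which proceeds exactly as in \cref{lem:main-merge-case} and contributes the factor $3$ — then yields $\dist(\inpset,\out)\le 3.5\,\opt$, the extra $1/2$ being entirely due to the $\algoc$ packing step; this proves \cref{lem:main-cut-case}.
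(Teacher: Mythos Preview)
Your high-level plan --- decompose the algorithm's cost and charge each piece against a matching lower bound on $\opt$ --- is right, but the way you obtain the constant $3.5$ has a genuine gap. You claim that ``the merge-case accounting for the other two groups \dots\ proceeds exactly as in \cref{lem:main-merge-case} and contributes the factor $3$,'' with the packing step supplying the remaining $0.5$. This does not work. The merge case achieves its factor $3$ as $1+2$, and the $1$ (namely $\costone{\out}+\costtwo{\out}\le\opt$, \cref{clm:merge-case-lp-bound}) relies crucially on the algorithm's \emph{minimum-cost selection} of which size-$p$ subsets to cut, which is matched against an integer program that $\mop$ must also solve. In the cut case no such selection occurs --- the algorithm simply strips every surplus and fills every deficit --- so that argument is unavailable and $\costone{\out}+\costtwo{\out}\le\opt$ is false in general. (Your pointer to lower bounds ``proved in \cref{subsubsec:merge-case}'' should be to \cref{lem:main-structure-of-M*} in \cref{subsubsec:structure-of-M*}.)

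The paper's decomposition is instead $3.5=2+1.5$, driven by two \emph{different} lower bounds on $\opt$. First, $\costone{\out}+\costtwo{\out}=\sum_{\inpcl{i}\in\cut}\ccostf{\inpcl{i}}+\sum_{\inpcl{j}\in\merge}\mcostf{\inpcl{j}}\le 2\,\opt$ follows directly from the cluster-wise bounds of \cref{lem:main-structure-of-M*} (giving $\opt_{\inpcl{i}}\ge\ccostf{\inpcl{i}}$ on $\cut$ and $2\,\opt_{\inpcl{j}}\ge\mcostf{\inpcl{j}}$ on $\merge$). Second, $\costthree{\out}+\costfour{\out}\le 1.5\,\opt$ is proved against the separate lower bound $\opt\ge\sum_i\tfrac12 s_i(p-s_i)$ (\cref{lem.opt.si.times.p.min.si}), and the $1.5$ covers \emph{all} cross-donor and intra-surplus splitting terms --- those arising inside filled merge clusters as well as inside the extra size-$p$ clusters --- not just the $\algoc$ packing. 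Your claim that the packing overhead alone is $\le\tfrac12\opt$ is not supported: the paper only bounds the cut$'$ part of $\costthree+\costfour$ by $\tfrac32\sum_{\inpcl{i}\in\cut'}\tfrac{s_i(p-s_i)}{2}$ (\cref{clm:three half cost}), which in the worst case is $\tfrac32\opt$, not $\tfrac12\opt$. So the accounting you propose does not close, and you need the $2+1.5$ split together with the second lower bound to recover the stated $3.5$.
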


It is straightforward to see that to conclude the proof of \cref{thm:main-multiple-of-p}, it suffices to show \cref{lem:main-merge-case} and \cref{lem:main-cut-case}. In \cref{subsubsec:structure-of-M*}, we discuss some important properties of $\mop$ that we need to prove \cref{lem:main-merge-case} and \cref{lem:main-cut-case}. In \cref{subsubsec:merge-case} we prove \cref{lem:main-merge-case} and in \cref{subsec:cut-case} we prove \cref{lem:main-cut-case}.

\subsubsection{\texorpdfstring{Structure of $\mop$}{Structure of MOP}} \label{subsubsec:structure-of-M*}
To prove \cref{lem:main-merge-case} and \cref{lem:main-cut-case}, we need an important property about the structure of $\mop$. We need to define some notations to state that important property of $\mop$.



Consider any point $v \in V$. Suppose in the clustering $\inpset$, $v \in D_i$, for some $i$, and in the clustering $\mop$, $v \in T^*_j$, for some $j$. Note, the number of pairs $(u,v)$ such that $u \in \inpcl{i}$ but $v \notin \mopcl{j}$ is given by

\begin{align}
\sum\limits_{\mopcl{k} \neq \mopcl{j}} \abs{\inpcl{i} \cap \mopcl{k}} \label{equn:expression-1}
\end{align}

Similarly the number of pairs $(u,v)$ such that $u \notin \inpcl{i}$ but $v \in \mopcl{j}$ is given by

\begin{align}
\sum\limits_{\inpcl{k} \neq \inpcl{i}} \abs{\inpcl{k} \cap \mopcl{j}} \label{equn:expression-2}
\end{align}

Thus by the definition of the distance function, we have
\begin{align}
\dist(\inpset, \mop) = \sum\limits_{v \in V}1/2 \cdot \left(\sum\limits_{k \mid \mopcl{k} \neq \mopcl{j}} \abs{\inpcl{i} \cap \mopcl{k}} + \sum\limits_{k \mid \inpcl{k} \neq \inpcl{i}} \abs{\inpcl{k} \cap \mopcl{j}}\right) \label{equn:consensus-metric}
\end{align}.

The $1/2$ comes before the expressions \ref{equn:expression-1} and \ref{equn:expression-2} because we are counting a pair $(u,v)$ twice inside these expressions. $\dist(\inpset, \mop)$ counts the total number of pairs that are together in $\inpset$ but separated by $\mop$ and the pairs that are in different clusters in $\inpset$ but together in $\mop$. 


Let us denote the cost paid by $\mop$ for a vertex $v \in V$ that belongs to $\inpcl{i} \in \inpset$ for some $i$ and also to $\mopcl{j} \in \mop$ for some $j$ by the notation $\opt_v$ where

\[ \opt_v = 1/2 \cdot \left(\sum\limits_{k \mid \mopcl{k} \neq \mopcl{j}} \abs{\inpcl{i} \cap \mopcl{k}} + \sum\limits_{k \mid \inpcl{k} \neq \inpcl{i}} \abs{\inpcl{k} \cap \mopcl{j}} \right)\].

Let us denote the cost paid by $\mop$ for a cluster $\inpcl{i} \in \inpset$ by

\[ \opt_{\inpcl{i}} = \sum_{v \in \inpcl{i}} \opt_v\].

Now, we are ready to state the important property of $\mop$. We show that if for a cluster $\inpcl{i} \in \inpset$, if $\surp{\inpcl{i}} \leq p/2$ then

\[
    \opt_{\inpcl{i}} \geq s(D_i) (|D_i| - s(D_i)) + \frac{1}{2} s(D_i)(p - s(D_i)) 
\]
that is $\mop$ must pay at least the cost $s(D_i) (|D_i| - s(D_i)) + 1/2 s(D_i)(p - s(D_i))$ for the cluster $\inpcl{i}$ if the surplus of $\inpcl{i}$ is less than or equal to $p/2$. 

Again if $\surp{\inpcl{i}} > p/2$ then 

\[
    \opt_{\inpcl{i}} \geq (p - s(D_i)) (|D_i| - s(D_i)) + \frac{1}{2} s(D_i)(p - s(D_i)) 
\] 

that is $\mop$ must pay at least the cost $(p - s(D_i)) (|D_i| - s(D_i)) + \frac{1}{2} s(D_i)(p - s(D_i))$ for the cluster $\inpcl{i}$ if the surplus of $\inpcl{i}$ is greater than $p/2$. 

We state this formally in the following lemma.

\begin{lemma} \label{lem:main-structure-of-M*}
    For a cluster $\inpcl{i} \in \inpset$ we have
    \begin{enumerate}
        \item if $\surp{\inpcl{i}} \leq p/2$ then $\opt_{\inpcl{i}} \geq s(D_i) (|D_i| - s(D_i)) + \frac{1}{2} s(D_i)(p - s(D_i))$.
        \item if $\surp{\inpcl{i}} > p/2$ then $\opt_{\inpcl{i}} \geq (p - s(D_i)) (|D_i| - s(D_i)) + \frac{1}{2} s(D_i)(p - s(D_i)) $.
    \end{enumerate}
\end{lemma}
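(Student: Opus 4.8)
I would fix an (arbitrary) closest balanced clustering $\mop$ and an input cluster $\inpcl{i}$, write $s=\surp{\inpcl{i}}$ (so $|\blue{\inpcl{i}}|\equiv s\pmod p$), and analyze the cost $\opt_{\inpcl{i}}$ attributed to $\inpcl{i}$. Let $\mopcl{j_1},\dots,\mopcl{j_t}$ be the clusters of $\mop$ meeting $\inpcl{i}$, and set $n_\ell=|\inpcl{i}\cap\mopcl{j_\ell}|$, of which $b_\ell$ are blue, and $f_\ell=|\mopcl{j_\ell}|-n_\ell$ (the ``foreign'' points of $\mopcl{j_\ell}$), of which $\phi_\ell$ are blue. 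Substituting into the definition of $\opt_v$ and summing over $v\in\inpcl{i}$, I would first establish the identity
\[
\opt_{\inpcl{i}} \;=\; \sum_{1\le \ell<\ell'\le t} n_\ell n_{\ell'} \;+\; \frac12\sum_{\ell=1}^{t} n_\ell f_\ell ,
\]
that is, the intra-cluster splitting cost of $\inpcl{i}$ (charged in full) plus one half of the inter-cluster cost between $\inpcl{i}$ and all foreign points. Since $n_\ell\ge b_\ell$ and $f_\ell\ge\phi_\ell$, it then suffices to lower bound $\sum_{\ell<\ell'}n_\ell n_{\ell'}+\tfrac12\sum_\ell n_\ell\phi_\ell$.

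Next I would bring in the balance of $\mop$: every cluster meeting $\inpcl{i}$ satisfies $b_\ell+\phi_\ell=|\blue{\mopcl{j_\ell}}|\equiv 0\pmod p$, while $\sum_\ell b_\ell\equiv s\pmod p$. Two consequences drive the bound. (i) If $b_\ell\not\equiv 0\pmod p$ then $\phi_\ell\ge p-(b_\ell\bmod p)\ge 1$, so that cluster pairs each of its $b_\ell$ blue points of $\inpcl{i}$ with at least $p-(b_\ell\bmod p)$ foreign blue points. (ii) The ``surplus mass'' $\sum_\ell(b_\ell\bmod p)$ is at least $s$ and is $\equiv s\pmod p$. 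Using (ii) I would argue that, however $\mop$ scatters the blue points of $\inpcl{i}$, a group of at least $s$ of them must land in clusters that are collectively forced to recruit at least $p-s$ foreign blue points to reach a multiple of $p$, contributing at least $\tfrac12 s(p-s)$ to the inter term; and I would use the intra term, rewritten as $\tfrac12(|\inpcl{i}|^2-\sum_\ell n_\ell^2)$, to recover the $s(|\inpcl{i}|-s)$ (resp.\ $(p-s)(|\inpcl{i}|-s)$) factor corresponding to separating off that surplus group (resp.\ to the foreign points entering the bulk of $\inpcl{i}$).

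Finally I would split on the two regimes. When $s\le p/2$, the cheap way to repair $\inpcl{i}$ is to cut off a surplus group of size $\equiv s\pmod p$ and complete it with $\ge p-s$ foreign blue, targeting $s(|\inpcl{i}|-s)+\tfrac12 s(p-s)$; when $s>p/2$, the cheap way is to keep $\inpcl{i}$ together and merge in $p-s$ foreign blue, targeting $(p-s)(|\inpcl{i}|-s)+\tfrac12 s(p-s)$. The main obstacle, I expect, is proving these are genuine lower bounds over \emph{all} admissible $(n_\ell,b_\ell,\phi_\ell)$, not merely for the two extremal configurations. I would attack this with (a) a rearrangement/convexity step showing that merging two parts of $\inpcl{i}$ lying in different clusters of $\mop$ never raises $\sum_{\ell<\ell'}n_\ell n_{\ell'}$ by more than the $p$-divisibility slack lets one save on the inter terms, thereby reducing to $\inpcl{i}$ meeting at most two clusters of $\mop$; and (b) the global optimality of $\mop$, which excludes the superficially cheaper ``merge although $s\le p/2$'' configuration --- if $\inpcl{i}$ were merged in $\mop$ despite $s\le p/2$, then cutting its surplus instead (and routing those blue points to where foreign blue would otherwise have been drawn) strictly lowers the total distance, contradicting that $\mop$ is a closest balanced clustering. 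Steps (a) and (b), together with careful bookkeeping of the factor $\tfrac12$ that distinguishes fully-charged intra-cluster pairs from half-charged inter-cluster pairs, should yield both inequalities; no summation over clusters is needed here, since the statement is per cluster.
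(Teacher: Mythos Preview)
Your starting identity $\opt_{\inpcl{i}}=\sum_{\ell<\ell'}n_\ell n_{\ell'}+\tfrac12\sum_\ell n_\ell f_\ell$ is correct and is the paper's starting point as well. From there the approaches diverge sharply.

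The paper proceeds purely combinatorially, using only that $\mop$ is balanced---never that it is closest---and never reducing to two parts. It isolates two stand-alone arithmetic lemmas and applies them term by term: a mod-sum inequality showing $\sum_\ell \alpha_\ell(p-\alpha_\ell)\ge s(p-s)$ whenever the $\alpha_\ell\in[p]$ satisfy $\sum_\ell \alpha_\ell\equiv s\pmod p$, which delivers the $\tfrac12 s(p-s)$ contribution; and a direct case analysis on the largest part $n_{\max}$ that lower-bounds $\sum_{\ell<\ell'}n_\ell n_{\ell'}+\sum_\ell(n_\ell-b_\ell)q_\ell$ by $s(|\inpcl{i}|-s)$ or $(p-s)(|\inpcl{i}|-s)$, where $q_\ell=(p-b_\ell)\bmod p$. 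No rearrangement step and no appeal to $\mop$ being optimal appears anywhere.

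Your step~(b) is the real gap. The rerouting you sketch does not yield a strict decrease in general: the $p-s$ foreign blue points entering $\inpcl{i}$ came from some other cluster $\inpcl{j}$ that was cut, and swapping which of $\inpcl{i},\inpcl{j}$ is cut versus merged can leave the total distance unchanged (e.g.\ two clusters of equal size and equal surplus), so optimality of $\mop$ does not exclude the merge-into-$\inpcl{i}$ configuration. More seriously, your extremal computation for $s>p/2$ is inconsistent with the half-charge you correctly derived: keeping $\inpcl{i}$ intact and merging $p-s$ foreign points contributes only $\tfrac12|\inpcl{i}|(p-s)$ to $\opt_{\inpcl{i}}$, not $(p-s)(|\inpcl{i}|-s)+\tfrac12 s(p-s)$, and the former is strictly smaller whenever $|\inpcl{i}|>s$. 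So the configuration you call ``cheap'' already undershoots the target under your own accounting, and step~(b) says nothing about this regime. Your step~(a) is likewise too vague to close the shortfall; the paper's case-analysis proposition is the piece doing the real work, and it is a direct inequality on the numbers $(n_\ell,b_\ell,q_\ell)$, not a rearrangement or a reduction to few parts.
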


For proving \cref{lem:main-structure-of-M*}, we need the help of the following proposition.

\begin{proposition}\label{prop:mod-sum-ineq}
    Consider any integer $p \ge 2$, and $\alpha_1,\ldots,\alpha_t \in [p]$, for $t \ge 1$. Let $ s = \sum_{i=1}^t \alpha_i \mod p$. Then
    $\sum_{i=1}^t \alpha_i \left( p - \alpha_i \right) \ge s \left(p-s\right)$.
\end{proposition}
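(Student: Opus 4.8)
The plan is to prove the inequality by induction on $t$, after a trivial normalization. First I would observe that one may assume $\alpha_i \in \{0,1,\dots,p-1\}$ for every $i$: replacing any $\alpha_i$ by $\alpha_i \bmod p$ leaves $s = \sum_i \alpha_i \bmod p$ unchanged, and it also leaves $\alpha_i(p-\alpha_i)$ unchanged, since the only value affected by the reduction is $\alpha_i=p$, for which $p(p-p)=0=0\cdot(p-0)$. So from now on all $\alpha_i\in\{0,\dots,p-1\}$.

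For the base case $t=1$ we have $s=\alpha_1$ and the claimed inequality holds with equality. For the inductive step, set $s'=\sum_{i=1}^{t-1}\alpha_i \bmod p$; the inductive hypothesis gives $\sum_{i=1}^{t-1}\alpha_i(p-\alpha_i)\ge s'(p-s')$, so it suffices to establish the following two-variable claim: for any $a,b\in\{0,\dots,p-1\}$, writing $c=(a+b)\bmod p$, one has $a(p-a)+b(p-b)\ge c(p-c)$; one then applies it with $a=s'$ and $b=\alpha_t$.

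To prove the two-variable claim, note that since $0\le a+b\le 2p-2$, either $a+b<p$, in which case $c=a+b$, or $a+b\ge p$, in which case $c=a+b-p$. In the first case a direct expansion gives $a(p-a)+b(p-b)-c(p-c)=2ab\ge 0$. In the second case, an equally direct expansion gives $a(p-a)+b(p-b)-c(p-c)=2(p-a)(p-b)\ge 0$, where nonnegativity uses $a,b\le p-1<p$. This completes the induction and hence the proof.

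The computations here are entirely elementary, so there is no genuine obstacle; the only point requiring a little care is the normalization step and the observation that the ``carry'' in forming $(a+b)\bmod p$ amounts to subtracting at most a single $p$. As a sanity check and alternative route, I note that writing $\sum_i\alpha_i=kp+s$ with $k\ge 0$ an integer, the claimed inequality is equivalent to $\sum_i\alpha_i^2\le kp^2+s^2$; this in turn follows because $x\mapsto x^2$ is convex, so the maximum of $\sum_i\alpha_i^2$ over the polytope $\{\alpha:\ \sum_i\alpha_i=kp+s,\ 0\le\alpha_i\le p\}$ is attained at a vertex, and every vertex of this polytope has $k$ coordinates equal to $p$, one coordinate equal to $s$, and all remaining coordinates equal to $0$, giving value exactly $kp^2+s^2$.
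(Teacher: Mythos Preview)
Your proof is correct and follows essentially the same route as the paper: induction on $t$, reducing to the two-term inequality $a(p-a)+b(p-b)\ge c(p-c)$ with $c=(a+b)\bmod p$, handled by a carry/no-carry case split. Your carry-case computation is in fact cleaner than the paper's---you factor the difference directly as $2(p-a)(p-b)$, whereas the paper uses an ad hoc substitution $\alpha_1=p-r,\ \alpha_2=s_1+r$ to reach the same conclusion---and your convexity-based sanity check (rewriting the claim as $\sum_i\alpha_i^2\le kp^2+s^2$ and maximizing over the box-constrained simplex) is a nice self-contained alternative that the paper does not give.
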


\begin{proof}
    Suppose, $(\alpha_1 + \alpha_2) \mod p = s_1$, hence $(\alpha_1 + \alpha_2) = kp + s_1$ where $k \in \{0,1\}$.

    First, we prove
    \[ 
        \alpha_1(p - \alpha_1) + \alpha_2(p - \alpha_2) \geq s_1(p - s_1)
    \]
    Now,
    \begin{align}
      &\alpha_1(p - \alpha_1) + \alpha_2(p - \alpha_2) \n \\
      = \, \, &\alpha_1p - \alpha_1^2 + \alpha_2p - \alpha_2^2 \n \\
      = \, \, &p(\alpha_1 + \alpha_2) - (\alpha_1 + \alpha_2)^2 + 2\alpha_1 \alpha_2 \n \\
      = \, \, &p(kp + s_1) - (kp + s_1)^2 + 2\alpha_1\alpha_2 && (\text{Replacing} \, \, (\alpha_1 + \alpha_2) \, \, \text{with} \, \, (kp + s_1) )\n \\
      = \, \, &\left(kp + s_1\right)\left(p - \left(kp + s_1\right)\right) + 2 \alpha_1 \alpha_2 \label{equn:prop-equn}
    \end{align}

    Case $1$: When $k = 0$ then \cref{equn:prop-equn} becomes
    \[
        s_1(p - s_1) + 2\alpha_1\alpha_2 \geq s_1(p - s_1) 
    \]

    Case $2$: When $k = 1$ then \cref{equn:prop-equn} becomes

    \begin{align}
        &(p + s_1)(-s_1) + 2\alpha_1\alpha_2 \n \\
        = \, \, &(p - s_1)s_1 + 2(\alpha_1\alpha_2 - s_1p) \n \\
        = \, \, &(p - s_1)s_1 + 2(\left(s_1p + r(p - r - s_1)\right) - s_1p) \n \\
        &(\text{Replacing} \, \, \alpha_1 \, \, \text{with} \, \, (p - r) \, \, \text{and} \, \, \alpha_2 \, \, \text{with} \, \, (s_1 + r) )\n \\
        = \, \, &(p - s_1)s_1 + 2(\left(s_1p + r(p - \alpha_2)\right) - s_1p) \n \\
        \geq \, \, &(p - s_1)s_1 && (\textbf{as} \, \, \alpha_2 \in [p]) \n
    \end{align}

    Hence, we get 

    \begin{align}
        \alpha_1(p - \alpha_1) + \alpha_2(p - \alpha_2) \geq s_1(p - s_1) && (\text{where} \, \, s_1 = (\alpha_1 + \alpha_2) \mod p) \label{equn:prop-main}
    \end{align}

    Now,

    \begin{align}
        \sum_{i=1}^t \alpha_i \left( p - \alpha_i \right) &= \alpha_1(p - \alpha_1) + \alpha_2(p - \alpha_2) + \sum_{i = 3}^t \alpha_i(p - \alpha_i) \n \\
        &\geq s_1(p - s_1) + \alpha_3(p - \alpha_3) + \sum_{i = 4}^t\alpha_i(p - \alpha_i) \n \\ 
        &(\text{using} \, \, \cref{equn:prop-main}, \, \, \text{assume} \, \, s_1 = (\alpha_1 + \alpha_2) \mod p) \n \\
        &\geq s_2(p - s_2) + \alpha_4(p - \alpha_4) + \sum_{i = 5}^t\alpha_i(p - \alpha_i) \n \\
        &(\text{using} \, \, \cref{equn:prop-main}, \, \, \text{assume} \, \, s_2 = (s_1 + \alpha_3) \mod p) \n \\
        & \vdots \n \\
        &\geq s(p - s) && (\text{Proved}) \n
    \end{align}
\end{proof}

\begin{proposition}\label{prop.bound.the.firsterm}
    Given positive integers $ n $, $ p $, and $ x_{1},x_{2}, \dots, x_{t} $ such that $ n = x_{1}+x_{2}+\dots + x_{t} $. Let $ b, b_{1},b_{2}, \dots, b_{t} <p $ be nonnegative integers satisfying $ b\leq n $, $ b_{i} \leq x_{i}$, and $ b = (b_{1} + b_{2} + \dots + b_{t})(\mod{p}) $. Let $ q_{i} = p-b_{i} (\mod{p}) $. Consider $ A = \sum_{i<j}x_{i}x_{j}  + \sum_{i=1}^{t}(x_{i}-b_{i})q_{i} $. Then
    \begin{enumerate}
        \item If $ b\leq p/2 $, then $ A \geq b(n-b) $. \label{enu.first.term.case1}
        \item If $ b>p/2 $, then $ A \geq (p-b)(n-b) $. \label{enu.first.term.case2}
    \end{enumerate}
\end{proposition}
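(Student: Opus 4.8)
The plan is to reduce the inequality to an extremal ``vertex'' configuration via convexity and then verify it directly. First, rewrite $A$: using $\sum_{i<j}x_ix_j=\frac12\big(n^2-\sum_i x_i^2\big)$ we get $A=\frac12 n^2-\sum_{i=1}^t\psi_i(x_i)$ with $\psi_i(x):=\frac12 x^2-(x-b_i)q_i$ a convex quadratic. Hence, for fixed $n$, $A$ is concave in $(x_1,\dots,x_t)$ on the slice $\sum_i x_i=n$, so its minimum over the finite polytope $P=\{(x_i)\in\mathbb{Z}^t:\sum_i x_i=n,\ x_i\ge b_i\}$ is attained at a vertex, where $x_i=b_i$ for all but one index. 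Since the point in the statement lies in $P$, and a cluster with $x_i=b_i=0$ contributes nothing to $A$, $n$, or $b$ (so can be dropped), it suffices to prove the bound for configurations consisting of one ``big'' cluster $C_0=(x_0,b_0)$ together with ``small'' clusters $C_1,\dots,C_s$ with $x_i=b_i\in\{1,\dots,p-1\}$, where $x_0=n-B'$ and $B':=\sum_{i=1}^s b_i$ (so $x_0\ge b_0$). For such a configuration $A=B'(n-B')+\sum_{1\le i<j\le s}b_ib_j+(x_0-b_0)q_0$, and I will phrase the target as $A\ge g(b)$ with $g(b):=b(n-b)$ for $b\le p/2$ and $g(b):=(p-b)(n-b)$ for $b>p/2$, i.e.\ $g(b)=\min(b,p-b)(n-b)$; this unifies the two cases.

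I would then simplify further using the operation: if $b_0\ge1$ and $b_0+b_i<p$, absorb the small cluster $C_i$ into $C_0$ (new residue $b_0+b_i$). A short computation shows this keeps $n$ and $b$ fixed and decreases $A$ by $b_i(2x_0-b_0)\ge0$, and it strictly reduces $s$, so iterating terminates; afterwards, if $b_0\ge1$ then $b_0+b_i\ge p$ for every remaining small cluster. Now the endgame splits into cases. If $s=0$, then $x_0=n$, $b_0=b$, and $A=(n-b)(p-b)\ge\min(b,p-b)(n-b)=g(b)$ --- the $t=1$ case. If $b_0=0$, then $A=B'(n-B')+\sum_{i<j}b_ib_j$ with $b=B'\bmod p$; bounding $\sum_{i<j}b_ib_j=\frac12\big((B')^2-\sum_i b_i^2\big)$ via $\sum_i b_i^2=pB'-\sum_i b_i(p-b_i)$ and \cref{prop:mod-sum-ineq} (which gives $\sum_i b_i(p-b_i)\ge b(p-b)$), together with $n\ge B'$, reduces $A-g(b)$ to a manifestly nonnegative expression in $m:=\lfloor B'/p\rfloor$ and $b$. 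If $b_0\ge1$, then $b_0+b_i\ge p$ forces $B:=B'+b_0\ge p$, hence $B'>b$ and $x_0\ge b_0$; since the resulting lower bound for $A$ is (after the same \cref{prop:mod-sum-ineq} estimate on $\sum_{i<j}b_ib_j$) affine and increasing in $n$, it suffices to check $A\ge g(b)$ at $n=B$, which again reduces to an elementary nonnegativity check, split on whether $b\ge b_0$.

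I expect this endgame to be the main obstacle. A pure ``merge two clusters at a time'' induction (in the spirit of the proof of \cref{prop:mod-sum-ineq}) does not close on its own --- it stalls exactly when all residues $b_i$ exceed $p/2$ --- so the convexity reduction is genuinely needed to isolate the worst case; and even then, the bookkeeping governing how often the partial sums of the $b_i$ wrap past $p$ (the parameter $m$) has to be carried out carefully to recover the precise bound $g(b)$ in each case.
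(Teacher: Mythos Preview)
Your approach is correct but genuinely different from the paper's. You reduce via concavity to vertex configurations (one ``big'' cluster plus clusters with $x_i=b_i$), then iteratively absorb small clusters, and finally discharge a handful of residual cases using \cref{prop:mod-sum-ineq}. The paper instead works directly with the original configuration: it singles out the largest part $x_t$ and exploits only two crude lower bounds,
\[
B \;=\; \tfrac{n(n-x_t)}{2}\;\le\;\sum_{i<j}x_ix_j\;\le\;A,
\qquad
C \;=\; x_t(n-x_t)+(x_t-b_t)q_t\;\le\;A,
\]
and then does a short four-way case split on whether $x_t\lessgtr b$ and $n\lessgtr x_t+b$ (and the symmetric split with $a=p-b$ for part~(2)). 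The one nontrivial sub-case, $x_t>b$ and $n<x_t+b$, is handled by observing that then $\sum_{i<t}b_i\le n-x_t<b$, which pins down $b_t=b-\sum_{i<t}b_i$ exactly and forces $q_t=p-b_t>p-b\ge b$, after which $C\ge b(n-b)$ follows in one line. So the paper never needs the convexity reduction, the absorption step, or the wrap-count parameter $m$; the whole argument fits in about a page. What your route buys is a clearer picture of where the extremal configurations live and a template that would adapt to other separable concave objectives, at the cost of a longer endgame. One small wording fix: your polytope $P$ should be taken in $\mathbb{R}^t$, not $\mathbb{Z}^t$, for the vertex argument to apply verbatim; since the vertices of the real simplex $\{x_i\ge b_i,\ \sum x_i=n\}$ are integral anyway, this does not affect the logic.
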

\begin{proof}
    Without loss of generality, assume that $ x_{1}\leq x_{2}\leq \dots \leq x_{t} $.

    (\ref{enu.first.term.case1}) Suppose that $ b\leq p/2 $. We show that $ A \geq b(n-b) $.
    
    As $ x_{t} = \max\{x_{i}\} $, it follows that $ A \geq \sum_{i<j}x_{i}x_{j} = \sum_{i=1}^{t}x_{i}\left( \dfrac{n-x_{i}}{2} \right) \geq n\left( \dfrac{n-x_{t}}{2} \right) $. Additionally, $ A\geq x_{t}(n-x_{t}) + (x_{t}-b_{t})q_{t} $. Let $ B = n \left( \dfrac{n-x_{t}}{2} \right) $, and $ C = x_{t}(n-x_{t}) + (x_{t}-b_{t})q_{t} $. Then $ A\geq B $ and $ A\geq C $. Therefore, it suffices so that $ B\geq b(n-b) $ or $ C\geq b(n-b) $. We proceed by cases.

    \textbf{Case 1}: $ x_{t}\leq b $.
    \begin{itemize}
            \item If $ n\leq x_{t}+b $, then $ C - b(n-b) > x_{t}(n-x_{t}) - b(n-b) = (x_{t}-b)(n-x_{t}-b) \geq 0 $.
            \item If $ n > x_{t} + b $, then
                \begin{align}
                    B - b(n-b) &= \dfrac{1}{2}(n(n-x_{t}) - 2b(n-b)) \nonumber \\
                               &\geq \dfrac{1}{2}(n(n-x_{t})-b(n-b) - b(n-x_{t})) && (\text{as }b\geq x_{t}) \nonumber \\
                               &= \dfrac{1}{2}(n-b)(n-x_{t}-b) \nonumber \\
                               &\geq 0. && (\text{as }n\geq b \text{ and } n > x_{t}+b)\nonumber
                \end{align}
    \end{itemize}

    \textbf{Case 2:} $ x_{t} > b $.
    \begin{itemize}
        \item If $ n \geq x_{t}+b $, then $ C -
            b(n-b) > x_{t}(n-x_{t}) - b(n-b) = (x_{t}-b)(n-x_{t}-b) \geq 0 $.
        \item If $ n<x_{t}+b $, then $ b>n-x_{t} = x_{1} + x_{2} + \dots + x_{t-1} \geq b_{1} + b_{2} + \dots + b_{t-1} $. Note that $ b  = (b_{1} + b_{2} + \dots + b_{t}) (\mod{p}) $. It follows that $ b_{t} = b-(b_{1} + b_{2} + \dots + b_{t-1}) (\mod{p}) = b - (b_{1} + b_{2} + \dots + b_{t-1})$. Hence, $ 0 < b_{t} < b $ and $ q_{t} = p-b_{t} (\mod p) = p - b_{t} > p-b\geq b $, as $ b\leq p/2 $. This leads to
            \begin{align}
                C - b(n-b) &= (x_{t}-b)(n-x_{t}-b) + (x_{t}-b_{t})q_{t} \nonumber \\
                            &> (x_{t}-b)(n-x_{t}-b) + (x_{t}-b)q_{t} && (\text{as }b_{t}<b) \nonumber \\
                            &= (x_{t}-b)(n-x_{t}-b+q_{t}) \nonumber \\
                            &\geq 0. && (\text{as }x_{t}\geq b, n\geq x_{t}, \text{ and }q_{t}>b)\nonumber
            \end{align}
    \end{itemize}
    (\ref{enu.first.term.case2}) Suppose that $ b>p/2 $. We show that $ A \geq (p-b)(n-b) = a(n-b) $, where $ a=p-b\leq p/2 <b $. As before, we have $ A\geq B $ and $ A\geq C $, where $ B= n\left(\dfrac{n-x_{t}}{2}\right) $, and $ C=x_{t}(n-x_{t}) + (x_{t}-b_{t})q_{t} $. Similarly, we proceed by cases.

    \textbf{Case 1:} $ x_{t} \leq a$.
    \begin{itemize}
        \item If $ n\leq x_{t}+a $, then $ C - a(n-b) \geq x_{t}(n-x_{t}) - a(n-a) = (x_{t}-a)(n-x_{t}-a) \geq 0 $.
        \item If $ n>x_{t}+a $, then
            \begin{align}
                B - a(n-b) &\geq \dfrac{1}{2}(n(n-x_{t})-2a(n-a)) \nonumber \\
                           &\geq \dfrac{1}{2}(n(n-x_{t})-a(n-a)-a(n-x_{t})) && (\text{as } a\geq x_{t}) \nonumber \\
                           &= (n-a)(n-x_{t}-a) \nonumber \\
                           &\geq 0. && (\text{as } n\geq a \text{ and }n\geq x_{t}+a) \nonumber.
            \end{align}
    \end{itemize}
    \textbf{Case 2:} $ x_{t}>a $.
    \begin{itemize}
        \item If $ n\geq x_{t}+a $, then $ C-a(n-b)\geq x_{t}(n-x_{t}) - a(n-a) = (x_{t}-a)(n-x_{t}-a)\geq 0 $.
        \item If $ n<x_{t}+a $, then $ a>n-x_{t} = x_{1} + x_{2} + \dots + x_{t-1} \geq b_{1} + b_{2} + \dots + b_{t-1} $. Note that $ b=(b_{1}+b_{2}+\dots + b_{t}) (\mod{p}) $. It follows that $ b_{t} = b-(b_{1}+b_{2}+\dots +b_{t-1}) (\mod{p}) = b-(b_{1}+b_{2}+\dots + b_{t-1}) $. Hence, $ 0<b_{t}<b $, and $ q_{t} = p-b_{t}(\mod{p}) = p-b_{t} $. Therefore, $ q_{t} = p-b+b_{1}+b_{2}+\dots b_{t-1} \geq p-b = a $. If $ b_{t}\leq a $, it follows that
            \begin{align}
                C - a(n-b) &\geq x_{t}(n-x_{t}) + (x_{t}-b_{t})q_{t} - a(n-b) \nonumber \\
                           &\geq (x_{t}-a)(n-x_{t}-a) - (x_{t}-a)q_{t} && (\text{as }b_{t}<a) \nonumber\\
                           &= (x_{t}-a)(n-x_{t}+q_{t}-a) \nonumber \\
                           &\geq 0. &&(\text{as } x_{t} > a,\ n\geq x_{t} \text{ and } q_{t}\geq a) \nonumber
            \end{align}
            It remains to consider $ b_{t}> a $. We have
            \begin{align}
                &\>C - a(n-b) \nonumber \\
                &= \left( x_{t}(n-x_{t})  - b_{t}(n-a) + x_{t}(b_{t}-a)\right) + b_{t}(n-a)-x_{t}(b_{t}-a) + (x_{t}-b_{t})q_{t} - a(n-b) \nonumber \\
                &= (x_{t}-b_{t})(n-x_{t}-a) + (b_{t}-a)(n-x_{t}) + a(b-b_{t}) + (x_{t}-b_{t})q_{t} \nonumber \\
                &\geq (x_{t}-b_{t})(n-x_{t}-a+q_{t}) + (b_{t}-a)(n-x_{t}) + a(b-b_{t}) \nonumber\\
                &\geq 0. \nonumber
            \end{align}
    \end{itemize}
    This concludes the proof.
\end{proof}

Now we are ready to prove \cref{lem:main-structure-of-M*}

\begin{proof}[Proof of \cref{lem:main-structure-of-M*}]
    Suppose in $\mop$, the cluster $\inpcl{i}$ is split into $t$ parts $X_{i,1}, \ldots, X_{i,t}$; more specifically,
    \begin{itemize}
        \item $\inpcl{i} = \bigcup_{j = 1}^t X_{i,j} $, and
        \item $\forall j \neq k \in [t]$, $ X_{i,j} \cap X_{i,k} = \emptyset$.
    \end{itemize}
     For each part $X_{i,j}$, consider a set $S_{i,j} \subseteq X_{i,j}$ that consists of $(\blue{X_{i,j}} \mod p)$ many blue points. Now, since $\mop$ is a $\mopdef$ we get for the clusters $X_{i,k}$ for $k \in \{1, \ldots, t\}$ at least $(p - |S_{i,k}|)$ blue points from clusters other than $\inpcl{i}$ must be merged. Let, these set of $(p - |S_{i,k}|)$ blue points be $S_{i,k}'$. Hence we get,

     \begin{align}
         \opt_{\inpcl{i}} \geq \sum_{j < k}|X_{i,j}||X_{i,k}| + \sum_{k = 1}^t |X_{i,k} \setminus S_{i,k}||S_{i,k}'| + 1/2 \sum_{k = 1}^t |S_{i,k}|  |S_{i,k}'| \label{equn:main-equn}
     \end{align}
      
     \begin{flalign*}
     &\text{Note that} \, \, \sum_{k = 1}^t |S_{i,k}| \mod p = s_i && \text{recall $\surp{\inpcl{i}} = s_i$}
     \end{flalign*}

    Hence, from \cref{prop:mod-sum-ineq} we get,

    \begin{align}
         &1/2 \sum_{k = 1}^t |S_{i,k}|  |S_{i,k}'| \n \\
         = \, \,& 1/2 \sum_{k = 1}^t |S_{i,k}|  (p - |S_{i,k}|) \n \\
         \geq \, \, &1/2 \, \, s_i  \cdot (p - s_i) \label{equn:main-second-term}
    \end{align}

    It suffices to show that
    \begin{align}
        \sum_{j<k}|X_{i,j}||X_{i,k}| + \sum_{k=1}^{t}|X_{i,k}\setminus S_{i,k}||S'_{i,k}| \geq \surp{\inpcl{i}}(|\inpcl{i}| - \surp{\inpcl{i}}),\label{eq.opt.cut.cost.first.term}
    \end{align}
    when $ \surp{\inpcl{i}}\leq p/2 $, and
    \begin{align}
        \sum_{j<k}|X_{i,j}||X_{i,k}| + \sum_{k=1}^{t}|X_{i,k}\setminus S_{i,k}||S'_{i,k}| \geq (p-\surp{\inpcl{i}})(|\inpcl{i}| - \surp{\inpcl{i}}),\label{eq.opt.merge.cost.first.term}
    \end{align}
    when $ \surp{\inpcl{i}} > p/2 $.

    To this end, we let $ |\inpcl{i}| = n,\ \surp{\inpcl{i}} = b<p $, $ |X_{i,j}| = x_{j} $, $ |S_{i,j}| = b_{j}<p $, and $ |S'_{i,j}| = q_{j} $. Then, $ n = x_{1}+x_{2}+\dots x_{t} $, $ b = (b_{1}+b_{2}+ \dots + b_{t}) (\mod{p}) $, $ q_{j} = (p-b_{j}) (\mod{p}) $, and $ x_{j} \geq b_{j} $, for all $ j $. This allows us to apply~\cref{prop.bound.the.firsterm}, which claims the correctness of~\eqref{eq.opt.cut.cost.first.term} and~\eqref{eq.opt.merge.cost.first.term}.
\end{proof}


\subsubsection{Approximation guarantee in the merge case} \label{subsubsec:merge-case}

In this section, we prove that if $\sum_{\inpcl{i} \in \cut} \surp{\inpcl{i}} \leq \sum_{\inpcl{i} \in \merge} \defi{\inpcl{i}}$, that is in the merge case, the distance between input clustering $\inpset$ and the output clustering $\out$ of the algorithm $\algog$, $\dist(\inpset, \out)$ is at most $3 \, \, \dist(\inpset, \mop)$. That we restate formally below.

\mainmerge*

To prove \cref{lem:main-merge-case}, here we need to recall and also define some costs paid by our algorithm $\algog$. For a cluster $\outcl{k} \in \out$ we define $\pi(\outcl{k})$ as a parent of $\outcl{k}$ iff $\pi(\outcl{k}) \in \inpset$ and $\outcl{k}$ is formed either by cutting some points from $\pi(\outcl{k})$ or by merging some points to $\pi(\outcl{k})$.

\begin{itemize}
    \item $\cuta$: The set of clusters $D_i \in \inpset$ from where the algorithm has cut some blue points.
    \item $\mergea$: The set of clusters $D_i \in \inpset$ to where the algorithm has merged some blue points. We have $\mergea = \inpset \setminus \cuta$.
    \item Cost of cutting the $z$th subset of size $p$ from $\inpcl{i}$ where $z \in \{0, \ldots, (|\blue{\inpcl{i}}| - s_i)/p \}$: 
    \begin{itemize}
        \item $\kappa_0(\inpcl{i}) = s_i (|\inpcl{i}| - s_i)$.
        \item For $z \in \{1, \ldots, (|\blue{\inpcl{i}}| - s_i)/p \}$: $\kappa_z(\inpcl{i}) = p \, (|\inpcl{i}| - (s_i + zp))$.
    \end{itemize}
    Let $y_{i,z}$ take the value $1$ if the algorithm cuts the $z$th subset from $D_i$.
    Hence, we define 
    \begin{align}
     \costone{\m{T}} = \sum_{D_i \in \cuta} \sum_{z = 0}^t y_{i,z} \kappa_z(D_i) \label{eq:cost-one-merge} 
    \end{align}
    \item Cost of merging in a cluster $D_j$ which is $\mcostf{D_j}$.
    Hence, we define 
    \begin{align}
        \costtwo{\m{T}} = \sum_{D_j \in \mergea} \defi{D_j} |D_j| \label{eq:cost-two-merge}
    \end{align}
    \item Inter-cluster cost paid by the vertices that belong to a subset of size $p$ of $\inpcl{i}$ that we cut from $\inpcl{i}$ in the algorithm $\algom$:
    \begin{itemize}
        \item (Intercluster cost) $\costthree{\out}$: $\sum\limits_{\outcl{k} \in \out} |(\outcl{k} \setminus \pi(\outcl{k})) \setminus \inpcl{i}|  |(\outcl{k} \setminus \pi(\outcl{k})) \cap \inpcl{i}|$.
    \end{itemize}
    If the algorithm $\algog$ cuts a subset $W_{i,z}$ from a cluster $D_i$ then
    we can upper bound $\costthree{\out}$ by 
    \begin{align}
        \costthree{\out} \leq |W_{i,z}| (p - |W_{i,z}|) \label{eq:cost-three-merge} 
    \end{align}
    \item Intracluster cost within the vertices that belong to a subset of size $p$ of $\inpcl{i}$ that we cut from $\inpcl{i}$ in the algorithm $\algom$:
    \begin{itemize}
        \item (Intracluster cost) $\costfour{\out}$: $\sum\limits_{\substack{\outcl{k}, \outcl{j} \in \out \\ k \neq j}} |(\outcl{k} \setminus \pi(\outcl{k})) \cap \inpcl{i}| |(\outcl{j} \setminus \pi(\outcl{j})) \cap \inpcl{i})|$.
    \end{itemize}
    If the algorithm $\algog$ cuts a subset $W_{i,z}$ from a cluster $D_i$ and it further gets split into parts $W^{(1)}_{i,z}, W^{(2)}_{i,z}, \ldots, W^{(t)}_{i,z}$ then
    we can upper bound $\costfour{\out}$ by 
    \begin{align}
        \costfour{\out} \leq \frac{1}{2}\sum_{j = 1}^t|W^{(k)}_{i,z}| (|W_{i,z}| - |W^{(k)}_{i,z}|) \label{eq:cost-four-merge}
    \end{align}
\end{itemize}


Now we prove the following claim that we need to prove \cref{lem:main-merge-case}.

\begin{claim}\label{clm:merge-case-structure}
Suppose in $\mop$, a cluster $\inpcl{i} \in \inpset$ gets split  into $t$ parts $X_{i,1}, X_{i,2}, \ldots, X_{i,t}$ more specifically,
    \begin{itemize}
        \item $X_{i,j} \subseteq T_{r_j}^*$ for some $T_{r_j}^* \in \mop$.
        \item $\inpcl{i} = \bigcup\limits_{j = 1}^t X_{i,j}$ and
        \item $X_{i,j} \cap X_{i,k} = \emptyset \, \, \forall j \neq k$.
    \end{itemize}
    then
    \begin{enumerate}
        \item Either $\exists j \in [t]$, such that $X_{i,j} = T_{r_j}^*$ (for some $T_{r_j}^* \in \mop)$ and $\red{\inpcl{i}} \subseteq X_{i,j}$ and $|X_{i,\ell}| < p$ $\forall \, \, \ell \neq j$.
        \item or $|X_{i,\ell}| < p$, $\forall \, \, \ell \in [t]$.
    \end{enumerate}
\end{claim}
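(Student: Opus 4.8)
The plan is to prove the claim by bringing $\mop$ into a canonical form through a sequence of distance-non-increasing, balancedness-preserving modifications, in the same spirit as the transformations $\m{C}^{(0)}\to\cdots\to\m{C}^{(4)}$ used for the equiproportion case, but specialized to the present setting where only the blue counts must be multiples of $p$ (since $q=1$, red points carry no constraint). Fix an input cluster $D_i$ with parts $X_{i,1},\dots,X_{i,t}$ in $\mop$, where $X_{i,j}\subseteq T^*_{r_j}$. If every part has $|X_{i,j}|<p$ we are already in the second alternative, so assume some part has at least $p$ points.

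The two elementary moves available are: (a) relocate a block of exactly $p$ blue points of $D_i$ from one $\mop$-cluster to another — this keeps both clusters balanced, since each blue count changes by a multiple of $p$; and (b) relocate a single red point of $D_i$ — always allowed. A direct pair-counting computation shows that moving a block of $p$ blue points of $D_i$ from the host $T^*_a$ of $X_{i,a}$ into the host $T^*_b$ of $X_{i,b}$ changes $\dist(\inpset,\mop)$ by $p\,(\mu_a-\mu_b-p)$, where $\mu_c:=|X_{i,c}|-|T^*_c\setminus D_i|$ measures how strongly $T^*_c$ is dominated by $D_i$, while moving a single red point changes it by $\mu_a-\mu_b-1$. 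Consequently, if two distinct parts $X_{i,a},X_{i,b}$ each contain a relocatable unit (a block of $p$ blue points, or a single red point) and their hosts satisfy $|\mu_a-\mu_b|>p$, then relocating toward the part of larger $\mu$ strictly decreases the distance, contradicting optimality. Using such moves we funnel the blue points of $D_i$ in blocks of $p$, together with all red points of $D_i$, toward a single designated part $X_{i,h}$ (whose host is $\mu$-maximal), until every other part has fewer than $p$ points; $X_{i,h}$ then contains $\red{D_i}$, has at least $p$ points, and can be taken to hold a multiple of $p$ blue points, so that the foreign points of its host (blue ones in blocks of $p$, red ones singly) can be evacuated by the same moves, leaving $X_{i,h}=T^*_{r_h}$. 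This is the first alternative.

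I expect the main obstacle to be two-fold. First, each elementary move must be shown not to increase $\dist(\inpset,\mop)$, which reduces to sign conditions on $\mu_a-\mu_b-p$, i.e.\ to comparisons of part sizes $|X_{i,j}|$ against foreign-point counts $|T^*_{r_j}\setminus D_i|$ — exactly the flavour of inequalities already used in \cref{fact:twice-max-blue-red} and \cref{prop.bound.the.firsterm} — and it requires choosing each move to point toward the $\mu$-maximal host. Second, some moves are distance-\emph{neutral} rather than strictly improving (at the boundary $|\mu_a-\mu_b|=p$, and when evacuating foreign points); to ensure the process terminates at the canonical form one fixes $\mop$ to be an optimal balanced clustering that additionally minimizes a secondary potential — for instance $\sum_i (\#\{\text{parts of }D_i\})$ plus the number of incident pairs $(D_i,T^*_j)$ — and verifies that every neutral move strictly decreases it. Finally one must check that regularizing one input cluster does not spoil another: since the moves only ever consolidate points of a single input cluster and the secondary potential never increases under any of them, the modifications can be applied globally until a fixed point is reached, which is a balanced clustering of minimum distance satisfying the stated dichotomy for every $D_i$.
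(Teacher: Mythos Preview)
Your $\mu$-potential framework is clean and does establish three of the four ingredients needed: (i) at most one part $X_{i,j}$ can have size $\ge p$, (ii) all of $\red{D_i}$ lies in a single part, and (iii) the big part (if any) coincides with the red-carrying part. Since moving a relocatable unit from a lower-$\mu$ host to a higher-$\mu$ host changes the distance by at most $-1$ (red) or $-p^2$ (blue block) \emph{regardless} of $|\mu_a-\mu_b|$, these three facts follow for every optimal $\mop$ by straight contradiction, with no secondary potential needed. (Your side condition $|\mu_a-\mu_b|>p$ is superfluous.) This part is arguably tidier than the paper's separate constructions for \cref{clm:at-most-one-partition-size-more-p} and \cref{clm:red-subseteq}.

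The gap is in the last step, showing the big part equals its host, $X_{i,h}=T^*_{r_h}$. Your evacuation does not go through with the two elementary moves alone. First, the assertion that ``$X_{i,h}$ can be taken to hold a multiple of $p$ blue points'' is unjustified: funneling only moves $D_i$-blue in blocks of $p$, so $|\blue{X_{i,h}}|\bmod p$ stays at its initial residue, and the sub-$p$ blue remainders stranded in the small parts cannot be brought in by move (a). Second, even granting that the foreign blue count is a multiple of $p$, evacuating it ``in blocks of $p$'' via move (a) applied to some $D_k$ requires that single $D_k$ to contribute $\ge p$ blue points to $T^*_{r_h}$; nothing you have derived forces this, and mixed blocks (drawn from several $D_k$'s) have a different, not obviously sign-controlled, distance change. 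Third, evacuating a foreign red of $D_k$ via move (b) needs a destination with $\mu^{(k)}$ at least $\mu^{(k)}_{r_h}$, and your analysis of $D_i$ gives no control over $D_k$'s $\mu$-profile; the one-line appeal to a global secondary potential does not close this.

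The paper handles $X_{i,h}=T^*_{r_h}$ with two constructions that are \emph{not} instances of your elementary moves: a split that detaches $X_{i,h}\setminus S_{i,h}$ from the rest of $T^*_{r_h}$ (this yields $|T^*_{r_h}\setminus X_{i,h}|\le s_{i,h}$, \cref{clm:bound-on-part-other-than-Xij}, and then rules out foreign red), followed by a swap exchanging the $\le p-s_{i,h}$ foreign blue points of $T^*_{r_h}$ with an equal number of $D_i$-blue points drawn from the small parts (using $|\blue{\bigcup_{\ell\ne h}X_{i,\ell}}|\ge p-s_{i,h}$, \cref{clm:geq-p-minus-si}). Adding these two moves to your toolkit would repair the argument; without them the evacuation step, and hence alternative (1) of the claim, is not established.
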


Now, to prove the previous claim, we need to prove the other claims below. 

\begin{subclaim}\label{clm:bound-on-part-other-than-Xij}
        Consider a partition $X_{i,j}$ of $\inpcl{i}$. Suppose, $X_{i,j} \subseteq T_{r_j}^*$ (for some $T_{r_j}^* \in \mop$) then $|T_{r_j}^* \setminus X_{i,j}| \leq s_{i,j}$ where $s_{i,j} = |\blue{X_{i,j}}| \mod p$.
\end{subclaim}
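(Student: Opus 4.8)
The plan is to establish \cref{clm:bound-on-part-other-than-Xij} by a local exchange argument that exploits the optimality of $\mop$ (recall $\mop$ is a \emph{closest} balanced clustering): if the asserted bound failed, I would construct a balanced clustering strictly closer to $\inpset$ than $\mop$, a contradiction. Fix $i,j$ and abbreviate $X := X_{i,j}$, $T^* := T_{r_j}^*$, and $Y := T^* \setminus X$. Since $X$ is exactly the part of $\inpcl{i}$ lying in $T^*$, every point of $Y$ belongs to some input cluster other than $\inpcl{i}$; consequently every pair with one endpoint in $X$ and one in $Y$ is separated in $\inpset$ but grouped together by $\mop$, while every pair inside $X$ is grouped together by both. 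Balancedness of $T^*$ gives $p \mid |\blue{X}| + |\blue{Y}|$, so $|\blue{Y}| \equiv -s_{i,j} \pmod p$; in particular, if $s_{i,j}=0$ then $p \mid |\blue{Y}|$, and if $s_{i,j} \ge 1$ then $|\blue{Y}| \ge p - s_{i,j} \ge 1$.

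The main step handles the case $|X| > s_{i,j}$ (which includes $s_{i,j}=0$, since $X \ne \emptyset$). Choose $S \subseteq \blue{X}$ with $|S| = s_{i,j}$ and set $\tilde X := X \setminus S$; then $\tilde X \ne \emptyset$, $|\blue{\tilde X}| = |\blue{X}| - s_{i,j}$ is a multiple of $p$, and $|S| + |\blue{Y}| = s_{i,j} + |\blue{Y}|$ is a multiple of $p$ as well, so both $\tilde X$ and $S \cup Y$ have blue counts divisible by $p$ (with $q=1$ the red counts are unconstrained). Let $\mop'$ be obtained from $\mop$ by replacing $T^*$ with the two clusters $\tilde X$ and $S \cup Y$; this $\mop'$ is balanced. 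Comparing agreements with $\inpset$ pair by pair: the pairs between $\tilde X$ and $S$ move from agreeing to disagreeing (contributing $+|\tilde X|\,s_{i,j}$), the pairs between $\tilde X$ and $Y$ move from disagreeing to agreeing (contributing $-|\tilde X|\,|Y|$), and every other pair keeps its status. Hence $\dist(\inpset,\mop') - \dist(\inpset,\mop) = |\tilde X|\bigl(s_{i,j} - |Y|\bigr)$, which is strictly negative whenever $|Y| > s_{i,j}$; by optimality of $\mop$ this forces $|Y| \le s_{i,j}$.

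The only remaining situation is $|X| = s_{i,j}$, i.e., $X$ is a ``pure surplus'' part consisting of exactly $s_{i,j} \ge 1$ blue points and no red point, where $\tilde X = \emptyset$ and the modification above is vacuous. Here I would instead use $|\blue{Y}| \ge p - s_{i,j}$: pick $W \subseteq \blue{Y}$ with $|W| = p - s_{i,j}$ and replace $T^*$ by $\{X \cup W,\, Y \setminus W\}$, both balanced. A careful pair count shows the change in distance equals $2g - p\,|Y \setminus W|$, where $g$ is the number of pairs between $W$ and $Y \setminus W$ that fall in a common input cluster; choosing $W$ so that it absorbs whole per-cluster blue contributions of $Y$ wherever possible keeps $g$ below $\tfrac{p}{2}\,|Y \setminus W|$, again yielding a strictly closer balanced clustering unless $|Y| \le s_{i,j}$ (breaking ties, if needed, by additionally taking $\mop$ to have the fewest clusters among all closest balanced clusterings). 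This pure-surplus case is the delicate point of the argument, and the main obstacle is precisely verifying that its rebalancing never increases $\dist(\inpset,\cdot)$ — which is where the freedom in choosing $W$ and the secondary minimality of $\mop$ are needed; the earlier case, by contrast, is a routine one-line pair count.
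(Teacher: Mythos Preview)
Your main case ($|X| > s_{i,j}$) is precisely the paper's argument: take $S \subseteq \blue{X}$ of size $s_{i,j}$, replace $T^*$ by the two clusters $X \setminus S$ and $S \cup Y$, and read off the change $|X \setminus S|\,(s_{i,j} - |Y|) < 0$ once $|Y| > s_{i,j}$. The paper stops there, with no further case analysis. You go further and correctly flag that this exchange is vacuous when $|X| = s_{i,j}$, but your repair for that case cannot succeed, because the subclaim as literally stated is \emph{false} there. Take $p = 4$, $D_1$ a single blue point, $D_2$ three blue points and one red point; the unique closest balanced clustering is $T^* = D_1 \cup D_2$ (distance $4$ from $\inpset$), yet with $X = D_1$ one has $s_{i,j} = 1$ while $|T^* \setminus X| = 4$. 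In your construction the only possible $W$ is all of $\blue{Y}$, leaving $Y \setminus W$ equal to the single red point of $D_2$, so $g = 3$ and $2g - p\,|Y \setminus W| = +2$; no choice of $W$ and no tie-breaking by cluster count can produce a strictly closer balanced clustering, because none exists.

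The resolution is that the paper only ever \emph{applies} this subclaim to the distinguished part $X_{i,j}$ with $|X_{i,j}| \ge p$ (see the paragraph preceding \cref{clm:geq-p-minus-si} and its uses in \cref{clm:geq-p-minus-si} and \cref{clm:equal-to-one-cluster}), which automatically gives $|X_{i,j}| > s_{i,j}$ since $s_{i,j} < p$. So the paper's proof --- and your first case --- already cover every instance in which the subclaim is invoked; the subclaim is simply stated more generally than it actually holds, and your second paragraph is attempting to establish something that is not true.
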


\begin{proof}
    Suppose for a partition $X_{i,j}$, $|T_{r_j}^* \setminus X_{i,j}| > s_{i,j}$. To prove this first we construct $\m{M}$ from $\mop$ such that $\dist(\inpset, \m{M}) < \dist(\inpset, \mop)$. This would contradict the fact that $\mop$ is the closest $\mopdef$ and thus we will conclude $|T_{r_j}^* \setminus X_{i,j}| \leq s_{i,j}$.

    \paragraph{Construction of $\m{M}$ from $\mop$:} $\m{M} = \mop \setminus \{ T_{r_j}^*\} \cup \{ (X_{i,j} \setminus S_{i,j}), (T_{r_j}^* \setminus (X_{i,j} \setminus S_{i,j}))\}$.

    That is $\m{M}$ we remove the set $T_{r_j}^*$ from $\mop$ and add two sets $(X_{i,j} \setminus S_{i,j})$ and $(T_{r_j}^* \setminus (X_{i,j} \setminus S_{i,j}))$ where $S_{i,j} \subseteq \blue{X_{i,j}}$ such that $|S_{i,j}| = s_{i,j}$.

    Note that,

    \begin{align}
        \dist(\inpset, \m{M}) = \dist(\inpset, \mop) + |X_{i,j} \setminus S_{i,j}||S_{i,j}| - |X_{i,j} \setminus S_{i,j}||T_{r_j}^* \setminus X_{i,j}| \n
    \end{align}

    This is because, 
        \begin{itemize}
            \item \textbf{Reason behind the term $|X_{i,j} \setminus S_{i,j}||S_{i,j}|$} : The pairs $(u,v)$ such that $u \in (X_{i,j} \setminus S_{i,j})$ and $v \in S_{i,j}$ are not counted in $\dist(\inpset, \mop)$ because $u$ and $v$ are present in the same clusters $\inpcl{i} \in \inpset$ and $T_{r_j}^* \in \mop$. In $\m{M}$, since, these pairs $u$ and $v$ belongs to different clusters $(X_{i,j} \setminus S_{i,j})$ and $(T_{r_j}^* \setminus (X_{i,j} \setminus S_{i,j}))$ respectively, hence, these pairs are counted in $\dist(\inpset, \m{M})$.

            \item \textbf{Reason behind the term $|X_{i,j} \setminus S_{i,j}||T_{r_j}^* \setminus X_{i,j}|$} : The pairs $(u,v)$ such that $u \in (X_{i,j} \setminus S_{i,j})$ and $v \in T_{r_j}^* \setminus X_{i,j}$ are counted in $\dist(\inpset, \mop)$ because such pairs $u$ and $v$ are present in the different clusters in $\inpset$ but in the same cluster $T_{r_j}^* \in \mop$. In $\m{M}$, since, these pairs $u$ and $v$ belongs to different clusters $(X_{i,j} \setminus S_{i,j})$ and $(T_{r_j}^* \setminus (X_{i,j} \setminus S_{i,j}))$ respectively, hence, these pairs are not counted in $\dist(\inpset, \m{M})$.
        \end{itemize}
    Now,
    \begin{align}
        \dist(\inpset, \m{M}) &= \dist(\inpset, \mop) + |X_{i,j} \setminus S_{i,j}||S_{i,j}| - |X_{i,j} \setminus S_{i,j}||T_{r_j}^* \setminus X_{i,j}| \n \\
        &< \dist(\inpset, \mop) + |X_{i,j} \setminus S_{i,j}| s_{i,j} - |X_{i,j} \setminus S_{i,j}| s_{i,j} && (\textbf{as} |S_{i,j}| = s_{i,j} \, \, \text{and} \, \, |\mop \setminus X_{i,j}| > s_{i,j}). \n \\ 
        &= \dist(\inpset, \mop). \n
    \end{align}
\end{proof}

\begin{subclaim}\label{clm:at-most-one-partition-size-more-p}
    There exists at most one partition $X_{i,j}$ such that $|X_{i,j}| \geq p$
\end{subclaim}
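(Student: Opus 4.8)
The plan is to argue by contradiction against the minimality of $\mop$ (a closest balanced clustering to $\inpset$): supposing there are two distinct parts $X_{i,j}$ and $X_{i,k}$ of $\inpcl{i}$ with $|X_{i,j}|, |X_{i,k}| \ge p$, I will build a balanced clustering $\m{M}$ with $\dist(\inpset, \m{M}) < \dist(\inpset, \mop)$. Write $Y_j := T_{r_j}^{*} \setminus X_{i,j}$ and $Y_k := T_{r_k}^{*} \setminus X_{i,k}$, where $T_{r_j}^{*}, T_{r_k}^{*} \in \mop$ are the clusters with $X_{i,j} \subseteq T_{r_j}^{*}$ and $X_{i,k} \subseteq T_{r_k}^{*}$. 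By~\cref{clm:bound-on-part-other-than-Xij} we have $|Y_j| \le s_{i,j}$ and $|Y_k| \le s_{i,k}$, where $s_{i,j} = |\blue{X_{i,j}}| \bmod p$; in particular $|Y_j|, |Y_k| < p \le |X_{i,j}|, |X_{i,k}|$.

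A preliminary step would be to pin down the structure of the foreign parts. Since $|\blue{T_{r_j}^{*}}| = |\blue{X_{i,j}}| + |\blue{Y_j}|$ is a multiple of $p$, we get $|\blue{Y_j}| \equiv -s_{i,j} \pmod{p}$; combined with $|\blue{Y_j}| \le |Y_j| < p$ this leaves only two possibilities: either $s_{i,j} = 0$, in which case $Y_j = \emptyset$ and $X_{i,j} = T_{r_j}^{*}$; or $s_{i,j} \ge p/2$, in which case $|\blue{Y_j}| = p - s_{i,j}$ and $|\red{Y_j}| \le 2 s_{i,j} - p$. The same dichotomy holds for $Y_k$.

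I then split into cases by whether one of $X_{i,j}, X_{i,k}$ is a whole cluster of $\mop$. If, say, $X_{i,j} = T_{r_j}^{*}$, then $T_{r_j}^{*}$ and $T_{r_k}^{*}$ both have blue count a multiple of $p$, so $T_{r_j}^{*} \cup T_{r_k}^{*}$ is balanced; let $\m{M}$ be $\mop$ with those two clusters replaced by their union. The only pairs whose clustering status changes are those in $T_{r_j}^{*} \times T_{r_k}^{*}$, which become co-clustered; among them the $|X_{i,j}|\,|X_{i,k}|$ pairs lying inside $\inpcl{i}$ stop being charged, whereas the only pairs that become newly charged are among the at most $|X_{i,j}|\,|Y_k|$ pairs between $X_{i,j}$ and $Y_k$. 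Hence $\dist(\inpset, \m{M}) - \dist(\inpset, \mop) \le |X_{i,j}|\,(|Y_k| - |X_{i,k}|) < 0$, a contradiction. The remaining case is $X_{i,j} \subsetneq T_{r_j}^{*}$ and $X_{i,k} \subsetneq T_{r_k}^{*}$, so $s_{i,j}, s_{i,k} \ge p/2$ and the refined description above applies. Here I would regroup $T_{r_j}^{*}$ and $T_{r_k}^{*}$ to make the two $\inpset$-blocks $X_{i,j}, X_{i,k}$ co-clustered: set $C_1 := X_{i,j} \cup X_{i,k} \cup \blue{Y_j} \cup \blue{Y_k}$, which is balanced since $|\blue{C_1}| = |\blue{T_{r_j}^{*}}| + |\blue{T_{r_k}^{*}}|$, and $C_2 := \red{Y_j} \cup \red{Y_k}$, which is monochromatic hence balanced, and take $\m{M} := \big(\mop \setminus \{T_{r_j}^{*}, T_{r_k}^{*}\}\big) \cup \{C_1, C_2\}$. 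Expanding $\dist(\inpset, \m{M}) - \dist(\inpset, \mop)$ block by block over the blocks $X_{i,j}, X_{i,k}, \blue{Y_j}, \blue{Y_k}, \red{Y_j}, \red{Y_k}$, the decisive gain $-|X_{i,j}|\,|X_{i,k}|$ (from co-clustering the two $\inpset$-blocks) has to be shown to dominate the positive cross-terms involving the foreign blocks, after feeding in $|\blue{Y_j}| = p - s_{i,j} \le p/2$, $|\red{Y_j}| \le 2 s_{i,j} - p$, their $Y_k$-analogues, and $|X_{i,j}|, |X_{i,k}| \ge p$.

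The step I expect to be the main obstacle is exactly this last cost computation. Closing the inequality may require subdividing further (e.g.\ according to whether $s_{i,j} + s_{i,k}$ is a multiple of $p$, and according to the blue/red splits of $Y_j, Y_k$), and in some subcases a different regrouping — for instance keeping $X_{i,j} \cup X_{i,k}$ as one cluster and $\blue{Y_j} \cup \blue{Y_k}$ as another, which is balanced precisely when $s_{i,j} + s_{i,k}$ is a multiple of $p$. Across all variants the leverage is the same: the bound $|Y_j| \le s_{i,j} < p$ (and its $Y_k$-analogue) from~\cref{clm:bound-on-part-other-than-Xij} together with the mod-$p$ refinement, which keep the foreign contributions small compared with $|X_{i,j}|\,|X_{i,k}| \ge p^{2}$.
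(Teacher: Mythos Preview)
Your overall strategy and the preliminary dichotomy on $s_{i,j}$ are correct, and Case~1 goes through cleanly. The gap is Case~2: your primary regrouping $C_1 = X_{i,j}\cup X_{i,k}\cup\blue{Y_j}\cup\blue{Y_k}$, $C_2=\red{Y_j}\cup\red{Y_k}$ does not always strictly decrease the distance. Take $p=2$, $|X_{i,j}|=|X_{i,k}|=2$, $s_{i,j}=s_{i,k}=1$, so $Y_j,Y_k$ are single blue points (hence $C_2=\emptyset$ and $C_1=T^{*}_{r_j}\cup T^{*}_{r_k}$); if $Y_j$ and $Y_k$ come from distinct input clusters the net change is $-4+2+2+1=+1$. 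You anticipate trouble and float alternative regroupings, but you never establish that some choice works in every subcase, so the argument is incomplete as stated.

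The paper avoids all of this with one uniform construction. Order the two parts so that $|T^{*}_{r_j}\setminus X_{i,j}|\le |T^{*}_{r_{j'}}\setminus X_{i,j'}|$, let $S_{i,j'}\subseteq\blue{X_{i,j'}}$ have size $s_{i,j'}$, and move $A:=X_{i,j'}\setminus S_{i,j'}$ from $T^{*}_{r_{j'}}$ into $T^{*}_{r_j}$; both resulting clusters stay balanced since $|\blue{A}|$ is a multiple of $p$. The change in $\dist(\inpset,\cdot)$ is exactly
\[
|A|\Bigl(|T^{*}_{r_j}\setminus X_{i,j}|-|T^{*}_{r_{j'}}\setminus X_{i,j'}|\Bigr)+|A|\Bigl(|S_{i,j'}|-|X_{i,j}|\Bigr)<0,
\]
the first bracket being $\le 0$ by the ordering and the second $<0$ since $|S_{i,j'}|<p\le|X_{i,j}|$. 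The idea you missed is that there is no need to co-cluster $X_{i,j}$ with all of $X_{i,j'}$ and handle both foreign pieces at once: moving only the balanced core of one part, leaving its surplus $S_{i,j'}$ behind with $Y_{j'}$ and leaving $Y_j$ in place, already yields the strict decrease in one line with no cases.
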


\begin{proof}
     Let us assume there exists two partitions $X_{i,j}$ and $X_{i,j'}$ such that $|X_{i,j}| \geq p$, $|X_{i,j'}|\geq p$ and $X_{i,j} \subseteq T_{r_j}^*$ and $X_{i,j'} \subseteq T_{r_{j'}}^*$ for some $T_{r_j}^*$ and $T_{r_j'}^*$ in $\mop$.

      Without loss of generality, assume, $|T_{r_j}^* \setminus X_{i,j}| \leq |T_{r_{j'}}^* \setminus X_{i,j'}|$. Now we construct $\m{M}$ from $\mop$ in such a way that $\dist(\inpset, \m{M}) < \dist(\inpset, \mop)$ which is a contradiction, and thus we will conclude that there exists at most one partition $X_{i,j}$ such that $|X_{i,j}| \geq p$.

      \paragraph{Construction of $\m{M}$ from $\mop$ :}

      \begin{align}
          \m{M} = \mop \setminus \left\{ T_{r_j}^*, T_{r_{j'}}^* \right\} \cup \left\{ \left( T_{r_j}^* \cup (X_{i,j'} \setminus S_{i,j'}) \right), \left(T_{r_{j'}}^* \setminus  (X_{i,j'} \setminus S_{i,j'})\right)\right\} \n
      \end{align}

      That is in $\m{M}$ we remove sets $T_{r_j}^*$ and $T_{r_{j'}}^*$ from $\mop$ and add two sets $\left( T_{r_j}^* \cup (X_{i,j'} \setminus S_{i,j'}) \right)$ and $\left(T_{r_{j'}}^* \setminus  (X_{i,j'} \setminus S_{i,j'})\right)$ where $S_{i,j'} \subseteq \blue{X_{i,j'}}$ such that $|S_{i, j'}| = (\blue{X_{i,j'}} \mod p)$.

      Note that,

      \begin{align}
          \dist(\inpset, \m{M}) = \dist(\inpset, \mop) + |(X_{i,j'} \setminus S_{i,j'})||T_{r_j}^* \setminus X_{i,j}| + |(X_{i,j'} \setminus S_{i,j'})||S_{i,j'}| \n \\
          - |(X_{i,j'} \setminus S_{i,j'})||T_{r_{j'}}^* \setminus X_{i,j'}| - |(X_{i,j'} \setminus S_{i,j'})||X_{i,j}| \n
      \end{align}

      This is because

      \begin{itemize}
          \item \textbf{The reason behind the term $|(X_{i,j'} \setminus S_{i,j'})||T_{r_j}^* \setminus X_{i,j}|$:} The pairs $(u,v)$ such that $u \in (X_{i,j'} \setminus S_{i,j'})$ and $v \in T_{r_j}^* \setminus X_{i,j}$ are not counted in $\dist(\inpset, \mop)$ because $u$ and $v$ are present in different clusters both in $\inpset$ and $\mop$. In $\m{M}$, since, these pairs $u$ and $v$ belong to the same cluster $\left( T_{r_j}^* \cup (X_{i,j'} \setminus S_{i,j'}) \right)$. Hence, these pairs are counted in $\dist(\inpset, \m{M})$.

          \item \textbf{The reason behind the term $|(X_{i,j'} \setminus S_{i,j'})||S_{i,j'}|$:} The pairs $(u,v)$ such that $u \in (X_{i,j'} \setminus S_{i,j'})$ and $v \in S_{i,j'}$ are not counted in $\dist(\inpset, \mop)$ because $u$ and $v$ are present in the same clusters $\inpcl{i} \in \inpset$ and $T_{r_{j'}}^* \in \mop$. In $\m{M}$, since, these pairs $u$ and $v$ belongs to different clusters $\left( T_{r_j}^* \cup (X_{i,j'} \setminus S_{i,j'}) \right)$ and $\left(T_{r_{j'}}^* \setminus  (X_{i,j'} \setminus S_{i,j'})\right)$ respectively. Hence, these pairs are counted in $\dist(\inpset, \m{M})$.

          \item \textbf{The reason behind the term $|(X_{i,j'} \setminus S_{i,j'})||T_{r_{j'}}^* \setminus X_{i,j'}|$:} The pairs $(u,v)$ such that $u \in (X_{i,j'} \setminus S_{i,j'})$ and $v \in T_{r_{j'}}^* \setminus X_{i,j'}$ are counted in $\dist(\inpset, \mop)$ because such pairs $u$ and $v$ are present in the different clusters in $\inpset$ but in the same cluster $T_{r_{j'}}^* \in \mop$. In $\m{M}$, since, these pairs $u$ and $v$ belongs to different clusters $\left( T_{r_j}^* \cup (X_{i,j'} \setminus S_{i,j'}) \right)$ and $\left(T_{r_{j'}}^* \setminus  (X_{i,j'} \setminus S_{i,j'})\right)$ respectively. Hence, these pairs are not counted in $\dist(\inpset, \m{M})$.

          \item \textbf{The reason behind the term $|(X_{i,j'} \setminus S_{i,j'})||X_{i,j}|$:} The pairs $(u,v)$ such that $u \in (X_{i,j'} \setminus S_{i,j'})$ and $v \in X_{i,j}$ are counted in $\dist(\inpset, \mop)$ because such pairs $u$ and $v$ are present in the same cluster $\inpcl{i} \in \inpset$ but in different clusters in  $\mop$. In $\m{M}$, since, these pairs $u$ and $v$ belongs to same cluster $\left( T_{r_j}^* \cup (X_{i,j'} \setminus S_{i,j'}) \right)$. Hence, these pairs are not counted in $\dist(\inpset, \m{M})$.
      \end{itemize}

      Now,
      \begin{align}
          \dist(\inpset, \m{M}) &= \dist(\inpset, \mop) + |(X_{i,j'} \setminus S_{i,j'})||T_{r_j}^* \setminus X_{i,j}| + |(X_{i,j'} \setminus S_{i,j'})||S_{i,j'}|\n \\
          &- |(X_{i,j'} \setminus S_{i,j'})||T_{r_{j'}}^* \setminus X_{i,j'}| - |(X_{i,j'} \setminus S_{i,j'})||X_{i,j}| \n \\
          &= \dist(\inpset, \mop) + |(X_{i,j'} \setminus S_{i,j'})|\left(|T_{r_j}^* \setminus X_{i,j}| - |T_{r_{j'}}^* \setminus X_{i,j'}|\right) + |(X_{i,j'} \setminus S_{i,j'})|\left(|S_{i,j'}| - |X_{i,j}|\right) \n \\
          &< \dist(\inpset, \mop) + |(X_{i,j'} \setminus S_{i,j'})|\left(|T_{r_j}^* \setminus X_{i,j}| - |T_{r_{j'}}^* \setminus X_{i,j'}|\right) + |(X_{i,j'} \setminus S_{i,j'})|\left(|S_{i,j'}| - p\right) \n  \\ 
          &(\textbf{as} \, \, |X_{i,j}| \geq p) \n \\
          &< \dist(\inpset, \mop) \n\\ 
          &(\textbf{as} \, \, |T_{r_j}^* \setminus X_{i,j}| \leq |T_{r_{j'}}^* \setminus X_{i,j'}| \, \, \text{and} \, \, |S_{i,j'}| < p) \n.
      \end{align}   
\end{proof}

Now from \cref{clm:at-most-one-partition-size-more-p}, we get that there exists at most one partition $X_{i,j}$ such that $|X_{i,j}| \geq p$ the following claims are related to the partition $X_{i,j}$. Recall each partition $X_{i,\ell} \subseteq T_{r_\ell}^*$ for all $\ell \in [t]$ where $T_{r_\ell}^* \in \mop$.

\begin{subclaim}\label{clm:geq-p-minus-si}
     $|\blue{\bigcup\limits_{\ell \neq j}X_{i,\ell}}| \geq (p - s_{i,j})$
\end{subclaim}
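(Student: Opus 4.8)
The plan is to argue by contradiction, in the style of the two preceding subclaims. Write $B = \bigcup_{\ell \neq j} X_{i,\ell}$ for the union of the small parts of $\inpcl{i}$, and suppose $|\blue{B}| < p - s_{i,j}$. I would then construct a balanced clustering $\m{M}$ from $\mop$ with $\dist(\inpset,\m{M}) < \dist(\inpset,\mop)$, contradicting the fact that $\mop$ is a closest balanced clustering.

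The first step is to pin down how much ``foreign'' blue sits inside $T^*_{r_j}$ (the cluster of $\mop$ containing the large part $X_{i,j}$). Since $\mop$ is balanced, $|\blue{T^*_{r_j}}|$ is a multiple of $p$, while $|\blue{X_{i,j}}| \equiv s_{i,j} \pmod{p}$ by definition of $s_{i,j}$; hence $f := |\blue{T^*_{r_j} \setminus X_{i,j}}| \equiv -s_{i,j} \pmod{p}$. On the other hand, \cref{clm:bound-on-part-other-than-Xij} applied to $X_{i,j}$ gives $f \leq |T^*_{r_j} \setminus X_{i,j}| \leq s_{i,j} < p$. Together these force $f = p - s_{i,j}$ (and incidentally $s_{i,j} \geq p/2$), so the assumption becomes $|\blue{B}| < f$.

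Next I would describe $\m{M}$: reroute the blue points that $\inpcl{i}$ currently contributes to clusters other than $T^*_{r_j}$ back into $T^*_{r_j}$, compensating with the foreign blue that $T^*_{r_j}$ no longer needs, so that balancedness is preserved everywhere. Concretely, for each $\ell \neq j$ delete $\blue{X_{i,\ell}}$ from $T^*_{r_\ell}$ and insert $|\blue{X_{i,\ell}}|$ of the foreign blue points of $T^*_{r_j}$ (there are $f > |\blue{B}|$ of them, so this is feasible); place the remaining $f - |\blue{B}|$ foreign blue points of $T^*_{r_j}$ together with $X_{i,j} \cup \blue{B}$; and leave the foreign red points of $T^*_{r_j}$ as their own all-red (hence balanced) cluster. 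A direct check then shows every cluster of $\m{M}$ has blue count a multiple of $p$: the cluster around $X_{i,j}$ ends with exactly $|\blue{T^*_{r_j}}|$ blue points, each $T^*_{r_\ell}$ keeps its blue count, and the leftover cluster is all red. The degenerate case $|\blue{B}| = 0$ (where $B$ is nonempty and all red) is handled separately by merging the parts of $B$ into $T^*_{r_j}$ directly: this preserves balancedness and is strictly cheaper because $|X_{i,j}| \geq p$ while $|T^*_{r_j} \setminus X_{i,j}| \leq s_{i,j} < p$.

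The bulk of the work — and the step I expect to be the main obstacle — is the pair-by-pair cost accounting establishing $\dist(\inpset,\m{M}) - \dist(\inpset,\mop) < 0$. The decrease is dominated by the pairs of $\inpcl{i}$ between $\blue{B}$ and $X_{i,j}$ that stop being separated, a saving of at least $|\blue{B}| \cdot |X_{i,j}| \geq |\blue{B}|\,p$, together with the internal pairs of $\blue{B}$ that get reunited and the pairs between $\blue{X_{i,\ell}}$ and the old foreign content of $T^*_{r_\ell}$ that stop being counted. The increase comes from separating $\blue{X_{i,\ell}}$ from $\red{X_{i,\ell}}$ inside each small part — at most $\sum_{\ell \neq j} |\blue{X_{i,\ell}}|\,|\red{X_{i,\ell}}|$, which using $|\red{X_{i,\ell}}| = |X_{i,\ell}| - |\blue{X_{i,\ell}}| < p - |\blue{X_{i,\ell}}|$ is strictly below $|\blue{B}|\,p$ — together with a bounded correction from relocating the $f < p$ foreign blue points. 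Absorbing the remaining slack with $|X_{i,j}| \geq p$ and $f \leq s_{i,j} < p$ yields the strict decrease and hence the contradiction; a short case split on whether $B$ contains red points keeps the accounting clean.
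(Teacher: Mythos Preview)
Your plan follows the paper's: argue by contradiction, compute $f := |\blue{T^*_{r_j}\setminus X_{i,j}}| = p - s_{i,j}$ from \cref{clm:bound-on-part-other-than-Xij} plus balancedness, and exhibit a cheaper balanced clustering by swapping the foreign blue of $T^*_{r_j}$ against $\blue{B}$.

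The divergence is in the swap itself, and it creates a real accounting problem. You move only $\blue{X_{i,\ell}}$ out of each small cluster (leaving $\red{X_{i,\ell}}$ behind) and park the foreign red of $T^*_{r_j}$ in its own cluster. The paper instead reunites \emph{all} of $\inpcl{i}$ --- every $X_{i,\ell}$, both colours --- into one cluster together with $T^*_{r_j}\setminus(X_{i,j}\cup Y)$, where $|Y|=|\blue{B}|$ (having first pinned down $|\blue{B}|=s_i-s_{i,j}$). With the paper's construction the decisive comparison collapses to $|B|\cdot|X_{i,j}|\geq |B|\,p$ against $|B|\cdot|T^*_{r_j}\setminus(X_{i,j}\cup Y)|\leq |B|\,s_{i,j}$, and one line finishes it. With your construction the only saving tied to $X_{i,j}$ is $|\blue{B}|\cdot|X_{i,j}|$, while on the cost side you separately incur the blue/red split inside each small part, the pairing of the swapped-in foreign blue with $\red{X_{i,\ell}}$, and the $(f-|\blue{B}|)\,|\blue{B}|$ term from $F_b'$ meeting $\blue{B}$; these do \emph{not} fit under your single $|\blue{B}|\,p$ bound (for instance, take one small part with $|\blue{X_{i,\ell}}|=1$ and $|\red{X_{i,\ell}}|$ close to $p$, with $f$ near $p/2$, and $|X_{i,j}|=p$). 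What actually rescues your plan is structure you do not invoke: applying \cref{clm:bound-on-part-other-than-Xij} to each small part forces every nonzero $|\blue{X_{i,\ell}}|$ to be at least $p/2$, so under $|\blue{B}|<f\leq p/2$ one gets $|\blue{B}|=0$; thus only your ``degenerate'' case is ever live, and your treatment of that case is precisely the paper's construction specialised to $Y=\emptyset$.
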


\begin{proof}
   Suppose for contradiction, 
     \[ |\blue{\bigcup_{\ell \neq j} X_{i,\ell}}| < (p - s_{i,j})\]
    Now we prove,
    \[ |\blue{\bigcup_{\ell \neq j} X_{i,\ell}}| < (p - s_{i,j}) \Rightarrow |\blue{\bigcup_{\ell \neq j} X_{i,\ell}}| = s_i - s_{i,j} \]

    \begin{align}
        &\left(  |\blue{X_{i,j}}| + |\blue{\bigcup_{\ell \neq j}X_{i,\ell}}| \right) \mod p = s_i. \n \\
        \Rightarrow & \, \, |\blue{X_{i,j}}| \mod p + |\blue{\bigcup_{\ell \neq j}X_{i,\ell}}| \mod p = s_i. \n \\
        \Rightarrow &\, \, s_{i,j} + |\blue{\bigcup_{\ell \neq j}X_{i,\ell}}| \mod p = s_i. \n \\
        \Rightarrow &\, \, |\blue{\bigcup_{\ell \neq j}X_{i,\ell}}| \mod p = s_i -s_{i,j} \label{equn:bound-on-blue-points}
    \end{align}
    Since, $|\blue{\bigcup\limits_{\ell \neq j}}X_{i,\ell}| < (p - s_{i,j})$ and due to \cref{equn:bound-on-blue-points} we have
    \[ |\blue{\bigcup\limits_{\ell \neq j} X_{i, \ell}}| = s_i - s_{i,j}\]

    Now we prove if $|\blue{\bigcup_{\ell \neq j}X_{i,\ell}}| = s_i - s_{i,j}$ then there exists a clustering $\m{M}$ such that $\dist(\inpset, \m{M}) < \dist(\inpset, \mop)$ which is a contradiction and thus we conclude our claim which is $|\blue{\bigcup\limits_{\ell \neq j}X_{i,\ell}}| \geq (p - s_i)$.

    \paragraph{Construction of $\m{M}$ from $\mop$ \\\\}

    Note that since  $\blue{T_{r_j}^*} \mod p = 0$ and by \cref{clm:bound-on-part-other-than-Xij} we get $|T_{r_j}^* \setminus X_{i,j}| \leq s_{i,j}$ thus

    \[ |\blue{T_{r_j}^* \setminus X_{i,j}}| = p - s_{i,j} > s_i - s_{i,j}\]

    Let $Y \subseteq \blue{T_{r_j}^* \setminus X_{i,j}}$ such that $|Y| = s_i - s_{i,j}$.

    Here, the construction of $\m{M}$ from $\mop$ is a bit more involved, so first, we provide an informal description of the construction, and then we give the formal description.

    Recall, the cluster $D_i$ is split into $t$ partitions $X_{i,1}, X_{i,2}, \ldots, X_{i,t}$ in $\mop$. Among these partitions $|X_{i,j}| \geq p$ and each of these partitions $X_{i,\ell} \subseteq T^*_{r_\ell}$ for some $T^*_{r_\ell} \in \mop$.

    While constructing $\m{M}$ from $\mop$ we do the following steps

    \begin{enumerate}
        \item Divide $Y$ into $(t - 1)$ disjoint parts $Y_1, Y_2, \ldots, Y_{j - 1},Y_{j + 1}, \ldots, Y_{t}$ such that
        \begin{itemize}
            \item $|Y_\ell| = |\blue{X_{i,\ell}}|$ $\forall \ell \neq j$ and
            \item $\bigcup_{\ell \neq j}Y_\ell = Y$
        \end{itemize}
        \item Remove $t$ clusters $T_{r_\ell}^*$ for all $\ell \in [t]$ from $\mop$.
        \item Add $t$ new clusters to $\m{M}$
        \begin{itemize}
            \item First cluster : $\inpcl{i} \cup (T^*_{r_j} \setminus (X_{i,j} \cup Y))$
            \item Remaining $(t - 1)$ clusters : $(T^*_{r_\ell} \setminus X_{i,\ell}) \cup Y_\ell) \, \,  \forall \ell \neq j$.
        \end{itemize}
        
        We move the red points of $X_{i, \ell}$ for all $\ell \neq j$ to the set $T_{r_j}^*$.
        
        The main part of the above construction is that we have divided $Y$ into $(t - 1)$ disjoint parts $Y_1, Y_2, \ldots, Y_{j - 1}, Y_{j + 1}, \ldots, Y_{t}$ such that $|Y_{\ell}| = |\blue{X_{i, \ell}}| \, \, \forall \ell \neq j$. After that, we swap the points in $\blue{X_{i,\ell}}$ with the points in $Y_\ell$.

    \end{enumerate}

    Formally,

    \[
    \m{M} = \mop \setminus \{T^*_{r_\ell} \mid \ell \neq j \} \cup \left\{\left( D_i \cup \left( T_{r_j}^* \setminus \left( X_{i,j} \cup Y \right) \right) \right),  \left(  \left( T_{r_\ell}^* \setminus X_{i, \ell} \right) \cup Y_\ell \right) \mid \ell \neq j  \right\}
    \]

    where $Y_\ell \subseteq Y$ such that 

    \begin{itemize}
        \item $|Y_\ell| = |\blue{X_{i,\ell}}|$ $\forall \ell \neq j$ and
        \item $Y_{m} \cap Y_{n} = \emptyset$ $\forall m \neq n$ and $m, n \in [t]$.
        \item $\bigcup_{\ell \neq j}Y_\ell = Y$.
    \end{itemize}

    Note that,
    \begin{align}
     \dist(\inpset, \m{M}) = \dist(\inpset, \mop) &+  \left|\bigcup_{\ell \neq j}X_{i,\ell}\right| \left| T^*_{r_j} \setminus (X_{i,j} \cup Y)\right| + |Y| \left| T^*_{r_j} \setminus (X_{i,j} \cup Y)\right| + \sum_{\ell \neq j}|Y_\ell| |T^*_{r_\ell} \setminus X_{i,\ell}| \n \\
     &- \left|\bigcup_{\ell \neq j}X_{i,\ell}\right| \left|X_{i,j}\right| - |Y| |X_{i,j}| - \sum_{\ell \neq j}|\blue{X_{i,\ell}}| |T^*_{r_\ell} \setminus X_{i,\ell}| \n 
     \end{align}

    This is because
    \begin{itemize}
        \item \textbf{Reason behind the term $\left|\bigcup_{\ell \neq j}X_{i,\ell}\right| \left| T^*_{r_j} \setminus (X_{i,j} \cup Y)\right|$}: The pairs $(u,v)$ such that $u \in \bigcup_{\ell \neq j}X_{i,\ell}$ and $v \in T_{r_j}^* \setminus (X_{i,j} \cup Y)$ are not be counted in $\dist(\inpset, \mop)$ because $u$ and $v$ are present in different clusters both in $\inpset$ and $\mop$. In $\m{M}$, since these pairs $u$ and $v$ belong to the same cluster, these pairs are counted in $\dist(\inpset, \m{M})$.
        
        \item \textbf{Reason behind the term $|Y| |T_{r_j}^* \setminus (X_{i,j} \cup Y)|$}: The pairs $(u,v)$ such that $u \in Y$ and $v \in T_{r_j}^* \setminus (X_{i,j} \cup Y)$ may not be counted in $\dist(\inpset, \mop)$ because $u$ and $v$ are present in same clusters both in $\inpset$ and $T_{r_j}^* \in \mop$. In $\m{M}$, since these pairs $u$ and $v$ belong to the different clusters, these pairs are counted in $\dist(\inpset, \m{M})$.
        
        \item \textbf{Reason behind the term $\sum_{\ell \neq j}|Y_\ell| T^*_{r_\ell} \setminus X_{i,\ell}|$}: The pairs $(u,v)$ such that $u \in Y_\ell$ and $v \in T^*_{r_\ell} \setminus X_{i,\ell}$ are not be counted in $\dist(\inpset, \mop)$ because $u$ and $v$ are present in different clusters both in $\inpset$ and $\mop$. In $\m{M}$, since, these pairs $u$ and $v$ belong to the same cluster $T^*_{r_\ell}$ these pairs are counted in $\dist(\inpset, \m{M})$.

        \item \textbf{Reason behind the term $\left|\bigcup_{\ell \neq j}X_{i,\ell}\right| \left|X_{i,j}\right|$}: The pairs $(u,v)$ such that $u \in X_{i,j}$ and $v \in Y$ are counted in $\dist(\inpset, \mop)$ because $u$ and $v$ are present in different clusters $\mop$ but present in the same cluster in $D_i \in \inpset$. In $\m{M}$, since these pairs $u$ and $v$ belong to the same cluster, these pairs are not counted in $\dist(\inpset, \m{M})$.

        \item \textbf{Reason behind the term $|Y||X_{i,j}|$}: The pairs $(u,v)$ such that $u \in Y$ and $v \in X_{i,j}$ are counted in $\dist(\inpset, \mop)$ because $u$ and $v$ are present in same cluster $T_{r_j}^* \in \mop$ but present in different clusters in $\inpset$. In $\m{M}$, since these pairs $u$ and $v$ belong to the different clusters, these pairs are not counted in $\dist(\inpset, \m{M})$.

        \item \textbf{Reason behind the term $\sum_{\ell \neq j}|\blue{X_{i,\ell}}| |T^*_{r_\ell} \setminus X_{i,\ell}|$}: The pairs $(u,v)$ such that $u \in \blue{X_{i,\ell}}$ and $v \in T^*_{r_\ell} \setminus X_{i,\ell}$ are counted in $\dist(\inpset, \mop)$ because $u$ and $v$ are present in same cluster $\mop$ but present in different clusters in $\inpset$. In $\m{M}$, since these pairs $u$ and $v$ belong to the different clusters, these pairs are not counted in $\dist(\inpset, \m{M})$.

    \end{itemize}

    Now,

    \begin{align}
     \dist(\inpset, \m{M}) &= \dist(\inpset, \mop) +  \left|\bigcup_{\ell \neq j}X_{i,\ell}\right| \left| T^*_{r_j} \setminus (X_{i,j} \cup Y)\right| + |Y| \left| T^*_{r_j} \setminus (X_{i,j} \cup Y)\right| + \sum_{\ell \neq j}|Y_\ell| T^*_{r_\ell} \setminus X_{i,\ell}| \n \\
     &- \left|\bigcup_{\ell \neq j}X_{i,\ell}\right| \left|X_{i,j}\right| - |Y| |X_{i,j}| - \sum_{\ell \neq j}|\blue{X_{i,\ell}}| |T^*_{r_\ell} \setminus X_{i,\ell}| \n \\
     &< \dist(\inpset, \mop) \n \\
     &(\textbf{by} \, \, \cref{clm:bound-on-part-other-than-Xij} \, \,  \text{we have}  \, \, T_{r_j}^* \setminus (X_{i,j} \cup Y) = s_{i,j} \, \textbf{and} \, \, |X_{i,j}| > p ) \n
     \end{align}

\end{proof}

\begin{subclaim}\label{clm:equal-to-one-cluster}
    $X_{i,j} = T_{r_j}^*$ (for some $T_{r_j}^* \in \mop$).
\end{subclaim}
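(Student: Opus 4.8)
The plan is to argue by contradiction. Assume $X_{i,j} \subsetneq T_{r_j}^*$ and set $Z := T_{r_j}^* \setminus X_{i,j} \neq \emptyset$. Since $X_{i,j} = \inpcl{i} \cap T_{r_j}^*$, the set $Z$ is disjoint from $\inpcl{i}$, so every point of $Z$ lies in some input cluster other than $\inpcl{i}$. From \cref{clm:bound-on-part-other-than-Xij} we already have $|Z| \le s_{i,j} < p$, and recall that $|X_{i,j}| \ge p$, since by \cref{clm:at-most-one-partition-size-more-p} $X_{i,j}$ is the unique part of $\inpcl{i}$ of size at least $p$.

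First I would pin down the composition of $Z$ using that $\mop$ is balanced. As $|\blue{T_{r_j}^*}| \equiv 0 \pmod p$ and $|\blue{X_{i,j}}| \equiv s_{i,j} \pmod p$, we get $|\blue{Z}| \equiv p - s_{i,j} \pmod p$. If $s_{i,j} = 0$, then $|\blue{Z}| \equiv 0$ and \cref{clm:bound-on-part-other-than-Xij} forces $|Z| \le 0$, contradicting $Z \neq \emptyset$. If $s_{i,j} > 0$ but $\blue{Z} = \emptyset$, then $0 \equiv p - s_{i,j} \pmod p$, which is impossible. Hence $s_{i,j} > 0$ and $0 < |\blue{Z}| \le |Z| \le s_{i,j} < p$; the only integer in $(0,p)$ congruent to $p - s_{i,j}$ modulo $p$ is $p - s_{i,j}$ itself, so $|\blue{Z}| = p - s_{i,j}$. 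This in turn forces $s_{i,j} \ge p/2$ and $|\red{Z}| = |Z| - (p - s_{i,j}) \le 2 s_{i,j} - p$.

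The next step is to build a clustering $\m{M}$ with $\dist(\inpset,\m{M}) < \dist(\inpset,\mop)$, contradicting the optimality of $\mop$. When $\red{Z} \neq \emptyset$, I take
\[
\m{M} := \big(\mop \setminus \{T_{r_j}^*\}\big) \cup \{\, X_{i,j} \cup \blue{Z},\ \red{Z} \,\}.
\]
Both new clusters have a blue count that is a multiple of $p$, namely $|\blue{X_{i,j}}| + (p - s_{i,j}) \equiv p$ and $0$, so $\m{M}$ is again balanced. Only pairs with one endpoint in $X_{i,j} \cup \blue{Z}$ and the other in $\red{Z}$ change status, passing from ``together'' in $\mop$ to ``separated'' in $\m{M}$. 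Every pair with one endpoint in $X_{i,j} \subseteq \inpcl{i}$ and the other in $\red{Z}$, which is disjoint from $\inpcl{i}$, lies in distinct input clusters, so at least $|X_{i,j}|\,|\red{Z}|$ of these pairs were counted by $\dist(\inpset,\mop)$ and are no longer counted; on the other hand at most $|\blue{Z}|\,|\red{Z}|$ pairs, those between $\blue{Z}$ and $\red{Z}$, could become newly counted. Since $|X_{i,j}| \ge p > p - s_{i,j} = |\blue{Z}|$, the net change is at most $|\red{Z}|\,(|\blue{Z}| - |X_{i,j}|) < 0$, the desired contradiction.

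It remains to treat the case where $Z$ is monochromatic blue, i.e.\ $Z = \blue{Z}$ with $|Z| = p - s_{i,j}$ and $s_{i,j} \ge p/2$; here the split above induces no change, so other clusters of $\mop$ must enter the picture. By \cref{clm:geq-p-minus-si}, $\inpcl{i}$ has at least $p - s_{i,j} = |Z|$ blue points lying in clusters of $\mop$ different from $T_{r_j}^*$; I fix such a set $W$ of exactly $|Z|$ blue points of $\inpcl{i}$ and form $\m{M}$ by \emph{exchanging} $W$ and $Z$: pull $W$ into $T_{r_j}^*$ and relocate the pieces of $Z$ into the clusters vacated by $W$, keeping all of $\inpcl{i}$'s points together in $T_{r_j}^*$, in the spirit of the swap used in the proof of \cref{clm:geq-p-minus-si}. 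Every cluster that is touched keeps its blue count unchanged, since $W$ and $Z$ are monochromatic blue of equal size, so $\m{M}$ stays balanced. Because $|X_{i,j}| \ge p$ while every other part $X_{i,\ell}$ with $\ell \neq j$ has size $< p$, the pairs of $\inpcl{i}$ that become united, those between $W$ and $X_{i,j}$, should dominate the pairs broken by the move; carrying out this pair count carefully --- in particular bounding the cost of inserting the pieces of $Z$ into their new clusters, decomposing the exchange cluster by cluster when $W$ spans several clusters of $\mop$, and invoking minimality of $\mop$ once more to discard the borderline equalities such as $s_{i,j} = p/2$ --- is the main obstacle. Once $\dist(\inpset,\m{M}) < \dist(\inpset,\mop)$ is established in this case as well, we conclude $Z = \emptyset$, i.e.\ $X_{i,j} = T_{r_j}^*$.
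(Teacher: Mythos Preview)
Your proposal is correct and mirrors the paper's two-step argument: first splitting off $\red{Z}$ as its own cluster to reduce to the case $Z=\blue{Z}$ (this is exactly the paper's first construction, written there as $\m{M}=\mop\setminus\{T_{r_j}^*\}\cup\{T_{r_j}^*\setminus\red{Z},\,\red{Z}\}$), and then swapping $Z$ with an equal-size set of blue $\inpcl{i}$-points drawn from the other parts $X_{i,\ell}$ via \cref{clm:geq-p-minus-si} (the paper's second construction). The extra bounds you derive on $Z$'s composition ($|\blue{Z}|=p-s_{i,j}$, $s_{i,j}\ge p/2$) are correct but not needed, and the swap bookkeeping you flag as ``the main obstacle'' is precisely what the paper carries out explicitly.
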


\begin{proof}
     Suppose for contradiction $X_{i,j} \subset T_{r_j}^*$. at first we prove $\red{T_{r_j}^* \setminus X_{i,j}} = \emptyset$.

     Suppose $\red{T_{r_j}^* \setminus X_{i,j}} \neq \emptyset$ then we construct $\m{M}$ from $\mop$ such that $\dist(\inpset, \m{M}) < \dist(\inpset, \mop)$ which is a contradiction and thus it will conclude $\red{T_{r_j}^* \setminus X_{i,j}} = \emptyset$.

     \paragraph{Construction of $\m{M}$ from $\mop$ \\\\}

     \[ \m{M} = \mop \setminus \{T_j^*\} \cup \left\{ \left( T_j^* \setminus \red{T_{r_j}^* \setminus X_{i,j}} \right), \red{T_{r_j}^* \setminus X_{i,j}} \right\} \]

     That is in $\m{M}$ we remove the set $T_j^*$ from $\mop$ and add sets $\left( T_j^* \setminus \red{T_{r_j}^* \setminus X_{i,j}} \right)$ and $\red{T_{r_j}^* \setminus X_{i,j}}$.

     Note that,
     \begin{align}
         \dist(\inpset, \m{M}) &= \dist(\inpset, \mop) + |\red{T_{r_j}^* \setminus X_{i,j}}| |\blue{T_{r_j}^* \setminus X_{i,j}}| - |\red{T_{r_j}^* \setminus X_{i,j}}| |X_{i,j}| \n
     \end{align}

     This is because

\begin{itemize}
    \item \textbf{Reason behind the term $|\red{T_{r_j}^* \setminus X_{i,j}}| |\blue{T_{r_j}^* \setminus X_{i,j}}|$ :} The pairs $(u,v)$ such that $u \in \red{T_{r_j}^* \setminus X_{i,j}}$ and $v \in \blue{T_{r_j}^* \setminus X_{i,j}}$ are not counted in $\dist(\inpset, \mop)$ because $u$ and $v$ are present in same clusters both in $\inpset$ and $T_{r_j}^* \in \mop$. In $\m{M}$, since, these pairs $u$ and $v$ belong to the different clusters $\red{T_{r_j}^* \setminus X_{i,j}}$ and $\left( T_j^* \setminus \red{T_{r_j}^* \setminus X_{i,j}} \right)$ respectively thus these pairs are counted in $\dist(\inpset, \m{M})$.

    \item \textbf{Reason behind the term $|\red{T_{r_j}^* \setminus X_{i,j}}| |X_{i,j}|$ :} The pairs $(u,v)$ such that $u \in \red{T_{r_j}^* \setminus X_{i,j}}$ and $v \in X_{i,j}$ are counted in $\dist(\inpset, \mop)$ because $u$ and $v$ are present in the same cluster $T_{r_j}^* \in \mop$ but present in different clusters in $\inpset$. In $\m{M}$, since these pairs $u$ and $v$ belong to the different clusters, these pairs are not counted in $\dist(\inpset, \m{M})$.
\end{itemize}

Now, we have proved that $\red{T_{r_j}^* \setminus X_{i,j}} = \emptyset$. We also know that $|\blue{T_{r_j}^* \setminus X_{i,j}}| = (p - s_{i,j})$ thus $|T_{r_j}^* \setminus X_{i,j}| = (p - s_{i,j})$. From \cref{clm:geq-p-minus-si} we get that $|\blue{\bigcup\limits_{\ell \neq j}X_{i,\ell}}| \geq (p - s_{i,j})$. Let $Y \subseteq \blue{\bigcup\limits_{\ell \neq j}X_{i,\ell}}$ such that $|Y| = (p - s_{i,j})$.

To prove $X_{i,j} = T_{r_j}^*$ (for some $T_{r_j}^* \in \mop$) we construct $\m{M}$ from $\mop$ by swapping the points present in $Y$ with the points in $T_{r_j}^* \setminus X_{i,j}$. After that, we prove $\dist(\inpset, \m{M}) < \dist(\inpset, \mop)$. 

\paragraph{Construction of $\m{M}$ from $\mop$}: Suppose $Y_\ell = Y \cap X_{i,\ell}$ $\forall \ell \neq j$. We divide $T_{r_j}^* \setminus X_{i,j}$ into subsets of small size $Z_1, \ldots, Z_t$ such that $|Z_{\ell}| = |Y_{\ell}|$ $\forall \ell \neq j$. 

$\m{M} = \mop \setminus \left\{ T^*_{r_k} \, \middle\vert \, k \in \left\{1, \ldots, t \right\}\right\} \, \cup \, \left\{ (X_{i,j} \cup Y), (T^*_{r_\ell} \setminus Y_\ell \, \middle\vert \, \forall \ell \neq j)\right\}$.

Note that,

\begin{align}
    \dist(\inpset, \m{M}) < \, \, &\dist(\inpset, \mop) \n \\
    &+ \left(p\sum_{\ell \neq j}|Y_{\ell}|\, \, + p\sum_{\ell \neq j}|Z_{\ell}| + \sum_{\ell_1 < \ell_2} |Z_{\ell_1}| |Z_{\ell_2}|\right) \n \\
    &- \left( |X_{i,j}|\sum_{\ell \neq j}|Z_{\ell}| + |X_{i,j}|\sum_{\ell \neq j}|Y_{\ell}| + \sum_{\ell_1 < \ell_2} |Y_{\ell_1}| |Y_{\ell_2}|\right)
\end{align}

\begin{itemize}
    \item \textbf{Reason behind the terms $p\sum_{\ell \neq j}|Y_{\ell}|$ and $p\sum_{\ell \neq j}|Z_{\ell}|$:} The pairs $(u,v)$ such that $u \in Y_\ell$ and $v \in X_{i,\ell}$ for $\ell \neq j$ are not counted in $\dist(\inpset, \mop)$ because $u$ and $v$ are present in the same clusters both in $\mop$ and $\inpset$. In $\m{M}$, since these pairs $u$ and $v$ belong to the different clusters, these pairs are counted in $\dist(\inpset, \m{M})$. Since, $|X_{i,\ell}| \leq p$ for $\ell \neq j$ thus we have the term $p\sum_{\ell \neq j}|Y_{\ell}|$.

    Again, since The pairs $(u,v)$ such that $u \in Z_\ell$ and $v \in X_{i,\ell}$ for $\ell \neq j$ are not counted in $\dist(\inpset, \mop)$ because $u$ and $v$ are present in different clusters both in $\mop$ and $\inpset$. In $\m{M}$, since, these pairs $u$ and $v$ belong to the same cluster these pairs are counted in $\dist(\inpset, \m{M})$ and thus we have the term $p\sum_{\ell \neq j}|Y_{\ell}|$.

    \item \textbf{Reason behind the term $\sum_{\ell_1 < \ell_2} |Z_{\ell_1}| |Z_{\ell_2}|$ :} The pairs $(u,v)$ such that $u \in Z_{\ell_1}$ and $v \in Z_{\ell_2}$ are not counted in $\dist(\inpset, \mop)$ because $u$ and $v$ are present in the same clusters both in $\mop$ and $\inpset$. In $\m{M}$, since these pairs $u$ and $v$ belong to the different clusters, these pairs are counted in $\dist(\inpset, \m{M})$. 

    \item \textbf{Reason behind the terms $|X_{i,j}|\sum_{\ell \neq j}|Z_{\ell}|$ and $|X_{i,j}|\sum_{\ell \neq j}|Y_{\ell}|$:} The pairs $(u,v)$ such that $u \in X_{i,j}$ and $v \in Z_\ell$ are counted in $\dist(\inpset, \mop)$ because $u$ and $v$ are present in the same cluster $T_{r_j}^* \in \mop$ but present in different clusters in $\inpset$. In $\m{M}$, since these pairs $u$ and $v$ belong to the different clusters, these pairs are not counted in $\dist(\inpset, \m{M})$.

    Similarly, the pairs $(u,v)$ such that $u \in X_{i,j}$ and $v \in Y_\ell$ are counted in $\dist(\inpset, \mop)$ because $u$ and $v$ are present in different clusters in $\mop$ but present in the same cluster in $\inpset$. In $\m{M}$, since these pairs $u$ and $v$ belong to the same cluster, these pairs are not counted in $\dist(\inpset, \m{M})$.

    \item \textbf{Reason behind the term $\sum_{\ell_1 < \ell_2} |Y_{\ell_1}| |Y_{\ell_2}|$ :} The pairs $(u,v)$ such that $u \in Y_{\ell_1}$ and $v \in Y_{\ell_2}$ are counted in $\dist(\inpset, \mop)$ because $u$ and $v$ are present in different clusters in $\mop$ but in same cluster in $\inpset$. In $\m{M}$, since these pairs $u$ and $v$ belong to the same cluster, these pairs are not counted in $\dist(\inpset, \m{M})$. 
\end{itemize}
\end{proof}

\begin{subclaim}\label{clm:red-subseteq}
    $\red{\inpcl{i}} \subseteq X_{i,j}$.
\end{subclaim}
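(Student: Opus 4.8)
The plan is to argue by contradiction, in the same spirit as the preceding sub-claims: assuming $\red{\inpcl{i}} \not\subseteq X_{i,j}$, I would construct a $\mopdef$ $\m{M}$ from $\mop$ with $\dist(\inpset, \m{M}) < \dist(\inpset, \mop)$, contradicting the choice of $\mop$ as a closest $\mopdef$. Since the parts $X_{i,1}, \dots, X_{i,t}$ partition $\inpcl{i}$, and hence also partition $\red{\inpcl{i}}$, the assumption $\red{\inpcl{i}} \not\subseteq X_{i,j}$ yields an index $\ell \neq j$ with $R := \red{X_{i,\ell}} = \red{\inpcl{i}} \cap X_{i,\ell} \neq \emptyset$. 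By \cref{clm:at-most-one-partition-size-more-p}, $X_{i,j}$ is the only part of size at least $p$, so $|\blue{X_{i,\ell}}| \le |X_{i,\ell}| < p \le |X_{i,j}|$, and by \cref{clm:equal-to-one-cluster}, $X_{i,j} = T_{r_j}^*$.

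The transformation I would apply moves the (all-red) set $R$ out of its $\mop$-cluster $T_{r_\ell}^*$, which contains $X_{i,\ell} \supseteq R$, and into $T_{r_j}^* = X_{i,j}$: formally $\m{M} = \mop \setminus \{T_{r_\ell}^*, T_{r_j}^*\} \cup \{(T_{r_\ell}^* \setminus R),\ (X_{i,j} \cup R)\}$, where the first new set is discarded when it is empty (the degenerate case $T_{r_\ell}^* = R$). Moving only red points leaves every cluster's blue count unchanged, so $\m{M}$ is again a $\mopdef$.

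It remains to track the change in distance, which I expect to be the only delicate point. Only pairs with exactly one endpoint in $R$ change their co-clustering status, in three groups. Pairs with the other endpoint in $X_{i,j}$ lie inside $\inpcl{i}$, are separated in $\mop$ and united in $\m{M}$, lowering the distance by $|R|\,|X_{i,j}|$. Pairs with the other endpoint in $T_{r_\ell}^* \setminus X_{i,\ell}$ cross input clusters — every point of $\inpcl{i} \cap T_{r_\ell}^*$ already lies in $X_{i,\ell}$ — and are united in $\mop$ but separated in $\m{M}$, lowering the distance by $|R|\,|T_{r_\ell}^* \setminus X_{i,\ell}|$. Pairs with the other endpoint in $\blue{X_{i,\ell}} = X_{i,\ell} \setminus R$ lie inside $\inpcl{i}$, are united in $\mop$ but separated in $\m{M}$, raising the distance by $|R|\,|\blue{X_{i,\ell}}|$. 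No pairs of $R$ with $T_{r_j}^* \setminus X_{i,j}$ occur, since that set is empty. Hence
\[
\dist(\inpset, \m{M}) - \dist(\inpset, \mop) = |R|\bigl(|\blue{X_{i,\ell}}| - |X_{i,j}| - |T_{r_\ell}^* \setminus X_{i,\ell}|\bigr) < 0,
\]
the strict inequality following from $R \neq \emptyset$ and $|\blue{X_{i,\ell}}| < p \le |X_{i,j}|$. This contradicts the optimality of $\mop$, so $\red{\inpcl{i}} \subseteq X_{i,j}$. Everything apart from this sign bookkeeping is immediate from the earlier sub-claims.
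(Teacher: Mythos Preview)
Your proof is correct and follows the same contradiction strategy as the paper: move misplaced red points of $\inpcl{i}$ into $T_{r_j}^* = X_{i,j}$ and check that the distance strictly drops. The only difference is cosmetic---the paper moves the red points from \emph{all} $X_{i,\ell}$ with $\ell \neq j$ at once and bounds the increase by $p\bigl|\red{\bigcup_{\ell \neq j} X_{i,\ell}}\bigr|$, whereas you fix a single $\ell$ and track the three affected pair-groups explicitly; your single-$\ell$ bookkeeping is in fact a bit cleaner and avoids any cross-terms between different $\ell$'s.
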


\begin{proof}
    Suppose not then $\red{\bigcup_{\ell \neq j} X_{i,\ell}} \neq \emptyset$

    \paragraph{Construction of $\m{M}$ from $\mop$}:

    $\m{M} = \mop  \setminus \left\{ T^*_{r_\ell} \, \, \middle\vert \, \, (\red{D_i} \cap T^*_{r_\ell}) \neq \emptyset\right\}\cup \left\{ \left(T_{r_j}^* \cup \red{\bigcup\limits_{\ell \neq j} X_{i,\ell}} \right), \left( T^*_{r_\ell} \setminus \red{D_i} \, \,  \middle\vert \, \, i \neq \ell \right) \right\}$

    Note that,

    \begin{align}
        \dist(\inpset, \m{M}) &< \dist(\inpset, \mop) + p\left| \red{\bigcup_{\ell \neq j}X_{i,\ell}}\right| - \left|X_{i,j}\right| \left|\red{\bigcup_{\ell \neq j} X_{i,\ell}}\right| \n \\
        &< \dist(\inpset, \mop) && (\textbf{as} \, \, |X_{i,j}| > p)\n
    \end{align}

    \begin{itemize}
        \item \textbf{Reason behind the term $p\left| \red{\bigcup_{\ell \neq j}X_{i,\ell}}\right|$}: The pairs $(u, v)$ such that $u \in \red{\bigcup_{\ell \neq j} X_{i,\ell}}$ and $v \in bigcup_{\ell \neq j} X_{i,\ell}$ are not counted in $\dist(\inpset, \mop)$ because these pairs $u$ and $v$ lie in the same cluster for both $\inpset$ and $\mop$ but since these pairs are present in different clusters in $\m{M}$ they are counted in $\dist{\inpset, \m{M}}$. Now, since we know that for all $\ell \neq j$, $|X_{i,\ell}| \leq p$ hence the total cost paid for these pairs is at most $p \, \,  |\red{\bigcup_{\ell \neq j} X_{i,\ell}}|$.

        \item \textbf{Reason behind the term $\left|X_{i,j}\right| \left|\red{\bigcup_{\ell \neq j} X_{i,\ell}}\right|$}: The pairs $(u,v)$ such that $u \in X_{i,j}$ and $v \in \red{\bigcup_{\ell \neq j} X_{i,\ell}}$ are counted in $\dist(\inpset, \mop)$ because these pairs $u$ and $v$ lie in the different clusters in $\mop$ but same cluster in $\inpset$. In $\m{M}$, since these pairs lie in the same cluster, these pairs are not counted in $\dist{\inpset, \m{M}}$. Now, since in~\cref{clm:equal-to-one-cluster} we proved that $X_{i,j} = T^*_{r_j}$ there is no other cost paid by the algorithm for merging the points of $\red{\bigcup_{\ell \neq j} X_{i,\ell}}$ to $X_{i,j}$.
    \end{itemize}
    
\end{proof}

Now, we are ready to prove \cref{clm:merge-case-structure}

\begin{proof}[Proof of \cref{clm:merge-case-structure}]
    From \cref{clm:equal-to-one-cluster} we get that $X_{i,j} = T_{r_j}^*$, from \cref{clm:red-subseteq} we get $\red{\inpcl{i}} \subseteq X_{i,j}$ and from \cref{clm:at-most-one-partition-size-more-p} we get $|X_{i,\ell}| < p$ $\forall \, \, \ell \neq j$. Thus, $(i)$ is true.

    If such a partition $X_{i,j}$ does not exist, then $(ii)$ is true. Hence, either $(i)$ is true or $(ii)$ is true.

\end{proof}

We also need \cref{clm:merge-case-lp-bound} and \cref{clm:cost-three-and-cost-four} to prove \cref{lem:main-merge-case}. 

\begin{claim}\label{clm:merge-case-lp-bound}
    Suppose $y_{i,j}$ takes the value $1$ if we cut the $j$th subset from a cluster $\inpcl{i}$ in the subroutine $\algom$ otherwise $0$. Then

    \begin{align}
        \sum_{\inpcl{i} \in \cuta}\left(\sum_{j = 1}^n y_{i,j} \, \kappa_j(\inpcl{i}) + y_{i,0} \, \kappa_0(\inpcl{i}) - \mcostf{\inpcl{i}}\right) + \sum_{\inpcl{i} \in \inpset} \mcostf{\inpcl{i}} \leq \dist(\inpset, \mop). \n
    \end{align}
\end{claim}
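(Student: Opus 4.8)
The plan is to turn the claim into the statement that a certain greedily built cutting schedule is no more expensive than $\dist(\inpset,\mop)$, and then to certify the latter as an upper bound by reading a competing schedule off the structure of $\mop$. First I would simplify the left-hand side. Since $\inpset=\cuta\sqcup\mergea$, since every already-balanced cluster contributes $\mcostf=0$, and since in the merge case every $\cut$-cluster is fully cut in $\algog$ (so $\cut\subseteq\cuta$), the summands $-\mcostf{\inpcl{i}}$ over $\cuta$ cancel the matching summands of $\sum_{\inpcl{i}\in\inpset}\mcostf{\inpcl{i}}$, leaving the left-hand side equal to $\costone{\out}+\costtwo{\out}$: the cost of detaching blue blocks (surplus blocks in $\algog$, plus surplus and size-$p$ blocks in $\algom$) together with the cost of filling the deficits of the clusters merged into. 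Letting $G$ denote the total of the costs $\costf{W_{k,v_k}}$ over the $W/p$ blocks that $\algom$ detaches, a direct bookkeeping gives $\costone{\out}+\costtwo{\out}=\sum_{\inpcl{i}\in\cut}\ccostf{\inpcl{i}}+\sum_{\inpcl{j}\in\merge}\mcostf{\inpcl{j}}+G$, so the claim is equivalent to
\[
G\;\le\;\dist(\inpset,\mop)\;-\;\sum_{\inpcl{i}\in\cut}\ccostf{\inpcl{i}}\;-\;\sum_{\inpcl{j}\in\merge}\mcostf{\inpcl{j}}.
\]

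\textbf{The greedy schedule and its optimality.} The value $G$ is the cost of the block-selection schedule that $\algom$ implements: repeatedly detach the cheapest currently available block, where the $z$-th size-$p$ block of an already-balanced cluster $\inpcl{k}$ costs $\kappa_z(\inpcl{k})=p(|\inpcl{k}|-zp-\surp{\inpcl{k}})$, and the surplus block of a still-unbalanced $\merge$-cluster $\inpcl{j}$ costs $\kappa_0(\inpcl{j})-\mcostf{\inpcl{j}}$ (detaching it costs $\kappa_0(\inpcl{j})$ but also removes the need to spend $\mcostf{\inpcl{j}}$ filling $\inpcl{j}$); each detached block resolves exactly $p$ units of the residual deficit $W$. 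I would show, by an exchange argument applied to any schedule resolving $W$ units that respects the within-cluster ordering of blocks, that this cheapest-first schedule is optimal; the only non-monotonicity to manage is that a $\merge$-cluster's surplus block can cost more than its later size-$p$ blocks, which is harmless because detaching it only lowers the cost of the remaining blocks of that cluster.

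\textbf{A competing schedule read off $\mop$.} It remains to produce one schedule resolving $W$ units whose cost is at most $\dist(\inpset,\mop)-\sum_{\inpcl{i}\in\cut}\ccostf{\inpcl{i}}-\sum_{\inpcl{j}\in\merge}\mcostf{\inpcl{j}}$. Here I invoke \cref{clm:merge-case-structure}: for every $\inpcl{i}$, either $\mop$ keeps a single part $X_{i,j}=\mopcl{r_j}$ with $\red{\inpcl{i}}\subseteq X_{i,j}$ and scatters the remaining blue points of $\inpcl{i}$ across parts of size $<p$, or $\mop$ scatters all of $\inpcl{i}$ across parts of size $<p$. By \cref{lem:main-structure-of-M*}, $\opt_{\inpcl{i}}\ge\surp{\inpcl{i}}(|\inpcl{i}|-\surp{\inpcl{i}})=\ccostf{\inpcl{i}}$ when $\inpcl{i}\in\cut$, and $\opt_{\inpcl{i}}\ge(p-\surp{\inpcl{i}})(|\inpcl{i}|-\surp{\inpcl{i}})=\mcostf{\inpcl{i}}-\surp{\inpcl{i}}(p-\surp{\inpcl{i}})$ when $\inpcl{i}\in\merge$; moreover, over and above these amounts, $\opt_{\inpcl{i}}$ pays the inter-cluster cost of the size-$<p$ pieces that $\mop$ scatters out of $\inpcl{i}$. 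I would group the blue points $\mop$ removes from $\inpcl{i}$ (those beyond the first $\surp{\inpcl{i}}$, or beyond the kept part $X_{i,j}$) into size-$p$ blocks, take the $\mop$-induced schedule to detach exactly those blocks together with the surplus blocks of the $\merge$-clusters that $\mop$ elects to shrink rather than fill, and charge each block's cost $\kappa_z$ or $\kappa_0-\mcostf$ to the part of $\sum_i\opt_{\inpcl{i}}$ not already consumed by $\sum_{\inpcl{i}\in\cut}\ccostf{\inpcl{i}}+\sum_{\inpcl{j}\in\merge}\mcostf{\inpcl{j}}$. Since the global surplus/deficit accounting for $\mop$ coincides with the algorithm's (both governed by $\sum_i|\blue{\inpcl{i}}|\equiv 0\pmod p$), this schedule resolves exactly $W$ units as well, so it is admissible in the sense of Step~2.

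\textbf{Main obstacle.} The delicate part is the charging of the previous paragraph: one must verify that the inter-cluster payments $\mop$ makes for its scattered pieces are at least the $\kappa_z$ (resp.\ $\kappa_0-\mcostf$) costs of the corresponding blocks, that no portion of any $\opt_{\inpcl{i}}$ is charged twice, and that the ``$-\mcostf$'' credit, which appears both when the algorithm and when $\mop$ shrink a large-surplus cluster instead of filling it, is not double-counted. This is exactly where \cref{clm:merge-case-structure} (bounding how $\mop$ may scatter a cluster), together with \cref{prop:mod-sum-ineq} and \cref{prop.bound.the.firsterm} (controlling the residual-deficit bookkeeping block by block), are used in tandem, and it is what keeps the constant at the value required for the final $3$-approximation. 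Combining the three steps yields $\costone{\out}+\costtwo{\out}=\sum_{\inpcl{i}\in\cut}\ccostf{\inpcl{i}}+\sum_{\inpcl{j}\in\merge}\mcostf{\inpcl{j}}+G\le\dist(\inpset,\mop)$, which is the claim.
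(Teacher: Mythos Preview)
Your proposal is correct and takes essentially the same route as the paper: both rewrite the left-hand side as $\costone{\out}+\costtwo{\out}$, cast the residual problem as choosing $W_g/p$ blocks subject to a within-cluster prefix constraint (the paper phrases this as an ILP with $x_{i,z}\le x_{i,z'}$ for $z>z'$, you as a ``schedule''), argue that the algorithm's greedy selection is optimal for that problem, and use \cref{clm:merge-case-structure} to extract from $\mop$ a feasible competitor whose cost is bounded by $\dist(\inpset,\mop)$. The only cosmetic difference is that the paper proves the stronger identification $x_{i,z}=1\iff y_{i,z}=1$ (via two sub-claims that both $\algog$ and $\mop$ cut exactly $W_g/p$ subsets), whereas you argue only the inequality you need; both leave the greedy-optimality step at a comparable level of detail.
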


\begin{proof}
    From~\cref{clm:merge-case-structure} we get that if in $\mop$, $D_i$ gets split into $X_{i,1}, \ldots, X_{i,t}$ that is 

    \begin{itemize}
        \item $X_{i,j} \subseteq T^*_k$ (for some $T_k^* \in \mop$) $\forall j \in [t]$.
        \item $X_{i,j_1} \cap X_{i,j_2} = \emptyset$ for $j_1 \neq j_2$ and $j_1, j_2 \in [t]$.
        \item $\bigcup\limits_{j \in [t]} X_{i,j}= D_i$.
    \end{itemize}
    
    then

    \begin{enumerate}
        \item Either $\exists j \in [t]$, such that $X_{i,j} = T_{r_j}^*$ (for some $T_{r_j}^* \in \mop)$ and $\red{\inpcl{i}} \subseteq X_{i,j}$ and $|X_{i,\ell}| < p$ $\forall \, \, \ell \neq j$.
        \item or $|X_{i,\ell}| < p$ $\forall \, \, \ell \in [t]$.
    \end{enumerate}

    This implies 
    
    \[\left|  \bigcup\limits_{\ell \neq j} X_{i,\ell} \right| \mod p = s_i\]

    Hence,

    \[\left|  \bigcup\limits_{\ell \neq j} X_{i,\ell} \right| = ap + s_i  \, \, \text{for some constant $a$.} \]

    Let us partition the set $\blue{D_i}$ into subsets $W_{i,0}, W_{i,1}, \ldots , W_{i,m}$ such that $m = \left( |\blue{D_i}| - s_i \right)/p$ where $|W_{i,0}| = s_i$ and $|W_{i,z}| = p$ for all $z > 0$.

    If $\left|  \bigcup\limits_{\ell \neq j} X_{i,\ell} \right| = ap + s_i$ then we say $\mop$ has cut $(a + 1)$ subsets $W_{i,0}, W_{i,1}, \ldots, W_{i,a}$ from $D_i$.

    Now since $|X_{i, \ell}| < p$ $\forall \ell \neq j$ hence the pair $(u,v)$ such that $u \in W_{i,z_1}$ and $v \in W_{i,z_2}$ for $z_1 \neq z_2$ must be counted in $\dist(\inpset, \mop)$.

    Hence, the cost of 

    \begin{enumerate}
        \item Cutting $W_{i,0}$ from $D_i$ is : $\kappa_0(D_i) = s_i(|D_i| - s_i)$
        \item Cutting $W_{i,z}$ from $D_i$ for $z > 0$ is : $\kappa_z(D_i) = p(|D_i| - (zp + s_i))$.
    \end{enumerate}

    Recall in $\mop$, $D_i$ gets split into $X_{i,1}, \ldots, X_{i,t}$. We say a cluster $D_i \in \inpset$ also belongs to a set $\cuto$ (say) if $t > 1$; otherwise, it belongs to a set $\mergeo$ (say). Informally, $\mop$ has cut some points from the clusters $D_i \in \cuto$ and merged some points to the clusters $D_i \in \mergeo$. 
    
    $\mop$ must pay the minimum cost for cutting from the clusters that belong to $\cuto$ and merging to the clusters in $\mergeo$. More specifically,

    \begin{align}
        \dist(\inpset, \mop) \geq \text{min} &\sum_{D_i \in \cuto} \sum_{z = 0}^m x_{i,z} \kappa_z (D_i) + \sum_{D_i \in \mergeo} \mu(D_i) \n \\
        &\text{subject to} \n \\
        &x_{i,z} \leq x_{i,z'} \, \, \text{for $z > z'$} \label{equn:constraint-one}\\
        &\sum_{D_i \in \inpset} \sum_{z = 0}^m x_{i,z} = \frac{W_g}{p} \, \, \text{where $W_g = \sum\limits_{D_i \in \merge}\defi{D_i}$} \label{equn:constraint-two}\\
        &x_{i,z} \in \{ 0, 1\} \label{equn:constraint-three}
    \end{align}

    Here, $m = (|\blue{D_i}| - s_i)/p$ and $x_{i,z}$ takes the value $1$ if $\mop$ cuts the subset $W_{i,z}$ from a cluster $D_i$ otherwise $0$. Constraint \cref{equn:constraint-one} denotes that if $\mop$ cuts a subset $W_{i,z}$ then it must cut all subsets $W_{i,z'}$ such that $z' < z$. Constraint \cref{equn:constraint-two} denotes that $\mop$ cuts exactly $W_g/p$ such subsets where $W_g$ is the sum of the deficits of all clusters that belong to $\merge$. Note the difference between the notations $W = \sum_{D_i \in \merge'}\defi{D_i}$ which is used in our subroutine $\algom$ and the notation $W_g = \sum_{D_i \in \merge}\defi{D_i}$. Since here, we are analyzing the algorithm $\algog$, we are using $W_g$ instead of $W$.

    Now, we prove two claims that explain the second constraint.

    \begin{claim}\label{clm:our-algorithm-cuts-w/p}
        $\algom$ cuts exactly $W_g/p$ subsets from the clusters $D_i \in \inpset$.
    \end{claim}

    \begin{proof}
        Suppose the algorithm cuts a subset $W_{i,z}$ from the cluster $D_i$ and merges it to another set of clusters $D_{m_1}, D_{m_2}, \ldots, D_{m_t}$(say).

        When $\algog$ cuts a partition $W_{i,z}$ the deficit of $D_i$ reduces by $(p - |W_{i,z}|)$ that is it gets updated to
        \begin{align}
            \defi{D_i} = \defi{D_i} - (p - |W_{i,z}|) \label{equn:main-claim-merge-case-one}
        \end{align}
        and when it merges this subset to a set of clusters $D_{m_1}, D_{m_2}, \ldots, D_{m_t}$ then the total deficit of clusters other than $D_i$ reduces by $|W_{i,z}|$ that is it gets updated to
        \begin{align}
            \sum_{D_j \neq D_i}\defi{D_j} = \sum_{D_j \neq D_i}\defi{D_j} - |W_{i,z}| \label{equn:main-claim-merge-case-two}
        \end{align}
        Adding \cref{equn:main-claim-merge-case-one} and \cref{equn:main-claim-merge-case-two}, we get that the total deficit decreases exactly by $p$ for cutting one part.
        \begin{align}
            \sum_{D_i \in \nc \cup \merge'}\defi{D_i} = \sum_{D_i \in \nc \cup \merge'}\defi{D_j} - p .
        \end{align}
         Hence, to decrease the deficit $W_g$ to zero, our algorithm cuts exactly $W/p$ subsets.
    \end{proof}

    \begin{claim}\label{clm:mop-cuts-w/p}
        $\mop$ cuts exactly $W_g/p$ subsets from the clusters $D_i \in \inpset$.
    \end{claim}

    \begin{proof}
        In the clustering, $\mop$ for each cluster $T_i \in \mop$, $\blue{T_i}$ is divisible by $p$ but in the clustering $\inpset$ (input of $\algog$) the total deficit is $W_g$. 

        If $\mop$ cuts a partition $W_{i,z}$ from $\inpcl{i}$ then deficit of $D_i$ would reduce by $(p - |W_{i,z}|)$. That is, it gets updated to 
        \begin{align}
            \defi{D_i} = \defi{D_i} - (p - |W_{i,z}|). \label{equn:main-claim-two-merge-case-one}
        \end{align}
        Now, these $|W_{i,z}|$ points can reduce the total deficit of other clusters by at most $|W_{i,z}|$. That is, it gets updated to at most
        \begin{align}
            \sum_{D_j \neq D_i}\defi{D_j} \geq \sum_{D_j \neq D_i}\defi{D_j} - |W_{i,z}| .\label{equn:main-claim-two-merge-case-two}
        \end{align}
        Adding \cref{equn:main-claim-two-merge-case-one} and \cref{equn:main-claim-two-merge-case-two}, we get that $\mop$ can decrease the total deficit at most by $p$ by cutting one part.
        \begin{align}
            \sum_{D_i \in \nc \cup \merge'}\defi{D_i} \geq \sum_{D_i \in \nc \cup \merge'}\defi{D_j} - p .
        \end{align}
         Hence, to decrease the deficit $W_g$ to zero, $\mop$ must cut at least $W/p$ subsets.

         Since, by \cref{clm:our-algorithm-cuts-w/p}, we get our algorithm can fulfill the deficit of $W_g$ only by cutting $W_g/p$ subsets. Since $\mop$ is the optimal clustering, it must cut at most $W_g/p$ subsets.

         Thus we get $\mop$ cuts exactly $W_g/p$ subsets.
    \end{proof}

    Now,

    \begin{align}
        &\text{min} \sum_{D_i \in \cuto} \sum_{z = 0}^m x_{i,z} \kappa_z (D_i) + \sum_{D_i \in \mergeo} \mu(D_i) \n \\
        \equ \, \, &\text{min} \sum_{D_i \in \cuto} \sum_{z = 0}^m x_{i,z} \kappa_z (D_i) - \sum_{D_i \in \cuto} \mu(D_i) + \sum_{D_i \in \inpset} \mu(D_i) \n \\
        \equ \, \, &\text{min} \sum_{D_i \in \cuto} \left( \sum_{z = 1}^m x_{i,z} \kappa_z (D_i) + x_{i,0} \kappa_0(D_i) - \mu(D_i) \right) + \sum_{D_i \in \inpset} \mu(D_i) \n 
    \end{align}

    Now, to prove our claim

    \begin{align}
        \sum_{\inpcl{i} \in \cuta}\left(\sum_{j = 1}^n y_{i,z} \, \kappa_j(\inpcl{i}) + y_{i,0} \, \kappa_0(\inpcl{i}) - \mcostf{\inpcl{i}}\right) + \sum_{\inpcl{i} \in \inpset} \mcostf{\inpcl{i}} \leq \dist(\inpset, \mop). \n
    \end{align}

    We need to prove that $x_{i,z} = 1$ iff $y_{i,z} = 1$. 

    \begin{claim}
        $x_{i,z} = 1$ iff $y_{i,z} = 1$.
    \end{claim}

    \begin{proof}
        Let's look at the objective function of the above ILP.

    \begin{align}
        \text{min} \sum_{D_i \in \cuto} \left( \sum_{z = 1}^m x_{i,z} \kappa_z (D_i) + x_{i,0} \kappa_0(D_i) - \mu(D_i) \right) + \sum_{D_i \in \inpset} \mu(D_i) \n
    \end{align}

    and the constraint $1$ of the ILP

    \begin{align}
        x_{i,z} \leq x_{i,z'} \, \, \text{for $z > z'$}\n
    \end{align}

    The above constraint implies that if $\mop$ cuts a cluster $D_i$, it must cut the $0$th subset $W_{i,0}$ of $D_i$. Now, to minimize the above objective function, we can assume the cost assigned to the $0$th subset of $D_i$ is $\ccostf{D_i} - \mcostf{D_i}$, (note $\kappa_0(D_i) = \ccostf{D_i}$).

    and the cost assigned to the $z$th subset of $D_i$ is $\kappa_z(D_i)$.

    More specifically,

    \begin{align}
        &\textit{cost}(W_{i,z}) = \ccostf{D_i} - \mcostf{D_i} \, \, \text{for $z = 0$} \n \\
        &\textit{cost}(W_{i,z}) = \kappa_z(D_i) \, \, \text{for $z \geq 1$}
    \end{align}

    Now, minimizing the above objective function is equivalent to minimizing the total cost of subsets that $\mop$ cuts. 
    
    Since, in \cref{clm:mop-cuts-w/p} we get $\mop$ cuts exactly $W_g/p$ subsets so to solve the above ILP exactly our algorithm needs to cut $W_g/p$ subsets that have the minimum cost. Note that for all clusters $D_i \in \cut$, we have 
    
    \begin{align}
    \textit{cost}(W_{i,0}) &= (\ccostf{D_i} - \mcostf{D_i}) \n \\
    &\leq (\ccostf{D_j} - \mcostf{D_j}) \, \, \forall D_j \in \merge
    \end{align}

    This is because 

    \begin{align}
        &\textit{cost}(W_{j,0}) - \textit{cost}(W_{i,0}) \n \\
        &= s_j(|D_j| - s_j) - (p - s_j) |D_j| - s_i(|D_i| - s_i) + (p - s_i)|D_i| \n \\
        &= (2s_j - p) |D_j| - s_j^2 + (p - 2s_i)|D_i| + s_i^2 \label{equn:expression-claim-one} \\
        &\geq 0 \n
    \end{align}

    Since $s_j = (\blue{D_j} \mod p )\geq p/2$ and $s_i = (\blue{D_i} \mod p) < p/2$ we get that all the terms in the \cref{equn:expression-claim-one} are positive except the term $-s_j^2$. Now, $|D_i|p > s_j^2$ because $p > s_j$, and we can assume $|D_i| \geq p$ because if $D_i$ is less than $p$ then $\ccostf{D_i} = 0$ and we can ignore those clusters because we cut no points from a cluster $D_i$ where $|D_i| < p$.
    
    Hence, $\mop$ must cut the subset $W_{i,0}$ from all clusters $D_i \in \cut$.

    Now, in the subroutine $\algom$ at each iteration, we select the subsets $W_{i,z}$ for which $\textit{cost}(W_{i,z})$ is the minimum. From \cref{clm:our-algorithm-cuts-w/p} we get $\algog$ also cuts $W_g/p$ subsets which is same as $\mop$.
    
    Thus, our algorithm also selects the subsets that have the minimum cost and hence $x_{i,j} = 1$ iff $y_{i,j} = 1$. 
    \end{proof}
\end{proof}

Now we prove \cref{clm:cost-three-and-cost-four} where we provide a bound on $\costthree{\out}$ and $\costfour{\out}$ paid by our algorithm $\algog$. In the proof of \cref{clm:cost-three-and-cost-four}, we are going to use the following notations.

Suppose a subset $W_{i,z}$ of $D_i$ gets split into $W^{(1)}_{i,z}, \ldots, W^{(t)}_{i,z}$ and gets merged into clusters $D_{m_1}, \ldots, D_{m_t} \in \merge$. Let $T_{j_1}, \ldots, T_{j_t} \in \out$ be the clusters formed from $D_{m_1}, \ldots, D_{m_t}$ respectively by either merging deficit number of blue points to these clusters.

So, the pairs $(u,v)$ such that $u \in W^{(k)}_{i,z}$ and $v \in T_{j_{k}} \setminus (D_{m_{k}} \cup W^{(k)}_{i,z})$ must be counted in $\dist(\inpset, \out)$. Let us denote the number of such pairs by $\costthree{W_{i,z}}$.

Again, the pairs $u \in W^{(k_1)}_{i,z}$ and $v \in W^{(k_2)}_{i,z}$ must be counted in $\dist(\inpset, \out)$. Let us denote the number of such pairs by $\costfour{W_{i,z}}$.

Now, let us assume 

\begin{align*}
    \costthree{\out} = \sum_{D_i \in \inpset} \sum_{z = 0}^{m} \costthree{W_{i,z}}.
\end{align*}

\begin{align*}
    \costfour{\out} = \sum_{D_i \in \inpset}\sum_{z = 0}^m \costfour{W_{i,z}}
\end{align*}

where $m = (|\blue{D_i}| - s_i)/p$. Now, we prove the following claim. To prove the following claim, we need to recall the following definitions

\begin{itemize}
    \item $\mergea$: The set of clusters $D_m \in \inpset$ where the algorithm $\algog$ has merged the deficit amount of points to it.
    \item $\cuta$: The set of clusters $D_i \in \inpset$ where the algorithm $\algog$ has cut the surplus amount of points from it.
\end{itemize}
    
\begin{claim}\label{clm:cost-three-and-cost-four}
    $\costthree{\out} + \costfour{\out} \leq 2 \, \, \dist(\inpset, \mop)$.
\end{claim}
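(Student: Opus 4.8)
The plan is to bound $\costthree{\out}$ and $\costfour{\out}$ separately, charging each against the per-cluster optimal costs $\opt_{D}$ that appear in \cref{lem:main-structure-of-M*}, and arranging the charges so that the two cluster families used are (essentially) disjoint; this is what turns ``two times $2\cdot\text{OPT}$'' into the claimed $2\dist(\inpset,\mop)$. Recall $\costthree{\out}=\sum\costthree{W_{i,z}}$ and $\costfour{\out}=\sum\costfour{W_{i,z}}$, the sums running over the subsets $W_{i,z}$ that the cut--merge loop of $\algog$ or the subroutine $\algom$ actually cuts and redistributes, and that each such $W_{i,z}$ is split into pieces $W^{(1)}_{i,z},\dots,W^{(t)}_{i,z}$ merged into pairwise distinct surviving merge clusters $D_{m_1},\dots,D_{m_t}$. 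The first thing I would record is the invariant that every piece satisfies $|W^{(k)}_{i,z}|\le\defi{D_{m_k}}<p/2$, because a cluster still awaiting a merge has surplus strictly larger than $p/2$; this uses the deficit bookkeeping of the algorithms (in the spirit of \cref{clm:our-algorithm-cuts-w/p}) to argue that the bound survives even when a destination's deficit is filled across two consecutive subsets, so that its final piece from the second subset is still at most its remaining (hence $<p/2$) deficit.

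For $\costthree{\out}$: the recorded bound contributes $0$ for every full subset ($|W_{i,z}|=p$ gives $|W_{i,z}|(p-|W_{i,z}|)=0$), so only the ``remainder'' subsets $W_{i,0}$, of size $\surp{D_i}$, matter, and $\costthree{W_{i,0}}\le\surp{D_i}(p-\surp{D_i})=\surp{D_i}\,\defi{D_i}$. Since both inequalities of \cref{lem:main-structure-of-M*} contain the term $\tfrac12\surp{D_i}(p-\surp{D_i})=\tfrac12\surp{D_i}\defi{D_i}$, we get $\costthree{W_{i,0}}\le 2\,\opt_{D_i}$, and summing over the distinct clusters whose remainder subset is ever cut gives $\costthree{\out}\le 2\sum_{D\in\mathcal{A}_3}\opt_{D}$, where $\mathcal{A}_3$ denotes that set.

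For $\costfour{\out}$: write $\costfour{W_{i,z}}=\sum_{k_1<k_2}|W^{(k_1)}_{i,z}||W^{(k_2)}_{i,z}|$ and distribute this over the receiving clusters so that $D_{m_k}$ is charged $\tfrac12|W^{(k)}_{i,z}|(|W_{i,z}|-|W^{(k)}_{i,z}|)\le\tfrac{p}{2}|W^{(k)}_{i,z}|$ for each subset from which it gets a piece (using $|W_{i,z}|\le p$). Since the pieces a single merge cluster receives (across the loop and across $\algom$) sum to exactly its deficit, the total charged to $D_{m_k}$ is at most $\tfrac{p}{2}\defi{D_{m_k}}$, and as $\surp{D_{m_k}}>p/2$ for a merge cluster this is at most $2\cdot\tfrac12\surp{D_{m_k}}\defi{D_{m_k}}\le 2\,\opt_{D_{m_k}}$ by \cref{lem:main-structure-of-M*}. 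Hence $\costfour{\out}\le 2\sum_{D\in\mathcal{A}_4}\opt_{D}$, where $\mathcal{A}_4$ is the set of clusters that receive merged-in points.

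Finally I would combine the two estimates by observing that $\mathcal{A}_3$ and $\mathcal{A}_4$ are almost disjoint: a cut cluster (the source of a remainder subset in the loop) is never a merge destination, and a merge cluster that receives enough points to become balanced is removed before its own subset could be cut. Then $\costthree{\out}+\costfour{\out}\le 2\sum_{D\in\mathcal{A}_3\cup\mathcal{A}_4}\opt_{D}\le 2\sum_{D\in\inpset}\opt_{D}=2\dist(\inpset,\mop)$. The main obstacle is making this disjointness genuinely watertight: a merge cluster can be \emph{partially} filled during the loop and then, in $\algom$, have its (now larger) remainder subset cut, so it lies in both $\mathcal{A}_3$ and $\mathcal{A}_4$. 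Handling this requires showing that its \emph{combined} charge (the $\costthree$ term $\le(\surp{D}+\gamma)(\defi{D}-\gamma)$ plus the $\costfour$ share $\le\tfrac{p}{2}\gamma$, where $\gamma$ is the mass received in the loop) is still absorbed within $2\,\opt_{D}$ — or else routing the small leftover through the slack already present in $\costone{\out}+\costtwo{\out}$ via the LP argument of \cref{clm:merge-case-lp-bound}. Once the size-$<p/2$ invariant and this corner case are pinned down, the remaining arithmetic is routine.
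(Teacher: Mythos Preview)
Your split of the charges — $\costthree$ to the \emph{source} clusters $\mathcal{A}_3$ and $\costfour$ to the \emph{target} clusters $\mathcal{A}_4$ — hinges on the assertion that a full-size subset contributes nothing to $\costthree{\out}$ because ``$|W_{i,z}|=p$ gives $|W_{i,z}|(p-|W_{i,z}|)=0$''. That assertion is false, and the bound \eqref{eq:cost-three-merge} you are invoking is only an informal heuristic in the paper's cost list; it is never used in the actual proof and does not hold for size-$p$ subsets. Concretely, suppose two consecutive subsets $W$ and $W'$, each of size $p$, are distributed sequentially and overlap at a single merge cluster $D_m$: $W$ leaves a tail piece of size $a<\defi{D_m}$ there, and $W'$ supplies the remaining $\defi{D_m}-a$. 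The intercluster cost at $D_m$ is $a(\defi{D_m}-a)>0$, and it is counted in both $\costthree{W}$ and $\costthree{W'}$, even though $|W|(p-|W|)=|W'|(p-|W'|)=0$. So full subsets \emph{do} contribute to $\costthree{\out}$, and your source-side ledger has nowhere to put that cost: the cluster $D_i$ from which $W_{i,z}$ (with $z\ge 1$) was cut may have its $\opt_{D_i}$ already consumed by its own remainder subset $W_{i,0}$, and in any case the term $\surp{D_i}\defi{D_i}$ you want to invoke is unrelated to the size of a full subset. This breaks the $\mathcal{A}_3$/$\mathcal{A}_4$ decomposition at its root, independently of the overlap ``corner case'' you flag at the end.

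The paper avoids both problems by charging $\costthree{W_{i,z}}+\costfour{W_{i,z}}$ \emph{together} and routing the whole sum to the \emph{target} side $\mergea$: for small subsets ($|W_{i,z}|\le p/2$) the combined cost is bounded by the merge-cost contributions $\sum_k |W^{(k)}_{i,z}||D_{m_k}|$, and for large subsets one works per target cluster using the fact that each $D_m$ receives pieces from at most two consecutive subsets. In either case every charge lands on some $D_m\in\mergea$ with total at most $2\,\opt_{D_m}$, and summing gives $2\dist(\inpset,\mop)$ without any disjointness bookkeeping. Your $\costfour$ argument (charging $\tfrac{p}{2}\defi{D_m}\le\surp{D_m}\defi{D_m}\le 2\,\opt_{D_m}$ to each target) is essentially the right mechanism; the fix is to route $\costthree$ through the same targets rather than trying to send it back to the sources.
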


\begin{proof}
    Case $1$: \textbf{When} $|W_{i,z}| \leq p/2$: In this case, we prove that

    \begin{align*}
        \costthree{W_{i,z}} + \costfour{W_{i,z}} \leq W^{(1)}_{i,z} |D_{m_1}| + \mu(D_{m_2}) + \ldots + \mu(D_{m_{t - 1}}) + W^{(t)}_{i,z} |D_{m_t}|
    \end{align*}

    Thus adding over all clusters $D_i \in \inpset$ and all subsets $W_{i,z} \in D_i$ we get that 

    \begin{align*}
        \sum_{D_i \in \inpset} \sum_{z = 0}^{m} \costthree{W_{i,z}} + \sum_{D_i \in \inpset}\sum_{z = 0}^m \costfour{W_{i,z}} &= \costthree{\out} + \costfour{\out} \n \\ 
        &\leq \sum_{D_m \in \mergea} \mu(D_m) \n \\
        &(\text{Reason provided below)}) \n \\
        &< 2 \, \sum_{D_m \in \mergea}s(D_m) (|D_m| - s(D_m)) + \frac{1}{2} s(D_m)(p - s(D_m)) \n \\
        &< 2 \, \dist(\inpset, \mop) \n \\
        & (\text{from} \, \, \cref{lem:main-structure-of-M*})
    \end{align*}

    Reason: This is because for the clusters $D_{m_k}$ for $k \in [2, (t - 1)]$ the subset $W^{(k)}_{i,z}$ covers the deficit of the clusters $D_{m_k}$ fully that is $T_{j_k} \setminus (D_{m_k} \cup W^{(k)}_{i,z}) = \emptyset$ and for $k = 1$ and $k = t$ we have charged only the merge cost contributed by the parts $W^{(1)}_{i,z}$ and $W^{(t)}_{i,z}$ respectively.

    Thus, now we prove 

    \begin{align*}
        \costthree{W_{i,z}} + \costfour{W_{i,z}} \leq W^{(1)}_{i,z} |D_{m_1}| + \mu(D_{m_2}) + \ldots + \mu(D_{m_{t - 1}}) + W^{(t)}_{i,z} |D_{m_t}|
    \end{align*}

    Recall $\costthree{W_{i,z}}$ is the number of pairs $(u,v)$ such that $u \in W^{(k)}_{i,z}$ and $v \in T_{j_{k}} \setminus (D_{m_{k}} \cup W^{(k)}_{i,z})$. Thus 

    \begin{align}
        \costthree{W_{i,z}} &= \sum_{k = 1}^t\left|W^{(k)}_{i,z}\right| \left|T_{j_{k_1}} \setminus (D_{m_{k_1}} \cup W^{(k)}_{i,z})\right|  \n \\
        &< \sum_{k = 1}^t\dfrac{1}{2} \left( \left|W^{(k)}_{i,z}\right|  \dfrac{p}{2}  \right) && \left( \textbf{as} \, \, \left|T_{j_{k}} \setminus (D_{m_{k}} \cup W^{(k)}_{i,z})\right| < p/2 \right)\n \\
        &\leq \dfrac{1}{2} \sum_{k = 1}^t \left( \left|W^{(k)}_{i,z}\right|  \left|D_{m_k} \right|  \right) \label{equn:equation-one}
    \end{align}

    The $1/2$ in the above expression comes from the fact that when we sum over all the clusters $D_i \in \inpset$ and all subsets $W_{i,z} \in D_i$ we are counting the pairs twice once while considering the subset $W^{(k)}_{i,z}$ and again while considering the subset $T_{j_{k}} \setminus (D_{m_{k}} \cup W^{(k)}_{i,z})$. 

    Recall $\costfour{W_{i,z}}$ is the number of pairs $(u,v)$ such that $u \in W^{(k_1)}_{i,z}$ and $v \in W^{(k_2)}_{i,z}$. Thus

    \begin{align}
        \costfour{W_{i,z}} &= \sum_{\substack{k_1, k_2 = 1 \\ k_1 \neq k_2}}^t\left|W^{(k_1)}_{i,z}\right| \left|W^{(k_2)}_{i,z}\right|  \n \\
        &< \sum_{k = 1}^t \dfrac{1}{2} \left( \left|W^{(k)}_{i,z}\right|  \dfrac{p}{2}  \right) && \left( \textbf{as} \, \, \left|W^{(k_2)}_{i,z}\right| < p/2 \right)\n \\
        &\leq \dfrac{1}{2} \sum_{k = 1}^t \left( \left|W^{(k)}_{i,z}\right|  \left|D_{m_k} \right|  \right) \label{equn:equation-two}
    \end{align}

    The $1/2$ in the above expression comes from the fact that when we sum over all the clusters $D_i \in \inpset$ and all subsets $W_{i,z} \in D_i$ we are counting the pairs twice once while considering the subset $W^{(k_1)}_{i,z}$ and again while considering the subset $W^{(k_2)}_{i,z}$. 

    Thus from \cref{equn:equation-one} and \cref{equn:equation-two} we get that,

    \begin{align}
       \costthree{W_{i,z}} + \costfour{W_{i,z}} &\leq \sum_{k = 1}^t \left( \left|W^{(k)}_{i,z}\right|  \left|D_{m_k} \right|  \right) \n \\
       &= W^{(1)}_{i,z} |D_{m_1}| + \mu(D_{m_2}) + \ldots + \mu(D_{m_{t - 1}}) + W^{(t)}_{i,z} |D_{m_t}| \n
    \end{align}

The reason behind the last expression is as stated before, for the clusters $D_{m_k}$ for $k \in [2, (t - 1)]$ the parts $W^{(k)}_{i,z}$ covers the deficit of the clusters $D_{m_k}$ entirely, that is $T_{j_k} \setminus \left(D_{m_k} \cup W^{(k)}_{i,z}\right) = \emptyset$.

Case $2$: \textbf{When} $|W_{i,z}| > p/2$ : For a cluster $D_m \in \mergea$ the deficit of the cluster $D_m$, $\defi{D_m} < p/2$. This implies the deficit of $D_m$ is either filled by exactly one subset $W_{i,z}$ or by two subsets $W_{i,z} \in D_i$ and $W_{j,z'} \in D_j$.

We divide our proof again into two subcases:

Case $2(a)$: \textbf{When} the deficit of $D_m$ is filled by a subset $W_{i,z} \in D_i$: Let us assume a cluster $\outcl{j} \in \out$ is created from the cluster $\inpcl{m} \in \inpset$ by merging the deficit $\defi{\inpcl{m}} = d_m$ (say) to it. Let, a part of $W_{i,z}$, $W^{(1)}_{i,z}$ (say) be used to fill $d_m$. Hence, $\costthree{W^{(1)}_{i,z}} = 0$ because the deficit is filled up by a subset that belongs to only one cluster $\inpcl{i} \in \inpset$. 
    
Again, $\costfour{W_{i,z}} = (|W^{(1)}_{i,z}|  \, \, (|W_{i,z}| - |W^{(1)}_{i,z}|)\, )$ because when we merge $W^{(1)}_{i,z}$ part of $W_{i,z}$ to $\inpcl{m}$ and the remaining part $(W_{i,z} - W^{(1)}_{i,z})$ is merged to other clusters in $\mergea$. Hence, 
    
    \begin{align}
        \costthree{W^{(1)}_{i,z}} + \costfour{W_{i,z}} &= |W^{(1)}_{i,z}| \, \, (|W_{i,z}| - |W^{(1)}_{i,z}|) \nonumber \\
        &\leq (d_m \cdot (p - d_m)) && (\because |W^{(1)}_{i,z}| = d_m \, \, \text{and} \, \, |W_{i,z}| \leq p) \nonumber \label{equn:algo-pays-case-1} \\
    \end{align}

    From \cref{lem:main-structure-of-M*} we get that for a cluster $D_m \in \mergea$
    
     
     \begin{align}
        \opt_{D_m} &\geq \surp{D_m}(|D_m| - \surp{D_m}) + \frac{1}{2} \surp{D_m}(p - \surp{D_m})\n \\
        &> \frac{1}{2} d_m \, \, (p - d_m) &&(\because |\inpcl{m}| > (p - d_m)) \label{equn:mop-pays}
     \end{align}

     Hence, by \cref{equn:algo-pays-case-1} and \cref{equn:mop-pays} we get, that we can charge the $2\,\,\opt_{D_m}$ for $\costthree{W^{(1)}_{i,z}} + \costfour{W_{i,z}}$.

Case $2(b)$: \textbf{When} deficit of the cluster $\defi{\inpcl{m}} = d_m$ is filled by the subsets of exactly two clusters $W_{i,z} \in D_i$ and $W_{j,z} \in D_j$: Let us assume a cluster $\outcl{j} \in \out$ is created from the cluster $\inpcl{m}$ by merging the deficit $\defi{\inpcl{m}} = d_m$ (say) to it. Let, a part of $W_{i,z}$, $W^{(2)}_{i,z}$ is used to fill the deficit of $\inpcl{m}$. Similarly, let part of $W_{j,z}$, $W^{(1)}_{j,z}$ is used to fill the deficit of $\inpcl{m}$. Hence,

\[
    \costthree{W^{(2)}_{i,z}} + \costthree{W^{(1)}_{j,z}} = |W^{(2)}_{i,z}| \, \, |W^{(1)}_{j,z}|
\]

and 

\[
    \costfour{W_{j,z}} = |W^{(1)}_{j,z}| \, \, (|W_{j,z}| - |W^{(1)}_{j,z}|)
\] 

( note we have already charged the cost $\costfour{W_{i,z}}$ in case $1$ ). 

Hence, 

\[
    \costthree{W^{(2)}_{i,z}} + \costthree{W^{(1)}_{j,z}} + \costfour{W_{j,z}} = (|W^{(1)}_{i,z}| \, \, |W^{(1)}_{j,z}|) + (|W^{(1)}_{j,z}| \, \, (|W_{j,z}| - |W^{(1)}_{j,z}|)).  
\] 

Now, WLOG we can assume $(|W_{j,z}| - |W^{(1)}_{j,z}|) < p/2$
because otherwise there exists a portion $W^{(1')}_{j,z}$ of $(W_{j,z} \setminus W^{(1)}_{j,z})$ that is used to fill the deficit of some cluster fully in $\mergea$ and thus we have charged $\costfour{w_{j,z}}$ in case $(1)$.

    Hence,

    \begin{align}
        \costthree{W^{(2)}_{i,z}} + \costthree{W^{(1)}_{j,z}} + \costfour{W_{j,z}} &= (|W^{(2)}_{i,z}| \, \, |W^{(1)}_{j,z}|) + (|W^{(1)}_{j,z}| \, \, (|W_{j,z}| - |W^{(1)}_{j,z}|)) \nonumber \\
        &\leq |W^{(2)}_{i,z}| \, \, d_m + |W^{(1)}_{j,z}| \, \,  p/2 \n \\ 
        &(\textbf{as} \, \, |W^{(1)}_{j,z}|  < d_m \, \text{and} \,  (|W_{j,z}| - |W^{(1)}_{j,z}|) < p/2) \nonumber \\
        &\leq (|W^{(2)}_{i,z}| + |W^{(1)}_{j,z}|) \, \, (p - d_m) \n \\ 
        &(\textbf{as} \, \, d_m \, \, \text{and} \, \, p/2 \leq (p - d_m)) \nonumber \\
        &= d_m \, \, (p - d_m) \label{equn:algo-pays-case-2} 
    \end{align}

     Hence, by \cref{equn:algo-pays-case-2}, \cref{equn:mop-pays} and \cref{lem:main-structure-of-M*} we get, that we can charge the $2 \, \, \opt_{D_m}$ for $\costthree{W^{(2)}_{i,z}} + \costthree{W^{(1)}_{j,z}} + \costfour{W_{j,z}}$.

     Since,

     \begin{align}
         d_m(p - d_m) &= (p - s(D_m)) s(D_m) \n \\
         &< 2 \, (s(D_m) (|D_m| - s(D_m)) + \frac{1}{2} s(D_m)(p - s(D_m))) \n \\
         &< 2 \opt_{\inpcl{m}} \n
     \end{align}

    Thus, combining the above two cases, we get,

    \begin{align*}
        \sum_{D_i \in \inpset} \sum_{z = 0}^{m} \costthree{W_{i,z}} + \sum_{D_i \in \inpset}\sum_{z = 0}^m \costfour{W_{i,z}} &= \costthree{\out} + \costfour{\out} \n \\ 
        &\leq 2 \, \, \sum_{D_m \in \mergea} \opt_{D_m} \n \\
        &\leq 2 \, \dist(\inpset, \mop) \n \\
        & (\text{from} \, \, \cref{lem:main-structure-of-M*})
    \end{align*}

\end{proof}

Now we are ready to prove \cref{lem:main-merge-case}

\begin{proof}[Proof of \cref{lem:main-merge-case}]
   Suppose $y_{i,j}$ takes the value $1$ if we cut the $j$th subset of size $p$ from a cluster $\inpcl{i}$ in the subroutine $\algom$ otherwise $0$. Hence,

   \begin{align}
       \dist(\m{D},\m{T}) &= \sum_{\inpcl{i} \in \cuta} \sum_{j = 0}^n y_{i,j} \kappa_i(\inpcl{j}) + \sum_{\inpcl{j} \in \mergea} \mcostf{\inpcl{j}} + \costthree{\out} + \costfour{\out}. \n \\
       &= \sum_{\inpcl{i} \in \cuta}\left(\sum_{j = 1}^n y_{i,j} \, \kappa_j(\inpcl{i}) + y_{i,0} \, \kappa_0(\inpcl{i}) - \mcostf{\inpcl{i}}\right) + \sum_{\inpcl{i} \in \inpset} \mcostf{\inpcl{i}} + \costthree{\out} + \costfour{\out}. \n \\
       &\leq \dist(\inpset, \mop) + 2 \, \dist(\inpset, \mop). \n \\
       &= 3\, \, \dist(\inpset, \mop). \n
   \end{align}
   
\end{proof}

\subsubsection{Approximation guarantee in the cut case} \label{subsec:cut-case}

 In this section, we show that, in the cut case, the clustering output by our algorithm $\algog$ has a distance of at most $3.5\opt$ from the input clustering. In particular, we prove the following lemma that we restate below.
\maincut*

To prove \cref{lem:main-cut-case}, we need to define and recall some definitions.


\begin{align}
    \costone{\out}: \sum_{D_i \in \inpset} s_i(|D_i| - s_i) \label{equn:cost-one} 
\end{align}

It is the total cost of cutting the surplus blue points from a cluster $D_i \in \inpset$.

\begin{align}
    \costtwo{\out}: \sum_{D_j \in \inpset}(p - s_j) |D_j| \label{equn:cost-two}
\end{align}

It is the total cost of merging the $\defi{D_j}$ number of blue points to a cluster $D_j \in \inpset$.

Now, let's recall the following definitions


where for a cluster $\outcl{k} \in \out$ we defined $\pi(\outcl{k})$ as a parent of $\outcl{k}$ iff $\pi(\outcl{k}) \in \inpset$ and $\outcl{k}$ is formed either by cutting some points from $\pi(\outcl{k})$ or by merging some points to $\pi(\outcl{k})$.

Thus, our algorithm pays the above four types of costs. 

To prove \cref{lem:main-cut-case}, first, we need to prove some claims.

 We first argue (in~\cref{lem.opt.si.times.p.min.si}) that $\mop$ must pay $\sum_{\inpcl{i} \in \inpset} \frac{s_i (p - s_i)}{2}$ in the cut case where $s_i = \surp{\inpcl{i}}$.

\begin{claim}\label{lem.opt.si.times.p.min.si}
    \begin{align}
        \dist(\inpset, \mop) \geq \sum_{\inpcl{i} \in \inpset}\dfrac{s_{i}(p-s_{i})}{2} \nonumber
    \end{align}
\end{claim}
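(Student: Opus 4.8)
The plan is to obtain this inequality as an immediate consequence of the structural lemma~\cref{lem:main-structure-of-M*}; in fact the bound holds irrespective of whether we are in the cut case or the merge case. Recall from the discussion preceding~\cref{lem:main-structure-of-M*} that for every vertex $v \in V$ the quantity $\opt_v$ captures (with its built-in factor $1/2$) the cost that $\mop$ pays ``at'' $v$, that $\opt_{\inpcl{i}} = \sum_{v \in \inpcl{i}} \opt_v$, and that the definition of the distance function (see~\cref{equn:consensus-metric}) yields the exact identity
\[
    \dist(\inpset, \mop) \;=\; \sum_{v \in V} \opt_v \;=\; \sum_{\inpcl{i} \in \inpset} \opt_{\inpcl{i}}.
\]
Here every pair of vertices on which $\inpset$ and $\mop$ disagree is counted once in $\opt_u$ and once in $\opt_v$, and the factor $1/2$ hidden in $\opt_v$ compensates for this double counting, so the identity holds with no over- or undercounting.

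With this identity in hand I would argue cluster by cluster. Fix $\inpcl{i} \in \inpset$ and write $s_i = \surp{\inpcl{i}}$. By definition $0 \le s_i < p$, and $s_i \le |\blue{\inpcl{i}}| \le |\inpcl{i}|$, so both $|\inpcl{i}| - s_i$ and $p - s_i$ are nonnegative. If $s_i \le p/2$, then by the first part of~\cref{lem:main-structure-of-M*},
\[
    \opt_{\inpcl{i}} \;\ge\; s_i\bigl(|\inpcl{i}| - s_i\bigr) + \tfrac12 s_i\bigl(p - s_i\bigr) \;\ge\; \tfrac12 s_i\bigl(p - s_i\bigr);
\]
if $s_i > p/2$, then by the second part,
\[
    \opt_{\inpcl{i}} \;\ge\; (p - s_i)\bigl(|\inpcl{i}| - s_i\bigr) + \tfrac12 s_i\bigl(p - s_i\bigr) \;\ge\; \tfrac12 s_i\bigl(p - s_i\bigr).
\]
In either case $\opt_{\inpcl{i}} \ge \tfrac12 s_i(p - s_i)$; and for the already-balanced clusters, where $s_i = 0$, both sides equal $0$, so the inequality holds vacuously there as well.

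Summing this over all $\inpcl{i} \in \inpset$ and invoking the identity above gives
\[
    \dist(\inpset, \mop) \;=\; \sum_{\inpcl{i} \in \inpset} \opt_{\inpcl{i}} \;\ge\; \sum_{\inpcl{i} \in \inpset} \frac{s_i(p - s_i)}{2},
\]
which is exactly the claimed bound. I do not anticipate any genuine obstacle: essentially all of the content is packaged inside~\cref{lem:main-structure-of-M*}, and the only step needing a moment's attention is confirming that $\dist(\inpset, \mop)$ decomposes as $\sum_{\inpcl{i}} \opt_{\inpcl{i}}$ exactly, which is already set up in the preliminaries for that lemma.
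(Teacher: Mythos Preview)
Your proof is correct. You derive the claim as an immediate corollary of \cref{lem:main-structure-of-M*}: in either branch of that lemma the term $\tfrac12 s_i(p-s_i)$ already appears, and the remaining term $s_i(|\inpcl{i}|-s_i)$ or $(p-s_i)(|\inpcl{i}|-s_i)$ is nonnegative and can be dropped. Summing over clusters and using the identity $\dist(\inpset,\mop)=\sum_{\inpcl{i}}\opt_{\inpcl{i}}$ finishes the argument.

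The paper takes a more self-contained route. Rather than invoking the full \cref{lem:main-structure-of-M*}, it directly lower-bounds $\opt_{\inpcl{k}}$ by only the ``merge'' portion of the cost: if $\inpcl{k}$ is split into parts $X_{k,1},\dots,X_{k,t}$ with blue surpluses $s_{k,j}$, each part must absorb at least $p-s_{k,j}$ foreign blue points, contributing $\tfrac12\sum_j s_{k,j}(p-s_{k,j})$. Then \cref{prop:mod-sum-ineq} alone gives $\sum_j s_{k,j}(p-s_{k,j})\ge s_k(p-s_k)$. This avoids any appeal to \cref{prop.bound.the.firsterm}, which is the extra ingredient needed for the first term in \cref{lem:main-structure-of-M*}. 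So your route is shorter once the lemma is in hand, while the paper's route shows the claim needs only the lighter half of that machinery.
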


\begin{proof}
    Suppose there is an input cluster $\inpcl{k} \in \inpset$ that is split into $t$ parts (say) $X_{k,1},\ldots,X_{k,t}$.

    We can assume that the cluster $\inpcl{k}$ consists only of blue points. If it contains red points, the cost paid by $\mop$ for splitting $\inpcl{k}$ into $t$ parts would be more because the red points would increase the size of the cluster $\inpcl{k}$. Let us assume that the surplus of $X_{k,i}$ be $s_{k,j}$ and to make the number of blue points in each cluster a multiple of $p$ $\mop$ must merge at least $(p - s_{k,j})$ blue points to $X_{k,j}$. Hence,

    \begin{align}
        \opt_{D_k} \geq \frac{1}{2} \left(\sum_{j = 1} ^ t s_{k,j} (p - s_{k,j}) \right) \label{equn:one}
    \end{align}

    Now, since $\sum_{j = 1} ^ t s_{k,j} \mod p = s_k$ thus from \cref{prop:mod-sum-ineq} we get,

    \begin{align}
        \frac{1}{2} \left( \sum_{j = 1} ^ t s_{k,j} (p - s_{k,j}) \right) \geq \frac{s_k (p - s_k)}{2} \label{equn:two}
    \end{align}

    Now,

    \begin{align}
        \dist(\inpset, \mop) &= \sum_{D_i \in \inpset} \opt_{D_i} \n \\
        &\geq \sum_{D_i \in \inpset} \frac{s_i (p - s_i)}{2} && (\text{from} \, \cref{equn:one} \,  \text{and} \,  \cref{equn:two}) \n
    \end{align}

    This completes the proof.
\end{proof}

The following lemma is a consequence of~\cref{lem.opt.si.times.p.min.si}
\begin{claim}\label{lem.opt.si.times.p.min.si.breakdown}
    \begin{align}
        \opt \geq \sum_{\inpcl{i} \in \cut\setminus\cut'}\dfrac{1}{2}s_{i}^{2} + \sum_{\inpcl{i} \in \merge}\dfrac{1}{2}d_{i}^{2} + \sum_{\inpcl{i} \in \cut'}\dfrac{s_{i}(p-s_{i})}{2} \nonumber 
    \end{align}
Here, $s_i = \surp{\inpcl{i}}$ and $d_i = \defi{\inpcl{i}}$.
\end{claim}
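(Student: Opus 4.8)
The plan is to read this off directly from Claim~\ref{lem.opt.si.times.p.min.si} by rewriting the sum there over the three relevant groups of clusters. First, note that a cluster with zero surplus contributes nothing to either side, so we may restrict attention to clusters $\inpcl{i}$ with $s_i = \surp{\inpcl{i}} > 0$; these are precisely the clusters in $\cut \cup \merge$, and since $\cut' \subseteq \cut$ (recall $\cut'$ is the leftover subset of $\cut$ after $\algog$), this set is partitioned as $(\cut\setminus\cut') \sqcup \cut' \sqcup \merge$. Applying Claim~\ref{lem.opt.si.times.p.min.si} and splitting along this partition gives
\[
\opt \ge \sum_{\inpcl{i}\in\inpset}\frac{s_i(p-s_i)}{2} = \sum_{\inpcl{i}\in\cut\setminus\cut'}\frac{s_i(p-s_i)}{2} + \sum_{\inpcl{i}\in\cut'}\frac{s_i(p-s_i)}{2} + \sum_{\inpcl{i}\in\merge}\frac{s_i(p-s_i)}{2}.
\]

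Next I would bound the first and third sums from below, leaving the middle one unchanged. For $\inpcl{i}\in\cut\setminus\cut'\subseteq\cut$ we have $s_i\le p/2$, hence $p-s_i\ge s_i$ and therefore $s_i(p-s_i)\ge s_i^2$. For $\inpcl{i}\in\merge$ we have $s_i>p/2$, so its deficit satisfies $d_i=\defi{\inpcl{i}}=p-s_i<p/2<s_i$, whence $s_i(p-s_i)=s_i d_i\ge d_i^2$. Substituting these elementary inequalities into the displayed chain yields
\[
\opt \ge \sum_{\inpcl{i}\in\cut\setminus\cut'}\frac12 s_i^2 + \sum_{\inpcl{i}\in\merge}\frac12 d_i^2 + \sum_{\inpcl{i}\in\cut'}\frac{s_i(p-s_i)}{2},
\]
which is exactly the claim.

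There is no real obstacle here: all the difficulty is already encapsulated in Claim~\ref{lem.opt.si.times.p.min.si} (which in turn rests on the modular inequality of Proposition~\ref{prop:mod-sum-ineq}), and the present statement is merely a repackaging of that lower bound into a form tailored to the cut-case analysis that follows. The only point that needs a moment's care is matching the group-membership thresholds — $s_i\le p/2$ on $\cut\setminus\cut'$ and $s_i>p/2$ on $\merge$ — to the two inequalities $s_i(p-s_i)\ge s_i^2$ and $s_i(p-s_i)\ge d_i^2$ used above.
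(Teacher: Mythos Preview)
Your proposal is correct and essentially identical to the paper's proof: both split the sum from Claim~\ref{lem.opt.si.times.p.min.si} over $(\cut\setminus\cut')\sqcup\cut'\sqcup\merge$ and then use $s_i\le p/2\Rightarrow s_i(p-s_i)\ge s_i^2$ on $\cut\setminus\cut'$ and $s_i>p/2\Rightarrow s_i(p-s_i)\ge d_i^2$ on $\merge$. The only cosmetic difference is that the paper routes both inequalities through the intermediate value $p/2$, while you compare $p-s_i$ to $s_i$ directly.
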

\begin{proof}
    We have
    \begin{align}
        \sum_{\inpcl{i} \in \inpset}\dfrac{s_{i}(p-s_{i})}{2}  = \sum_{\inpcl{i} \in \cut\setminus\cut'}\dfrac{s_{i}(p-s_{i})}{2}  + \sum_{\inpcl{i} \in \merge}\dfrac{s_{i}(p-s_{i})}{2} + \sum_{\inpcl{i} \in \cut'}\dfrac{s_{i}(p-s_{i})}{2}.\nonumber
    \end{align}
    
    For each $\inpcl{i} \in \cut$, $s_{i} \leq \frac{p}{2}$. Hence $s_{i}(p-s_{i}) \geq s_{i}\frac{p}{2} \geq s_{i}^{2}$.

    For each $\inpcl{i} \in \merge$, $s_{i} \geq \frac{p}{2}$. Hence $s_{i}(p-s_{i}) \geq \frac{p}{2} (p-s_{i}) \geq d_{i}^{2}$  (where $d_i = (p - s_i)$).

    These observations imply that
    \begin{align}
        \sum_{\inpcl{i} \in \cut\setminus\cut'}\dfrac{1}{2}s_{i}^{2} + \sum_{\inpcl{i} \in \merge}\dfrac{1}{2}d_{i}^{2} + \sum_{\inpcl{i} \in \cut'}\dfrac{s_{i}(p-s_{i})}{2} \leq \sum_{\inpcl{i} \in \inpset}\dfrac{s_{i}(p-s_{i})}{2}  \leq \opt,\nonumber
    \end{align}
    where the last inequality is from~\cref{lem.opt.si.times.p.min.si}.

    This completes the proof.
\end{proof}

\begin{claim}\label{lem:cost-one-plus-two-cut-case}
    If $\sum_{\inpcl{i} \in \cut} \surp{\inpcl{i}} > \sum_{\inpcl{j} \in \merge} \defi{\inpcl{j}}$ then $\costone{\out} + \costtwo{\out} \leq 2 \, \opt$.
\end{claim}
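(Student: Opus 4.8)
The plan is to charge $\costone{\out}$ and $\costtwo{\out}$ against the contributions $\opt_{D_i}$ of the individual input clusters --- recall that $\opt=\sum_{D_i\in\inpset}\opt_{D_i}$ by \cref{equn:consensus-metric} and the definition of $\opt_{D_i}$ --- using the per-cluster lower bound of \cref{lem:main-structure-of-M*}, whose $|D_i|$-dependent terms are what make the charging possible. First I would pin down exactly what $\algog$ pays in the cut case. Because $\sum_{D_i\in\cut}\surp{D_i}>\sum_{D_j\in\merge}\defi{D_j}$, the while-loop of $\algog$ fills the full deficit of every merge cluster and removes the full surplus of every cut cluster (some of the surplus during the loop, the remainder inside $\algoc$, which reassembles those points into monochromatic size-$p$ clusters), and never touches the already-balanced clusters of $\nc$; in particular no extra size-$p$ subsets are cut, that being special to the merge case. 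Hence $\costone{\out}=\sum_{D_i\in\cut}\surp{D_i}\bigl(|D_i|-\surp{D_i}\bigr)$ and $\costtwo{\out}=\sum_{D_j\in\merge}\defi{D_j}\,|D_j|$.

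Next I would bound the two sums cluster by cluster. Write $s_i=\surp{D_i}$ and $d_i=\defi{D_i}=p-s_i$. Every $D_i\in\cut$ has $s_i\le p/2$, so \cref{lem:main-structure-of-M*}(1) gives $\opt_{D_i}\ge s_i(|D_i|-s_i)+\frac{1}{2}s_i(p-s_i)\ge s_i(|D_i|-s_i)$, and summing over $\cut$ gives $\costone{\out}\le\sum_{D_i\in\cut}\opt_{D_i}$. Every $D_j\in\merge$ has $s_j>p/2$, hence $d_j<p/2$, so \cref{lem:main-structure-of-M*}(2) gives $\opt_{D_j}\ge d_j(|D_j|-s_j)+\frac{1}{2}s_j d_j$; writing $d_j|D_j|=d_j(|D_j|-s_j)+d_j s_j$ and using $d_j(|D_j|-s_j)\ge 0$, we get $d_j|D_j|\le 2\bigl(d_j(|D_j|-s_j)+\frac{1}{2}s_j d_j\bigr)\le 2\opt_{D_j}$, and summing over $\merge$ gives $\costtwo{\out}\le 2\sum_{D_j\in\merge}\opt_{D_j}$. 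Adding these and using $\opt_{D_i}\ge 0$ for every input cluster,
\[
\costone{\out}+\costtwo{\out}\ \le\ \sum_{D_i\in\cut}\opt_{D_i}+2\sum_{D_j\in\merge}\opt_{D_j}\ \le\ 2\sum_{D_i\in\inpset}\opt_{D_i}\ =\ 2\,\opt,
\]
which is the claim.

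The step I expect to be the main obstacle is the first one: carefully verifying which cut/merge operations $\algog$ performs in the cut case, so that $\costone{\out}$ and $\costtwo{\out}$ are indeed the per-cluster cut cost summed over $\cut$ and the per-cluster merge cost summed over $\merge$ --- the hypothesis $\sum_{\cut}\surp{D_i}>\sum_{\merge}\defi{D_j}$ is used precisely to rule out anything else happening. Once that is settled, the cut-cluster half is immediate from \cref{lem:main-structure-of-M*}(1), and the one inequality with real content is $\defi{D_j}\,|D_j|\le 2\opt_{D_j}$ for merge clusters, for which the extra term $d_j(|D_j|-s_j)$ in \cref{lem:main-structure-of-M*}(2) is essential.
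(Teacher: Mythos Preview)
Your proposal is correct and follows essentially the same approach as the paper: identify that in the cut case $\costone{\out}=\sum_{D_i\in\cut}\ccostf{D_i}$ and $\costtwo{\out}=\sum_{D_j\in\merge}\mcostf{D_j}$, then charge each summand against $\opt_{D_i}$ via \cref{lem:main-structure-of-M*}, obtaining $\ccostf{D_i}\le\opt_{D_i}$ for cut clusters and $\mcostf{D_j}\le 2\,\opt_{D_j}$ for merge clusters. Your derivation of the merge-cluster inequality $d_j|D_j|\le 2\bigl(d_j(|D_j|-s_j)+\tfrac12 s_j d_j\bigr)$ is in fact more explicit than the paper's, which states the same bound but with the constant written on the wrong side.
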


\begin{proof}
   By the definition of $\costone{\out}$, we have $\costone{\out} = \sum_{\inpcl{i} \in \cuta}\ccostf{\inpcl{i}}$. Now, if $\sum_{\inpcl{i} \in \cut} \surp{\inpcl{i}} > \sum_{\inpcl{j} \in \merge} \defi{\inpcl{j}}$ that is in the cut case we have $\cuta = \cut$. Recall that $\cuta$ is the set of clusters $\inpcl{i} \in \inpset$ from which we cut some points at any point during the execution of our algorithm $\algog$. Since in the cut case, we only cut from the clusters $\inpcl{i} \in \cut$ that is the clusters $\inpcl{i}$ for which $\surp{\inpcl{i}} \leq p/2$ we have $\cuta = \cut$. Hence, in the cut case, we have, 
   \begin{align}
      \costone{\out} = \sum_{\inpcl{i} \in \cuta}\ccostf{\inpcl{i}} = \sum_{\inpcl{i} \in \cut}\ccostf{\inpcl{i}}. \label{equn:cut-case-first}
   \end{align}
    
    Similarly, in the cut case, we also have $\mergea = \merge$ thus we have

    \begin{align}
    \costtwo{\out} = \sum_{\inpcl{i} \in \mergea}\mcostf{\inpcl{i}} = \sum_{\inpcl{i} \in \merge}\mcostf{\inpcl{i}}. \label{equn:cut-case-second}
    \end{align}
    Hence, from \cref{equn:cut-case-first} and \cref{equn:cut-case-second} we have

    \begin{align}
        \costone{\out} + \costtwo{\out} = \sum_{\inpcl{i} \in \cut}\ccostf{\inpcl{i}} + \sum_{\inpcl{i} \in \merge}\mcostf{\inpcl{i}}. \label{equn:cut-case-third}
    \end{align}

    Now, from \cref{lem:main-structure-of-M*} we get for all clusters $\inpcl{i} \in \cut$ we have 
    
    \begin{align}
        \opt_{\inpcl{i}} &\geq s(D_i)(|D_i| - s(D_i)) + \frac{1}{2} s(D_i) (p - s(D_i)) \n \\
        &> \ccostf{D_i} \n
    \end{align} 
    and for all clusters $\inpcl{i} \in \merge$ we have 
    
    \begin{align}
        \opt_{\inpcl{i}} &\geq (p - s(D_i)) (|D_i| - s(D_i)) + \frac{1}{2} s(D_i) (p - s(D_i)) \n \\
        &\geq 2 \, \mu(D_i) \n
    \end{align} 
    Thus we have

    \begin{align}
        \dist(\inpset, \mop) &= \sum_{\inpcl{i} \in \cut} \opt_{\inpcl{i}} + \sum_{\inpcl{i} \in \merge} \opt_{\inpcl{i}} \nonumber \\
        &\geq \sum_{\inpcl{i} \in \cut} \ccostf{\inpcl{i}} + 2 \sum_{\inpcl{i} \in \merge} \mcostf{\inpcl{i}} .\label{equn:cut-case-fourth}
    \end{align}

    Now, from \cref{equn:cut-case-fourth} and \cref{equn:cut-case-third} we get

    \begin{align}
        \costone{\out} + \costtwo{\out} &\leq 2 \, \, \dist(\inpset, \mop) \n \\
        &\leq 2 \, \, \opt.\n
    \end{align}
\end{proof}

\begin{claim} \label{lem:cost 3 + 4 cut case}
    $\costthree{\out} + \costfour{\out} \leq 1.5 \, \opt$.
\end{claim}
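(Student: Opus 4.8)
The plan is to charge $\costthree{\out}+\costfour{\out}$ against the three terms of the lower bound on $\opt$ supplied by~\cref{lem.opt.si.times.p.min.si.breakdown}, namely $\opt\ge\sum_{\inpcl{i}\in\cut\setminus\cut'}\tfrac12 s_i^2+\sum_{\inpcl{i}\in\merge}\tfrac12 d_i^2+\sum_{\inpcl{i}\in\cut'}\tfrac{s_i(p-s_i)}2$, where $s_i=\surp{\inpcl{i}}$ and $d_i=\defi{\inpcl{i}}$. In the cut case $\algom$ is never invoked, so every pair counted by $\costthree{\out}$ (an inter-cluster pair between two surplus points coming from two \emph{different} input clusters) or by $\costfour{\out}$ (a pair of surplus points of the \emph{same} input cluster that got separated) involves only surplus points that $\algog$ cut out; and in the cut case every such point is placed either into some cluster of $\merge$ (while its deficit $d_j$ is being filled during the initial cut--merge phase) or into one of the $\pcard$ size-$p$ extra clusters $\pcl{1},\dots,\pcl{\pcard}$ built by $\algoc$. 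I would therefore write $\costthree{\out}+\costfour{\out}=C^{\mathrm{mrg}}+C^{\mathrm{ext}}$, separating contributions that occur inside $\merge$-clusters from those that occur inside extra clusters.

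To bound $C^{\mathrm{mrg}}$ I would observe that a cluster $\inpcl{j}\in\merge$ receives exactly $d_j$ surplus points, so if the distinct contributing clusters supply $a_1,a_2,\dots$ of them (with $\sum_k a_k=d_j$), the number of inter-source pairs created inside $\inpcl{j}$ is $\tfrac12\bigl(d_j^2-\sum_k a_k^2\bigr)\le\tfrac12 d_j^2$; likewise each $\inpcl{i}\in\cut\setminus\cut'$ has its entire surplus $s_i$ split among $\merge$-clusters, contributing at most $\tfrac12 s_i^2$ to $\costfour{\out}$. Summing, $C^{\mathrm{mrg}}\le\tfrac12\sum_{\inpcl{j}\in\merge}d_j^2+\tfrac12\sum_{\inpcl{i}\in\cut\setminus\cut'}s_i^2$, i.e. at most the sum of the first two terms of the~\cref{lem.opt.si.times.p.min.si.breakdown} bound; this part needs no structural facts about how the surplus is routed, since those two estimates hold regardless of how many clusters feed $\inpcl{j}$ or receive from $\inpcl{i}$.

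For $C^{\mathrm{ext}}$ I would set $W'=\sum_{\inpcl{i}\in\cut}s_i-\sum_{\inpcl{j}\in\merge}d_j$, the total surplus poured into the extra clusters (it equals $\pcard\,p$ because the total blue count, hence $\sum_i s_i$, is a multiple of $p$), let $s_i'\le s_i$ be the part of $\inpcl{i}$'s surplus actually poured in (equal to $s_i$ for all but at most one $\inpcl{i}\in\cut'$), and let $x_{i,j}=|\pcl{j}\cap(\text{surplus of }\inpcl{i})|$, so $\sum_j x_{i,j}=s_i'$, $\sum_i x_{i,j}=p$, $\sum_{i,j}x_{i,j}=W'$. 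A routine expansion of $\costthree{\out}$ and $\costfour{\out}$ restricted to the extra clusters as sums of $\binom{\cdot}{2}$'s and simplifying yields the identity $C^{\mathrm{ext}}=\tfrac{W'p}{2}+\tfrac12\sum_i (s_i')^2-\sum_{i,j}x_{i,j}^2$. The crucial observation is that $\algoc$ fills $\pcl{1},\pcl{2},\dots$ in order while scanning the remaining cut clusters in order, and because each extra cluster has size $p$ while $s_i'\le s_i\le p/2<p$, every source surplus straddles at most one block boundary, hence is split across at most two extra clusters, and at most $\pcard-1$ sources get split at all. This gives $\sum_{i,j}x_{i,j}^2\ge\sum_i(s_i')^2-\tfrac12\sum_{\text{split }i}(s_i')^2\ge\sum_i(s_i')^2-\tfrac12(\pcard-1)(p/2)^2>\sum_i(s_i')^2-\tfrac{W'p}{8}$, so $C^{\mathrm{ext}}<\tfrac{5W'p}{8}-\tfrac12\sum_i(s_i')^2$. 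Since $s_i'\le p/2$ we have $\sum_i(s_i')^2\le\tfrac p2 W'$, and a short rearrangement turns this into $C^{\mathrm{ext}}<\tfrac34\sum_i s_i'(p-s_i')\le\tfrac34\sum_{\inpcl{i}\in\cut'}s_i(p-s_i)$, the last inequality using that $s\mapsto s(p-s)$ is increasing on $[0,p/2]$ and $s_i'\le s_i\le p/2$. Combining the two estimates with~\cref{lem.opt.si.times.p.min.si.breakdown},
\[
\costthree{\out}+\costfour{\out}=C^{\mathrm{mrg}}+C^{\mathrm{ext}}\le\tfrac32\Bigl(\tfrac12\sum_{\inpcl{j}\in\merge}d_j^2+\tfrac12\sum_{\inpcl{i}\in\cut\setminus\cut'}s_i^2+\sum_{\inpcl{i}\in\cut'}\tfrac{s_i(p-s_i)}2\Bigr)\le\tfrac32\,\opt .
\]

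The step I expect to be the main obstacle is the tightness of the extra-cluster estimate: the naive bound $C^{\mathrm{ext}}\le\tfrac{W'p}{2}$ only yields $C^{\mathrm{ext}}\le 2\sum_{\inpcl{i}\in\cut'}\tfrac{s_i(p-s_i)}2$, which would blow the overall approximation up to $4\opt$, so the argument genuinely relies on the ``at most $\pcard-1$ sources are split'' fact to bring the extra-cluster factor down to $\tfrac32$. A secondary annoyance is the unique cut cluster that may be partly consumed during the cut--merge phase and partly poured into the extra clusters: its ``mixed'' pairs are not captured cleanly by $C^{\mathrm{mrg}}$ or $C^{\mathrm{ext}}$ as described above, but since that cluster's surplus is at most $p/2$, all pairs touching its surplus points number at most $\binom{s_i}{2}<\tfrac12 s_i^2\le\tfrac{s_i p}{4}\le\tfrac12 s_i(p-s_i)$, so it can simply be absorbed into its own term $\tfrac{s_i(p-s_i)}2$ of the~\cref{lem.opt.si.times.p.min.si.breakdown} bound.
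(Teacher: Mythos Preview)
Your plan follows the paper's route: split into merge-side and extra-side contributions, charge the merge side against $\tfrac12\sum_{j\in\merge}d_j^2+\tfrac12\sum_{i\in\cut\setminus\cut'}s_i^2$, and charge the extra side against $\tfrac32\sum_{i\in\cut'}\tfrac{s_i(p-s_i)}2$ using that each surplus is cut at most once inside the size-$p$ blocks and that there are only $\pcard-1$ block boundaries. The paper reaches the same $\tfrac32$ factor on the extra side by an explicit term-by-term rewrite of the double sum rather than your identity $C^{\mathrm{ext}}=\tfrac{W'p}{2}+\tfrac12\sum_i(s_i')^2-\sum_{i,j}x_{i,j}^2$, but the two computations are equivalent.

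The one soft spot is your boundary-cluster fix. Write $s=s_{i^*}$ and $t=s_{i^*}'$. What $C^{\mathrm{mrg}}+C^{\mathrm{ext}}$ misses for $i^*$ are all the separated pairs among its surplus that are not wholly inside the extra blocks, totalling at most $\tfrac12(s^2-t^2)$. But once you pass from $\tfrac34\,t(p-t)$ to $\tfrac34\,s(p-s)$ in the $C^{\mathrm{ext}}$ bound, the $i^*$ term of the lower bound is already spent at coefficient $\tfrac32$, so there is no free copy of $\tfrac{s(p-s)}2$ left into which you could ``simply absorb'' an additional $\tfrac12 s(p-s)$. The correct repair is to keep the tighter contribution $\tfrac12 t(p-t)$ for $i^*$ inside $C^{\mathrm{ext}}$ and then verify $\tfrac12 t(p-t)+\tfrac12(s^2-t^2)\le\tfrac34\,s(p-s)$ for all $0\le t\le s\le p/2$; this quadratic inequality does hold (with equality at $s=p/2$, $t=p/4$), so the argument closes, but it requires that check rather than the one-line absorption you propose. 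Incidentally, the paper's own proof skates over the boundary cluster at the same point.
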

\begin{proof}

    We introduce some notations.

    For each cluster $ \inpcl{i}\in \cut $, denote by $ X_{i,{1}}, X_{i,{2}}, \dots, X_{i,{\ell_{i}}} $ the partition of $ s_{i} $ blue points that are split from $ \inpcl{i} $, and each of these parts are merged to different clusters $ \inpcl{j} $'s in $ \merge $, for $ k=1,2,\dots, \ell_{j} $.

     Let $\pcls$ be the set of clusters of size $p$ formed by $\algoc$. Let $\pcl{j}$ be its $j$th cluster. Represents $ \pcl{j} = Y_{j,1} \cup Y_{j,2} \cup \dots \cup Y_{j,y_{j}} $ where each $ Y_{j,k} $ is a set of blue points coming from a cluster $ \inpcl{j_{k}}\in \cut $.

    We prove this claim by showing the correctness of the following claims.
    \begin{subclaim}\label{clm:cost 4}
        $ \costfour{\out} \leq \sum_{\inpcl{i} \in \cut\setminus\cut'}\dfrac{1}{2}s_{i}^{2} + \sum_{\inpcl{i} \in \cut'}{\left( \sum_{1\leq j< k\leq \ell_{i}} |X_{i,j}||X_{i,k}| \right)}. $
    \end{subclaim}
    \begin{subclaim}\label{clm:cost 3} 
    $ \costthree{\out} \leq \sum_{\inpcl{i}\in \merge}\dfrac{1}{2}d_{i}^{2} + \dfrac{1}{2}\sum_{\pcl{j}\in \pcls}\left(\sum_{k=1}^{y_{j}}|Y_{j,k}|(p-|Y_{j,k}|)\right). $
    \end{subclaim}
    \begin{subclaim}\label{clm:three half cost}
        $$ \sum_{\inpcl{i} \in \cut'}{\left( \sum_{1\leq j< k\leq \ell_{i}} |X_{i,j}||X_{i,k}| \right)} +  \dfrac{1}{2}\sum_{\pcl{j}\in \pcls}\left(\sum_{k=1}^{y_{j}}|Y_{j,k}|(p-|Y_{j,k}|)\right) \leq \dfrac{3}{2}\sum_{\inpcl{i}\in \cut'}\dfrac{s_{i}(p-s_{i})}{2}. $$
    \end{subclaim}
    Given the correctness of the claims above, the lemma can be proved as follows.
    {\small
    \begin{align}
        &\> \costthree{\out} + \costfour{\out} \nonumber \\
        &\leq \sum_{\inpcl{i} \in \cut\setminus\cut'}\dfrac{1}{2}s_{i}^{2} + \sum_{\inpcl{i} \in \merge}\dfrac{1}{2}d_{i}^{2} \nonumber\\
        &\> + \sum_{\inpcl{i} \in \cut'}\left( \sum_{1\leq j<k \leq \ell_{i}} |X_{i,j}||X_{i,k}| \right) + \dfrac{1}{2}\sum_{\pcl{j}\in \pcls}\left(\sum_{k=1}^{y_{j}}|Y_{j,k}|(p-|Y_{j,k}|)\right) && (\text{by~\cref{clm:cost 4} and~\cref{clm:cost 3}}) \nonumber \\
        &\leq \sum_{\inpcl{i} \in \cut\setminus\cut'}\dfrac{1}{2}s_{i}^{2} + \sum_{\inpcl{i} \in \merge}\dfrac{1}{2}d_{i}^{2} + \dfrac{3}{2}\sum_{\inpcl{i}\in \cut'}\dfrac{s_{i}(p-s_{i})}{2} && (\text{by~\cref{clm:three half cost}})\nonumber \\
        &\leq \dfrac{3}{2} \left(\sum_{\inpcl{i} \in \cut\setminus\cut'}\dfrac{1}{2}s_{i}^{2} + \sum_{\inpcl{i} \in \merge}\dfrac{1}{2}d_{i}^{2} + \sum_{\inpcl{i}\in \cut'}\dfrac{s_{i}(p-s_{i})}{2} \right) \nonumber \\
        &\leq \dfrac{3}{2}\opt. && (\text{by~\cref{lem.opt.si.times.p.min.si.breakdown}}) \nonumber
    \end{align}
}
    We now proceed to the proof of each of the above claims to finish the proof:

    \begin{proof}[Proof for~\cref{clm:cost 4}]
        By definition,
        \begin{align}
            \costfour{\out}  &= \sum_{\inpcl{i} \in \cut}{\left( \sum_{1\leq j< k\leq \ell_{i}} |X_{i,j,}||X_{i,k}| \right)} \nonumber\\
            &= \sum_{\inpcl{i} \in \cut \setminus \cut'}{\left( \sum_{1\leq j< k\leq \ell_{i}} |X_{i,j}||X_{i,k}| \right)} + \sum_{\inpcl{i} \in \cut'}{\left( \sum_{1\leq j< k\leq \ell_{i}} |X_{i,j}||X_{i,k}| \right)}\nonumber            
        \end{align}
        Note that $ \sum_{1\leq j \leq \ell_{i}}|X_{i,j}| = s_{i} $, therefore $\sum_{1\leq j<k\leq \ell_{i}} |X_{i,j}| |X_{i,k}| \leq \frac{1}{2}\left( \sum_{1\leq j\leq \ell_{i}} |X_{i,j}| \right)^{2} = \frac{1}{2}s_{i}^{2}$, it follows that
        \begin{align}
            \costfour{\out} \leq \sum_{\inpcl{i} \in \cut\setminus\cut'}\dfrac{1}{2}s_{i}^{2} + \sum_{\inpcl{i} \in \cut'}{\left( \sum_{1\leq j< k\leq \ell_{i}} |X_{i,j}| |X_{i,k}| \right)}. \nonumber 
        \end{align}
    \end{proof}

    \begin{proof}[Proof for~\cref{clm:cost 3}]
        For each cluster $\inpcl{i} \in \merge$, $\algog$ brings $d_{i}$ blue points from some other clusters $\inpcl{\iprime} \in \cut$ into $\inpcl{i}$. Denote by $ Y_{i,1}, Y_{i,2}, \dots, Y_{i,y_{i}} $ the partition of these $ d_{i} $ blue points such that in the following order, $ \algog $ merges into $ \inpcl{i} $: $ Y_{i,1} $ from $ \inpcl{i_{1}} $, $ Y_{i,2} $ from $ \inpcl{i_{2}} $, and so on, $ Y_{i,y_{i}} $ from $ \inpcl{i_{y_{i}}} $. This implies
        \begin{align}
        d_{i} = |Y_{i,1}| + |Y_{i,2}| + \dots + |Y_{i,y_{i}}| \label{eq.defi of Ci},
        \end{align}
        and the cost contributed to $\costthree{\out}$ by this cluster $\inpcl{i}$ is
        \begin{align}
            \dfrac{1}{2} \left( \sum_{j=1}^{y_{i}} |Y_{i,j}|(d_{i} - |Y_{i,j}|) \right) &\leq \dfrac{1}{2y_{i}}d_{i} (y_{i} - 1)d_{i}\label{eq.cost 4.chebyshev ineq}\\
            &\leq \dfrac{1}{2}d_{i}^{2}. \label{eq.cost 4. bound}
        \end{align}

        To obtain~\eqref{eq.cost 4.chebyshev ineq}, we use Chebyshev inequality ($\frac{a_{1}b_{1}+a_{2}b_{2}+\dots a_{n}b_{n}}{n} \leq \frac{a_{1}+a_{2}+\dots a_{n}}{n}\frac{b_{1}+b_{2}+\dots b_{n}}{n}$, for $a_{1}\leq a_{2}\leq \dots \leq a_{n}, b_{1}\geq b_{2}\geq \dots \geq b_{n}$), combining with~\eqref{eq.defi of Ci}.

        Similarly, for each cluster of size $ p $, $ \pcl{j} \in \pcls $, we can assume $\pcl{j}$ is formed by $ Y_{j,1} $ from $ \inpcl{j_{1}} $, $ Y_{j,2} $ from $ \inpcl{j_{2}} $, and so on, $ Y_{j,{y_{j}}} $ from $ \inpcl{j_{y_{j}}} $. Therefore, the cost contributed to $\costthree{\out}$ by $P_{j}$ is 
        \begin{align}
            \dfrac{1}{2}\sum_{k=1}^{y_{j}}|Y_{j,k}|(p-|Y_{j,k}|) \label{eq.cost 3 contributed by Pj}
        \end{align}
        From~\eqref{eq.cost 4. bound} and~\eqref{eq.cost 3 contributed by Pj}, we have 
    $ \costthree{\out} \leq \sum_{\inpcl{i}\in \merge}\dfrac{1}{2}d_{i}^{2} + \dfrac{1}{2}\sum_{\pcl{j}\in \pcls}\left(\sum_{k=1}^{y_{j}}|Y_{j,k}|(p-|Y_{j,k}|)\right). $
    \end{proof}

    \begin{proof}[Proof for~\cref{clm:three half cost}]
        Let $$ A = \sum_{\inpcl{i} \in \cut'}{\left( \sum_{1\leq j< k\leq \ell_{i}} |X_{i,j}||X_{i,k}| \right)} +  \dfrac{1}{2}\sum_{\pcl{j}\in \pcls}\left(\sum_{k=1}^{y_{j}}|Y_{j,k}|(p-|Y_{j,k}|)\right),$$
        and $ A' = \sum_{\inpcl{i}\in \cut'}\dfrac{s_{i}(p-s_{i})}{2} $. We need to show that $ A \leq \dfrac{3}{2}A' $.
        
        In each cluster $\inpcl{i} \in \cut'$, $ \algoc $ splits $s_{i}$ blue points from it, and these $s_{i}$ blue points are split again into at most two parts. Therefore, we can write $ \sum_{1\leq j< k\leq \ell_{i}} |X_{i,j}||X_{i,k}|  = |X_{i,1}|(s_{i}-|X_{i,1}|) = \alpha_{i,1}(s_{i} - \alpha_{i,1}) $, where $ 0<\alpha_{i,1} = |X_{i,1}| \leq s_{i} $.

        Each cluster $ \pcl{j}\in \pcls $ is formed by merging $ Y_{j,1} $ from $ \inpcl{j_{1}} \in \cut' $, $ Y_{j,2} $ from $ \inpcl{j_{2}} \in \cut' $, and so on, $ Y_{j,{y_{j}}} $ from $ \inpcl{j_{y_{j}}} \in \cut' $, in that order. Therefore, $ |Y_{j,k}| = s_{j_{k}} $ for $ 2\leq k< y_{j} $, and $ |Y_{j,1}| = \alpha_{j_{1}} \leq s_{j_{1}} $, $ |Y_{j,y_{j}}| = \beta_{j_{y_{j}}} \leq s_{j_{y_{j}}} $.

        $ A $ can be rewritten as
        \begin{align}
            A &= \sum_{\inpcl{i} \in \cut'}\alpha_{i,1}(s_{i}-\alpha_{i,1}) \nonumber \\
              &\> + \dfrac{1}{2}\sum_{\pcl{j} \in \pcls}\left( \alpha_{j_{1}}(p-\alpha_{j_{1}}) + \sum_{t=2}^{y_{j}-1}s_{j'_{t}}(p-s_{j'_{t}}) + \beta_{j_{y_{j}}}(p-\beta_{j_{y_{j}}}) \right) \nonumber \\
              &=\sum_{P_{j} \in \pcls} \left( \dfrac{1}{2}   \left( \alpha_{j_{1}}(p-\alpha_{j_{1}}) + \sum_{t=2}^{y_{j}-1}s_{j'_{t}}(p-s_{j'_{t}}) + \beta_{j_{y_{j}}}(p-\beta_{j_{y_{j}}}) \right)\nonumber + \alpha_{j_{1}}(s_{j_{1}} - \alpha_{j_{1}}) \right) \nonumber \\
              &=  \sum_{P_{j} \in \pcls} \Bigl( \dfrac{1}{2}   \left( \alpha_{j_{1}}(p-s_{j_{1}}) + \sum_{t=2}^{y_{j}-1}s_{j'_{t}}(p-s_{j'_{t}}) + \beta_{j_{y_{j}}}(p-s_{j_{y_{j}}}) \right)\nonumber\\
              &\> + 3\alpha_{j_{1}}(s_{j_{1}} - \alpha_{j_{1}}) + \beta_{j_{y_{j}}}(s_{j_{y_{j}}}-\beta_{j_{y_{j}}}) \Bigl) \nonumber \\
              &= \sum_{\inpcl{j}\in \cut'} \dfrac{s_{j}(p-s_{j})}{2} + \sum_{\pcl{j}\in \pcls}\left(3\alpha_{j_{1}}(s_{j_{1}} - \alpha_{j_{1}}) + \beta_{j_{y_{j}}}(s_{j_{y_{j}}}-\beta_{j_{y_{j}}}) \right) \nonumber \\
              &= A' + \sum_{\pcl{j}\in \pcls} \dfrac{1}{2}(3\alpha_{j_{1}}(s_{j_{1}} - \alpha_{j_{1}}) + \beta_{j_{y_{j}}}(s_{j_{y_{j}}}-\beta_{j_{y_{j}}})). \nonumber
        \end{align}

        Since $s_{j} \leq \frac{p}{2}$ for all $\inpcl{j} \in \cut$, and $ p = \alpha_{j_{1}} + s_{j_{2}} + \dots + s_{j_{y_{j}}} + \beta_{j_{y_{j}}} $, it follows that
\begin{align}
    A' &\geq \sum_{\pcl{j} \in \pcls} \dfrac{1}{2} (\alpha_{j_{1}} + s_{j_{2}} + \dots + s_{j_{y_{j}}} + \beta_{j_{y_{j}}})\dfrac{p}{2} & \nonumber \\
       &= |\pcls|\dfrac{p^{2}}{4}. \nonumber 
\end{align}
Moreover, $\alpha_{j_{1}}(s_{j_{1}} - \alpha_{j_{1}}) \leq \dfrac{s_{j_{1}}^{2}}{4} \leq \dfrac{p^{2}}{16}$, and $\beta_{j_{y_{j}}}(s_{j_{y_{j}}}-\beta_{j_{y_{j}}})) \leq \dfrac{s_{j_{y_{j}}}^{2}}{4} \leq \dfrac{p^{2}}{16}$. Combining these facts, we get
\begin{align}
    A &\leq A' + |\pcls|\dfrac{1}{2}\left( 3\dfrac{p^{2}}{16} + \dfrac{p^{2}}{16} \right) \nonumber\\
      &= A' + |\pcls|\dfrac{p^{2}}{8} \leq \dfrac{3}{2}A', \nonumber
\end{align}
which is desired.

\end{proof}

\end{proof}

Now we are ready to prove \cref{lem:main-cut-case}.

\begin{proof}[Proof of \cref{lem:main-cut-case}]
   Let $\out$ be the output of $\algog$ in the cut case. Let $\costone{\out}, \costtwo{\out},$ $\costthree{\out},$ $\costfour{\out}$ be the four costs that are paid by our algorithm when in $\algog$, we are left with the clusters present only in $\cut$.~\cref{lem:main-cut-case} is essentially showing that
\begin{align}
    \costone{\out} + \costtwo{\out} + \costthree{\out} + \costfour{\out} \leq 2 \, \opt + 1.5\opt. \nonumber
\end{align}

We complete the proof by applying~\cref{lem:cost-one-plus-two-cut-case}, which gives 
\begin{align}
   \costone{\out} + \costtwo{\out} \leq 2 \, \opt \nonumber,
\end{align}
and applying~\cref{lem:cost 3 + 4 cut case}, which gives
\begin{align}
    \costthree{\out} + \costfour{\out} \leq 1.5\opt \nonumber.
\end{align} 
\end{proof}

Hence, from \cref{lem:cost-one-plus-two-cut-case} and \cref{lem:cost 3 + 4 cut case} we get that $\mop$ pays at most $3.5 \, \, \opt$ in the cut case.

\section{\texorpdfstring{Closest Balanced Clustering for $p,q>1$}{Closest Balanced Clustering for p,q > 1}}\label{sec:multiple-of-p-q-clustering}

In this section, given a clustering $\inpsetd$ of a set of red-blue colored points, where the irreducible ratio of blue to red points is $ p/q$ for some integer $ p,q > 1 $,  it returns a balanced clustering $ \outpq $.
That is, in the resulting clustering $ \outpq $, every
$\outpqcl{i} \in \outpq$, satisfies the condition that $|\red{\outpqcl{i}}|$ is a multiple of $q$ and $|\blue{\outpqcl{i}}|$ is a multiple of $p$. Our main result for this section is the following:

\multiplepq*

We first provide an algorithm to find a $7.5$-close {\bal} of $\inpset$ and then analyze the algorithm in the subsequent subsections.

\subsection{Details of the Algorithm}\label{subsec:Algopq-cut-case}

Before describing our algorithm, let us first define a few notations that we use later.

For any cluster, if its number of red (or blue) points is not a multiple of $q$ (or $p$), then we define its red (or blue) surplus as the minimum number of red( or blue) points whose removal makes the number of red (or blue) points to be a multiple of $q$ (or $p$), and we define its deficit as the minimum number of red (or blue) points whose addition makes the number of red (or blue) points to be a multiple of $q$ (or $p$). Formally, we define them as follows.

\begin{itemize}
    \item Red Surplus of $\inpcl{i}$, $\rsurp{\inpcl{i}}$: For any cluster $\inpcl{i}\subseteq V$, $\rsurp{\inpcl{i}} = |\red{\inpcl{i}}| \mod q$.
    \item Blue Surplus of $\inpcl{i}$, $\bsurp{\inpcl{i}}$: For any cluster $\inpcl{i} \subseteq V$, $\bsurp{\inpcl{i}} = |\blue{\inpcl{i}}| \mod p$.
    \item Red Deficit of $\inpcl{i}$, $\rdefi{\inpcl{i}}$: For any cluster $\inpcl{i} \subseteq V$ if $q \nmid |\red{\inpcl{i}}|$ then $\rdefi{\inpcl{i}} = q - \rsurp{\inpcl{i}}$ otherwise $\rdefi{\inpcl{i}} = 0$.
    \item Blue Deficit of $\inpcl{i}$, $\bdefi{\inpcl{i}}$: For any cluster $\inpcl{i} \subseteq V$ if $p \nmid |\blue{\inpcl{i}}|$ then $\bdefi{\inpcl{i}} = p - \bsurp{\inpcl{i}}$ otherwise $\bdefi{\inpcl{i}} = 0$.
\end{itemize}


Based on the surplus of a cluster, we define four sets of clusters as follows.
\begin{itemize}
    \item $ \rcut = \{ \inpcl{i}\ | \ \rsurp{\inpcl{i}} \leq q/2 \} $;
    \item $ \rmerge = \{ \inpcl{i} \ | \ \rsurp{\inpcl{i}} > q/2 \} $;
    \item $ \bcut = \{ \inpcl{i}\ | \ \bsurp{\inpcl{i}} \leq p/2 \} $;
    \item $ \bmerge = \{ \inpcl{i} \ | \ \bsurp{\inpcl{i}} > p/2 \} $.
\end{itemize}

A cluster belongs to exactly two sets. For example, a cluster with a red surplus greater than $ q/2 $ and a blue surplus smaller than $p/2$, that is, $s_b(D_i) \leq p/2$ and $s_r(D_i) > p/2$. Such clusters belong to $ \bcut $ and $ \rmerge $. Therefore, the input clustering is a partition (union of disjoint sets) $ \inpset = (\rcut\cap \bcut)\cup (\rcut\cap \bmerge) \cup (\rmerge\cap \bcut)\cup (\rmerge\cap \bmerge) $.

We briefly describe our algorithm to find a $7.5$-close {\bal} of $\inpset$.

\paragraph{Description of the algorithm}


Our algorithm $ \algopqgen $ (\cref{alg:algo-for-p/q-general}) aims to, for each cluster $ \inpcl{i}\in \rcut $, cut $ \rsurp{\inpcl{i}} $ red points from $ \inpcl{i} $ and merge these $ \rsurp{\inpcl{i}} $ red points to some clusters $ \inpcl{j}\in \rmerge $. Similarly, $ \bsurp{\inpcl{i}} $ blue points are cut from a cluster $ \inpcl{i}\in \bcut $ and merged to some clusters $ \inpcl{j}\in \bmerge $. 

We describe the implementation in~\cref{alg:algo-for-p/q-general}. We initialize the sets $ \rcuta = \rcut $, $ \rmergea = \rmerge $, $ \bcut = \bcuta $, and $ \bmergea = \bmerge $. Moreover, in $ \rcuta $ and $ \bcuta $, we discard the sets $ \inpcl{i}, \inpcl{j} $ with $ \rsurp{\inpcl{i}}=0 $ and $ \bsurp{\inpcl{j}}=0 $, respectively. The clusters $ \inpcl{i}\in \rmerge $ are sorted in non-increasing order based on $ \rsurp{\inpcl{i}}(|\inpcl{i}| - \rsurp{\inpcl{i}}) - \rdefi{\inpcl{i}}|\inpcl{i}| $. The algorithm iterates through each cluster $ \inpcl{i}\in \rcut$. For each such $ \inpcl{i} $, the algorithm takes a cluster $ \inpcl{j}\in \rmerge $, and transfer at most $\min(\rsurp{\inpcl{i}}, \rdefi{\inpcl{j}})$ red points from $\inpcl{i}$ to $\inpcl{j}$. After this, the number of red points in $ \inpcl{i} $ or $ \inpcl{j} $ must be a multiple of $ q $. If $ q||\red{\inpcl{i}}| $, the algorithm iterates to the next cluster in $ \rcut $. Otherwise, if $q||\red{\inpcl{j}}|$, the algorithm finds another cluster in $ \rmerge $ and repeats transferring red points from $ \inpcl{i} $ into this one. This process proceeds until $\rcuta $ or $ \rmergea $ becomes empty. A similar process is also applied for $ \bcut $ and $ \bmerge $. 


After this, we will be in only one of four cases, which will be defined below. These four cases can be easily identified by the input clustering $\inpset$.

\begin{itemize}
    \item \textbf{Cut-Cut case:} If all the sets are empty or if the nonempty set of clusters is $ \rcuta $ or $ \bcuta$, or both $ \rcuta $ and $ \bcuta $. This case occurs if the following holds 
    \begin{align*}
        \sum_{\inpcl{i} \in \rcut } \rsurp{\inpcl{i}} \geq \sum_{\inpcl{j} \in \rmerge} (q - \rsurp{\inpcl{j}})
    \end{align*}
    and
    \begin{align*}
        \sum_{\inpcl{i} \in \bcut } \bsurp{\inpcl{i}} \geq \sum_{\inpcl{j} \in \bmerge} (p - \bsurp{\inpcl{j}})
    \end{align*}
    \item \textbf{Cut-Merge case:} If the remaining sets of clusters are $ \rcuta $ and $ \bmergea$ . This case occurs when the following holds
    \begin{align*}
        \sum_{\inpcl{i} \in \rcut} \rsurp{\inpcl{i}} > \sum_{\inpcl{j} \in \rmerge} (q - \rsurp{\inpcl{j}})
    \end{align*}
    and
    \begin{align*}
        \sum_{\inpcl{i} \in \bcut} \bsurp{\inpcl{i}} < \sum_{\inpcl{j} \in \bmerge} (p - \bsurp{\inpcl{j}})
    \end{align*}
    
    \item \textbf{Merge-Cut case:} If the remaining sets of clusters are $\rmergea$ and $ \bcuta $. This case occurs when the following holds
    \begin{align*}
        \sum_{\inpcl{i} \in \rcut} \rsurp{\inpcl{i}} < \sum_{\inpcl{j} \in \rmerge} (q - \rsurp{\inpcl{j}})
    \end{align*}
    and
    \begin{align*}
        \sum_{\inpcl{i} \in \bcut} \bsurp{\inpcl{i}} > \sum_{\inpcl{j} \in \bmerge} (p - \bsurp{\inpcl{j}})
    \end{align*}
    
    \item \textbf{Merge-Merge case:} If the remaining set of cluster is $ \rmergea $, or $\bmergea $, or both $ \rmergea $ and $ \bmergea $. This case occurs when the following holds.
    \begin{align*}
        \sum_{\inpcl{i} \in \rcut} \rsurp{\inpcl{i}} \leq \sum_{\inpcl{j} \in \rmerge} (q - \rsurp{\inpcl{j}})
    \end{align*}
    and
    \begin{align*}
        \sum_{\inpcl{i} \in \bcut} \bsurp{\inpcl{i}} \leq \sum_{\inpcl{j} \in \bmerge} (p - \bsurp{\inpcl{j}})
    \end{align*}
\end{itemize}

Each of these cases is handled by using specific subroutines. We provide a brief description of all the subroutines below.

\vspace{0.5em}

\noindent \textbf{$\algopqc$} (\cref{alg:algo-for-p/q-cut}): As mentioned previously we call the subroutine $\algopqc$ in the cut-cut case, that is when the remaining set of clusters $D_i$ belongs to $\rcut$ or $ \bcut$, or both $\rcut$ and $\bcut$. In this case, we cut $\rsurp{D_i}$ red points from every remaining cluster $D_i$ and form extra clusters of size $q$, and we also cut $\bsurp{D_i}$ blue points from every remaining cluster $D_i$ and form extra clusters of size $p$. We do this operation sequentially until, in every cluster, the number of red points is a multiple of $q$ and the number of blue points is a multiple of $p$. 

\vspace{0.5em}

\noindent \textbf{$\algopqcm$}(\cref{alg:algo-for-p/q-cutmerge}) : We call the subroutine $\algopqcm$(\cref{alg:algo-for-p/q-cutmerge}) in the cut-merge case when the remaining set of clusters $D_i$ belong to $\rcut$ and $\bmerge$. In this case, on the red side, we cut $\rsurp{D_i}$ red points from every cluster $D_i$ and form extra clusters of size $q$. On the blue side, since the surplus $\bsurp{D_i} > p/2$, we want to merge the deficit amount of blue points in these clusters. This algorithm is a bit more involved and we describe it below.

    $\algopqgen$ takes a clustering $\inpset$ as input where the clusters $D_i \in \inpset$ can belong to any of the four sets $\rcut$, $\bcut$, $\rmerge$ or $\bmerge$. The Cut-Merge case occurs when the total surplus of the clusters belonging to $\bcut$ is less than the total deficit of the clusters belonging to $\bmerge$, that is 
    
    \[
        \sum_{D_i \in \bcut}\bsurp{D_i} < \sum_{D_i \in \bmerge}\bdefi{D_i}.
    \] 
    
    Before the subroutine $\algopqcm$(\cref{alg:algo-for-p/q-cutmerge}) is called, the algorithm $\algopqgen$ cuts the blue surplus points from the clusters in $\bcut$ and merges these blue points to the clusters in $\bmerge$. At this point, for all clusters in $\bcut$, the number of blue points is a multiple of $p$. To remove any ambiguity, let us call this transformed set of clusters $\nbcut$ and the remaining set of clusters in $\bmerge$ where the number of blue points is still not a multiple of $p$ as $\bmerge'$. 

    The subroutine $\algopqcm$(\cref{alg:algo-for-p/q-cutmerge}) takes two sets $\nbcut, \bmerge'$ as input. In $\algopqcm$(\cref{alg:algo-for-p/q-cutmerge}), we calculate the sum of $\bdefi{\inpcl{i}}$ for each of the remaining clusters $\inpcl{i} \in \bmerge'$. Let, $W_b = \sum_{D_j \in \bmerge'} \bdefi{\inpcl{j}}$. Now, each cluster $D_k \in \nbcut \cup \bmerge'$ the set $\blue{D_k}$ is divided into $(|\blue{D_k}| - \bsurp{D_k})/p$ subsets of size $p$ and one subset of size $\bsurp{D_k}$. Let us number the subset of size $\bsurp{D_k}$ as $0$, and all other subsets of size $p$ are numbered arbitrarily. 

    Each of these subsets is attached with a cost 

    \begin{align}
        \kappa_0(D_j) = \bsurp{D_j} \left( |D_j| - \rsurp{D_j} - \bsurp{D_j} \right) && \text{cutting cost for $0$th subset} \n \\
        \kappa_z(D_j) = p \left( |D_j| - \rsurp{D_j} - (zp + \bsurp{D_j}) \right) && \text{cutting cost for $z$th subset of size $p$ where $z \neq 0$} \n
    \end{align}

    Now, in the algorithm $\algopqcm$(\cref{alg:algo-for-p/q-cutmerge}), we cut $W_b/p$ blue subsets with the minimum cost. Let us call the set of clusters from where we have cut at least one subset as $\m{A}$. We merge these subsets into the clusters present in $\bmerge' \setminus \m{A}$.

\vspace{0.5em}

\noindent \textbf{$\algopqmc$} (\cref{alg:algo-for-p/q-merge-cut}): The description of the algorithm $\algopqmc$ is similar to the description of algorithm $\algopqcm$(\cref{alg:algo-for-p/q-cutmerge}).

\vspace{0.5em}

\noindent \textbf{$\algopqmm$} (\cref{alg:algo-for-merge-merge}): The description of the algorithm $\algopqmm$(\cref{alg:algo-for-merge-merge}) is again similar to the description of the merge part in the algorithm $\algopqcm$(\cref{alg:algo-for-p/q-cutmerge}). Before the subroutine $\algopqmm$(\cref{alg:algo-for-merge-merge}) is called, the algorithm $\algopqgen$ cuts the blue surplus points from the clusters in $\bcut$ and merges these blue points to the clusters in $\bmerge$. At this point, for all clusters in $\bcut$, the number of blue points is a multiple of $p$. To remove any ambiguity, let us call this transformed set of clusters $\nbcut$ and the remaining set of clusters in $\bmerge$ where the number of blue points is still not a multiple of $p$ as $\bmerge'$. Similarly, on the red side, let us call the transformed set of clusters from $\rcut$ after the algorithm $\algopqgen$ be $\nrcut$ and the remaining set of clusters in $\rmerge$ where the number of red points is still not a multiple of $q$ as $\rmerge'$.

    The subroutine $\algopqmm$(\cref{alg:algo-for-merge-merge}) takes four sets $\nbcut,$ $\bmerge',$ $\nrcut,$ $\rmerge'$ as input. First, in $\algopqmm$(\cref{alg:algo-for-merge-merge}), we calculate the sum of $\bdefi{\inpcl{i}}$ for each of the remaining clusters $\inpcl{i} \in \bmerge'$. Let, $W_b = \sum_{D_j \in \bmerge'} \bdefi{\inpcl{j}}$. Now, each cluster $D_k \in \nbcut \cup \bmerge'$ the set $\blue{D_k}$ is divided into $(|\blue{D_k}| - \bsurp{D_k})/p$ subsets of size $p$ and one subset of size $\bsurp{D_k}$. Let us number the subset of size $\bsurp{D_k}$ as $0$, and all other subsets of size $p$ are numbered arbitrarily. 

    Each of these subsets is attached with a cost 

    \begin{align}
        \kappa_0(D_j) = \bsurp{D_j} \left( |D_j| - \bsurp{D_j} \right) && \text{cutting cost for $0$th subset} \n \\
        \kappa_z(D_j) = p \left( |D_j| -  (zp + \bsurp{D_j}) \right) && \text{cutting cost for $z$th subset of size $p$ where $z \neq 0$} \n
    \end{align}

    Now, in the algorithm $\algopqmm$(\cref{alg:algo-for-merge-merge}), we cut $W_b/p$ blue subsets with the minimum cost. Let us call the set of clusters from where we have cut at least one subset as $\m{A}_b$. We merge these blue subsets of size $p$ into the clusters present in $\bmerge' \setminus \m{A}_b$.

    After that,  $\algopqmm$(\cref{alg:algo-for-merge-merge}) does the same thing on the red side. Here, we calculate the sum of $\rdefi{\inpcl{i}}$ for each of the remaining clusters $\inpcl{i} \in \rmerge'$. Let, $W_r = \sum_{D_j \in \rmerge'} \rdefi{\inpcl{j}}$. Now, for each cluster $D_k \in \nrcut \cup \rmerge'$ the set $\red{D_k}$ is divided into $(|\red{D_k}| - \rsurp{D_k})/q$ subsets of size $q$ and one subset of size $\rsurp{D_k}$. Let us number the subset of size $\rsurp{D_k}$ as $0$, and all other subsets of size $q$ are numbered arbitrarily. 

    Each of these subsets is attached with a cost 

    \begin{align}
        \kappa_0(D_j) = \rsurp{D_j} \left( |D_j| - \rsurp{D_j} \right) && \text{cutting cost for $0$th subset} \n \\
        \kappa_z(D_j) = p \left( |D_j| -  (zp + \rsurp{D_j}) \right) && \text{cutting cost for $z$th subset of size $p$ where $z \neq 0$} \n
    \end{align}

    Now, in the algorithm $\algopqmm$(\cref{alg:algo-for-merge-merge}), we cut $W_r/q$ red subsets with the minimum cost. Let us call the set of clusters from where we have cut at least one subset as $\m{A}_r$. We merge these blue subsets of size $p$ into the clusters present in $\bmerge' \setminus \m{A}_r$.

\textbf{Runtime Analysis of $\algopqgen$}: Executing the algorithm $\algopqgen$ is equivalent to executing the algorithm $\algog$(\cref{alg:algo-for-general}) in case of $p>1, q=1$ twice -- once to make the red points in every cluster a multiple of $q$ and again to make the number of blue points in every cluster a multiple of $p$. 

When we make the number of red points in every cluster a multiple of $q$, we need to sort the clusters in $\rmerge$; this takes $O(|\rmerge| \log (\rmerge) ) = O(n \log n)$ time and again to make the number of blue points in every cluster a multiple of $p$ we need to sort the clusters in $\bmerge$ this would again take $O(|\bmerge| \log (\bmerge)) = O(n \log n)$ time.

In this algorithm, a point $v \in V$ is used at most twice -- once while cutting and again while merging. Hence, the cutting and merging process takes at most $O(n)$ time. Thus, the total time complexity is $O(n \log n)$.

\begin{algorithm2e}
\SetAlgoVlined
\DontPrintSemicolon
\small
\caption{$ \algopqgen $}\label{alg:algo-for-p/q-general}
\KwData {Input set of clusters $\inpset$}
\KwResult{A set of clusters $ \outpq $, such that in each cluster $\outpqcl{i} \in \outpq $ the number of blue points is a multiple of $p$, and the number of red points is a multiple of $ q $.} 
    $ \rcuta \gets \{ \inpcl{i}\ | \ 0 <\rsurp{\inpcl{i}} \leq q/2 \} $ and $ \nrcut \gets \{ \inpcl{i}| \rsurp{\inpcl{i}}=0 \} $ \;
    $ \rmergea \gets \inpcl{i} \ | \ \rsurp{\inpcl{i}} \leq q/2 \} $ \;
    $ \bcuta \gets \{ \inpcl{i}\ | \ 0< \bsurp{\inpcl{i}} \leq p/2 \} $  and $ \nbcut \gets \{ \inpcl{i}| \bsurp{\inpcl{i}}=0 \} $ \;
    $ \bmergea \gets \{ \inpcl{i} \ | \ \bsurp{\inpcl{i}} \leq q/2 \} $\;
    $\outpqg \gets \{ \inpcl{i}| \rsurp{\inpcl{i}}=\bsurp{\inpcl{i}}=0  \}$\;

    Sort the clusters $ \inpcl{i}\in \rmerge $ based on their $ \rsurp{\inpcl{i}}(|\inpcl{i}| - \rsurp{\inpcl{i}}) - \rdefi{\inpcl{i}}|\inpcl{i}| $ in non-increasing order.\;

    \While{$ \rcuta \neq \emptyset $ and $ \rmergea \neq \emptyset $}{
        \For{$ \inpcl{i}\in \rcuta $}{
            \For{$ \inpcl{j}\in \rmergea $}{
                $ k \gets \min(\rsurp{\inpcl{i}}, \rdefi{\inpcl{j}} ) $\;
                Cut a set of $ k $ red points from $ \inpcl{i} $ and merge to $ \inpcl{j} $\;
                \uIf{$k = \rdefi{\inpcl{j}}$}{
                    $\rmergea = \rmergea \setminus \{\inpcl{j}\}$ \; 
                    \lIf{ $ \bsurp{\inpcl{j}}=0 $ }{$\outpqg = \outpqg \cup \inpcl{j}$} 
                }
                
                \uIf{$k = \rsurp{\inpcl{i}}$}{
                    $\rcuta \gets \rcuta \setminus \{\inpcl{i}\}$, $ \nrcut \gets \nrcut \cup \{\inpcl{i}\} $ \;
                    \lIf{ $ \bsurp{\inpcl{i}}=0 $ }{$\outpqg = \outpqg \cup \inpcl{i}$}
                    Break
                }
            }
        }
    }
    
    Sort the clusters $ \inpcl{i}\in \bmerge $ based on their $ \bsurp{\inpcl{i}}(|\inpcl{i}| - \bsurp{\inpcl{i}}) - \bdefi{\inpcl{i}}|\inpcl{i}| $ in non-increasing order.\;
    \While{$ \bcuta \neq \emptyset $ and $ \bmergea \neq \emptyset $}{
        \For{$ \inpcl{i}\in \bcuta $}{
            \For{$ \inpcl{j}\in \bmergea $}{
                $ k \gets \min(\bsurp{\inpcl{i}}, \bdefi{\inpcl{j}} ) $\;
                Cut a set of $ k $ blue points from $ \inpcl{i} $ and merge to $ \inpcl{j} $\;
                \uIf{$k = \bdefi{\inpcl{j}}$}{
                    $\bmergea \gets \bmergea \setminus \{\inpcl{j}\}$ \; 
                    \lIf{ $ \rsurp{\inpcl{j}}=0 $ }{$\outpqg = \outpqg \cup \inpcl{j}$} 
                }
                \uIf{$k = \bsurp{\inpcl{i}}$}{
                    $\bcuta \gets \bcuta \setminus \{\inpcl{i}\}$,  $\nbcut \gets \nbcut \cup \{\inpcl{i}\}$ \;
                    \lIf{ $ \rsurp{\inpcl{i}}=0 $ }{$\outpqg = \outpqg \cup \inpcl{i}$} 
                    Break
                }
            }
        }
    }
   \uIf{$\rmergea = \emptyset$ and $ \bmergea = \emptyset $} {
       \Return{$\outpqg \cup \algopqc(\rcuta, \bcuta)$}  \tcp{\cref{alg:algo-for-p/q-cut}} \label{line:algopqgen-refer-algopqc}
    }
    \uElseIf{$ \rmergea = \emptyset $ and $ \bcuta = \emptyset $}{
         \Return{$\outpqg \cup \algopqcm(\rcuta, \nbcut, \bmergea)$}  \tcp{\cref{alg:algo-for-p/q-cutmerge}} \label{line:algopqgen-refer-algopqcm}
    }
    \uElseIf{ $ \rcuta = \emptyset $ and $ \bmergea = \emptyset $}{
        \Return{$ \outpqg \cup \algopqmc(\nrcut, \rmergea, \bcuta) $} \tcp{\cref{alg:algo-for-p/q-merge-cut}}\label{line:algopqgen-refer-algopqmc}
    }\Else {\tcc{\cref{alg:algo-for-merge-merge}}
        \Return{$ \outpqg \cup \algopqm(\nrcut, \rmergea, \nbcut, \bmergea) $}\label{line:algopqgen-refer-algopqm}\; 
    }
\end{algorithm2e}

\begin{algorithm2e}
\DontPrintSemicolon
\caption{$\algopqc(\rcuta, \bcuta)$}\label{alg:algo-for-p/q-cut}
\KwData{ $ \rcuta $: set of cluster $\gencl{i} $ with the red surplus $ 0< \rsurp{\gencl{i}} \leq p/2$;\\ \> \> \> \> \> \>  $ \bcuta $: set of clusters with the blue surplus $ 0< \bsurp{\gencl{i}} \leq p/2$.}
\KwResult{Set of clusters $\outpqcc$, such that in each cluster $\outpqcl{i} \in \outpqcc$ the number of blue points is a multiple of $p$ and the number of red points is a multiple of $q$. }
    $\pcard_{r} = \frac{\sum_{\gencl{i}\in \rcuta}\rsurp{\gencl{i}}}{q}$ \;
    
    Initialize $\pcard_{r}$ many sets $\pclr{1}, \pclr{2} \ldots \pclr{\pcard_{r}}$ to $\emptyset$ \; 
    \tcp{$ \pcard_{r} $ many extra clusters}
    
    Initialize $ \pcard_{r} $ many variables $\ell_1, \ell_2, \ldots \ell_{\pcard_{r}}$ to $q$ \;
    \tcp{leftover space of extra clusters}
    
    Boolean $flag = 0$ \;
    
    \For{$\gencl{i} \in \rcuta$} {
        \For{$j = 1$ to $ \pcard_{r} $} {
             Add $k = min ( \ell_j, \rsurp{\gencl{i} })$ many red points from $\gencl{i} $ to the cluster $ \pclr{j} $ \;
             
             $\ell_j = \ell_j - k$ \;
        }
    }
    $\pcard_{b} = \frac{\sum_{\gencl{i}\in \bcuta}\bsurp{\gencl{i}}}{p}$ \;
    
    Initialize $ \pcard_{b} $ many sets $ \pclb{1}, \pclb{2} \ldots \pclb{\pcard_{b}} $ to $\emptyset$ \; 
    \tcp{$ \pcard_{b} $ many extra clusters}
    
    Initialize $ \pcard_{b} $ many variables $\ell_1, \ell_2, \ldots \ell_{\pcard_{b}}$ to $p$ \;
    \tcp{leftover space of extra clusters}
    
    Boolean $flag = 0$ \;
    
    \For{$\gencl{i} \in \cut'$} {
        \For{$j = 1$ to $ \pcard_{b} $} {
                 Add $k = min ( \ell_j, \bsurp{\gencl{i} })$ many blue points from $\gencl{i} $ to the cluster $ \pclb{j} $ \;
                 
                 $\ell_j = \ell_j - k$ \;
        }
    }
    \Return{$\rcuta \cup \bcuta \cup \{ \pclr{1}, \pclr{2}, \dots, \pclr{\pcard_{r}}\} \cup \{ \pclb{1}, \pclb{2}, \dots, \pclb{\pcard_{b}} \}$}
\end{algorithm2e}

\begin{algorithm2e}
\DontPrintSemicolon
\caption{$\algopqcm(\rcut', \nbcut, \bmerge')$}\label{alg:algo-for-p/q-cutmerge}
\KwData{Three clustering as input. For all $D_i \in \nbcut$ we have $p \mid \blue{D_i}$, for all $D_j \in \rcut'$ and for all $D_k \in \bmerge'$ we have $\rsurp{D_j} < q/2$ and $\bsurp{D_k} < p/2$ respectively.}
\KwResult{A clustering $\outpq$, such that in each cluster $\outpqcl{i} \in \outpq$ the number of blue points is a multiple of $p$ and the number of red points is a multiple of $q$. }

RedExtraSum = $\sum_{D_i \in \rcut'} \rsurp{\inpcl{i} }$ \;
$m = \frac{\text{RedExtraSum}}{q}$ \;
Initialize $m$ many sets $E_{1}, E_{2} \ldots E_{m}$ to $\emptyset$ \;

\tcp{$m$ many extra clusters}

Initialize $m$ many variables $\ell_1, \ell_2, \ldots \ell_m$ to $q$ \;

\tcp{$\ell_i$ denotes leftover space of extra cluster $E_i$.}

Boolean $flag = 0$ \;
\For{$\inpcl{i} \in \rcut'$} {
    \For{$j = 1$ to $m$} {
             Add $k = min ( \ell_j, \rsurp{\inpcl{i} })$ many red points from $\inpcl{i} $ to the cluster $E_{j}$ \;
             $\ell_j = \ell_j - k$ \;
    }
}
$\rcut' = \rcut' \cup \{E_1, E_2, \ldots, E_n\}$

$W_b \gets \sum_{\inpcl{j}\in\bmerge'}\bdefi{\inpcl{j}}$ \;
  \For{$D_k \in (\nbcut \cup \bmerge')$} {
        Initialize a variable $v_k$ to $0$\;
  }

  \For{$a = 1$ to $W/p$} {
    Take the cluster $D_k \in (\nbcut \cup \bmerge')$ for which $\textit{cost}(w_{k,v_k})$ is minimum\;
    
    \tcp{Recall $\textit{cost}(w_{k,z})$ denotes the cost of cutting the $z$th red subset of size $q$ from $D_k$. }
    
    \While{$\textit{size}(w_{k,v_k}) \neq 0$} {
        \For{$\inpcl{j} \in \nbcut \cup \bmerge'$}{
         $\gamma = \min(\bdefi{\inpcl{j}}, \textit{size}(w_{k,v_k}))$\;
         
         cut $\gamma$ many blue points from $D_k$ and merge to $\inpcl{j}$\;
         
         $\textit{size}(w_{k,v_k}) = \textit{size}(w_{k,v_k}) - \gamma$\;
         $\bdefi{\inpcl{j}} = \bdefi{\inpcl{j}} - \gamma$
        }
    }
    \For{$\inpcl{j} \in \bmerge'$} {
  \If{$\bdefi{\inpcl{j}} = 0$}{
  update $\bmerge' = \bmerge' \setminus \{\inpcl{j}\}$
  }
  }
  
  }

  \Return{$\outpq = \rcut' \cup \nbcut \cup \bmerge'$}
\end{algorithm2e}

\begin{algorithm2e}
\DontPrintSemicolon
\caption{$\algopqmc(\nrcut, \rmerge', \bcut')$}\label{alg:algo-for-p/q-merge-cut}
\KwData{Three clustering as input. For all $D_i \in \nrcut$ we have $q \mid \red{D_i}$, for all $D_j \in \rmerge'$ and for all $D_k \in \bcut'$ we have $\rsurp{D_j} > q/2$ and $\bsurp{D_k} \leq p/2$ respectively.}
\KwResult{A clustering $\outpq$, such that in each cluster $\outpqcl{i} \in \outpq$ the number of blue points is a multiple of $p$ and the number of red points is a multiple of $q$. }

BlueExtraSum = $\sum_{D_i \in \bcut'} \bsurp{\inpcl{i} }$ \;
$m = \frac{\text{BlueExtraSum}}{p}$ \;
Initialize $m$ many sets $E_{1}, E_{2} \ldots E_{m}$ to $\emptyset$ \;

\tcp{$m$ many extra clusters}

Initialize $m$ many variables $\ell_1, \ell_2, \ldots \ell_m$ to $q$ \;

\tcp{$\ell_i$ denotes leftover space of extra cluster $E_i$.}

Boolean $flag = 0$ \;
\For{$\inpcl{i} \in \bcut'$} {
    \For{$j = 1$ to $m$} {
             Add $k = min ( \ell_j, \bsurp{\inpcl{i} })$ many blue points from $\inpcl{i} $ to the cluster $E_{j}$ \;
             $\ell_j = \ell_j - k$ \;
    }
}
$\bcut' = \bcut' \cup \{E_1, E_2, \ldots, E_n\} $

$W_r \gets \sum_{\inpcl{j}\in\rmerge'}\rdefi{\inpcl{j}}$ \;
  \For{$D_k \in (\nrcut \cup \rmerge')$} {
        Initialize a variable $v_k$ to $0$\;
  }

  \For{$a = 1$ to $W/p$} {
    Take the cluster $D_k \in (\nrcut \cup \rmerge')$ for which $\textit{cost}(w_{k,v_k})$ is minimum\;
    
    \tcp{Recall $\textit{cost}(w_{k,z})$ denotes the cost of cutting the $z$th blue subset of size $p$ from $D_k$. }
    
    \While{$\textit{size}(w_{k,v_k}) \neq 0$} {
        \For{$\inpcl{j} \in (\nrcut \cup \rmerge')$}{
         $\gamma = \min(\rdefi{\inpcl{j}}, \textit{size}(w_{k,v_k}))$\;
         
         cut $\gamma$ many blue points from $D_k$ and merge to $\inpcl{j}$\;
         
         $\textit{size}(w_{k,v_k}) = \textit{size}(w_{k,v_k}) - \gamma$\;
         $\rdefi{\inpcl{j}} = \rdefi{\inpcl{j}} - \gamma$
        }
    }
    \For{$\inpcl{j} \in \bmerge'$} {
  \If{$\rdefi{\inpcl{j}} = 0$}{
  update $\rmerge' = \rmerge' \setminus \{\inpcl{j}\}$
  }
  }
  
  }
  \Return{$\outpq = \bcut' \cup \nrcut \cup \rmerge'$}
\end{algorithm2e}

\begin{algorithm2e}[t]
\DontPrintSemicolon
\caption{$\algopqmm(\nrcut, \nbcut, \bmerge', \rmerge')$}\label{alg:algo-for-merge-merge}
\KwIn{Clusterings $\nrcut, \nbcut, \bmerge', \rmerge'$ such that:
\begin{itemize}[nosep,leftmargin=1.5em]
    \item $q \mid |\red{D_i}|$ for all $D_i \in \nrcut$; $p \mid |\blue{D_j}|$ for all $D_j \in \nbcut$.
    \item $\rsurp{D_\ell} > q/2$ for $D_\ell \in \rmerge'$, $\bsurp{D_k} > p/2$ for $D_k \in \bmerge'$.
    \item Clusters in $\rmerge', \bmerge'$ are sorted by $(\ccostf{D} - \mcostf{D})$.
\end{itemize}
}
\KwOut{Clustering $\m{Q}$ with $p \mid |\blue{Q_i}|$ and $q \mid |\red{Q_i}|$ for all $Q_i \in \m{Q}$.}

$W_r \gets \sum_{D \in \rmerge'} \defi{D}$\;

\ForEach{$D \in \nrcut \cup \rmerge'$}{Set $v_D \gets 0$\;}

\For{$a \gets 1$ \KwTo $W_r / q$}{
    Pick $D_k$ minimizing $\textit{cost}(w_{k,v_{D_k},r})$\;
    Increment $v_{D_k}$\;
    \While{$\textit{size}(w_{k,v_{D_k},r}) > 0$}{
        \ForEach{$D_j \in \rmerge'$}{
            $\gamma \gets \min(\defi{D_j}, \textit{size}(w_{k,v_{D_k},r}))$\;
            Move $\gamma$ red points from $D_k$ to $D_j$\;
            Update $\textit{size}(w_{k,v_{D_k},r}) \mathrel{-}= \gamma$\;
        }
    }
    Remove any $D_j$ from $\rmerge'$ with $\rdefi{D_j} = 0$\;
}

$W_b \gets \sum_{D \in \bmerge'} \defi{D}$\;

\ForEach{$D \in \nbcut \cup \bmerge'$}{Set $v_D \gets 0$\;}

\For{$a \gets 1$ \KwTo $W_b / p$}{
    Pick $D_k$ minimizing $\textit{cost}(w_{k,v_{D_k},b})$\;
    \While{$\textit{size}(w_{k,v_{D_k},b}) > 0$}{
        \ForEach{$D_j \in \bmerge'$}{
            $\gamma \gets \min(\defi{D_j}, \textit{size}(w_{k,v_{D_k},b}))$\;
            Move $\gamma$ blue points from $D_k$ to $D_j$\;
            Update $\textit{size}(w_{k,v_{D_k},b}) \mathrel{-}= \gamma$\;
        }
    }
    Remove any $D_j$ from $\bmerge'$ with $\bdefi{D_j} = 0$\;
}

\Return{$\m{Q} = \nrcut \cup \nbcut \cup \rmerge' \cup \bmerge'$}
\end{algorithm2e}

We analyze the algorithms for each case separately in~\cref{subsec:bal-analyze-4-cases-p-q}. 
Let $\mopq$ be some closest Balance clustering of $\inpset$.
In~\cref{subsec:bal-define-cost-p-q}, we start by introducing notations and useful bounds for $ \dist(\inpset, \mopq) $, which we later use for the analysis in~\cref{subsec:bal-analyze-4-cases-p-q}. The last section~\cref{subsec:bal-overall-approx} is dedicated to proving~\cref{thm:main-multiple-of-pq}, which combines all the cases to come up with the overall approximation.

\subsection{\texorpdfstring{Bounds for $\dist(\inpset, \mopq)$}{Bounds for dist(X, M\_p,q)}}\label{subsec:bal-define-cost-p-q}

We provide some bounds for $ \dist(\inpset, \mopq) $. Suppose in $\mopq$ a cluster $\inpcl{i} \in \inpset$ (can be of any type) is split into $t$ parts $X_{i,1}, X_{i,2}, \ldots, X_{i,t}$. Each part $ X_{i,j} $ is merged with $ R_{i,j}, B_{i,j} $, the set of red points and blue points outside $ \inpcl{i} $. Then, the distance between $ \inpset $ and $ \mopq $ is   
\begin{align}
    \dist( \inpset, \mopq ) &= \sum_{\inpcl{i}\in \inpset } \left( \sum_{1\leq j<\ell \leq t}|X_{i,j}||X_{i,\ell}| + \sum_{j=1}^{t}\dfrac{|X_{i,j}|(|R_{i,j}|+|B_{i,j}|)}{2} \right). \label{eq:neutral-dist-inpset-mopq}
\end{align}

We have the following lower bound for $ \dist( \inpset, \mopq ) $, which is the generalization of~\cref{lem.opt.si.times.p.min.si}.
\begin{lemma}\label{lem:opt-greater-than-surp-times-defi}
    \begin{align}
        \dist(\inpset, \mopq) \geq \sum_{\inpcl{i}\in \inpset } \dfrac{\rsurp{ \inpcl{i} }(q-\rsurp{\inpcl{i}})+ \bsurp{\inpcl{i}}(p-\bsurp{\inpcl{i}})}{2}. \nonumber
    \end{align}
\end{lemma}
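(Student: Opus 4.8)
The plan is to mirror the proof of \cref{lem.opt.si.times.p.min.si} from the $p{:}1$ case, now tracking the red and blue surpluses of each input cluster simultaneously. Fix an input cluster $\inpcl{i}$ and consider its decomposition $\inpcl{i} = X_{i,1}\cup\cdots\cup X_{i,t}$ induced by $\mopq$, where $X_{i,j}$ is merged with the foreign red set $R_{i,j}$ and the foreign blue set $B_{i,j}$, exactly as in~\eqref{eq:neutral-dist-inpset-mopq}. Writing $\opt_{\inpcl{i}} = \sum_{1\le j<\ell\le t}|X_{i,j}||X_{i,\ell}| + \frac{1}{2}\sum_{j=1}^{t}|X_{i,j}|\bigl(|R_{i,j}|+|B_{i,j}|\bigr)$ for the contribution of $\inpcl{i}$ in~\eqref{eq:neutral-dist-inpset-mopq}, so that $\dist(\inpset,\mopq)=\sum_{\inpcl{i}\in\inpset}\opt_{\inpcl{i}}$, it suffices to prove the per-cluster bound $\opt_{\inpcl{i}} \ge \frac{1}{2}\bigl(\rsurp{\inpcl{i}}(q-\rsurp{\inpcl{i}}) + \bsurp{\inpcl{i}}(p-\bsurp{\inpcl{i}})\bigr)$ and then sum over $\inpcl{i}$.

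For the per-cluster bound, the idea is to set $a_j = |\red{X_{i,j}}| \bmod q$ for each part. Since $\mopq$ is balanced, the $\mopq$-cluster containing $X_{i,j}$ has a red count that is a multiple of $q$; moreover the parts $X_{i,1},\dots,X_{i,t}$ lie in pairwise distinct $\mopq$-clusters, so the only red points of $\inpcl{i}$ in the cluster of $X_{i,j}$ are those of $X_{i,j}$ itself, forcing $|R_{i,j}| \equiv -a_j \pmod q$, i.e.\ $|R_{i,j}| \ge q-a_j$ whenever $a_j > 0$. Together with $|X_{i,j}| \ge |\red{X_{i,j}}| \ge a_j$ this gives $|X_{i,j}||R_{i,j}| \ge a_j(q-a_j)$, and the inequality is trivial when $a_j = 0$. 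Summing over $j$ and invoking \cref{prop:mod-sum-ineq} applied to the parts with $a_j>0$ (using $\sum_j a_j \equiv |\red{\inpcl{i}}| \equiv \rsurp{\inpcl{i}} \pmod q$, and noting the bound is trivial if all $a_j = 0$) yields $\sum_{j=1}^{t}|X_{i,j}||R_{i,j}| \ge \rsurp{\inpcl{i}}(q-\rsurp{\inpcl{i}})$. The identical argument with $b_j = |\blue{X_{i,j}}| \bmod p$ gives $\sum_{j=1}^{t}|X_{i,j}||B_{i,j}| \ge \bsurp{\inpcl{i}}(p-\bsurp{\inpcl{i}})$. Substituting both into the expression for $\opt_{\inpcl{i}}$ and dropping the nonnegative term $\sum_{j<\ell}|X_{i,j}||X_{i,\ell}|$ delivers the per-cluster bound, and summing over all $\inpcl{i}$ finishes.

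I expect the proof to be essentially mechanical once this reduction is in place; the one step that genuinely requires care is the claim $|R_{i,j}| \equiv -a_j \pmod q$ — justifying that the divisibility constraint on the $\mopq$-cluster of $X_{i,j}$ only involves the residue $a_j$ of $X_{i,j}$'s own red count, and not residues of other parts of $\inpcl{i}$, which rests on the fact that distinct parts of $\inpcl{i}$ never share a $\mopq$-cluster. The other minor subtlety is the zero-surplus bookkeeping when applying \cref{prop:mod-sum-ineq}, which is handled by restricting to the parts of nonzero surplus and treating the all-zero case separately.
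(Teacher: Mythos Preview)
Your proposal is correct and follows essentially the same route as the paper: lower-bound $|R_{i,j}|$ by $\rdefi{X_{i,j}}$ and $|B_{i,j}|$ by $\bdefi{X_{i,j}}$ via the balance constraint on each $\mopq$-cluster, use $|X_{i,j}|\ge \rsurp{X_{i,j}}$ (resp.\ $\bsurp{X_{i,j}}$) to get $|X_{i,j}||R_{i,j}|\ge \rsurp{X_{i,j}}(q-\rsurp{X_{i,j}})$, sum, and apply \cref{prop:mod-sum-ineq}. Your extra care about the zero-surplus parts and about distinct $X_{i,j}$'s lying in distinct $\mopq$-clusters is warranted but not a departure from the paper's argument.
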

\begin{proof}
    In $ \mopq $, in every cluster $ X_{i,j}\cup R_{i,j}\cup B_{i,j} $, the number of red points is a multiple of $ q $, and the number of blue points is a multiple of $ p $, hence, $ |R_{i,j}| \geq \rdefi{X_{i,j}} $, and $ |B_{i,j}| \geq \bdefi{X_{i,j}} $. Therefore
    \begin{align}
        \sum_{j=1}^{t}\dfrac{|X_{i,j}|(|R_{i,j}|+|B_{i,j}|)}{2} \geq \sum_{j=1}^{t} \dfrac{\rsurp{X_{i,j}}\rdefi{X_{i,j}}}{2} + \sum_{j=1}^{t}\dfrac{\bsurp{X_{i,j}}\bdefi{X_{i,j}}}{2}. \nonumber
    \end{align}
    It remains to show that $ \sum_{j=1}^{t} \dfrac{\rsurp{X_{i,j}}\rdefi{X_{i,j}}}{2} \geq \dfrac{\rsurp{\inpcl{i}}(q-\rsurp{\inpcl{i}})}{2} $, and\\ $ \sum_{j=1}^{t} \dfrac{\bsurp{X_{i,j}}\bdefi{X_{i,j}}}{2} \geq \dfrac{\bsurp{\inpcl{i}}(p-\bsurp{\inpcl{i}})}{2} $. To this end, we apply~\cref{prop:mod-sum-ineq}. The proof is complete.
\end{proof}

Similar to the Cut cost and Merge cost defined in \cref{sec:multiple-of-p-clustering} we define the following costs:

\begin{itemize}
    \item \textbf{Cut-Cut cost:} $\cc{\inpcl{i}} = (\rsurp{\inpcl{i}} + \bsurp{\inpcl{i}}) (|\inpcl{i}| - (\rsurp{\inpcl{i}} + \bsurp{\inpcl{i}})) + \rsurp{\inpcl{i}} \bsurp{\inpcl{i}}$. 
    \item \textbf{Cut-Merge cost:} $\cm{\inpcl{i}} = \rsurp{\inpcl{i}} (|\inpcl{i}| - \rsurp{\inpcl{i}}) + (p - \bsurp{\inpcl{i}}) (|\inpcl{i}| - \rsurp{\inpcl{i}})$.
    \item \textbf{Merge-Cut cost:} $\mc{\inpcl{i}} = (q - \rsurp{\inpcl{i}}) (|\inpcl{i}| - \rsurp{\inpcl{i}}) + \bsurp{\inpcl{i}} (|\inpcl{i}| - \rsurp{\inpcl{i}})$.
    \item \textbf{Merge-Merge cost:} $\mm{\inpcl{i}} = (q - \rsurp{\inpcl{i}}) |\inpcl{i}| + (p - \bsurp{\inpcl{i}}) |\inpcl{i}|$.
\end{itemize}
To bound these costs in terms of $ \dist(\inpset, \mopq) $, we first rearrange the sum in~\eqref{eq:neutral-dist-inpset-mopq} to obtain the following. 

{\small
\begin{align}
    \dist(\inpset, \mopq) &\geq \sum_{\inpcl{i}\in \inpset} \left(\sum_{j < \ell}|X_{i,j}||X_{i,\ell}| +  \rdefi{X_{i,j}}(|X_{i,j}| - \rsurp{X_{i,j}}) + \frac{1}{2} \sum_j \rsurp{X_{i,j}} (q - \rsurp{X_{i,j}}) \right) \nonumber \\
    &= \sum_{\inpcl{i}\in \inpset} \optr_{\inpcl{i}},\nonumber
\end{align}
}
where $ \optr_{\inpcl{i}} =  \sum_{j < \ell}|X_{i,j}||X_{i,\ell}| +  \rdefi{X_{i,j}}(|X_{i,j}| - \rsurp{X_{i,j}}) + \frac{1}{2} \sum_j \rsurp{X_{i,j}} (q - \rsurp{X_{i,j}})$, and
{\small
\begin{align}
    \dist(\inpset, \mopq) &\geq \sum_{\inpcl{i}\in \inpset} \left(\sum_{j < \ell}|X_{i,j}||X_{i,\ell}| +  \bdefi{X_{i,j}}(|X_{i,j}| - \bsurp{X_{i,j}}) + \frac{1}{2} \sum_j \bsurp{X_{i,j}} (p - \bsurp{X_{i,j}}) \right) \nonumber \\
    &= \sum_{\inpcl{i}\in \inpset} \optb_{\inpcl{i}},\nonumber
\end{align}
}
where $ \optb_{\inpcl{i}} = \sum_{j < \ell}|X_{i,j}||X_{i,\ell}| +  \bdefi{X_{i,j}}(|X_{i,j}| - \bsurp{X_{i,j}}) + \frac{1}{2} \sum_j \bsurp{X_{i,j}} (p - \bsurp{X_{i,j}})$.

Now note that, utilizing~\cref{lem:main-structure-of-M*}, we have
\begin{align}
    \text{If } \rsurp{\inpcl{i}} \leq q/2, \text{ then }  \optr_{\inpcl{i}} &> \rsurp{\inpcl{i}}(|\inpcl{i}| - \rsurp{\inpcl{i}}) + \frac{1}{2} \rsurp{\inpcl{i}} (q - \rsurp{\inpcl{i}});  \label{equn:cut-red} \\
    \text{If } \rsurp{\inpcl{i}} > q/2, \text{ then }  \optr_{\inpcl{i}} &> (q-\rsurp{\inpcl{i}})(|\inpcl{i}| - \rsurp{\inpcl{i}}) + \frac{1}{2} \rsurp{\inpcl{i}} (q - \rsurp{\inpcl{i}});  \label{equn:merge-red} \\
    \text{If } \bsurp{\inpcl{i}} \leq p/2, \text{ then } \optb_{\inpcl{i}} &> \bsurp{\inpcl{i}}(|\inpcl{i}| - \rsurp{\inpcl{i}}) + \frac{1}{2} \bsurp{\inpcl{i}} (p - \bsurp{\inpcl{i}}); \label{equn:cut-blue} \\
    \text{If } \bsurp{\inpcl{i}} > p/2, \text{ then } \optb_{\inpcl{i}} &> (p-\bsurp{\inpcl{i}})(|\inpcl{i}| - \bsurp{\inpcl{i}}) + \frac{1}{2} \bsurp{\inpcl{i}} (p - \bsurp{\inpcl{i}}). \label{equn:merge-blue}
\end{align}

Now, we bound those four aforementioned costs in terms of $ \optr_{\inpcl{i}} $ and $ \optb{\inpcl{i}} $.

 \paragraph{Cost $ \cc{\inpcl{i}}$ for a cluster $\inpcl{i}\in \rcut \cap \bcut$.\\}
    In a cluster $ \inpcl{i}\in \rcut\cap \bcut $, $ \rsurp{\inpcl{i}}\leq q/2 $ and $ \bsurp{i}\leq p/2 $. Combining~\eqref{equn:cut-red} and~\eqref{equn:cut-blue}, we obtain
    \begin{align}
        \cc{\inpcl{i}} &= (\rsurp{\inpcl{i}} + \bsurp{\inpcl{i}}) (|\inpcl{i}| - (\rsurp{\inpcl{i}} + \bsurp{\inpcl{i}})) + \rsurp{\inpcl{i}} \bsurp{\inpcl{i}} \n \\
        &\leq \rsurp{\inpcl{i}} (|\inpcl{i}| - \rsurp{\inpcl{i}}) + \bsurp{\inpcl{i}}(|\inpcl{i}| - \bsurp{\inpcl{i}}) + \rsurp{\inpcl{i}} \bsurp{\inpcl{i}} \n \\
        &\leq \rsurp{\inpcl{i}} (|\inpcl{i}| - \rsurp{\inpcl{i}}) + \dfrac{\rsurp{\inpcl{i}}^{2}}{2} + \bsurp{\inpcl{i}}(|\inpcl{i}| - \bsurp{\inpcl{i}}) + \dfrac{\bsurp{\inpcl{i}}^{2}}{2} \n \\
        &\leq  \rsurp{\inpcl{i}} (|\inpcl{i}| - \rsurp{\inpcl{i}}) + \dfrac{1}{2} \bsurp{\inpcl{i}} (p - \bsurp{\inpcl{i}}) \n \\
        &\> + \bsurp{\inpcl{i}}(|\inpcl{i}| - \bsurp{\inpcl{i}}) + \dfrac{1}{2} \rsurp{\inpcl{i}} (q - \rsurp{\inpcl{i}}) \n \\
        &= \optr_{\inpcl{i}} + \optb_{\inpcl{i}}. \label{eq:bound-for-cc}
    \end{align}
    
\paragraph{Cost $ \cm{\inpcl{i}}$ for a cluster $\inpcl{i}\in \rcut \cap \bmerge$.\\}
In a cluster $ \inpcl{i}\in \rcut\cap \bmerge $, $ \rsurp{\inpcl{i}}\leq q/2 $ and $ \bsurp{i}> p/2 $. Combining~\eqref{equn:cut-red} and~\eqref{equn:merge-blue}, we obtain    
 {\small
\begin{align}
    \cm{\inpcl{i}} &= \rsurp{\inpcl{i}} (|\inpcl{i}| - \rsurp{\inpcl{i}}) + (p - \bsurp{\inpcl{i}}) (|\inpcl{i}| - \rsurp{\inpcl{i}})  \n \\
    &= \rsurp{\inpcl{i}} (|\inpcl{i}| - \rsurp{\inpcl{i}}) + (p - \bsurp{\inpcl{i}}) (|\inpcl{i}| - \rsurp{\inpcl{i}} - \bsurp{\inpcl{i}}) + \bsurp{\inpcl{i}} (p - \bsurp{\inpcl{i}}) \n \\
    &\leq \optr_{\inpcl{i}} + 2\optb_{\inpcl{i}}.\label{eq:bound-for-cm}
\end{align}
}

\paragraph{Cost $ \mc{\inpcl{i}}$ for a cluster $\inpcl{i}\in \rmerge \cap \bcut$.\\}
In a cluster $ \inpcl{i}\in \rmerge\cap \bcut $, $ \rsurp{\inpcl{i}}> q/2 $ and $ \bsurp{i}\leq p/2 $. Combining~\eqref{equn:merge-red} and~\eqref{equn:cut-blue}, we obtain    
\begin{align}
    \mc{\inpcl{i}} &= (q-\rsurp{\inpcl{i}} )(|\inpcl{i}| - \rsurp{\inpcl{i}}) +  \bsurp{\inpcl{i}} (|\inpcl{i}| - \rsurp{\inpcl{i}})  \n \\
                   &\leq 2\optr_{\inpcl{i}} + \optb_{\inpcl{i}}. \label{eq:bound-for-mc}
\end{align}
    
\paragraph{Cost $ \mm{\inpcl{i}}$ for a cluster $\inpcl{i}\in \rmerge \cap \bmerge$.\\}
In a cluster $ \inpcl{i}\in \rmerge\cap \bmerge $, $ \rsurp{\inpcl{i}}> q/2 $ and $ \bsurp{i} > p/2 $. Combining~\eqref{equn:merge-red} and~\eqref{equn:merge-blue}, we obtain    
\begin{align}
    \mm{\inpcl{i}} &= (q - \rsurp{\inpcl{i}}) |\inpcl{i}| + (p - \bsurp{\inpcl{i}}) |\inpcl{i}| \n \\
                   &= (q - \rsurp{\inpcl{i}})( |\inpcl{i}|-\rsurp{\inpcl{i}}) + (q - \rsurp{\inpcl{i}})\rsurp{\inpcl{i}} \n \\
                   &\> +  (p - \bsurp{\inpcl{i}})(|\inpcl{i}| - \bsurp{\inpcl{i}}) + (p - \bsurp{\inpcl{i}})\bsurp{\inpcl{i}}\n \\
                   &\leq 2\optr_{\inpcl{i}} + 2\optb{\inpcl{i}}. \label{eq:bound-for-mm}
\end{align}

\subsection{Approximation Guarantee}\label{subsec:bal-analyze-4-cases-p-q}

\subsubsection{Approximation Guarantee in Cut-Cut Case} \label{subsec:p/q-cut-case}

In this section, we show that, in the cut-cut case, the clustering output by our algorithm has a distance of at most $6 \dist(\inpset, \mopq )$. In particular, we prove the following lemma.

\begin{lemma}\label{lem:main-lem-cut-cut}
    $ \dist(\inpset, \outpq) \leq 6 \dist(\inpset, \mopq) $, where $ \outpq $ is the output of $ \algopqgen $ in the cut-cut case (\cref{line:algopqgen-refer-algopqc} of~\cref{alg:algo-for-p/q-general}).
\end{lemma}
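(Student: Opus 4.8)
The plan is to charge $\dist(\inpset,\outpq)$ against the red and blue budgets $\sum_{\inpcl{i}}\optr_{\inpcl{i}}$ and $\sum_{\inpcl{i}}\optb_{\inpcl{i}}$ introduced in \cref{subsec:bal-define-cost-p-q}, reusing the per-cluster estimates \eqref{eq:bound-for-cc}--\eqref{eq:bound-for-mm} for the ``cutting/merging'' part of the cost and a two-colour lift of the $p:1$ cut-case bookkeeping from \cref{sec:multiple-of-p-clustering} for the ``extra-cluster'' part.

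\emph{Step 1 (decompose the algorithm's cost).} First I would split $\dist(\inpset,\outpq)$ into two disjoint families of pairs: (a) for each input cluster $\inpcl{i}$, a \emph{cut/merge cost}, counting pairs between the surplus (or deficit) pieces of $\inpcl{i}$ and the core of $\inpcl{i}$ (or of its merge target), together with the cross pairs between the red surplus and the blue surplus of $\inpcl{i}$; and (b) the inter- and intra-cluster costs created when $\algopqc$ (together with the merging already performed inside $\algopqgen$) assembles the size-$q$ red extra clusters and the size-$p$ blue extra clusters. Since in the cut-cut case clusters of all four types can remain, the cut/merge cost of $\inpcl{i}$ is exactly $\cc{\inpcl{i}}$, $\mc{\inpcl{i}}$, $\cm{\inpcl{i}}$, or $\mm{\inpcl{i}}$ according as $\inpcl{i}\in\rcut\cap\bcut$, $\rmerge\cap\bcut$, $\rcut\cap\bmerge$, or $\rmerge\cap\bmerge$ — exactly the quantities bounded in \eqref{eq:bound-for-cc}--\eqref{eq:bound-for-mm}.

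\emph{Step 2 (cut/merge part).} Summing \eqref{eq:bound-for-cc}--\eqref{eq:bound-for-mm} over all $\inpcl{i}\in\inpset$, the total cut/merge cost is at most $2\sum_{\inpcl{i}}\optr_{\inpcl{i}}+2\sum_{\inpcl{i}}\optb_{\inpcl{i}}$, which is at most $4\,\dist(\inpset,\mopq)$ because $\sum_{\inpcl{i}}\optr_{\inpcl{i}}\le\dist(\inpset,\mopq)$ and $\sum_{\inpcl{i}}\optb_{\inpcl{i}}\le\dist(\inpset,\mopq)$.

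\emph{Step 3 (extra-cluster part) and combine.} The red pieces fed into the size-$q$ extra clusters are precisely the red surpluses left over after $\algopqgen$, combined by the same greedy rule as in \cref{sec:multiple-of-p-clustering}; hence \cref{clm:cost 4}, \cref{clm:cost 3}, and \cref{clm:three half cost} apply verbatim to the red coordinate and bound the red inter- plus intra-cluster cost by $\tfrac32\sum_{\inpcl{i}}\tfrac{\rsurp{\inpcl{i}}(q-\rsurp{\inpcl{i}})}{2}$, and symmetrically for blue with $p$ and $\bsurp{\cdot}$. Adding both and invoking \cref{lem:opt-greater-than-surp-times-defi} shows the extra-cluster part is at most $\tfrac32\,\dist(\inpset,\mopq)$, so altogether $\dist(\inpset,\outpq)\le 4\,\dist(\inpset,\mopq)+\tfrac32\,\dist(\inpset,\mopq)\le 6\,\dist(\inpset,\mopq)$. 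The hard part will be Step 1: I must arrange the decomposition so that the two families of pairs are genuinely disjoint and so that the slack in $\optr_{\inpcl{i}},\optb_{\inpcl{i}}$ that \eqref{eq:bound-for-cc} spends to absorb the cross term $\rsurp{\inpcl{i}}\bsurp{\inpcl{i}}$ is not the same slack Step 3 charges against; concretely I would use the sharper identity $\cc{\inpcl{i}}=\rsurp{\inpcl{i}}(|\inpcl{i}|-\rsurp{\inpcl{i}})+\bsurp{\inpcl{i}}(|\inpcl{i}|-\bsurp{\inpcl{i}})-\rsurp{\inpcl{i}}\bsurp{\inpcl{i}}$ and, exactly as in \cref{clm:cost-three-and-cost-four}, route the pairs among pieces merged into a common $\rmerge$/$\bmerge$ target into the extra-cluster accounting rather than the cut/merge accounting. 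Checking that $\algopqc$ still forms extra clusters of the smallest possible size ($q$ on the red side, $p$ on the blue side) and that the Chebyshev-type estimate of \cref{clm:cost 3} survives the two-colour setting is where essentially all the work lies; the arithmetic of Steps 2--3 is then routine.
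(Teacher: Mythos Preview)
Your approach is essentially the paper's: split $\dist(\inpset,\outpq)$ into a ``cut/merge'' part (the paper's $\costone{\outpq}+\costtwo{\outpq}$) and an ``extra-cluster'' part (the paper's $\costthree{\outpq}+\costfour{\outpq}$), bound the first by $4\,\dist(\inpset,\mopq)$ via \eqref{eq:bound-for-cc}--\eqref{eq:bound-for-mm}, and bound the second by lifting the $p{:}1$ cut-case analysis colour by colour. Step~2 is exactly \cref{clm:cost-one-plus-two-cut-case_pq}.

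There is, however, a concrete gap in Step~3. When $\inpcl{i}\in\rmerge\cap\bmerge$, both $\rdefi{\inpcl{i}}$ foreign red points and $\bdefi{\inpcl{i}}$ foreign blue points are merged into $\inpcl{i}$, producing a \emph{red--blue cross term} $\rdefi{\inpcl{i}}\bdefi{\inpcl{i}}$ in $\costthree{\outpq}$. This term belongs to neither the purely red nor the purely blue intercluster cost, so invoking \cref{clm:cost 3}, \cref{clm:cost 4}, and \cref{clm:three half cost} separately on the two colours only bounds $\costthree{\outpq}+\costfour{\outpq}-\sum_{\inpcl{i}\in\rmerge\cap\bmerge}\rdefi{\inpcl{i}}\bdefi{\inpcl{i}}$ by $\tfrac32\,\dist(\inpset,\mopq)$. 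The missing cross term cannot be absorbed into the per-colour slack: take $\rsurp{\inpcl{i}}=q/2$ and $\bsurp{\inpcl{i}}=p/2$, so that $\rsurp{\inpcl{i}}\rdefi{\inpcl{i}}=\rdefi{\inpcl{i}}^{2}$ and $\bsurp{\inpcl{i}}\bdefi{\inpcl{i}}=\bdefi{\inpcl{i}}^{2}$, leaving zero slack in the $\tfrac32$-budget while the cross term $\rdefi{\inpcl{i}}\bdefi{\inpcl{i}}=(q/2)(p/2)$ is positive. The paper's \cref{clm:cost-three-and-cost-four-pq} handles exactly this by bounding $(\rdefi{\inpcl{i}}+\bdefi{\inpcl{i}})^{2}/2\le\rsurp{\inpcl{i}}\rdefi{\inpcl{i}}+\bsurp{\inpcl{i}}\bdefi{\inpcl{i}}$ and settling for $2\,\dist(\inpset,\mopq)$ on part~(b); the final count is $4+2=6$, not $4+\tfrac32$. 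Your outline reaches the right statement, but the $\tfrac32$ claim for part~(b) is false as stated and needs to be replaced by the paper's factor~$2$.
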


Observe that the clustering $ \outpq $ output by $ \algopqgen $ in the cut-cut case can be described as follows. Consider a cluster $ \inpcl{i}\in \inpset $. If $\inpcl{i}\in \rcut $, then in $ \outpq $, there will be $\rsurp{\inpcl{i}}$ red points split from $ \inpcl{i} $. If $ \inpcl{i}\in \rmerge $, then in $ \outpq $, there will be $\rdefi{\inpcl{i}}$ red points from other clusters merged to $ \inpcl{i} $. If there remain red points after merging the split red points from clusters in $ \rcut $ to fill the deficit of red points in $ \rmerge $, clusters of size $ q $ will be created from these extra points. Similarly, if $\inpcl{i}\in \bcut $, then in $ \outpq $, there will be $\bsurp{\inpcl{i}}$ blue points split from $ \inpcl{i} $. If $ \inpcl{i}\in \bmerge $, then in $ \outpq $, there will be $\bdefi{\inpcl{i}}$ blue points from other clusters merged to $ \inpcl{i} $. If there remain blue points after merging the split blue points from clusters in $ \bcut $ to fill the deficit of blue points of clusters in $ \bmerge $, clusters of size $ p $ will be created from these extra points. 

Based on these observations, to prove~\cref{lem:main-lem-cut-cut}, we decompose $ \dist(\inpset, \outpq) = \costone{\outpq} + \costtwo{\outpq} + \costthree{\outpq} + \costfour{\outpq} $, where each cost is defined as follows.
\begin{itemize}
    \item $ \costone{\outpq} $: the number of pairs $ (u,v) $ in which in $ \inpset $, $ u,v \in \inpcl{i} $, while in $ \outpq $, $ u $ is kept in $ \inpcl{i} $ and $ v $ is cut from $ \inpcl{i} $. In this case, $ \rsurp{\inpcl{i}} $ red points of every cluster $ \inpcl{i}\in \rcut $ will be split from $ \inpcl{i} $ and be merged to some clusters $ \inpcl{j}\in \rmerge $ by $ \algopqgen $, or be merged with some red points split from other clusters to create a cluster of size $q$ by $ \algopqc $. Similarly, $ \bsurp{\inpcl{i}} $ blue points of every cluster $ \inpcl{i}\in \bcut$ will be split from $ \inpcl{i} $ and be merged to some clusters $ \inpcl{j}\in \bmerge $ by $ \algopqgen $, or be merged with some blue points split from other clusters to create a cluster of size $p$ by $ \algopqc $. Hence
        {\small
        \begin{align}
            \costone{\outpq} &\leq \sum_{\inpcl{i} \in \rcut \cap \bcut} (\rsurp{\inpcl{i}} + \bsurp{\inpcl{i}}) (|\inpcl{i}| - (\rsurp{\inpcl{i}} + \bsurp{\inpcl{i}})) + \rsurp{\inpcl{i}} \bsurp{\inpcl{i}}\nonumber \\
            & + \sum_{\inpcl{i} \in \rcut \cap \bmerge} \rsurp{\inpcl{i}} (|\inpcl{i}| - \rsurp{\inpcl{i}})\nonumber\\
            & + \sum_{\inpcl{i} \in \rmerge \cap \bcut} \bsurp{\inpcl{i}} (|\inpcl{i}| - \bsurp{\inpcl{i}}). \label{equn:cut-case-first-pq}
        \end{align}
        }
        Note that, for $ \inpcl{i}\in \rcut\cap \bcut $, $ \algopqgen $ splits $ \rsurp{\inpcl{i}} $ red points and $ \bsurp{\inpcl{i}} $ blue points from $ \inpcl{i} $, and thus the cost $ (\rsurp{\inpcl{i}}+\bsurp{\inpcl{i}})(|\inpcl{i}| - \rsurp{\inpcl{i}} + \bsurp{\inpcl{i}}) $. In~\eqref{equn:cut-case-first-pq}, we add an additional term $ \rsurp{\inpcl{i}}\bsurp{\inpcl{i}} $, which upper bounds to case when $ \algopqgen $ may split $ \rsurp{\inpcl{i}} $ red points from $ \bsurp{\inpcl{i}} $ blue points. Points among these $ \rsurp{\inpcl{i}} $ may also be split from each other. The same situation happens for $ \bsurp{\inpcl{i}} $ blue points. That cost is taken into account by $ \costfour{\outpq} $.
    \item $ \costtwo{\outpq}: $ the number of pairs $ (u,v) $ in which in $ \inpset $, $ u\in \inpcl{i} $ and $ v\in \inpcl{j} $, while in $ \outpq $, $ v $ is split from $ \inpcl{j} $ and is merged to $ \inpcl{i} $, so that $ u $ and $ v $ are in the same cluster $ \inpcl{i} $. Observe that in the cut-cut case, every cluster $ \inpcl{i}\in \rmerge $ will be merged with exactly $ \rdefi{\inpcl{i}} $ red points split from some clusters $ \inpcl{j}\in \rcut $. Every cluster $ \inpcl{i}\in \bmerge $ will be merged with exactly $ \bdefi{\inpcl{i}} $ blue points split from some clusters $ \inpcl{i}\in \bcut $. Thus, we have
        {\small
        \begin{align} 
            \costtwo{\outpq}
            & \leq \sum_{\inpcl{i} \in \rcut \cap \bmerge} (p - \bsurp{\inpcl{i}}) (|\inpcl{i}| - \rsurp{\inpcl{i}}) \nonumber\\
            & + \sum_{\inpcl{i} \in \rmerge \cap \bcut} (q - \rsurp{\inpcl{i}}) (|\inpcl{i}| - \bsurp{\inpcl{i}}) \nonumber\\
            & + \sum_{\inpcl{i} \in \rmerge \cap \bmerge}  (q - \rsurp{\inpcl{i}}) |\inpcl{i}| + (p - \bsurp{\inpcl{i}}) |\inpcl{i}| + (q - \rsurp{\inpcl{i}})(p - \bsurp{\inpcl{i}})
            \label{equn:cut-case-second-pq}
        \end{align}
        }
    \item $ \costthree{\outpq} $: the number of pairs $ (u,v) $ in which in $ \inpset $, $ u\in \inpcl{j} $, and $ v\in \inpcl{k} $ where $ \inpcl{j} $ and $ \inpcl{k} $ are distinct clusters, while in $ \outpq $, $ u $ is split from $ \inpcl{j} $, $ v $ is split from $ \inpcl{k} $ and they are both merged to a cluster $ \inpcl{i} $. For each cluster $ \inpcl{i}\in \rmerge $, $ \algopqgen $ merges exactly $ \rdefi{\inpcl{i}} $ red points to it. Note, these $ \rdefi{\inpcl{i}} $ may come from different clusters from $ \rcut $. The same arguments hold for clusters in $ \bmerge $.

        Let $ \rcuta $ and $ \bcuta $ be the input of $ \algopqc $ given by $ \algopqgen $. Let $ \pclsr = \{ \pclr{1}, \pclr{2}, \dots, \pclr{\pcard_{r}}\} $ and $ \pclsb = \{ \pclb{1}, \pclb{2}, \dots, \pclb{\pcard_{b}} \} $ be the set of clusters of size $ q $ and of size $ p $, respectively, formed by $ \algopqc $. For each $ \pclr{i} $, represent $ \pclr{i} = Y^{r}_{i,1} \cup Y^{r}_{i,2} \cup \dots \cup Y^{r}_{i,h_{i}} $, where each $ Y^{r}_{i,t} $ is a set of red points coming from a cluster $ \inpcl{i_{t}}\in \rcuta $. Similarly, for each $ \pclb{i} $, represent $ \pclb{i} = Y^{b}_{i,1} \cup Y^{b}_{i,2} \cup \dots \cup Y^{r}_{i,k_{i}} $, where each $ Y^{b}_{i,t} $ is a set of blue points comming from a cluster $ \inpcl{i_{t}}\in \bcuta $. Employing the same arguments for~\cref{clm:cost 3} in the proof of~\cref{lem:cost 3 + 4 cut case}, we have
    \begin{align}
        \costthree{\outpq} &\leq && \sum_{\inpcl{i}\in \rcut \cap \bmerge} \dfrac{\bdefi{\inpcl{i}}^{2}}{2} + \sum_{\inpcl{i}\in \rmerge \cap \bcut} \dfrac{\rdefi{\inpcl{i}}^{2}}{2}  \nonumber \\
                           &\> + && \sum_{\inpcl{i}\in \rmerge \cap \bmerge} \dfrac{(\rdefi{\inpcl{i}} + \bdefi{\inpcl{i}})^{2}}{2} \nonumber \\
                           &\> + && \dfrac{1}{2}\sum_{\pclr{j}\in \pclsr}\left(\sum_{t=1}^{h_{i}}|Y^{r}_{j,t}|(q-|Y^{r}_{j,t}|)\right) + \dfrac{1}{2}\sum_{\pclb{j}\in \pclsb}\left(\sum_{t=1}^{k_{i}}|Y^{b}_{j,t}|(q-|Y^{b}_{j,t}|)\right). \label{eq:cost3-cut-cut-case}
    \end{align}
\item $ \costfour{\outpq} $: the number of pairs $ (u,v) $ in which $ u, v $ are among points split from a cluster $ \inpcl{i} $, but $ u $ and $ v $ are then separated from each other because $ u $ and $ v $ are merged to different clusters $ \inpcl{j} $ and $ \inpcl{k} $. 

    For each cluster $ \inpcl{i}\in \rcut $, denote by $ R_{i,1}, R_{i,2}, \dots, R_{i,r_{1}} $ the partition of $ \rsurp{\inpcl{i}} $ red points that are split from $ \inpcl{i} $, and each of these parts are merged to different clusters in $ \rmerge $ (by $ \algopqgen $), or be grouped with red points split from other clusters to create clusters of size $ q $ (by $ \algopqc $). Similarly, let $ B_{i,1}, B_{i,2},\dots, B_{i,b_{i}} $ denote the partition of $ \bsurp{i} $ blue points split from $ \inpcl{i}\in \bcut $.
 Using the same arguments for~\cref{clm:cost 4} in the proof of~\cref{lem:cost 3 + 4 cut case}, we have
    \begin{align}
        \costfour{\outpq} &\leq &&\sum_{\inpcl{i}\in \rcut\setminus \rcuta} \dfrac{\rsurp{\inpcl{i}}^{2}}{2} + \sum_{\inpcl{i}\in \rcuta }\left(\sum_{1\leq j<t \leq r_{i} }|R_{i,j}||R_{i,t}| \right) \nonumber \\
                          &\> + && \sum_{\inpcl{i}\in \bcut\setminus \bcuta}\dfrac{\bsurp{\inpcl{i}}^{2}}{2} + \sum_{\inpcl{i}\in \bcuta}\left(\sum_{1\leq j<t \leq b_{i}} |B_{i,j}||B_{i,t}| \right). \label{eq:cost-4-cut-cut-case}
    \end{align}
    Note that, for $ \inpcl{i}\in \rcut\cap \bcut $, $ \rsurp{\inpcl{i}}+\bsurp{\inpcl{i}} $ red and blue points are split from $ \inpcl{i} $. And then, the red part and blue part may be split from each other. The upper bound for this cost $ \rsurp{\inpcl{i}}\bsurp{\inpcl{i}} $ has been considered in $ \costone{\outpq} $.
\end{itemize}

\begin{claim}\label{clm:cost-one-plus-two-cut-case_pq}
    $\costone{\outpq} + \costtwo{\outpq} \leq 4 \, \dist(\inpset, \mopq)$.
\end{claim}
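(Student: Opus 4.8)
The plan is to prove the bound cluster-by-cluster. Combining the upper bound~\eqref{equn:cut-case-first-pq} for $\costone{\outpq}$ with the upper bound~\eqref{equn:cut-case-second-pq} for $\costtwo{\outpq}$ and regrouping the two sums according to the four types of clusters, a cluster $\inpcl{i}\in\rcut\cap\bcut$ contributes exactly $\cc{\inpcl{i}}$, a cluster $\inpcl{i}\in\rcut\cap\bmerge$ contributes exactly $\cm{\inpcl{i}}$, a cluster $\inpcl{i}\in\rmerge\cap\bcut$ contributes $(\bsurp{\inpcl{i}}+\rdefi{\inpcl{i}})(|\inpcl{i}|-\bsurp{\inpcl{i}})$, and a cluster $\inpcl{i}\in\rmerge\cap\bmerge$ contributes $\mm{\inpcl{i}}+\rdefi{\inpcl{i}}\bdefi{\inpcl{i}}=\rdefi{\inpcl{i}}|\inpcl{i}|+\bdefi{\inpcl{i}}(|\inpcl{i}|+\rdefi{\inpcl{i}})$. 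The goal is to show that in every case this contribution is at most $2(\optr_{\inpcl{i}}+\optb_{\inpcl{i}})$; summing over all clusters and invoking the two lower bounds $\dist(\inpset,\mopq)\ge\sum_{\inpcl{i}}\optr_{\inpcl{i}}$ and $\dist(\inpset,\mopq)\ge\sum_{\inpcl{i}}\optb_{\inpcl{i}}$ from~\cref{subsec:bal-define-cost-p-q} then yields $\costone{\outpq}+\costtwo{\outpq}\le 2\sum_{\inpcl{i}}\optr_{\inpcl{i}}+2\sum_{\inpcl{i}}\optb_{\inpcl{i}}\le 4\dist(\inpset,\mopq)$.

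For the first two types the work is already done: \eqref{eq:bound-for-cc} gives $\cc{\inpcl{i}}\le\optr_{\inpcl{i}}+\optb_{\inpcl{i}}$ and \eqref{eq:bound-for-cm} gives $\cm{\inpcl{i}}\le\optr_{\inpcl{i}}+2\optb_{\inpcl{i}}$, both at most $2(\optr_{\inpcl{i}}+\optb_{\inpcl{i}})$. For $\inpcl{i}\in\rmerge\cap\bcut$ I would split the contribution as $\bsurp{\inpcl{i}}(|\inpcl{i}|-\bsurp{\inpcl{i}})+\rdefi{\inpcl{i}}(|\inpcl{i}|-\bsurp{\inpcl{i}})$. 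For the first summand I would use \eqref{equn:cut-blue} together with the elementary inequality $|\inpcl{i}|-\bsurp{\inpcl{i}}\le 2(|\inpcl{i}|-\rsurp{\inpcl{i}})+\bdefi{\inpcl{i}}$, which holds because it is equivalent to $|\inpcl{i}|+p\ge 2\rsurp{\inpcl{i}}$ and here $|\inpcl{i}|\ge|\red{\inpcl{i}}|\ge\rsurp{\inpcl{i}}$ while $\rsurp{\inpcl{i}}<q\le p$ (the WLOG assumption $p\ge q$); this bounds the first summand by $2\optb_{\inpcl{i}}$. For the second summand I would use \eqref{equn:merge-red} with $|\inpcl{i}|-\bsurp{\inpcl{i}}\le 2(|\inpcl{i}|-\rsurp{\inpcl{i}})+\rsurp{\inpcl{i}}$ (again from $|\inpcl{i}|\ge\rsurp{\inpcl{i}}$) to get $\rdefi{\inpcl{i}}(|\inpcl{i}|-\bsurp{\inpcl{i}})\le 2\optr_{\inpcl{i}}$, so this type contributes at most $2\optr_{\inpcl{i}}+2\optb_{\inpcl{i}}$. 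For $\inpcl{i}\in\rmerge\cap\bmerge$ I would bound $\rdefi{\inpcl{i}}|\inpcl{i}|\le 2\optr_{\inpcl{i}}$ directly from \eqref{equn:merge-red} by writing $|\inpcl{i}|=(|\inpcl{i}|-\rsurp{\inpcl{i}})+\rsurp{\inpcl{i}}$, and $\bdefi{\inpcl{i}}(|\inpcl{i}|+\rdefi{\inpcl{i}})\le 2\optb_{\inpcl{i}}$ from \eqref{equn:merge-blue} via $|\inpcl{i}|+\rdefi{\inpcl{i}}\le 2(|\inpcl{i}|-\bsurp{\inpcl{i}})+\bsurp{\inpcl{i}}$, which is valid since $\rdefi{\inpcl{i}}<\rsurp{\inpcl{i}}\le|\red{\inpcl{i}}|\le|\inpcl{i}|-\bsurp{\inpcl{i}}$ (using $|\blue{\inpcl{i}}|\ge\bsurp{\inpcl{i}}$ and $q-\rsurp{\inpcl{i}}<q/2<\rsurp{\inpcl{i}}$); again a total of at most $2\optr_{\inpcl{i}}+2\optb_{\inpcl{i}}$.

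The main obstacle is the $\rmerge\cap\bmerge$ case. Here, besides the "merge--merge" cost $\mm{\inpcl{i}}$, which on its own essentially exhausts the budget $2\optr_{\inpcl{i}}+2\optb_{\inpcl{i}}$ by~\eqref{eq:bound-for-mm}, there is an additional cross term $\rdefi{\inpcl{i}}\bdefi{\inpcl{i}}$ counting disagreements between the foreign red points and the foreign blue points that are merged into the same final cluster, and simply adding $\mm{\inpcl{i}}$ and this cross term overshoots. The key idea is not to treat them separately but to regroup the whole contribution as $\rdefi{\inpcl{i}}|\inpcl{i}|+\bdefi{\inpcl{i}}(|\inpcl{i}|+\rdefi{\inpcl{i}})$ and absorb the cross term into the "blue half" by treating $|\inpcl{i}|+\rdefi{\inpcl{i}}$ as the effective size the merged blue points see; this stays within $2\optb_{\inpcl{i}}$ precisely because the red deficit $\rdefi{\inpcl{i}}$ is dominated by the cluster's red count $|\red{\inpcl{i}}|$. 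A secondary subtlety, worth flagging in the writeup, is that the $\rmerge\cap\bcut$ contribution involves $|\inpcl{i}|-\bsurp{\inpcl{i}}$ rather than the $|\inpcl{i}|-\rsurp{\inpcl{i}}$ that appears in $\mc{\inpcl{i}}$, so~\eqref{eq:bound-for-mc} cannot be quoted verbatim and the argument must be rerun, the assumption $p\ge q$ being exactly what saves it.
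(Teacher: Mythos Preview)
Your proposal is correct and follows the same cluster-by-cluster strategy as the paper, bounding each type's contribution by $2(\optr_{\inpcl{i}}+\optb_{\inpcl{i}})$ and then summing. In fact your write-up is more careful than the paper's: the paper silently identifies the $\rmerge\cap\bcut$ contribution with $\mc{\inpcl{i}}$ (whereas, as you note, it is $(\bsurp{\inpcl{i}}+\rdefi{\inpcl{i}})(|\inpcl{i}|-\bsurp{\inpcl{i}})$) and drops the cross term $\rdefi{\inpcl{i}}\bdefi{\inpcl{i}}$ in the $\rmerge\cap\bmerge$ case---both issues you correctly flag and handle, and your regrouping $\rdefi{\inpcl{i}}|\inpcl{i}|+\bdefi{\inpcl{i}}(|\inpcl{i}|+\rdefi{\inpcl{i}})$ is exactly the right way to absorb that extra term.
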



\begin{proof}
    

    Hence, from \cref{equn:cut-case-first-pq} and \cref{equn:cut-case-second-pq} we have:
    \begin{align}
        &\> \> \> \costone{\outpq} + \costtwo{\outpq} \nonumber \\
        &\leq \sum_{\inpcl{i} \in \rcut \cap \bcut} (\rsurp{\inpcl{i}} + \bsurp{\inpcl{i}}) (|\inpcl{i}| - (\rsurp{\inpcl{i}} + \bsurp{\inpcl{i}})) + \rsurp{\inpcl{i}} \bsurp{\inpcl{i}} \nonumber \\
        &\> + \sum_{\inpcl{i} \in \rcut \cap \bmerge} \rsurp{\inpcl{i}} (|\inpcl{i}| - \rsurp{\inpcl{i}}) + (p - \bsurp{\inpcl{i}}) (|\inpcl{i}| - \rsurp{\inpcl{i}}) \nonumber \\
        &\> + \sum_{\inpcl{i} \in \rmerge \cap \bcut} \bsurp{\inpcl{i}} (|\inpcl{i}| - \bsurp{\inpcl{i}}) + (q - \rsurp{\inpcl{i}}) (|\inpcl{i}| - \bsurp{\inpcl{i}}) \nonumber \\
        &\> +\sum_{\inpcl{i} \in \rmerge \cap \bmerge}  (q - \rsurp{\inpcl{i}}) |\inpcl{i}| + (p - \bsurp{\inpcl{i}}) |\inpcl{i}| + (q - \rsurp{\inpcl{i}})(p - \bsurp{\inpcl{i}}) \nonumber \\
        &= \sum_{\inpcl{i}\in \rcut \cap \bcut}\cc{\inpcl{i}} + \sum_{\inpcl{i}\in \rcut \cap \bmerge} \cm{\inpcl{i}} \nonumber \\
        &\> + \sum_{\inpcl{i} \in \rmerge \cap \bcut} \mc{\inpcl{i}} + \sum_{\inpcl{i}\in \rmerge \cap \bmerge} \mm{\inpcl{i}} \nonumber \\
        &\leq \sum_{\inpcl{i}\in \rcut \cap \bcut}(\optr_{\inpcl{i}} + \optb_{\inpcl{i}}) + \sum_{\inpcl{i}\in \rcut \cap \bmerge} (\optr_{\inpcl{i}}+2\optb_{\inpcl{i}}) \nonumber \\
        &\> + \sum_{\inpcl{i} \in \rmerge \cap \bcut} (2\optr_{\inpcl{i}}+\optb_{\inpcl{i}}) + \sum_{\inpcl{i}\in \rmerge \cap \bmerge} (2\optr_{\inpcl{i}} + 2\optb_{\inpcl{i}} ) \label{eq:cost1-cost2-bound1} \\
        &\leq 2\sum_{\inpcl{i}\in \inpset}\optr_{\inpcl{i}} + 2\sum_{\inpcl{i}\in \inpset}\optb_{\inpcl{i}}. \n \\
        &\leq 4\dist( \inpset, \mopq ). \n
    \end{align}

    Note that,~\eqref{eq:cost1-cost2-bound1} is obtained by using~\eqref{eq:bound-for-cc},~\eqref{eq:bound-for-cm},~\eqref{eq:bound-for-mc}, and~\eqref{eq:bound-for-mm}. The proof is complete.

\end{proof}

\begin{claim}\label{clm:cost-three-and-cost-four-pq}
    $\costthree{\outpq} + \costfour{\outpq} \leq 2 \, \, \dist(\inpset, \mopq)$.
\end{claim}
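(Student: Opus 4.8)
The plan is to imitate the analysis of the $p:1$ cut case (\cref{lem:cost 3 + 4 cut case}, together with \cref{clm:cost 4}, \cref{clm:cost 3}, and \cref{clm:three half cost}), applied separately to the red and blue coordinates, while charging the bichromatic interaction that is new to the $p,q$ setting against the lower bounds for $\optr_{\inpcl{i}}$ established in~\cref{subsec:bal-define-cost-p-q}.

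First I would start from the explicit upper bounds~\eqref{eq:cost3-cut-cut-case} on $\costthree{\outpq}$ and~\eqref{eq:cost-4-cut-cut-case} on $\costfour{\outpq}$, and regroup all the summands by colour. Writing $(\rdefi{\inpcl{i}}+\bdefi{\inpcl{i}})^{2} = \rdefi{\inpcl{i}}^{2} + \bdefi{\inpcl{i}}^{2} + 2\rdefi{\inpcl{i}}\bdefi{\inpcl{i}}$ for each $\inpcl{i}\in\rmerge\cap\bmerge$, the ``red group'' consists of $\sum_{\inpcl{i}\in\rmerge}\tfrac12\rdefi{\inpcl{i}}^{2}$, $\sum_{\inpcl{i}\in\rcut\setminus\rcuta}\tfrac12\rsurp{\inpcl{i}}^{2}$, the intra-splitting terms $\sum_{\inpcl{i}\in\rcuta}\bigl(\sum_{j<t}|R_{i,j}||R_{i,t}|\bigr)$, and the extra-cluster term $\tfrac12\sum_{\pclr{j}\in\pclsr}\bigl(\sum_{t}|Y^{r}_{j,t}|(q-|Y^{r}_{j,t}|)\bigr)$; the ``blue group'' is the same with $q$, the red surplus/deficit, $\rcuta$, $\pclsr$ replaced by $p$, the blue surplus/deficit, $\bcuta$, $\pclsb$; and the leftover is the ``cross term'' $C := \sum_{\inpcl{i}\in\rmerge\cap\bmerge}\rdefi{\inpcl{i}}\bdefi{\inpcl{i}}$. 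The key observation is that the red group is \emph{verbatim} the quantity $\costthree+\costfour$ of a $p:1$-cut-case instance with modulus $q$, so the chain of inequalities in the proof of~\cref{lem:cost 3 + 4 cut case} bounds it by $\sum_{\inpcl{i}\in\rcut\setminus\rcuta}\tfrac12\rsurp{\inpcl{i}}^{2} + \sum_{\inpcl{i}\in\rmerge}\tfrac12\rdefi{\inpcl{i}}^{2} + \tfrac32\sum_{\inpcl{i}\in\rcuta}\tfrac{\rsurp{\inpcl{i}}(q-\rsurp{\inpcl{i}})}{2}$, and symmetrically for the blue group.

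Next I would handle the cross term. For $\inpcl{i}\in\rmerge\cap\bmerge$ we have $|\inpcl{i}|-\rsurp{\inpcl{i}} \ge |\blue{\inpcl{i}}| \ge \bsurp{\inpcl{i}} > p/2 > \bdefi{\inpcl{i}}$ and $\rsurp{\inpcl{i}} > q/2 > \rdefi{\inpcl{i}}$, so the bound~\eqref{equn:merge-red} yields $\optr_{\inpcl{i}} > \rdefi{\inpcl{i}}(|\inpcl{i}|-\rsurp{\inpcl{i}}) + \tfrac12\rsurp{\inpcl{i}}\rdefi{\inpcl{i}} > \rdefi{\inpcl{i}}\bdefi{\inpcl{i}} + \tfrac12\rdefi{\inpcl{i}}^{2}$. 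Hence for such a cluster the red-group term $\tfrac12\rdefi{\inpcl{i}}^{2}$ \emph{and} the matching cross contribution $\rdefi{\inpcl{i}}\bdefi{\inpcl{i}}$ can be charged together to $\optr_{\inpcl{i}}$. Combining this with $\tfrac12\rsurp{\inpcl{i}}^{2} \le \tfrac12\rsurp{\inpcl{i}}(q-\rsurp{\inpcl{i}}) \le \optr_{\inpcl{i}}$ for $\inpcl{i}\in\rcut$ (from~\eqref{equn:cut-red}), $\tfrac12\rdefi{\inpcl{i}}^{2} < \optr_{\inpcl{i}}$ for $\inpcl{i}\in\rmerge\cap\bcut$, and $\tfrac{\rsurp{\inpcl{i}}(q-\rsurp{\inpcl{i}})}{2} \le \optr_{\inpcl{i}}$ for $\inpcl{i}\in\rcuta$, I would bound the red group plus $C$ by $\sum_{\inpcl{i}\in\inpset}\optr_{\inpcl{i}}$ plus the residual $\tfrac12\sum_{\inpcl{i}\in\rcuta}\tfrac{\rsurp{\inpcl{i}}(q-\rsurp{\inpcl{i}})}{2}$ produced by the factor $\tfrac32$; symmetrically, the blue group is at most $\sum_{\inpcl{i}\in\inpset}\optb_{\inpcl{i}}$ plus $\tfrac12\sum_{\inpcl{i}\in\bcuta}\tfrac{\bsurp{\inpcl{i}}(p-\bsurp{\inpcl{i}})}{2}$. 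Finally I would invoke $\sum_{\inpcl{i}}\optr_{\inpcl{i}} \le \dist(\inpset,\mopq)$ and $\sum_{\inpcl{i}}\optb_{\inpcl{i}} \le \dist(\inpset,\mopq)$, and absorb the two residual sums --- reading them off against the still-unspent intra-cluster part $\rsurp{\inpcl{i}}(|\inpcl{i}|-\rsurp{\inpcl{i}})$ (resp. $\bsurp{\inpcl{i}}(|\inpcl{i}|-\rsurp{\inpcl{i}})$) of $\optr_{\inpcl{i}}$ (resp. $\optb_{\inpcl{i}}$) available for the $\rcuta$ and $\bcuta$ clusters, and using that $\sum_{\inpcl{i}}\tfrac12\bigl(\rsurp{\inpcl{i}}(q-\rsurp{\inpcl{i}}) + \bsurp{\inpcl{i}}(p-\bsurp{\inpcl{i}})\bigr) \le \dist(\inpset,\mopq)$ by~\cref{lem:opt-greater-than-surp-times-defi} --- so that the total collapses to $2\,\dist(\inpset,\mopq)$.

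The genuinely delicate step, I expect, is exactly this final accounting: the $\tfrac32$ blow-up coming from the auxiliary clusters $\pclsr,\pclsb$ (via~\cref{clm:three half cost}) and the bichromatic cross term $C$ both compete for the same budget $\dist(\inpset,\mopq)$, and a crude combination yields only $\tfrac52\,\dist(\inpset,\mopq)$. Squeezing it down to $2\,\dist(\inpset,\mopq)$ forces one to (i) never pay for $\tfrac12\rdefi{\inpcl{i}}^{2}$ and the cross contribution of a $\rmerge\cap\bmerge$ cluster separately --- they must be charged to $\optr_{\inpcl{i}}$ in a single stroke, exploiting the $\rdefi{\inpcl{i}}(|\inpcl{i}|-\rsurp{\inpcl{i}})$ slack that is guaranteed because $|\inpcl{i}|-\rsurp{\inpcl{i}}\ge\bsurp{\inpcl{i}}>p/2$; and (ii) absorb the $\rcuta$- and $\bcuta$-overhead into the as-yet-unspent intra-cluster portion of the optimum, which is where a more refined look at the structure of $\mopq$ on these clusters --- in the spirit of the structural claims used for the $p:1$ case --- is needed.
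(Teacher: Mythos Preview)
Your outline overcomplicates the argument and, as you yourself note, does not close: the crude combination gives $\tfrac52\dist(\inpset,\mopq)$ and neither of your proposed fixes actually recovers the missing $\tfrac12$. Fix (i) is already baked into your accounting (you did charge $\tfrac12\rdefi{}^2$ and $\rdefi\bdefi$ to $\optr_{\inpcl{i}}$ together for $\inpcl{i}\in\rmerge\cap\bmerge$), so it yields nothing new. Fix (ii) fails because the ``unspent'' term $\rsurp{\inpcl{i}}(|\inpcl{i}|-\rsurp{\inpcl{i}})$ in $\optr_{\inpcl{i}}$ can vanish for a cluster in $\rcuta$: take any cluster with $|\inpcl{i}|=\rsurp{\inpcl{i}}$, i.e.\ a cluster consisting entirely of a few red points with $|\red{\inpcl{i}}|<q$. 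For such a cluster there is no intra-cluster slack to absorb the residual $(1/4)\rsurp{\inpcl{i}}(q-\rsurp{\inpcl{i}})$, and the $\optr/\optb$ budget is already fully spent elsewhere.

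The paper's proof avoids this detour entirely. It never splits into $\optr$ and $\optb$, and it never expands $(\rdefi{\inpcl{i}}+\bdefi{\inpcl{i}})^2$. Instead it uses the elementary inequality $(\rdefi{\inpcl{i}}+\bdefi{\inpcl{i}})^2/2 \le \rdefi{\inpcl{i}}^2+\bdefi{\inpcl{i}}^2$ and then, cluster by cluster, bounds each of the seven summands in~\eqref{eq:cost3-cut-cut-case}--\eqref{eq:cost-4-cut-cut-case} (after applying the $\tfrac32$-bound twice, once for red and once for blue) by the single quantity $\rsurp{\inpcl{i}}(q-\rsurp{\inpcl{i}})+\bsurp{\inpcl{i}}(p-\bsurp{\inpcl{i}})$. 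This works because, e.g., for $\inpcl{i}\in\rmerge$ one has $\rsurp{\inpcl{i}}>\rdefi{\inpcl{i}}$ so $\rdefi{\inpcl{i}}^2\le\rsurp{\inpcl{i}}\rdefi{\inpcl{i}}$, and for $\inpcl{i}\in\rcuta$ one has $(3/2)\cdot\tfrac12\rsurp{\inpcl{i}}(q-\rsurp{\inpcl{i}})\le\rsurp{\inpcl{i}}(q-\rsurp{\inpcl{i}})$; the $\rmerge\cap\bmerge$ case is exactly tight via the displayed inequality. Summing over all clusters and invoking \cref{lem:opt-greater-than-surp-times-defi} gives the factor $2$ directly. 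The point you missed is that the cross term does not need a separate home: bounding $(\rdefi+\bdefi)^2/2$ by $\rdefi^2+\bdefi^2$ rather than by $\rdefi^2+\bdefi^2+2\rdefi\bdefi$ makes it disappear.
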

\begin{proof}
    Applying the same arguments for~\cref{clm:three half cost} in the proof of~\cref{lem:cost 3 + 4 cut case}, we have
    {\small
    \begin{align}
        \dfrac{1}{2}\sum_{\pclr{j}\in \pclsr}\left(\sum_{t=1}^{h_{i}}|Y^{r}_{j,t}|(q-|Y^{r}_{j,t}|)\right) + \sum_{\inpcl{i}\in \rcuta }\left(\sum_{1\leq j<t \leq r_{i} }|R_{i,j}||R_{i,t}| \right) \leq \dfrac{3}{2}\sum_{\inpcl{i}\in \rcuta}\dfrac{\rsurp{\inpcl{i}}(q-\rsurp{\inpcl{i}})}{2},\label{eq:boubd-q-cut-cut-case}
    \end{align}
    }
    and
    {\small
    \begin{align}
    \dfrac{1}{2}\sum_{\pclb{j}\in \pclsb}\left(\sum_{t=1}^{k_{i}}|Y^{b}_{j,t}|(q-|Y^{b}_{j,t}|)\right) + \sum_{\inpcl{i}\in \bcuta }\left(\sum_{1\leq j<t \leq b_{i} }|B_{i,j}||B_{i,t}| \right) \leq \dfrac{3}{2}\sum_{\inpcl{i}\in \bcuta}\dfrac{\bsurp{\inpcl{i}}(p-\bsurp{\inpcl{i}})}{2}.\label{eq:bound-p-cut-cut-case}
    \end{align}
    }
    Combining~\eqref{eq:cost3-cut-cut-case},~\eqref{eq:cost-4-cut-cut-case},~\eqref{eq:boubd-q-cut-cut-case}, and~\eqref{eq:bound-p-cut-cut-case}, it follows that
    \begin{align}
        &\> \> \costthree{\outpq} + \costfour{\outpq} \nonumber \\
        &\leq \sum_{\inpcl{i}\in \rcut \cap \bmerge} \dfrac{\bdefi{\inpcl{i}}^{2}}{2} + \sum_{\inpcl{i}\in \rmerge \cap \bcut} \dfrac{\rdefi{\inpcl{i}}^{2}}{2}  \nonumber \\
                           &\> + \sum_{\inpcl{i}\in \rmerge \cap \bmerge} \dfrac{(\rdefi{\inpcl{i}} + \bdefi{\inpcl{i}})^{2}}{2} \nonumber \\
                           &\> + \sum_{\inpcl{i}\in \rcut\setminus \rcuta} \dfrac{\rsurp{\inpcl{i}}^{2}}{2} + \sum_{\inpcl{i}\in \bcut\setminus \bcuta}\dfrac{\bsurp{\inpcl{i}}^{2}}{2} \nonumber \\
                           &\> + \dfrac{3}{2} \sum_{\inpcl{i}\in \rcuta}\dfrac{\rsurp{\inpcl{i}}(q-\rsurp{\inpcl{i}})}{2} + \dfrac{3}{2}\sum_{\inpcl{i}\in \bcuta}\dfrac{\bsurp{\inpcl{i}}(p-\bsurp{\inpcl{i}})}{2}. \nonumber \\
                           &\leq 2\left(\sum_{\substack{R\in \{\rcut,\rmerge\}\\ B\in \{\bcut, \merge\}}} \left(\sum_{\inpcl{i} \in R\cap B} \dfrac{\rsurp{\inpcl{i}}\rdefi{\inpcl{i}}+ \bsurp{\inpcl{i}}\bdefi{\inpcl{i}}}{2} \right)\right), \nonumber
    \end{align}
    where the last inequality is due to the following observations:
    \begin{itemize}
        \item If $ \inpcl{i}\in \rcut $, then $ \rsurp{\inpcl{i}}\rdefi{\inpcl{i}}\geq \rsurp{\inpcl{i}}^{2} $.
        \item If $ \inpcl{i}\in \rmerge $, then $ \rsurp{\inpcl{i}}\rdefi{\inpcl{i}}\geq \rdefi{\inpcl{i}}^{2} $.
        \item If $ \inpcl{i}\in \bcut $, then $ \bsurp{\inpcl{i}}\bdefi{\inpcl{i}}\geq \bsurp{\inpcl{i}}^{2} $.
        \item If $ \inpcl{i}\in \bmerge $, then $ \bsurp{\inpcl{i}}\bdefi{\inpcl{i}}\geq \bdefi{\inpcl{i}}^{2} $.
        \item If $ \inpcl{i}\in \rmerge \cap \bmerge $, then $ \rsurp{\inpcl{i}}\rdefi{\inpcl{i}}+ \bsurp{\inpcl{i}}\bdefi{\inpcl{i}}\geq \rdefi{\inpcl{i}}^{2} + \bdefi{\inpcl{i}}^{2}\geq \dfrac{(\rdefi{\inpcl{i}}+\bdefi{\inpcl{i}})^{2}}{2} $.
    \end{itemize}
    Now we utilize~\cref{lem:opt-greater-than-surp-times-defi}, which shows that
    \begin{align}
        \dist(\inpset, \mopq) \geq \sum_{\substack{R\in \{\rcut,\rmerge\}\\ B\in \{\bcut, \merge\}}} \left(\sum_{\inpcl{i} \in R\cap B} \dfrac{\rsurp{\inpcl{i}}\rdefi{\inpcl{i}}+ \bsurp{\inpcl{i}}\bdefi{\inpcl{i}}}{2} \right).\nonumber
    \end{align}
    It follows that $ \costthree{\outpq} + \costfour{\outpq} \leq 2\dist(\inpset, \mopq) $.
\end{proof}

Now we provide the proof of~\cref{lem:main-lem-cut-cut} 
\begin{proof}[Proof of~\cref{lem:main-lem-cut-cut}]
    Combining~\cref{clm:cost-one-plus-two-cut-case_pq} and~\cref{clm:cost-three-and-cost-four-pq}, it follows that
    \begin{align}
        \dist(\inpset, \mopq) &= \costone{\outpq} + \costtwo{\outpq} + \costthree{\outpq} + \costfour{\outpq} \nonumber \\
                              &\leq 4\dist(\inpset, \mopq) + 2\dist(\inpset,\mopq) = 6\dist(\inpset,\mopq). \nonumber
    \end{align}
\end{proof}


\subsubsection{Approximation Guarantee in Cut-Merge Case}

Cut-merge case (or Merge-Cut Case) occurs when the subroutine $\algopqcm$(\cref{alg:algo-for-p/q-cutmerge}) (or $\algopqmc$) is called. 

In this section, we provide the approximation guarantee of the algorithm $\algopqgen$ in the Cut-Merge case. The analysis of the Merge-Cut case is similar to the analysis of the Cut-Merge case only the roles of the red and blue points get reversed. So, in this analysis, we will focus solely on the Cut-Merge case, while the Merge-Cut case will be excluded from this discussion. We show that the cost paid by the algorithm $\algopqgen$ is at most $\mcf \, \, \dist(\inpset, \mopq)$ where $\mopq$ is the 
closest {\bal} of $\inpset$. More specifically,

\begin{lemma}\label{lem:main-cut-merge-case}
    $\dist(\inpset, \outpq) < \mcf \, \, \dist(\inpset, \mopq)$ where $\outpq$ is the output of the algorithm $\algopqgen$ in the cut-merge case (\cref{line:algopqgen-refer-algopqcm} of~\cref{alg:algo-for-p/q-general}).
\end{lemma}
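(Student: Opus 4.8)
The plan is to reduce the cut-merge case to two analyses already carried out for the $p:1$ balancing algorithm --- one run on the red points, one on the blue points --- and then to account separately for the new red--blue interaction cost. In the cut-merge case the red side behaves exactly as the \emph{cut case} of~\cref{sec:multiple-of-p-clustering}: every cluster of $\rcut$ loses its $\rsurp{\cdot}$ red points, every cluster of $\rmerge$ receives $\rdefi{\cdot}$ red points, and the leftover red surplus is assembled into fresh clusters of size $q$. The blue side behaves exactly as the \emph{merge case}: since $\sum_{\inpcl{i}\in\bcut}\bsurp{\inpcl{i}}<\sum_{\inpcl{i}\in\bmerge}\bdefi{\inpcl{i}}$, after transferring the blue surplus of the $\bcut$ clusters, $\algopqcm$ must additionally cut the $W_b/p$ cheapest blue subsets of size $p$ and distribute them among the remaining $\bmerge$ clusters. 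Accordingly I would write
\[
\dist(\inpset,\outpq)=\costone{\outpq}+\costtwo{\outpq}+\costthree{\outpq}+\costfour{\outpq}+\costfive{\outpq},
\]
where $\costone{\outpq}+\costtwo{\outpq}$ collects all ``cut'' and ``merge'' costs (removing red surplus and blue surplus, cutting the size-$p$ blue subsets, and filling red and blue deficits), $\costthree{\outpq}$ is the inter-cluster cost produced when one cut subset is split among several targets, $\costfour{\outpq}$ is the intra-cluster cost among points of a common cut subset, and $\costfive{\outpq}$ counts the red--blue pairs that begin in a common input cluster but are separated or joined differently in $\outpq$.

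For the ``red part'' of $\costone+\costtwo+\costthree+\costfour$ (every pair whose change of status is caused by moving red points) I would rerun, essentially verbatim, the cut-case analysis of~\cref{sec:multiple-of-p-clustering}, i.e.\ \cref{lem:cost-one-plus-two-cut-case} and~\cref{lem:cost 3 + 4 cut case}, with $\optr_{\inpcl{i}}$ playing the role of $\opt_{\inpcl{i}}$: the only structural inputs those proofs use are precisely~\eqref{equn:cut-red} and~\eqref{equn:merge-red}, which were established in~\cref{subsec:bal-define-cost-p-q}. This bounds the red part by $3.5\sum_{\inpcl{i}}\optr_{\inpcl{i}}\le 3.5\,\dist(\inpset,\mopq)$, using $\dist(\inpset,\mopq)\ge\sum_{\inpcl{i}}\optr_{\inpcl{i}}$.

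For the ``blue part'' I would rerun the merge-case analysis. The bound on the blue cut/merge cost follows from re-proving the ILP/greedy argument of~\cref{clm:merge-case-lp-bound}, but now with the \emph{corrected} cutting costs $\kappa_{0}(\inpcl{j})=\bsurp{\inpcl{j}}(|\inpcl{j}|-\rsurp{\inpcl{j}}-\bsurp{\inpcl{j}})$ and $\kappa_{z}(\inpcl{j})=p(|\inpcl{j}|-\rsurp{\inpcl{j}}-(zp+\bsurp{\inpcl{j}}))$ that discount the red surplus already removed from $\inpcl{j}$; the structural facts about $\mopq$ (the analogue of~\cref{clm:merge-case-structure}) carry over because in $\mopq$ the blue part of every cluster must still be made a multiple of $p$ by merging, and $\algopqcm$'s greedy subset selection realizes the minimum-cost ILP solution. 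The blue inter/intra cost is bounded by $2\sum_{\inpcl{i}}\optb_{\inpcl{i}}$ exactly as in~\cref{clm:cost-three-and-cost-four}. Altogether the blue part is at most $3\sum_{\inpcl{i}}\optb_{\inpcl{i}}\le 3\,\dist(\inpset,\mopq)$. I expect this paragraph to be the main obstacle: one has to split the cost of $\outpq$ cleanly into a red part and a blue part so that no pair is counted twice or dropped, and in particular re-establish the ILP optimality of the blue-subset cuts in the two-color setting, where the cutting costs must reflect red surplus already removed from the same cluster.

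Finally, for $\costfive{\outpq}$: a red--blue pair $(u,v)$ with $u,v\in\inpcl{i}$ in $\inpset$ can change status only if $\inpcl{i}$ undergoes a nontrivial change on \emph{both} colors; the clusters that actually \emph{join} a previously separated red--blue pair are those in $\rmerge\cap\bmerge$ (where $\rdefi{\inpcl{i}}$ foreign red and $\bdefi{\inpcl{i}}$ foreign blue points are both merged in) and the $\rcut\cap\bmerge$ clusters from which $\algopqcm$ removes red surplus and also extracts a size-$p$ blue subset. For $\inpcl{i}\in\rmerge\cap\bmerge$ the contribution is at most $\rdefi{\inpcl{i}}\,\bdefi{\inpcl{i}}$, and since $\bdefi{\inpcl{i}}=p-\bsurp{\inpcl{i}}<p/2<\bsurp{\inpcl{i}}\le|\blue{\inpcl{i}}|\le|\inpcl{i}|-\rsurp{\inpcl{i}}$ while $\optr_{\inpcl{i}}\ge\rdefi{\inpcl{i}}(|\inpcl{i}|-\rsurp{\inpcl{i}})$ by~\eqref{equn:merge-red}, this is at most $\optr_{\inpcl{i}}$; the $\rcut\cap\bmerge$ clusters are charged symmetrically against $\optb_{\inpcl{i}}$ using $\rsurp{\inpcl{i}}\le q/2\le|\red{\inpcl{i}}|$. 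Hence $\costfive{\outpq}\le\dist(\inpset,\mopq)$. Combining the three parts gives $\dist(\inpset,\outpq)\le 3.5\,\dist(\inpset,\mopq)+3\,\dist(\inpset,\mopq)+\dist(\inpset,\mopq)=7.5\,\dist(\inpset,\mopq)$, and the inequality is strict because at least one of the intermediate estimates is.
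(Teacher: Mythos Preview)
Your proposal is correct and follows essentially the same strategy as the paper: split $\dist(\inpset,\outpq)$ into a red cut-case part ($\le 3.5\,\dist(\inpset,\mopq)$ via \cref{lem:cost-one-plus-two-cut-case} and \cref{lem:cost 3 + 4 cut case} applied with $\optr_{\inpcl{i}}$), a blue merge-case part ($\le 3\,\dist(\inpset,\mopq)$ via \cref{clm:merge-case-lp-bound} and \cref{clm:cost-three-and-cost-four} applied with $\optb_{\inpcl{i}}$), and the red--blue cross term $\costfive$ ($\le \dist(\inpset,\mopq)$). The only minor differences are that the paper charges $\costfive$ (incurred solely on $\rmerge\cap\bmerge$ clusters) against $\optb$ via $(q-\rsurp{\inpcl{i}})(p-\bsurp{\inpcl{i}})\le\rsurp{\inpcl{i}}(p-\bsurp{\inpcl{i}})\le(|\inpcl{i}|-\bsurp{\inpcl{i}})\bdefi{\inpcl{i}}\le\optb_{\inpcl{i}}$ rather than your $\optr$, and your extra $\rcut\cap\bmerge$ case in $\costfive$ is unnecessary (those are separations already covered by $\costone^{\text{red}}$ and $\costone^{\text{blue}}$); neither affects correctness.
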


Similarly, for the Merge-Cut case, formally, we want to prove the following.

\begin{lemma}\label{lem:main-merge-cut-case}
     $\dist(\inpset, \outpq) < \mcf \, \, \dist(\inpset, \mopq)$ where $\outpq$ is the output of the algorithm $ \algopqgen $ in the merge-cut case (\cref{line:algopqgen-refer-algopqmc} of~\cref{alg:algo-for-p/q-general}).
\end{lemma}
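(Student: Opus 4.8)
The plan is to prove \cref{lem:main-merge-cut-case} by running the argument that establishes \cref{lem:main-cut-merge-case} with the roles of red and blue — and of $p$ and $q$ — exchanged. In the merge-cut case the red side behaves like a \emph{merge} instance: after the main loops of \cref{alg:algo-for-p/q-general} the set $\rmergea$ of clusters still carries a red deficit, so $\algopqmc$ (\cref{alg:algo-for-p/q-merge-cut}) must cut $q$-sized red subsets from other clusters and distribute them to fill that deficit. The blue side behaves like a \emph{cut} instance: the clusters left in $\bcuta$ still carry a blue surplus, which is stripped off (after first filling whatever blue deficit remained in $\bmergea$) and packed into fresh clusters of size $p$. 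Accordingly, I would first write $\dist(\inpset,\outpq)=\costone{\outpq}+\costtwo{\outpq}+\costthree{\outpq}+\costfour{\outpq}$ exactly as in the cut-cut/cut-merge analyses: $\costone{\outpq}$ is the intra-cluster cost of separating, from each $\inpcl{i}$, the $\rsurp{\inpcl{i}}$ red and $\bsurp{\inpcl{i}}$ blue points that $\algopqmc$ removes (including the $\rsurp{\inpcl{i}}\bsurp{\inpcl{i}}$ cross term when $\inpcl{i}\in\rcut\cap\bcut$, and the extra red $q$-subsets cut on the red side); $\costtwo{\outpq}$ is the cost of merging deficit points into clusters of $\rmergea$ and $\bmergea$; $\costthree{\outpq}$ is the inter-cluster cost inside a merged cluster or a fresh $p$-cluster built from several distinct input clusters; and $\costfour{\outpq}$ is the intra-cluster separation cost among points taken off a common input cluster but dispersed to different output clusters.

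Next I would bound each piece. For $\costone{\outpq}+\costtwo{\outpq}$, split the sum over the four cluster types $\rcut\cap\bcut$, $\rcut\cap\bmerge$, $\rmerge\cap\bcut$, $\rmerge\cap\bmerge$, identify the per-cluster ``clean'' cut/merge contribution with $\cc{\inpcl{i}}$, $\cm{\inpcl{i}}$, $\mc{\inpcl{i}}$, $\mm{\inpcl{i}}$ respectively, and apply the bounds~\eqref{eq:bound-for-cc}, \eqref{eq:bound-for-cm}, \eqref{eq:bound-for-mc}, \eqref{eq:bound-for-mm} together with $\sum_{\inpcl{i}}\optr_{\inpcl{i}}\le\dist(\inpset,\mopq)$ and $\sum_{\inpcl{i}}\optb_{\inpcl{i}}\le\dist(\inpset,\mopq)$. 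The genuinely new ingredient is the cost of cutting the red $q$-subsets: here I reuse the ILP/minimum-cost-subset argument of \cref{clm:merge-case-lp-bound}, relabelled so subsets have size $q$ and the total red deficit $W_r=\sum_{\inpcl{i}\in\rmerge'}\rdefi{\inpcl{i}}$ is a multiple of $q$; as in \cref{clm:our-algorithm-cuts-w/p} and \cref{clm:mop-cuts-w/p}, both $\algopqmc$ and $\mopq$ cut exactly $W_r/q$ subsets and $\algopqmc$ takes the cheapest $W_r/q$ of them, so this cost is charged to $\sum_{\inpcl{i}}\optr_{\inpcl{i}}$ via the structural results \cref{lem:main-structure-of-M*} and \cref{clm:merge-case-structure}. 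For $\costthree{\outpq}+\costfour{\outpq}$, on the blue (cut) side I copy the Chebyshev-inequality bookkeeping in the proof of \cref{lem:cost 3 + 4 cut case} to bound the contribution of the fresh $p$-clusters and the dispersed blue surplus by $\tfrac32\sum_{\inpcl{i}\in\bcuta}\tfrac12\bsurp{\inpcl{i}}(p-\bsurp{\inpcl{i}})$, and on the red (merge) side I copy \cref{clm:cost-three-and-cost-four}; \cref{lem:opt-greater-than-surp-times-defi} then turns both into constant multiples of $\dist(\inpset,\mopq)$. Finally the cross term is handled by $\rsurp{\inpcl{i}}\bsurp{\inpcl{i}}\le\tfrac12\rsurp{\inpcl{i}}^2\le\tfrac12\rsurp{\inpcl{i}}\rdefi{\inpcl{i}}$ (since $\rsurp{\inpcl{i}}\le q/2\le\rdefi{\inpcl{i}}$ for $\inpcl{i}\in\rcut$), again charged to $\sum_{\inpcl{i}}\optr_{\inpcl{i}}$. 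Summing the pieces — with the worst per-cluster factor coming from the $\rmerge\cap\bmerge$ clusters — yields $\dist(\inpset,\outpq)<\mcf\,\dist(\inpset,\mopq)$.

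I expect the main obstacle to be exactly the phenomenon that pushes the constant from the $6$ of the cut-cut case up to $\mcf$: the sums $\sum_{\inpcl{i}}\optr_{\inpcl{i}}$ and $\sum_{\inpcl{i}}\optb_{\inpcl{i}}$ are each only lower bounded by the \emph{same} quantity $\dist(\inpset,\mopq)$, so there is no disjoint ``red budget'' and ``blue budget''; on the $\cm$- and $\mm$-type clusters the blue-merge side forces a factor $2$ on $\optb$ while the red-merge side independently forces a factor $2$ on $\optr$ and adds the subset-cutting overhead, and these cannot be paid from separate accounts. One therefore has to track precisely which costs may share the same copy of $\dist(\inpset,\mopq)$ and which genuinely stack, and check that the arithmetic lands at $\mcf$ rather than higher. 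A secondary point to verify carefully is that the structural lemmas of \cref{sec:multiple-of-p-clustering} were derived for pure-blue instances ($q=1$): when applying \cref{lem:main-structure-of-M*} and \cref{clm:merge-case-structure} to the red subpart of a cluster that also contains blue points, one must note that the exchange arguments in their proofs (constructing a cheaper clustering $\m{M}$ from $\mopq$) move only red points and reference only red-point counts, so the independently processed blue points do not invalidate them — they merely inflate the sizes $|\inpcl{i}|$ appearing in the cut/merge costs in exactly the way already accounted for by $\cc,\cm,\mc,\mm$.
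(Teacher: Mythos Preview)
Your high-level plan --- swap the roles of red and blue (and of $p$ and $q$) and rerun the cut-merge analysis --- is exactly the paper's proof; the paper devotes literally one sentence to \cref{lem:main-merge-cut-case}, saying precisely this.

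Where your elaboration diverges is in what ``the cut-merge analysis'' actually is. You propose a single four-cost decomposition and bound $\costone{\outpq}+\costtwo{\outpq}$ via the per-type costs $\cc{\inpcl{i}},\cm{\inpcl{i}},\mc{\inpcl{i}},\mm{\inpcl{i}}$; but that is the \emph{cut-cut} bookkeeping (\cref{clm:cost-one-plus-two-cut-case_pq}), not cut-merge. For cut-merge the paper instead separates the cost entirely by color into nine pieces: $\costone{\cdot}^{\text{red}},\ldots,\costfour{\cdot}^{\text{red}}$ on the cut-case color (bounded by $3.5\,\opt$), $\costone{\cdot}^{\text{blue}},\ldots,\costfour{\cdot}^{\text{blue}}$ on the merge-case color (bounded by $3\,\opt$ via \cref{clm:merge-case-lp-bound} and \cref{clm:cost-three-and-cost-four}), plus a ninth term $\costfive{\cdot}$ for the interaction between the \emph{foreign} red points and the \emph{foreign} blue points that both land in the same $\rmerge\cap\bmerge$ cluster, bounded by $\opt$ in \cref{clm:new-cost}. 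Swapping colors yields $3+3.5+1=7.5$ for merge-cut.

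Your layout has two concrete gaps. First, it does not cover $\costfive{\cdot}$: the Chebyshev bookkeeping on the blue (cut) side handles blue--blue interactions, and \cref{clm:cost-three-and-cost-four} on the red (merge) side handles red--red interactions, but neither bounds the foreign-red/foreign-blue product $\rdefi{\inpcl{i}}\,\bdefi{\inpcl{i}}$. The ``cross term'' you do treat, $\rsurp{\inpcl{i}}\bsurp{\inpcl{i}}$, is the $\rcut\cap\bcut$ term (already absorbed into $\cc{\inpcl{i}}$), not this one, and the chain $\rsurp{\inpcl{i}}\bsurp{\inpcl{i}}\le\tfrac12\rsurp{\inpcl{i}}^{2}$ is false as written. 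Second, the arithmetic will not close at $7.5$: the $\cc{\cdot}/\cm{\cdot}/\mc{\cdot}/\mm{\cdot}$ route already burns $4\,\opt$ on $\costone{\cdot}+\costtwo{\cdot}$ \emph{before} paying for the extra red $q$-subsets, for $\costthree{\cdot}+\costfour{\cdot}$, or for $\costfive{\cdot}$; and you cannot simply add the ILP bound on top, since \cref{clm:merge-case-lp-bound} bounds the \emph{total} cutting-plus-merging on the merge-case color, not the extra subsets in isolation. The clean fix is to drop the cut-cut layout and follow the paper's color-separated decomposition.
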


We analyze in the following way: first, we calculate the cost paid by our algorithm $\algopqgen$ in the Cut-Merge case, and then we bound that cost with the cost paid by the optimal clustering $\mopq$ in the Cut-Merge case.

\paragraph{1) Cost paid by $\algopqgen$ in Cut-Merge case}

 The algorithm $\algopqcm$(\cref{alg:algo-for-p/q-cutmerge}) pays the following set of disjoint costs

 \begin{itemize}
     \item From each cluster $\inpcl{i} \in \rcut$, $\algopqgen$ cuts the red surplus part $s_r(\inpcl{i})$ from $\inpcl{i}$. Hence, the cost paid for cutting these red surplus points is
     \begin{align}
         \costone{\inpcl{i}}^{\text{red}} = s_r(\inpcl{i}) (|\inpcl{i}| - s_r(\inpcl{i})) \label{equn:cost-paid-one}
     \end{align}

     \item For each cluster $\inpcl{j} \in \rmerge$, $\algopqgen$  merges the deficit amount $\rdefi{\inpcl{j}}$ to these clusters. Hence, the cost paid for merging the deficit to $\inpcl{j}$ is
     \begin{align}
         \costtwo{\inpcl{j}}^{\text{red}} = \rdefi{\inpcl{j}}|\inpcl{j}| \label{equn:cost-paid-two}
     \end{align}
     
     \item The $\rsurp {\inpcl{i}}$ red points that are cut from $\inpcl{i}$ can get merged with the red surplus, $\rsurp{\inpcl{k}}$ of some other cluster $\inpcl{k}$. We call the cost of merging $\rsurp{\inpcl{i}}$ red points with the red surplus of other clusters as $\costthree{D_i}^\text{red}$ (similar to $p:1$ case in \cref{sec:multiple-of-p-clustering}).
     \begin{align}
         \costthree{D_i}^{\text{red}} \leq \rsurp{\inpcl{i}} (q - \rsurp{\inpcl{i}}) \label{equn:cost-paid}
     \end{align}
     
     \item These $\rsurp{\inpcl{i}}$ red points from $\inpcl{i}$ can also further be split into several parts $W_1, W_2, \ldots, W_t$ (say). These parts of $\rsurp{\inpcl{i}}$ points belong to different clusters in $\m{Q}$ (output of $\algopqgen$) and thus would incur some cost. We call this cost as $\costfour{D_i}^\text{red}$ (similar to $p:1$ case in \cref{sec:multiple-of-p-clustering}).
     \begin{align}
         \costfour{\inpcl{i}}^{\text{red}} =  \frac{1}{2} \sum_{j = 1}^t |W_j|(\rsurp{\inpcl{i}} - |W_j|)\label{equn:cost-paid-four}
     \end{align}

     \item Now, from the blue side, we may cut multiple subsets of size $p$ and a single subset of size $\bsurp{D_i}$. Let us assume $W_{i,z}$ denotes the $z$th such subset of the set $\blue{\inpcl{i}}$. $y_{i,z}$ takes the value $1$ if we cut $z$th such subset from $\inpcl{i}$. The cost of cutting $z$th such subset from $\inpcl{i}$ is given as
     \begin{align}
         &\kappa_0(\inpcl{i}) = \bsurp{\inpcl{i}} (|\inpcl{i}| - \rsurp{\inpcl{i}}) \n \\
         &(\text{cost of cutting the $0$th subset})\n \\
         &\kappa_z(\inpcl{i}) = p (|\inpcl{i}| - \rsurp{\inpcl{i}} - zp) \n \\
         &(\text{cost of cutting the $z$th subset for $z \geq 1$})\n 
    \end{align}
    and thus, we define 
    \begin{align}
         &\costone{\inpcl{i}}^{\text{blue}} = \sum_{D_i \in \inpset}\sum_{z = 0}^t y_{i,z} \kappa_z(\inpcl{i}) \, \, (\text{refer} \, \, \cref{eq:cost-one-merge}) \label{equn:cost-paid-four-prime} \\
         &\left( \text{t} = \frac{\blue{\inpcl{i}} - \rsurp{\inpcl{i}} - \bsurp{\inpcl{i}}}{p} \right) \n
     \end{align}
     \item Suppose the algorithm $\algopqgen$ merges at a cluster $D_j \in \bmerge$. Then, the cost paid for merging the deficit to $D_j$ is
    \begin{align}
        \costtwo{\inpcl{j}}^{\text{blue}} = \bdefi{\inpcl{j}} |\inpcl{j}| \, \, (\text{refer} \cref{eq:cost-two-merge}) \label{equn:cost-paid-five}
    \end{align}
     \item The $|W_{i,z}|$ blue points that are cut from $\inpcl{i}$ can get merged with the blue subsets $|W_{j,z'}|$ of some other cluster $\inpcl{j}$. We call the cost of merging a blue subset $W_{i,z}$ of $\inpcl{i}$ with the blue subset $W_{j,z'}$ of another cluster $D_j$ as $\costthree{D_i}^\text{blue}$ (similar to $p:1$ case refer \cref{eq:cost-three-merge}).
     \begin{align}
         \costthree{D_i}^{\text{blue}} \leq  |W_{i,z}|(p - |W_{i,z}|) \label{equn:cost-paid-six}
     \end{align}
     
     \item The $|W_{i,z}|$ blue points that are cut from $\inpcl{i}$ can also further be split into several parts $W_1, W_2, \ldots, W_t$ (say). These parts of $W_{i,z}$ can belong to different clusters in $\m{Q}$ (output of $\algopqgen$) and thus would incur some cost. We call this cost as $\costfour{D_i}^\text{blue}$ (similar to $p:1$ case refer \cref{eq:cost-four-merge}).
     \begin{align}
         \costfour{\inpcl{i}}^{\text{blue}} =  \frac{1}{2} \sum_{j = 1}^t |W_j|(|W_{i,z}| - |W_j|)
     \end{align}

     \item Suppose for a cluster $D_j \in \bmerge \cap \rmerge$, both the blue deficit and the red deficit are filled up by the subsets of some other clusters $D_k$ and $D_\ell$ in $\inpset$ respectively such that $k \neq \ell$ then this would incur some cost and is denoted by
     \begin{align}
         \costfive{\inpcl{j}}^{\text{blue}} = (p - \bsurp{D_j})(q - \rsurp{D_j})
     \end{align}
 \end{itemize}

 Now, let us define the total cost paid by the algorithm over all clusters $\inpcl{i} \in \inpset$.

 \begin{itemize}
     \item $\costone{\m{Q}}^{\text{red}} = \sum_{\inpcl{i} \in \rcut} \costone{\inpcl{i}}^{\text{red}}$
     \item $\costtwo{\m{Q}}^{\text{red}} = \sum_{\inpcl{i} \in \rmerge} \costtwo{\inpcl{i}}^{\text{red}}$
     \item $\costthree{\m{Q}}^{\text{red}} = \sum_{\inpcl{i} \in \inpset} \costthree{\inpcl{i}}^{\text{red}}$
     \item $\costfour{\m{Q}}^{\text{red}} = \sum_{\inpcl{i} \in \inpset} \costfour{\inpcl{i}}^{\text{red}}$
     \item $\costone{\m{Q}}^{\text{blue}} = \sum_{\inpcl{i} \in \inpset} \costone{\inpcl{i}}^{\text{blue}}$
     \item $\costtwo{\m{Q}}^{\text{blue}} = \sum_{\inpcl{i} \in \inpset} \costtwo{\inpcl{i}}^{\text{blue}}$
     \item $\costthree{\m{Q}}^{\text{blue}} = \sum_{\inpcl{i} \in \inpset} \costthree{\inpcl{i}}^{\text{blue}}$
     \item $\costfour{\m{Q}}^{\text{blue}} = \sum_{\inpcl{i} \in \inpset} \costfour{\inpcl{i}}^{\text{blue}}$
     \item $\costfive{\m{Q}}^{\text{blue}} = \sum_{\inpcl{i} \in \inpset} \costfive{\inpcl{i}}^{\text{blue}}$
 \end{itemize}

\paragraph{2) Cost paid by optimal clustering $\mopq$ in cut-merge case}

\begin{claim}
    $\costone{\m{Q}}^\text{Red} + \costtwo{\m{Q}}^\text{Red} + \costthree{\m{Q}}^\text{Red} + \costfour{\m{Q}}^\text{Red} \leq 3.5 \dist(\inpset, \mopq)$.
\end{claim}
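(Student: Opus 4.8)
The plan is to reduce this estimate to the ``cut case'' analysis of the $p:1$ problem carried out in \cref{subsec:cut-case}. In the cut--merge case we have $\sum_{\inpcl{i}\in\rcut}\rsurp{\inpcl{i}} > \sum_{\inpcl{i}\in\rmerge}\rdefi{\inpcl{i}}$, so after $\algopqgen$ (\cref{alg:algo-for-p/q-general}) every red deficit is filled, no cluster of $\rmerge$ remains unbalanced on the red side, and the red-handling portion of $\algopqcm$ (\cref{alg:algo-for-p/q-cutmerge}) groups the leftover red surplus into clusters of size exactly $q$ --- which is literally the $p:1$ cut-case procedure $\algoc$ run on the red points, with $q$ in place of $p$ and with $\rsurp{\cdot},\rdefi{\cdot}$ in place of $\surp{\cdot},\defi{\cdot}$. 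Since $\algopqgen$ processes the red side entirely before touching blue points, every red merge cost is charged against the original cluster size, exactly as in \cref{subsec:cut-case}. Hence $\costone{\m{Q}}^{\text{red}},\costtwo{\m{Q}}^{\text{red}},\costthree{\m{Q}}^{\text{red}},\costfour{\m{Q}}^{\text{red}}$ are given by the same formulas (with $p\mapsto q$) as $\costone{\out},\costtwo{\out},\costthree{\out},\costfour{\out}$ of \cref{subsec:cut-case}, and their sum is exactly the number of pairs of $\dist(\inpset,\m{Q})$ created by the red cut/merge operations.

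Next I would collect the lower bounds on $\dist(\inpset,\mopq)$ the argument needs; all of them are already established. Because $\mopq$ is balanced it is in particular $q$-balanced on red points, so \eqref{equn:cut-red} and \eqref{equn:merge-red} hold, namely $\optr_{\inpcl{i}} > \rsurp{\inpcl{i}}(|\inpcl{i}|-\rsurp{\inpcl{i}}) + \frac{1}{2}\rsurp{\inpcl{i}}(q-\rsurp{\inpcl{i}})$ when $\rsurp{\inpcl{i}}\le q/2$ and $\optr_{\inpcl{i}} > (q-\rsurp{\inpcl{i}})(|\inpcl{i}|-\rsurp{\inpcl{i}}) + \frac{1}{2}\rsurp{\inpcl{i}}(q-\rsurp{\inpcl{i}})$ when $\rsurp{\inpcl{i}}>q/2$, together with $\dist(\inpset,\mopq)\ge\sum_{\inpcl{i}\in\inpset}\optr_{\inpcl{i}}$; these are the exact analogues of \cref{lem:main-structure-of-M*}. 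Dropping the blue term in \cref{lem:opt-greater-than-surp-times-defi} gives $\dist(\inpset,\mopq)\ge\sum_{\inpcl{i}\in\inpset}\frac{1}{2}\rsurp{\inpcl{i}}(q-\rsurp{\inpcl{i}})$, the analogue of \cref{lem.opt.si.times.p.min.si}, and splitting this sum over $\rcut\setminus\rcuta$, $\rmerge$, and $\rcuta$ and using $\rsurp{\inpcl{i}}(q-\rsurp{\inpcl{i}})\ge\rsurp{\inpcl{i}}^2$ (resp. $\ge\rdefi{\inpcl{i}}^2$) for clusters with $\rsurp{\inpcl{i}}\le q/2$ (resp. $>q/2$) yields the red analogue of \cref{lem.opt.si.times.p.min.si.breakdown}.

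Finally I would re-run the two key estimates of \cref{subsec:cut-case}. Copying the proof of \cref{lem:cost-one-plus-two-cut-case}: for $\inpcl{i}\in\rcut$, \eqref{equn:cut-red} gives $\optr_{\inpcl{i}} > \rsurp{\inpcl{i}}(|\inpcl{i}|-\rsurp{\inpcl{i}}) = \costone{\inpcl{i}}^{\text{red}}$, and for $\inpcl{i}\in\rmerge$, \eqref{equn:merge-red} gives $2\,\optr_{\inpcl{i}} \ge \rdefi{\inpcl{i}}|\inpcl{i}| = \costtwo{\inpcl{i}}^{\text{red}}$ (using only $|\inpcl{i}|\ge\rsurp{\inpcl{i}}$); summing over clusters gives $\costone{\m{Q}}^{\text{red}}+\costtwo{\m{Q}}^{\text{red}} \le 2\,\dist(\inpset,\mopq)$. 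Copying the proof of \cref{lem:cost 3 + 4 cut case} --- whose subclaims \cref{clm:cost 4}, \cref{clm:cost 3}, \cref{clm:three half cost} are statements purely about the algorithm's cut/merge operations and transfer verbatim with $p\mapsto q$ --- and then invoking the red analogue of \cref{lem.opt.si.times.p.min.si.breakdown} gives $\costthree{\m{Q}}^{\text{red}}+\costfour{\m{Q}}^{\text{red}} \le 1.5\,\dist(\inpset,\mopq)$. Adding the two bounds yields the claimed $3.5\,\dist(\inpset,\mopq)$.

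The main obstacle is not a new inequality but a careful accounting: one must check that the decomposition $\dist(\inpset,\m{Q}) = \costone{\m{Q}}^{\text{red}}+\cdots+\costfour{\m{Q}}^{\text{red}}+(\text{blue and cross-color terms})$ genuinely isolates the red contribution, so that pairs formed between a newly-merged red point and a later-merged blue point are billed to $\costtwo{\inpcl{j}}^{\text{blue}}$ or $\costfive{\inpcl{j}}^{\text{blue}}$ rather than to any red cost, and that the size-$q$ clusters produced by $\algopqcm$ coincide with the extra clusters produced by $\algoc$ so that \cref{clm:cost 3} and \cref{clm:three half cost} apply unchanged. Once this identification is made, each inequality above is a term-by-term transcription of the corresponding one in \cref{subsec:cut-case}.
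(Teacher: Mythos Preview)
Your proposal is correct and follows essentially the same approach as the paper: both reduce the red-side analysis to the $p:1$ cut-case argument of \cref{subsec:cut-case} (with $q$ playing the role of $p$), invoke the lower bounds \eqref{equn:cut-red}--\eqref{equn:merge-red} in place of \cref{lem:main-structure-of-M*}, and obtain the same $2+1.5$ split via the analogues of \cref{lem:cost-one-plus-two-cut-case} and \cref{lem:cost 3 + 4 cut case}. Your write-up is in fact somewhat more careful than the paper's in explicitly deriving the red analogue of \cref{lem.opt.si.times.p.min.si.breakdown} and in flagging the cross-color accounting issue.
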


\begin{proof}
    Recall in $p:1$ case, where $\m{T}$ is the output of $\algog$(\cref{alg:algo-for-general}) we have

    \begin{align}
        \costone{\m{T}} &= \sum_{\inpcl{i} \in \cut} s_i (|\inpcl{i}| - s_i) && (\text{Refer to} \cref{equn:cost-one} ) \n \\
        &= \costone{\m{Q}}^\text{Red} && (\text{Replacing} \, \, s_i \, \, \text{with} \, \, \rsurp{\inpcl{i}})
    \end{align}
    and again,

    \begin{align}
        \costtwo{\m{T}} &= \sum_{\inpcl{i} \in \merge} \defi{D_i} |\inpcl{i}| && (\text{Refer to} \cref{equn:cost-two} ) \n \\
        &= \costone{\m{Q}}^\text{Red} && (\text{Replacing} \, \, d(D_i) \, \, \text{with} \, \, \rdefi{D_i})
    \end{align}

    Now since
    
    \begin{align}
        \optr_{\inpcl{i}} &\geq \rsurp{\inpcl{i}} (|\inpcl{i}| - \rsurp{\inpcl{i}}) + \frac{1}{2} \rsurp{\inpcl{i}} \defi{D_i} && (\text{from} \cref{equn:cut-red})
    \end{align}

    Hence, we get
    \begin{align}
        \costone{\m{Q}}^\text{Red} + \costtwo{\m{Q}}^\text{red} &\leq \sum_{\inpcl{i} \in \rcut}\optr_{\inpcl{i}} + 2 \, \, \sum_{\inpcl{i} \in \rmerge}\optr_{\inpcl{i}} \n \\
        & \leq 2 \, \, \dist(\inpset, \mopq) \label{equn:claim-one-one}
    \end{align}
    Since the way we distribute the red surplus part into the clusters of size $q$ in $\algopqcm$(\cref{alg:algo-for-p/q-cutmerge}) is similar to the way we distribute the surplus part in algorithm $\algoc$(\cref{alg:algo-for-cut}) in $p:1$ case in \cref{sec:multiple-of-p-clustering} we have

    \begin{align}
        &\costthree{\m{T}} = \costthree{\outpq}^{\text{red}}, \text{and} \n \\
        &\costfour{\m{T}} = \costfour{\outpq}^{\text{red}} \n
    \end{align}

    Hence, we can use the same proof of \cref{lem:cost 3 + 4 cut case} to get the following bound.

    \begin{align}
        \costthree{\m{Q}}^\text{Red} + \costthree{\m{Q}}^\text{Red} \leq 1.5 \, \, \dist(\inpset, \mopq) \label{equn:claim-one-two}
    \end{align}
    Now, from \cref{equn:claim-one-one} and \cref{equn:claim-one-two} we get,

    \begin{align}
       \costone{\m{Q}}^\text{Red} + \costone{\m{Q}}^\text{Red} + \costthree{\m{Q}}^\text{Red} + \costfour{\m{Q}}^\text{Red} \leq 3.5 \, \, \dist(\inpset, \mopq) 
    \end{align}
\end{proof}

\begin{claim}\label{clm:cut-merge-blue}
    $\costone{\m{Q}}^\text{blue} + \costtwo{\m{Q}}^\text{blue} + \costthree{\m{Q}}^\text{blue} + \costfour{\m{Q}}^\text{blue} \leq 3 \, \, \opt$.
\end{claim}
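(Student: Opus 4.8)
The plan is to mirror, on the blue subparts, the analysis of the $p:1$ merge case from \cref{sec:multiple-of-p-clustering} (culminating in \cref{lem:main-merge-case}), with $\optb_{\inpcl{i}}$ now playing the role that $\opt_{\inpcl{i}}$ played there. In the cut-merge case the blue side behaves exactly like the $p:1$ merge case: $\algopqgen$ first removes the red surplus $\rsurp{\cdot}$ from every cluster, then cuts the $\bsurp{\cdot}$ leftover blue points from each cluster in $\bcut$, and $\algopqcm$ distributes the resulting blue subsets of size $p$ among the clusters in $\bmerge$, cutting the cheapest $W_b/p$ subsets first. Accordingly, I would split the blue part of $\dist(\inpset,\m{Q})$ exactly as in \cref{lem:main-merge-case} into $\costone{\m{Q}}^\text{blue}$ (cost of cutting blue subsets), $\costtwo{\m{Q}}^\text{blue}$ (cost of merging blue deficits), $\costthree{\m{Q}}^\text{blue}$ (inter-cluster cost between pieces of one blue subset that land in different output clusters), and $\costfour{\m{Q}}^\text{blue}$ (intra-subset splitting cost), and bound the two pairs $\costone+\costtwo$ and $\costthree+\costfour$ separately against $\dist(\inpset,\mopq)$.

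First I would re-establish, for a closest balanced clustering $\mopq$, the blue analogues of \cref{clm:merge-case-structure} and its supporting sub-claims (\cref{clm:bound-on-part-other-than-Xij}--\cref{clm:red-subseteq}): after fairness-preserving exchanges that do not increase $\dist(\inpset,\mopq)$, each input cluster $\inpcl{i}$ splits in $\mopq$ into parts of which at most one has $\ge p$ points, and, if such a part exists, it is a whole cluster of $\mopq$ containing all of $\red{\inpcl{i}}$, the remaining parts being ``small''. These exchange arguments are the same local swaps of blue points as in \cref{sec:multiple-of-p-clustering}; the red points act as inert padding that only enlarges the size terms $|\cdot|$, so each strict inequality $\dist(\inpset,\m{M}) < \dist(\inpset,\mopq)$ still goes through, and the swaps never disturb the $q$-divisibility of the red counts in $\mopq$ since they move blue points only. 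With this structure, the integer program governing which blue subsets $\mopq$ must cut coincides with the one in \cref{clm:merge-case-lp-bound}, but with cutting costs $\kappa_0(\inpcl{i}) = \bsurp{\inpcl{i}}(|\inpcl{i}|-\rsurp{\inpcl{i}}-\bsurp{\inpcl{i}})$ and $\kappa_z(\inpcl{i}) = p(|\inpcl{i}|-\rsurp{\inpcl{i}}-(zp+\bsurp{\inpcl{i}}))$ -- exactly the costs $\algopqcm$ uses, because it operates after $\rsurp{\inpcl{i}}$ red points have been removed. Since $\algopqcm$ greedily cuts the cheapest $W_b/p$ subsets and $\mopq$ must cut exactly $W_b/p$ subsets as well (by the deficit-counting argument of \cref{clm:our-algorithm-cuts-w/p} and \cref{clm:mop-cuts-w/p}), the reasoning of \cref{clm:merge-case-lp-bound} yields
\begin{align*}
    \costone{\m{Q}}^\text{blue} + \costtwo{\m{Q}}^\text{blue} \;\le\; \sum_{\inpcl{i}\in\inpset}\optb_{\inpcl{i}} \;\le\; \dist(\inpset,\mopq),
\end{align*}
the last step being the bound on $\sum_{\inpcl{i}}\optb_{\inpcl{i}}$ established in \cref{subsec:bal-define-cost-p-q}.

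Next I would bound $\costthree{\m{Q}}^\text{blue} + \costfour{\m{Q}}^\text{blue} \le 2\dist(\inpset,\mopq)$ by transcribing the case analysis of \cref{clm:cost-three-and-cost-four}. When a cut blue subset has size $\le p/2$, its inter/intra contribution is absorbed into $\sum_{D_j\in\bmerge}\bdefi{D_j}\,|D_j|$, which by \cref{equn:merge-blue} is at most $2\sum_{D_j\in\bmerge}\optb_{D_j}$; when it has size $>p/2$, the same two sub-cases (the target cluster's blue deficit filled by one subset, resp.\ split over two) bound the combined inter/intra cost by $\bdefi{D_j}(p-\bdefi{D_j})$, which is again at most $2\optb_{D_j}$ by \cref{equn:merge-blue}. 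Summing the two displayed bounds gives $\costone{\m{Q}}^\text{blue}+\costtwo{\m{Q}}^\text{blue}+\costthree{\m{Q}}^\text{blue}+\costfour{\m{Q}}^\text{blue} \le 3\dist(\inpset,\mopq) = 3\opt$, which is the claim. The main obstacle is the first step: verifying that every exchange lemma of \cref{sec:multiple-of-p-clustering} survives the presence of red points and the extra $q$-divisibility constraint on $\mopq$; once that is checked, the remainder is a routine re-run of the $p:1$ merge-case argument with $\optb$ in place of $\opt$.
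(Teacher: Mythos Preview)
Your proposal is correct and follows essentially the same approach as the paper: split the blue costs into $\costone{\m{Q}}^\text{blue}+\costtwo{\m{Q}}^\text{blue}\le \dist(\inpset,\mopq)$ via the LP argument of \cref{clm:merge-case-lp-bound} and $\costthree{\m{Q}}^\text{blue}+\costfour{\m{Q}}^\text{blue}\le 2\dist(\inpset,\mopq)$ via the case analysis of \cref{clm:cost-three-and-cost-four}, both with $\optb_{\inpcl{i}}$ (bounded by \cref{equn:merge-blue}) in place of $\opt_{\inpcl{i}}$. The paper's proof is slightly terser --- it first observes that the $p{:}q$ blue cutting and merging costs are pointwise at most the corresponding $p{:}1$ costs (because $\rsurp{\inpcl{i}}$ red points have already been removed, shrinking all size terms), and then invokes the $p{:}1$ proofs directly --- whereas you propose to re-establish the exchange lemmas (\cref{clm:merge-case-structure} and its sub-claims) explicitly for $\mopq$; your more careful stance on the main obstacle (checking that the swaps preserve $q$-divisibility of red counts) is warranted, since not all of those sub-claims move only blue points.
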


\begin{proof}
    Let $y_{i,z}$ take the value $1$ if we cut the $z$th blue subset of size $p$ from a cluster $\inpcl{i}$.

    Now, let us compare the following costs

    In $p:q$ case (\cref{sec:multiple-of-p-q-clustering}) the cost of cutting the $w_{i,z}$ subset from a cluster $\inpcl{i}$ for $z \geq 1$ is given by, 
    
    \[
        \costone{\inpcl{i}}^\text{blue} \leq p(|\inpcl{i}| - \rsurp{\inpcl{i}} - \bsurp{\inpcl{i}} - zp)
    \]

    whereas in $p:1$ case (\cref{sec:multiple-of-p-clustering}) the cost of cutting the $W_{i,z}$ part from a cluster $\inpcl{i}$ for $z \geq 1$ is given by, 
    
    \[
        \costone{\inpcl{i}} \leq p(|\inpcl{i}| - \bsurp{\inpcl{i}} - zp)
    \]

    Thus,

    \begin{align}
        &\costone{\inpcl{i}}^\text{blue} \leq \costone{\inpcl{i}} \n \\
        \implies &\sum_{\inpcl{i}} \costone{\inpcl{i}}^\text{blue} \leq \sum_{\inpcl{i}} \costone{\inpcl{i}} \n \\
        \implies &\costone{\m{Q}}^\text{blue} \leq \costone{\m{T}} && (\text{where $\m{T}$ is the output of $\algog$(\cref{alg:algo-for-general})} ) \n
    \end{align}
    Now similarly, in $p:q$ case (\cref{sec:multiple-of-p-q-clustering}) we have

    \begin{align}
        \costtwo{\inpcl{i}}^\text{blue} = (p - \rsurp{\inpcl{i}}) (|\inpcl{i}| - \rsurp{\inpcl{i}})
    \end{align}

    whereas in $p:1$ case (\cref{sec:multiple-of-p-clustering}) we have

    \begin{align}
        \costtwo{\inpcl{i}} = (p - s(\inpcl{i})) |\inpcl{i}|
    \end{align}

    Hence,

    \begin{align}
        &\costtwo{\inpcl{i}}^\text{blue} \leq \costtwo{\inpcl{i}} \n \\
        \implies &\sum_{\inpcl{i}} \costtwo{\inpcl{i}}^\text{blue} \leq \sum_{\inpcl{i}} \costtwo{\inpcl{i}} \n \\
        \implies &\costtwo{\m{Q}}^\text{blue} \leq \costtwo{\m{T}} && (\text{where $\m{T}$ is the output of $\algog$(\cref{alg:algo-for-general})} ) \n
    \end{align}

    Thus,

    \begin{align}
        \costone{\m{Q}}^\text{blue} + \costtwo{\m{Q}}^\text{blue} \leq \costone{\m{T}} + \costtwo{\m{T}}.
    \end{align}

    Now, since when $\bsurp{\inpcl{i}} \geq p/2$ we have

    \begin{align}
        \optb_{\inpcl{i}} \geq \defi{D_i} (|\inpcl{i}| - \bsurp{\inpcl{i}}) + \frac{1}{2} \bsurp{\inpcl{i}}\defi{D_i} && (\text{from} \cref{equn:merge-blue})
    \end{align}

    we can use the same proof as in \cref{clm:merge-case-lp-bound} to show

    \begin{align}
        \costone{\m{Q}}^\text{blue} + \costtwo{\m{Q}}^\text{blue} \leq \dist(\inpset, \mopq) \label{equn:claim-one-three}
    \end{align}

    Since the way we merge the blue surplus part from a cluster $\inpcl{i}$ to another set of clusters $\inpcl{j}$ in $\algopqcm$(\cref{alg:algo-for-p/q-cutmerge}) is the same as the way we merge the surplus part in algorithm $\algom$(\cref{alg:algo-for-merge}) in $p:1$ case (\cref{sec:multiple-of-p-clustering}). Thus we have

    \begin{align}
        &\costthree{\m{T}} = \costthree{\outpq}^{\text{blue}}, \text{ and} \n \\
        &\costfour{\m{T}} = \costfour{\outpq}^{\text{blue}} \n
    \end{align}

    Hence, we can use the same proof of \cref{clm:cost-three-and-cost-four} to get the following bound.

    \begin{align}
        \costthree{\m{Q}}^\text{blue} + \costfour{\m{Q}}^\text{blue} \leq 2 \, \, \dist(\inpset, \mopq) \label{equn:claim-one-four}
    \end{align}
    Now, from \cref{equn:claim-one-three} and \cref{equn:claim-one-four} we get,

    \begin{align}
       \costone{\m{Q}}^\text{blue} + \costone{\m{Q}}^\text{blue} + \costthree{\m{Q}}^\text{Red} + \costfour{\m{Q}}^\text{Red} \leq 3 \, \, \dist(\inpset, \mopq) 
    \end{align}

\end{proof}

\begin{claim} \label{clm:new-cost}
    $\costfive{\m{Q}}^\text{blue} \leq \dist(\inpset, \mopq)$
\end{claim}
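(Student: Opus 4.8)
The plan is to note that $\costfive{\inpcl{j}}^{\text{blue}}$ contributes only for clusters $\inpcl{j}\in\rmerge\cap\bmerge$ whose blue \emph{and} red deficits are both strictly positive, and then to charge each such contribution entirely against $\optb_{\inpcl{j}}$, the blue-side lower bound on the cost that $\mopq$ pays inside $\inpcl{j}$ (as introduced in~\cref{subsec:bal-define-cost-p-q}). Since that section already establishes $\dist(\inpset,\mopq)\ge\sum_{\inpcl{i}\in\inpset}\optb_{\inpcl{i}}$, summing the per-cluster bounds over all clusters immediately gives the claim.

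Concretely, first I would fix an arbitrary $\inpcl{j}$ with $\costfive{\inpcl{j}}^{\text{blue}}>0$. By definition $\inpcl{j}\in\rmerge\cap\bmerge$, so $\rsurp{\inpcl{j}}>q/2$ and $\bsurp{\inpcl{j}}>p/2$, hence $\bdefi{\inpcl{j}}=p-\bsurp{\inpcl{j}}>0$, $\rdefi{\inpcl{j}}=q-\rsurp{\inpcl{j}}>0$, and $\costfive{\inpcl{j}}^{\text{blue}}=\bdefi{\inpcl{j}}\cdot\rdefi{\inpcl{j}}$. The key size estimate is
\[
|\inpcl{j}|-\bsurp{\inpcl{j}}\;\ge\;|\red{\inpcl{j}}|\;\ge\;|\red{\inpcl{j}}|\bmod q\;=\;\rsurp{\inpcl{j}}\;>\;q/2\;>\;\rdefi{\inpcl{j}},
\]
where the first inequality uses $\bsurp{\inpcl{j}}=|\blue{\inpcl{j}}|\bmod p\le|\blue{\inpcl{j}}|$, and the last uses $\rsurp{\inpcl{j}}>q/2$ together with $\rdefi{\inpcl{j}}=q-\rsurp{\inpcl{j}}<q/2$. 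Plugging $\bsurp{\inpcl{j}}>p/2$ into~\cref{equn:merge-blue} and then discarding the nonnegative term $\frac{1}{2}\bsurp{\inpcl{j}}(p-\bsurp{\inpcl{j}})$ yields
\[
\optb_{\inpcl{j}}\;>\;(p-\bsurp{\inpcl{j}})(|\inpcl{j}|-\bsurp{\inpcl{j}})\;=\;\bdefi{\inpcl{j}}\bigl(|\inpcl{j}|-\bsurp{\inpcl{j}}\bigr)\;>\;\bdefi{\inpcl{j}}\cdot\rdefi{\inpcl{j}}\;=\;\costfive{\inpcl{j}}^{\text{blue}}.
\]

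Summing over all clusters then gives $\costfive{\m{Q}}^{\text{blue}}=\sum_{\inpcl{j}\in\inpset}\costfive{\inpcl{j}}^{\text{blue}}\le\sum_{\inpcl{j}\in\inpset}\optb_{\inpcl{j}}\le\dist(\inpset,\mopq)$, which is the claim. The main obstacle — really the only step with any content — is the size estimate above: it encodes the intuition that in $\mopq$ the blue part of $\inpcl{j}$ must still interact with at least $|\red{\inpcl{j}}|$ further points beyond the blue-deficit points it receives, and $|\red{\inpcl{j}}|$ dominates the red deficit $\rdefi{\inpcl{j}}$ precisely because $\inpcl{j}\in\rmerge$; this is exactly what lets a single $\optb_{\inpcl{j}}$ absorb the cross-cost between the merged blue points and the merged red points. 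Two minor points to dispatch along the way: the degenerate cases $\bdefi{\inpcl{j}}=0$ or $\rdefi{\inpcl{j}}=0$ make $\costfive{\inpcl{j}}^{\text{blue}}=0$ with nothing to prove, and although the merged blue (or red) points may originate from several source clusters, $(p-\bsurp{\inpcl{j}})(q-\rsurp{\inpcl{j}})$ remains a valid upper bound on the blue-versus-red cross-cost charged by $\costfive{\inpcl{j}}^{\text{blue}}$, which is all the argument uses.
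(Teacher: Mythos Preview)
Your proof is correct and follows essentially the same route as the paper's: both bound $(q-\rsurp{\inpcl{j}})(p-\bsurp{\inpcl{j}})$ by $(p-\bsurp{\inpcl{j}})(|\inpcl{j}|-\bsurp{\inpcl{j}})$ via the chain $q-\rsurp{\inpcl{j}}<\rsurp{\inpcl{j}}\le|\inpcl{j}|-\bsurp{\inpcl{j}}$ (using $\rsurp{\inpcl{j}}>q/2$ and $\bsurp{\inpcl{j}}\le|\blue{\inpcl{j}}|$), then invoke~\cref{equn:merge-blue} to absorb this into $\optb_{\inpcl{j}}$ and sum. Your version just makes the intermediate step $|\red{\inpcl{j}}|$ explicit, but the argument is the same.
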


\begin{proof}
    \begin{align}
        \costfive{\m{Q}}^\text{blue} &= \sum_{\inpcl{i}} (q - \rsurp{\inpcl{i}}) (p - \bsurp{\inpcl{i}}) \n \\
        &\leq \sum_{\inpcl{i}} \, (p - \bsurp{\inpcl{i}}) \rsurp{\inpcl{i}} && (\textbf{as} \, \, \rsurp{\inpcl{i}} > \frac{q}{2}) \n \\
        &\leq \sum_{\inpcl{i}} \, (p - \bsurp{\inpcl{i}}) (|\inpcl{i}| - \bsurp{\inpcl{i}})\n \\
        &\leq \sum_{\inpcl{i}} \, \opt_{\inpcl{i}}^b && (\textbf{from} \cref{equn:merge-blue}) \n \\
        &= \dist(\inpset, \mopq) \n
    \end{align}
    
\end{proof}

Now we are ready to prove \cref{lem:main-cut-merge-case}

\begin{proof}[Proof of \cref{lem:main-cut-merge-case}]
    \begin{align}
        \dist(\inpset, \outpq) &= \costone{\m{Q}}^\text{Red} + \costthree{\m{Q}}^\text{Red} + \costfour{\m{Q}}^\text{Red} \n \\
        &+ \costone{\m{Q}}^\text{blue} + \costone{\m{Q}}^\text{blue} + \costthree{\m{Q}}^\text{blue} + \costfour{\m{Q}}^\text{blue} + \costfive{\m{Q}}^\text{blue} \n \\
        &\leq 3.5 \, \, \dist(\inpset, \mopq) + 3 \, \, \dist(\inpset, \mopq) + \dist(\inpset, \mopq)\n \\
        &= 7.5 \, \, \dist(\inpset, \mopq) \n
    \end{align}
\end{proof}

\subsubsection{Approximation Guarantee in Merge-Merge Case}

In this section, we do the analysis for the algorithm $\algopqmm$(\cref{alg:algo-for-merge-merge}). We show that the cost paid by the algorithm $\algopqmm$(\cref{alg:algo-for-merge-merge}) is at most $\mmf \dist(\inpset, \mopq)$ where $\mopq$ is the closest $p$-blue $q$-red clustering to $D$. More specifically,

\begin{lemma}\label{lem:main-merge-merge-case}
    $\dist(\inpset, \outpq) < \mmf \, \, \dist(\inpset, \mopq)$ where $\outpq$ is the output of the algorithm $\algopqgen$ in the merge-merge case (\cref{line:algopqgen-refer-algopqm} of~\cref{alg:algo-for-p/q-general}).
\end{lemma}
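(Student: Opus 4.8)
\textbf{Proof Plan for the Merge-Merge Case (\cref{lem:main-merge-merge-case}).}

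The plan is to mirror the structure used for the Cut-Merge case (\cref{lem:main-cut-merge-case}), exploiting the fact that the blue side and the red side of $\algopqmm$ (\cref{alg:algo-for-merge-merge}) are handled by two essentially independent invocations of the merge-side machinery already analyzed for the $p:1$ case in \cref{sec:multiple-of-p-clustering}. First I would decompose $\dist(\inpset, \outpq)$ into a red contribution and a blue contribution plus one cross term. Concretely, on the red side the algorithm cuts subsets of size $q$ from clusters in $\nrcut \cup \rmerge'$ with minimum cutting cost and merges them into $\rmerge'$, so the red costs split into $\costone{\outpq}^{\text{red}}, \costtwo{\outpq}^{\text{red}}, \costthree{\outpq}^{\text{red}}, \costfour{\outpq}^{\text{red}}$ exactly as in $\algom$ but with $s(\cdot), d(\cdot)$ replaced by $s_r(\cdot), d_r(\cdot)$ and $p$ replaced by $q$; symmetrically for the blue side. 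In addition there is the cost $\costfive{\outpq}$ arising when, for a cluster $\inpcl{j} \in \rmerge \cap \bmerge$, its red deficit is filled from one foreign cluster and its blue deficit from a different foreign cluster, contributing $(q - \rsurp{\inpcl{j}})(p - \bsurp{\inpcl{j}})$.

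Next I would bound each group. For the red group, since every cluster being cut/merged on the red side lies in $\rmerge$ (so $\rsurp{\inpcl{i}} > q/2$), the argument of \cref{clm:merge-case-structure}, \cref{clm:merge-case-lp-bound}, and \cref{clm:cost-three-and-cost-four} carries over verbatim after the substitution described above, using the lower bound \eqref{equn:merge-red} for $\optr_{\inpcl{i}}$ in place of \cref{lem:main-structure-of-M*}. This yields $\costone{\outpq}^{\text{red}} + \costtwo{\outpq}^{\text{red}} \le \sum_{\inpcl{i}} \optr_{\inpcl{i}}$ from the ILP argument of \cref{clm:merge-case-lp-bound}, and $\costthree{\outpq}^{\text{red}} + \costfour{\outpq}^{\text{red}} \le 2\sum_{\inpcl{i}} \optr_{\inpcl{i}}$ from \cref{clm:cost-three-and-cost-four}, giving a red total of at most $3\sum_{\inpcl{i}} \optr_{\inpcl{i}} \le 3\,\dist(\inpset,\mopq)$. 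Identically, the blue total is at most $3\sum_{\inpcl{i}} \optb_{\inpcl{i}} \le 3\,\dist(\inpset,\mopq)$. For the cross term, I would argue as in \cref{clm:new-cost}: using $\rsurp{\inpcl{j}} > q/2$ (hence $q - \rsurp{\inpcl{j}} \le \rsurp{\inpcl{j}}$) and $\rsurp{\inpcl{j}} \le |\red{\inpcl{j}}| \le |\inpcl{j}| - \bsurp{\inpcl{j}}$, we get $\costfive{\outpq} = \sum_{\inpcl{j}} (q - \rsurp{\inpcl{j}})(p - \bsurp{\inpcl{j}}) \le \sum_{\inpcl{j}} (p - \bsurp{\inpcl{j}})(|\inpcl{j}| - \bsurp{\inpcl{j}}) \le \sum_{\inpcl{j}} \optb_{\inpcl{j}} \le \dist(\inpset,\mopq)$ by \eqref{equn:merge-blue}. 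Summing, $\dist(\inpset,\outpq) \le 3\,\dist(\inpset,\mopq) + 3\,\dist(\inpset,\mopq) + \dist(\inpset,\mopq) = 7\,\dist(\inpset,\mopq)$, which is exactly $\mmf\,\dist(\inpset,\mopq)$.

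The main obstacle I anticipate is making the reduction to the $p:1$ analysis fully rigorous rather than merely analogical. In the $p:1$ merge case the key structural statement \cref{clm:merge-case-structure} was proved by a sequence of exchange arguments (Claims~\ref{clm:bound-on-part-other-than-Xij}--\ref{clm:red-subseteq}) that implicitly used that there is only one coloring constraint; here one must verify that the presence of the red-divisibility constraint does not break the exchange moves on the blue side (and vice versa). The clean way to handle this is to observe that the lower bounds \eqref{equn:merge-red}--\eqref{equn:merge-blue} on $\optr_{\inpcl{i}}$ and $\optb_{\inpcl{i}}$ were already established in \cref{subsec:bal-define-cost-p-q} by invoking \cref{lem:main-structure-of-M*} on the blue and red parts separately, so the two-color optimum dominates each single-color subproblem's optimum; then the per-side charging arguments only ever compare the algorithm's red (resp. blue) cost against $\sum \optr_{\inpcl{i}}$ (resp. $\sum \optb_{\inpcl{i}}$), and no cross-color exchange is needed. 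I would also need to double-check the edge case where a single cut subset simultaneously resolves part of a red deficit and is split across merge targets, and confirm the Chebyshev-inequality bound of \cref{clm:cost 3} still applies with $q$ in place of $p$ — but these are routine once the decomposition above is in place.
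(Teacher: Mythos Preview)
Your proposal is correct and follows essentially the same approach as the paper: decompose $\dist(\inpset,\outpq)$ into red costs $\costone{}^{\text{red}}$--$\costfour{}^{\text{red}}$, blue costs $\costone{}^{\text{blue}}$--$\costfour{}^{\text{blue}}$, and the cross term $\costfive{}$, bound each color group by $3\,\dist(\inpset,\mopq)$ via the $p{:}1$ merge-case analysis (as in \cref{clm:cut-merge-blue}) using the per-color lower bounds \eqref{equn:merge-red}--\eqref{equn:merge-blue}, and bound $\costfive{}$ by $\dist(\inpset,\mopq)$ exactly as in \cref{clm:new-cost}, for a total of $7\,\dist(\inpset,\mopq)=\mmf\,\dist(\inpset,\mopq)$. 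Your observation that the two-color optimum dominates each single-color subproblem (so no cross-color exchange argument is needed) is precisely the mechanism the paper relies on when it says the proofs are ``the same as'' the Cut-Merge claims.
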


We analyze in the following way: first, we calculate the cost paid by our algorithm \\$\algopqmm$(\cref{alg:algo-for-merge-merge}), and then we calculate the cost paid by the optimal clustering $\mopq$. After that, we bound the cost paid by the algorithm $\algopqmm$(\cref{alg:algo-for-merge-merge}) with the cost paid by $\mopq$.

\paragraph{1) Cost paid by $\algopqmm$(\cref{alg:algo-for-merge-merge})}

 The algorithm $\algopqmm$(\cref{alg:algo-for-merge-merge}) pays the following set of disjoint costs

 \begin{itemize}

    \item From the red side, we cut multiple subsets of size $q$. Let us assume $W_{i,z,r}$ denotes the $z$th red subset of the cluster $\inpcl{i}$. $y_{i,z,r}$ takes the value $1$ if we cut $z$th red subset from $\inpcl{i}$. The cost of cutting $z$th subset from $\inpcl{i}$ is given as
     \begin{align}
         &\kappa_0(\inpcl{i}) = \rsurp{\inpcl{i}} (|\inpcl{i}| - \rsurp{\inpcl{i}}) \n \\
         &\kappa_z(\inpcl{i}) = q (|\inpcl{i}| - \rsurp{\inpcl{i}} - zp) \n \\
         &\costone{\inpcl{i}}^{\text{red}} = \sum_{z = 0}^t y_{i,z,r} \kappa_z(\inpcl{i}) \label{equn:cost-merge-merge-one} \\
         &\left( \text{t} = \frac{\red{\inpcl{i}} - \rsurp{\inpcl{i}}}{q} \right)
     \end{align}
     \item Each of these red subsets $W_{i,z,r} \in \inpcl{i}$ is merged with red subsets of some other clusters $W_{j,z',r} \in \inpcl{j}$. This we again indicate by $\costthree{\inpcl{i}}^{\text{red}}$.
    \item Each of these partitions $W_{i,z,r} \in \inpcl{i}$ is further partitioned into several parts $W_1, W_2, \ldots, W_t$ (say) and merged into different clusters. The pairs between these several parts $W_j$s is indicated by $\costfour{\inpcl{i}}^{\text{red}}$ 
    \item Suppose we merge in a cluster $\inpcl{j} \in \rmerge$. The merge cost paid is denoted by
    \begin{align}
        \costtwo{\inpcl{j}}^{\text{red}} = (q - \rsurp{\inpcl{j}}) |\inpcl{j}| \label{equn:cost-merge-merge-two}
    \end{align}
     
     \item Now, from the blue side, again, we cut multiple subsets of size $p$. Let us assume $W_{i,z,b}$ denotes the $z$th blue subset of the cluster $\inpcl{i}$. $y_{i,z,b}$ takes the value $1$ if we cut $z$th blue subset from $\inpcl{i}$. The cost of cutting $z$th subset from $\inpcl{i}$ is given as
     \begin{align}
         &\kappa_0(\inpcl{i}) = \bsurp{\inpcl{i}} (|\inpcl{i}| - \bsurp{\inpcl{i}}) \n \\
         &\kappa_z(\inpcl{i}) = p (|\inpcl{i}| - \bsurp{\inpcl{i}} - zp) \n \\
         &\costone{\inpcl{i}}^{\text{blue}} = \sum_{z = 0}^t y_{i,z} \kappa_z(\inpcl{i}) \label{equn:cost-merge-merge-three} \\
         &\left( \text{t} = \frac{\blue{\inpcl{i}} - \bsurp{\inpcl{i}}}{p} \right)
     \end{align}
     \item Each of these partitions $W_{i,z,b} \in \inpcl{i}$ is merged with partitions of some other clusters $W_{j,z'} \in \inpcl{j}$. This we again indicate by $\costthree{\inpcl{i}}^{\text{blue}}$ 
    \item Each of these partitions $W_{i,z,b} \in \inpcl{i}$ is further partitioned into several parts $W_1, W_2, \ldots, W_t$(say) and merged into different clusters. The pairs between these several parts $W_j$s is indicated by $\costfour{\inpcl{i}}^{\text{blue}}$ 
    \item Suppose we merge in a cluster $\inpcl{j} \in \bmerge$. The merge cost paid is denoted by
    \begin{align}
        \costtwo{\inpcl{j}}^{\text{blue}} = (p - \bsurp{\inpcl{j}}) |\inpcl{j}| \label{equn:cost-merge-merge-four}
    \end{align}
    \item Suppose, the blue deficit merged to the cluster $\inpcl{j}$ comes from the cluster $\inpcl{i}$ and the red deficit merged to the cluster $\inpcl{j}$ comes from a cluster $D_k$ such that $D_k \neq \inpcl{j}$. This cost is denoted by
    \begin{align}
        \costfive{\inpcl{j}}^{\text{blue}} = (p - \bsurp{\inpcl{j}}) (q - \rsurp{\inpcl{j}})
    \end{align}
 \end{itemize}

 One thing to note is that the algorithm $\algopqmm$ may pay the cost for the pairs in $W_{i,z,r} \times W_{i,z',b}$,  that is in $\dist(\m{D}, \m{Q})$ (recall $\m{Q}$ is the output of $\algopqmm$) a pair $(u,v) \in W_{i,z,r} \times W_{i,z',b}$ may be counted because the subsets $W_{i,z,r}$ and $W_{i,z',b}$ may get merged into different clusters $D_j$ and $D_k$ in $\m{D}$ where $j \neq k$. We are counting these pairs in $\costone{D_i}^\text{red}$.

 Now, let us define the total cost paid by the algorithm over all clusters $\inpcl{i} \in \text{Cut-Merge}$.

 \begin{itemize}
     \item $\costone{\m{Q}}^{\text{red}} = \sum_{\inpcl{i} \in \inpset} \costone{\inpcl{i}}^{\text{red}}$
     \item $\costtwo{\m{Q}}^{\text{red}} = \sum_{\inpcl{i} \in \inpset} \costtwo{\inpcl{i}}^{\text{red}}$
     \item $\costthree{\m{Q}}^{\text{red}} = \sum_{\inpcl{i} \in \inpset} \costthree{\inpcl{i}}^{\text{red}}$
     \item $\costfour{\m{Q}}^{\text{red}} = \sum_{\inpcl{i} \in \inpset} \costfour{\inpcl{i}}^{\text{red}}$
     \item $\costone{\m{Q}}^{\text{blue}} = \sum_{\inpcl{i} \in \inpset} \costone{\inpcl{i}}^{\text{blue}}$
     \item $\costone{\m{Q}}^{\text{blue}} = \sum_{\inpcl{i} \in \inpset} \costtwo{\inpcl{i}}^{\text{blue}}$
     \item $\costthree{\m{Q}}^{\text{blue}} = \sum_{\inpcl{i} \in \inpset} \costthree{\inpcl{i}}^{\text{blue}}$
     \item $\costfour{\m{Q}}^{\text{blue}} = \sum_{\inpcl{i} \in \inpset} \costfour{\inpcl{i}}^{\text{blue}}$
     \item $\costfive{\m{Q}}^{\text{blue}} = \sum_{\inpcl{i} \in \inpset} \costfive{\inpcl{i}}^{\text{blue}}$
 \end{itemize}

\paragraph{2) Cost paid by optimal clustering $\mopq$ in merge-merge case}

\begin{claim}
    $\costone{\m{Q}}^\text{red} + \costone{\m{Q}}^\text{red} + \costthree{\m{Q}}^\text{red} + \costfour{\m{Q}}^\text{red} \leq 3 \, \, \dist(\inpset, \mopq)$.
\end{claim}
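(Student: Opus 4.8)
The plan is to mirror, on the red coordinate, the analysis of the merge case of the $p{:}1$ balancing algorithm (\cref{lem:main-merge-case}), since the red-side operations performed inside $\algopqmm$ (\cref{alg:algo-for-merge-merge}) are exactly those of the subroutine $\algom$, run with $q$ in place of $p$, $\rsurp{\inpcl{i}}$ in place of $\surp{\inpcl{i}}$, and $\rdefi{\inpcl{i}}$ in place of $\defi{\inpcl{i}}$. The only structural difference is that the ambient cluster sizes $|\inpcl{i}|$ now also count blue points; this is harmless, because every lower bound we invoke for the optimal balanced clustering $\mopq$ is phrased in terms of the full size $|\inpcl{i}|$. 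Concretely, I would first observe that the four red-side costs $\costone{\m{Q}}^{\text{red}}$, $\costtwo{\m{Q}}^{\text{red}}$, $\costthree{\m{Q}}^{\text{red}}$, $\costfour{\m{Q}}^{\text{red}}$ coincide, term by term, with the quantities $\costone{\out}$, $\costtwo{\out}$, $\costthree{\out}$, $\costfour{\out}$ of \cref{subsubsec:merge-case} under that substitution, and then split the target inequality as $\costone{\m{Q}}^{\text{red}} + \costtwo{\m{Q}}^{\text{red}} \le \dist(\inpset,\mopq)$ together with $\costthree{\m{Q}}^{\text{red}} + \costfour{\m{Q}}^{\text{red}} \le 2\dist(\inpset,\mopq)$, which sum to the claimed factor $3$.

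For the first inequality I would reprove the $p{:}q$ analogue of \cref{clm:merge-case-structure}: if $\mopq$ splits an input cluster $\inpcl{i}$ into parts $X_{i,1},\dots,X_{i,t}$, then either every part contains fewer than $q$ red points, or exactly one part equals a whole cluster of $\mopq$, contains all of $\red{\inpcl{i}}$ except those lying in parts with fewer than $q$ red points, and the remaining red parts each have size $<q$. Granting this, the exchange argument of \cref{clm:merge-case-lp-bound} shows (i) that $\mopq$ must cut exactly $W_r/q$ red subsets, where $W_r = \sum_{\inpcl{j}\in\rmerge'}\rdefi{\inpcl{j}}$ (the analogues of \cref{clm:our-algorithm-cuts-w/p} and \cref{clm:mop-cuts-w/p}), and (ii) that since $\algopqmm$ greedily cuts the $W_r/q$ red subsets of least attached cost, its cut-plus-merge cost is at most that of $\mopq$, which in turn is at most $\sum_{\inpcl{i}} \optr_{\inpcl{i}} \le \dist(\inpset,\mopq)$ by the bound established in \cref{subsec:bal-define-cost-p-q}.

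For the second inequality I would adapt \cref{clm:cost-three-and-cost-four}: whenever a cut red subset $W_{i,z}$ of size at most $q$ is split and dispersed among red-merge clusters, the induced inter- and intra-cluster cost is bounded by the merge costs of the receiving clusters when $|W_{i,z}| \le q/2$, and by $2\,\optr_{\inpcl{m}}$ when $W_{i,z}$ fully fills the deficit of a single red-merge cluster $\inpcl{m}$ with $|W_{i,z}| > q/2$, using the lower bound $\optr_{\inpcl{m}} \ge (q-\rsurp{\inpcl{m}})(|\inpcl{m}|-\rsurp{\inpcl{m}}) + \tfrac{1}{2}\rsurp{\inpcl{m}}(q-\rsurp{\inpcl{m}})$ from~\eqref{equn:merge-red}. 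Summing over receiving clusters, each merge cluster is charged a bounded number of times, giving the factor $2$; adding the two parts yields the claim. Importantly, the extra interaction cost $\costfive{\m{Q}}^{\text{blue}} = \sum_{\inpcl{j}} (p-\bsurp{\inpcl{j}})(q-\rsurp{\inpcl{j}})$, incurred when a cluster's red and blue deficits are filled from different sources, does not enter here — it is accounted for separately on the blue side — so the red-only bookkeeping stays clean.

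The main obstacle I expect is re-deriving the structural characterization of $\mopq$'s red-splitting, i.e.\ the two-color analogue of \cref{clm:merge-case-structure} and its supporting subclaims (\cref{clm:bound-on-part-other-than-Xij} through \cref{clm:red-subseteq}). Each of those was proved by building an alternative balanced clustering $\m{M}$ with $\dist(\inpset,\m{M}) < \dist(\inpset,\mopq)$; in the $p{:}q$ setting one must additionally verify that every swap and merge used in those constructions preserves the blue balance. This holds essentially because those constructions only shuffle red points among clusters whose blue content is untouched, or move whole clusters of $\mopq$, but checking it in each case is the delicate part — everything else is a faithful transcription of the $p{:}1$ merge-case argument with $q$ in place of $p$.
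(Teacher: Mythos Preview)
Your proposal is correct and mirrors the paper's approach exactly: the paper simply defers to \cref{clm:cut-merge-blue} (the blue-side analogue in the cut-merge case), whose proof invokes \cref{clm:merge-case-lp-bound} and \cref{clm:cost-three-and-cost-four} from the $p{:}1$ merge analysis, with the same $1+2$ split of the factor~$3$. You are in fact more careful than the paper, which does not explicitly revisit the structural subclaims behind \cref{clm:merge-case-structure} for the two-color optimum~$\mopq$; your flag that the exchange constructions must preserve blue balance is exactly the point the paper glosses over.
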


\begin{claim}
    $\costone{\m{Q}}^\text{blue} + \costone{\m{Q}}^\text{blue} + \costthree{\m{Q}}^\text{blue} + \costfour{\m{Q}}^\text{blue} \leq 3 \, \, \dist(\inpset, \mopq)$.
\end{claim}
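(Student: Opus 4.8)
The plan is to reduce this claim entirely to the $p:1$ merge case. In the merge--merge case the algorithm $\algopqmm$ handles the blue coordinate exactly as $\algog$ (through its subroutine $\algom$) handles the blue points in the $p:1$ merge case: each input cluster $\inpcl{i}$ is treated as a cluster of ``size'' $|\inpcl{i}|$ with ``surplus'' $\bsurp{\inpcl{i}}$ and ``deficit'' $\bdefi{\inpcl{i}}$; the cut costs $\kappa_{0}(\inpcl{i})=\bsurp{\inpcl{i}}(|\inpcl{i}|-\bsurp{\inpcl{i}})$ and $\kappa_{z}(\inpcl{i})=p(|\inpcl{i}|-(zp+\bsurp{\inpcl{i}}))$ and the merge cost $\costtwo{\inpcl{i}}^{\text{blue}}=\bdefi{\inpcl{i}}|\inpcl{i}|$ are syntactically the $p:1$ quantities with $s_i\mapsto\bsurp{\inpcl{i}}$ and $d_i\mapsto\bdefi{\inpcl{i}}$; and the only genuinely two-colour phenomenon --- blue points split from a cluster from which red subsets are also cut --- is, by construction, charged to $\costone{\cdot}^{\text{red}}$ and not to any of $\costone{\m{Q}}^{\text{blue}},\costtwo{\m{Q}}^{\text{blue}},\costthree{\m{Q}}^{\text{blue}},\costfour{\m{Q}}^{\text{blue}}$. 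Moreover, being in the merge--merge case means $\sum_{\inpcl{i}\in\bcut}\bsurp{\inpcl{i}}\le\sum_{\inpcl{j}\in\bmerge}\bdefi{\inpcl{j}}$, which is precisely the hypothesis of \cref{lem:main-merge-case}. Accordingly I will re-run the two halves of that lemma's proof, replacing $\opt_{\inpcl{i}}$ throughout by $\optb_{\inpcl{i}}$ and $\mop$ by $\mopq$. (I read the stated inequality as $\costone{\m{Q}}^{\text{blue}}+\costtwo{\m{Q}}^{\text{blue}}+\costthree{\m{Q}}^{\text{blue}}+\costfour{\m{Q}}^{\text{blue}}\le 3\,\dist(\inpset,\mopq)$.)

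For the first half I would transfer \cref{clm:merge-case-lp-bound}. Since $\mopq$ is a $\bal$, every cluster of $\mopq$ has blue count divisible by $p$, so the blue projection of $\mopq$ is a feasible solution of the (blue) balancing integer program appearing in that proof, and the structural statement of \cref{clm:merge-case-structure}, read in the blue coordinate (per input cluster, at most one part of $\mopq$ has blue-size $\ge p$ and all others blue-size $<p$), holds by the same exchange arguments as in \cref{subsubsec:merge-case} confined to blue points. Combined with the lower bounds \eqref{equn:cut-blue} and \eqref{equn:merge-blue} on $\optb_{\inpcl{i}}$ and the inequality $\sum_{\inpcl{i}\in\inpset}\optb_{\inpcl{i}}\le\dist(\inpset,\mopq)$ recorded in \cref{subsec:bal-define-cost-p-q}, this gives $\costone{\m{Q}}^{\text{blue}}+\costtwo{\m{Q}}^{\text{blue}}\le\sum_{\inpcl{i}\in\inpset}\optb_{\inpcl{i}}\le\dist(\inpset,\mopq)$.

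For the second half I would transfer \cref{clm:cost-three-and-cost-four}. The size-$p$ blue subsets $W_{i,z,b}$ are split and re-merged into $\bmergea$ exactly as in $\algom$, so $\costthree{\m{Q}}^{\text{blue}}$ and $\costfour{\m{Q}}^{\text{blue}}$ coincide with the inter- and intra-cluster costs analysed there; since we are in the merge case, no extra size-$p$ blue clusters are formed, so only the $\algom$ part of that argument is needed. Using $\optb_{\inpcl{i}}\ge(p-\bsurp{\inpcl{i}})(|\inpcl{i}|-\bsurp{\inpcl{i}})+\frac{1}{2}\bsurp{\inpcl{i}}(p-\bsurp{\inpcl{i}})\ge\frac{1}{2}\bdefi{\inpcl{i}}(p-\bdefi{\inpcl{i}})$ from \eqref{equn:merge-blue} for each blue-merge cluster, in place of the $p:1$ bound on $\opt_{\inpcl{i}}$, the same case split (a blue deficit is filled by one, or by two, blue subsets) gives $\costthree{\m{Q}}^{\text{blue}}+\costfour{\m{Q}}^{\text{blue}}\le 2\sum_{\inpcl{i}}\optb_{\inpcl{i}}\le 2\,\dist(\inpset,\mopq)$. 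Adding the two bounds yields $3\,\dist(\inpset,\mopq)$, as claimed. The main obstacle is the structural step for $\mopq$: unlike the $p:1$ setting, $\mopq$ need not keep $\red{\inpcl{i}}$ inside one part, so one must check that the exchange arguments of \cref{subsubsec:merge-case}, carried out on blue points, can always be performed without changing the red count of any cluster of $\mopq$ --- so every intermediate clustering stays a valid $\bal$ clustering --- after which the substitution $s_i\mapsto\bsurp{\inpcl{i}}$, $\opt\mapsto\optb$ is mechanical.
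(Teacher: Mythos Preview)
Your proposal is correct and follows essentially the same approach as the paper. The paper's own proof is simply ``The proof \ldots\ is the same as the proof given in \cref{clm:cut-merge-blue}. So, we are skipping the proof,'' and \cref{clm:cut-merge-blue} in turn does exactly what you describe: it compares the blue costs to the $p{:}1$ merge-case quantities, invokes \cref{clm:merge-case-lp-bound} with $\optb_{\inpcl{i}}$ (via \eqref{equn:merge-blue}) to bound $\costone{\m{Q}}^{\text{blue}}+\costtwo{\m{Q}}^{\text{blue}}\le\dist(\inpset,\mopq)$, and invokes \cref{clm:cost-three-and-cost-four} to bound $\costthree{\m{Q}}^{\text{blue}}+\costfour{\m{Q}}^{\text{blue}}\le 2\,\dist(\inpset,\mopq)$. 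Your correction of the typo ($\costtwo$ for the second $\costone$) and your flagging of the structural-step subtlety are both apt; the paper glosses over that subtlety as well.
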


The proof of the previous two claims is the same as the proof given in \cref{clm:cut-merge-blue}. So, we are skipping the proof of the above claims.

We also need to prove the following so that we can proceed to prove the main lemma of this section \cref{lem:main-merge-merge-case}.

\begin{claim}
    $\costfive{\m{Q}}^\text{blue} \leq \dist(\inpset, \mopq)$.
\end{claim}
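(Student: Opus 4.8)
The plan is to reuse, essentially verbatim, the argument of~\cref{clm:new-cost} from the cut-merge analysis, since $\costfive{\m{Q}}^{\text{blue}}$ has the same shape in the merge-merge case. Recall that $\costfive{\m{Q}}^{\text{blue}} = \sum_{\inpcl{j}} (p-\bsurp{\inpcl{j}})(q-\rsurp{\inpcl{j}})$, the sum ranging over those clusters $\inpcl{j}$ whose red deficit and blue deficit are, in the output $\m{Q}$ of $\algopqmm$, filled from two distinct source clusters; this is the only configuration in which the algorithm incurs an inter-cluster cost between the red points merged into $\inpcl{j}$ and the blue points merged into $\inpcl{j}$.

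First I would observe that every such $\inpcl{j}$ lies in $\rmerge\cap\bmerge$: the red points are merged only into clusters of $\rmerge'$ and the blue points only into clusters of $\bmerge'$, so a cluster receiving both must belong to $\rmerge'\cap\bmerge'\subseteq\rmerge\cap\bmerge$. Consequently $\rsurp{\inpcl{j}} > q/2$ and $\bsurp{\inpcl{j}} > p/2$. This yields $q-\rsurp{\inpcl{j}} < q/2 < \rsurp{\inpcl{j}}$, and therefore
\[
(p-\bsurp{\inpcl{j}})(q-\rsurp{\inpcl{j}}) \;\le\; (p-\bsurp{\inpcl{j}})\,\rsurp{\inpcl{j}} \;\le\; (p-\bsurp{\inpcl{j}})\bigl(|\inpcl{j}| - \bsurp{\inpcl{j}}\bigr),
\]
where the second inequality uses $\rsurp{\inpcl{j}} \le |\red{\inpcl{j}}| \le |\inpcl{j}| - \bsurp{\inpcl{j}}$. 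Since $\bsurp{\inpcl{j}} > p/2$, inequality~\eqref{equn:merge-blue} gives $(p-\bsurp{\inpcl{j}})(|\inpcl{j}|-\bsurp{\inpcl{j}}) < \optb_{\inpcl{j}}$.

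Summing over the (finitely many) clusters that are charged, and then enlarging the index set to all of $\inpset$ — legitimate because every $\optb_{\inpcl{i}}$ is nonnegative — gives $\costfive{\m{Q}}^{\text{blue}} \le \sum_{\inpcl{i}\in\inpset}\optb_{\inpcl{i}} \le \dist(\inpset,\mopq)$, the last inequality being the bound on $\sum_{\inpcl{i}}\optb_{\inpcl{i}}$ derived in~\cref{subsec:bal-define-cost-p-q}. I do not anticipate any genuine obstacle here; the only points that require care are (i) confirming that $\costfive$ charges exactly clusters of $\rmerge\cap\bmerge$, so that both surplus lower bounds $\rsurp{\inpcl{j}} > q/2$ and $\bsurp{\inpcl{j}} > p/2$ are available, and (ii) noting that a symmetric chain through $\optr_{\inpcl{j}}$ (using $p-\bsurp{\inpcl{j}} < p/2 < \bsurp{\inpcl{j}}$ together with~\eqref{equn:merge-red}) would work equally well, so routing the bound through $\optb$ is merely a convenient choice that already yields the factor $1$.
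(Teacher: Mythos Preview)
Your proposal is correct and follows essentially the same chain of inequalities as the paper's argument (which simply refers back to~\cref{clm:new-cost}): bound $q-\rsurp{\inpcl{j}}$ by $\rsurp{\inpcl{j}}$ using $\rsurp{\inpcl{j}}>q/2$, then by $|\inpcl{j}|-\bsurp{\inpcl{j}}$, and finally invoke~\eqref{equn:merge-blue} to reach $\optb_{\inpcl{j}}$ and sum. Your added justification that the charged clusters lie in $\rmerge\cap\bmerge$, and your remark on the symmetric route through $\optr_{\inpcl{j}}$, are both sound and make the argument slightly more explicit than the paper's.
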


The proof of the above claim is similar to \cref{clm:new-cost}.

Now we are ready to prove \cref{lem:main-merge-merge-case}.

\begin{proof}[Proof of \cref{lem:main-merge-merge-case}]
    \begin{align}
        \dist(\inpset, \outpq) &= \costone{\m{Q}}^\text{red} + \costone{\m{Q}}^\text{red} + \costthree{\m{Q}}^\text{red} + \costfour{\m{Q}}^\text{red} \n \\
        &+ \costone{\m{Q}}^\text{blue} + \costone{\m{Q}}^\text{blue} + \costthree{\m{Q}}^\text{blue} + \costfour{\m{Q}}^\text{blue} + \costfive{\m{Q}}^\text{blue} \n \\
        &\leq 3 \, \, \dist(\inpset, \mopq) + 3 \, \, \dist(\inpset, \mopq) + \dist(\inpset,\mopq) \n \\
        &= \mmf \, \, \dist(\inpset, \mopq). \n
    \end{align}
\end{proof}

\subsection{Overall Approximation}\label{subsec:bal-overall-approx}
We provide the proof of~\cref{thm:main-multiple-of-pq}.
\begin{proof}[Proof of~\cref{thm:main-multiple-of-pq}]
    Let $ \outpq $ be the output of the algorithm $ \algopqgen $ (\cref{alg:algo-for-p/q-general}). We have four following cases. 

    If $ \outpq $ is the output in the cut-cut case (\cref{line:algopqgen-refer-algopqc} of~\cref{alg:algo-for-p/q-general}), then by~\cref{lem:main-lem-cut-cut}, we have $ \dist(\inpset, \outpq) \leq 6 \, \, \dist(\inpset, \mopq) $.
    
    If $ \outpq $ is the output in the cut-merge case (\cref{line:algopqgen-refer-algopqcm} of~\cref{alg:algo-for-p/q-general}), then by~\cref{lem:main-cut-merge-case}, we have $ \dist(\inpset,\outpq) \leq \mcf \dist(\inpset, \mopq) $.

    If $ \outpq $ is the output in the cut-merge case (\cref{line:algopqgen-refer-algopqmc} of~\cref{alg:algo-for-p/q-general}), then by~\cref{lem:main-merge-cut-case}, we have $ \dist(\inpset,\outpq) \leq \mcf \dist(\inpset, \mopq) $.

    If $ \outpq $ is the output in the merge-merge case (\cref{line:algopqgen-refer-algopqm} of~\cref{alg:algo-for-p/q-general}), then by~\cref{lem:main-merge-merge-case}, we have $ \dist(\inpset,\outpq) \leq \mmf \dist(\inpset, \mopq) $.

    Therefore, in all cases, we have
    \begin{align}
        \dist(\inpset, \outpq) \leq 7.5 \dist(\inpset, \mopq). \nonumber
    \end{align}
\end{proof}

\section{Making the Balanced Clustering Fair}\label{sec:bal-to-fair}
Given a clustering $ \out = \{ \outcl{1}, \outcl{2}, \dots, \outcl{\gamma}\} $ such that in each $ \outcl{i} $, the number of blue points $ |\blue{\outcl{i}}| $ is a multiple $ p $, and the number of red points $ |\red{\outcl{i}}| $ is a multiple of $ q $, our goal is to find a closest fair clustering $ \fairmop $ to $ \out $. Recall that a fair clustering is a clustering such that the number of blue points in any cluster is $ \ratio=\dfrac{p}{q} $ times the number of red points. Our main result in this section is the following theorem.

\makepclusterfair*

Throughout this section, we assume $ p>q $, and hence, $ \ratio > 1 $. The case $ q> p $ can be handled similarly by switching the role of blue and red points.

In~\cref{sec:algo.make.pcluster.fair}, we introduce~\cref{alg:algo-mf}, which takes $ \out $ as its input, and outputs a fair cluster $ \fairset $. We analyze in~\cref{sec:analyze.make.pcluster.fair} that $ \dist(\out, \fairset) \leq 3\dist(\out, \fairmop) $.

\subsection{Algorithms}\label{sec:algo.make.pcluster.fair}
The high-level of~\cref{alg:algo-mf} is the following. The $ \mopqdef \ \out $ is partitioned into two sets. Each cluster $ \outcl{i} $ is chosen to be in one of the two sets based on whether $ |\blue{\outcl{i}}| < \ratio |\red{\outcl{i}}| $ or $ |\blue{\outcl{i}}| > \ratio |\red{\outcl{i}}|  $. If it is the former case, $ \outcl{i} $ is placed into the first set $ \tr $, while if it is the latter case, $ \outcl{i} $ is placed into the second set $ \tb $. After that, a set of $ |\red{\outcl{i}}| - \dfrac{|\blue{\outcl{i}}|}{\ratio} $ red points are cut from each cluster $ \outcl{i} \in \tr $. These points are merged to clusters in the set $ \tb $. Each cluster $ \outcl{j} \in \tb $ are merged with exactly $ \dfrac{|\blue{\outcl{j}|}}{\ratio} - |\red{\outcl{j}}| $ red points.

\textbf{Runtime Analysis of $\algmf$}: First note, each point is used at most twice during the execution of the algorithm $\algmf$ -- once when we cut the red point $v \in V$ from the cluster in the set $\tr$ and again when we merge that red point $v \in V$ to cluster that belongs to the set $\tb$. Hence, the total runtime of the algorithm $\algmf$ would be $O(|V|)$.

\begin{algorithm2e}[htbp]
\DontPrintSemicolon
\caption{$\algmf(\out)$}\label{alg:algo-mf}
\KwData {a $ \mopqdef \ \out $.}
\KwResult{a fair clustering $ \fairset $.} 
    $ \tb, \tr, \fairset \gets \emptyset$ \;
    \For{$ \outcl{i} \in \out $}{
        \If{$ |\blue{\outcl{i}}| < \ratio|\red{\outcl{i}}| $}{
            add $ \outcl{i} $ to the set $ \tr $ \;
    } \ElseIf {$ |\blue{\outcl{i}}| > \ratio|\red{\outcl{i}}| $}{
            add $ \outcl{i} $ to the set $ \tb $ \;
        }
    }

        \For{$ \outcl{i} \in \tr $}{
            \For{$ \outcl{j} \in \tb $}{
                Let $ k = \min(|\red{\outcl{i}}|-\dfrac{|\blue{\outcl{i}}|}{\ratio}, \dfrac{|\blue{\outcl{j}}|}{\ratio} - |\red{\outcl{j}}|) $ \;
                Move $ k $ red points from $ \outcl{i} $ to $ \outcl{j} $ \;
                \If{$ k = |\red{\outcl{i}}|-\dfrac{|\blue{\outcl{i}}|}{\ratio} $}{
                    $ \tr = \tr\setminus \{\outcl{i}\} $\;
                    $ \fairset = \fairset \cup \{\outcl{i}\} $ \;
                }
                \If{$ k = \dfrac{|\blue{\outcl{j}}|}{\ratio} - |\red{\outcl{j}}| $}{
                    $ \tb = \tb\setminus \{\outcl{j}\} $ \;
                    $ \fairset = \fairset \cup \{\outcl{j}\} $ \;
                }
            }
        }
    \Return{$ \fairset $}
\end{algorithm2e}

\subsection{Analysis}\label{sec:analyze.make.pcluster.fair}
In this section, we aim to prove~\cref{thm:make.p.cluster.fair}.

Consider a cluster $ \outcl{i}\in \out $. Suppose that in $ \fairmop $, $ \outcl{i} $ is spread through clusters $ X_{i,1} \cup B_{i,1} \cup R_{i,1}, X_{i,2} \cup B_{i,2} \cup R_{i,2}, \dots, X_{i,t} \cup B_{i,t} \cup R_{i,t} $, such that $ \outcl{i} = \cup_{j=1}^{t}X_{i,j} $, and $ B_{i,j}, R_{i,j} $ are the set of blue points and red points from some clusters $ \outcl{k} $ ($ k \neq i $), respectively. Then the distance between the two clusterings $ \fairmop $ and $ \out $ is
\begin{align}
    \dist(\out, \fairmop) &= \sum_{\outcl{i}\in \out} \left(\sum_{j<k} |X_{i,j}||X_{i,k}| + \sum_{j=1}^{t}\dfrac{|X_{i,j}|(|B_{i,j}|+ |R_{i,j}|)}{2} \right) \nonumber \\
                      &= \sum_{\outcl{i}\in \out} \opt_{\outcl{i}}, \nonumber
\end{align}
where $ \opt_{\outcl{i}} =  \sum_{j<k} |X_{i,j}||X_{i,k}| + \sum_{j=1}^{t}\dfrac{|X_{i,j}|(|B_{i,j}|+ |R_{i,j}|)}{2} $.

Prior to proving the correctness of~\cref{thm:make.p.cluster.fair}, we demonstrate some useful lower bounds for $ \opt_{\outcl{i}} $. In particular, let $ \dmp = \frac{|\blue{\outcl{i}}|}{\ratio} - |\red{\outcl{i}}| $, $ \smp = |\red{\outcl{i}}| - \frac{|\blue{\outcl{i}}|}{\ratio} $, then
\begin{itemize}
    \item If $ \outcl{i}\in \tb $, then $ \opt_{\outcl{i}} \geq \dfrac{1}{2}\dmp |\outcl{i}| $;
    \item If $ \outcl{i}\in \tr $, then $ \opt_{\outcl{i}} \geq \dfrac{1}{2}\smp(|\outcl{i}|-s) $;
    \item If $ \outcl{i}\in \tr $, then $ \opt_{\outcl{i}} \geq \dfrac{\smp_{i}^{2}}{2} $.
\end{itemize}
To show this, we leverage the following lemma.

\begin{lemma}\label{lem:make.estimated.extra.integral}
    Given positive numbers $ x_{1}, x_{2}, \dots, x_{t} $ and a positive integer $ S $ such that $ S \leq \sum_{i=1}^{t}x_{i} $. Then for any $ a_{1}, a_{2}, \dots, a_{n} $ such that $ a_{i} \geq 0 $, there exists non-negative integers $ y_{1}, y_{2}, \dots, y_{t} $ satisfying:
    \begin{itemize}
        \item $ 0\leq y_{i} \leq \ceilenv{x_{i}}$, for $ i=1,2,\dots,t $.
        \item $ \sum_{i=1}^{t}y_{i}= S $.
        \item $ \sum_{i=1}^{t}a_{i}y_{i} \leq \sum_{i=1}^{t}a_{i}x_{i} $.
    \end{itemize}
\end{lemma}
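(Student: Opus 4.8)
\textbf{Proof proposal for Lemma~\ref{lem:make.estimated.extra.integral}.}

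The plan is to prove this by a rounding argument: starting from the real vector $(x_1,\dots,x_t)$ (or rather from a suitably truncated/scaled version of it), I will produce integers $y_i$ without ever increasing the weighted sum $\sum_i a_i x_i$. First I would reduce to a cleaner situation. Since $S \le \sum_i x_i$ is an integer, I can first ``shave'' each $x_i$ down to a value $x_i' \le x_i$ with $x_i' \le \ceilenv{x_i}$ (trivially true) so that $\sum_i x_i' = S$ exactly; concretely, process the coordinates in order of \emph{decreasing} $a_i$ and greedily reduce the $x_i$ with the largest $a_i$ first until the total mass drops to $S$. This only decreases $\sum_i a_i x_i$ (we are removing mass from the most expensive coordinates), it keeps each coordinate in $[0, x_i] \subseteq [0,\ceilenv{x_i}]$, and it makes the total equal to $S$. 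So without loss of generality I may assume $\sum_i x_i = S$ and I just need to round the $x_i$ to integers $y_i$ with $0 \le y_i \le \ceilenv{x_i}$, $\sum_i y_i = S$, and $\sum_i a_i y_i \le \sum_i a_i x_i$.

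For the rounding step, the standard trick is to repeatedly find a ``fractional cycle'' and push mass around it at no cost. More simply: while some $x_i$ is non-integral, there must be at least two non-integral coordinates $x_j, x_k$ (because $\sum_i x_i = S$ is an integer, the fractional parts must sum to an integer, so they cannot consist of a single nonzero term). Take two such coordinates and, among them, identify which has the larger $a$-value — say $a_j \ge a_k$. Move a small amount $\delta > 0$ of mass from coordinate $j$ to coordinate $k$ (decreasing $x_j$, increasing $x_k$), choosing $\delta$ to be the largest value that either drives $x_j$ down to $\lfloor x_j \rfloor$ or drives $x_k$ up to $\lceil x_k \rceil$. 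This keeps $\sum_i x_i = S$, keeps every coordinate within its allowed box (in particular $x_k$ does not exceed $\lceil x_k \rceil \le \ceilenv{x_k}$, and $x_k$ started below $\ceilenv{x_k}$ since it was non-integral... one must be slightly careful here if $x_k$ was already an integer, but we only chose \emph{non-integral} coordinates, so $x_k < \lceil x_k \rceil = \ceilenv{x_k}$), and changes the weighted sum by $\delta(a_k - a_j) \le 0$. Each such step strictly reduces the number of non-integral coordinates by at least one, so after at most $t$ steps all coordinates are integers; calling the result $(y_1,\dots,y_t)$ gives the claim.

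I expect the main obstacle to be the bookkeeping on the box constraint $y_i \le \ceilenv{x_i}$ rather than anything conceptually deep. The subtlety is that in the rounding step a coordinate $x_k$ gets \emph{increased}, and one must ensure it never passes $\ceilenv{x_k}$; this is why it is essential to always push mass \emph{into} a coordinate that is currently non-integral (so it is strictly below its own ceiling) and to cap the move at $\lceil x_k \rceil$. A second point requiring a sentence of care is the very first reduction step: after truncating to total mass exactly $S$, some coordinate may have become $0$, which is fine, and no coordinate ever increased, so the box constraints and the cost inequality are both preserved. Once these two invariants (total mass $=S$, each coordinate in $[0,\ceilenv{x_i}]$, cost non-increasing) are checked to be maintained at every step, the lemma follows, and it will then be applied with $x_{i} = |X_{i,j}|$-type quantities and $a_i$ the relevant cluster sizes to derive the three lower bounds on $\opt_{\outcl{i}}$ stated after it.
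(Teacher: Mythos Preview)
Your proof is correct and takes a genuinely different route from the paper. The paper's argument is purely integer-valued and algorithmic: it sorts the indices so that $a_1\le\cdots\le a_t$, initializes $y_i=\lfloor x_i\rfloor$, and then either decrements the $y_i$ (scanning $i=1,2,\dots$) until $\sum_i y_i=S$, or increments them (again scanning $i=1,2,\dots$, capped at $\lceil x_i\rceil$) until $\sum_i y_i=S$. It then proves a threshold property---there is an index $m$ with $y_i\ge x_i$ for $i<m$ and $y_i\le x_i$ for $i\ge m$---and combines this with the monotonicity of the $a_i$ to deduce $\sum_i a_i y_i\le\sum_i a_i x_i$. Your approach instead first continuously shaves mass from the high-$a$ coordinates down to total $S$, and then rounds via the standard pairwise-exchange argument, maintaining throughout the invariant that the weighted sum never increases. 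Your argument is conceptually cleaner (it is essentially the classical fractional-rounding trick), and the box constraint $y_i\le\lceil x_i\rceil$ falls out automatically because you only ever push mass into a currently non-integral coordinate and stop at its ceiling; since that coordinate was at most $x_i$ after the shaving step, its ceiling is at most $\lceil x_i\rceil$. The paper's version has the minor advantage of being an explicit integer algorithm rather than an existence argument, but for the lemma's sole use---feeding into the lower bounds on $\opt_{\outcl{i}}$---your proof is entirely adequate.
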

\begin{proof}
    Without loss of generality, assume that $ a_{1}\leq a_{2}\leq \dots \leq a_{n} $.

    We will show that the output of the following algorithm is the desired $ y_i $'s.

    \begin{algorithm2e}[htbp]
        \DontPrintSemicolon
        \caption{Choose $ (y_{1}, y_{2}, \dots, y_{t}) $.}
        \label{alg.make.estimated.extra.integral}
        \SetKwInOut{KwIn}{Input}
        \SetKwInOut{KwOut}{Output}
        $ y_{j} \gets \floor{x_{j}}, \forall j $

        \While{$ \sum_{i=1}^{t}y_{i} > S $}{
            \For{$ i=1 \to t $}{
                \uIf{$ \sum_{i=1}^{t} y_{i} >S $ and $ y_{i}>0 $}{
                    $ y_{i} \gets y_{i} - 1 $ \label{line: decreases}
                }
            }
        }

        \While{$ \sum_{i=1}^{t}y_{i} < S $}{
            \For{$ i=1 \to t $}{
                \uIf{$ \sum_{i=1}^{t} y_{i} <S $ and $ y_{i}< \ceil{x_{i}} $}{
                    $ y_{i} \gets y_{i} + 1 $ \label{line: increases}
                }
            }
        }

        \KwRet{$ (y_{1}, y_{2}, \dots, y_{t}) $}
    \end{algorithm2e}

    Let $ y_{i} $'s be the output of~\cref{alg.make.estimated.extra.integral}.
    %
    
    $ \bullet $ $ 0\leq y_{i} \leq \ceilenv{x_{i}} $, for $ i=1,2,\dots, t $.

    Observe that $ y_{i} $ is decreased by $ 1 $ (\cref{line: decreases}) only if $ y_{i}>0 $, and is increased by $ 1 $ (\cref{line: increases}) only if $ y_{i}<\ceil{x_{i}} $. Hence, each $ y_{i} $ output by this algorithm satisfies $ 0\leq y_{i} \leq \ceil{x_{i}} $.

    $ \bullet $ $ \sum_{i=1}^{t}y_{i} = S $.

    To see this, we consider $ (y_{1},y_{2}, \dots, y_{t}) $ after the execution of each while loop. Initially, $ y_{i} $ is set to $ \floor{x_{i}} $. If $ \sum_{i=1}^{t}y_{i} > S $, the first while loop gradually decreases $ \sum_{i=1}^{t}y_{i} $ by $ 1 $ by decreasing a nonzero $ y_{i} $ by $ 1 $, until $ \sum_{i=1}^{t}y_{i} = y $. Since $ \min\sum_{i=1}^{t}y_{i}=0 $ and $ S\geq 0 $, there must be an iteration where $ \sum_{i=1}^{t}y_{i} = S $. After this iteration, the first while loop is done. The second while loop will not be executed, and the algorithm will return $ (y_{1}, y_{2}, \dots, y_{t}) $ satisfying $ \sum_{i=1}^{t}y_{i} = S $. 

    On the other hand, if $ \sum_{i=1}^{t}\floor{x_{i}}<y $, the first while loop is not executed. The second while loop gradually increases $ \sum_{i=1}^{t}y_{i} $ by $ 1 $ by increasing a $ y_{i} < \ceil{x_{i}} $ by $ 1 $, until $ \sum_{i=1}^{t}y_{i} $. Since $ S\leq \sum_{i=1}^{t}x_{i}\leq \sum_{i=1}^{t}\ceil{x_{i}} $, there always exists some $ y_{i} (\leq \ceil{x_{i}})$ that can be increased whenever $ \sum_{i=1}^{t}y_{t} < S $. Hence, there must be an iteration where $ \sum_{i=1}^{t}y_{i}=S $. After this iteration, the second while loop is done, and the algorithm outputs $ (y_{1}, y_{2}, \dots, y_{t}) $ such that $ \sum_{i=1}^{t}y_{i} = S $. 
    
    
    $ \bullet $ $ \sum_{i=1}^{t}a_{i}y_{i} \leq \sum_{i=1}^{t}a_{i}x_{i} $.

    We first claim that there exists $ m\leq t $ such that for all $ i< m $, we have $ y_{i}\geq x_{i} $, and for all $ i\geq m $, we have $ y_{i}\leq x_{i} $.

    If $ y_{i} \leq x_{i} $  for all $ j $, then $ m = 1 $ satisfies the claim. If there exists $ y_{i} > x_{i}$, since $ \sum_{i=1}^{t}y_{i} < \sum_{i=1}^{t}x_{i} $, there also exists $ y_{i} < x_{i} $. Let $ m $ be the minimum index such that $ y_{m}<x_{m} $, then $ y_{i} \geq x_{i}$ for all $ j<m $. Now we show that $ y_{i}\leq x_{i} $ for all $ j\geq m $.

    To see this, note that in~\cref{alg.make.estimated.extra.integral}, if the second while loop is not executed, then the algorithm only decreases $ y_{i} $. Thus $ y_{i}\leq \floor{x_{i}} \leq x_{i} $ for all $ j $. The second while loop is executed only if the first while loop is not. In that case, the second while loop starts with $ y_{i} = \floor{x_{i}} $. If $ y_{i} $ is increased (\cref{line: increases}), then $ y_{i} = \ceil{x_{i}} > x_{i} $. Assume to the contrary that there exists $ j>m $ such that $ y_{j} > x_{j} $. This implies that before $ y_{j} $ is increased, $ \sum_{i=1}^{t}y_{i} < S $. This means that in the iteration of the for loop where $ i = m $, we have $ \sum_{i=1}^{t}y_{i}< S $ and by definition of $ m $, $ y_{m}<x_{m} $. It follows that $ y_{m} $ is increased and becomes greater than $ x_{m} $, which is a contradiction since in the output, $ y_{m}<x_{m} $.
    
    This concludes that $ y_{i}\leq x_{i} $, for all $ i\geq m $.
    
    Finally, we show that $ \sum_{i=1}^{t}a_{i}y_{i} \leq \sum_{i=1}^{t}a_{i}x_{i} $. We have
    \begin{align}
        \sum_{i=1}^{t}x_{i} &\geq \sum_{i=1}^{t}y_{i} \nonumber \\
        \Leftrightarrow \sum_{i=m}^{t}(x_{i} - y_{i}) &\geq \sum_{i=1}^{m-1} (y_{i} - x_{i}) \geq 0. \nonumber
    \end{align}
    Moreover, note that $ a_{1}\leq a_{2}\leq \dots \leq a_{t} $, therefore
    \begin{align}
        \sum_{i=m}^{t}(x_{i}-y_{j})a_{i} &\geq \sum_{i=m}^{t}(x_{i}-y_{i})a_{m} \nonumber \\
                                         &\geq \sum_{i=1}^{m-1}(y_{i}-x_{i})a_{m} \nonumber \\
                                         &= \sum_{i=1}^{m-1}(y_{i}-x_{i})a_{i},\nonumber
    \end{align}
    which is equivalent to 
    \begin{align}
        \sum_{i=1}^{t}a_{i}y_{i} \leq \sum_{i=1}^{t}a_{i}x_{i}. \nonumber
    \end{align}
    The proof is complete.
\end{proof}

\begin{lemma}\label{lem:cost.of.opt.when.blue>.p.red}
    For each $ \outcl{i}\in \tb $, let $ \dmp = \dfrac{|\blue{\outcl{i}}|}{\ratio} - |\red{\outcl{i}}| $. Then $ \opt_{\outcl{i}} \geq \dfrac{1}{2}\dmp |\outcl{i}| $.
\end{lemma}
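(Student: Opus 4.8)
We want to show that for a cluster $\outcl{i} \in \tb$ (so $|\blue{\outcl{i}}| > \ratio |\red{\outcl{i}}|$, i.e.\ $\dmp = |\blue{\outcl{i}}|/\ratio - |\red{\outcl{i}}| > 0$), the optimal fair clustering $\fairmop$ must pay at least $\frac12 \dmp |\outcl{i}|$ on the points of $\outcl{i}$. Recall that in $\fairmop$ the cluster $\outcl{i}$ is spread across clusters $X_{i,j} \cup B_{i,j} \cup R_{i,j}$ for $j = 1, \dots, t$, with $\outcl{i} = \bigcup_j X_{i,j}$, and
\[
\opt_{\outcl{i}} = \sum_{j<k} |X_{i,j}||X_{i,k}| + \sum_{j=1}^t \frac{|X_{i,j}|(|B_{i,j}| + |R_{i,j}|)}{2}.
\]
The key structural fact is that each $X_{i,j} \cup B_{i,j} \cup R_{i,j}$ is a \emph{fair} cluster, so $|\blue{X_{i,j}}| + |B_{i,j}| = \ratio(|\red{X_{i,j}}| + |R_{i,j}|)$, which rearranges to $|B_{i,j}| - \ratio |R_{i,j}| = \ratio|\red{X_{i,j}}| - |\blue{X_{i,j}}|$. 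Since $\ratio > 1$ and $|B_{i,j}|, |R_{i,j}| \ge 0$, this gives a lower bound on $|B_{i,j}| + |R_{i,j}|$ in terms of the "blue excess" of the piece $X_{i,j}$. The first thing I would do is set $e_{i,j} := \ratio|\red{X_{i,j}}| - |\blue{X_{i,j}}|$ (which may be negative for some pieces but sums to $\sum_j e_{i,j} = \ratio|\red{\outcl{i}}| - |\blue{\outcl{i}}| = -\ratio\, \dmp < 0$); then whenever $e_{i,j} < 0$ we need $|B_{i,j}| \ge -e_{i,j} = |\blue{X_{i,j}}| - \ratio|\red{X_{i,j}}|$ (here one should be a touch careful: $|B_{i,j}|$ need not exactly equal $-e_{i,j}$ but is at least its positive part, which is all we need). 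So the pieces with blue excess contribute merge cost at least $|X_{i,j}| \cdot (-e_{i,j})$ to the second sum.

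**The main steps.** I would argue as follows. Without loss of generality reorder so that $X_{i,1}, \dots, X_{i,r}$ are the pieces with $e_{i,j} < 0$ (net blue-heavy) and the rest have $e_{i,j} \ge 0$. For the blue-heavy pieces, $\opt_{\outcl{i}} \ge \sum_{j=1}^r \frac{1}{2}|X_{i,j}| \cdot f_j$ where $f_j := -e_{i,j} = |\blue{X_{i,j}}| - \ratio|\red{X_{i,j}}| > 0$, just keeping the $|B_{i,j}|$ part of the second sum. Separately, $\opt_{\outcl{i}} \ge \sum_{j<k}|X_{i,j}||X_{i,k}|$, the intra-cluster splitting cost. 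Now I want to combine these two lower bounds to beat $\frac12 \dmp|\outcl{i}|$. Note $\sum_{j=1}^r f_j \ge \ratio\,\dmp$ — wait, more precisely $\sum_{j=1}^r f_j = \ratio\,\dmp + \sum_{j>r} e_{i,j} \ge \ratio\,\dmp$ since the non-blue-heavy terms are non-negative — actually $\sum_{\text{all }j} (-e_{i,j}) = \ratio\,\dmp$, so $\sum_{j=1}^r f_j = \ratio\,\dmp + \sum_{j>r}e_{i,j} \ge \ratio\,\dmp \ge \dmp$ (using $\ratio \ge 1$). And since $f_j \le |\blue{X_{i,j}}| \le |X_{i,j}|$, we have $\frac12 \sum_{j\le r} |X_{i,j}| f_j \ge \frac12 \sum_{j \le r} f_j^2$. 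The cleanest route from here is an averaging / Chebyshev-type argument analogous to the proof of \cref{lem:cost 3 + 4 cut case}: if $Y := \sum_{j\le r}|X_{i,j}|$ is small (say $Y \le |\outcl{i}|$ always, but in particular if $Y$ is not much larger than $\dmp$), the merge term alone gives $\frac12 \sum_j |X_{i,j}| f_j \ge \frac{1}{2} \cdot \frac{(\sum f_j)^2}{r}$ which, when the pieces are few, is already $\ge \frac12\dmp|\outcl{i}|$; whereas if $Y$ is large, the intra-cluster term $\sum_{j<k}|X_{i,j}||X_{i,k}|$ combined with the contribution of the many pieces does the job. I expect the right bookkeeping is: use $\opt_{\outcl{i}} \ge \sum_{j<k}|X_{i,j}||X_{i,k}| + \frac12\sum_j |X_{i,j}|f_j$ and show the right-hand side is minimized, subject to $\sum_j f_j \ge \dmp$ and $f_j \le |X_{i,j}|$ and $\sum_j |X_{i,j}| = |\outcl{i}|$, at the configuration of a single piece, giving value $\ge \frac12 \dmp |\outcl{i}|$; this is a convexity/exchange argument, and \cref{lem:make.estimated.extra.integral} stated just above is presumably exactly the tool meant to handle the integrality of the $f_j$'s in carrying out the exchange.

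**The main obstacle.** The delicate part is the exchange/extremal argument showing that spreading $\outcl{i}$ into many pieces cannot help $\fairmop$: one must simultaneously control the intra-cluster cost $\sum_{j<k}|X_{i,j}||X_{i,k}|$ (which grows when you split into balanced pieces) and the merge cost $\sum_j |X_{i,j}| f_j$ (which you might hope to shrink by making each $f_j$ small, i.e.\ each piece nearly fair). The tension is that making each piece individually near-fair requires either large $|B_{i,j}|$ (merge cost) or red points from $\outcl{i}$ itself being matched with blue from $\outcl{i}$ (which forces splits, hence intra cost). I would formalize this by fixing the partition into pieces and, for each piece, lower-bounding its total contribution; then apply \cref{lem:make.estimated.extra.integral} with $a_i = |X_{i,j}|$ (or $|\outcl{i}| - |X_{i,j}|$) and $S = \dmp$ to replace the real-valued "ideal" deficit distribution by an integral one without increasing the estimate, and finally invoke a simple inequality like $\sum_{j<k}|X_{i,j}||X_{i,k}| + \frac12\sum_j |X_{i,j}|f_j \ge \frac12 |\outcl{i}| \sum_j f_j \ge \frac12|\outcl{i}|\dmp$ — the last step being the one-line computation $\sum_{j<k}|X_{i,j}||X_{i,k}| = \frac12(|\outcl{i}|^2 - \sum_j |X_{i,j}|^2) \ge \frac12 \sum_j |X_{i,j}|(|\outcl{i}| - |X_{i,j}|) \ge \frac12 \sum_j |X_{i,j}|(f_j)$ wait that last inequality needs $|\outcl{i}| - |X_{i,j}| \ge f_j$... which may fail; so the genuinely careful version must split on whether some single piece is huge, exactly as in the $\thrp = p$ analysis, and I expect that case split (one big piece vs.\ all small) to be where the real work is.
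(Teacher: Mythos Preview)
Your overall plan --- lower-bound $\opt_{\outcl{i}}$ by the intra-split cost plus the external-merge cost on the blue-heavy pieces, then show this sum is at least $\tfrac12 \dmp\,|\outcl{i}|$ --- is the right one, but there is one genuine error and one step you leave unfinished.

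The error is in the merge bound. From fairness you correctly get $|B_{i,j}| - \ratio\,|R_{i,j}| = e_{i,j}$, but for a blue-heavy piece ($e_{i,j}<0$) this forces $|R_{i,j}| \ge -e_{i,j}/\ratio$, \emph{not} $|B_{i,j}| \ge -e_{i,j}$: a piece with too many blue points needs external \emph{red} points, not external blue ones. (Take $\ratio=2$, $|\blue{X_{i,j}}|=4$, $|\red{X_{i,j}}|=0$; then $|B_{i,j}|=0$, $|R_{i,j}|=2$ is a valid fair completion and your claimed bound $|B_{i,j}|\ge 4$ fails.) The correct per-piece bound is therefore $|B_{i,j}|+|R_{i,j}| \ge d_j := |\blue{X_{i,j}}|/\ratio - |\red{X_{i,j}}| = f_j/\ratio$. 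One still has $0 \le d_j \le |X_{i,j}|$ and $\sum_{j\in S} d_j \ge \dmp$, so the rest of your outline survives with $d_j$ in place of $f_j$.

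The unfinished step --- minimizing $\sum_{j<k}|X_{i,j}||X_{i,k}| + \tfrac12\sum_j |X_{i,j}|\,d_j$ under those constraints --- needs neither a big-piece/small-piece case split nor \cref{lem:make.estimated.extra.integral}. Writing $x_j = |X_{i,j}|$, $N=|\outcl{i}|$, $D=\sum_j d_j$ (with $d_j:=0$ for $j\notin S$), one line suffices:
\[
\sum_{j<k}x_jx_k + \tfrac12\sum_j x_jd_j \;=\; \tfrac12\Bigl(N^2 - \sum_j x_j(x_j - d_j)\Bigr) \;\ge\; \tfrac12\bigl(N^2 - N(N-D)\bigr) \;=\; \tfrac12 ND \;\ge\; \tfrac12 N\dmp,
\]
using only $x_j - d_j \ge 0$ and $x_j \le N$.

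For comparison, the paper bounds the intra cost by marking $d'_j$ \emph{integer} blue points in each $X_{i,j}$ with $j\in S$, obtaining $\sum_{j<k}x_jx_k \ge \tfrac12(\sum_{j\in S} d'_j)N - \tfrac12\sum_{j\in S} d'_j x_j$, and then invokes \cref{lem:make.estimated.extra.integral} to manufacture integers $d'_j$ with $\sum d'_j = \dmp$ and $\sum d'_j x_j \le \sum d_j x_j$, so that the negative term is absorbed by the merge term $\tfrac12\sum d_j x_j$. This works, but the paper's intra bound holds just as well for real $d'_j \le x_j$ (the termwise check is $2x_jx_k \ge d'_jx_k + d'_kx_j$), so setting $d'_j = d_j$ directly would already do; your corrected route is thus a bit shorter.
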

\begin{proof}
    By definition, $\opt_{\outcl{i}} = \sum_{1\leq j< k\leq t}|X_{i,j}||X_{i,k}| + \sum_{j=1}^{t}\dfrac{|X_{i,j}|(|B_{i,j}| + |R_{i,j}|)}{2}. \nonumber$

    Denote by $ \blue{X_{i,j}} $ and $ \red{X_{i,j}} $ the set of blue points and the set of red points in $ X_{i,j} $, respectively. Then $ X_{i,j} = \blue{X_{i,j}} \cup \red{X_{i,j}} $, and $ |\blue{\outcl{i}}| = \sum_{j=1}^{t}|\blue{X_{i,j}}| $, $ |\red{\outcl{i}}| = \sum_{j=1}^{t}|\red{X_{i,j}}| $, and $ |\blue{X_{i,j}}|, |\red{X_{i,j}}| \geq 0 $.

    Let $ S = \left\{j: |\blue{X_{i,j}}| > \ratio|\red{X_{i,j}}|  \right\} $. Then $ S $ is a nonempty set because otherwise $ |\blue{\outcl{i}}| = \sum_{j=1}^{t}|\blue{X_{i,j}}| \leq  \ratio(\sum_{j=1}^{t}|\red{X_{i,j}}|) = \ratio |\red{\outcl{i}}| $, which contradicts to the assumption that $ \outcl{i}\in \tb $.

    For each $ j\in S $, consider arbitrary $ \dmp_{j} $ blue points from $ \blue{X_{i,j}} $ ($ d_{j}\leq |\blue{X_{i,j}}| $). Denote $  \dmp' = \sum_{j\in L}\dmp'_{j}  $. Then
    \begin{align}
        \sum_{1\leq j<k \leq t}|X_{i,j}||X_{i,k}| &\geq \sum_{j\in S} (\dmp'_{j}\sum_{k\notin S} |X_{i,k}|) + \dfrac{1}{2}\sum_{j\in S} (\dmp'_{j}\sum_{k\in S\setminus\{j\}} |X_{i,k}| ) \nonumber \\
                                                  &\geq \dfrac{1}{2}\sum_{j\in S}\dmp'_{j}|\outcl{i}| - \dfrac{1}{2}\sum_{j\in S} \dmp'_{j}|X_{i,j}| \nonumber \\
                                                  &= \dfrac{1}{2}(\sum_{j\in S}\dmp'_{j})|\outcl{i}| - \dfrac{1}{2}\sum_{j\in S} \dmp'_{j}|X_{i,j}|. \label{eq. cost paid for dmpj}
    \end{align}

    Let $ B_{j} = |B_{i,j}|,\ R_{j} = |R_{i,j}| $. Let $ \dmp_{j} = \dfrac{|\blue{X_{i,j}}|}{\ratio} - |\red{X_{i,j}}| $. 

    Since $ X_{i,j}\cup B_{i,j}\cup R_{i,j} $ is a cluster in the fair clustering $ \fairmop $, it means that $ p(|\red{X_{i,j}}| + R_{j}) = q(|\blue{X_{i,j}}|+ B_{j}) $. Therefore, if $ j\in S $, then $ B_{j} + R_{j} \geq \dfrac{|\blue{X_{i,j}}|}{\ratio} - |\red{X_{i,j}}| = \dmp'_{j} >0 $. Thus
    \begin{align}
        \sum_{j\in S}\dfrac{|X_{i,j}|(B_{j} + R_{j})}{2} \geq \sum_{j\in S}\dfrac{|X_{i,j}|\dmp'_{j}}{2}. \label{eq. cost paid for merge Rj} 
    \end{align}
    From~\eqref{eq. cost paid for dmpj} and~\eqref{eq. cost paid for merge Rj}, it follows that
    \begin{align}
        \opt_{\outcl{i}} \geq \dfrac{1}{2} (\sum_{j\in S}\dmp_{j})|\outcl{i}| - \dfrac{1}{2}\sum_{j\in S}\dmp_{j}|X_{i,j}| + \dfrac{1}{2}\sum_{j\in S}\dmp_{j}'|X_{i,j}|. \label{eq.bound.opt.dmp} 
    \end{align}
If we can choose $ \dmp_{j} (\leq |\blue{X_{ij}}|) $ such that  $ \sum_{j\in S}\dmp_{j} = \dmp $, and $ \sum_{j\in S}\dmp_{j}|X_{i,j}| \leq \sum_{j\in S}\dmp'_{j}|X_{i,j}| $, then from~\eqref{eq.bound.opt.dmp}, it implies that $ \opt_{\outcl{i}}\geq \dfrac{1}{2}\dmp|\outcl{i}| $, which concludes the proof.

    To this end, note that  $ \dmp_{j} > 0 $ for all $ j\in S $, and $ \dmp_{j}\leq 0 $ if $ j\notin S $. As a result, $ \dmp = \sum_{j=1}^{t}\dmp_{j} \leq \sum_{j\in S}\dmp_{j} $. Finally, applying~\cref{lem:make.estimated.extra.integral} (by setting $ \dmp_{j} = x_{j},\ |X_{i,j}| = a_{j} $), there exists integers $ \dmp'_{j} $ satisfying:
    \begin{itemize}
        \item $ 0\leq \dmp'_{j} \leq \ceilenv{\dmp_{j}}$, which implies $ \dmp'_{j}\leq |\blue{X_{i,j}}| $ as $ \dmp_{j} \leq \dfrac{|\blue{X_{i,j}|}}{\ratio} $ and $ \ratio >1 $.
        \item $ \sum_{j\in S}\dmp'_{j} = d $.
        \item $ \sum_{j\in S}\dmp'_{j}|X_{i,j}| \leq \sum_{j\in S}\dmp_{j}|X_{i,j}| $.
    \end{itemize}
\end{proof}

%
\begin{lemma}\label{lem:cost.of.opt.when.blue<.p.red}
For each $ \outcl{i}\in \tr $, let $ \smp = |\red{\outcl{i}}| - \dfrac{|\blue{\outcl{i}}|}{\ratio} $. Then $ \opt_{\outcl{i}} \geq \dfrac{1}{2}\smp(|\outcl{i}|-s) $. 
\end{lemma}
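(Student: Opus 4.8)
The plan is to mirror the proof of~\cref{lem:cost.of.opt.when.blue>.p.red} with the roles of red and blue points exchanged. Fix $\outcl{i}\in\tr$ and recall the decomposition $\opt_{\outcl{i}} = \sum_{j<k}|X_{i,j}||X_{i,k}| + \sum_{j=1}^{t}\frac{|X_{i,j}|(|B_{i,j}|+|R_{i,j}|)}{2}$, where in $\fairmop$ the cluster $\outcl{i}$ is spread over the fair clusters $X_{i,j}\cup B_{i,j}\cup R_{i,j}$, $j=1,\dots,t$. For each $j$ set $\smp_{j} = |\red{X_{i,j}}| - \frac{|\blue{X_{i,j}}|}{\ratio}$ and let $S = \{\,j:\smp_{j}>0\,\}$. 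Since $\sum_{j=1}^{t}\smp_{j} = |\red{\outcl{i}}| - \frac{|\blue{\outcl{i}}|}{\ratio} = \smp > 0$, the set $S$ is nonempty and $\sum_{j\in S}\smp_{j}\ge \smp$; moreover each $j\in S$ has $|\red{X_{i,j}}|\ge 1$, and because $\smp_{j}\le |\red{X_{i,j}}|$ we have $\ceilenv{\smp_{j}}\le|\red{X_{i,j}}|$, so we may single out a set of $\smp'_{j}$ designated red points inside $X_{i,j}$ for any integer $0\le\smp'_{j}\le\ceilenv{\smp_{j}}$.

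First I would bound the intra-cluster term. Each designated red point of $X_{i,j}$, $j\in S$, is kept with all of $X_{i,j}$ in $\out$ but is separated from every $X_{i,k}$, $k\ne j$, in $\fairmop$; charging pairs whose other endpoint lies outside $S$ with weight $1$ and pairs between two parts of $S$ with weight $\tfrac12$ (so that no pair is counted twice), exactly as in the corresponding step of the proof of~\cref{lem:cost.of.opt.when.blue>.p.red}, gives $\sum_{j<k}|X_{i,j}||X_{i,k}| \ge \tfrac12\big(\textstyle\sum_{j\in S}\smp'_{j}\big)|\outcl{i}| - \tfrac12\sum_{j\in S}\smp'_{j}|X_{i,j}|$. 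Next I would bound the merge term: since $X_{i,j}\cup B_{i,j}\cup R_{i,j}$ is fair, $q(|\blue{X_{i,j}}|+|B_{i,j}|) = p(|\red{X_{i,j}}|+|R_{i,j}|)$, hence for $j\in S$, $|B_{i,j}| = \ratio(|\red{X_{i,j}}|+|R_{i,j}|) - |\blue{X_{i,j}}| = \ratio\smp_{j} + \ratio|R_{i,j}| \ge \ratio\smp_{j} > \smp_{j}$; as $|B_{i,j}|$ is an integer this forces $|B_{i,j}|\ge\ceilenv{\smp_{j}}\ge\smp'_{j}$, so $\sum_{j=1}^{t}\frac{|X_{i,j}|(|B_{i,j}|+|R_{i,j}|)}{2} \ge \sum_{j\in S}\frac{|X_{i,j}|\smp'_{j}}{2}$. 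Choosing the $\smp'_{j}$ so that $\sum_{j\in S}\smp'_{j} = \smp$ (possible since $\sum_{j\in S}\ceilenv{\smp_{j}}\ge\sum_{j\in S}\smp_{j}\ge\smp$; one may invoke~\cref{lem:make.estimated.extra.integral} as in the blue-surplus case if a more uniform treatment is preferred) and adding the two bounds cancels the $\tfrac12\sum_{j\in S}\smp'_{j}|X_{i,j}|$ terms, yielding $\opt_{\outcl{i}} \ge \tfrac12\smp|\outcl{i}|$; since $\smp\ge0$ this is in particular at least $\tfrac12\smp(|\outcl{i}|-\smp)$, which is the claim.

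The steps needing the most care are the two "fit" inequalities $\smp'_{j}\le\ceilenv{\smp_{j}}\le|\red{X_{i,j}}|$ (so the designated red points exist inside $X_{i,j}$) and $\smp'_{j}\le|B_{i,j}|$ (so the merge term really dominates $\tfrac12\sum_{j\in S}\smp'_{j}|X_{i,j}|$); both hinge on integrality of the point counts together with $\ratio>1$, exactly as the analogous checks in~\cref{lem:cost.of.opt.when.blue>.p.red}. The only other place to be cautious is the weighting in the intra-cluster bound, so that a pair with both endpoints designated—possible only when both parts lie in $S$—is not double counted; this is precisely what the factor $\tfrac12$ on the $S\setminus\{j\}$ sum takes care of. Everything else is a routine transcription of the blue-surplus case.
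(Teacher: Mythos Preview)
Your proof is correct and in fact establishes the stronger bound $\opt_{\outcl{i}}\ge\tfrac12\smp\,|\outcl{i}|$, from which the stated inequality follows since $\smp\ge 0$. The route genuinely differs from the paper's own proof of this lemma: the paper uses the sharper intra-cluster estimate $\sum_{j<k}|X_{i,j}||X_{i,k}|\ge \smp'(|\outcl{i}|-\smp')-\sum_{j\in L}\smp'_j(|X_{i,j}|-\smp'_j)$, bounds the merge term only by $\tfrac12\sum_{j\in L}\smp_j|X_{i,j}|$ via $|B_{i,j}|+|R_{i,j}|\ge\ratio\smp_j$, and then has to invoke the weighted-sum comparison in~\cref{lem:make.estimated.extra.integral} to pass between the $\smp_j$- and $\smp'_j$-sums. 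Your key simplification is the integrality observation $|B_{i,j}|>\smp_j\Rightarrow|B_{i,j}|\ge\ceilenv{\smp_j}\ge\smp'_j$, which lets the merge term directly cancel the $-\tfrac12\sum_{j\in S}\smp'_j|X_{i,j}|$ from the intra bound; this sidesteps the weighted-sum comparison entirely and yields the sharper conclusion, which incidentally also subsumes~\cref{lem:cost.split.the.split.when.blue<.p.red}.
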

\begin{proof}
    By definition, $\opt_{\outcl{i}} = \sum_{1\leq j< k\leq t}|X_{i,j}||X_{i,k}| + \sum_{j=1}^{t}\dfrac{|X_{i,j}|(|B_{i,j}| + |R_{i,j}|)}{2}. \nonumber$

    Denote by $ \blue{X_{i,j}} $ and $ \red{X_{i,j}} $ the set of blue points and the set of red points in $ X_{i,j} $, respectively. Then $ X_{i,j} = \blue{X_{i,j}} \cup \red{X_{i,j}} $, and $ |\blue{\outcl{i}}| = \sum_{j=1}^{t}|\blue{X_{i,j}}| $, $ |\red{\outcl{i}}| = \sum_{j=1}^{t}|\red{X_{i,j}}| $, and $ |\blue{X_{i,j}}|, |\red{X_{i,j}}| \geq 0 $.

    Let $ L = \left\{j: |\blue{X_{i,j}}| < \ratio|\red{X_{i,j}}|  \right\} $. Then $ L $ is a nonempty set because otherwise $ |\blue{\outcl{i}}| = \sum_{j=1}^{t}|\blue{X_{i,j}}| \geq  \ratio(\sum_{j=1}^{t}|\red{X_{i,j}}|) = \ratio |\red{\outcl{i}}| $, which contradicts to the assumption that $ \outcl{i}\in \tr $.

    For each $ j\in L $, consider arbitrary $ \smp'_{j} $ red points from $ \red{X_{i,j}} $. Denote $  \smp' = \sum_{j\in L}\smp'_{j}  $. Then
    \begin{align}
        \sum_{1\leq j<k \leq t} |X_{i,j}||X_{i,k}| &\geq \sum_{j\in L} \left(\smp'_{j} \sum_{k\notin L}|X_{i,k}|\right) + \sum_{j\in L} \left(\smp'_{j} \sum_{k\in L\setminus \{j\} }(|X_{i,k}| - \smp'_{k})\right) \nonumber \\
        &=\sum_{j\in L}\smp'_{j}\left(|\outcl{i}|-\smp'_{j}\right) - \sum_{j\in L} \smp'_{j}\left(|X_{i,j}-\smp'_{j}|\right) \nonumber \\
        &= \smp'\left(|\outcl{i}|-\smp'\right) - \sum_{j\in L} \smp'_{j}\left(|X_{i,j}|-\smp'_{j}\right).\label{eq. cost paid for smpj} 
    \end{align}

    For each $ j $, let $ B_{j} = |B_{i,j}|,\ R_{j} = |R_{i,j}| $. Let $ \smp_{j} = |\red{X_{i,j}}| - \dfrac{|\blue{X_{i,j}}|}{\ratio} $.

    Since $ X_{i,j}\cup B_{i,j}\cup R_{i,j} $ is a cluster in the fair clustering $ \fairmop $, it means that $ p(|\red{X_{i,j}}| + R_{j}) = q(|\blue{X_{i,j}}|+ B_{j}) $. Therefore, if $ j\in L $, then $ B_{j} + R_{j} \geq \ratio |\red{X_{i,j}}| - |\blue{X_{i,j}}| = \ratio\smp_{j} >0 $. Thus 
    \begin{align}
        \sum_{j\in L} \dfrac{|X_{i,j}|(B_{j} + R_{j})}{2} \geq \dfrac{\ratio}{2}\sum_{j\in L}|X_{i,j}|\smp_{j} \geq \dfrac{1}{2}\sum_{j\in L}|X_{i,j}|\smp_{j} .\label{eq. cost paid for merge Bj} 
    \end{align}
    From~\eqref{eq. cost paid for smpj} and~\eqref{eq. cost paid for merge Bj}, it follows that
    \begin{align}
        \opt_{\outcl{i}} &\geq \smp'\left(|\outcl{i}|-\smp'\right) - \sum_{j\in L} \smp_{j}(|X_{i,j}|-\smp_{j}) + \dfrac{1}{2}\sum_{j\in L}\smp'_{j}|X_{i,j}| \nonumber \\
                         &\geq \dfrac{1}{2} \left(\smp'\left(|\outcl{i}|-\smp'\right) - \sum_{j\in L} \smp_{j}(|X_{i,j}|-\smp_{j}) + \sum_{j\in L}\smp'_{j}|X_{i,j}|\right). \label{eq.cost.bound.smp}
    \end{align}

    If we can choose $ \smp_{j}\ (\leq |\red{X_{i,j}}|) $ such that $ \sum_{j\in L}\smp_{j} = \smp $ and $ \sum_{j\in L} \smp_{j}|X_{i,j}| \leq \sum_{j\in L}\smp'_{j}|X_{i,j}| $, then from~\eqref{eq.cost.bound.smp}, it implies that $ \opt_{\outcl{i}}\geq \dfrac{1}{2}\smp(|\outcl{i}| - \smp) $, which completes the proof.

    To this end, note that $ \smp_{j} > 0 $ for all $ j\in L $, and $ \smp_{j}\leq 0 $ for all $ j\notin L $. Consequently, $ \smp = \sum_{j=1}^{t}\smp_{j} \leq \sum_{j\in L}\smp_{j} $. Finally, applying~\cref{lem:make.estimated.extra.integral} (by setting $ \smp_{j} = x_{j}, |X_{i,j}| = a_{j} $), there exists integers $ \smp'_{j} $ satisfying
    \begin{itemize}
        \item $ 0\leq \smp'_{j}\leq \ceilenv{\smp_{j}} < |\red{X_{i,j}}|$.
        \item $ \sum_{j\in S}\smp'_{j} = \smp $.
        \item $ \sum_{j\in L}\smp_{j}|X_{i,j}| \leq \sum_{j\in L}\smp'_{j}|X_{i,j}| $.
    \end{itemize}

\end{proof} 

\begin{lemma}\label{lem:cost.split.the.split.when.blue<.p.red}
    For each cluster $ \outcl{i}\in \tr $, let $ \smp_{i} = |\red{\outcl{i}}| - \dfrac{|\blue{\outcl{i}|}}{\ratio} $. Then $$ \opt_{\outcl{i}} \geq \dfrac{\smp_{i}^{2}}{2}. $$
\end{lemma}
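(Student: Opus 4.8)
The plan is to mimic the proof of~\cref{lem:cost.of.opt.when.blue<.p.red}, but to charge the red surplus $\smp_i$ to pairs \emph{among} a carefully chosen set of $\smp_i$ representative points (split across the pieces of $\outcl{i}$ in $\fairmop$), rather than to pairs between those representatives and the remainder of $\outcl{i}$. This is genuinely necessary: the bound $\opt_{\outcl{i}}\geq \tfrac12\smp(|\outcl{i}|-\smp)$ from~\cref{lem:cost.of.opt.when.blue<.p.red} degenerates to $0$ precisely when $|\outcl{i}|=\smp_i$ (e.g.\ when $\outcl{i}$ is monochromatic red), yet there we still want $\opt_{\outcl{i}}\geq \smp_i^2/2$ — indeed in that extreme case the split pieces are all red, each piece $X_{i,j}$ forces $|B_{i,j}|+|R_{i,j}|\geq \ratio|X_{i,j}|\geq |X_{i,j}|$, and $\sum_{j<k}|X_{i,j}||X_{i,k}|+\sum_j\tfrac12|X_{i,j}|^2=\tfrac12|\outcl{i}|^2$ already gives it.

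First I would fix the decomposition $\outcl{i}=\bigcup_{j=1}^{t}X_{i,j}$ induced by $\fairmop$, with $B_{i,j},R_{i,j}$ the foreign blue/red points merged into the $j$-th piece, so that $\opt_{\outcl{i}}=\sum_{j<k}|X_{i,j}||X_{i,k}|+\sum_j\tfrac12|X_{i,j}|\left(|B_{i,j}|+|R_{i,j}|\right)$. Set $\smp_j=|\red{X_{i,j}}|-|\blue{X_{i,j}}|/\ratio$ and $L=\{j:\smp_j>0\}$. As in~\cref{lem:cost.of.opt.when.blue<.p.red}, $L\neq\emptyset$ (otherwise $\smp_i=\sum_j\smp_j\leq 0$) and $\smp_i=\sum_j\smp_j\leq\sum_{j\in L}\smp_j$. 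Since $\out$ is a balanced clustering, $p\mid|\blue{\outcl{i}}|$, so $\smp_i$ is a \emph{positive integer}; this is what lets me invoke~\cref{lem:make.estimated.extra.integral} with $x_j=\smp_j$ ($j\in L$) and $S=\smp_i$ to obtain non-negative integers $\smp'_j\leq\ceilenv{\smp_j}$ with $\sum_{j\in L}\smp'_j=\smp_i$. A one-line check (splitting on whether $|\blue{X_{i,j}}|$ is zero) gives $\ceilenv{\smp_j}\leq|\red{X_{i,j}}|$, hence $\smp'_j\leq|\red{X_{i,j}}|\leq|X_{i,j}|$, so from each piece $X_{i,j}$ with $j\in L$ I may mark $\smp'_j$ of its (red) points.

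The core estimate then has two halves. For cross-piece pairs, $|X_{i,j}|\geq\smp'_j$ gives $\sum_{j<k}|X_{i,j}||X_{i,k}|\geq\sum_{j<k,\,j,k\in L}\smp'_j\smp'_k=\tfrac12\bigl(\smp_i^{2}-\sum_{j\in L}{\smp'_j}^{2}\bigr)$. For the merge term, fairness of $X_{i,j}\cup B_{i,j}\cup R_{i,j}$ yields $|B_{i,j}|+|R_{i,j}|=\ratio\smp_j+(\ratio+1)|R_{i,j}|\geq\ratio\smp_j>0$ for $j\in L$, so $|B_{i,j}|+|R_{i,j}|$ is a positive integer that is $\geq\ratio\smp_j\geq\smp_j$, hence $\geq\ceilenv{\smp_j}\geq\smp'_j$; combined with $|X_{i,j}|\geq\smp'_j$ this gives $\tfrac12|X_{i,j}|\left(|B_{i,j}|+|R_{i,j}|\right)\geq\tfrac12{\smp'_j}^{2}$. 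Summing over $j\in L$ and adding, $\opt_{\outcl{i}}\geq\tfrac12\bigl(\smp_i^{2}-\sum_{j\in L}{\smp'_j}^{2}\bigr)+\tfrac12\sum_{j\in L}{\smp'_j}^{2}=\smp_i^{2}/2$.

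The only delicate point I anticipate is the integrality bookkeeping linking $\smp'_j$ to $\smp_j$: $\smp_j$ need not be an integer and $\ceilenv{\smp_j}$ may strictly exceed $\smp_j$, so the inequality $|B_{i,j}|+|R_{i,j}|\geq\smp'_j$ cannot be read off directly but must be routed through the observation that $|B_{i,j}|+|R_{i,j}|$ is a positive integer bounded below by $\ratio\smp_j>\smp_j$ and then rounded up — which uses $\ratio>1$. Everything else is a rearrangement entirely parallel to the proofs of~\cref{lem:cost.of.opt.when.blue<.p.red} and~\cref{lem:cost.of.opt.when.blue>.p.red}, and this lemma then feeds directly into the approximation analysis of~\cref{thm:make.p.cluster.fair}.
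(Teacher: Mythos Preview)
Your proof is correct, but it takes a noticeably more roundabout route than the paper's. Both arguments localize to $L=\{j:\smp_j>0\}$ and combine the cross-piece term $\sum_{j<k}|X_{i,j}||X_{i,k}|$ with the merge term $\sum_j\tfrac12|X_{i,j}|(|B_{i,j}|+|R_{i,j}|)$, but they diverge in how they lower-bound each factor. The paper never passes through integers or~\cref{lem:make.estimated.extra.integral}: writing $r_j=|\red{X_{i,j}}|$, $b_j=|\blue{X_{i,j}}|$, $E_j=r_j-b_j/\ratio$, it simply uses $|X_{i,j}|\geq r_j\geq E_j$ and $|B_{i,j}|\geq \ratio r_j-b_j=\ratio E_j\geq E_j$ (from fairness and $\ratio\geq1$) to get $\opt_{\outcl{i}}\geq \tfrac12\sum_{j\in L}E_j^2+\sum_{j<k\in L}E_jE_k=\tfrac12\bigl(\sum_{j\in L}E_j\bigr)^2\geq \smp_i^2/2$, since $\sum_{j\in L}E_j\geq\smp_i$. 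No integrality of $\smp_i$, no rounding, no $\smp'_j$.

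Your approach works because you engineer $\sum_{j\in L}\smp'_j=\smp_i$ exactly, which makes the final telescoping clean, but the price is the ``delicate integrality bookkeeping'' you flag: you must argue $\ceilenv{\smp_j}\leq r_j$ and that the integer $|B_{i,j}|+|R_{i,j}|\geq\ratio\smp_j$ forces $\geq\ceilenv{\smp_j}$. These steps are fine, yet the concern is self-inflicted---the paper's real-valued $E_j$'s avoid rounding entirely. Also note that you invoke~\cref{lem:make.estimated.extra.integral} but do not use its third conclusion ($\sum a_jy_j\leq\sum a_jx_j$); you only need non-negative integers $\smp'_j\leq\ceilenv{\smp_j}$ summing to $\smp_i$, which follows from the elementary fact $\smp_i\leq\sum_{j\in L}\smp_j\leq\sum_{j\in L}\ceilenv{\smp_j}$.
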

\begin{proof}
    By definition,
    \begin{align}
        \opt_{\outcl{i}} = \sum_{1\leq j< k\leq t}|X_{i,j}||X_{i,k}| + \sum_{j=1}^{t}\dfrac{|X_{i,j}|(|B_{i,j}| + |R_{i,j}|)}{2}. \nonumber
    \end{align}
    Denote by $ r_{j}, b_{j} $ the number of red points and blue points in $ X_{i,j} $ respectively. Then $ |X_{i,j}| = r_{j} + b_{j} $. Let $ R_{j} = |R_{i,j}| $, and $ B_{j} = |B_{i,j}| $.

    Let $ L = \{j:\ \ratio r_{j} > b_{j} \} $. Then $ L \neq \emptyset $.

    For each $ j\in L $, let $ E_{j} = r_{j} - \dfrac{b_{j}}{\ratio} > 0 $. Since $ X_{i,j}\cup B_{i,j}\cup R_{i,j} $ is a cluster in the fair clustering $ \fairmop $, it means that $ p(r_{j} + R_{j}) = q(b_{j} + B_{j}) $. Therefore, if $ j\in L $, then $ B_{j} \geq \ratio r_{j} - b_{j} >0  $. We have
    \begin{align}
        \opt_{\outcl{i}} &\geq \sum_{j\in L}\dfrac{|X_{i,j}||B_{i,j}|}{2} + \sum_{j<k;\ j,k \in L}|X_{i,j}||X_{i,k}| \nonumber \\
                         &\geq \sum_{j\in L}\dfrac{r_{j}(\ratio r_{j}-b_{j})}{2} + \sum_{j<k;\ j,k \in L} r_{j}r_{k} && (\text{since }|B_{i,j}| \geq \ratio r_{j}-b_{j},\ |X_{i,j}|\geq r_{j})\nonumber \\
                         &\geq \dfrac{1}{2}\left(\sum_{j\in L}  r_{j}(r_{j} - \dfrac{b_{j}}{\ratio}) + 2\sum_{j<k;\ j,k\in L}r_{j}r_{k} \right) && (\text{since }\ratio >1)\nonumber \\
                         &\geq \dfrac{1}{2} \left(\sum_{j\in L} E_{j}^{2} + 2\sum_{j<k; j,k \in L}E_{j}E_{k} \right) && (\text{since } r_{j} \geq r_{j} -\dfrac{b_{j}}{\ratio})\nonumber \\
                         & = \dfrac{(\sum_{j\in L}E_{j})^{2}}{2} .\nonumber
    \end{align}
    Note that, if $ j\notin L $, $ r_{j}-\dfrac{b_{j}}{\ratio}\leq 0 $. Therefore, $ \smp_{i} = |\red{\outcl{i}}| - \dfrac{|\blue{\outcl{i}}|}{\ratio} = \sum_{j=1}^{t}r_{j} - \dfrac{\sum_{j=1}^{t}b_{j}}{\ratio} \leq \sum_{j\in L}(r_{j} - \dfrac{b_{j}}{\ratio}) = \sum_{j\in L}E_{j}$. It follows that $ \opt_{\outcl{i}} \geq \dfrac{(\sum_{j\in L}E_{j})^{2}}{2} \geq \smp_{i}^{2}/2 $.

     This concludes the proof.
\end{proof}

Recall that for each cluster $ \outcl{i}\in \tr $,~\cref{alg:algo-mf} cuts $ |\red{\outcl{i}}| - \dfrac{|\blue{\outcl{i}}|}{\ratio} $ red points from it. Each cluster in $ \outcl{i}\in \tb $ will be merged with $ \dfrac{|\blue{\outcl{i}}|}{\ratio} - |\red{\outcl{i}}| $ red points, which are previously split from clusters in $ \tr $. Note that, during this process, points in $ |\red{\outcl{i}}| - \dfrac{|\blue{\outcl{i}}|}{\ratio} $ red points from the same cluster $ \outcl{i}\in \tr $ may be separated from each other to be merged with different clusters in $ \tb $. Also, $ \dfrac{|\blue{\outcl{i}}|}{\ratio} - |\red{\outcl{i}}| $ points that are merged to a cluster $ \outcl{i}\in \tb $ may come from different clusters in $ \tr $. After this process, every cluster $ \outcl{i} $ has become fair. Therefore, every red point that is cut from $ \tr $ would have been merged to some cluster in $ \tb $.

The distance $ \dist(\out, \fairset) $ is the number of pairs of points that are clustered together in one clustering (either $ \out $ or $ \fairset $) and are separated by the other (either $ \fairset $ or $ \out $). We decompose the number of such pairs into four types of costs. In particular, $ \dist(\out, \fairset) = \costone{\out} + \costtwo{\out} + \costthree{\out} + \costfour{\out} $, where each cost is defined as follows.
\begin{itemize}
    \item $\costone{\out}$: the number of pairs $ (u,v) $ in which in $ \out $, $ u,v \in \outcl{i} $ and $ \outcl{i} \in \tr $, while in $ \fairset $, $ u $ is kept in $ \outcl{i} $ and $ v $ is cut from $ \outcl{i} $. Since for each $ \outcl{i} \in \tr $,~\cref{alg:algo-mf} splits $ \smp_{i} = |\red{\outcl{i}}| - \dfrac{|\blue{\outcl{i}}|}{\ratio} $ from it, it follows that
        \begin{align}
            \costone{\out} = \sum_{\outcl{i}\in \tr}\smp_{i}(|\outcl{i}| - \smp_{i}). \label{eq.cost1.make.p.fair}
        \end{align}
    \item $ \costtwo{\out} $: the number of pairs $ (u,v) $ in which in $ \out $, $ u\in \outcl{i} \in \tb $ and $ v\in \outcl{j} \in \tr $, while in $ \fairset $, $ v $ is split from $ \outcl{j} $ and is merged to $ \outcl{i} $, so that $ u $ and $ v $ are in the same cluster. Since for each $ \outcl{i} \in \tb $,~\cref{alg:algo-mf} merges $ \dmp_{i} = \dfrac{|\blue{\outcl{i}}|}{\ratio} - |\red{\outcl{i}}| $ red points to it, it follows that
        \begin{align}
            \costtwo{\out} = \sum_{\outcl{i}\in \tb}\dmp_{i}|\outcl{i}|. \label{eq.cost2.make.p.fair}
        \end{align}
    \item $ \costthree{\out} $: the number of pairs $ (u,v) $ in which in $ \out $, $ u\in \outcl{j} $, and $ v\in \outcl{k} $ where $ \outcl{j} $ and $ \outcl{k} $ are distinct clusters in $ \tr $, while in $ \fairset $, $ u $ is split from $ \outcl{j} $, $ v $ is split from $ \outcl{k} $ and they are both merged to a cluster $ \outcl{i}\in \tb $. For each cluster $ \outcl{i}\in \tb $,~\cref{alg:algo-mf}  merges exactly $ \dmp_{i} = \dfrac{|\blue{\outcl{i}}|}{\ratio} - |\red{\outcl{i}}| $ red points to it. Suppose that among these points, $ \alpha_{i_{1}} $ points are brought from a cluster $ \outcl{i_{1}}\in \tr $, $ \alpha_{i_{2}} $ points are brought from a cluster $ \outcl{i_{2}}\in \tr $, and so on, $ \alpha_{i_{t}} $ points are brought from a cluster $ \outcl{i_{t}}\in \tr $, such that $ \sum_{j=1}^{t}\alpha_{i_{j}} = \dmp_{i} $, and $ \outcl{i_{1}}, \outcl{i_{2}},\dots, \outcl{i_{t}} $ are distinct clusters. It follows that
        \begin{align}
            \costthree{\out} &= \sum_{\outcl{i}\in \tb } (\sum_{1\leq j <k \leq i_{t}} \alpha_{i_{j}}\alpha_{i_{k}}) \nonumber \\
                             &\leq \sum_{\outcl{i}\in \tb}\dfrac{(\sum_{j=1}^{t}\alpha_{i_{j}})^{2}}{2} = \sum_{\outcl{i}\in \tb}\dfrac{\dmp_{i}^{2}}{2}. \label{eq.cost3.make.p.fair}
        \end{align}
    \item $ \costfour{\out} $: the number of pairs $ (u,v) $ in which $ u, v $ are among $ \smp_{i} = |\red{\outcl{i}}| - \dfrac{|\blue{\outcl{i}}|}{\ratio} $ red points split from a cluster $ \outcl{i}\in \tr $, but $ u $ and $ v $ are then separated from each other because $ u $ and $ v $ are merged to different clusters $ \outcl{j} $ and $ \outcl{k} $ in $ \tb $. Suppose that among these points, $ \alpha_{i_{1}} $ points are merged to a cluster $ \outcl{i_{1}}\in \tb $, $ \alpha_{i_{2}} $ points are merged to a cluster $ \outcl{i_{2}}\in \tb $, and so on, $ \alpha_{i_{t}} $ points are merged to a cluster $ \outcl{i_{t}}\in \tb $, such that $ \sum_{j=1}^{t}\alpha_{i_{j}} = \smp_{i} $, and $ \outcl{i_{1}}, \outcl{i_{2}},\dots, \outcl{i_{t}} $ are distinct clusters. It follows that
        \begin{align}
            \costfour{\out} &= \sum_{\outcl{i}\in \tr } (\sum_{1\leq j <k \leq i_{t}} \alpha_{i_{j}}\alpha_{i_{k}}) \nonumber \\
                             &\leq \sum_{\outcl{i}\in \tr}\dfrac{(\sum_{j=1}^{t}\alpha_{i_{j}})^{2}}{2} = \sum_{\outcl{i}\in \tr}\dfrac{\smp_{i}^{2}}{2}. \label{eq.cost4.make.p.fair}
        \end{align}
\end{itemize}

We are ready to prove~\cref{thm:make.p.cluster.fair}
\begin{proof}[Proof of~\cref{thm:make.p.cluster.fair}]
    Recall that $ \dist(\out, \fairset) = \costone{\out} + \costtwo{\out} + \costthree{\out} + \costfour{\out} $. 

    We show that $ \costone{\out} + \costtwo{\out} \leq \dfrac{1}{2}\dist(\out, \fairmop) $, and $ \costthree{\out} + \costfour{\out} \leq \dist(\out, \fairmop) $.

    Applying~\cref{lem:cost.of.opt.when.blue<.p.red} for each $ \outcl{i}\in \tr $, we have $ \opt_{\outcl{i}} \geq \dfrac{1}{2}\smp_{i}(|\outcl{i}|-\smp_{i}) $. Applying~\cref{lem:cost.of.opt.when.blue>.p.red} for each $ \outcl{i} \in \tb $, we have $ \opt_{\outcl{i}} \geq \dfrac{1}{2}\dmp_{i}|\outcl{i}| $. Consequently, it follows that
    \begin{align}
        \costone{\out} + \costtwo{\out} &= \sum_{\outcl{i}\in \tr}\smp_{i}(|\outcl{i}| - \smp_{i}) + \sum_{\outcl{i}\in \tb}\dmp_{i}|\outcl{i}| && (\text{by~\eqref{eq.cost1.make.p.fair} and~\eqref{eq.cost2.make.p.fair}}) \nonumber \\
                                        &\leq 2\sum_{\outcl{i}\in \tr}\opt_{\outcl{i}} + 2\sum_{\outcl{i}\in \tb}\opt_{\outcl{i}} \nonumber \\
                                        &\leq 2\dist(\out, \fairmop). \nonumber
    \end{align}

    In each $ \outcl{i}\in \tb $, $ |\outcl{i}| \geq |\blue{\outcl{i}}| > \ratio\dmp_{i} $. Therefore, applying~\cref{lem:cost.of.opt.when.blue>.p.red}, it follows that $\opt_{\outcl{i}} \geq \dmp_{i}|\outcl{i}| \geq \ratio \dmp_{i}^{2}$. Using~\eqref{eq.cost3.make.p.fair}, we have $ \costthree{\out} \leq \sum_{\outcl{i}\in \tb}\dfrac{\dmp_{i}^{2}}{2} \leq \sum_{\outcl{i}\in \tb}\dfrac{\opt_{\outcl{i}}}{2\ratio} $.

    For each $ \outcl{i}\in \tr $, applying~\cref{lem:cost.split.the.split.when.blue<.p.red}, we have $ \opt_{\outcl{i}} \geq \dfrac{\smp_{i}^{2}}{2} $. Using~\eqref{eq.cost4.make.p.fair} implies that $ \costfour{\out} \leq \sum_{\outcl{i}\in \tr }\dfrac{\smp_{i}^{2}}{2} \leq \sum_{\outcl{i}\in \tr}\opt_{\outcl{i}} $. Then $ \costthree{\out} + \costfour{\out} $ is upper bounded by
    \begin{align}
        \sum_{\outcl{i}\in \tb}\dfrac{\opt_{\outcl{i}}}{2\ratio} + \sum_{\outcl{i}\in \tr}\opt_{\outcl{i}} \leq \sum_{\outcl{i}\in \out}\opt_{\outcl{i}} = \dist(\out, \fairmop). \nonumber
    \end{align}

     To summarize, we have
     \begin{align}
         \dist(\out, \fairset) &= \costone{\out} + \costtwo{\out} + \costthree{\out} + \costfour{\out}\nonumber \\
                           &\leq 2\dist(\out, \fairmop) + \dist(\out, \fairmop) \nonumber \\
                           &= 3\dist(\out, \fairmop). \nonumber
     \end{align}
     This completes the proof.
\end{proof}

\section{Approximating Consensus Clustering}\label{sec:approx-consensus}

\begin{theorem}
    \label{thm:combine-consensus}
    Consider any $\ell \ge 1$ and any $\alpha \ge 1$. Suppose there is a $t(n)$-time algorithm Algorithm-C() that, given any clustering on $n$ points, produces an $\alpha$-close $\fair$. Then there is an $O(m t(n) + m^2 n^2)$-time algorithm that, given a set of $m$ clusterings on $n$ points, produces a $(2+\alpha)$-approximate $\ell$-mean fair consensus clustering.
\end{theorem}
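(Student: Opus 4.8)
The plan is to use the algorithm sketched in the technical overview: given inputs $\m{D}_1,\dots,\m{D}_m$, invoke Algorithm-C() on each $\m{D}_i$ to obtain an $\alpha$-close $\fair$ $\m{F}_i$; then, for each $i\in[m]$, evaluate the objective $\big(\sum_{j=1}^m \dist(\m{D}_j,\m{F}_i)^\ell\big)^{1/\ell}$, and output the candidate $\m{F}_k$ attaining the minimum. Since every $\m{F}_i$ is fair, the output is a valid fair clustering, and the algorithm itself never needs to know an optimal solution; so it remains only to bound the objective value of the output against that of an optimal $\ell$-mean fair consensus clustering $\m{F}^*$.

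I would first record that $\dist(\cdot,\cdot)$ is a (pseudo)metric on clusterings: for any unordered pair $\{u,v\}$ counted by $\dist(\m{A},\m{C})$ — i.e.\ grouped together in one of $\m{A},\m{C}$ and separated in the other — the pair is, in a third clustering $\m{B}$, either together or separated, and in either case it is counted by at least one of $\dist(\m{A},\m{B})$ and $\dist(\m{B},\m{C})$; summing over all such pairs gives the triangle inequality. Next I would fix $\m{D}_i$ to be the input clustering minimizing $\dist(\m{D}_j,\m{F}^*)$ over $j\in[m]$. Because $\m{F}^*$ is itself a fair clustering, the $\alpha$-closeness guarantee of Algorithm-C() gives $\dist(\m{D}_i,\m{F}_i)\le \alpha\,\dist(\m{D}_i,\m{F}^*)$, and the triangle inequality then yields $\dist(\m{F}_i,\m{F}^*)\le \dist(\m{D}_i,\m{F}_i)+\dist(\m{D}_i,\m{F}^*)\le(1+\alpha)\,\dist(\m{D}_i,\m{F}^*)$. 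Consequently, for an arbitrary input $\m{D}_j$,
\[
\dist(\m{D}_j,\m{F}_i)\le \dist(\m{D}_j,\m{F}^*)+\dist(\m{F}_i,\m{F}^*)\le \dist(\m{D}_j,\m{F}^*)+(1+\alpha)\,\dist(\m{D}_i,\m{F}^*)\le (2+\alpha)\,\dist(\m{D}_j,\m{F}^*),
\]
using the minimality of $\dist(\m{D}_i,\m{F}^*)$ in the last step.

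The final move is to observe that the displayed bound is term-by-term in $j$, so raising both sides to the power $\ell$, summing over $j$, and taking the $\ell$-th root preserves the factor $(2+\alpha)$ for every $\ell\ge 1$ (indeed for any $\ell>0$, and the argument degrades gracefully to the $\ell=\infty$/center case); hence $\m{F}_i$ is already a $(2+\alpha)$-approximate $\ell$-mean fair consensus clustering, and the output $\m{F}_k$ — being no worse than $\m{F}_i$ by the choice rule — is too. For the running time, the $m$ invocations of Algorithm-C() cost $O(m\,t(n))$; a single value $\dist(\m{D}_j,\m{F}_i)$ can be computed in $O(n^2)$ time (by comparing the co-clustering indicator over all $\binom{n}{2}$ pairs, or via cluster-intersection counts), and there are $m^2$ such values to compute, for $O(m^2 n^2)$ total, with the final minimization negligible. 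I do not anticipate a genuine obstacle here; the only two points needing care are verifying the triangle inequality for $\dist$ and noticing that the pointwise distance bound transfers to the $\ell$-mean objective uniformly in $\ell$ — which is precisely what makes the single reduction cover the median ($\ell=1$), the center ($\ell=\infty$), and everything between.
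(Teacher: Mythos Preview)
Your proposal is correct and follows essentially the same approach as the paper: pick the input closest to an optimal fair consensus, use the $\alpha$-closeness guarantee together with two applications of the triangle inequality to get a per-input $(2+\alpha)$ bound, then lift this termwise bound to the $\ell$-mean objective and conclude via the minimization step. The running-time argument matches the paper's as well; your additional care in justifying the triangle inequality for $\dist$ and the uniformity in $\ell$ only makes the write-up more self-contained.
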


\begin{proof}
    Let us consider the following algorithm: Suppose we are given $m$ clusterings $\inpset_1,\inpset_2,\cdots,\inpset_m$. Then, for each $\inpset_i$, compute an $\alpha$-close $\fair$ $\m{F}_i$ using Algorithm-C(). Then, output the $\fair$ $\m{F}_k$ that minimizes the $\ell$-mean objective function, i.e.,
    \[
    \m{F}_k = \arg \min_{i} \left(\sum_{j=1}^m \left(\dist(\inpset_j, \m{F}_i)\right)^\ell \right)^{1/\ell}.
    \]

Note, for any $\inpset_j,\m{F}_i$, computing $\dist(\inpset_j, \m{F}_i)$ takes $O(n^2)$ time. Thus, it is easy to see that the overall running time of the above algorithm is $O(m t(n) + m^2 n^2)$. It only remains to argue that the above algorithm achieves $(2+\alpha)$-approximation to the $\ell$-mean fair consensus clustering problem.

Let $\m{F}^*$ be an (arbitrary) optimal fair consensus clustering, and let $\inpset_{i^*}$ be a closest (breaking ties arbitrarily) clustering among $\inpset_1,\inpset_2,\cdots,\inpset_m$, to $\m{F}^*$. We emphasize that we consider $\inpset_{i^*}$ only for the sake of the analysis. Since $\m{F}^*$ is a $\fair$, by the guarantee of Algorithm-C(), we get
\begin{align}
    \label{eq:closest-to-opt}
    \dist(\inpset_{i^*},\m{F}_{i^*}) & \le \alpha \cdot \dist(\inpset_{i^*},\m{F}^*).
\end{align}

We next derive the following

\begin{align*}
    &\left(\sum_{j=1}^m \left(\dist(\inpset_j, \m{F}_{i^*})\right)^\ell \right)^{1/\ell}\\
    & \le \left(\sum_{j=1}^m \left(\dist(\inpset_j,\m{F}^*) + \dist(\m{F}^* ,\m{F}_{i^*}) \right)^\ell \right)^{1/\ell} &&\text{(By the triangle inequality)}\\
    & \le \left(\sum_{j=1}^m \left(\dist(\inpset_j,\m{F}^*) + \dist(\m{F}^* ,\inpset_{i^*}) + \dist(\inpset_{i^*} ,\m{F}_{i^*}) \right)^\ell \right)^{1/\ell} &&\text{(By the triangle inequality)}\\
    & \le \left(\sum_{j=1}^m \left(\dist(\inpset_j,\m{F}^*) + \dist(\m{F}^* ,\inpset_{i^*}) + \alpha \cdot \dist(\inpset_{i^*},\m{F}^*) \right)^\ell \right)^{1/\ell}&&\text{(By \cref{eq:closest-to-opt})}\\
    & \le \left(\sum_{j=1}^m \left(\dist(\inpset_j,\m{F}^*) + \dist(\inpset_j,\m{F}^*) + \alpha \cdot \dist(\inpset_j,\m{F}^*) \right)^\ell \right)^{1/\ell} &&\text{(Since $\inpset_{i^*}$ is closest to $\m{F}^*$)}\\
    &=(2 + \alpha) \left(\sum_{j=1}^m \left(\dist(\inpset_j,\m{F}^*)\right)^\ell \right)^{1/\ell}.
\end{align*}

Now, since our algorithm outputs $\m{F}_k$ that minimizes the $\ell$-mean objective function, the approximation guarantee follows, and that concludes the proof.
\end{proof}

Now, \cref{thm:combine-consensus} together with \cref{thm:closest-1-1-fair} implies the following result.

\consensusequifair*

Similarly, \cref{thm:combine-consensus} together with \cref{thm:closest-p-1-fair} implies the following result.

\consensuspfair*

Also, \cref{thm:combine-consensus} together with \cref{thm:closest-p-q-fair} implies the following result.

\consensuspqfair*

\section{Hardness of Closest Fair Clustering problem}
\label{sec:NP-hard}

In this section, we prove that given a clustering $\m{D}$ over a set of $n$ red-blue colored points where the ratio between the total number of blue and red points is $p$ (where $p$ is an integer), the problem of finding a Closest {\fair} $\m{F}^*$ is \textbf{NP}-hard. Let us start by defining the corresponding decision version of the problem.

\begin{definition}[$\clof$]
    Consider a positive integer $p$. Given a clustering $\m{D}$ over a set of red-blue colored points where the ratio between the total number of blue and red points is $p$, and a non-negative integer $\tau$, decide between the following:
    \begin{itemize}
        \item YES: There exists a {\fair} (on input point set) $\m{F}$ such that $\dist(\m{D}, \m{F}) \le \tau$;
        \item NO: For every {\fair} (on input point set) $\m{F}$, $\dist(\m{D}, \m{F}) > \tau$.
    \end{itemize}
\end{definition}

 We show the following theorem.

\begin{theorem}\label{thm:main}
    For any integer $p \geq 2$, the $\clof$ problem is $\npc$.
\end{theorem}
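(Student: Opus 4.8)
The plan is a polynomial-time reduction from the \thrp{} problem, which is strongly $\nph$ (so it is $\npc$ even when its integers are given in unary). Membership is easy: $\clof\in\textbf{NP}$ since, given a candidate clustering $\m{F}$, one can check in polynomial time that every cluster has blue-to-red ratio exactly $p$ and that $\dist(\m{D},\m{F})\le\tau$ (a sum of cluster-wise pair counts). So fix $p\ge 2$ and a \thrp{} instance: positive integers $a_1,\dots,a_{3m}$ with $\sum_i a_i=mB$ and $B/4<a_i<B/2$ for all $i$, so that any sub-collection of the $a_i$ summing to $B$ has exactly three elements. I would let $\m{D}$ consist of $m$ monochromatic \emph{blue} clusters, each of size $pB$, together with $3m$ monochromatic \emph{red} clusters of sizes $a_1,\dots,a_{3m}$; the overall blue-to-red ratio is $pmB/(mB)=p$ as required. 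The threshold $\tau$ is the cost of a ``canonical'' fair clustering: for $p\ge 3$, the \emph{triple-packing} clustering in which each blue cluster is merged with whole red clusters forming one triple summing to $B$; for $p=2$, the \emph{singleton-packing} clustering in which each red cluster $R_i$ is merged with $2a_i$ blue points drawn entirely from one blue cluster, the blue clusters being assigned so that each is used by exactly the red clusters of one triple. A short computation gives $\tau=\big((2p+1)mB^2-\sum_i a_i^2\big)/2$ for $p\ge 3$ and $\tau=2mB^2$ for $p=2$; since $B$ is polynomial in the (unary) input and $p$ is fixed, the reduction is polynomial.

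Everything rests on one closed-form identity. For a fair clustering $\m{F}=\{F_1,\dots,F_t\}$, let $r_j:=|\red{F_j}|$ (so $|\blue{F_j}|=pr_j$ and $|F_j|=(p{+}1)r_j$), and let $b_{\ell j}$ and $c_{ij}$ be the sizes of the intersections of $F_j$ with the $\ell$-th blue cluster and the $i$-th red cluster. Writing $\dist(\m{D},\m{F})$ as (pairs together in $\m{D}$)$\,+\,$(pairs together in $\m{F}$)$\,-\,2\cdot$(pairs together in both) and simplifying with $\sum_\ell b_{\ell j}=pr_j$, $\sum_j b_{\ell j}=pB$, $\sum_i c_{ij}=r_j$, $\sum_j c_{ij}=a_i$, one obtains
\[
\dist(\m{D},\m{F}) \;=\; \tfrac{(p+1)^2}{2}\sum_j r_j^2 \;+\; \tfrac12\sum_i a_i^2 \;-\; \sum_{i,j}c_{ij}^2 \;-\; \sum_{\ell,j}b_{\ell j}^2 \;+\; m\binom{pB}{2} \;+\; \tfrac{pmB}{2}.
\]
Plugging the canonical solution into this identity yields exactly the stated $\tau$, so if a 3-partition exists the canonical solution is a valid fair clustering of cost $\tau$; this is the completeness direction.

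For soundness I would show $\dist(\m{D},\m{F})\ge\tau$ for \emph{every} fair clustering, with equality only when $\m{F}$ encodes a 3-partition. The two elementary bounds $\sum_{i,j}c_{ij}^2\le\min\{\sum_i a_i^2,\ \sum_j r_j^2\}$ and $\sum_{\ell,j}b_{\ell j}^2\le\min\{mp^2B^2,\ p^2\sum_j r_j^2\}$ — each an instance of $\sum_k x_k^2\le(\sum_k x_k)^2$ for nonnegative reals, applied once to rows and once to columns — substituted into the identity reduce $\dist\ge\tau$ to a short case analysis on $R:=\sum_j r_j^2$ (the cases $R\ge mB^2$, $\sum_i a_i^2\le R<mB^2$, and $R<\sum_i a_i^2$), with $p=2$ and $p\ge 3$ differing only in which bound is binding. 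Tracing the conditions under which all these inequalities are simultaneously tight then forces: no red cluster is split across fair clusters; the blue points of each fair cluster lie inside a single blue cluster; and each blue cluster is entirely consumed by the fair clusters it feeds. Consequently the fair clusters fed by a given blue cluster contain whole red clusters of total size exactly $B$, which by $B/4<a_i<B/2$ means exactly three red clusters — so the red clusters split into $m$ triples each summing to $B$, a valid 3-partition.

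The main obstacle is precisely this soundness direction: a fair clustering can, a priori, lower its cost by cleverly splitting the monochromatic input clusters, and the very shape of the cheapest fair clustering changes with $p$ (triple-packing for $p\ge 3$ but singleton-packing for $p=2$), so part of the difficulty lies in identifying the correct canonical solution and the correct $\tau$ at all. Once the closed-form distance identity is in hand the remaining effort is careful bookkeeping: checking that the case analysis pins the minimum at $\tau$ for every $p\ge 2$, and verifying that equality is attainable only in the ``3-partition'' configuration — in particular handling the sub-case in which a single blue cluster feeds several fair clusters, which still must be shown to induce a genuine triple partition of the red clusters.
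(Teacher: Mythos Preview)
Your reduction and thresholds coincide with the paper's (your $\tau$ for $p\ge3$ simplifies to the paper's $\frac12\sum_j x_j(T-x_j)+\frac{n}{3}pT^2$, and your $\tau=2mB^2$ for $p=2$ equals the paper's $p\sum_\ell x_\ell^2+\frac{p^2}{2}\sum_\ell x_\ell(T-x_\ell)$ at $p=2$), and the completeness direction is the same. The soundness argument, however, is genuinely different. The paper proceeds structurally: for $p\ge 3$ it shows via a chain of exchange arguments that in any closest fair clustering every blue block $B_i$ stays intact, which forces exactly $m$ fair clusters each with $r_j=B$, and then lower-bounds the cost directly; for $p=2$ it reuses the weaker structural fact that each $\blue{F_a}$ lies inside a single $B_i$ (valid for all $p>1$) and carries out a separate algebraic computation. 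Your route is analytic and uniform: one closed-form identity for $\dist(\m{D},\m{F})$ in terms of $R=\sum_j r_j^2$, $\sum_{i,j} c_{ij}^2$ and $\sum_{\ell,j} b_{\ell j}^2$; the two row/column bounds; and the observation that the resulting piecewise-linear lower envelope $f(R)$ is convex with minimum exactly $\tau$ --- attained at $R=mB^2$ when $p\ge 3$ and at $R=\sum_i a_i^2$ when $p=2$, the split arising because the sign of $(p+1)^2-2p^2$ flips at $p=1+\sqrt2$. Tracing equality in those bounds then recovers in one stroke the same structural facts the paper proves by exchange: no red cluster is split, each $\blue{F_j}$ sits inside a single $B_i$, and (for $p\ge3$) $R=mB^2$ together with each $r_j$ being a multiple of $B$ forces $r_j=B$. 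Your argument is shorter and handles all $p\ge 2$ in a single framework; the paper's exchange lemmas are longer but give more direct combinatorial insight into why the optimum has the claimed shape.
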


It is straightforward to see that the problem $\clof$ is in \textbf{NP}. To establish \textbf{NP}-hardness, we provide a polynomial-time reduction from the \emph{3-Partition problem} (which we define below) to the $\clof$ problem. 

\begin{definition}[$3$-Partition problem]
    Given a (multi)set of positive integers $S = \{ x_1, \ldots, x_n\}$, decide whether (YES:) there exists a partition of $S$ into $m$ (disjoint) subsets $S_1, S_2, \ldots, S_m \subseteq S$ where $m = n/3$, such that
    \begin{itemize}
        \item For all $i$, $|S_i| = 3$; and
        \item For all $i$, $\sum_{x_j \in S_i}x_j = T$, where $T = \frac{\sum_{x_j \in S}x_j}{n/3}$,
    \end{itemize}
    or (NO:) no such partition exists.
\end{definition}

Note that, by partition of $S$ we mean $S_i \cap S_j = \emptyset$ for all $i \neq j$ and $\bigcup_{i = 1}^m S_i = S$.

We know that the $3$-Partition problem is in fact \emph{strongly $\npc$}~\cite{garey1975complexity}, i.e., the $3$-Partition problem remains $\npc$ even if all the integers $x_i \in S$ are bounded by a polynomial in $n$; more specifically, $\max_{x_i \in S} x_i \le n^c$ for some non-negative constant $c$. Furthermore, the $3$-Partition problem remains strongly $\npc$ even when for all $x_i \in S$, $x_i \in (T/4, T/2)$, where $T= \frac{3}{n} \sum_{x_i \in S} x_i$~\cite{garey1975complexity}. From now on, we refer to this restricted (with a restriction that each $x_i \in (T/4, T/2)$) variant of the $3$-partition problem as the $\thrp$ problem.

\begin{theorem}(\cite{garey1975complexity}) \label{thm:garey-johnson}
    $\thrp$ is strongly $\npc$.
\end{theorem}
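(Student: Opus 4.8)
The plan is to prove strong \textbf{NP}-completeness of $\thrp$ by exhibiting a polynomial-time many-one reduction from \textsc{3-Dimensional Matching} (3DM) that produces instances in which every integer is bounded by a fixed polynomial of the instance size; since 3DM is \textbf{NP}-complete and the target values stay polynomially bounded, this shows that $\thrp$ is $\npc$ even under a unary (equivalently, polynomially bounded) encoding, i.e., strongly \textbf{NP}-complete. Membership in \textbf{NP} is immediate, since a partition is a succinct, polynomially checkable certificate. This is the classical Garey--Johnson argument, and I would organize it in two stages, $\text{3DM} \to \textsc{4-Partition} \to \textsc{3-Partition}$, the second stage being the standard step that also lets me enforce the normalized range $x_i \in (T/4, T/2)$.

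For the first stage, given a 3DM instance with pairwise disjoint $W, X, Y$ of size $q$ and a triple family $M$ with $|M| = k$, I would fix a radix $r = \Theta(k)$ large enough that no carries occur when adding $O(k)$ of the numbers produced. Using a constant number of base-$r$ digit positions, I assign to each triple $m_t = (w_a, x_b, y_c)$ a "selector" integer whose digits record the indices $a, b, c$ (as coefficients all smaller than $r$), and I introduce families of "filler" integers — one type realizing a specific element of $W \cup X \cup Y$, and generic ones to absorb the $k - q$ unselected triples — designed so that a block of four of these integers can sum to the common target $B$ only when it consists of one selector together with fillers that exactly complement its three recorded indices (forcing a carry into a fixed high-order digit). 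A partition into blocks of sum $B$ then reads off, from the selectors paired with element-specific fillers, a perfect matching in $M$; conversely a perfect matching yields such a partition. Adding a uniform high-order offset to every integer forces each value strictly between $B/5$ and $B/3$, so every block has exactly four elements, and all values are of magnitude $O(r^{O(1)}) = \mathrm{poly}(k)$.

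For the second stage, I reduce this 4-Partition instance to a 3-Partition instance by scaling every integer $a_i$ to $C a_i$ for a suitable constant $C$, adjoining a controlled collection of "dummy" integers, and inserting small additive offsets so that the only feasible triples summing to the new target $T$ are those that, after removing dummies, correspond one-to-one with the quadruples of a valid 4-Partition; a counting argument (balancing the numbers of dummies, triples, and quadruples) makes this correspondence forced. Crucially, I would choose the offsets at the right order of magnitude — around $T/3$ — so that every integer of the final instance lies strictly in $(T/4, T/2)$, which is exactly the condition $\thrp$ requires (and which makes $|S_i| = 3$ automatic, since groups of two sum below $T$ and groups of four above it). All arithmetic keeps numbers polynomially bounded, so strongness is preserved along the whole chain.

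I expect the principal obstacle to be the simultaneous parameter tuning in the first stage: the radix $r$, the number and placement of digit positions, the high-order offset, and (after the second stage) the scaling constant $C$ must all be chosen together so that (i) summing the digit-encodings within a block never produces an unintended carry, so that a block-sum equals the target if and only if the encodings complement exactly; (ii) every integer remains of polynomial magnitude, which is precisely what makes the result "strong"; and (iii) every integer lands strictly inside $(T/4, T/2)$ rather than some other sub-interval. Each requirement is routine in isolation, but they pull against one another, and pinning them down precisely is the technical heart of the proof. Once the constants are fixed, correctness splits cleanly into the forward direction (matching $\Rightarrow$ partition) and the backward direction (partition $\Rightarrow$ matching, where the range bound forces each group to have three elements and the no-carry property forces exactness of the digit sums), each a direct verification.
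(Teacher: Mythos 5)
The paper does not prove this theorem; it cites it directly from Garey and Johnson (1975) and uses it as a black box, which is standard. What you have sketched is a reconstruction of the Garey--Johnson argument itself: the reduction chain $\textsc{3-Dimensional Matching} \to \textsc{4-Partition} \to \textsc{3-Partition}$, a radix-encoding gadget in the first stage, and a rescale-plus-dummies-plus-offset gadget in the second. Your sketch correctly isolates the two features that make the reduction \emph{strong}: all constructed integers stay $\mathrm{poly}(k)$-bounded because the radix $r$ is $\Theta(k)$ and only $O(1)$ digit positions are used, and a uniform additive shift forces every value into $(B/5,\,B/3)$, respectively $(T/4,\,T/2)$, so block sizes of four and then three follow automatically without separate cardinality constraints. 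So your proposal is correct in outline, but it redoes work the paper outsources entirely to the citation; there is no hidden shortcut in the paper that you are missing.

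One small caution on your phrasing of the offset step: writing "offsets around $T/3$" is circular since $T$ itself changes when the offsets are added. The clean way to state it is to add a fixed shift $M$ to every item, which raises the target from $T$ to $T + 3M$, and then verify that choosing $M$ sufficiently large (for instance $M>T$) places every shifted item strictly inside $\left(\tfrac{T+3M}{4},\,\tfrac{T+3M}{2}\right)$; an analogous shift is used in the $\textsc{4-Partition}$ stage with target $B+4M$. Beyond that, the parameter tuning you flag as the technical heart (choosing $r$, the digit layout, the magnitude of the shifts, and the scaling constant jointly) is exactly what the Garey--Johnson source pins down, and fleshing out your sketch would amount to transcribing their construction.
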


We divide our \textbf{NP}-hardness proof into two parts. First, we prove $\clof$ is \textbf{NP}-hard for any integer $p \geq 3$, and then we prove $\clof$ is \textbf{NP}-hard also for $p = 2$ using a slightly different reduction.

\subsection{\texorpdfstring{Hardness of $\clof$ for $p \geq 3$}{Hardness of clof for p ≥ 3}}
In this subsection, we prove the following lemma.

\begin{lemma}\label{lem:main-one}
    For any integer $p \geq 3$, the $\clof$ problem is \textbf{NP}-hard.
\end{lemma}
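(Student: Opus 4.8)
The plan is to reduce from \thrp{} (the restricted 3-Partition problem, which is strongly \texttt{NP}-complete by~\cref{thm:garey-johnson}) to \clof{} for a fixed integer $p\ge 3$. Given an instance $S=\{x_1,\ldots,x_n\}$ of \thrp{} with target sum $T=\frac{3}{n}\sum_i x_i$ and each $x_i\in(T/4,T/2)$, I would construct an input clustering $\m{D}$ as follows. For each integer $x_i\in S$, create a cluster $D_i$ containing $x_i$ blue points and some fixed number of red points (to be chosen) — the intuition being that the red points force how these blue ``blocks'' must be regrouped. The key design goal is: a clustering $\m{F}$ is fair and close to $\m{D}$ if and only if the blue points originating from the $D_i$'s can be grouped into $m=n/3$ fair clusters, each of which pools together exactly three of the original blocks summing to $T$. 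I would add dummy clusters (or extra red points inside each $D_i$) so that the only cost-efficient way to fairness is to keep each block $D_i$ intact and merge exactly three blocks to form a cluster whose blue count equals $p$ times its red count; setting the red count per target cluster so that $3$ blocks of total blue size $T$ exactly balance against the available red points is what encodes the ``$|S_i|=3$ and sum $=T$'' constraints. The restriction $x_i\in(T/4,T/2)$ is crucial here: it guarantees that any group of original blocks summing to $T$ must consist of exactly three blocks (two blocks give $<T$, four give $>T$), so the combinatorial structure of \thrp{} is faithfully captured without needing to separately enforce $|S_i|=3$.

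The main steps, in order, would be: (1) describe the reduction precisely — specify the number of red points per cluster $D_i$, any auxiliary clusters, and the threshold $\tau$; (2) verify the global ratio of blue to red points in the constructed instance is exactly $p$, so that the instance is well-formed and a fair clustering can exist; (3) prove the YES direction: given a valid 3-partition $S_1,\ldots,S_m$ of $S$, build the fair clustering that merges, for each $S_j$, the three corresponding blocks into one cluster, and compute that $\dist(\m{D},\m{F})\le\tau$ exactly — this should be a direct counting of intercluster pairs created by the merges; (4) prove the NO direction by contrapositive: assume there is a fair $\m{F}$ with $\dist(\m{D},\m{F})\le\tau$, and argue that $\tau$ is tight enough to force $\m{F}$ to have a very rigid structure — namely each original block $D_i$ stays together and clusters of $\m{F}$ are unions of whole blocks. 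Here I would use an exchange/local-move argument (analogous in spirit to the structural claims in~\cref{sec:equiproportion} and~\cref{sec:multiple-of-p-clustering}, e.g.~\cref{clm:at-most-one-partition-size-more-p} and the ``cut vs.\ merge'' tradeoffs) showing that splitting any block strictly increases cost beyond $\tau$. Then fairness plus the block structure plus the $(T/4,T/2)$ size constraint force each fair cluster to be the union of exactly three blocks summing to $T$, which recovers a valid 3-partition. Since all numbers in the \thrp{} instance are polynomially bounded (strong \texttt{NP}-completeness), the constructed instance has polynomial size, so the reduction runs in polynomial time.

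The main obstacle I anticipate is calibrating the number of red points (per cluster and overall) together with the threshold $\tau$ so that \emph{both} directions go through simultaneously: $\tau$ must be large enough to admit the honest ``three-block-merge'' solution in the YES case, yet small enough that in the NO case it provably forbids breaking any block or forming clusters from a wrong number of blocks. This is a balancing act because every merge of $k$ blocks of sizes (in points) $a_1,\ldots,a_k$ contributes $\sum_{i<j}a_ia_j$ new intercluster pairs, and I need the ``canonical'' cost (always merging in triples of total blue weight $T$) to be strictly separated from the cost of any deviation. The restriction $x_i\in(T/4,T/2)$ helps, but I will likely need to scale up all point counts by a common factor (still polynomial, by strong \texttt{NP}-hardness) to create enough of a gap, and to choose the per-cluster red count as a function of $T$ and $p$ so that a triple of blocks balances exactly. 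A secondary technical point is handling the case $p\ge 3$ versus $p=2$ separately (as the excerpt already signals via~\cref{lem:main-one} and a remark about a different reduction for $p=2$): for $p\ge 3$ there is more ``room'' in the red counts, which is presumably why this case is treated first.
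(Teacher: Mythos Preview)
Your proposal heads in the right direction (reduce from strong \thrp, make fairness force triples summing to $T$), but it differs from the paper's construction in a way that makes the NO direction much harder than it needs to be.

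The paper's reduction does \emph{not} encode the items $x_i$ as blue blocks that must stay intact. Instead it creates $n/3$ large monochromatic \emph{blue} clusters $B_1,\ldots,B_{n/3}$ each of size $pT$ (these play the role of the bins), and $n$ small monochromatic \emph{red} clusters $R_1,\ldots,R_n$ with $|R_j|=x_j$ (the items). The threshold is the exact cost of assigning three whole red clusters summing to $T$ into each blue bin. The key structural lemma for the NO direction is \cref{clm:np-main-claim}: for $p\ge 3$, in any closest fair clustering every large blue cluster $B_i$ remains intact, i.e.\ $\blue{F_a}=B_i$ for some $F_a$. This is proved by a sequence of exchange arguments exploiting the size $pT$ of the $B_i$'s and the factor $p\ge 3$ (the final merge step in the proof of \cref{clm:np-main-claim} is exactly where $p\ge 3$ is used, and this is why $p=2$ requires a separate argument). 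Once the bins are intact, the paper does \emph{not} need the red items $R_j$ to stay whole; a short counting argument (\cref{clm:exists-ell-lese-ml}) shows the cost strictly exceeds $\tau$ whenever no valid $3$-partition exists.

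Your plan instead tries to argue that the \emph{small} item-blocks stay intact. That is the fragile direction: splitting a small block costs little, and your proposed fix of ``scaling up all point counts'' does not create a gap, since both the canonical cost and any deviation cost scale by the same quadratic factor. The paper's asymmetry---large bins resist splitting, small items need not---is precisely what makes the threshold $\tau$ computable in closed form and the NO case a two-line inequality rather than a delicate calibration. If you want to repair your approach, swap the roles: make the $n/3$ ``target'' clusters large and monochromatic, make the $x_i$'s the small side, and prove the large side stays intact.
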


\paragraph{Reduction from $\thrp$ to $\clof$ :} Let us consider a $\thrp$ instance $S = \{ x_1, x_2, \ldots, x_n\}$, where for all $i \in [n]$, $x_i \le n^c$, for some non-negative constant $c$. Given the $\thrp$ instance $S = \{ x_1, x_2, \ldots, x_n\}$ we create a $\clof$ instance $( \m{C}, \tau )$ as follows:

\begin{itemize}
    \item $ \m{C} = \{B_1, \ldots, B_{n/3}, R_1,\ldots,R_n \} $, where for each $i \in \{1,\ldots,n/3\}$, $B_i$ is a monochromatic blue cluster (i.e., containing only blue points) of size $pT$ and for each $j \in \{1,\ldots, n\}$, $R_j$ is a monochromatic red cluster (i.e., containing only red points) of size $x_j$ (i.e., $|R_j| = x_j$);

    \item \[ \tau = \frac{1}{2} \sum_{j = 1}^n x_j \left(T - x_j\right) + \frac{n}{3}pT^2.\]
\end{itemize}

Since each $x_i \le n^c$, the size of the instance $( \m{C}, \tau )$ is polynomial in $n$. Moreover, it is straightforward to see that the reduction runs in polynomial time.

Now, we argue that the above reduction maps a YES instance of the $\thrp$ to a YES instance of the $\clof$.

\begin{lemma}\label{lem:yes-instance}
    If $S$ is a YES instance of the $\thrp$, then $(\m{C}, \tau) $ is also a YES instance of the $ \clof$.
\end{lemma}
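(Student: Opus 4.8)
The plan is to exhibit an explicit fair clustering $\m{F}$ built from a valid $3$-partition of $S$ and to show that its distance to $\m{C}$ is exactly $\tau$. Suppose $S_1,\dots,S_{n/3}$ is a valid $3$-partition, so each $S_i=\{x_{i_1},x_{i_2},x_{i_3}\}$ with $x_{i_1}+x_{i_2}+x_{i_3}=T$. For each $i$, I would form a single cluster $F_i$ by taking the blue cluster $B_i$ (of size $pT$) together with the three red clusters $R_{i_1},R_{i_2},R_{i_3}$ (of total size $T$). Then $|\blue{F_i}|=pT$ and $|\red{F_i}|=T$, so the ratio is exactly $p$, i.e.\ each $F_i$ is fair; since the $S_i$ partition $S$ and there are exactly $n/3$ blue clusters, every input point lands in exactly one $F_i$, so $\m{F}=\{F_1,\dots,F_{n/3}\}$ is a genuine fair clustering of the input point set.

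The second step is the distance computation. I would use the pair-counting definition of $\dist$: a pair is charged iff it is together in one clustering and separated in the other. In $\m{C}$, the only pairs grouped together are within a single $B_i$ or within a single $R_j$; in $\m{F}$, pairs grouped together are those within some $F_i$. So $\dist(\m{C},\m{F})$ splits into (a) pairs together in $\m{C}$ but separated in $\m{F}$ — there are none, since each $B_i$ and each $R_j$ is wholly contained in one $F_i$; and (b) pairs together in $\m{F}$ but separated in $\m{C}$ — these are exactly the cross pairs inside each $F_i$ joining points from distinct original clusters. For a fixed $F_i$ with red clusters $R_{i_1},R_{i_2},R_{i_3}$ and blue cluster $B_i$, these cross pairs number $\binom{T}{2}-\sum_{k=1}^{3}\binom{x_{i_k}}{2}$ (red--red across distinct $R$'s) plus $pT\cdot T$ (every blue--red pair). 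Summing $pT\cdot T$ over the $n/3$ clusters gives $\tfrac{n}{3}pT^2$, matching the second term of $\tau$. For the red--red term, summing $\binom{T}{2}-\sum_k\binom{x_{i_k}}{2}$ over all $i$ and reindexing the inner sum over all $j\in[n]$ gives $\tfrac{n}{3}\binom{T}{2}-\sum_{j=1}^n\binom{x_j}{2}=\tfrac12\big(\tfrac{n}{3}T^2-\sum_j x_j^2\big)-\tfrac12\big(\tfrac{n}{3}T-\sum_j x_j\big)$, and since $\sum_j x_j=\tfrac{n}{3}T$ the linear part cancels, leaving $\tfrac12\sum_j x_j(T-x_j)$ after rewriting $\tfrac{n}{3}T^2=\sum_j x_j T$. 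This is precisely the first term of $\tau$, so $\dist(\m{C},\m{F})=\tau$, and $(\m{C},\tau)$ is a YES instance.

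I expect the only mildly delicate point to be the bookkeeping in step two: making sure that the ``together in $\m{C}$, separated in $\m{F}$'' contribution is genuinely zero (which relies on the fact that each $F_i$ is a union of whole original clusters, so nothing internal to a $B_i$ or $R_j$ is ever split), and that the algebraic identity $\tfrac{n}{3}\binom{T}{2}-\sum_j\binom{x_j}{2}=\tfrac12\sum_j x_j(T-x_j)$ holds using $\sum_j x_j=\tfrac{n}{3}T$. Neither is hard, but it is the place where an off-by-a-constant error would creep in. The construction itself — and in particular the choice of $|B_i|=pT$ so that the $p:1$ ratio is met precisely when each red side sums to $T$ — is what forces the correspondence, and I would state explicitly that no further regrouping can help, which is all that is needed for the YES direction (the NO direction being handled in a subsequent lemma).
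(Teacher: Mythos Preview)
Your proposal is correct and follows essentially the same approach as the paper: you construct the identical fair clustering $\m{F}=\{B_i\cup R_{i_1}\cup R_{i_2}\cup R_{i_3}\}$ and verify $\dist(\m{C},\m{F})=\tau$. The paper's distance computation is phrased per-red-cluster (``each $R_j$ is merged with $T-x_j$ other red points, counted twice'') rather than via your binomial-coefficient identity, but the content is the same; your observation that the ``together in $\m{C}$, separated in $\m{F}$'' contribution vanishes is exactly what the paper means by noting that no red cluster needs to be cut. The only superfluous remark is your closing comment that ``no further regrouping can help'' --- for the YES direction, exhibiting one fair clustering at distance $\le\tau$ is all that is required.
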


\begin{proof}
    Suppose $S$ is a YES instance of the $\thrp$. Then there exists a partition $S_1, S_2, \ldots, S_{n/3} \, \, \text{of} \, \, S = \{ x_1, x_2, \ldots, x_n\}$ such that for all $1\le i \le n/3$, $|S_i| = 3$ and
    \[
        \sum_{x_j \in S_i}x_j = T \, \, \text{where} \, \, T = \frac{\sum_{x_k \in S} x_k}{\frac{n}{3}}.
    \]
    Let, $S_i = \{ x_{i_1}, x_{i_2}, x_{i_3}\}$. By our construction of $(\m{C}, \tau)$, we have $|R_{i_j}| = x_{i_j}$, for $j \in \{1,2,3\}$.
   
    Now, we construct a fair clustering $\m{F}$ by merging each $B_i$ with $R_{i_1},R_{i_2}, R_{i_3}$. More formally,
    
    \[
       \m{F} = \left\{ \left(B_i \cup \bigcup_{j \in \{1,2,3\}}R_{i_j} \right)  \, \,  \middle| \, \, i \in [n/3] \right \} .
    \]
    Note that $\m{F}$ is a fair clustering because for each cluster $F \in \m{F}$ we have $|\blue{F}| = pT$ (since $\blue{F}=B_i$ for some $i \in [n/3]$) and 
    
    \[|\red{F}| = \sum_{j \in \{1,2,3\}}|R_{i_j}| = \sum_{j \in \{1,2,3\}}x_{i_j} = T.\] 
    
    Next, we claim that
    \[
        \dist(\m{C}, \m{F}) = \frac{1}{2} \sum_{j = 1}^n x_j  \left(T - x_j\right) + \frac{n}{3}pT^2 = \tau.
    \]

    \begin{itemize}
        \item \textbf{Reason behind the term $\frac{1}{2} \sum_{j = 1}^n x_j \left(T - x_j \right)$}: In $\m{F}$, each red cluster $R_j$ is merged with $(T - |R_j|)$ red points from other clusters. This expression counts the number of such pairs. $1/2$ comes from the fact that we are counting a pair twice, once while considering the cluster $R_j$ and again while considering a cluster $R_{j'}$, for $j' \neq j$, that is merged with $R_j$. (One important thing to note is that since $S$ is a YES instance of the $\thrp$, we do not need to cut any red cluster $R_j$ in $\m{C}$ to form the {\fair} $\m{F}$.)
        \item \textbf{Reason behind the term $(n/3)pT^2$}: To form each cluster of $\m{F}$ we have merged $T$ red points with $pT$ blue points. We have $n/3$ clusters in $\m{F}$, hence the number of such pairs in total is $(n/3)pT^2$.
    \end{itemize}
    Hence, $(\m{C},\tau) $ is a YES instance of the $ \clof$.
\end{proof}

Now, we argue that our reduction maps a NO instance of the $\thrp$ to a NO instance of the $\clof$.

\begin{lemma}\label{lem:no-instance}
 For $p \geq 3$, if $S $ is a NO instance of the $\thrp$, then $(\m{C}, \tau) $ is also a NO instance of the $ \clof$.
\end{lemma}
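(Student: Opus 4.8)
The plan is to prove the contrapositive: assuming $(\m{C},\tau)$ is a YES instance of $\clof$, I will recover a valid $3$-partition of $S$, thereby showing that if $S$ is a NO instance then $(\m{C},\tau)$ must be a NO instance. So suppose $\m{F}$ is a fair clustering with $\dist(\m{C},\m{F}) \le \tau$. The first step is to lower-bound $\dist(\m{C},\m{F})$ for \emph{any} fair clustering $\m{F}$ and show that this lower bound equals $\tau$ exactly when $\m{F}$ has a very rigid structure, which will then encode a $3$-partition. I would decompose $\dist(\m{C},\m{F})$ into two nonnegative contributions: (i) the cost of regrouping the red points — pairs of red points together in $\m{C}$ but separated in $\m{F}$, plus pairs of red points separated in $\m{C}$ but together in $\m{F}$ — and (ii) the cost of the blue--red pairs, i.e., for each cluster $F\in\m{F}$, the quantity $|\red{F}|\cdot|\blue{F}|$ (since in $\m{C}$ no blue and red point share a cluster, every such pair is a disagreement). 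There is no blue--blue cost issue to worry about beyond noting that keeping each $B_i$ intact is never disadvantageous; I would argue (using a swapping/exchange argument, analogous to the monochromatic-merge analysis in \cref{sec:equiproportion}) that in an optimal fair clustering each blue cluster $B_i$ stays whole inside a single cluster of $\m{F}$, so the blue points of $\m{C}$ partition into the clusters of $\m{F}$ with multiplicities that are multiples of $pT$.

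For part (ii): since $\m{F}$ is fair and $|\blue{V}|=\frac{n}{3}pT$, $|\red{V}| = \sum_j x_j = \frac{n}{3}T$, every cluster $F$ has $|\blue{F}| = p\,|\red{F}|$, so $|\red{F}|\cdot|\blue{F}| = p\,|\red{F}|^2$. Writing $F$'s red size as $r_F$, the total blue--red cost is $p\sum_F r_F^2$ subject to $\sum_F r_F = \frac{n}{3}T$; but each nonempty cluster must contain at least one whole $B_i$ (else it would be all-red, contradicting fairness unless empty), so there are at most $\frac{n}{3}$ nonempty clusters, and by convexity $p\sum_F r_F^2 \ge p\cdot\frac{n}{3}\cdot T^2 = \frac{n}{3}pT^2$, with equality iff there are exactly $\frac{n}{3}$ clusters each containing exactly one $B_i$ and exactly $T$ red points. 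For part (i): the cost of regrouping reds is minimized (and, I claim, equals $\frac12\sum_j x_j(T-x_j)$) precisely when each original red cluster $R_j$ is \emph{not split} and the $R_j$'s are grouped into bundles summing to exactly the red-capacity $T$ of each cluster — this is where I would use a lemma in the spirit of \cref{lem:opt-equ} / \cref{prop.bound.the.firsterm}, stating that splitting any $R_j$ or forming a red-bundle of size $\ne T$ strictly increases the cost; here the hypothesis $x_j \in (T/4, T/2)$ is crucial, since it forces every valid bundle summing to $T$ to consist of \emph{exactly three} $R_j$'s (two are too few: $<T$; four too many: $>T$).

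Combining, $\dist(\m{C},\m{F}) \ge \frac12\sum_j x_j(T-x_j) + \frac{n}{3}pT^2 = \tau$ for every fair $\m{F}$, and equality forces: (a) no $R_j$ is split, (b) the $R_j$'s are partitioned into groups of red-sum exactly $T$, hence (by the $(T/4,T/2)$ bound) groups of exactly three, and (c) each such group is paired with exactly one $B_i$. Reading off the index sets of the groups in (b) yields subsets $S_1,\dots,S_{n/3}$ of $S$ with $|S_i|=3$ and $\sum_{x_j\in S_i} x_j = T$ — a valid $3$-partition. Thus $S$ is a YES instance of $\thrp$, contradicting the assumption that it is a NO instance; therefore $(\m{C},\tau)$ is a NO instance of $\clof$.

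The main obstacle I anticipate is making the structural claims rigorous: specifically, (1) proving that keeping each $B_i$ whole and not splitting any $R_j$ is without loss of generality in an optimal fair clustering — this needs a careful exchange argument handling the case where splitting a red cluster could in principle be compensated by savings elsewhere; and (2) pinning down the exact value $\frac12\sum_j x_j(T-x_j)$ as the minimum red-regrouping cost over all fair clusterings and characterizing the equality case. Both are analogous to arguments already developed in \cref{sec:equiproportion} and \cref{sec:multiple-of-p-clustering} (the maximal-fair-cluster property and \cref{prop.bound.the.firsterm}-style bounds), but the two-sided accounting of disagreements (together-but-separated \emph{and} separated-but-together) must be tracked carefully so that the bound is tight rather than merely an inequality. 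Once the two contributions are shown to be separately bounded below by $\frac12\sum_j x_j(T-x_j)$ and $\frac{n}{3}pT^2$ with the stated equality conditions, the reduction closes immediately.
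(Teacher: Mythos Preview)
Your high-level plan matches the paper's: prove the contrapositive by showing $\dist(\m{C},\m{F})\ge\tau$ for every fair $\m{F}$, with equality forcing a $3$-partition structure. The decomposition into a blue--red part and a red--regrouping part, and the use of $x_j\in(T/4,T/2)$ to force bundle size three, are also as in the paper.

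The genuine gap is the structural claim you flag as obstacle (1). You assert that in an optimal fair clustering each $B_i$ stays whole, and then invoke convexity with ``at most $n/3$ nonempty clusters''. But without that structural claim the convexity step fails outright: a fair clustering may have \emph{many more} than $n/3$ clusters (e.g.\ one cluster of $p$ blues and $1$ red per red point), and then $p\sum_F r_F^2$ is much \emph{smaller} than $\frac{n}{3}pT^2$, not larger. So the lower bound you want collapses unless you first prove the structural claim. The paper devotes the bulk of the argument (\cref{clm:np-main-claim} and the supporting \cref{clm:claim-np}, \cref{clm:np-small-claim-one}, \cref{clm:np-small-claim-two}, \cref{clm:np-all-partition-mop}) to exactly this: showing that in a \emph{closest} fair clustering $\m{F}^*$, every $F_a$ satisfies $\blue{F_a}=B_i$ for some $i$. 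These are exchange arguments specific to this reduction; the $1{:}1$ machinery of \cref{sec:equiproportion} and \cref{prop.bound.the.firsterm} that you cite does not supply them.

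Crucially, you never use the hypothesis $p\ge 3$. In the paper it enters precisely in the proof of \cref{clm:np-main-claim}: merging two clusters $F_{a_k},F_{a_\ell}$ whose blue parts come from the same $B_i$ changes the cost by a factor involving $\frac{2}{p}+\frac{1}{p^2}-1$, which is negative (i.e.\ merging strictly helps) only for $p\ge 3$. For $p=2$ the structural claim can fail, which is why the paper needs a separate reduction there. Your proposal, as written, would purport to work for all $p\ge 2$, which it cannot. You need to locate and prove the $p\ge 3$-dependent step before the rest of your argument goes through.
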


\begin{proof}
    To prove $(\m{C}, \tau) $ is a NO instance of the $ \clof$ we consider an (arbitrary) closest {\fair} $\m{F}^*$ to $\m{C}$, and then argue that $\dist(\m{C}, \m{F}^*) > \tau$.

    Let us recall, $S = \{x_1,\ldots, x_n\}$, $\m{C} = \{ B_1,\ldots,B_{n/3}, R_1,\ldots,R_n\}$ where $B_i$'s are monochromatic blue clusters each of size $pT$, where $T = \sum_{x_i \in S} x_i / (n/3)$ and for each $\ell \in [n]$, $R_\ell$ is a monochromatic red cluster of size $x_\ell$.

    To prove $\dist(\m{C}, \m{F}^*) > \tau$ we prove the following claim regarding the structure of $\m{F}^*$ when $p \geq 3$.

     \begin{claim}\label{clm:np-main-claim}
        Consider any integer $p \geq 3$. Then, for all $B_i$, there exists $F_a \in \m{F}^*$ such that $\blue{F_a} = B_i$.
    \end{claim}

    For now, let us assume the above claim and prove \cref{lem:no-instance}. Since, the number of monochromatic blue clusters in $\m{C}$ is $n/3$, assuming \cref{clm:np-main-claim} we get $|\m{F}^*| = n/3$.

    According to the definition of consensus metric, $\dist(\m{C}, \m{F}^*)$ counts total number of pairs $(u,v)$ that are together in $\m{C}$ but separated by $\m{F}^*$ and the number of pairs $(u,v)$ that are in different clusters in $\m{C}$ but together in $\m{F}^*$.

    For a cluster $F_a \in \m{F}^*$, let us define $\costf{F_a}$ to be the number of pairs $u,v$ such that $u,v \in F_a$ but in different clusters in $\m{C}$.

    Formally, we define $\costf{F_a}$ as
    \[
        \costf{F_a} := |\{ (u,v) \mid u,v \in F_a \, \, \text{but} \, \, u \in C_i, v \in C_j, C_i, C_j \in \m{C}, i \neq j \}|.
    \]

    It is easy to verify that 

    \[
        \costf{F_a} = pT^2 + \frac{1}{2}\sum_{\ell = 1}^n|R_{a,\ell}|(T - |R_{a,\ell}|)
    \]
    where $R_{a,\ell} = R_\ell \cap F_a$.

    In the above expression, $pT^2$ counts the number of blue-red pairs in $F_a$, note by assuming \cref{clm:np-main-claim} we have $|\blue{F_a}| = pT$ and since $F_a$ is fair we have $|\red{F_a}| = T$. In the above expression $(1/2 \cdot \sum_{\ell = 1}^n|R_{a,\ell}|(T - |R_{a,\ell}|))$ counts the number of red-red pairs $(u,v) \in F_a$ such that $u \in R_{\ell_1}$ and $v \in R_{\ell_2}$ for $\ell_1 \neq \ell_2$.

    It is straightforward to observe

    \[
        \dist(\m{C}, \m{F}^*) \geq \sum_{F_a \in \m{F}^*} \costf{F_a}.
    \]

    Now we prove 

    \[
        \sum_{F_a \in \m{F}^*} \costf{F_a} > \tau.
    \]

    For that purpose, we use the following claim, the proof of which is provided later.
    \begin{claim}\label{clm:exists-ell-lese-ml}
        For each $\ell \in [n]$, let $m_\ell = \max_{a: F_a \in \m{F}^*}|R_{a,\ell}|$. There exists an $\ell \in [n]$ such that $m_{\ell} < x_{\ell}$.
    \end{claim}

    So, from now on, we assume the above claim and proceed with arguing $\sum_{F_a \in \m{F}^*} \costf{F_a} > \tau$.

    \begin{align}
        \sum_{F_a \in \m{F}^*} \costf{F_a} &= \sum_{F_a \in \m{F}^*} \left( pT^2 + \frac{1}{2}\sum_{\ell = 1}^n|R_{a,\ell}|(T - |R_{a,\ell}|) \right) \n \\
        &= \frac{n}{3}pT^2 + \frac{1}{2}\sum_{F_a \in \m{F}^*}\sum_{\ell = 1}^n|R_{a,\ell}|(T - |R_{a,\ell}|) \n \\
        &\geq \frac{n}{3}pT^2 + \sum_{\ell = 1}^n (T - m_\ell) \sum_{F_a \in \m{F}^*}|R_{a,\ell}| \, \, \, (\text{recall}\, \, m_\ell = \max_a|R_{a,\ell}|) \n \\
        &= \frac{n}{3}pT^2 + \sum_{\ell = 1}^n (T - m_\ell) |R_\ell| \n \\
        &= \frac{n}{3}pT^2 + \sum_{\ell = 1}^n (T - m_\ell) x_\ell \, \, \, (\text{recall}\, \, x_\ell = |R_{\ell}|) \n \\
        &> \frac{n}{3}pT^2 + \sum_{\ell = 1}^n (T - x_\ell) x_\ell \n = \tau \n
    \end{align}

    where the last inequality follows from \cref{clm:exists-ell-lese-ml}. Thus we get that $\dist(\m{C}, \m{F}^*) > \tau$, which implies $(\m{C},\tau) $ is a NO instance of the $ \clof$.
    \end{proof}

    We are now left with proving \cref{clm:np-main-claim} and \cref{clm:exists-ell-lese-ml}. Let us first show \cref{clm:exists-ell-lese-ml}.

    \begin{proof}[Proof of \cref{clm:exists-ell-lese-ml}]
        For the sake of contradiction, suppose for all $\ell$, $m_\ell = x_\ell$. Note $m_\ell \leq x_\ell$ because
        \begin{align}
            &m_\ell = \max_a|R_{a,\ell}|,\, \, \text{and} \n \\
            &\sum_{F_a \in \m{F}^*}|R_{a,\ell}| = x_\ell. \label{equn:np-claim-two} 
        \end{align}

        If $m_\ell = x_\ell$, then it immediately follows from the definition,

        \begin{equation}
            \label{eq:np-sta-one}
            \exists a: |R_{a,\ell}| = x_\ell, \text{ and } \forall_{b \ne a}, |R_{b,\ell}| = 0.
        \end{equation}

        Now we create partitions $S_a$ of $S$ in the following way: For each $F_a \in \m{F}^*$,

        \[
            S_a = \{ x_\ell \mid |R_{a,\ell}| = x_\ell \}.
        \]

        Since by \cref{clm:np-main-claim}, there are $n/3$ clusters $F_a \in \m{F}^*$, the number of such subsets would be $n/3$. Clearly, the above subsets $S_a$'s create a partitioning of $S$, that is

        \begin{itemize}
            \item $S_a \cap S_b = \emptyset$ for $a \neq b$, follows from \cref{eq:np-sta-one};
            \item $\bigcup S_a = S$, follows from \cref{equn:np-claim-two} (that says $R_{a,\ell}$ cannot be empty for all $a$).  
        \end{itemize}

        Since for each $a$, $F_a$ is a fair cluster containing $pT$ blue points (by \cref{clm:np-main-claim}) and $R_{a,\ell} = R_\ell \cap F_a$, we have that for all $a$,
        
        \[
            \sum_{x_\ell \in S_a}x_\ell = T.
        \]

        Further, since $x_\ell \in (T/4,T/2)$, we must have $|S_a| = 3$. However, since $S$ is a NO instance of $\thrp$, such a partitioning of $S$ does not exist, leading to a contradiction.

        Hence, there must exist an $\ell$ such that $m_\ell < x_\ell$.
    \end{proof}

    It only remains to prove \cref{clm:np-main-claim}.

    Recall here, our goal is to prove that for every $F_i \in \m{F^*}$, there exists a monochromatic blue cluster $B_j$ such that $\blue{F_i} = B_j$. Before proceeding with the proof, we briefly outline the main steps. First, we show that for any $F_i \in \m{F^*}$, if there exists a monochromatic blue cluster $B_j$ such that $|B_j \cap F_i|$ is a multiple of $p$, then for all $k \neq j$, we must have $B_k \cap F_i = \emptyset$. Next, we prove that for every $F_i \in \m{F^*}$, there exists some $B_j$ such that $|B_j \cap F_i|$ is a multiple of $p$. Combining these two claims, we conclude the proof of \cref{clm:np-main-claim}.

    \paragraph{Proof of \cref{clm:np-main-claim}.} For a cluster $B_i$, suppose it is partitioned into $Y_i^{1}, Y_i^{2}, \ldots, Y^{t}_i$ in $\m{F}^*$ that is
    \begin{itemize}
        \item $Y^{k}_i \subseteq F$ for some $F \in \m{F}^*$ for all $k \in [t]$
        \item For $k \neq \ell \in [t]$ if $Y^{k}_i \subseteq F_a$ and $Y^{\ell}_i \subseteq F_b$ then $a \neq b$ for some $F_a, F_b \in \m{F}^*$.
        \item $Y^{k}_i \cap Y^{\ell}_i = \emptyset$  for  $k \neq \ell$.
        \item $\bigcup_k Y_i^{k} = B_i$.
    \end{itemize}

    Now, we argue that if for a cluster $B_i$, there exists $Y^{k}_i \subseteq F$ for some $F \in \m{F}^*$ such that $p$ divides $|Y^{k}_i|$, then $F$ does not contain blue points from any other cluster $B_j$.

    \begin{claim}\label{clm:claim-np}
        Consider any integer $ p > 1$. Suppose for a cluster $B_i$, there exists $Y^{k}_i \subseteq F$ for some $F \in \m{F}^*$ such that $p$ divides $|Y^{k}_i|$, then $\blue{F} = Y^{k}_i$.
    \end{claim}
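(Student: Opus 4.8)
The plan is to argue by contradiction. Suppose $\blue{F} \neq Y^{k}_i$, i.e., the set $W := \blue{F} \setminus Y^{k}_i$ of blue points of $F$ coming from blue clusters other than $B_i$ is nonempty. I would first note that $|W|$ is a \emph{positive multiple of $p$}: since $F$ is fair, $|\blue{F}| = p\,|\red{F}|$, and $|Y^{k}_i|$ is a multiple of $p$ by hypothesis, so $|W| = |\blue{F}| - |Y^{k}_i|$ is a multiple of $p$; being nonempty, $|W| \ge p$. Write $s := |Y^{k}_i|/p$ and $w := |W|/p$; then $s \ge 1$ (as $Y^{k}_i$ is a nonempty multiple of $p$) and $w \ge 1$. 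The goal is to build a fair clustering $\m{F}'$ with $\dist(\m{C}, \m{F}') < \dist(\m{C}, \m{F}^*)$, contradicting that $\m{F}^*$ is a closest fair clustering.

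The construction: split $F$ into two pieces. Pick an arbitrary subset $R' \subseteq \red{F}$ with $|R'| = s$ (possible since $|\red{F}| = (|Y^{k}_i| + |W|)/p = s + w \ge s$), and set $F_1 := Y^{k}_i \cup R'$ and $F_2 := W \cup (\red{F} \setminus R')$. Both are fair ($|Y^{k}_i| = p\,|R'|$, and $|W| = p\,|\red{F}\setminus R'|$ since $|\red{F}\setminus R'| = w$), both are nonempty, and $F = F_1 \sqcup F_2$, so $\m{F}' := (\m{F}^* \setminus \{F\}) \cup \{F_1, F_2\}$ is a fair clustering of the same point set. The only pairs whose ``together/apart'' status changes between $\m{F}^*$ and $\m{F}'$ are those with one endpoint in $F_1$ and the other in $F_2$; each such pair was together in $\m{F}^*$ and is now separated, so it contributes $+1$ to $\dist(\m{C},\m{F}') - \dist(\m{C},\m{F}^*)$ if its endpoints lie in a common cluster of $\m{C}$, and $-1$ otherwise.

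Now the bookkeeping — this is the step requiring the most care. Every cluster of $\m{C}$ is monochromatic, so a blue endpoint of $F_1$ lies in $B_i$ while a blue endpoint of $F_2$ lies in some $B_j$ with $j \ne i$; hence blue--blue cross pairs are separated in $\m{C}$, as are all blue--red cross pairs. Thus the only cross pairs that are together in $\m{C}$ are red--red pairs lying in a common input cluster $R_\ell$; writing $a_\ell := |R' \cap R_\ell|$ and $r_\ell := |R_\ell \cap F|$, their count is $\mathrm{bad} := \sum_\ell a_\ell(r_\ell - a_\ell)$, and therefore
\[
    \dist(\m{C},\m{F}') - \dist(\m{C},\m{F}^*) = 2\,\mathrm{bad} - |F_1|\,|F_2|.
\]
The key inequality is that $\mathrm{bad} \le sw$ \emph{regardless of how $R'$ was chosen}: indeed $\sum_\ell a_\ell = |R'| = s$ and $\sum_\ell (r_\ell - a_\ell) = |\red{F}| - |R'| = w$, so $\mathrm{bad} = \sum_\ell a_\ell(r_\ell - a_\ell) \le s\sum_\ell(r_\ell - a_\ell) = sw$. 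On the other hand $|F_1| = |Y^{k}_i| + |R'| = (p+1)s$ and $|F_2| = |W| + |\red{F}\setminus R'| = (p+1)w$, so $|F_1|\,|F_2| = (p+1)^2 sw$. Combining, $\dist(\m{C},\m{F}') - \dist(\m{C},\m{F}^*) \le \bigl(2 - (p+1)^2\bigr)sw \le -7sw < 0$ for every integer $p \ge 2$, since $s,w \ge 1$. This contradicts the optimality of $\m{F}^*$, so $W = \emptyset$, i.e., $\blue{F} = Y^{k}_i$. Note that this argument uses neither the range $x_i \in (T/4, T/2)$ nor the value $|B_i| = pT$ — only that $F$ is fair and the $\m{C}$-clusters are monochromatic — so it will go through verbatim in the intended application.
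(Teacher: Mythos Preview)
Your proof is correct and uses essentially the same construction as the paper: split $F$ into $Y^{k}_i \cup R'$ and its complement, where $|R'| = |Y^{k}_i|/p$, and derive a strict decrease in distance. The only difference is in the bookkeeping: the paper keeps just the saved pairs between $R'$ and $\blue{F}\setminus Y^{k}_i$ (giving a change $\le (1-p)sw < 0$), while you account for all $|F_1||F_2|$ cross pairs and bound the bad ones by $sw$ (giving $\le (2-(p+1)^2)sw < 0$); both reach the same contradiction.
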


    \begin{proof}
        For the sake of contradiction, suppose there exists a cluster $F \in \m{F}^*$ such that $Y^{k}_i \subseteq F$ and $p$ divides $|Y^{k}_i|$, but $B_j \cap F \neq \emptyset$ for some $j \neq i$. In this case, we can construct a {\fair} $\m{M}$ such that $\dist(\m{C}, \m{M}) < \dist(\m{C}, \m{F}^*)$ which contradicts the fact that $\m{F}^*$ is a closest {\fair}. 

        \textbf{Construction of $\m{M}$ from $\m{F}^*$}: Let $Q \subseteq \red{F}$ such that $|Q| = |Y^{k}_i|/p$.

        \[
            \m{M} = \m{F}^* \setminus \{ F\} \cup \{ (Y_i^k \cup Q), F \setminus (Y^k_i \cup Q)\}.
        \]

        That is, in $\m{M}$ we remove the cluster $F$ from $\m{F}^*$ and add two clusters $(Y_i^k \cup Q)$ and $(F \setminus (Y^k_i \cup Q))$.

        Since $F$ is a fair cluster, both the clusters $(Y_i^k \cup Q)$ and $F \setminus (Y_i^k \cup Q)$ are fair.

        Let us now argue that $\dist(\m{C},\m{M}) < \dist(\m{C},\m{F}^*)$. Roughly, the above construction only incurs an additional cost for pairs between $Q$ and $\red{F} \setminus Q$, while saving a cost for pairs between $Q$ and $\blue{F} \setminus Y_i^k$ (note, $\blue{F} \setminus Y_i^k = \blue{F} \setminus B_i$). Further, note that $|\blue{F} \setminus B_i| = p |\red{F} \setminus Q|$. 

        Thus,
        \begin{align*}
            \dist(\m{C}, \m{M}) & \le \dist(\m{C}, \m{F}^*) + |Q| |\red{F} \setminus Q| - p|Q| |\red{F} \setminus Q|\\
            &< \dist(\m{C}, \m{F}^*) &&\text{(for $p > 1$)}.
        \end{align*}
    \end{proof}

    Now, we prove that for all $B_i$ and for all $k \in [t]$, $|Y^{k}_i|$ must be a multiple of $p$. To prove this, we first establish the following two claims. 

    \begin{claim}\label{clm:np-small-claim-one}
        Consider any integer $p > 1$. If for a cluster $F \in \m{F}^*$, $|B_i \cap \blue{F}| < p$ for all $B_i$, then $|\blue{F}| \leq p$.
    \end{claim}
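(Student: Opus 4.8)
The plan is to prove \cref{clm:np-small-claim-one} by a local exchange argument against the optimality of $\m{F}^*$, in the same spirit as the proof of \cref{clm:claim-np}. Suppose toward a contradiction that some $F\in\m{F}^*$ has $|B_i\cap\blue{F}|<p$ for every blue cluster $B_i$ but $|\blue{F}|>p$. Since $F$ is fair, $|\blue{F}|=p\cdot|\red{F}|$, so writing $m:=|\red{F}|$ we get $m\ge 2$ and $|\blue{F}|=pm\ge 2p$. I would then peel off from $F$ a single fair sub-cluster $F'$ with exactly $p$ blue and $1$ red point, form $\m{M}:=(\m{F}^*\setminus\{F\})\cup\{F',F\setminus F'\}$ (a fair clustering on the same point set, since $F'$ has blue-to-red ratio $p$ and $F\setminus F'$ has $p(m-1)$ blue to $m-1$ red points), and show $\dist(\m{C},\m{M})<\dist(\m{C},\m{F}^*)$, contradicting that $\m{F}^*$ is a closest fair clustering.

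To make $F'$ cheap, I would choose it so it severs as few same-input-cluster pairs as possible. List the blue clusters meeting $F$ as $B_{i_1},\dots,B_{i_s}$ with $b_j:=|B_{i_j}\cap F|\le p-1$ and $\sum_j b_j=pm$ (so $s\ge 2$), let $k$ be minimal with $b_1+\dots+b_k\ge p$, and let $F'$ take all of $B_{i_1}\cap F,\dots,B_{i_{k-1}}\cap F$, exactly $r:=p-(b_1+\dots+b_{k-1})\in[1,b_k]$ points of $B_{i_k}\cap F$, and one arbitrary point of $\red{F}$ (say lying in $R_\ell$, with $c:=|R_\ell\cap F|$). By construction the split touches only the one blue cluster $B_{i_k}$ and the one red cluster $R_\ell$, and $F\setminus F'\neq\emptyset$ since $|F|=(p+1)m>p+1=|F'|$.

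Next I would tally $\dist(\m{C},\m{M})-\dist(\m{C},\m{F}^*)$, which is determined entirely by the pairs separated by the split. Same-input-cluster (``good'') pairs separated occur only inside $B_{i_k}$ (at most $r(b_k-r)\le(p-1)^2/4$ of them) and inside $R_\ell$ ($c-1$ of them), so the cost rises by at most $r(b_k-r)+(c-1)$. Against this, the split separates $2p(m-1)$ blue--red pairs, $p^2(m-1)-r(b_k-r)$ cross-blue-cluster pairs, and $(m-1)-(c-1)$ cross-red-cluster pairs, each counted by $\dist(\m{C},\m{F}^*)$ but not by $\dist(\m{C},\m{M})$. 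Collecting terms, the net change is $2r(b_k-r)+2c-1-m-(p^2+2p)(m-1)$, which (using $r(b_k-r)\le(p-1)^2/4$ and $c\le m$) is at most $\tfrac12(p-1)^2-(p^2+2p-1)(m-1)$; this is negative for all $p\ge 2$ and $m\ge 2$ because $\tfrac12(p-1)^2<p^2+2p-1$, i.e.\ $p^2+6p-3>0$. That is the desired contradiction.

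The main obstacle is that a first, naive accounting looks hopeless: splitting a blue cluster costs $\Theta(p^2)$, while the immediately visible saving --- the severed blue--red pairs --- is only $\Theta(p(m-1))$, which for $m=2$ and large $p$ is the smaller quantity. The decisive observation is that the dominant saving is not from blue--red pairs but from the \emph{cross-blue-cluster} pairs inside $F$: because $F$'s $pm$ blue points are spread over pieces of size at most $p-1$, peeling $p$ of them off into $F'$ severs on the order of $p^2(m-1)$ such pairs, dwarfing the single $\le(p-1)^2/4$ blue-split penalty. Getting that comparison uniform in $p$ (the inequality $p^2+6p-3>0$) is the one calculation I would carry out carefully; everything else is the same four-way bookkeeping of pair types already used in the proofs of \cref{clm:claim-np} and \cref{lem:no-instance}.
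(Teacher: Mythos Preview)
Your argument is correct and follows essentially the same local-exchange approach as the paper: peel off from $F$ a fair sub-cluster with exactly $p$ blue points (a greedy prefix of the $B_i\cap F$ pieces) and one red point, then show the distance strictly decreases. The only difference is bookkeeping: the paper first sorts the blue pieces by size, which forces the split piece to satisfy $|Y_x'|\le p/2$ and lets the final inequality go through with a much smaller subset of the saved pairs, whereas you skip the sort and instead tally all separated cross-cluster pairs to reach the inequality $p^2+6p-3>0$.
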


    \begin{proof}
        Consider any cluster $F \in \m{F}^*$. For any $B_i$, let $Y_i := B_i \cap \blue{F} $. WLOG assume $|Y_1| \ge |Y_2| \ge \cdots \ge |Y_n|$. No, suppose for all $i \in [n]$, $|Y_i| < p$. 

        For the sake of contradiction, suppose $|\blue{F}| > p$. Then, there must exist an index $x \in [n]$ such that

        \begin{align}
            &\sum_{i = 1}^{x - 1} |Y_i| \leq p < \sum_{i = 1}^x |Y_i|.\n
        \end{align}

        Note $x \geq 2$ because $|Y_i| < p$ for all $i$.

        Let $Y_x' \subseteq Y_x$ such that $\sum_{i = 1}^{x - 1} |Y_i| + |Y_x'| = p$. Now, since $|Y_1| \ge |Y_x| \ge |Y'_x|$, $|Y_x'| \leq p/2$.

        Next, we construct $\m{M}$ from $\m{F}^*$ in the following way

        \[
            \m{M} = \m{F}^* \setminus \{ F \} \cup \left\{ \left(\bigcup_{i = 1}^{x - 1} Y_i \cup Y_x' \cup Q \right), \left(F \setminus \left(\bigcup_{i = 1}^{x - 1} Y_i \cup Y_x' \cup Q \right) \right) \right\}
        \]

        where $Q \subseteq \red{F}$ such that $|Q| = 1$.

        That is, in $\m{M}$ we remove the set $F$ from $\m{F}^*$ and add two clusters $\left(\bigcup_{i = 1}^{x - 1} Y_i \cup Y_x' \cup Q \right)$ and $\left(F \setminus \left(\bigcup_{i = 1}^{x - 1} Y_i \cup Y_x' \cup Q \right) \right)$.

        It is easy to see that $\m{M}$ is a {\fair}. Next,

        \begin{align}
            \dist(\m{C}, \m{M}) &\le \dist(\m{C}, \m{F}^*) + |Y_x'| |Y_x \setminus Y_x'| + |Q||\red{F} \setminus Q| - \left|\bigcup_{i = 1}^{x - 1}Y_i \right| |Y_x \setminus Y_x'| - |Q| \left| \blue{F} \setminus \left( \bigcup_{i = 1}^{x - 1} Y_i \cup Y_x'\right) \right| \n \\
            &\leq \dist(\m{C}, \m{F}^*) + |Y_x'| (|Y_x| - |Y_x'|) + |Q| |\red{F} \setminus Q| - (p - |Y_x'|) (|Y_x| - |Y_x'|)  - |Q|\cdot  p |\red{F} \setminus Q| \n \\
            &(\text{as} \, \, \left| \blue{F} \setminus \left( \bigcup_{i = 1}^{x - 1} Y_i \cup Y_x'\right) \right| \geq p |\red{F} \setminus Q|) \n \\
            &< \dist(\m{C}, \m{F}^*) \, \, \, (\text{as} \, \, |Y_x'| \leq \frac{p}{2}\, \, \text{and}\, \, |\red{F} \setminus Q| < p |\red{F} \setminus Q|\, \, \text{for $p > 1$} ) \n
        \end{align}
        contradicting the fact that $\m{F}^*$ is a closest {\fair} to $\m{C}$.

        \begin{itemize}
            \item \textbf{Reason behind the term $|Y_x'| |Y_x \setminus Y_x'|$}: The pairs $(u,v)$ such that $u \in Y_x'$ and $v \in Y_x \setminus Y_x'$ are not counted in $\dist(\m{C}, \m{F}^*)$ because the pairs $u$ and $v$ may lie in the same cluster both in $B_i \in \m{C}$ and $F \in \m{F}^*$. These pairs are counted in $\dist(\m{C}, \m{M})$ because they lie in different clusters in $\m{M}$. 
            \item \textbf{Reason behind the term $|Q||\red{F} \setminus Q|$}: The pairs $(u,v)$ such that $u \in Q$ and $v \in \red{F} \setminus Q$ may not be counted in $\dist(\m{C}, \m{F}^*)$ because the pairs $u$ and $v$ may lie in the same cluster both in $\m{C}$ and $\m{F}^*$. These pairs are counted in $\dist(\m{C}, \m{M})$ because they lie in different clusters in $\m{M}$. 
            \item \textbf{Reason behind the term $\left|\bigcup_{i = 1}^{x - 1}Y_i \right| |Y_x \setminus Y_x'| $}: First note that, $\bigcup_{i = 1}^{x - 1}Y_i \neq \emptyset$ as $x \geq 2$. The pairs $(u,v)$ such that $u \in \bigcup_{i = 1}^{x - 1}Y_i$ and $v \in Y_x \setminus Y_x'$ are counted in $\dist(\m{C}, \m{F}^*)$ because the pairs $(u, v)$ lie in different clusters in $\m{C}$ but in the same cluster $F \in \m{F}^*$. These pairs are not counted in $\dist(\m{C}, \m{M})$ because they lie in different clusters both in $\m{C}$ and $\m{M}$. 
            \item \textbf{Reason behind the term $|Q| \left| \blue{F} \setminus \left( \bigcup_{i = 1}^{x - 1} Y_i \cup Y_x'\right) \right| $}: The pairs $(u,v)$ such that $u \in Q$ and $v \in \blue{F} \setminus \left( \bigcup_{i = 1}^{x - 1} Y_i \cup Y_x'\right)$ are counted in $\dist(\m{C}, \m{F}^*)$ because the pairs $(u, v)$ lie in different clusters in $\m{C}$ but in the same cluster $F \in \m{F}^*$. These pairs are not counted in $\dist(\m{C}, \m{M})$ because they lie in different clusters both in $\m{C}$ and $\m{M}$. 
        \end{itemize}
    \end{proof}

    \begin{claim}\label{clm:np-small-claim-two}
        Consider any integer $p > 1$. If there exists a cluster $B_i$ such that $|B_i \cap F| \geq p$ for some $F \in \m{F}^*$, then $|\blue{F} \setminus B_i| \leq p - (|B_i \cap F| \mod p)$.
    \end{claim}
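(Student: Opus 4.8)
\textbf{Proof proposal for~\cref{clm:np-small-claim-two}.}

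The plan is to argue by contradiction: if the conclusion fails, I will carve a new fair piece out of $F$ and obtain a fair clustering strictly closer to $\m{C}$ than $\m{F}^*$, contradicting that $\m{F}^*$ is a closest fair clustering. Set $Y = B_i \cap F$ (equivalently $B_i \cap \blue{F}$, since $B_i$ is monochromatic), $y = |Y| \ge p$, and write $y = ap + s$ with $a \ge 1$ and $0 \le s < p$. Let $Z = \blue{F} \setminus B_i$ and $z = |Z|$, and suppose toward a contradiction that $z > p - s$. Since $p \mid |\blue{F}| = y + z = ap + s + z$, we get $p \mid (s+z)$; together with $s+z > p$ this forces $s + z \ge 2p$, hence $z \ge 2p - s \ge p + 1$. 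In particular $Z$ has at least $p - s$ points, and $|\red{F}| = (y+z)/p \ge a + 2$, so the construction below is well defined.

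First I would build the modified clustering $\m{M}$ from $\m{F}^*$ by replacing $F$ with two clusters $A$ and $B = F \setminus A$, where $A$ consists of all of $Y$, an arbitrary subset $Z_1 \subseteq Z$ of size $p - s$ (with $Z_1 = \emptyset$ when $s = 0$), and an arbitrary subset $Q \subseteq \red{F}$ of size $\lceil y/p \rceil$. Then $|\blue{A}| = \lceil y/p\rceil\,p$ and $|\red{A}| = \lceil y/p\rceil$, so $A$ is fair; as $F$ is fair, $B$ is fair too, and both are nonempty by the bounds above. Hence $\m{M}$ is a fair clustering of the same point set, and it suffices to show $\dist(\m{C},\m{M}) < \dist(\m{C},\m{F}^*)$.

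Splitting $F$ into $A\cup B$ only affects pairs with one endpoint in $A$ and one in $B$: such a pair contributes $+1$ to $\dist(\m{C},\m{M}) - \dist(\m{C},\m{F}^*)$ if its endpoints lie in a common cluster of $\m{C}$ (newly charged), and $-1$ if they lie in different clusters of $\m{C}$ (now uncharged). The genuine separations ($-1$ pairs) include all pairs between $Y$ and $Z\setminus Z_1$ (blue from $B_i$ vs.\ blue from other $B_j$), all pairs between $Y\cup Z_1$ and $\red{F}\setminus Q$ (blue vs.\ red), and all pairs between $Q$ and $Z\setminus Z_1$ (red vs.\ blue) — these are separations because red and blue input clusters differ, and distinct $B_j$'s differ. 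The only $+1$ contributions come from (i) pairs between $Z_1$ and $Z\setminus Z_1$ lying in a common original blue cluster, and (ii) pairs between $Q$ and $\red{F}\setminus Q$ lying in a common original red cluster; for each such group the net contribution is at most the total number of pairs in the group, i.e.\ at most $|Z_1|\,|Z\setminus Z_1|$ and $|Q|\,|\red{F}\setminus Q|$ respectively. Plugging in $|Z_1| = p - s$, $|Z\setminus Z_1| = z - p + s$, $|Q| = \lceil y/p\rceil$, and $|\red{F}\setminus Q| = |\red F| - \lceil y/p\rceil$, and collecting coefficients (done separately for $s = 0$, where group (i) is empty, and for $1 \le s \le p-1$) of the two quantities $z - p + s$ and $|\red F| - \lceil y/p\rceil$ — both strictly positive since $z > p - s$ and $s + z \ge 2p$ — one checks that in each case the coefficient is negative for every $p \ge 2$ and $a \ge 1$, so the $-1$ terms strictly dominate and $\dist(\m{C},\m{M}) < \dist(\m{C},\m{F}^*)$, the desired contradiction.

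The routine part is the final coefficient bookkeeping. The one real design choice — and the reason the inequality goes through — is to cut the new fair piece $A$ out of $B_i$'s \emph{own} blue mass (rounding its size up to a multiple of $p$ using just $p-s$ foreign blue points and $\lceil y/p\rceil$ red points), rather than out of the foreign blue $Z$; this realizes the expensive ``$B_i$ vs.\ other $B_j$'' separations, whereas cutting out foreign blue would not. The divisibility observation $s + z \ge 2p$ is exactly what guarantees both that enough foreign blue is available to round up and that these separation terms outweigh the (possibly over-counted) merge terms (i) and (ii). This is also the main obstacle: the whole claim hinges on choosing the right local modification and verifying the strict sign of the cost change.
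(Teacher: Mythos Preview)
Your proof is correct and follows essentially the same route as the paper: assume the bound fails, carve out of $F$ the fair sub-cluster consisting of $Y = B_i \cap F$ padded with $p - s$ foreign blue points and $\lceil |Y|/p\rceil$ red points, and verify that the resulting split strictly decreases the distance to~$\m{C}$. Your use of the divisibility constraint $p \mid (s+z)$ to force $s+z \ge 2p$, and your separate handling of $s=0$ with $Z_1=\emptyset$, are minor refinements of the same argument rather than a different approach.
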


    \begin{proof}
        Let $Y_i = B_i \cap F$. Suppose for the sake of contradiction, $|\blue{F} \setminus B_i| > p - (|Y_i| \mod p)$.

        In that case, we construct $\m{M}$ from $\m{F}^*$ in the following way:

        \[
            \m{M} = \m{F}^* \setminus \{ F\} \cup \{ (Y_i \cup W \cup Q), F \setminus (Y_i \cup W \cup Q) \}
        \]
        where $W \subseteq \blue{F} \setminus Y_i$ such that $|W| = p - (|Y_i| \mod p)$ and $Q \subseteq \red{F}$ such that $|Q| = |Y_i \cup W|/p$ (note, $|Y_i \cup W|$ is a multiple of $p$).

        That is, in $\m{M}$ we remove the set $F$ from $\m{F}^*$ and add two clusters $(Y_i \cup W \cup Q)$ and $F \setminus (Y_i \cup W \cup Q)$. By the construction, $\m{M}$ is a {\fair}.

        Observe that $|Y_i|=|B_i \cap F| \ge p > |W|$. Then,

        \begin{align}
            \dist(\m{C}, \m{M}) &\le \dist(\m{C}, \m{F}^*) + |W| |\blue{F} \setminus (Y_i \cup W)| + |Q| |\red{F} \setminus Q| \n \\
            &- |Y_i||\blue{F} \setminus (Y_i \cup W)| - |Q| |\blue{F} \setminus (Y_i \cup W)| \n \\
            &\leq \dist(\m{C}, \m{F}^*) + |W| |\blue{F} \setminus (Y_i \cup W)| + |Q| |\red{F} \setminus Q|\n\\
            &- |Y_i||\blue{F} \setminus (Y_i \cup W)| - |Q| \cdot p |\red{F} \setminus Q| \n \\
            &< \dist(\m{C}, \m{F}^*)  \, \, (\text{since} \, \, p > 1 \, \, \text{and} \, \, |Y_i| > |W|)\n
        \end{align}
        contradicting the fact that $\m{F}^*$ is a closest {\fair} to $\m{C}$.

        \begin{itemize}
            \item \textbf{Reason behind the term $|W| |\blue{F} \setminus (Y_i \cup W)|$}: The pairs $(u,v)$ such that $u \in W$ and $v \in \blue{F} \setminus (Y_i \cup W)$ may not be counted in $\dist(\m{C}, \m{F}^*)$ because they may lie in same clusters both in $\m{C}$ and $\m{F}^*$. They are counted in $\dist(\m{C}, \m{M})$ because they lie in different clusters in $\m{M}$.
            \item \textbf{Reason behind the term $|Q| |\red{F} \setminus Q|$}: The pairs $(u,v)$ such that $u \in Q$ and $v \in \red{F} \setminus Q$ may not be counted in $\dist(\m{C}, \m{F}^*)$ because they may lie in same clusters both in $\m{C}$ and $\m{F}^*$. They are counted in $\dist(\m{C}, \m{M})$ because they lie in different clusters in $\m{M}$.
            \item \textbf{Reason behind the term $|Y_i||\blue{F} \setminus (Y_i \cup W)|$}: The pairs $(u,v)$ such that $u \in Y_i$ and $v \in \blue{F} \setminus (Y_i \cup W)$ are counted in $\dist(\m{C}, \m{F}^*)$ because they lie in different clusters in $\m{C}$ but the same cluster $F \in \m{F}^*$. They are not counted in $\dist(\m{C}, \m{M})$ because they lie in different clusters both in $\m{C}$ and $\m{M}$.
            \item \textbf{Reason behind the term $|Q| |\blue{F} \setminus (Y_i \cup W)| $}: The pairs $(u,v)$ such that $u \in Q$ and $v \in \blue{F} \setminus (Y_i \cup W)$ are counted in $\dist(\m{C}, \m{F}^*)$ because they lie in different clusters in $\m{C}$ but in the same cluster $F \in \m{F}^*$. They are not counted in $\dist(\m{C}, \m{M})$ because they lie in different clusters both in $\m{C}$ and $\m{M}$.
        \end{itemize}
    \end{proof}

    We are now ready to prove that for all $B_i$, for all $k \in [t]$, $|Y^k_i|$ must be a multiple of $p$.

    \begin{claim}\label{clm:np-all-partition-mop}
        Consider any integer $p > 1$. For all $B_i$, for all $k \in [t]$, $|Y^{k}_i|$ is a multiple of $p$.
    \end{claim}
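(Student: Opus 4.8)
The plan is to establish \cref{clm:np-all-partition-mop} by contradiction: suppose the closest fair clustering $\m{F}^*$ contains a blue part $Y = Y_i^k$ with $|Y|$ not divisible by $p$, and produce a fair clustering $\m{M}$ with $\dist(\m{C}, \m{M}) < \dist(\m{C}, \m{F}^*)$, contradicting the optimality of $\m{F}^*$. Fix the cluster $F \in \m{F}^*$ with $Y \subseteq F$ and write $\blue{F}$ as a disjoint union $Z_1 \sqcup \cdots \sqcup Z_s$ of its maximal parts coming from distinct blue clusters $B_{i_1}, \ldots, B_{i_s}$, ordered so that $|Z_1| \ge \cdots \ge |Z_s|$. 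Since $F$ is fair, $|\blue{F}| = p\,|\red{F}|$ is a positive multiple of $p$; as $|Y|$ is not, we must have $\blue{F} \ne Y$, hence $s \ge 2$.

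The next step is to use the two structural claims already proved to reduce to two canonical configurations. If $|Z_1| < p$ --- every part is ``small'' --- then $|B_j \cap \blue{F}| < p$ for all blue clusters $B_j$, so \cref{clm:np-small-claim-one} gives $|\blue{F}| \le p$; combined with $|\blue{F}|$ being a positive multiple of $p$, we get $|\blue{F}| = p$ and $|\red{F}| = 1$, so $F$ is exactly $p$ blue points drawn from at least two clusters together with a single red point. If $|Z_1| \ge p$, write $|Z_1| = mp + r$ with $m \ge 1$ and $0 < r < p$ (the case $r = 0$ is handled separately); \cref{clm:np-small-claim-two} applied to $B_{i_1}$ yields $|\blue{F}\setminus B_{i_1}| \le p - r$, and since $|\blue{F}| \equiv 0 \pmod p$ this forces $|\blue{F}\setminus B_{i_1}| = p - r$ exactly, so $|\blue{F}| = (m+1)p$ and the non-$B_{i_1}$ blue points of $F$ number precisely $p - r$, each such part being $< p$.

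The surgery is the same ``rounding'' move in both configurations: split off the multiple-of-$p$ chunk of $B_{i_1}$ inside $\blue{F}$ together with a proportional number of red points of $\red{F}$ as its own fair cluster $F_2$, and group the remaining $p$ blue points (all from sources other than $B_{i_1}$) with one red point $\rho$ of $\red{F}$ into a fair cluster $F_1$. Charging pairs exactly as in the proofs of \cref{clm:np-small-claim-one} and \cref{clm:np-small-claim-two} --- the only pairs whose status changes run between $F_1$ and $F_2$ --- the newly paid blue--blue pairs inside the split $B_{i_1}$-chunk are offset by the blue--red and blue--blue pairs that are saved, which are larger by a factor of roughly $p$, so the net change is strictly negative provided $p > 1$ and the red--red contribution is controlled. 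When $\red{F}$ meets at least two distinct red clusters, one chooses $\rho$ from the red cluster least represented in $\red{F}$ and the saved red--red pair covers the deficit. The remaining case, where $\red{F}$ lies inside a single red cluster $R_\ell$ and the residue $r$ is large (close to $p$), is where the plain two-way split can instead \emph{increase} the cost; there I would exploit that $B_{i_1}$ is necessarily split in $\m{F}^*$ (because $|Z_1| = mp + r < pT = |B_{i_1}|$) and either pull out a larger fair sub-cluster or move the residue of $Z_1$ into a sibling cluster holding another chunk of $B_{i_1}$, making the $B_{i_1}$-part of $F$ a multiple of $p$ without changing any cluster size; the pair accounting again gives a strict decrease since $p > 1$.

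The main obstacle is exactly this last regime: for $p \ge 4$ and a blue residue larger than about $p/2$ with all of $F$'s red mass concentrated in one cluster, no single local two-way split suffices, so the argument has to become (mildly) global, using that the offending blue source cluster cannot sit entirely in one cluster of $\m{F}^*$. Once \cref{clm:np-all-partition-mop} is in hand, the remaining steps toward \cref{lem:no-instance} and \cref{lem:main-one} are routine: \cref{clm:claim-np} then forces $\blue{F}$ to coincide with a single part for every $F \in \m{F}^*$, and a final merging argument --- merging two chunks of the same $B_i$ changes $\dist(\m{C}, \cdot)$ by the factor $\bigl(-1 + \tfrac{2}{p} + \tfrac{1}{p^2}\bigr)|Y_i^1||Y_i^2| < 0$ for $p \ge 3$ --- shows no $B_i$ is split, i.e.\ there is $F_a$ with $\blue{F_a} = B_i$, which is \cref{clm:np-main-claim}.
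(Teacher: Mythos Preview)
Your structural inputs (\cref{clm:np-small-claim-one} and \cref{clm:np-small-claim-two}) and your identification of the hard regime are correct, but the surgery you propose --- a local two-way split of $F$ --- does not cover all cases, and the global repair you sketch is not quite the right move. Concretely: in your first configuration $|\blue{F}|=p$, $|\red{F}|=1$, there is no ``multiple-of-$p$ chunk of $B_{i_1}$'' to split off, so the proposed surgery is vacuous there; and in the large-residue regime, the fix ``move the residue of $Z_1$ into a sibling holding another chunk of $B_{i_1}$ without changing any cluster size'' needs to swap back $r$ \emph{foreign} blue points from that sibling, but the sibling may have none (if its $B_{i_1}$-chunk is a multiple of $p$, \cref{clm:claim-np} forces it to have \emph{only} $B_{i_1}$-blue), so the swap as stated can be infeasible. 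There is also a setup mismatch: you order the pieces $Z_1,\dots,Z_s$ by size \emph{within one cluster $F$}, whereas the quantity you need to control is the largest non-multiple piece of a fixed $B_i$ \emph{across all of $\m{F}^*$}.

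The paper bypasses the split entirely and does the swap in the opposite direction. Fix any $B_i$ with a non-multiple part and let $Y_i^{m_1}$ be its \emph{largest} non-multiple part, sitting in some $F_1$. Your two sub-claims (applied to $F_1$) yield
\[
\bigl|\blue{F_1}\setminus Y_i^{m_1}\bigr| \;\le\; \Bigl|\textstyle\bigcup_{w\ge 2}Y_i^{m_w}\Bigr|,
\]
so one can push the foreign blue of $F_1$ out into the siblings $F_2,\dots,F_x$ (each containing a non-multiple chunk $Y_i^{m_w}$) and pull the matching amount of $B_i$-blue into $F_1$. All cluster sizes are preserved, so fairness is maintained; the saved pairs $|Y_i^{m_1}|\cdot(\text{swapped amount})$ strictly dominate the newly created ones because $Y_i^{m_1}$ is the largest non-multiple piece. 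This single swap argument works uniformly for both of your configurations and for all $p>1$, with no residual hard case.
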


    \begin{proof}
        Suppose there exists $m_1,\ldots,m_z$ such that $|Y_i^{m_w}|$ is not a multiple of $p$ for $w \in [z]$. In this case, we can construct a {\fair} $\m{M}$ such that $\dist(\m{C}, \m{M}) < \dist(\m{C}, \m{F}^*)$ which would contradict $\m{F}^*$ is a closest {\fair} to $\m{C}$. 
        
        Let, $Y_i^{m_w} \subseteq F_w \in \m{F}^*$ for all $w \in [z]$. WLOG, assume $|Y_i^{m_x}| \geq |Y_i^{m_y}|$ if $m_x < m_y$. Thus $Y_i^{m_1}$ is the partition of $B_i$ of largest size.  

        Before providing the construction of $\m{M}$ let us prove there exists a $B_i$ such that $|\blue{F_1}\setminus Y_i^{m_1}| \leq \left|\bigcup_{w=2}^z Y_i^{m_w} \right|$.

        \begin{itemize}
            \item \textbf{Case 1:} $\exists B_i$, $|Y_i^{m_1}| \geq p$: 
            
            In this case we know by \cref{clm:np-small-claim-two} that $|\blue{F_1} \setminus Y^{m_1}_i| \leq p - (|Y_i^{m_1}| \mod p)$.

        Further, note that $\left| \bigcup_{w=2}^z Y_i^{m_w} \right| \geq p - (|Y_i^{m_1}| \mod p)$. Hence, $|\blue{F_1}\setminus Y_i^{m_1}| \leq \left|\bigcup_{w=2}^z Y_i^{m_w} \right|$.

             \item \textbf{Case 2:} For all $B_j$, $|Y_j^{m_1}| < p$:

            Fix any cluster $B_i$. 

            Since for all $B_j$, $|Y_j^{m_1}| < p$, we get that for all $B_j$, $|B_j \cap F_1| < p$. Then by \cref{clm:np-small-claim-one}, we get $|\blue{F_1}| \leq p$.

            Thus, $|\blue{F_1} \setminus Y_i^{m_1}| \leq p - |Y^{m_1}_i|$.

            Further note that, $|\bigcup_{w = 2}^z Y_i^{m_w}| \geq p - |Y_i^{m_1}|$.

            Hence, $|\blue{F_1}\setminus Y_i^{m_1}| \leq \left|\bigcup_{w=2}^z Y_i^{m_w} \right|$.

        \end{itemize}    

        Next, we provide a construction of $\m{M}$ from $\m{F}^*$. Consider a cluster $B_i$ for which $|\blue{F_1}\setminus Y_i^{m_1}| \leq \left|\bigcup_{w=2}^z Y_i^{m_w} \right|$, the existence of which is already argued. In this case, informally, we want to construct a clustering $\m{M}$ by swapping the locations of blue points in $\blue{F_1}\setminus Y_i^{m_1}$ with an equal number of blue points present in $\bigcup_{w=2}^z Y_i^{m_w}$. We choose an equal number of blue points to $|\blue{F_1}\setminus Y_i^{m_1}|$ in $\bigcup_{w=2}^z Y_i^{m_w}$ greedily starting from $Y_i^{m_2}$.

        Formally, we do the following operation: First, we find an index $x$ such that

        \begin{align}
            \sum_{w = 2}^{x - 1} |Y_i^{m_w}| <|\blue{F_1} \setminus Y_i^{m_1}| \leq \sum_{w = 2}^x |Y_i^{m_w}|.\n 
        \end{align}

        Let, $Y_i^{m_x'} \subseteq Y_i^{m_x}$ such that
        \[
           |\blue{F_1} \setminus Y_i^{m_1}| = \sum_{w = 2}^{x-1}|Y_i^{m_w}| + |Y_i^{m_x'}|. \n
        \]

        Then, we divide the points in $\blue{F_1} \setminus Y_i^{m_1}$ into $(x - 1)$ disjoint subsets $Z_1, Z_2, \ldots, Z_{x-2},Z_{x-1}$ such that for $1 \leq j \leq (x-2)$ we have $|Z_j| = |Y_i^{m_{j + 1}}|$ and $|Z_{x-1}| = |Y^{m_x'}_i|$.

        \textbf{Construction of $\m{M}$ from $\m{F}^*$}:

        \begin{align}
        \m{M} = &\m{F}^* \setminus \{ F_w \mid 1 \leq w \leq x\} \cup \{ (F_w \setminus Y_i^{m_w}) \cup Z_{w - 1} \mid 2 \leq w \leq (x - 1)\} \n \\
        &\cup \{ (F_x \setminus Y^{m_x'}_i) \cup Z_{x - 1} \} \cup \left\{ F_1 \setminus (\blue{F_1} \setminus Y_i^{m_1}) \cup \left(\bigcup_{w = 2}^{x -1}Y_i^{m_w} \right) \cup Y_i^{m_x'}\right\} \n
        \end{align}

        That is, in $\m{M}$, from each $F_w$ for $2 \leq w \leq x-1 $ we remove its subset $Y^{m_w}_i$ and add the set $Z_{w-1}$ to it. From $F_x$ we remove its subset $Y_i^{m_x'}$ and add the set $Z_{x-1}$ to it. From $F_1$ we remove its subset $(\blue{F_1} \setminus Y^{m_1}_1)$ and add the set $\left(\bigcup_{w = 2}^{x -1}Y_i^{m_w} \right) \cup Y_i^{m_x'}$ to it. It is easy to observe that $\m{M}$ is a {\fair}.

        Now,

        \begin{align}
            \dist(\m{C},\m{M}) &\le \dist(\m{C},\m{F}^*) + |Z_{x - 1}||Y_i^{m_x} \setminus Y_i^{m_x'}| + |Y_i^{m_x'}||Y_i^{m_x} \setminus Y_i^{m_x'}|\n\\
            &+ \left(\sum_{r<w \in [x-2]}|Z_r| |Z_w|  + |Z_{x-1}| \sum_{w \in [x-2]}|Z_w|\right) \n \\
            &- |Z_{x - 1}||Y^{m_1}_i| - |Y_i^{m_x'}||Y^{m_1}_i| - \left( \sum_{r < w \in \{2,\cdots,x - 1\}} |Y_i^{m_r}| |Y_i^{m_w}| + |Y_i^{m_x'}| \sum_{w \in \{2,\cdots,x - 1\}} |Y_i^{m_w}|\right) \n \\
             &< \dist(\m{C}, \m{F}^*) \n 
             \end{align}
             where the last inequality follows since

             \begin{align*}
                 &|Y^{m_1}_i| > (|Y^{m_x}_i \setminus Y^{m_x'}_i|,\; \text{ and}\\
                 &\sum_{r<w \in [x - 2]}|Z_r| |Z_w|  + |Z_{x-1}| \sum_{w \in [x-2]}|Z_w|  =  \sum_{r < w \in \{2,\cdots,x - 1\}} |Y_i^{m_r}| |Y_i^{m_w}| + |Y_i^{m_x'}| \sum_{w \in \{2,\cdots,x - 1\}} |Y_i^{m_w}|.
             \end{align*}
        This contradicts the fact that $\m{F}^*$ is a closest {\fair} to $\m{C}$.

        \begin{itemize} 
            \item \textbf{Reason behind the term $|Z_{x - 1}||Y_i^{m_x} \setminus Y_i^{m_x'}|$:} In $\dist(\m{C}, \m{F}^*)$ the pairs $(u,v)$ such that $u \in Z_{x - 1}$ and $v \in Y_i^{m_x}\setminus Y_i^{m_x'}$ are not counted because they are in different clusters both in $\m{C}$ and $\m{F}^*$ but these pairs are counted in $\dist(\m{C}, \m{M})$ because they are in the same cluster $\{ (F_x \setminus Y^{m_x'}_i) \cup Z_{x - 1} \}$ in $\m{M}$.

            \item \textbf{Reason behind the term $|Y_i^{m_x'}||Y_i^{m_x} \setminus Y_i^{m_x'}|$:} In $\dist(\m{C}, \m{F}^*)$ the pairs $(u,v)$ such that $u \in Y_i^{m_x'}$ and $v \in Y_i^{m_x} \setminus Y_i^{m_x'}$ are not counted because they are in same clusters both in $\m{C}$ and $\m{F}^*$ but these pairs are counted in $\dist(\m{C}, \m{M})$ because they are in different clusters in $\m{M}$.

            \item \textbf{Reason behind the term $\left(\sum_{r<w \in [x - 2]}|Z_r| |Z_w|  + |Z_{x-1}| \sum_{w \in [x-2]}|Z_w|\right)$:} In $\dist(\m{C}, \m{F}^*)$ the pairs $(u,v)$ such that $u \in Z_r$ and $v \in Z_w$ for $r < w$ may not be counted because they may be in same clusters both in $\m{C}$ and $\m{F}^*$ but these pairs are counted in $\dist(\m{C}, \m{M})$ because they are in different clusters in $\m{M}$.

            \item \textbf{Reason behind the term $|Z_{x - 1}||Y^{m_1}_i|$:} In $\dist(\m{C}, \m{F}^*)$ the pairs $(u,v)$ such that $u \in Z_{x - 1}$ and $v \in Y_i^{m_1}$ are counted because they are in different clusters in $\m{C}$ but in the same cluster $F_1$ in $\m{F}^*$ but these pairs are not counted in $\dist(\m{C}, \m{M})$ because they are in different clusters both in $\m{C}$ and $\m{M}$.

            \item \textbf{Reason behind the term $|Y_i^{m_x'}||Y^{m_1}_i|$:} In $\dist(\m{C}, \m{F}^*)$ the pairs $(u,v)$ such that $u \in Y_i^{m_x'}$ and $v \in Y_i^{m_1}$ are counted because they are in different clusters in $\m{F}^*$ but in the same cluster $B_i$ in $\m{C}$ but these pairs are not counted in $\dist(\m{C}, \m{M})$ because they are in the same clusters both in $\m{C}$ and $\m{M}$.

            \item \textbf{Reason behind the term $\left( \sum_{r < w \in \{2,\cdots,x - 1\}} |Y_i^{m_r}| |Y_i^{m_w}| + |Y_i^{m_x'}| \sum_{w \in \{2,\cdots,x - 1\}} |Y_i^{m_w}|\right)$:} In $\dist(\m{C}, \m{F}^*)$ the pairs $(u,v)$ such that $u \in Y_i^{m_r}$ and $v \in Y_i^{m_w}$ for $r < w$ where $r,w \in \{ 2,\ldots, (x-1) \}$ or $u \in Y_i^{m_x'}$ and $v \in Y_i^{m_w}$ where $w \in \{ 2,\ldots, x -1 \}$ are counted because they are in the same cluster $B_i$ in $\m{C}$ but different clusters in $\m{F}^*$. These pairs are not counted in $\dist(\m{C}, \m{M})$ because they are in the same cluster 
            in $\m{M}$.
        \end{itemize}
        
    \end{proof}

    Next, we complete the proof of \cref{clm:np-main-claim}.

    \begin{proof}[Proof of \cref{clm:np-main-claim}]
        Suppose a cluster $B_i$ is split into $t$ parts $Y_i^1, Y_i^2, \ldots, Y_i^t$ in $\m{F}^*$. In this claim, we would like to show that $t = 1$. For the sake of contradiction, assume $t \geq 2$.  By \cref{clm:np-all-partition-mop} we get $p$ divides $|Y_i^k|$, for all $k \in [t]$. 
        
        For each $k \in [t]$, let $Y_i^{k} \subseteq F_{a_k}$ for some $F_{a_k} \in \m{F}^*$. By \cref{clm:claim-np}, we know that $\blue{F_{a_k}} = Y_i^k$. Next, we argue that since $t \geq 2$, we can construct a {\fair} $\m{M}$ from $\m{F}^*$ such that $\dist(\m{C}, \m{M}) < \dist(\m{C}, \m{F}^*)$ contradicting the fact that $\m{F}^*$ is a closest {\fair} to $\m{C}$.

        \textbf{Construction of $\m{M}$:}

        \[
            \m{M} = \m{F}^* \setminus \{ F_{a_k} \in \m{F}^* \mid  k \in [t]\} \cup \left\{ \bigcup_{k \in [t]} F_{a_k} \right\}.
        \]

    That is, while constructing $\m{M}$, we remove all the clusters $F_a$ from $\m{F}^*$ if $Y_i^{k} \subseteq F_a$ for some $k \in [t]$ and add one cluster by merging all these clusters. By the construction, clearly $\m{M}$ is a {\fair}.

    Next, we argue that $\dist(\m{C}, \m{M}) < \dist(\m{C}, \m{F}^*)$ for $ p\ge 3$. 

    \begin{align}
        \dist(\m{C}, \m{M}) &\le \dist(\m{C}, \m{F}^*) + \sum_{k \neq \ell} |\blue{F_{a_k}}| |\red{F_{a_\ell}}| + \sum_{k < \ell} |\red{F_{a_k}}||\red{F_{a_\ell}}| - \sum_{k < \ell} |\blue{F_{a_k}}||\blue{F_{a_\ell}}|\n \\
        &= \dist(\m{C}, \m{F}^*) + \sum_{k \ne \ell} \frac{|Y_i^k||Y_i^\ell|}{p} + \sum_{k < \ell} \frac{|Y_i^k|}{p}\frac{|Y_i^\ell|}{p} - \sum_{k < \ell} |Y_i^k||Y_i^\ell| \n \\
        &\left(\text{as }\forall_k \, \, \blue{F_{a_k}} = Y_i^k, \text{by \cref{clm:claim-np}, and } \, \, |\red{F_{a_k}}| = \frac{|Y_i^k|}{p}\right) \n \\
        &= \dist(\m{C}, \m{F}^*) + 2\sum_{k < \ell} \frac{|Y_i^k||Y_i^\ell|}{p} + \sum_{k < \ell} \frac{|Y_i^k|}{p}\frac{|Y_i^\ell|}{p} - \sum_{k < \ell} |Y_i^k||Y_i^\ell| \n \\
        &< \dist(\m{C}, \m{F}^*) \, \, \, (\text{for} \, \, p \geq 3) .\n
    \end{align}

    \begin{itemize}
        \item \textbf{Reason behind the term $\sum_{k \neq \ell} |\blue{F_{a_k}}| |\red{F_{a_\ell}}|$}: The pairs $(u,v)$ such that $u \in \blue{F_{a_k}}$ and $v \in \red{F_{a_\ell}}$ (for $k \neq \ell$) may not be counted in $\dist(\m{C}, \m{F}^*)$ because $u$ and $v$ may lie in different clusters both in $\m{C}$ and $\m{F}^*$. These pairs are counted in $\dist(\m{C}, \m{M})$ because they lie in the same cluster in $\m{M}$.
        \item \textbf{Reason behind the term $\sum_{k < \ell} |\red{F_{a_k}}||\red{F_{a_\ell}}|$}: The pairs $(u,v)$ such that $u \in \red{F_{a_k}}$ and $v \in \red{F_{a_\ell}}$ (for $k < \ell$) may not be counted in $\dist(\m{C}, \m{F}^*)$ because $u$ and $v$ may lie in different clusters both in $\m{C}$ and $\m{F}^*$. These pairs are counted in $\dist(\m{C}, \m{M})$ because they lie in the same cluster in $\m{M}$.
        \item \textbf{Reason behind the term $\sum_{k < \ell} |\blue{F_{a_k}}||\blue{F_{a_\ell}}|$}: The pairs $(u,v)$ such that $u \in \blue{F_{a_k}}$ and $v \in \blue{F_{a_\ell}}$ (for $k < \ell$) are counted in $\dist(\m{C}, \m{F}^*)$ because $u$ and $v$ lie in the same cluster $B_i \in \m{C}$ but different clusters in $\m{F}^*$. These pairs are not counted in $\dist(\m{C}, \m{M})$ because they lie in the same cluster in $\m{M}$.
    \end{itemize}

\end{proof}

We now conclude the proof of~\cref{lem:main-one}.

\begin{proof}[Proof of \cref{lem:main-one}]
    By~\cref{lem:yes-instance} and~\cref{lem:no-instance} we infer that for any $p \geq 3$, $S $ is a YES instance of the $ \thrp$ if and only if $(\m{C}, \tau) $ is a YES instance of the $ \clof$. It now immediately follows from our reduction that for any $p \geq 3$, $\clof$ is \textbf{NP}-hard because $\thrp$ is strongly $\npc$ (\cref{thm:garey-johnson}).
\end{proof}

\begin{remark}\label{remark:np-hard}
    We would like to remark that the above \textbf{NP}-hardness proof can be extended for any fixed $p/q$ ratio on the input blue to red points as long as $p/q > 1+\sqrt{2}$. More specifically, consider any two integers $p,q \ge 2$ such that $p/q > 1+\sqrt{2}$. Then the closest fair clustering problem on red-blue colored points where the ratio between the total number of blue and red points is $p/q$, is \textbf{NP}-hard. We can show this result by making the following modifications to the reduction: Take monochromatic red clusters $R_i$'s of size $q x_i$, set $\tau = \frac{q^2}{2} \sum_{j = 1}^n x_j \left(T - x_j\right) + \frac{n}{3}pqT^2$; everything else remain the same. The proof argument will be similar. One important point to note is that $p/q > 1+\sqrt{2}$ is crucial for the argument used in the proof of \cref{clm:np-main-claim}.
\end{remark}

Now in the following subsection, we prove that $\clof$ is \textbf{NP}-hard also for $p = 2$.

\subsection{\texorpdfstring{Hardness of $\clof$ for $p = 2$}{Hardness of clof for p = 2}}

In this subsection, we prove the following lemma.

\begin{lemma}\label{lem:main-two}
    For $p = 2$, the $\clof$ problem is \textbf{NP}-hard.
\end{lemma}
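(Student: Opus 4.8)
\textbf{Proof proposal for Lemma~\ref{lem:main-two} ($\clof$ is \texttt{NP}-hard for $p=2$).}

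The plan is to mimic the reduction from $\thrp$ used for $p \ge 3$, but with a modified gadget to circumvent the place where $p \ge 3$ was essential, namely the proof of \cref{clm:np-main-claim} (the step showing each monochromatic blue cluster $B_i$ survives intact as the blue part of some cluster in $\m{F}^*$). For $p=2$ the inequality ``$2\sum_{k<\ell}\frac{|Y_i^k||Y_i^\ell|}{p} + \sum_{k<\ell}\frac{|Y_i^k|}{p}\frac{|Y_i^\ell|}{p} - \sum_{k<\ell}|Y_i^k||Y_i^\ell| < 0$'' fails (it becomes $\sum_{k<\ell}|Y_i^k||Y_i^\ell| + \frac14\sum_{k<\ell}|Y_i^k||Y_i^\ell| - \sum_{k<\ell}|Y_i^k||Y_i^\ell| > 0$), so blue clusters could in principle be split without increasing the cost. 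I would fix this by enlarging the blue gadget: instead of a single blue cluster $B_i$ of size $pT=2T$ per ``bin'', I would use a blue cluster of size $2T$ together with some additional structure (e.g.\ pad each $B_i$ so that splitting it incurs a strict loss, or — the cleaner route — keep $B_i$ of size $2T$ but observe that the $3$-partition constraint already forces a near-balanced split, and separately forbid splits via an auxiliary argument). Concretely, I would first attempt the following: keep $\m{C} = \{B_1,\dots,B_{n/3}, R_1,\dots,R_n\}$ with $|B_i| = 2T$ and $|R_j| = x_j$, and set $\tau = \tfrac12\sum_j x_j(T - x_j) + \tfrac{n}{3}\cdot 2T^2$, exactly as before.

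The YES direction is identical to~\cref{lem:yes-instance}: given a valid $3$-partition $S_1,\dots,S_{n/3}$, merge $B_i$ with the three corresponding red clusters to get a fair clustering of distance exactly $\tau$. This requires no change.

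For the NO direction, the structural claims~\cref{clm:claim-np},~\cref{clm:np-small-claim-one},~\cref{clm:np-small-claim-two}, and~\cref{clm:np-all-partition-mop} all hold already for every integer $p > 1$ (as stated in the excerpt), so a closest fair $\m{F}^*$ still has the property that each $B_i$ is split into parts whose sizes are multiples of $p=2$, and each such part is the entire blue part of its cluster. The only missing piece is the analogue of~\cref{clm:np-main-claim}: that no $B_i$ is split at all, i.e.\ $t=1$ for every $B_i$. This is where the main obstacle lies. I would argue it by a more refined exchange argument tailored to $p=2$: suppose some $B_i$ splits into $t \ge 2$ parts, giving clusters $F_{a_1},\dots,F_{a_t}$ with $\blue{F_{a_k}} = Y_i^k$ (a multiple of $2$) and $\red{F_{a_k}} = |Y_i^k|/2$. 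Merging them all costs $\frac14\sum_{k<\ell}|Y_i^k||Y_i^\ell| + 2\sum_{k<\ell}\frac{|Y_i^k||Y_i^\ell|}{2} - \sum_{k<\ell}|Y_i^k||Y_i^\ell| = \frac14\sum_{k<\ell}|Y_i^k||Y_i^\ell| \ge 0$, which is not strictly negative — hence merging alone is insufficient. Instead I would combine the merge with a simultaneous re-routing of some red points (using the fact that the red clusters $R_j$ have sizes in $(T/4,T/2)$, so each $F_{a_k}$ with $|\red{F_{a_k}}| = |Y_i^k|/2 < T$ must draw red points from at least two distinct $R_j$'s when $|Y_i^k|/2 > T/2$, etc.), and show the net change is strictly negative unless $t=1$. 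Alternatively — and this is the fallback I would pursue if the direct exchange is too delicate — I would change the gadget: replace each blue cluster $B_i$ of size $2T$ by a blue cluster of size $2T$ plus one ``sentinel'' red cluster of size $1$ forced to join it, or scale all sizes by a constant so that the rounding slack disappears; then redo the (now strict) inequality in~\cref{clm:np-main-claim}.

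The remainder of the NO argument then goes through verbatim as in~\cref{lem:no-instance}: once $|\m{F}^*| = n/3$ with each cluster having $2T$ blue and $T$ red points, the cost decomposition $\dist(\m{C},\m{F}^*) \ge \sum_{F_a}\costf{F_a} = \tfrac{n}{3}2T^2 + \tfrac12\sum_{F_a}\sum_\ell |R_{a,\ell}|(T-|R_{a,\ell}|)$ holds, and~\cref{clm:exists-ell-lese-ml} (existence of an $\ell$ with $m_\ell < x_\ell$ when $S$ is a NO instance, because otherwise the induced partition would solve $\thrp$) gives $\dist(\m{C},\m{F}^*) > \tau$. Combining the YES and NO directions with the strong \texttt{NP}-completeness of $\thrp$ (\cref{thm:garey-johnson}) yields \texttt{NP}-hardness for $p=2$. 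I expect the crux of the whole argument to be precisely the $p=2$ version of~\cref{clm:np-main-claim}: the exchange inequality is no longer slack-free, so either a cleverer combined swap-and-reroute is needed, or the gadget must be perturbed to restore a strict inequality.
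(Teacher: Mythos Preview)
Your proposal has a genuine gap at exactly the point you flag as the crux. For $p=2$ the analogue of \cref{clm:np-main-claim} is not merely hard to prove---it is \emph{false}. Compare, for a single bin $B_i$ with $|B_i|=2T$ and three red clusters $R_{i_1},R_{i_2},R_{i_3}$ summing to $T$: your merged cluster $B_i\cup R_{i_1}\cup R_{i_2}\cup R_{i_3}$ costs $2T^2 + \tfrac12\sum_\ell x_{i_\ell}(T-x_{i_\ell})$, whereas splitting $B_i$ into three pieces $B_{i_\ell}$ of sizes $2x_{i_\ell}$ and pairing each with $R_{i_\ell}$ costs $2\sum_\ell x_{i_\ell}^2 + 2\sum_\ell x_{i_\ell}(T-x_{i_\ell}) = 2T\sum_\ell x_{i_\ell} = 2T^2$, which is \emph{strictly smaller}. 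So an optimal fair clustering \emph{does} split every $B_i$, and no exchange argument or sentinel can restore the unsplit structure you rely on. Consequently your choice of $\tau = \tfrac12\sum_j x_j(T-x_j)+\tfrac{2n}{3}T^2$ is too large: the YES direction still works trivially, but the NO direction collapses because even NO instances can have cost below your $\tau$ (the optimum lives near $\tfrac{2n}{3}T^2$, not near your $\tau$).

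The paper takes a fundamentally different route. It keeps the same $\m{C}$ but sets $\tau = p\sum_\ell x_\ell^2 + \tfrac{p^2}{2}\sum_\ell x_\ell(T-x_\ell)$ (which equals $\tfrac{2n}{3}T^2$ when $p=2$). The YES witness is the \emph{split} clustering above, with $n$ clusters rather than $n/3$. For the NO direction the paper abandons any attempt to keep $B_i$ intact; it uses only \cref{clm:claim-np} and \cref{clm:np-all-partition-mop} (which do hold for all $p>1$) to know that each $\blue{F_a}$ lies inside a single $B_i$, and then performs a direct algebraic expansion of $\sum_{F_a}\costf{F_a}$. The key step is that the cross term $(p+\tfrac12)\sum_{\ell_1\ne\ell_2}x_{a,\ell_1}x_{a,\ell_2}$ dominates $\tfrac{p^2}{2}\sum_{\ell_1\ne\ell_2}x_{a,\ell_1}x_{a,\ell_2}$ precisely when $p=2$, after which a $3$-partition--style contradiction (some $R_\ell$ must be split across two $F_a$'s) yields the strict inequality. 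So the fix is not a sharper swap but a different threshold and a cost computation that tolerates blue splits.
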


We again reduce the $\thrp$ problem to the $\clof$ problem for the case when $p = 2$. The reduction follows the same structure as in the case where $p \geq 3$, with the only difference being the choice of the parameter $\tau$. For completeness, we restate the reduction here.

\paragraph{Reduction from $\thrp$ to $\clof$:}  
Consider a $\thrp$ instance $S = \{ x_1, x_2, \ldots, x_n \}$, where for all $i \in [n]$, it holds that $x_i \le n^c$ for some fixed non-negative constant $c$. Given this instance, we construct a corresponding $\clof$ instance $(\mathcal{C}, \tau)$ as follows:

\begin{itemize}
    \item Define the set of clusters as  
    \[
        \mathcal{C} = \{ B_1, \ldots, B_{n/3}, R_1, \ldots, R_n \},
    \]
    where each $B_i$ (for $i \in \{1, \ldots, n/3\}$) is a \emph{monochromatic blue} cluster of size $pT$, and each $R_j$ (for $j \in \{1, \ldots, n\}$) is a \emph{monochromatic red} cluster of size $x_j$, i.e., $|R_j| = x_j$.

    \item Set the value of $\tau$ as  
    \[
        \tau = p\sum_{\ell = 1}^n x_\ell^2 + \frac{p^2}{2} \sum_{\ell = 1}^n x_\ell(T - x_\ell)
    \]
\end{itemize}

Since, for all $i \in [n]$, it holds that $x_i \le n^c$ for some fixed non-negative constant $c$, it is straightforward to see that the above reduction is a polynomial time reduction.

Now, we argue that the above reduction maps a YES instance of the $\thrp$ to a YES instance of the $\clof$.

\begin{lemma}\label{lem:yes-instance-p=2}
    If $S$ is a YES instance of the $\thrp$, then $(\m{C}, \tau) $ is also a YES instance of the $ \clof$.
\end{lemma}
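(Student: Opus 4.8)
The plan is to produce, from a valid $3$-partition of $S$, an explicit fair clustering $\m{F}$ with $\dist(\m{C},\m{F}) = \tau$; exhibiting such an $\m{F}$ immediately certifies $(\m{C},\tau)$ as a YES instance of $\clof$. Write the $3$-partition as $S_i = \{x_{i_1}, x_{i_2}, x_{i_3}\}$ with $x_{i_1}+x_{i_2}+x_{i_3} = T$ for each $i \in [n/3]$. Since $|B_i| = pT = p(x_{i_1}+x_{i_2}+x_{i_3})$, I would split each blue cluster $B_i$ into three pieces $B_i^1, B_i^2, B_i^3$ of sizes $p x_{i_1}, p x_{i_2}, p x_{i_3}$, and then form the clusters $F_i^j := B_i^j \cup R_{i_j}$. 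Each $F_i^j$ has exactly $x_{i_j}$ red points and $p x_{i_j}$ blue points, hence is fair, and the collection $\m{F}$ of all the $F_i^j$ is a partition of the whole point set. The key difference from the $p\ge 3$ construction (which merged $B_i$ with three \emph{whole} red clusters) is that here each blue cluster is cut \emph{proportionally} to the three red cluster sizes it will absorb, so that no red cluster is ever split and no two red clusters ever end up in the same $\m{F}$-cluster.

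The second step is to evaluate $\dist(\m{C},\m{F})$ by separating the disagreeing pairs into the two standard types. For pairs that are together in $\m{C}$ but separated in $\m{F}$: the only splitting happens inside the $B_i$'s, and cutting $B_i$ into its three pieces costs $p^2\bigl(x_{i_1}x_{i_2} + x_{i_1}x_{i_3} + x_{i_2}x_{i_3}\bigr)$; using $(x_{i_1}+x_{i_2}+x_{i_3})^2 = T^2$ this equals $\frac{p^2}{2}\sum_{j=1}^3 x_{i_j}(T - x_{i_j})$, so summing over all $i$ this type contributes $\frac{p^2}{2}\sum_{\ell=1}^n x_\ell(T - x_\ell)$. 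For pairs separated in $\m{C}$ but together in $\m{F}$: inside each cluster $F_i^j$ the only such pairs are the $|B_i^j|\,|R_{i_j}| = p x_{i_j}^2$ blue--red pairs (the blue points of $F_i^j$ all come from $B_i$ and its red points all come from $R_{i_j}$, so no other cross-$\m{C}$-cluster pairs arise), giving $p\sum_{\ell=1}^n x_\ell^2$ in total. Adding the two contributions yields
\[
\dist(\m{C},\m{F}) = p\sum_{\ell=1}^n x_\ell^2 + \frac{p^2}{2}\sum_{\ell=1}^n x_\ell(T - x_\ell) = \tau ,
\]
which is what we want.

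This is a verification rather than a hard argument, so I do not expect a genuine obstacle; the only thing requiring care is the pair-counting bookkeeping — specifically, confirming that no red cluster $R_j$ is cut, that the three pieces of $B_i$ are matched one-to-one with the three red clusters of $S_i$ (this is exactly where $|S_i|=3$ is used), and that the double counting in the blue-splitting term is handled correctly. Since $x_i \le n^c$, the instance $(\m{C},\tau)$ and the clustering $\m{F}$ both have size polynomial in $n$, so the construction is carried out in polynomial time. I would also remark that $p=2$ enters only through the particular value of $\tau$ chosen for this subsection; the displayed identity holds for every integer $p\ge 2$, which explains why essentially the same construction is reused here.
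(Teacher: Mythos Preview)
Your proposal is correct and takes essentially the same approach as the paper: both construct the fair clustering by splitting each $B_i$ into three pieces of sizes $px_{i_1}, px_{i_2}, px_{i_3}$ and pairing each piece with the corresponding red cluster $R_{i_j}$, then verify by the same two-term pair count that $\dist(\m{C},\m{F})=\tau$. Your observation that the identity holds for every integer $p\ge 2$ (with only $\tau$ changing between the two subsections) is also consistent with the paper's later remark extending the argument to general ratios.
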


\begin{proof}
    Suppose $S$ is a YES instance of the $\thrp$. Then there exists a partition $S_1, S_2, \ldots, S_{n/3} \, \, \text{of} \, \, S = \{ x_1, x_2, \ldots, x_n\}$ such that for all $1\le i \le n/3$, $|S_i| = 3$ and
    \[
        \sum_{x_\ell \in S_i}x_\ell = T \, \, \text{where} \, \, T = \frac{\sum_{x_\ell \in S} x_\ell}{\frac{n}{3}}.
    \]
    Let, $S_i = \{ x_{i_1}, x_{i_2}, x_{i_3}\}$. By our construction of $(\m{C}, \tau)$, we have $|R_{i_\ell}| = x_{i_\ell}$, for $\ell \in \{1,2,3\}$.
   
    Now, we construct a fair clustering $\m{F}$ by merging $R_{i_\ell}$ with $p|R_{i_\ell}|$ blue points in $B_i$ for $\ell \in \{ 1, 2, 3\}$. More formally,
    
    \[
       \m{F} = \left\{ \left(B_{i_\ell} \cup R_{i_\ell} \right)  \, \,  \middle| \, \, i \in [n/3], \ell \in \{1, 2, 3\} \right \} .
    \]
    where $B_{i_\ell} \subseteq B_i$ such that $|B_{i_\ell}| = p|R_{i_\ell}|$ for $\ell \in \{1, 2, 3\}$.
    
    Note that $\m{F}$ is a $\fair$ because for each cluster $F \in \m{F}$ we have $|\blue{F}| = p|\red{F}|$.
    
    Next, we claim that
    \[
        \dist(\m{C}, \m{F}) = p\sum_{\ell = 1}^n x_\ell^2 + \frac{p^2}{2} \sum_{\ell = 1}^n x_\ell(T - x_\ell) = \tau.
    \]

    \begin{itemize}
        \item \textbf{Reason behind the term $p\sum_{\ell = 1}^n x_\ell^2 $}: In $\m{F}$, each red cluster $R_\ell$ is merged with $p|R_\ell|$ blue points from some cluster $B_i$. This expression counts the number of such blue-red pairs (recall $|R_\ell| = x_\ell)$. (One important thing to note is that since $S$ is a YES instance of the $\thrp$, we do not need to cut any red cluster $R_\ell$ in $\m{C}$ to form the {\fair} $\m{F}$.)
        
        \item \textbf{Reason behind the term $\frac{p^2}{2} \sum_{\ell = 1}^n x_\ell(T - x_\ell)$}: In $\m{F}$, each red cluster $R_\ell$ is merged with $p|R_\ell|$ blue points from some cluster $B_i$. Now, these $p|R_\ell|$ blue points of $B_i$ gets separated from other $pT - p|R_\ell|$ blue points of $B_i$. This expression counts the number of such blue-blue pairs. In the above expression, $1/2$ comes from the fact that we are counting a pair twice, once while considering the subset of $B_i$ that consists of $p|R_\ell|$ points and again while considering a subset of other $(pT - p|R_\ell|)$ blue points of $B_i$.
    \end{itemize}
    
    Hence, $(\m{C},\tau) $ is a YES instance of the $ \clof$.
\end{proof}

Now, we argue that our reduction maps a NO instance of the $\thrp$ to a NO instance of the $\clof$.

\begin{lemma}\label{lem:no-instance-p=2}
 For $p = 2$, if $S $ is a NO instance of the $\thrp$, then $(\m{C}, \tau) $ is also a NO instance of the $ \clof$.
\end{lemma}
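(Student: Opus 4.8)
The plan is to take an arbitrary closest fair clustering $\m{F}^*$ of $\m{C}$ and show that $\dist(\m{C}, \m{F}^*) > \tau$. The crucial difference from the $p \ge 3$ case is that here a blue cluster $B_i$ may legitimately be split across several fair clusters, so I cannot invoke \cref{clm:np-main-claim}; however, I can still use the weaker structural facts \cref{clm:np-all-partition-mop} and \cref{clm:claim-np}, both of which hold for every integer $p > 1$. Together they say: if $B_i$ is partitioned in $\m{F}^*$ into maximal pieces $Y_i^1, \dots, Y_i^{t_i}$ (where each $Y_i^k = B_i \cap F$ for a unique $F \in \m{F}^*$), then each $|Y_i^k|$ is a multiple of $p = 2$ and equals the \emph{entire} blue part of its containing fair cluster. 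Hence every fair cluster's blue part is exactly one piece of exactly one $B_i$, and since the cluster is fair, $|\red{F}| = |\blue{F}|/2$.

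Next I would decompose $\dist(\m{C}, \m{F}^*)$ into four nonnegative contributions, exactly as in the case analyses of the excerpt: (i) blue--blue pairs inside a common $B_i$ that $\m{F}^*$ separates (``blue split''); (ii) red--red pairs inside a common $R_\ell$ that $\m{F}^*$ separates (``red split''); (iii) red--red pairs lying in a common fair cluster but coming from distinct $R_\ell$'s (``red internal''); and (iv) blue--red pairs lying in a common fair cluster (``blue--red''). By the structural fact above, blue--blue pairs across distinct $B_i, B_j$ never land in the same fair cluster, so there is no fifth term. The key computation is that the blue split and blue--red terms cancel into a constant: if $B_i$ splits into pieces of sizes $2m_{i,1}, \dots, 2m_{i,t_i}$ (so $\sum_j m_{i,j} = T$), the blue split cost from $B_i$ is $\frac{1}{2}\left((2T)^2 - \sum_j (2m_{i,j})^2\right) = 2T^2 - 2\sum_j m_{i,j}^2$, while the blue--red cost from the fair clusters built on $B_i$ is $\sum_j (2m_{i,j})(m_{i,j}) = 2\sum_j m_{i,j}^2$; these sum to $2T^2$ independently of the split. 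Summing over the $n/3$ blue clusters gives $\frac{2n}{3}T^2$, and one checks $\tau = 2\sum_\ell x_\ell^2 + 2\sum_\ell x_\ell(T - x_\ell) = 2T\sum_\ell x_\ell = \frac{2n}{3}T^2$. Therefore $\dist(\m{C}, \m{F}^*) = \tau + (\text{red split}) + (\text{red internal})$.

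From here the argument mirrors \cref{clm:exists-ell-lese-ml}: assuming for contradiction $\dist(\m{C}, \m{F}^*) \le \tau$ forces both the red split cost and the red internal cost to vanish. Red internal $= 0$ means each fair cluster's red part lies inside a single $R_\ell$; red split $= 0$ means each $R_\ell$ lies entirely inside a single fair cluster; so the map $R_\ell \mapsto$ (the fair cluster containing it) is a bijection onto $\m{F}^*$, with $\red{F} = R_\ell$ and $|\blue{F}| = 2|\red{F}| = 2x_\ell$. Grouping the fair clusters by which $B_i$ supplies their blue part and setting $S_i = \{x_\ell : R_\ell \text{ lands on a piece of } B_i\}$, the sets $S_i$ partition $S$ and each satisfies $\sum_{x \in S_i} x = |B_i|/2 = T$ (since the pieces of $B_i$ partition $B_i$). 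Finally, since every $x_\ell \in (T/4, T/2)$, two elements sum to strictly less than $T$ and four elements to strictly more than $T$, so $|S_i| = 3$ for every $i$ — a valid $3$-partition of $S$, contradicting that $S$ is a NO instance of $\thrp$. Hence $\dist(\m{C}, \m{F}^*) > \tau$, i.e.\ $(\m{C}, \tau)$ is a NO instance of $\clof$.

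I expect the main obstacle to be bookkeeping rather than anything conceptual: carefully justifying the exact cancellation blue split $+$ blue--red $= \tau$ (this is precisely what makes the $p = 2$ reduction work with this particular $\tau$ even though blue clusters are now allowed to fragment), and making the four-way decomposition of $\dist(\m{C},\m{F}^*)$ airtight — in particular checking that no pair is double-counted or omitted, which is the delicate point now that both blue and red clusters may be split.
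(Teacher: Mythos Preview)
Your proposal is correct and takes a genuinely different (and cleaner) route from the paper's own proof. Both proofs invoke the same structural ingredients --- \cref{clm:np-all-partition-mop} and \cref{clm:claim-np}, which for all $p>1$ force every fair cluster's blue part to be a single piece of a single $B_i$ --- but diverge in the final accounting. The paper keeps $p$ symbolic: it charges each point, writes $\costf{F_a}$ in terms of the $x_{a,\ell}:=|R_\ell\cap F_a|$, and pushes through a chain of algebraic inequalities (the step at the paper's \eqref{equn:np-p=2-one} uses $-\tfrac{p^2}{2}+p+\tfrac12\ge 0$, valid for $p=2$; the strict inequality at \eqref{equn:np-p=2-two} uses $\tfrac{p}{2}+\tfrac{1}{2p}>1$ together with a claim that some $R_\ell$ must be split, else a $3$-partition would exist). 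Your argument instead isolates the exact identity that makes $p=2$ special: with pieces of $B_i$ of sizes $2m_{i,1},\dots,2m_{i,t_i}$, the blue-split cost $2T^2-2\sum_j m_{i,j}^2$ and the blue--red cost $2\sum_j m_{i,j}^2$ sum to the constant $2T^2$, whence (i)$+$(iv) summed over all $B_i$ equals $\tfrac{2n}{3}T^2=\tau$ \emph{independently of $\m{F}^*$}. This turns $\dist(\m{C},\m{F}^*)=\tau+(\text{red split})+(\text{red internal})$ into an \emph{equality}, and the NO-instance contradiction (both red terms vanishing forces a $3$-partition) follows immediately. What your approach buys is transparency and an exact formula for the gap; what the paper's approach buys is portability --- its algebraic inequalities extend (per the paper's \cref{remark:np-hard-one}) to all ratios $1<p/q\le 1+\sqrt2$, whereas your cancellation $(i)+(iv)=\text{const}$ is peculiar to $p=2$ (for general $p$ one gets $\tfrac{p^2}{2}T^2+\bigl(p-\tfrac{p^2}{2}\bigr)\sum_j m_{i,j}^2$, which depends on the split unless $p=2$).
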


\begin{proof}
    Let $\m{F}^*$ be any arbitrary closest clustering to $\m{C}$. We prove that $\dist(\m{C}, \m{F}^*) > \tau$.

    By the definition, $\dist(\m{C}, \m{F}^*)$ counts the total number of pairs $(u,v)$ that are together in $\m{C}$ but separated by $\m{F}^*$ and the number of pairs $(u,v)$ that are in different clusters in $\m{C}$ but together in $\m{F}^*$.

    To count a pair $(u,v)$, we use a charging scheme. For each pair $(u,v)$ we charge the vertex $u$, $1/2$ and the vertex $v$, $1/2$.

    Let us define $\costf{v}$ to be half times the number of vertices $u$ such that $u,v$ are together in $\m{C}$ but separated by $\m{F}^*$, or $u,v$ that are in different clusters in $\m{C}$ but together in $\m{F}^*$. More specifically, 

    \begin{align}
        \costf{v} := \frac{1}{2} | \{ u \mid &u,v \in C, C \in \m{C}\, \, \text{but}\, \, u \in F_a, v\in F_b, F_a, F_b \in \m{F}^*, a \neq b \n \\
        & \text{or} \, \, u,v \in F_a, F_a \in \m{F}^*, u \in C_i, v \in C_j, C_i, C_j \in \m{C}, i \neq j \} | \n 
    \end{align}

    For a cluster $F_a \in \m{F}^*$, let us define $\costf{F_a}$ (by abuse of notation) to be 

    \[
        \costf{F_a} := \sum_{v \in F_a} \costf{v}.
    \]
        
    Observe that 

    \begin{align}
        \costf{F_a} = &p\left( \sum_{\ell = 1}^n |R_{a,\ell}|\right)^2 + \frac{p^2}{2} \left( \sum_{\ell = 1}^n |R_{a,\ell}|\right) \left( T - \sum_{\ell = 1}^n |R_{a,\ell}|\right) \n \\
        &+ \frac{1}{2}\sum_{\ell_1 \neq \ell_2}|R_{a, \ell_1}||R_{a, \ell_2}| + \frac{1}{2}\sum_{\ell = 1}^n|R_{a, \ell}| \left(|R_{\ell}| -|R_{a, \ell}| \right) \label{equn:np-p=2-zero}
    \end{align}
    where $R_{a,\ell} = R_\ell \cap F_a$.

    \begin{itemize}
        \item \textbf{Reason behind the term $p\left( \sum_{\ell = 1}^n |R_{a,\ell}|\right)^2$}: $\left( \sum_{\ell = 1}^n |R_{a,\ell}|\right)$ is the number of red points in $F_a$. Since it is merged with $p\left( \sum_{\ell = 1}^n |R_{a,\ell}|\right)$ blue points, hence the number of blue-red pairs in $F_a$ is $p\left( \sum_{\ell = 1}^n |R_{a,\ell}|\right)^2$.
        \item \textbf{Reason behind the term $\frac{p^2}{2} \left( \sum_{\ell = 1}^n |R_{a,\ell}|\right) \left( T - \sum_{\ell = 1}^n |R_{a,\ell}|\right)$}: By \cref{clm:claim-np} and \cref{clm:np-all-partition-mop} we get that for all $F_a \in \m{F}^*$, $\blue{F_a} \subseteq B_i$ for some $B_i \in \m{C}$. Hence, the points in $\blue{F_a}$ are separated from $pT - |\blue{F_a}|$ points in $B_i$. These blue-blue pairs are counted in this expression
        \begin{align*}
            &\frac{1}{2} \left( p\sum_{\ell = 1}^n |R_{a,\ell}|\right) \left( pT - p\sum_{\ell = 1}^n |R_{a,\ell}|\right) \n \\
            = &\frac{p^2}{2} \left( \sum_{\ell = 1}^n |R_{a,\ell}|\right) \left( T - \sum_{\ell = 1}^n |R_{a,\ell}|\right)
        \end{align*}
        In the above expression, $1/2$ comes from the fact that for a pair $(u,v)$ we are charging $1/2$ for the vertices $u \in F_a$ while considering the cluster $F_a$ and again we will charge $1/2$ for the vertex $v$ while considering a cluster $F_b$ such that $v \in \blue{F_b}$, $\blue{F_b} \subseteq B_i$ and $F_b \neq F_a$.
        \item \textbf{Reason behind the term $\frac{1}{2}\sum_{\ell_1 \neq \ell_2}|R_{a, \ell_1}||R_{a, \ell_2}|$}: This counts the number of red-red pairs $(u,v)$ in $F_a$ such that $u \in R_{\ell_1}$ and $v \in R_{\ell_2}$.
        \item \textbf{Reason behind the term $\frac{1}{2}\sum_{\ell = 1}^n|R_{a, \ell}| \left(|R_{\ell}| -|R_{a, \ell}| \right)$}: The term counts the number of pairs $(u,v) \in R_\ell$ such that $u \in R_\ell \cap F_a$ and $v \in R_\ell \cap F_b$ for $F_b \neq F_a$. In the above expression, $1/2$ comes from the fact that for a pair $(u,v)$ we are charging $1/2$ for the vertices $u \in R_\ell \cap F_a$ while considering the cluster $F_a$ and again we will charge $1/2$ for the vertex $v$ while considering a cluster $F_b$ such that $v \in R_\ell \cap F_b$.
    \end{itemize}

    It is straightforward to observe

    \[
        \dist(\m{C}, \m{F}^*) \geq \sum_{F_a \in \m{F}^*} \costf{F_a}.
    \]

    Now we show that 

    \[
        \sum_{F_a \in \m{F}^*} \costf{F_a} > \tau.
    \]

    Let, $x_{a,\ell} = |R_{a,\ell}|$ and recall $x_\ell = |R_\ell|$.

    \begin{align}
        &\sum_{F_a \in \m{F}^*} \costf{F_a} \n \\
        = \, \, &p \sum_{F_a \in \m{F}^*} \left( \sum_{\ell = 1}^n x_{a,\ell} \right)^2 + \frac{p^2}{2} \sum_{F_a \in \m{F}^*}\left( \sum_{\ell = 1}^n x_{a,\ell}\left( T - \sum_{\ell = 1}^n x_{a,\ell}\right) \right) + \frac{1}{2}\sum_{F_a \in \m{F}^*}\sum_{\ell_1 \neq \ell_2} x_{a,\ell_1} x_{a,\ell_2} \n \\
        &+ \frac{1}{2}\sum_{F_a \in \m{F}^*}\sum_{\ell = 1}^n x_\ell (x_\ell - x_{a,\ell}) \n \\
        = \, \, &p \sum_{F_a \in \m{F}^*} \left( \sum_{\ell = 1}^n x_{a,\ell}^2 + \sum_{\ell_1 \neq \ell_2}x_{a,\ell_1} x_{a,\ell_2}\right) + \frac{p^2}{2} \sum_{F_a \in \m{F}^*}\left( \sum_{\ell = 1}^n x_{a,\ell}\left( T - \sum_{\ell = 1}^n x_{a,\ell}\right) \right) + \frac{1}{2}\sum_{F_a \in \m{F}^*}\sum_{\ell_1 \neq \ell_2} x_{a,\ell_1} x_{a,\ell_2} \n \\
        &+ \frac{1}{2}\sum_{\ell = 1}^n \sum_{F_{a_1} \neq F_{a_2}} x_{a_1, \ell}x_{a_2, \ell} \n \\
        = \, \, &p \sum_{F_a \in \m{F}^*} \sum_{\ell = 1}^n x_{a,\ell}^2 
         + \frac{p^2}{2} \sum_{F_a \in \m{F}^*}\left( \sum_{\ell = 1}^n x_{a,\ell}\left( T - x_{a,\ell}\right) \right) + \frac{1}{2}\sum_{\ell = 1}^n \sum_{F_{a_1} \neq F_{a_2}} x_{a_1, \ell}x_{a_2, \ell} \n \\
         &- \frac{p^2}{2} \sum_{F_a \in \m{F}^*}\sum_{\ell_1 \neq \ell_2} x_{a,\ell_1} x_{a,\ell_2} + \left(p + \frac{1}{2} \right)\sum_{F_a \in \m{F}^*}\sum_{\ell_1 \neq \ell_2} x_{a,\ell_1} x_{a,\ell_2} \n \\
         \geq \, \,  &p \sum_{F_a \in \m{F}^*} \sum_{\ell = 1}^n x_{a,\ell}^2 + \frac{p^2}{2} \sum_{F_a \in \m{F}^*}\left( \sum_{\ell = 1}^n x_{a,\ell}\left( T - x_{a,\ell}\right) \right) + \frac{1}{2}\sum_{\ell = 1}^n \sum_{F_{a_1} \neq F_{a_2}} x_{a_1, \ell}x_{a_2, \ell} \n \\ &(\text{as} \, \, \, p = 2) \label{equn:np-p=2-one} \\
         = \, \, &p \sum_{F_a \in \m{F}^*} \sum_{\ell = 1}^n x_{a,\ell}^2 + \frac{p^2}{2} \sum_{F_a \in \m{F}^*}\left( \sum_{\ell = 1}^n x_{a,\ell}\left( T - x_{\ell} + (x_{\ell} - x_{a, \ell})\right) \right) + \frac{1}{2}\sum_{\ell = 1}^n \sum_{F_{a_1} \neq F_{a_2}} x_{a_1, \ell}x_{a_2, \ell} \n \\
        = \, \, &p \sum_{F_a \in \m{F}^*} \sum_{\ell = 1}^n x_{a,\ell}^2 + \frac{p^2}{2} \sum_{F_a \in \m{F}^*}\sum_{\ell = 1}^n x_{a,\ell} (x_{\ell} - x_{a, \ell}) + \frac{p^2}{2} \sum_{F_a \in \m{F}^*} \sum_{\ell = 1}^n x_{a,\ell}\left( T - x_{\ell}\right) \n \\
        &+ \frac{1}{2}\sum_{\ell = 1}^n \sum_{F_{a_1} \neq F_{a_2}} x_{a_1, \ell}x_{a_2, \ell} \n \\
        = \, \, &p\sum_{\ell = 1}^n \left( \sum_{F_a \in \m{F}^*}  x_{a,\ell}^2 + \frac{p}{2}\sum_{F_a \in \m{F}^*} x_{a, \ell}(x_\ell - x_{a,\ell}) + \frac{1}{2p} \sum_{F_{a_1} \neq F_{a_2}} x_{a_1, \ell}x_{a_2, \ell}  \right) + \frac{p^2}{2} \sum_{\ell = 1}^n \left( T - x_{\ell}\right) \sum_{F_a \in \m{F}^*} x_{a,\ell}\n \\ 
        = \, \, &p\sum_{\ell = 1}^n \left( \sum_{F_a \in \m{F}^*} x_{a,\ell}^2 + \left(\frac{p}{2} + \frac{1}{2p} \right) \sum_{F_{a_1} \neq F_{a_2}} x_{a_1,\ell} x_{a_2, \ell}\right) + \frac{p^2}{2} \sum_{\ell = 1}^n x_\ell (T - x_\ell) \n \\
        > \, \, &p\sum_{\ell = 1}^n \left( \sum_{F_a \in \m{F}^*} x_{a,\ell}^2 +  \sum_{F_{a_1} \neq F_{a_2}} x_{a_1,\ell} x_{a_2, \ell}\right) + \frac{p^2}{2} \sum_{\ell = 1}^n x_\ell (T - x_\ell) \label{equn:np-p=2-two} \\
       = \, \, &p\sum_{\ell = 1}^n x_\ell^2 + \frac{p^2}{2} \sum_{\ell = 1}^n x_\ell (T - x_\ell) = \tau \n
    \end{align}

    The reason behind \cref{equn:np-p=2-two} is that $(p/2 + 1/2p) > 1$ for $p > 1$ and there must exist at least one $\ell$ and $F_{a_1}$, $F_{a_2} \in \m{F}^*$ such that $x_{a_1,\ell}, x_{a_2, \ell} \neq 0$. We prove this in the following claim.

    \begin{claim}
        There exists $\ell, a_1, a_2$ such that $x_{a_1,\ell}, x_{a_2, \ell} \neq 0$.
    \end{claim}

    \begin{proof}
        For the sake of contradiction, suppose for all $\ell$, there exists at most one $F_a \in \m{F}^*$ such that $x_{a,\ell} \neq 0$. 
        
        Since $\sum_{F_a \in \m{F}^*} x_{a,\ell} = x_\ell$, we have a cluster $F_a \in \m{F}^*$ such that $x_{a,\ell} = x_\ell$.
       
        Now we create partitions $S_i$ of $S$ in the following way: For each $B_i \in \m{C}$,

        \[
            S_i = \{ x_\ell \mid x_{a,\ell} = x_\ell \, \, \text{and} \, \, \blue{F_a} \subseteq B_i \}.
        \]

        Note by \cref{clm:claim-np} and \cref{clm:np-all-partition-mop} we get for each $F_a \in \m{F}^*$ there must exist a $B_i$ such that $\blue{F_a} \subseteq B_i$.

        Since there are $n/3$ clusters $B_i \in \m{C}$, the number of such subsets would be $n/3$. Clearly, the above subsets $S_i$'s create a partitioning of $S$, that is

        \begin{itemize}
            \item $S_a \cap S_b = \emptyset$ for $a \neq b$ (since, $B_i \cap B_j = \emptyset$, hence, if $\blue{F_a}\subseteq B_i$ then $\blue{F_a}\not\subseteq B_j$ for $j \neq i$);
            \item $\bigcup S_a = S$ (since $x_{a,\ell}$ cannot be zero for all $a$).  
        \end{itemize}

        Since $|B_i| = pT$ and all clusters $F_a \in \m{F}^*$ such that $\blue{F_a} \subseteq B_i$ are fair we have,

        \[
            \sum_{x_\ell \in S_i}x_\ell = T.
        \]

        Further, since each $x_\ell \in (T/4,T/2)$, we must have $|S_i| = 3$. However, since $S$ is a NO instance of $\thrp$, such a partitioning of $S$ does not exist, leading to a contradiction.

        Hence, we conclude that there must exist $\ell, a_1, a_2$ such that $x_{a_1,\ell}, x_{a_2, \ell} \neq 0$.
    \end{proof}

    Thus we finally get that $\dist(\m{C}, \m{F}^*) > \tau$, which implies $(\m{C},\tau) $ is a NO instance of the $ \clof$, for $p = 2$.
\end{proof}

Similar to the proof of \cref{lem:main-one}, we conclude the proof of \cref{lem:main-two}.

\begin{proof}[Proof of \cref{lem:main-two}]
    By~\cref{lem:yes-instance-p=2} and~\cref{lem:no-instance-p=2} we infer that for any $p = 2$, $S $ is a YES instance of the $ \thrp$ if and only if $(\m{C}, \tau) $ is a YES instance of the $ \clof$. It now immediately follows from our reduction that for any $p = 2$, $\clof$ is \textbf{NP}-hard because $\thrp$ is strongly $\npc$ (~\cref{thm:garey-johnson}).
\end{proof}

\begin{remark}\label{remark:np-hard-one}
    We would like to remark that the above \textbf{NP}-hardness proof can be extended for any fixed ratio (between the number of blue and red points) of $p/q$ where $1 < p/q \leq (1 + \sqrt{2})$ with modifications in the reduction mentioned in \cref{remark:np-hard}, with $\tau$ being set to $\tau = pq\sum_{\ell = 1}^n x_\ell^2 + \frac{p^2}{2} \sum_{\ell = 1}^n x_\ell(T - x_\ell)$. The reason behind this is that \cref{equn:np-p=2-one} holds for any ratio $ p\in [1 - \sqrt{2}, 1 + \sqrt{2}]$, and while expressing $\costf{F_a}$ in \cref{equn:np-p=2-zero} we use \cref{clm:claim-np} and \cref{clm:np-all-partition-mop}, both of which also hold for any $p, q > 1$.

\end{remark}

From \cref{lem:main-one} and \cref{lem:main-two} we conclude \cref{thm:main}.

\end{document}